\newtheorem{definition}{Definition}[section]
\newtheorem{theorem}{Theorem}[section]
\newtheorem{assumption}{Assumption}[section]
\newtheorem{lemma}{Lemma}[section]
\newtheorem{corollary}{Corollary}[section]
\newtheorem{hypothesis}{Hypothesis}[section]
\newtheorem{finding}{Finding}[section]
\newtheorem{remark}{Remark}[section]
\newcommand{\longbo}[1]{{\color{blue}Longbo: #1}}
\crefname{finding}{Finding}{Findings}
\crefname{hypothesis}{Hypothesis}{Hypotheses}
\crefname{algocf}{Algorithm}{Algorithms}
\setlist[enumerate,1]{leftmargin=0.6cm}
\newcommand{\Bloc}{B_{\mathrm{loc}}}
\newcommand{\vmu}{\bm{\mu}}
\newcommand{\D}{\mathrm{D}}
\newcommand{\vp}{\bm{p}}
\newcommand{\E}{\mathbb{E}}
\newcommand{\PP}{\mathbb{P}}
\newcommand{\cD}{\mathcal{D}}
\newcommand{\ctD}{\tilde{\mathcal{D}}}
\newcommand{\cO}{\mathcal{O}}
\newcommand{\ctO}{\mathcal{\tilde{O}}}
\newcommand{\cB}{\mathcal{B}}
\newcommand{\cA}{\mathcal{A}}
\newcommand{\cAs}[1][s]{\mathcal{A}^{(#1)}}
\newcommand{\cR}{\mathcal{R}}
\newcommand{\tcR}{\tilde{\mathcal{R}}}
\newcommand{\cE}{\mathcal{E}}
\newcommand{\cM}{\mathcal{M}}
\newcommand{\cS}{\mathcal{S}}
\newcommand{\cEs}[1][s]{\cE^{(#1)}}
\newcommand{\bcEs}[1][s]{\bar{\cE}^{(#1)}}
\newcommand{\cZ}{\mathcal{Z}}
\newcommand{\cZeps}{\mathcal{Z}^{\epsilon}}
\newcommand{\cL}{\mathcal{L}}
\newcommand{\cT}{\mathcal{T}}
\newcommand{\cV}{\mathcal{V}}
\newcommand{\cPX}{\mathcal{P}_{\vX}}
\newcommand{\cC}{\mathcal{C}}
\newcommand{\cPz}{\mathcal{P}_{\vzeta}}
\newcommand{\cI}{\mathcal{I}}
\newcommand{\R}{\mathbb{R}}
\newcommand{\Rg}{R_{\mathrm{grp}}}
\newcommand{\Hg}{H_{\mathrm{grp}}}
\newcommand{\N}{\mathbb{N}}
\newcommand{\etae}{\eta_{\mathrm{e}}}
\newcommand{\vw}{\bm{w}}
\newcommand{\tPsi}{\tilde{\Psi}}
\newcommand{\as}[1][s]{a^{(#1)}}
\newcommand{\cs}[1][s]{c^{(#1)}}
\newcommand{\xinran}[1]{\textcolor{brown}{[Xinran: #1]}}
\newcommand{\kaifeng}[1]{\textcolor{orange}{[Kaifeng: #1]}}
\newcommand{\ttheta}{\tilde{\theta}}
\newcommand{\Gat}{\Gamma^{\epsilon_2}}
\newcommand{\Gath}{\Gamma^{\epsilon_3}}
\newcommand{\Gaz}{\Gamma^{\epsilon_0}}
\newcommand{\Gao}{\Gamma^{\epsilon_1}}
\newcommand{\vtheta}{{\bm{\theta}}}
\newcommand{\vphi}{{\bm{\phi}}}
\newcommand{\vphinull}{\vphi_{\mathrm{null}}}
\newcommand{\vu}{{\bm{u}}}
\newcommand{\vz}{{\bm{z}}}
\newcommand{\ve}{\bm{e}}
\newcommand{\vg}{\bm{g}}
\newcommand{\vv}{\bm{v}}
\newcommand{\vx}{\bm{x}}
\newcommand{\vX}{\bm{X}}
\newcommand{\dvvs}[1][s]{\dot{\vv}^{(#1)}}
\newcommand{\tvvs}[1][s]{\tilde{\vv}^{(#1)}}
\newcommand{\hvvs}[1][s]{\hat{\vv}^{(#1)}}
\newcommand{\vzero}{\bm{0}}
\newcommand{\tr}{\mathrm{tr}}
\newcommand{\hvths}[1][s]{\hat{\bm{\theta}}^{(#1)}}
\newcommand{\vXs}[1][s]{{\vX}^{(#1)}}
\newcommand{\bhvths}[1][s]{{\hat{\bm{\theta}}}^{(#1)}_{\mathrm{avg}}}
\newcommand{\hves}[1][s]{\hat{\ve}^{(#1)}}
\newcommand{\vzeta}{\bm{\zeta}}
\newcommand{\tvzeta}{\tilde{\bm{\zeta}}}
\newcommand{\hvzeta}{\hat{\vzeta}}
\newcommand{\Beo}{B^{\epsilon_1}}
\newcommand{\scls}{s_{\mathrm{cls}}}
\newcommand{\vthetanull}{\vtheta_{\mathrm{null}}}
\newcommand{\onec}{\mathbbm{1}}
\newcommand{\scrM}{\bm{m}}%tag
\newcommand{\scrMs}[1][s]{\scrM^{(#1)}}
\newcommand{\scrFsg}[1][s]{\mathscr{F}^{(#1)}_{\mathrm{good}}}
\newcommand{\tvths}[1][s]{\tilde{\vtheta}^{(#1)}}
\newcommand{\cTs}[1][s]{\cT^{(#1)}}
\newcommand{\vh}{\bm{h}}
\newcommand{\vxi}{\bm{\xi}}
\newcommand{\xis}[1][s]{\xi^{(#1)}}
\newcommand{\vect}{\mathrm{vec}}
\newcommand{\mA}{\bm{A}}
\newcommand{\mB}{\bm{B}}
\newcommand{\mH}{\bm{H}}
\newcommand{\mI}{\bm{I}}
\newcommand{\mM}{\bm{M}}
\newcommand{\mP}{\bm{P}}
\newcommand{\tmP}{\tilde{\bm{P}}}
\newcommand{\mSig}{\bm{\Sigma}}
\newcommand{\mSigzperp}{\bm{\Sigma}_{0, \perp}}
\newcommand{\mSigzpara}{\bm{\Sigma}_{0, \parallel}}
\newcommand{\mSigzea}{\bm{\Sigma}_{0, \perp, \parallel}}
\newcommand{\mPsi}{\bm{\Psi}}
\newcommand{\vths}[1][s]{\vtheta^{(#1)}}
\newcommand{\tvus}[1][s]{\tilde{\vu}^{(#1)}}
\newcommand{\dvus}[1][s]{\dot{\vu}^{(#1)}}
\newcommand{\vus}[1][s]{\vu^{(#1)}}
\newcommand{\hvu}{\hat{\vu}}
\newcommand{\hvus}[1][s]{\hat{\vu}^{(#1)}}
\newcommand{\bvths}[1][s]{\bar{\vtheta}^{(#1)}}
\newcommand{\hvxs}[1][s]{\hat{\vx}^{(#1)}}
\newcommand{\hvxsT}[1][s]{\hat{{\vx}}^{(#1) \top}}
\newcommand{\bhvxs}[2][s]{{\hat{\vx}}^{(#1)}_{\mathrm{avg}, #2}}
\newcommand{\bhvxsT}[2][s]{{\hat{\vx}}^{(#1)\top}_{\mathrm{avg}, #2}}
\newcommand{\vxs}[1][s]{\vx^{(#1)}}
\newcommand{\vW}{\bm{W}}
\newcommand{\vb}{\bm{b}}
\newcommand{\vbn}[1][n]{\bm{b}^{(#1)}}
\newcommand{\vsign}[1][n]{\bm{\sigma}^{(#1)}}
\newcommand{\sigs}[1][n]{{\sigma}^{(#1)}}
\newcommand{\vsig}{\bm{\sigma}}
\newcommand{\bvxs}[1][s]{\bar{\vx}^{(#1)}}
\newcommand{\vgs}[1][s]{\vg^{(#1)}}
\newcommand{\vzs}[1][s]{\vz^{(#1)}}
\newcommand{\hvzs}[1][s]{{\vz}^{(#1)}}%tag2
\newcommand{\vphs}[1][s]{\vphi^{(#1)}}
\newcommand{\hvphs}[1][s]{\hat{\vphi}^{(#1)}}
\newcommand{\hvqs}[1][s]{\hat{\bm{q}}^{(#1)}}
\newcommand{\mAs}[1][s]{\mA^{(#1)}}
\newcommand{\hmAs}[1][s]{\hat{\mA}^{(#1)}}
\newcommand{\bhmAs}[1][s]{\hat{\mA}_{\mathrm{avg}}^{(#1)}}
\newcommand{\As}[1][s]{A^{(#1)}}
\newcommand{\hAs}[1][s]{\hat{A}^{(#1)}}
\newcommand{\bhAs}[3][s]{{\hat{A}_{\mathrm{avg}, #2, #3}}^{(#1)}}
\newcommand{\mMs}[1][s]{\mM^{(#1)}}
\newcommand{\hmMs}[1][s]{\hat{\mM}^{(#1)}}
\newcommand{\hMs}[1][s]{\hat{M}^{(#1)}}
\newcommand{\hmBs}[1][s]{\hat{\mB}^{(#1)}}
\newcommand{\hBs}[1][s]{\hat{B}^{(#1)}}
\newcommand{\bs}[1][s]{b^{(#1)}}
\newcommand{\vDeltas}[1][s]{\bm{\Delta}^{(#1)}}
\newcommand{\dvDeltas}[1][s]{\dot{\bm{\Delta}}^{(#1)}}
\newcommand{\tDelta}{\tilde{\Delta}}
\newcommand{\tvDelta}{\tilde{\bm{\Delta}}}
\newcommand{\hDvphs}[1][s]{\Delta\hat{\vphi}^{(#1)}}
\newcommand{\hDvphsT}[1][s]{\Delta\hat{\vphi}^{(#1)\top}}
\newcommand{\spann}{\mathrm{span}}
\newcommand{\bvDeltas}[1][s]{\bar{\bm{\Delta}}^{(#1)}}
\newcommand{\tvDeltas}[1][s]{\tilde{\bm{\Delta}}^{(#1)}}
\newcommand{\hvDeltas}[1][s]{\hat{\bm{\Delta}}^{(#1)}}
\newcommand{\tDeltas}[1][s]{\tilde{\Delta}^{(#1)}}
\newcommand{\Deltas}[1][s]{{\Delta}^{(#1)}}
\newcommand{\hvZ}{\hat{\bm{Z}}}
\newcommand{\vZs}[1][s]{\bm{Z}^{(#1)}}
\newcommand{\tvZ}{\tilde{\bm{Z}}}
\newcommand{\dvZs}[1][s]{\dot{\bm{Z}}^{(#1)}}
\newcommand{\tvZs}[1][s]{\tilde{\bm{Z}}^{(#1)}}
\newcommand{\Sig}{\Sigma}
\newcommand{\tC}{\tilde{C}}
\newcommand{\hC}{\hat{C}}
\newcommand{\Cg}[1]{C_{g, #1}}
\newcommand{\mPperp}{\mP_{\perp}}
\newcommand{\mPpara}{\mP_{\parallel}}
\newcommand{\stgood}{\{\text{global\  step\ } (s, t) \text{\ is\ } \eta^{100}\text{-good}\}}
\newcommand{\norm}[1]{\|#1\|}
\newcommand{\abs}[1]{\lvert #1 \rvert}
\newcommand{\labs}[1]{\left\lvert #1 \right\rvert}
\newcommand{\normtwo}[1]{\norm{#1}_2}
\newcommand{\lnormtwo}[1]{\left\|#1\right\|_2}
\newcommand{\Cov}{\mathrm{Cov}}
\newcommand{\inner}[2]{\left\langle{#1}, {#2}\right\rangle}
\newcommand{\inne}[2]{\langle{#1}, {#2}\rangle}
\newcommand{\rank}{\mathrm{rank}}
\newcommand{\diag}{\mathrm{diag}}
\newcommand{\poly}{\mathrm{poly}}
\newcommand{\Rtot}{R_{\mathrm{tot}}}
\newcommand{\dd}{\mathrm{d}}
\newcommand{\gradGa}{\nabla_{\Gamma}}
\newcommand{\sigmax}{\sigma_{\max}}
\newcommand{\floor}[1]{\lfloor #1 \rfloor}
\newcommand{\llfloor}[1]{\left\lfloor #1 \right\rfloor}
\newcommand{\ceil}[1]{\lceil #1 \rceil}
\title{
  Why (and When) does Local SGD Generalize Better than SGD?
}
\author{%
Xinran Gu\thanks{Equal contribution} \\
Institute for Interdisciplinary Information Sciences\\
Tsinghua University\\
\texttt{gxr21@mails.tsinghua.edu.cn} \\
\And
Kaifeng Lyu\footnotemark[1] \\
Department of Computer Science \\
Princeton University \\
\texttt{klyu@cs.princeton.edu} \\
\And
Longbo Huang\thanks{Corresponding authors} \\
Institute for Interdisciplinary Information Sciences\\
Tsinghua University\\
\texttt{longbohuang@tsinghua.edu.cn} \\
\And
Sanjeev Arora\footnotemark[2] \\
Department of Computer Science \\
Princeton University \\
\texttt{arora@cs.princeton.edu}
}
\begin{document}

\maketitle

\begin{abstract}
Local SGD is a communication-efficient variant of SGD for large-scale training, where multiple GPUs perform SGD independently and average the model parameters periodically.
It has been recently observed that Local SGD can not only achieve the design goal of reducing the communication overhead but also lead to higher test accuracy than the corresponding SGD baseline~\citep{lin2020dont}, though the training regimes for this to happen are still in debate~\citep{ortiz2021trade}.
This paper aims to understand why (and when) Local SGD generalizes better based on Stochastic Differential Equation (SDE) approximation.
The main contributions of this paper include (i) the derivation of an SDE that captures the long-term behavior of Local SGD in the small learning rate regime, showing how noise drives the iterate to drift and diffuse after it has reached close to the manifold of local minima, (ii) a comparison between the SDEs of Local SGD and SGD, showing that Local SGD induces a stronger drift term that can result in a stronger effect of regularization, e.g., a faster reduction of sharpness, and (iii) empirical evidence validating that having a small learning rate and long enough training time enables the generalization improvement over SGD but removing either of the two conditions leads to no improvement.

\end{abstract}

\section{Introduction}\label{sec:intro}
% imitate https://proceedings.neurips.cc/paper/2012/file/6aca97005c68f1206823815f66102863-Paper.pdf
\iffalse Recent advances suggest that the ultimate performance of deep learning on test
sets can be drastically improved by scaling up the dataset and increasing the
model size. However, \fi 
As deep models have grown larger,  training them with reasonable wall-clock times has led to new distributed environments and new variants of gradient-based training. 
\iffalse response, recent works~\citep{goyal2017accurate, you2018imagenet, jia2018highly}
seek to speed up standard training methods by exploiting data parallelism in a
distributed computing setting and most works focus on improving Stochastic
Gradient Descent (SGD). \fi 
%\subsection{Distributed Training with SGD} \label{subsec:intro-sgd}
%is one of the most simple and common methods of training neural networks.
Recall that Stochastic Gradient Descent (SGD) tries to solve 
$\min_{\vtheta \in \R^d} \E_{\xi \sim \ctD}[\ell(\vtheta; \xi)]$, where $\vtheta
\in \R^d$ is the parameter vector of the model, $\ell(\vtheta; \xi)$ is the loss
function for a data sample $\xi$ drawn from the training distribution $\ctD$, e.g., the uniform distribution over the training set. 
%In the optimization
%process of SGD
SGD with learning rate $\eta$ and batch size $B$ does the following update at each step, using a batch of $B$ independent  $\xi_{t,1}, \dots, \xi_{t,B} \sim \ctD$:
\begin{equation}\label{eq:upd intro sgd}
    \vtheta_{t+1} \gets \vtheta_t - \eta \vg_t, \quad \text{where} \quad \vg_t = \frac{1}{B}\sum_{i=1}^{B} \nabla \ell(\vtheta_t; \xi_{t,i}).
\end{equation}
{\em Parallel SGD} tries to improve wall-clock time when the batch size $B$ is large enough. It distributes the gradient computation to $K \ge 2$ workers, each of whom focuses on a local batch of $\Bloc := B/K$ samples and computes the average gradient over the local batch. Finally, $\vg_t$ is obtained by averaging the local gradients over the $K$ workers.

However, large-batch training leads to a significant test accuracy drop compared
to a small-batch training baseline with the same number of training steps or
epochs~\citep{smith2020generalization,shallue2019measuring,keskar2017on,jastrzkebski2017three}. Reducing this {\em generalization gap} is the goal of much subsequent research. 
It was suggested that the generalization gap arises because  larger batches lead to a reduction in the level of noise in batch gradient (see \Cref{sec:related}
for more discussion).
The {\em Linear Scaling Rule}~\citep{krizhevsky2014one,goyal2017accurate,jastrzkebski2017three} tries to fix this by increasing the learning rate in proportion to batch size. This is found to reduce the generalization gap for (parallel) SGD, but does not entirely eliminate it. 
%though we still lack a
%complete theoretical understanding of the effect of noise

To reduce the generalization gap further, \citet{lin2020dont} discovered that a variant of SGD, called {\em Local SGD}~\citep{yu2019parallel,
wang2019adaptive, ijcai2018p447}, can be used as a strong component.
Perhaps surprisingly, Local SGD itself is not designed for improving generalization, but for reducing the high communication cost for synchronization among
the workers, which is another important issue that often bottlenecks large-batch training~\citep{DBLP:conf/interspeech/SeideFDLY14,
DBLP:conf/interspeech/Strom15,chen2016revisiting, DBLP:conf/nips/RechtRWN11}.
Instead of averaging the local gradients per step as in parallel SGD, Local SGD allows $K$ workers to train their models locally and averages the local {\em model parameters} whenever they finish $H$ local steps. Here every worker samples a new batch at each local step, and in this paper we focus on the case where all the workers draw samples with or without replacement from the {\em same} training set. See \Cref{sec:pseudocode} for the pseudocode.

More specifically, \citet{lin2020dont} proposed {\em Post-local SGD}, a hybrid method that starts with parallel SGD (equivalent to
Local SGD with $H=1$ in math) and switches to Local SGD with $H > 1$ after a
fixed number of steps $t_0$. They showed through extensive
%CIFAR-10~\citep{krizhevsky2009learning}
experiments that Post-local SGD significantly outperforms parallel SGD
in test accuracy when $t_0$ is carefully chosen.
In \Cref{fig:Postlocal}, we reproduce this phenomenon on both CIFAR-10 and ImageNet.

As suggested by the success of Post-local SGD,
Local SGD can improve the generalization of SGD
by merely adding more local steps (while fixing the other hyperparameters),
at least when the training starts from a model pre-trained by SGD. But the underlying mechanism is not very clear, and there is also controversy about when this phenomenon can happen (see \Cref{subsec:debate} for a survey).
The current paper tries to understand: {\em Why does Local SGD generalize better?} {\em Under what general conditions does this generalization benefit arise?}

\begin{figure}[t]
\vspace{-0.5in}
\begin{center}
    \subfigure[CIFAR-10,  $B=4096$, ResNet-56. ]{
    \includegraphics[width=0.4\textwidth]{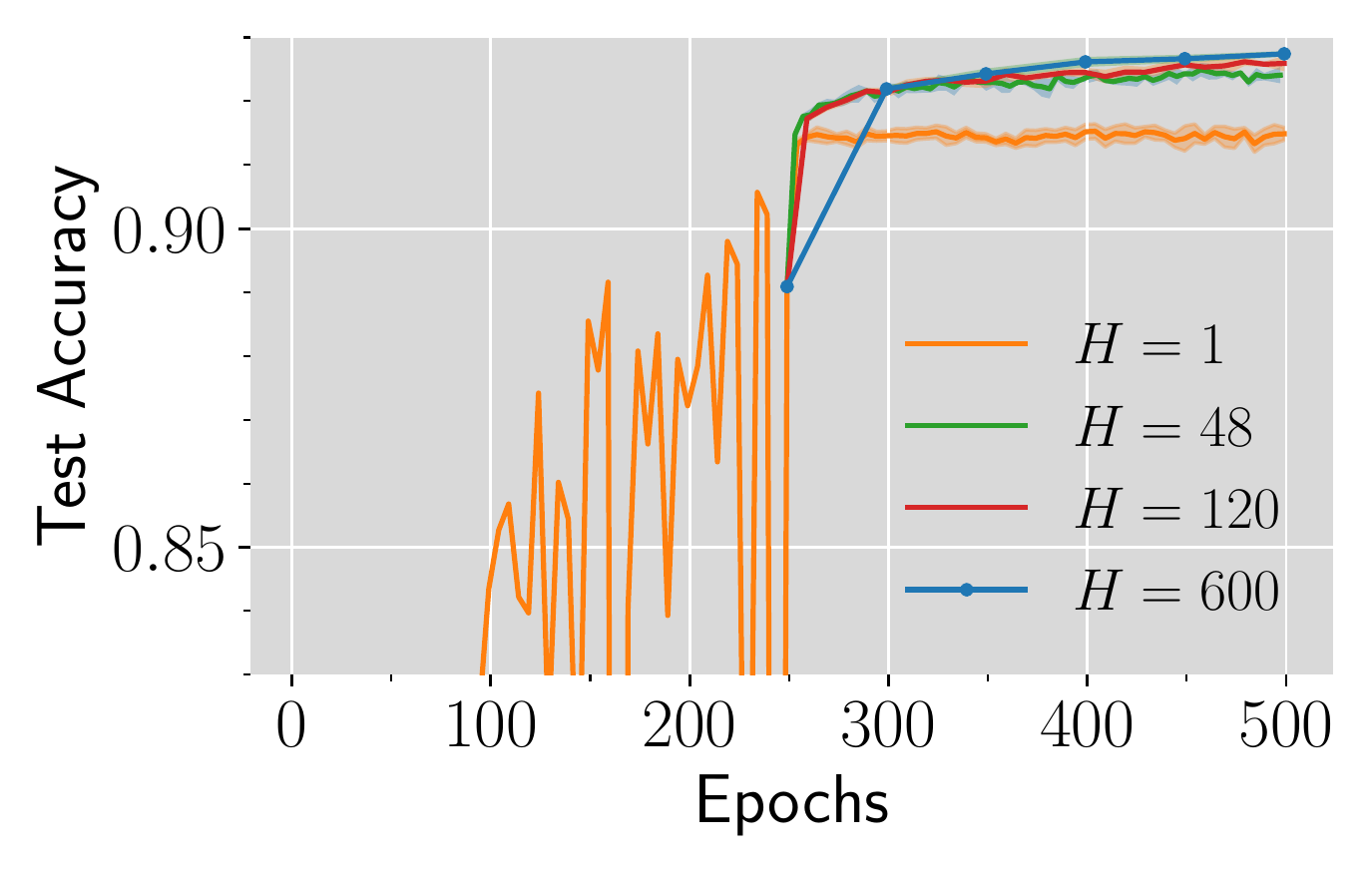}\label{fig:cifar_intro}
    }
    \hspace{0.1 in}
    \subfigure[ImageNet, $B=8192$, ResNet-50. ]{
        \includegraphics[width=0.4\textwidth]{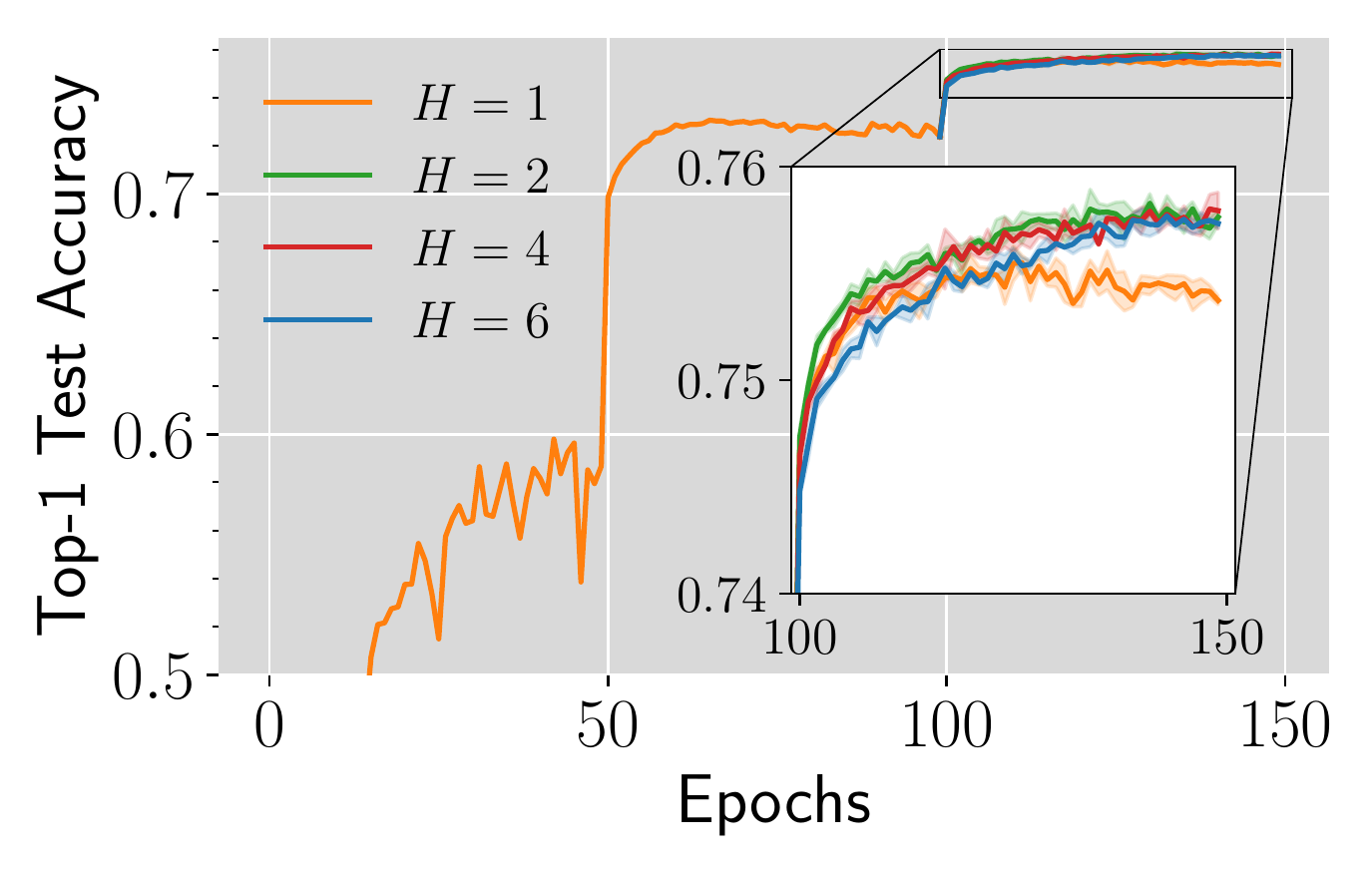}\label{fig:img_intro}
    }
    \vspace{-0.1in}
    \caption{
    Post-Local SGD ($H > 1$) generalizes better than  SGD ($H=1$). We switch to Local SGD at the first learning rate decay (epoch \#250) for CIFAR-10 and at the second learning rate decay (epoch \#100) for ImageNet. 
    See \Cref{sec:intro detail} for training details. }
        \label{fig:Postlocal}
    \vspace{-0.1in}
\end{center}
\end{figure}
Previous theoretical research on Local SGD is mainly restricted to the
convergence rate for minimizing a convex or non-convex objective (see
\Cref{sec:related} for a survey). A related line of works~\citep{stich2018local, yu2019parallel, khaled2020tighter} showed that
Local SGD has a slower convergence rate compared with parallel SGD after
running the same number of steps/epochs. This convergence result suggests that Local SGD may implicitly regularize the model through insufficient
optimization, but this does not explain why parallel SGD with early stopping, which may incur an even higher training loss, still generalizes worse than Post-local SGD.
%Post-local SGD with proper switching time can outperform Local SGD even if the latter one may yield a higher final training loss.

\myparagraph{Our Contributions.}
In this paper, we provide the first theoretical understanding on why (and when) switching from parallel SGD to Local SGD improves generalization.
%Our contributions are as follows:
\begin{enumerate}
    \item In \Cref{sec:key-factors}, we conduct ablation studies on CIFAR-10 and ImageNet and identify a clean setting where adding local steps to SGD consistently improves generalization: if the learning rate is small and the total number of steps is sufficient, Local SGD eventually generalizes better than the corresponding (parallel) SGD baseline.
    \item In \Cref{subsec:sde-near-man}, we derive a special SDE that characterizes the long-term behavior of Local SGD in the small learning rate regime, as inspired by a previous work~\citep{li2021happens} that proposed this type of SDE for modeling SGD. These SDEs can track the dynamics after the iterate has reached close to a manifold of minima. In this regime, the expected gradient is near zero, but the gradient noise can drive the iterate to wander around. In contrast to the conventional SDE~\eqref{eq:canonical SDE} for SGD, where the drift and diffusion terms are connected respectively to the expected gradient and gradient noise, the SDE we derived for Local SGD has drift and diffusion terms both connected to gradient noise. 
    \item \Cref{subsec:inter} explains the generalization improvement of Local SGD over SGD by comparing the corresponding SDEs: increasing the number of local steps $H$ strengthens the drift term of SDE while keeping the diffusion term untouched. We hypothesize that having a stronger drift term can benefit generalization.
    \item As a by-product, we provide a new proof technique that can give the first quantitative approximation bound for how well \citet{li2021happens}'s SDE approximates SGD.
\end{enumerate}
Back to the discussion on the generalization gap between small- and large-batch training, we remark that this gap can occur early in training when the learning rate is very large~\citep{smith2020generalization} and Local SGD cannot prevent this gap in this phase. Instead, our theory suggests that Local SGD can reduce the gap in late training phases after decaying the learning rate.

\section{When does Local SGD Generalize Better?}\label{sec:regime}

In our motivating example of Post-local SGD, switching from SGD to Local SGD can outperform running SGD alone (i.e., no switching) in test accuracy, but this improvement does not always arise and can depend on the choice of the switching time point. Because of this, a necessary first step for developing a theoretical understanding of Local SGD is to identify {\em under what general conditions} Local SGD can improve the generalization of SGD by merely adding local steps. 

%Local SGD generalizes better than the corresponding SGD baseline.
%switching from (parallel) SGD to Local SGD leads to better generalization.
%However, it is still in debate under what general conditions this generalization
%benefit arises. In this section, we aim to identify these conditions.
%for Local SGD.
%vanilla Local SGD, i.e., Local SGD without any additional training
%tricks.\longbo{should we just say for local SGD? If we want to mention tricks,
%we should at least comment on whether similar results can be shown once tricks
%are used}

\begin{comment}
%first summarize the debate in the literature
%regarding the generalization performance of Local SGD.

%but as mentioned in the introduction, 
%a big obstacle for doing any theory is that {\em the general condition under which Local SGD yields generalization benefits is still unclear.}

%a big obstacle for doing any theory is that {\em the generalization benefits of Local SGD
%have not yet been clearly established under general conditions}. \xinran{the obstacle is that the general condition under which Local SGD yields generalization benefits is still unclear.}

In this section, we first summarize a debate in the literature
regarding in what training regimes Local SGD can generalize better than the SGD baseline.
In hope to settle this debate, we conjecture that 
{\em small learning rate} and {\em sufficient training time} are two of the most important factors
that contribute to the good generalization of Local SGD,
and conduct ablation studies to justify our view.
\kaifeng{Trying to see we should say they are key factors to ``generalizes well'' or ``outperforms SGD by a large gap''.}
\end{comment}

\subsection{The Debate on Local SGD}\label{subsec:debate}

We first summarize a debate in the literature regarding {\em when} to switch from SGD to Local SGD in running Post-local SGD, which hints the conditions so that Local SGD can improve upon SGD.

%\hlt{Post-local SGD can be viewed as running Local SGD from an SGD-pretrained model,
%where the training times for Local SGD training and SGD pretraining
%are determined by the time to switch from SGD to Local SGD. We first summarize a debate in the literature regarding {\em when} to switch in running Post-local SGD, since this can reveal some information about the conditions under which adding local steps improves 
%the generalization of SGD.}

%We first summarize a debate in the literature regarding {\em when} to switch the
%training mode in Post-local SGD. As Post-local SGD can be viewed as running
%Local SGD from an SGD-pretrained model, the discussion around the time point for
%switching can reveal some information about the conditions for Local SGD to
%generalize better.

%the choice of the time point for switching the training mode in Post-local SGD
%in what training regimes Post-local SGD 
%can generalize better than the
%corresponding SGD baseline.
%the generalization benefit
%of Post-local SGD.
%in what training regimes Post-local SGD
%See \Cref{subsec: debate supp} for a more detailed
%review. 

\myparagraph{Local SGD generalizes better than SGD on CIFAR-10.} 
\citet{lin2020dont} empirically observed that Post-local SGD
exhibits a better generalization performance than SGD. Most of their experiments
are conducted on CIFAR-10 and CIFAR-100 with multiple learning rate decays, and the algorithm switches from (parallel) SGD to Local SGD right
after the first learning rate decay. We refer to this particular choice of the
switching time point as the \emph{first-decay switching strategy} for short. To
justify this strategy, they empirically showed that the generalization
improvement can be less significant if starting Local SGD from the beginning or
right after the second learning rate decay. It has also been observed by
\citet{wang2021cooperative} that running Local SGD from the beginning improves
generalization, but  the test accuracy improvement may not be large enough. A
subsequent work by \citet{lin2020extrapolation} showed that adding local steps
to Extrap-SGD, a variant of SGD proposed therein, after the first learning rate
decay also improves generalization, suggesting that the first-decay switching
strategy can also be applied to the post-local variant of other optimizers.
% with Local SGD.
% combine Local SGD with other optimizers.

\myparagraph{Does Local SGD exhibit the same generalization benefit on large-scale datasets?}
%first performs
%training with parallel SGD until
%then it switches to Local SGD but only averages the model once at the end of
%training 
%Note that Post-local SGD can be understood as running Local SGD
%starting from a model that is pre-trained with SGD for $t_0$ steps,
%and \citet{lin2020dont}'s results can be understood as running 
%\kaifeng{mention SWAP}
Going beyond CIFAR-10, \citet{lin2020dont} conducted a few ImageNet experiments
and showed that Post-local SGD with first-decay switching strategy still leads
to better generalization than SGD. However, the improvement is sometimes
marginal, e.g., $0.1\%$ for batch size $8192$. For the general case,
they suggested that the time of switching should be tuned aiming at
``capturing the time when trajectory starts to get into the influence basin of a
local minimum'' in a footnote, but no further discussion or experiments are
provided to justify this guideline. \citet{ortiz2021trade} conducted a more
extensive evaluation on ImageNet (with a different set of
hyperparameters) and concluded with the opposite: the first-decay switching
strategy can hurt the validation accuracy.
%They showed via ablation studies that this is due to the improper choice of the post-local switching time.
% If one follows \citet{lin2020dont}'s suggestion
% to choose the switching time to be the time of the first learning rate decay,
% then the validation accuracy can drop by nearly $1\%$.
% Switching at an earlier time, such as doing Local SGD from the beginning,
% leads to even worse validation accuracy.
Instead, switching at a later time, such as the second learning rate decay,
 leads to a better validation accuracy than SGD.\footnote{This
generalization improvement is not mentioned explicitly in \citep{ortiz2021trade}
but can be clearly seen from Figures 7 and 8 in their paper.} To explain this
phenomenon, they conjecture that switching to Local SGD has a
regularization effect that is beneficial only in the short-term, so it is always
better to switch as late as possible. They further conjecture that this discrepancy between CIFAR-10 and
ImageNet is mainly due to the task scale. On TinyImageNet, which is a spatially
downscaled subset of ImageNet, the first-decay switching strategy indeed leads to better validation
accuracy.

% Finally, \citet{ortiz2021trade} conjectured that the discrepancy between
% CIFAR-10 and ImageNet is mainly due to the task scale. On TinyImageNet, which is
% a spatially downscaled subset of ImageNet, switching at the first learning rate
% decay indeed leads to better validation accuracy.

% With a carefully designed learning rate
% schedule (cyclic learning rate + linear decay), they reported an improvement of
% $\sim 0.5\%$ in test accuracy on CIFAR-10.
% SWAP also seems to face challenges to generalize well on ImageNet,
% since \citet{gupta2020swap} had to use sophisticated schedules for both learning
% rate and batch size (see Section 5.2 of their paper).
%A few ImageNet experiment
%\citet{lin2020dont} also conducted 

\begin{comment}
    
\myparagraph{Summary.} 
All the above papers agree that Local SGD generalizes better than
SGD to some extent. However, 
the generalization benefit of Post-local SGD
may not appear under the same conditions for datasets of different scales,
e.g., the best choice of the post-local switching time may change.
\end{comment}
%under what general conditions Local SGD can
%consistently generalize better is in debate.
%since different training tricks are used in different works to make this generalization benefit significant.

\vspace{-0.05in}

\subsection{Key Factors: Small Learning Rate and Sufficient Training Time} \label{sec:key-factors}

\vspace{-0.05in}
All the above papers agree that Post-local/Local SGD improves upon SGD to some extent. However, 
it is in debate under what conditions the generalization benefit can consistently occur.
We now conduct ablation studies to identify the key factors so that adding
local steps improves the generalization of SGD. We run parallel SGD and Local
SGD with the same learning rate $\eta$, local batch size $\Bloc$, and number of
workers $K$, but Local SGD performs $H > 1$ local steps per round.
We start training from
the same initialization and compare their generalization after the same number
of epochs. As Post-local SGD can be viewed as Local SGD starting from an
SGD-pretrained model, the initial point in our experiments can be 
either random or a checkpoint of SGD training.
%\hlt{In all our experiments, we draw local batches via sampling without replacement, which is slightly different to the SGD and Local SGD procedures of sampling with replacement in \Cref{sec:intro}.}
For simplicity, we keep the learning rate constant over time.
Post-local SGD that switches the training mode at the last learning rate decay
corresponds to this case, as the learning rate remains constant thereafter.
See \Cref{sec:pseudocode} for the implementation details of parallel SGD and Local SGD and \Cref{sec:detail regime} for more details about the experimental setup.

\begin{comment}
following the definition in
\Cref{sec:intro} and start training from either random
initialization or a pre-trained model. We aim to identify the key factors that
make Local SGD generalize better than the SGD baseline with the same
initialization, learning rate $\eta$, local batch size $\Bloc$ and  number
of workers $K$.
We focus on a simple yet insightful experimental setting: the learning rate
$\eta$ is constant with time. Post-local SGD that switches the training mode at
the last learning rate decay corresponds to this case since the learning rate
does not change any more after the last decay.
\end{comment}

%it can be
%understood as training Local SGD with constant learning rate, starting from a
%model pre-trained by SGD with larger learning rates. 

%we focus on a simple yet insightful
%experimental setting: training Local SGD with constant learning rate and without
%momentum, starting

\begin{figure}[t]
\vspace{-0.5in}
\begin{center}
    \subfigure[\footnotesize CIFAR-10, start from random.]{
        \includegraphics[width=0.3\textwidth]{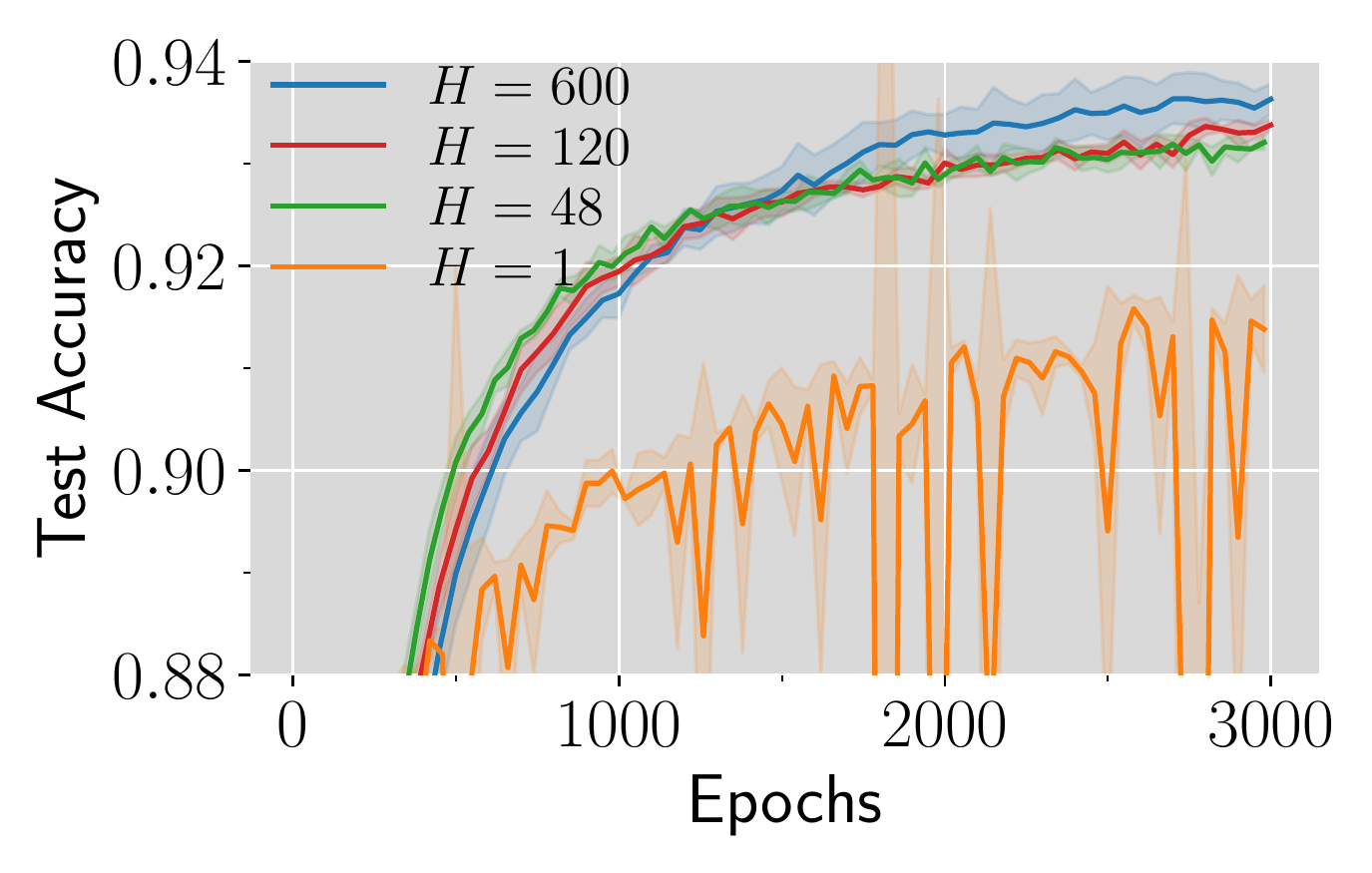}\label{fig:effect-a}
    }
    \hspace{0.1in}
    \subfigure[\footnotesize CIFAR-10, start from \#$250$.]{
        \includegraphics[width=0.3\textwidth]{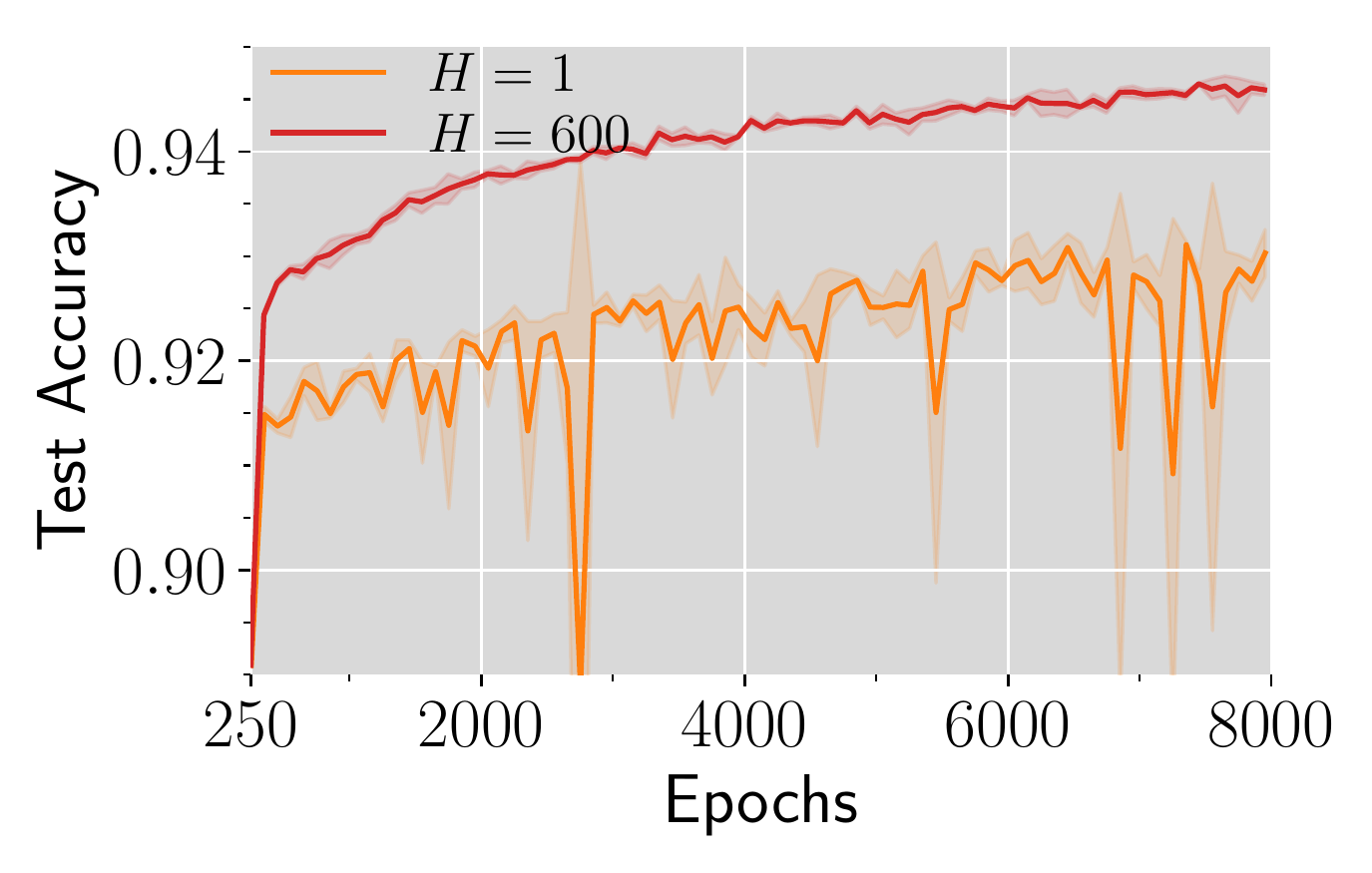}\label{fig:effect-b}
    }
    \hspace{0.1in}
    \subfigure[\footnotesize ImageNet, start from \#$100$.]{
        \includegraphics[width=0.3\textwidth]{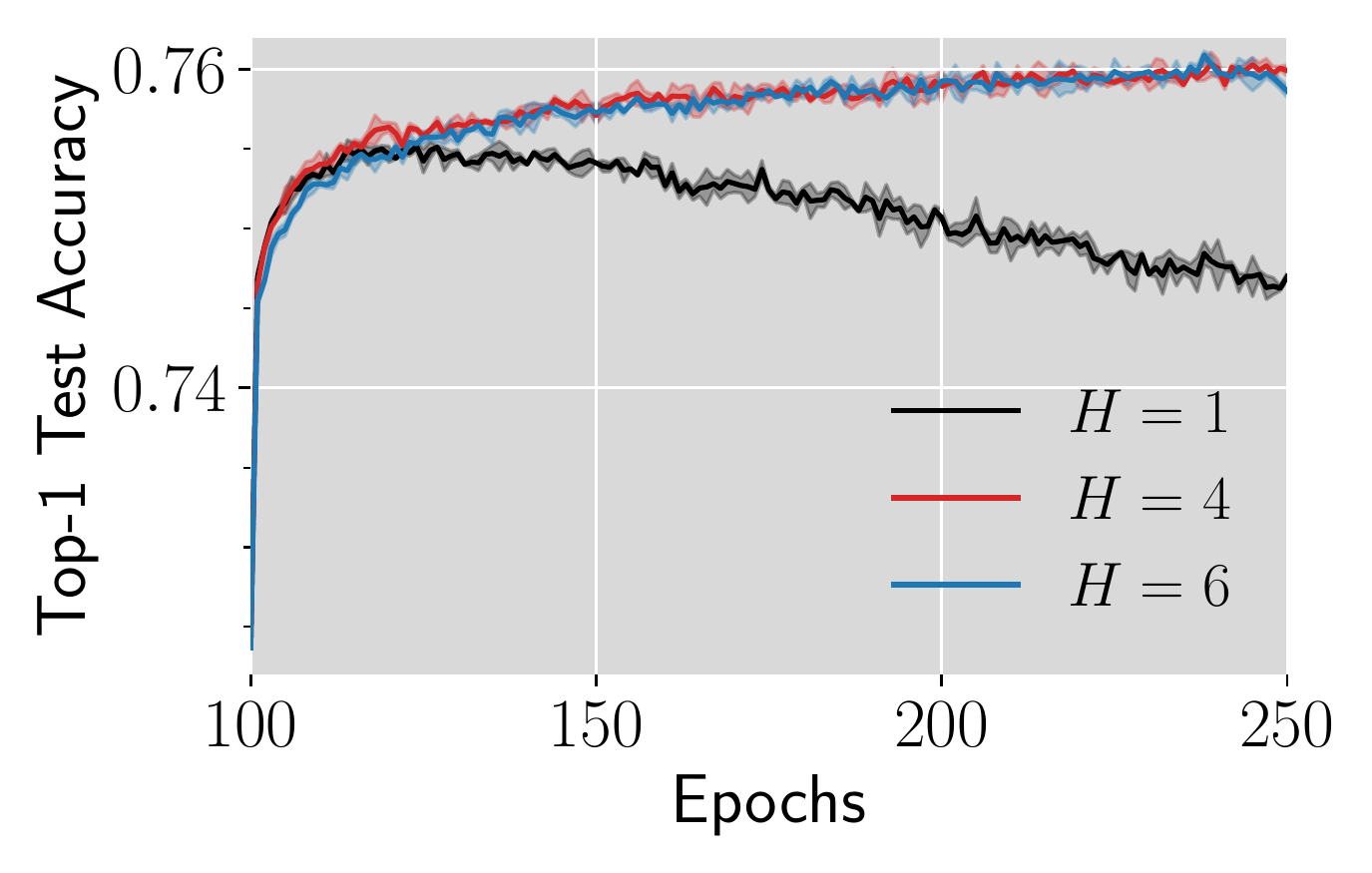}\label{fig:effect-c}
    }

    \vspace{-0.05in}
    \subfigure[\footnotesize ImageNet, first phase $\eta=3.2$.]{
        \includegraphics[width=0.3\textwidth]{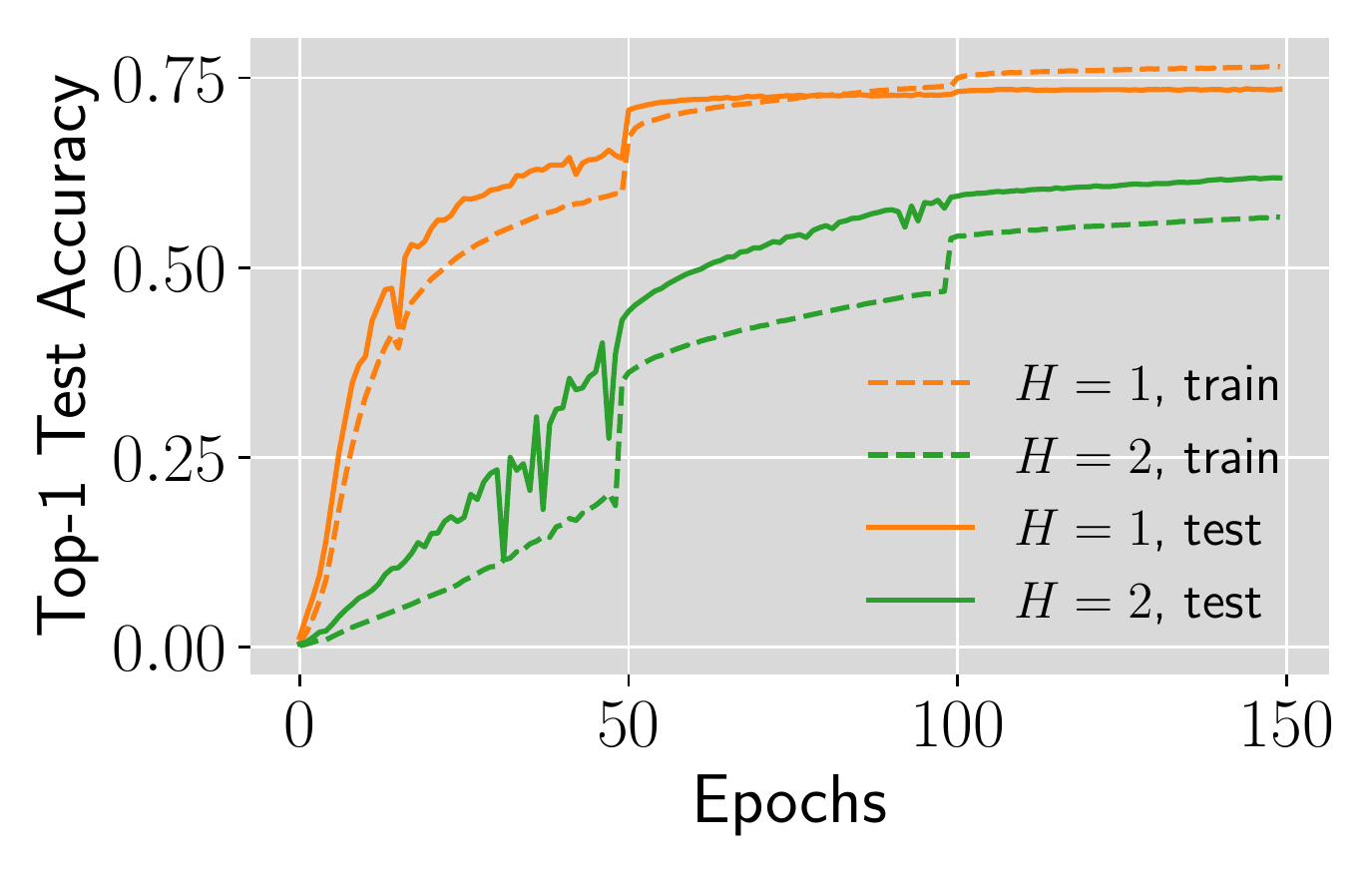}\label{fig:effect-d}
    }
    \hspace{0.1in} 
    \subfigure[CIFAR-10, test acc v.s. $H$.]{
        \includegraphics[width=0.3\textwidth]{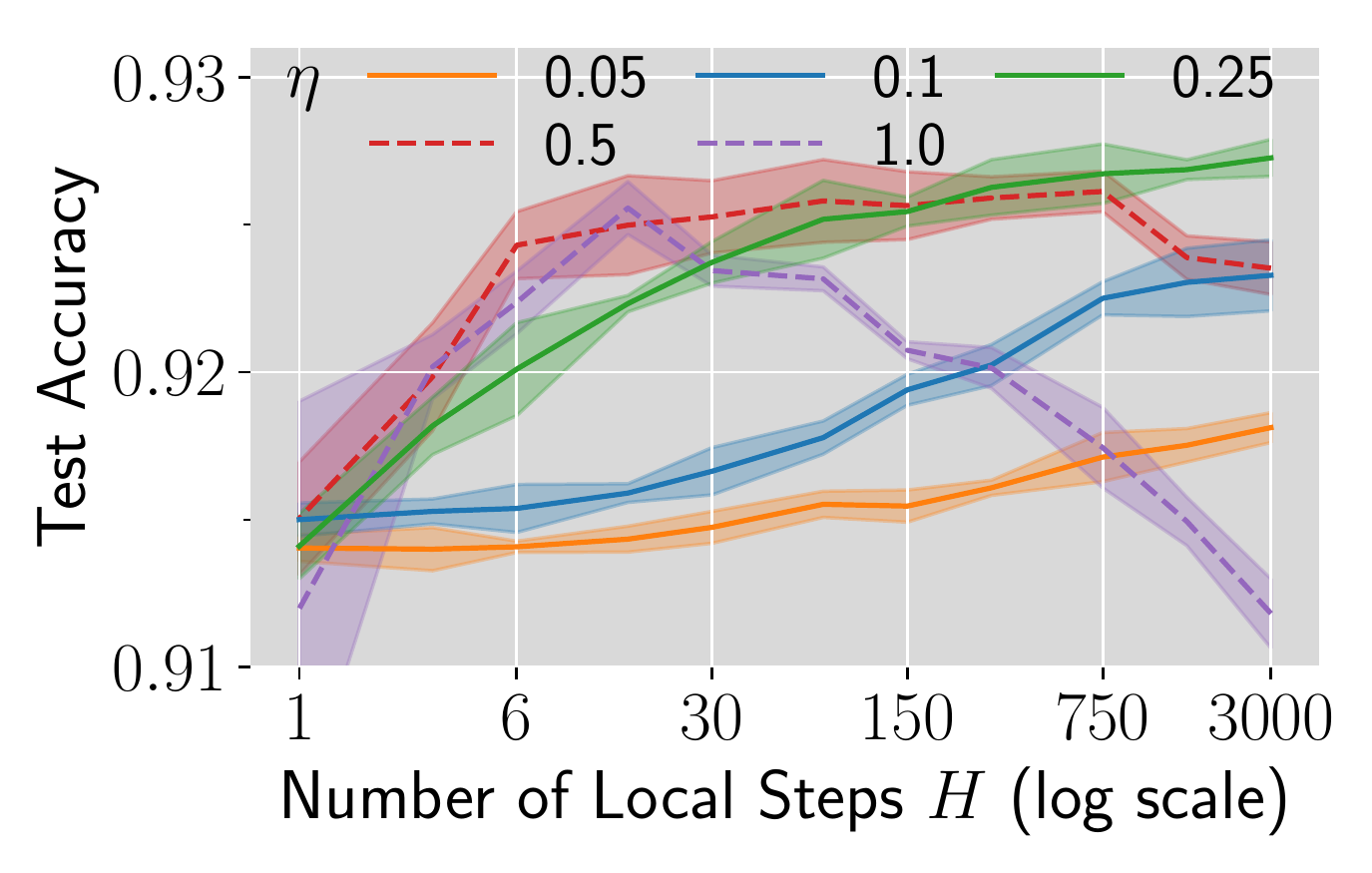}\label{fig:effect-e}
    }
    \hspace{0.1in}
    \subfigure[\footnotesize ImageNet, test acc v.s. $H$.]{
        \includegraphics[width=0.3\textwidth]{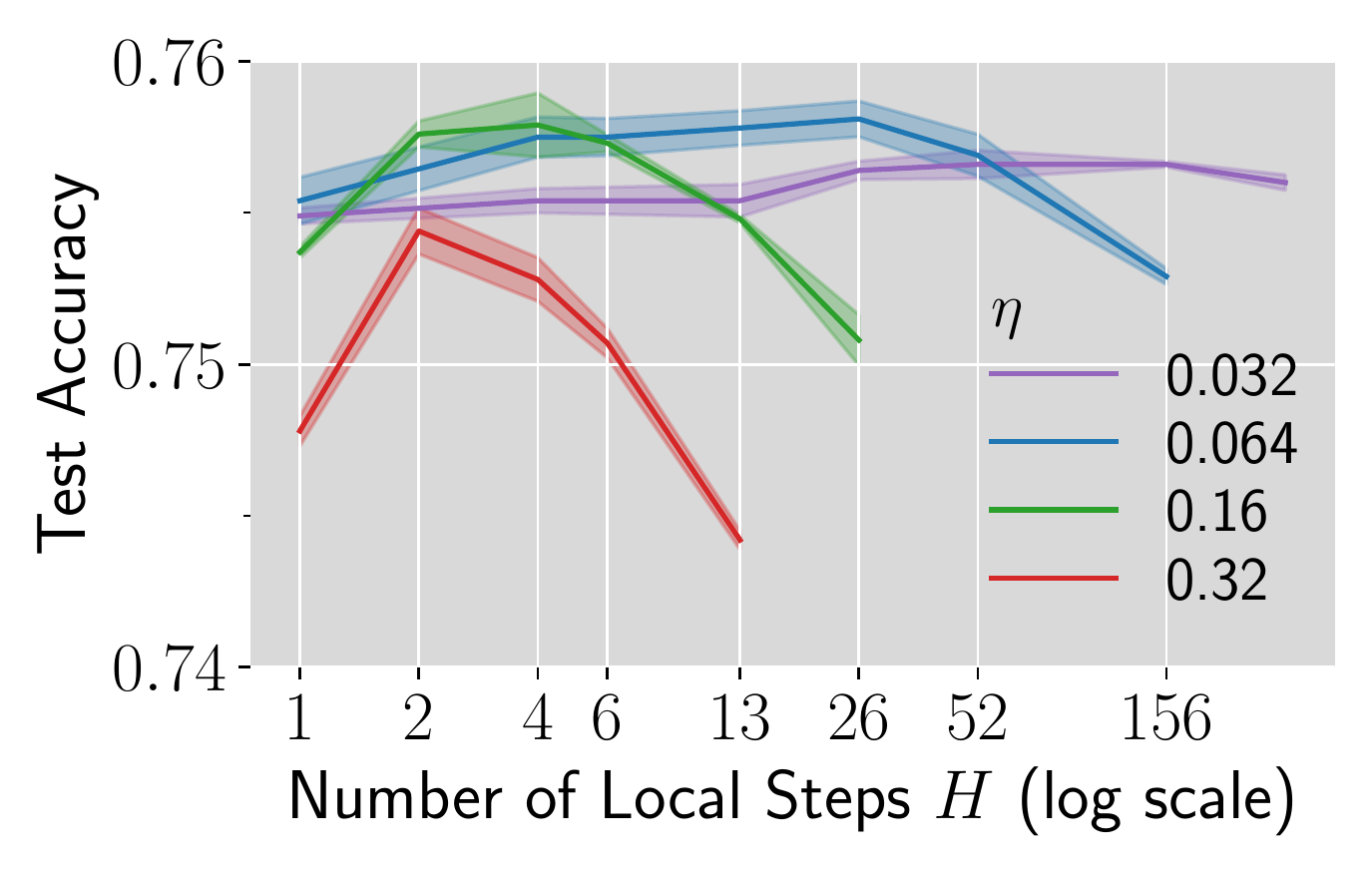}\label{fig:effect-f}
    }
    \vspace{-0.1in}
    \caption{
        Ablation studies on $\eta$, $H$ and training time in the same setting as \Cref{fig:Postlocal}.
        For (a)(d), we train from random initialization.
        For (b)(c)(e)(f), we start training
        from the checkpoints saved at the switching time points in \Cref{fig:Postlocal} (epoch \#250 for CIFAR-10 and epoch \#100 for ImageNet). See \Cref{sec:detail regime} for training details.
    } \label{fig:effect}
    \vspace{-0.1in}
\end{center}
\end{figure}

The first observation we have is that the generalization benefits can be
reproduced on both CIFAR-10 and ImageNet in our setting (see
\Cref{fig:Postlocal}). %, suggesting that significant generalization benefits exist. 
%even with no additional training tricks. 
We remark that Post-local SGD and SGD in %the experiments in 
\citet{lin2020dont,ortiz2021trade}
are implemented with
accompanying Nesterov momentum terms. 
%not exactly equivalent to the vanilla versions we introduced in
%\Cref{sec:intro} because they are . 
The learning rate also decays a couple of times in training with Local SGD.
Nevertheless, our experiments show that the Nesterov momentum and learning rate decay are \emph{not} necessary for Local SGD to generalize better than
SGD.
Our main finding after further ablation studies is summarized below:
%We further conduct ablation studies  and our main finding is
%summarized below:
%the following claim on the key factors
%to the good generalization of Local SGD,
%and we support our claim by doing a series of ablation studies.

%Local SGD from random initialization with $\eta=0.32$ still gives much better performance than parallel SGD. It is not necessary to start from a point with high accuracy but small learning rate is the key. Dataset: CIFAR-10

\begin{finding} \label{fin:main}
    Given a sufficiently small learning rate and a
    sufficiently long training time, Local SGD exhibits better generalization
    than SGD, if the number of local steps $H$ per round is tuned properly
    according to the learning rate. This holds for both training from random initialization
    and from pre-trained models.
\end{finding}

Now we go through each point of our main finding. See also \Cref{sec:regime add exp} for more plots.

\myparagraph{(1). Pretraining is not necessary.}
In contrast to previous works claiming the benefits of Post-local SGD over
Local SGD~\citep{lin2020dont,ortiz2021trade}, we observe that Local SGD with
random initialization also generalizes significantly better than SGD, as long as
the learning rate is small and the training time is sufficiently long (\Cref{fig:effect-a}). Starting from a pretrained model may shorten the time
to reach this generalization benefit to show up (\Cref{fig:effect-b}),
but it is not necessary.
%This suggests that starting from a pre-trained model is {\em not}
%a necessary condition. % for Local SGD to generalize better in the end.

%further conduct ablation studies on the learning rate and training time,
%and found that they are important factors for the generalization benefit.
%We then provide a closer look at the effect of each key hyperparamter.
\myparagraph{(2). Learning rate should be small.}
%For small learning rates, Local SGD consistently achieves higher test accuracy
%no matter it starts from random initialization (Figure TBA) or a pre-trained
%model (Figure TBA).  
We experiment with a wide range of learning rates to conclude that setting a
small learning rate is necessary. The learning rate is $0.32$ for
\Cref{fig:effect-a,fig:effect-b} and is $0.16$ for \Cref{fig:effect-c}. As shown in \Cref{fig:effect-d}, Local SGD encounters optimization difficulty in the first phase where $\eta$ is large ($\eta=3.2$), 
resulting in inferior
final test accuracy. Even for training from a pretrained model, the
generalization improvement of Local SGD disappears for large learning rates
(e.g., $\eta=1.6$ in \Cref{fig:add effect-d}). In contrast,
if a longer training time is allowed,
reducing the learning rate of Local SGD 
does not lead to test accuracy drop (\Cref{fig:add effect-c}).

%can achieve equally good generalization
%\Cref{fig:add
%effect-c} shows that Local SGD achieves a comparable test accuracy with a much
%smaller learning rate $\eta=0.064$ (with $H$ and the training budget set
%properly).
%In contrast, for small
%learning rates, Local SGD consistently achieves higher test accuracy whether it
%starts from random initialization (Figure TBA) or a pre-trained model (Figure
%TBA). 
%even with learning rate warmup and a small number of local steps $H$
%becomes negligible
% \xinran{Figure 2 and 4 can be seen as evidence; run experiments on imagenet, use H=2. Also do this experiment on CIFAR}
% \xinran{Add experiment: use smaller learning rate, post local sgd }

\myparagraph{(3). Training time should be long enough.}
To investigate the effect of training time, in \Cref{fig:effect-b,fig:effect-c},
we extend the training budget for the Post-local SGD experiments in
\Cref{fig:Postlocal} and observe that a longer training time leads to greater
generalization improvement upon SGD. On the other hand, Local SGD generalizes
worse than SGD in the first few epochs of \Cref{fig:effect-a,fig:effect-c}; see
\Cref{fig:add effect-a,fig:add effect-b} for an enlarged view.

%This observation suggests that sufficient training time is indeed
%required for the generalization benefit of Local SGD to appear. 
% \kaifeng{Think Experiment: Switching at the second to the last step should not work}

% \kaifeng{Experiment 3: train longer, generalize better. Continue the training of Post-local SGD, comparing with Parallel SGD under the same budget. \Cref{fig:long250}}

\myparagraph{(4). The number of local steps $H$ should be tuned carefully.} 
The number of local steps $H$
has a complex interplay with the learning rate $\eta$, 
but generally speaking,
a smaller $\eta$ needs
a higher $H$ to achieve consistent generalization improvement.
%we experiment
%with various combinations of $H$ and $\eta$ and find that 
For CIFAR-10 with
a post-local training budget of 250 epochs (see \Cref{fig:effect-e}), the test accuracy
first rises as $H$ increases, and begins to fall as $H$ exceeds some
threshold for relatively large $\eta$ (e.g., $\eta\geq 0.5$) while keeps growing
for smaller $\eta$ (e.g., $\eta<0.5$). For ImageNet with a post-local training
budget of 50 epochs (see \Cref{fig:effect-f}), the test
accuracy first increases and then decreases in $H$ for all learning rates.

% \xinran{TBA} Adding local steps in the late phase of training helps generalization.( Select one run each setting for visualization, shuffle the dataset across all workers at each epoch.  $H_1\to H_2$ means the training starts with $H_1$ local steps and switches to $H_2$ in when the learning rate decays.)
% \begin{table}[H]
% \centering
% \caption{Adding local steps in the late phase of training helps generalization. (Setting: CIFAR-10 with data augmentation, ResNet-56 with BN, $\Bloc=128$, $K=32$, $KB=4096$ 500 epochs, 
% shuffle every epoch, mean and standard deviation across 5 runs. BN statistics are calculated by passing $8$ batches before evaluation.
%  warmup epochs 50. Do not add weight decay to BN parameters. $H_1\to H_2$ means the training starts with $H_1$ local steps and switches to $H_2$ in when the learning rate decays. small-batch baseline acc: $94.26_{\pm 0.11}$)}

\myparagraph{Reconciling previous works.} Our finding can help to settle
the debate presented in \Cref{subsec:debate} to a large extent. Simultaneously
requiring  a small learning rate and sufficient training time poses a trade-off
when learning rate decay is used with a limited training budget: switching to
Local SGD earlier may lead to a large learning rate, while switching later makes the generalization improvement of Local SGD less noticeable due to fewer update steps.
It is thus unsurprising that first-decay switching strategy is not
always the best when the dataset and learning rate schedule change.
%In neither case can Local SGD yield generalization benefits.
%This trade-off 

The need for sufficient training time does not contradict with 
\citet{ortiz2021trade}'s conjecture that Local SGD only has a ``short-term''
generalization benefit. In their experiments,  the generalization improvement usually disappears right after the next
learning rate decay (instead of after a fixed amount of time). We suspect that
the real reason why the improvement vanishes is that the number
of local steps $H$ was kept as a constant, but our finding 
suggests tuning $H$ after $\eta$ changes. 
In \Cref{fig:add effect-e}, we reproduce
this phenomenon and show that increasing $H$
after learning rate decay retains the improvement. 

\myparagraph{Generalization performances at the optimal learning rate of SGD.}
    In practice, the learning rate of SGD is usually tuned to achieve the best
    training loss/validation accuracy within a fixed training budget. Our
    finding suggests that when the tuned learning rate is small and the training
    time is sufficient, Local SGD can offer generalization improvement over SGD. As an example,
    in our experiments on training from an SGD-pretrained model,
    the optimal learning rate for SGD is $0.5$ on CIFAR-10 (\Cref{fig:effect-e}) and $0.064$ on ImageNet (\Cref{fig:effect-f}).
    With the same learning rate as SGD,
    the test accuracy is improved by $1.1\%$ on CIFAR-10 and $0.3\%$ on ImageNet when using Local SGD with $H=750$ and $H=26$ respectively.
    The improvement could become even higher if the learning rate of Local SGD is
    carefully tuned.

\section{Theoretical Analysis of Local SGD: The Slow SDE} \label{sec:theory}

In this section, we adopt an SDE-based
approach to rigorously establish the generalization benefit of Local SGD in a general setting. Below, we first identify the difficulty
of adapting the SDE framework to Local SGD. Then, we present our novel SDE
characterization of Local SGD around the manifold of minimizers
and explain the generalization benefit of Local SGD with our SDE. 

\myparagraph{Notations.} %\xinran{Adjust according to the notation in intro}
We follow the notations in \Cref{sec:intro}.
We denote by $\eta$ the learning rate, $K$ the number of workers, $B$ the (global) batch size, $\Bloc := B / K$ the local batch size, $H$ the number of local steps,
$\ell(\vtheta; \zeta)$ the loss function for a data sample $\zeta$, and $\ctD$ the training distribution.
Furthermore, we define $\cL(\vtheta) := \E_{\xi \sim \ctD}[\ell(\vtheta; \xi)]$
as the expected loss, $\mSig(\vtheta) := \Cov_{\xi \sim \ctD}[\nabla \ell(\vtheta; \xi)]$ as
the noise covariance of gradients at $\vtheta$. 
%The mathematical definitions of SGD and Local SGD are the same as \Cref{subsec:intro-sgd,subsec:intro-local-sgd}.
%that is, $\mSig(\vtheta)$ satisfies $\frac{1}{B} \mSig(\vtheta) = \Cov[\vg]$, where $\vg \sim \cG_B(\vtheta)$ and $\mathcal{G}_{B}(\vtheta)$ is the stochastic gradient oracle.
%By linearity of covariance matrices for independent variables, the definition of $\mSig$ does not depend on $B$.
Let $\{\vW_t\}_{t\geq 0}$ denote the standard Wiener process.
For a mapping $F:\R^d\to \R^d$, denote by $\partial F(\vtheta)$ the Jacobian at
$\vtheta$ and $\partial^2 F(\vtheta)$ the second order derivative at $\vtheta$.
Furthermore,  for any matrix $\mM\in \R^{d\times d}$, $\partial^2
F(\vtheta)[\mM]=\sum_{i\in[d]}\inne{ \frac{\partial^2 F_i}{\partial
\vtheta^2}}{\mM}\ve_i$ where $\ve_i$ is the $i$-th vector of the standard
basis. We write $\partial^2 (\nabla \cL)(\vtheta)[\mM]$ as $\nabla^3 \cL(\vtheta)[\mM]$ for short.

\myparagraph{Local SGD.} 
We use the following formulation of Local SGD for theoretical analysis. See also \Cref{sec:pseudocode} for the pseudocode. Local SGD proceeds in multiple rounds of model averaging, where each round produces a global iterate $\bvths$.
In the $(s+1)$-th round, every worker $k \in [K]$
starts with its local copy of the global iterate $\vths_{k,0} \gets \bvths$
and does $H$ steps of SGD with local batches. In the $t$-th local step of the $k$-th worker, it draws a local batch of $\Bloc := B / K$
independent samples $\xis_{k,t,1}, \dots, \xis_{k,t,\Bloc}$ from a
shared training distribution $\ctD$ and updates as follows:
\begin{align}\label{eq:upd intro localsgd}
    \vths_{k,t+1} \gets \vths_{k, t} - \eta \vgs_{k, t}, \quad \text{where} \quad
    \vgs_{k, t} = \frac{1}{\Bloc}\sum_{i=1}^{\Bloc} \nabla \ell(\vths_{k,t}; \xis_{k,t,i}), \quad t = 0, \dots, H - 1.
\end{align}
The local updates on different workers are independent of each other as there is no communication.
After finishing the $H$ local steps, the workers aggregate the resulting local iterates $\vths_{k,H}$ and assign the average to the next global iterate: $\bvths[s+1] \gets \frac{1}{K}\sum_{k=1}^K \vths_{k, H}$.

%We also denote by $\cG_B, \cG_{\Bloc}$ the stochastic gradient oracles with batch size $B$ and $\Bloc$, respectively.
%For a manifold $\Gamma$ and a point $\vtheta\in\Gamma$, denote by $T_{\vtheta}(\Gamma)$ the tangent space of $\Gamma$ at $\vtheta$.

\vspace{-0.03in}
\subsection{Difficulty of Adapting the SDE Framework to Local SGD} \label{subsec:con-sde}

\vspace{-0.03in}
%The Conventional SDE is not a Sufficient Language}

%The main technical challenge comes from the observation that
%the generalization benefit of Local SGD takes a longer time to occur than
%that in the usual dynamical analysis of SGD via SDE.

%which typically tracks $O(\eta^{-1})$ steps.

%The following 

%this conventional SDE is an insufficient language to separate Local SGD and SGD.
%\kaifeng{mention the time scaling $\eta^{-2}$ in the beginning}
A widely-adopted approach to understanding the dynamics of SGD is to approximate it
from a continuous perspective with the following SDE~\eqref{eq:canonical SDE}, which we call the {\em
conventional SDE approximation}. Below, we discuss why it cannot be directly adopted 
to characterize the behavior of Local SGD. %Specifically, consider the SDE in~\eqref{eq:canonical SDE} below:  
%It is tempting to adapt this conventional SDE
% to model the behavior of Local SGD, but now we discuss why this may not be an
%effective approach.
\begin{align}
    \dd\vX(t)=-\nabla \cL(\vX)\dd t+\sqrt{\tfrac{\eta}{B}}\mSig^{\sfrac{1}{2}}(\vX)\dd \vW_t. \label{eq:canonical SDE}
\end{align}
It is proved by \citet{li2019stochastic} that this SDE is a
first-order approximation to SGD, % with $O(\eta^{-1})$ discrete steps,
where each discrete step corresponds to a continuous time interval of $\eta$. Several previous works adopt this
SDE approximation and connect good generalization to having a large diffusion
term $\sqrt{\frac{\eta}{B}} \mSig^{\sfrac{1}{2}} \dd \vW_t$ in the
SDE~\citep{jastrzkebski2017three,smith2020generalization}, because  
a suitable amount of noise can be necessary
for large-batch training to generalize well (see also \Cref{sec:related,sec:addrelated}).

According to \Cref{fin:main},
it is tempting to consider
the limit $\eta \to 0$
and see if Local SGD can also be modeled via a variant of the conventional SDE.
%Intuitively, 
In this case the typical time length that guarantees a good SDE approximation
error is $\cO(\eta^{-1})$ discrete
steps~\citep{li2019stochastic,li2021validity}. However, this time scaling is too
short for the difference to appear between Local SGD and SGD. Indeed,
\Cref{thm:small thm} below shows that they closely track each other for
$\cO(\eta^{-1})$ steps.
%The SDE approximations for the two algorithms should be the same for
%$\cO(\eta^{-1})$ steps and hence fail to distinguish them.
\begin{theorem}\label{thm:small thm}
    Assume that the loss function $\cL$ is
    $\cC^3$-smooth with bounded second and third order derivatives and that
    $\nabla \ell(\vtheta; \xi)$ is bounded. Let $T > 0$ be a constant,
    $\bvths$ be the $s$-th global iterate of Local SGD and $\vw_t$ be the $t$-th
    iterate of SGD with the same initialization $\vw_0=\bvths[0]$ and same
    $\eta, \Bloc, K$. Then for any $H\leq \frac{T}{\eta}$ and $\delta = \cO(\poly(\eta))$, it holds with
    probability at least $1-\delta$ that for all $s \le \frac{T}{\eta H}$,
    $\normtwo{\bvths-\vw_{sH}}=\cO(\sqrt{\eta \log \frac{1}{\eta\delta}})$.
\end{theorem}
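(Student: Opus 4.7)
I will couple Local SGD and SGD so that at local step $t$ of round $s$ the pooled batch $\{\xis_{k,t,i}\}_{k\in[K],i\in[\Bloc]}$ drawn by the $K$ Local SGD workers is used in its entirety as SGD's batch at step $sH+t$; under this coupling SGD's stochastic gradient equals $\frac{1}{K}\sum_k \vgs_{k,t}(\vw_{sH+t})$. Define $\vDeltas := \bvths-\vw_{sH}$ and $\vu^{(s)}_{k,t}:=\vths_{k,t}-\vw_{sH+t}$, so $\vu^{(s)}_{k,0}=\vDeltas$ for all $k$. The assumed boundedness of $\nabla\ell$ and an Azuma--Hoeffding bound applied to $\sum_{\tau<t}(\vgs_{k,\tau}(\bvths)-\nabla\cL(\bvths))$ give, with probability at least $1-\delta/3$ uniformly over $s\le S:=\lfloor T/(\eta H)\rfloor$, $t\le H$, $k\in[K]$, the in-round trajectory bound $\normtwo{\vths_{k,t}-\bvths}\vee \normtwo{\vw_{sH+t}-\vw_{sH}}=\cO(\eta\sqrt{t\log(SHK/\delta)})$.

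Taylor expanding $\vgs_{k,t}(\vths_{k,t})-\vgs_{k,t}(\vw_{sH+t})=\partial\vgs_{k,t}(\vw_{sH+t})\vu^{(s)}_{k,t}+O(\normtwo{\vu^{(s)}_{k,t}}^2)$ and iterating the local update over $t$, the critical cancellation is that the naive leading-order contribution $-\eta\sum_t \tfrac{1}{K}\sum_k (\vgs_{k,t}(\vw_{sH+t})-\bar{\vg}^{(s)}_t(\vw_{sH+t}))$ vanishes identically since $\sum_k \vgs_{k,t}=K\bar{\vg}^{(s)}_t$ by definition of $\bar{\vg}^{(s)}_t:=\tfrac{1}{K}\sum_k \vgs_{k,t}$. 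What survives is the recursion
\[
\vDeltas[s+1]=\vM^{(s)}\vDeltas + \vN^{(s)} + \vr^{(s)},\qquad \vM^{(s)}:=\mI-\eta\sum_{t=0}^{H-1}\partial\bar{\vg}^{(s)}_t(\vw_{sH}),
\]
with $\vN^{(s)}:=-\eta^2\sum_{0\le\tau<t<H}\tfrac{1}{K}\sum_k \partial\vgs_{k,t}(\vw_{sH})\big(\vgs_{k,\tau}(\vw_{sH+\tau})-\bar{\vg}^{(s)}_\tau(\vw_{sH+\tau})\big)$, and $\normtwo{\vr^{(s)}}\lesssim \eta^3 H^2+\eta H\normtwo{\vDeltas}^2$ collecting Taylor remainders (via the in-round bound). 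Because samples at local steps $\tau$ and $t>\tau$ are independent, and independent across workers, $\vN^{(s)}$ is a zero-mean martingale difference conditionally on $\cF^{(s)}$, with $\normtwo{\vN^{(s)}}\lesssim \eta^2 H^{3/2}K^{-1/2}\sqrt{\log(S/\delta)}$ with high probability per round.

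The multiplier satisfies $\normtwo{\vM^{(s)}}\le 1+\eta H\normtwo{\nabla^2\cL}_{\infty}$, hence $\|\prod_{s'<s}\vM^{(s')}\|\le e^{\cO(T)}$ uniformly by $S\cdot \eta H = T$. Unrolling from $\vDeltas[0]=\vzero$ and taking a union bound,
\[
\sup_{s\le S}\normtwo{\vDeltas}\ \lesssim\ e^{\cO(T)}\Big(\sum_{s'<S}\normtwo{\vN^{(s')}} + \sum_{s'<S}\normtwo{\vr^{(s')}}\Big).
\]
Substituting $S=T/(\eta H)$ and $H\le T/\eta$, the first sum is $\lesssim S\cdot \eta^2 H^{3/2}K^{-1/2}\sqrt{\log(1/(\eta\delta))} = \eta T\sqrt{H/K}\sqrt{\log(1/(\eta\delta))} = \cO(\sqrt{\eta\log(1/(\eta\delta))})$, and the remainder sum is $\lesssim S\eta^3 H^2 + S\eta H \sup\normtwo{\vDeltas}^2 = \cO(\eta + \sup\normtwo{\vDeltas}^2)$. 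Closing the loop by bootstrapping on the event $\{\sup_{s\le S}\normtwo{\vDeltas}\le C\sqrt{\eta\log(1/(\eta\delta))}\}$ absorbs the self-referential term and yields the claimed bound.

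\textbf{Main obstacle.} The technical crux is exploiting the algebraic identity $\sum_k(\vgs_{k,t}-\bar{\vg}^{(s)}_t)=0$ at every evaluation point, which kills the $O(\eta)$ worker-noise contribution; without this cancellation the naive per-round error would be $O(\eta H)$ and summing over $S=T/(\eta H)$ rounds would leave an $\Omega(1)$ gap, useless as a tracking statement. The cancellation reduces per-round error to an $O(\eta^2 H^{3/2}K^{-1/2})$ martingale increment plus $O(\eta^3 H^2)$ deterministic Taylor remainders, which accumulate to only $\cO(\sqrt{\eta\log(1/(\eta\delta))})$. The accompanying bookkeeping---self-bounding the $\normtwo{\vDeltas}^2$ piece of $\vr^{(s)}$, union-bounding the in-round concentration over $s$, and verifying that the Jacobian noise in $\vM^{(s)}$ does not compound over rounds---is standard but must be handled together with the bootstrapped high-probability event.
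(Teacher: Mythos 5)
Your approach — couple Local SGD and SGD through shared batches, track $\vDeltas = \bvths - \vw_{sH}$ directly, and exploit the identity $\frac{1}{K}\sum_k \vgs_{k,t} = \bar{\vg}^{(s)}_t$ — is genuinely different from the paper's. The paper introduces a deterministic GD reference trajectory $\hvu_t$ launched from the common initialization, proves that Local SGD tracks $\hvu_{sH+t}$ to $\cO(\sqrt{\eta\log\tfrac{1}{\eta\delta}})$ (Lemma~\ref{lemma:small lemma local}), observes that SGD is the single-worker special case (Corollary~\ref{coro: small lemma SGD}), and concludes by the triangle inequality. The core recursion there expands only $\nabla\cL$, so only the assumed bounds on $\nabla^2\cL$ and $\nabla^3\cL$ enter, and the gradient noise $\vzs_{k,t}$ appears as a bounded additive martingale increment that is summed and controlled by Azuma--Hoeffding.

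Your proof has a genuine gap. You Taylor-expand the \emph{per-batch} gradient: $\vgs_{k,t}(\vths_{k,t}) - \vgs_{k,t}(\vw_{sH+t}) = \partial\vgs_{k,t}(\vw_{sH+t})\vu^{(s)}_{k,t} + O(\normtwo{\vu^{(s)}_{k,t}}^2)$, and you define $\vM^{(s)} = \mI - \eta\sum_t \partial\bar{\vg}^{(s)}_t(\vw_{sH})$ and bound $\normtwo{\vM^{(s)}}$ by $1+\eta H\normtwo{\nabla^2\cL}_\infty$. But $\partial\vgs_{k,t}$ is the Hessian of a minibatch loss, not of $\cL$, and the $O(\normtwo{\vu}^2)$ remainder hides the per-batch third derivative. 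The theorem's hypotheses give boundedness only of $\nabla\ell$ (first order, per sample) and of $\nabla^2\cL$, $\nabla^3\cL$ (the \emph{expected} loss); boundedness of $\nabla^2\ell$ is not implied and is not assumed. Without it, $\vM^{(s)}$ is uncontrolled and the product $\prod_{s'<s}\vM^{(s')}$ need not be $e^{\cO(T)}$. The repair is exactly what the paper does: write $\vgs_{k,t}(\vtheta) = \nabla\cL(\vtheta) + \vzs_{k,t}(\vtheta)$, Taylor-expand only $\nabla\cL$, and treat the noise difference $\vzs_{k,t}(\vths_{k,t}) - \bar{\vz}^{(s)}_t(\vw_{sH+t})$ as a bounded, conditionally mean-zero martingale increment. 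Separately, your in-round bound $\normtwo{\vths_{k,t}-\bvths} = \cO(\eta\sqrt{t\log(SHK/\delta)})$ is false: $\vths_{k,t}-\bvths = -\eta\sum_{\tau<t}[\nabla\cL(\vths_{k,\tau})+\vzs_{k,\tau}]$ has a deterministic drift of order $\eta t$ (the theorem does not place the iterate near a minimizer), and plugging the true $O(\eta t)$ into your remainder bookkeeping turns $\sum_s\normtwo{\vr^{(s)}}$ into $O(T^3)$, which ruins the argument. The fix is to bound the coupled difference $\vu^{(s)}_{k,t}=\vths_{k,t}-\vw_{sH+t}$, whose drift cancels to first order and which does satisfy $\normtwo{\vu^{(s)}_{k,t}} = O(\normtwo{\vDeltas} + \eta\sqrt{t\log})$; this requires running the $\nabla\cL+$noise decomposition at the per-step level rather than bounding each trajectory separately against its own round anchor.
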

We defer the proof for the above theorem to \Cref{sec:small thm proof}. 
See also \Cref{sec:multisde} for \citet{lin2020dont}'s attempt to model Local
SGD with multiple conventional SDEs and for our discussion on why it does not
give much insight.

\subsection{SDE Approximation near the Minimizer Manifold} \label{subsec:sde-near-man}

Inspired by a recent paper~\citep{li2021happens}, our strategy to overcome the shortcomings of the conventional SDE is to design a new SDE that can guarantee a good approximation for $\cO(\eta^{-2})$ discrete steps, much longer than the $\cO(\eta^{-1})$ discrete steps for the conventional SDE.
Following their setting, we assume the existence of a manifold $\Gamma$ consisting only of local minimizers and track the global iterate $\bvths$ around the manifold after it takes $\ctO(\eta^{-1})$ steps to approach the manifold.
Although the expected gradient $\nabla \cL$ is near zero around the manifold, the dynamics are still non-trivial because the noise can drive the iterate to move a significant distance in $\cO(\eta^{-2})$ steps.
\begin{assumption}\label{a:smooth} The loss function $\cL(\cdot)$ and the matrix
square root of the noise covariance   $\mSig^{\sfrac{1}{2}}(\cdot)$  are $\cC^{\infty}$-smooth.
Besides, we assume that $\normtwo{\nabla \ell(\vtheta; \xi)}$ is bounded by a constant for all $\vtheta$ and $\xi$. 
\end{assumption}
\begin{assumption}\label{a:Gamma} $\Gamma$ is a $\cC^{\infty}$-smooth,
$(d-m)$-dimensional submanifold of $\R^d$, where any $\vzeta\in \Gamma$ is a local
minimizer of $\cL$. For all $\vzeta\in \Gamma$,
$\rank(\nabla^2 \cL(\vzeta))=m$. Additionally,
there exists an open neighborhood of $\Gamma$, denoted as $U$, such that $\Gamma
=  \arg\min_{\vtheta \in U}\cL(\vtheta)$.
%(then $\Phi$ is $\cC^2$ on $U$ by Falconer 1983).
\end{assumption}
\begin{assumption}\label{a:compact}
$\Gamma$ is a compact manifold.
\end{assumption}
The smoothness assumption on $\cL$ is generally satisfied when we use smooth
activation functions, such as Swish \citep{ramachandran2017searching},  softplus and
GeLU \citep{hendrycks2016gaussian}, which work equally well as ReLU in many
circumstances. The existence of a minimizer manifold with $\rank(\nabla^2
\cL(\vzeta))=m$ has also been made as a key assumption in
\citet{fehrman2020convergence,li2021happens,lyu2022understanding},
where $\rank(\nabla^2 \cL(\vzeta))=m$ ensures that the Hessian is maximally
non-degenerate on the manifold and implies that the tangent space at $\vzeta \in
\Gamma$ equals the null space of $\nabla^2\cL(\vzeta)$. The last assumption
is made to prevent the analysis from being too technically involved.

Our SDE for Local SGD characterizes the training dynamics near $\Gamma$. %To
%write it in a neat form, 
For ease of presentation, we define the following projection operators $\Phi, P_{\vzeta}$
for points and differential forms respectively.
%Below we define gradient flow projection function $\Phi(\cdot)$ onto $\Gamma$ and our SDE for Local SGD.
\begin{definition}[Gradient Flow Projection]
    Fix a point $\vthetanull \notin \Gamma$. 
For $\vx \in \R^d$, consider the gradient flow $\frac{\dd \vx(t)}{\dd t} = -\nabla
\cL(\vx(t))$ with $\vx(0)=\vx$. 
We denote the gradient flow projection of $\vx$ as $\Phi(\vx)$.
$\Phi(\vx) := \lim_{t\to +\infty} \vx(t)$ if the limit exists and
belongs to $\Gamma$; otherwise, $\Phi(\vx) = \vthetanull$.
\end{definition}
\begin{definition}
    For any $\vzeta \in \Gamma$ and any differential form $\mA \dd \vW_t + \vb
    \dd t$ in Itô calculus, where $\mA$ is a matrix and $\vb$ is a vector, we
    use $P_{\vzeta}(\mA \dd \vW_t + \vb \dd t)$ as a shorthand for the
    differential form $\partial \Phi(\vzeta) \mA \dd \vW_t + \left(\partial
    \Phi(\vzeta) \vb + \frac{1}{2}\partial^2 \Phi(\vzeta)[\mA \mA^\top]\right)
    \dd t$.
\end{definition}
See \citet{oksendal2013stochastic} for an introduction to Itô calculus.
Here $P_{\vzeta}$ equals
$\Phi(\vzeta + \mA \dd \vW_t + \vb \dd t) - \Phi(\vzeta)$ by Itô calculus, which
means that $P_{\vzeta}$ projects an infinitesimal step from $\vzeta$, so that $\vzeta$
after taking the projected step does not leave the manifold $\Gamma$.
It can be shown by simple calculus that $\partial \Phi(\vzeta)$ equals the
projection matrix onto the tangent space of $\Gamma$ at $\vzeta$.
We decompose the noise covariance $\mSig(\vzeta)$ for $\vzeta
\in \Gamma$ into two parts: the noise in the tangent space
$\mSig_{\parallel}(\vzeta) := \partial \Phi(\vzeta) \mSig(\vzeta) \partial
\Phi(\vzeta)$ and the noise in the rest $\mSig_{\Diamond}(\vzeta) :=
\mSig(\vzeta) - \mSig_{\parallel}(\vzeta)$. Now we are ready to state our SDE for Local SGD.
\begin{definition}[Slow SDE for Local SGD]
Given $\eta, H > 0$ and $\vzeta_0 \in \Gamma$, define $\vzeta(t)$ as the
solution of the following SDE with initial condition $\vzeta(0) = \vzeta_0$:
\begin{align}\label{eq:zeta}
    \dd \vzeta(t)&=P_{\vzeta}\Big(\underbrace{\tfrac{1}{\sqrt{B}} \mSig_{\parallel}^{\sfrac{1}{2}}(\vzeta)\dd \vW_t}_{\text{(a)\ diffusion}}
    \underbrace{-\tfrac{1}{2B} \nabla^3 \cL(\vzeta)[\widehat{\mSig}_{\Diamond}(\vzeta)] \dd t}_{\text{(b)\ drift-I}}
    \underbrace{-\tfrac{K-1}{2B} \nabla^3 \cL(\vzeta)[\widehat{\mPsi}(\vzeta)] \dd t}_{\text{(c)\ drift-II}}
    \Big).
\end{align}
Here $\widehat{\mSig}_{\Diamond}(\vzeta)$, $\widehat{\mPsi}(\vzeta) \in \R^{d \times d}$ are defined as
\begin{align}
      \widehat{\mSig}_{\Diamond}(\vzeta) &:= {\textstyle \sum}_{i, j: (\lambda_i \ne 0) \lor (\lambda_j \ne 0)} \,\tfrac{1}{\lambda_i + \lambda_j} \left\langle\mSig_{\Diamond}(\vzeta), \vv_i \vv_j^\top \right\rangle \vv_i \vv_j^\top, \label{eq:hatmsig} \\
      \widehat{\mPsi}(\vzeta) &:= {\textstyle\sum}_{i, j: (\lambda_i \ne 0) \lor (\lambda_j \ne 0)} \,\tfrac{\psi(\eta H\cdot (\lambda_i + \lambda_j))}{\lambda_i + \lambda_j} \left\langle\mSig_{\Diamond}(\vzeta), \vv_i \vv_j^\top \right\rangle \vv_i \vv_j^\top, \label{eq:hatpsi}
\end{align}
where 
$\{\vv_i\}_{i=1}^{d}$ is a set of eigenvectors of $\nabla^2 \cL(\vzeta)$
that forms an orthonormal eigenbasis, and
$\lambda_1,\dots,\lambda_d$ are the corresponding eigenvalues.
Additionally,
$\psi(x):=\frac{e^{-x}-1+x}{x}$ for $x \ne 0$
and  $\psi(0) = 0$.
\end{definition}
The use of $P_{\vzeta}$ keeps $\vzeta(t)$ on the manifold $\Gamma$ through
projection. $\mSig_{\parallel}^{\frac{1}{2}}(\vzeta)$ introduces a diffusion
term to the SDE in the tangent space. The two drift terms involve
$\widehat{\mSig}_{\Diamond}(\cdot)$ and  $\widehat{\mPsi}(\cdot)$, which can be
intuitively understood as rescaling the entries of the noise covariance in the
eigenbasis of Hessian. In the special case where
$\nabla^2\cL=\diag(\lambda_1, \cdots, \lambda_d) \in \R^{d \times d}$,
we have $\widehat{\Sigma}_{\Diamond,i, j} = \frac{1}{\lambda_i +
\lambda_j} \Sigma_{0, i, j}$. $\widehat{\Psi}_{i, j} = \frac{\psi(\eta
H (\lambda_i + \lambda_j))}{\lambda_i + \lambda_j} \Sigma_{0, i, j}$. $\psi(x)$
is a monotonically increasing function, which goes 
from $0$ to 1 as $x$ goes from $0$ to infinity (see \Cref{fig:psi})

We name this SDE as the {\em Slow SDE for Local SGD} because we will show that
each discrete step of Local SGD corresponds to a continuous time interval of
$\eta^2$ instead of an interval of $\eta$ in the conventional SDE. In this
sense, our SDE is ``slower'' than the conventional SDE (and hence can
track a longer horizon). This Slow SDE is inspired by \citet{li2021happens}.
Under nearly the same set of assumptions, they proved that SGD
can be tracked by an SDE that is essentially equivalent to~\eqref{eq:zeta} with
$K=1$, namely, without the drift-II term.
\begin{align}\label{eq:sgd-zeta}
    \dd \vzeta(t)&=P_{\vzeta}\Big(\underbrace{\tfrac{1}{\sqrt{B}} \mSig_{\parallel}^{\sfrac{1}{2}}(\vzeta)\dd \vW_t}_{\text{(a)\ diffusion}}
    \underbrace{-\tfrac{1}{2B} \nabla^3 \cL(\vzeta)[\widehat{\mSig}_{\Diamond}(\vzeta)] \dd t}_{\text{(b)\ drift-I}}
    \Big),
\end{align}
We refer to~\eqref{eq:sgd-zeta} as the {\em Slow SDE for SGD}. We remark that
the drfit-II term in~\eqref{eq:zeta} is novel and is the key  to separate the
generalization behaviors of Local SGD and SGD in theory. We will discuss this
point later in \Cref{subsec:inter}.
Now we present our SDE approximation theorem for Local SGD.
\begin{theorem}\label{main thm: flow} Let Assumptions~\ref{a:smooth} to
 \ref{a:compact} hold. Let $T>0$ be a constant and $\vzeta(t)$ be the solution
 to \eqref{eq:zeta} with the initial condition $\vzeta(0)=\Phi(\bvths[0])\in
 \Gamma$. If $H$ is set to $\tfrac{\alpha}{\eta}$ for some constant $\alpha >
 0$, then for any $\cC^3$-smooth function $g(\vtheta)$,
 $\max_{0\leq s \leq \frac{T}{H\eta^2}}\left \lvert\E[g(\Phi(\bvths)] - \E[g(\vzeta(sH\eta^2)]\right\rvert=\ctO(\eta^{0.25})$, where $\ctO(\cdot)$ hides log factors and constants
 that are independent of $\eta$ but can depend on $g(\vtheta)$.
\end{theorem}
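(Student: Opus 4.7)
The plan is to establish \Cref{main thm: flow} as a weak-approximation result: match the one-round moments of $\Phi(\bvths)$ to the increments of the Slow SDE \eqref{eq:zeta} over a continuous time step of size $\eta^2 H = \eta\alpha$, and iterate the match over $T/(\eta^2 H)$ rounds. The preliminary step is a high-probability confinement statement: with probability $1-\poly(\eta)$, every global iterate $\bvths$ stays in an $\cO(\sqrt{\eta\log(1/\eta)})$-neighborhood of $\Gamma$ for all $s\le T/(\eta^2 H)$. This follows from a Lyapunov argument using the squared distance to $\Gamma$: \Cref{a:Gamma}'s rank condition supplies a positive-definite Hessian in the normal directions, while \Cref{thm:small thm} (already giving $\cO(\sqrt{\eta\log(1/\eta)})$ SDE-tracking within $H=\alpha/\eta$ local steps) controls intra-round fluctuations. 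Compactness of $\Gamma$ upgrades pointwise estimates to uniform bounds on a tubular neighborhood where $\Phi$ and its derivatives are smooth.

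The heart of the argument is the per-round moment analysis. Fix $\vzeta = \Phi(\bvths)\in\Gamma$ and work in the eigenbasis $\{\vv_i\}$ of $\nabla^2\cL(\vzeta)$. Linearization shows that each worker's perpendicular coordinate $\xi^\perp_{k,t}$ obeys a discrete OU recursion with noise covariance $\tfrac{K}{B}\mSig_{\Diamond}(\vzeta)$; summing the geometric series yields
\begin{equation*}
\Cov[\xi^\perp_{k,t}]_{ij} \;\approx\; \tfrac{\eta K\,\Sigma_{\Diamond,ij}}{B(\lambda_i+\lambda_j)}\bigl(1 - e^{-\eta t(\lambda_i + \lambda_j)}\bigr).
\end{equation*}
The parallel drift of $\Phi(\bvths[s+1]) - \vzeta$ then has two sources. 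First, the $\tfrac{1}{2}\nabla^3\cL[(\xi^\perp)^2]$ correction to $\nabla\cL$ at $\vzeta+\xi^\perp$ kicks the parallel component at each local step; integrating over the $H=\alpha/\eta$ steps and averaging the $K$ workers -- using the identity $\sum_{t=0}^{H-1}\bigl(1-e^{-\eta t(\lambda_i+\lambda_j)}\bigr) \approx H\,\psi\bigl(\alpha(\lambda_i+\lambda_j)\bigr)$ with $\psi$ as in \eqref{eq:hatpsi} -- gives a direct contribution $-\tfrac{\eta\alpha K}{2B}\partial\Phi(\vzeta)\nabla^3\cL[\widehat{\mPsi}]$. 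Second, the nonlinearity of $\Phi$ produces a projection-induced drift $\tfrac{1}{2}\partial^2\Phi(\vzeta)[\Cov[\bar\xi^\perp_H]]$; the standard eigenbasis formula for $\partial^2\Phi$, combined with $1-e^{-x}=x(1-\psi(x))$, rewrites this as $-\tfrac{\eta\alpha}{2B}\partial\Phi(\vzeta)\nabla^3\cL[\widehat{\mSig}_{\Diamond}-\widehat{\mPsi}]$. The two pieces sum to $-\tfrac{\eta\alpha}{2B}\partial\Phi(\vzeta)\nabla^3\cL[\widehat{\mSig}_{\Diamond}+(K-1)\widehat{\mPsi}]$, exactly the drift of \eqref{eq:zeta} integrated over $\dd t = \eta\alpha$; the parallel random walk from averaged per-worker gradient noise reproduces the diffusion (a).

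With first moments matched to $o(\eta\alpha)$ per round and the increment covariance matching the SDE over time $\eta\alpha$, I would close the argument by a Lindeberg-style swap: for each $\cC^3$ test function $g$, expand $g\circ\Phi(\bvths[s+1]) - g\circ\Phi(\bvths)$ to second order, substitute the matched moments, and compare with the generator of \eqref{eq:zeta} applied to $g\circ\Phi$ at $\vzeta=\Phi(\bvths)$. Summing the per-round errors over the $T/(\eta^2 H)$ rounds and absorbing the confinement failure into the error via boundedness of $g\circ\Phi$ on the tubular neighborhood yields the claimed $\ctO(\eta^{0.25})$ rate after optimizing the high-probability threshold against the Taylor remainders. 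The main obstacle will be the one-round moment analysis: the $(K-1)$ prefactor on $\widehat{\mPsi}$ arises from a delicate cancellation between a per-worker curvature drift (which scales with $K$) and a projection-induced drift (which does not), and the $\nabla^3\cL$-corrections to the linearized OU dynamics, living at scale $\sqrt{\eta}$ relative to the leading order, must be tracked so that the accumulated errors over $T/(\eta^2 H)$ rounds remain $\ctO(\eta^{0.25})$.
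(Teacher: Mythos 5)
The moment calculation you sketch is essentially the right heuristic, and your identity-chasing for the drift (curvature drift vs.\ projection-induced drift, $1-e^{-x}=x(1-\psi(x))$, the $(K-1)$ prefactor) reproduces the form of the Slow SDE~\eqref{eq:zeta}. But the route you propose for turning this into the stated $\ctO(\eta^{0.25})$ bound --- a per-round Lindeberg swap summed over $T/(H\eta^2)$ rounds, with some unspecified threshold optimization --- will not go through, and this is where the paper's proof has a genuinely different and essential ingredient.

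The issue is that the per-round moments of $\Phi(\bvths[s+1])-\Phi(\bvths)$, \emph{conditioned on $\bvths$}, are not a clean function of $\Phi(\bvths)$: they depend on the perpendicular displacement $\bvths-\Phi(\bvths)$, which carries over from the previous round with the nontrivial covariance $\tfrac{\eta}{B(\lambda_i+\lambda_j)}\Sigma_{ij}$, not zero. Your OU formula $\mathrm{Cov}[\xi^\perp_{k,t}]\approx\tfrac{\eta K}{B(\lambda_i+\lambda_j)}\Sigma_{ij}(1-e^{-\eta t(\lambda_i+\lambda_j)})$ is the ``cold start'' version; the averaged-equilibrium start would give $\tfrac{\eta}{B(\lambda_i+\lambda_j)}\Sigma_{ij}[K-(K-1)(1-(\lambda_i+\lambda_j)\eta)^t]$. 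Conditionally on $\bvths$ the relevant per-round drift is thus a random variable depending on $\bvths-\Phi(\bvths)$, and it only becomes the Slow SDE generator after you \emph{average over the stationary distribution of that displacement}. A single round is not long enough for this averaging to happen. The paper's resolution is to group $\Rg=\lfloor 1/(\alpha\eta^\beta)\rfloor$ rounds into one ``giant step'' with effective learning rate $\eta^{1-\beta}$ (see \Cref{sec: proof outline}). Each giant step starts by conditioning on $\bvths$, lets the perpendicular state and the cross-correlation $\hmBs_t=\E[\hvxs_{k,t}\hDvphsT]$ equilibrate over $R_0=\cO(\tfrac{1}{\alpha}\log\tfrac{1}{\eta})$ rounds (\Cref{lemma:perpAperp,lemma:perpBpara}), and only then harvests the drift; the warm-up rounds contribute a one-time $\ctO(\eta)$ error and the post-equilibrium rounds contribute $\ctO(\eta^{1.5-2\beta})$ (\Cref{thm: moments}). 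Propagating these through the \citet{li2019stochastic} moment-matching machinery gives a total error $\ctO(\eta^{0.5-\beta}+\eta^\beta)$, and optimizing $\beta$ yields $\beta=0.25$ and the $\ctO(\eta^{0.25})$ rate. Your proposal never introduces this intermediate scale, so there is nothing to optimize and the warm-up error is not controlled --- the $\eta^\beta$ term would be $\cO(1)$. In addition, your cross-round correlation $\hmBs_t$ (which is zero per round but is essential to recover the correct drift-I contribution in $T_1,T_2$ of \Cref{lemma:one step momoents expelicit 1}) is not tracked, and there is no analogue of the ``well-behaved'' auxiliary sequence $\{\hvths_{k,t}\}$ (\Cref{def:htheta}) needed to make the conditional moment estimates hold unconditionally despite the low-probability escape events. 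The confinement step you propose is the right first move (corresponding to \Cref{thm:closeness}), but the heart of the matter is the intermediate-scale grouping, and that is missing.
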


\begin{theorem}
\label{coro:alpha}
    For $\delta=\cO(\poly(\eta))$, with probability at least $1-\delta$,  
    it holds 
for all $\cO(\frac{1}{\alpha}\log\frac{1}{\eta})\leq s\leq\frac{T}{\alpha\eta} $
    that
    $\Phi(\bvths)\in \Gamma$ and $\normtwo{\bvths-\Phi(\bvths)}=\cO(\sqrt{\alpha\eta \log \frac{\alpha}{\eta\delta}})$,
    where $\cO(\cdot)$ hides constants independent of $\eta$, $\alpha$ and $\delta$.
\end{theorem}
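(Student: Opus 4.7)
The plan is to split the analysis into an initial descent phase in which $\bvths$ is driven toward $\Gamma$ and a subsequent steady-state phase in which it oscillates in a tubular neighborhood of $\Gamma$, mirroring the two-phase argument that \citet{li2021happens} used for SGD. The new ingredient for Local SGD is that between two communication rounds each worker evolves independently with the full step noise, so one must control per-worker excursions and show that averaging across $K$ workers still produces the stated normal-direction bound.

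\textbf{Descent phase.} I would take as Lyapunov function $V(\vtheta):=\cL(\vtheta)-\cL(\Phi(\vtheta))$, which is equivalent to the squared distance to $\Gamma$ up to constants via the local Polyak--Łojasiewicz inequality implied by $\rank(\nabla^2\cL|_{\Gamma})=m$ and the compactness of $\Gamma$ (\Cref{a:Gamma} and \Cref{a:compact}). A one-step Taylor expansion gives $\E[V(\vths_{k,t+1})\mid\vths_{k,t}]\le(1-c\eta)V(\vths_{k,t})+C\eta^2$ for uniform constants $c,C>0$ on a compact subneighborhood of $\Gamma$. Iterating across the $H=\alpha/\eta$ local steps in a round, passing the inter-worker averaging through the local convexity of $V$ near $\Gamma$, and iterating across rounds yields $\E[V(\bvths)]\le(1-c'\alpha)^s V(\bvths[0])+\cO(\eta)$. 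Hence after $s=\cO(\tfrac{1}{\alpha}\log\tfrac{1}{\eta})$ rounds the expectation has already hit the noise floor, and a Freedman-type martingale concentration inequality upgrades this to a high-probability statement with an additional $\sqrt{\log(1/\delta)}$ factor. Once $V(\bvths)$ is sufficiently small, $\bvths$ lies in the basin where gradient flow provably converges back to $\Gamma$, so $\Phi(\bvths)\in\Gamma$ as claimed.

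\textbf{Steady-state bound.} Inside the tubular neighborhood I would write $\bvths=\Phi(\bvths)+\vr^{(s)}$ with $\vr^{(s)}$ in the normal space at $\Phi(\bvths)$, and linearize one round of Local SGD to first order in $\vr^{(s)}$ and $\eta$:
\begin{align*}
\vr^{(s+1)}\approx(\mI-\eta\nabla^2\cL)^H\vr^{(s)}-\tfrac{\eta}{K}\textstyle\sum_{k=1}^{K}\sum_{t=0}^{H-1}(\mI-\eta\nabla^2\cL)^{H-1-t}\bigl(\vgs_{k,t}-\nabla\cL(\vths_{k,t})\bigr)_{\perp}+\text{(h.o.t.)},
\end{align*}
with the Hessian evaluated at $\Phi(\bvths)$. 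On each normal eigendirection the prefactor contracts like $e^{-\alpha\lambda_i}$, while the noise term is a sum of $HK$ mean-zero increments of size $\cO(\eta)$ that is independent across workers. Azuma/Freedman concentration yields per-round noise magnitude $\cO(\sqrt{\alpha\eta\log(1/\delta)})$, and balancing this against the geometric contraction in the normal direction gives $\normtwo{\vr^{(s)}}=\cO(\sqrt{\alpha\eta\log(1/\delta)})$; a union bound over the $s\le T/(\alpha\eta)$ rounds absorbs the sweep into the stated $\log(\alpha/(\eta\delta))$ factor.

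\textbf{Main obstacle.} The delicate step is making the linearization above rigorous across an entire round: per-worker iterates drift by $\Theta(\sqrt{\alpha\eta\log(1/\delta)})$ away from $\bvths$ before synchronization, which is the same order as the bound we are trying to prove. I would therefore run an induction on $s$ that simultaneously maintains (i) a uniform high-probability control $\max_{k,t}\normtwo{\vths_{k,t}-\Phi(\bvths)}=\cO(\sqrt{\alpha\eta\log(1/\delta)})$ keeping every local iterate inside the neighborhood where the Hessian bound, smoothness of $\Phi$, and local PL inequality hold, and (ii) the contraction-plus-noise estimate for $\vr^{(s+1)}$. The compactness of $\Gamma$ in \Cref{a:compact} is essential for taking all smoothness, Hessian-bound, and neighborhood-radius constants uniformly over $\Gamma$, closing the induction and yielding the stated bound uniformly over the entire range of $s$.
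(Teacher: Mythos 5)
Your two-phase plan (descent to $\Gamma$, then a steady-state tubular control) is the same architecture the paper uses, and martingale concentration is the shared engine. But there is a concrete gap: your argument does not secure the ``constants independent of $\alpha$'' part of the theorem, which is the actual technical content here and which the paper's proof is designed around.

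Two specific issues. First, your Lyapunov function $V=\cL-\cL\circ\Phi$ is not uniformly Lipschitz near $\Gamma$ (its Lipschitz constant scales like $\sqrt V$), so the step ``Freedman upgrades the expectation bound to high probability with an extra $\sqrt{\log(1/\delta)}$'' does not go through directly: the martingale increments of $V$ have size of the same order as $V$ itself, which is exactly what you are trying to bound. The paper sidesteps this by working with $\tPsi=\sqrt{\cL-\cL^*}$, which is shown to be globally $\sqrt{2\rho}$-Lipschitz (Lemma \ref{lemma: tpsi lipschitz}); this turns the per-round evolution into contraction-plus-additive-noise with increments of fixed size, so Azuma applies cleanly.

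Second, and more seriously for the $\alpha$-uniformity: when you unroll $H=\alpha/\eta$ local steps, the noise accumulator is $\sum_{\tau}\prod_{l>\tau}(\mI-\eta\nabla^2\cL(\cdot))\,\vz_{\tau}$, and the Hessian is evaluated along the actual off-manifold trajectory, not at $\Phi(\bvths)$. Off $\Gamma$ the Hessian can have negative eigenvalues, so the naive bound on each such product is $(1+\rho_2\eta)^H=e^{\alpha\rho_2}$, which is $\alpha$-dependent and would contaminate every constant in your concentration bound. Your linearization with $(\mI-\eta\nabla^2\cL(\Phi(\bvths)))^H$ is PSD-contractive precisely because you have moved the Hessian onto the manifold, but making that substitution rigorous requires exactly the control you have not supplied. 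The paper's Lemma \ref{lemma:prod} is the missing ingredient: because $\mu$-PL forces $\cL(\tvus_t)-\cL^*$ to decay geometrically along the intra-round GD trajectory, the magnitude of the negative part of the Hessian also decays geometrically, and the full product is bounded by a constant $C_3'$ that is genuinely independent of $\alpha$. Without an argument of this type, your constants are $e^{\Theta(\alpha)}$, not $O(1)$. A related, smaller point: iterating the contraction inequality gives a $\bigl(1-e^{-\alpha\mu/2}\bigr)^{-1}$ prefactor, which diverges as $\alpha\to 0$; the paper handles this by splitting into $\alpha\ge c_0$ and $\alpha<c_0$ and grouping $\lceil c_0/\alpha\rceil$ rounds in the latter case, a case split your proposal omits.
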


\Cref{main thm: flow} suggests that the trajectories of the manifold projection
and the solution to the Slow SDE~\eqref{eq:zeta} are close to each other in the
weak approximation sense. 
That is, $\{\Phi(\bvths)\}$ and $\{\vzeta(t)\}$ cannot be distinguished by evaluating test functions from a wide function class, including all polynomials. This measurement of closeness between
the iterates of stochastic gradient algorithms and their SDE approximations is
also adopted by~\citet{li2019stochastic,li2021validity,malladi2022sdes}, but
their analyses are for conventional SDEs. \Cref{coro:alpha} further states that
the iterate $\bvths$ keeps close to its manifold projection after the first few
rounds.

\begin{remark}
To connect to \Cref{fin:main}, we remark that 
our theorems (1) do not require
the model to be pre-trained (as long as the gradient flow starting with
$\vths[0]$ converges to $\Gamma$); (2) give better bounds for smaller $\eta$;
(3) characterize a long training horizon $\sim \eta^{-2}$. The need for tuning $H$ will
be discussed in \Cref{sec:local-sgd-tuning}.
\end{remark}

\myparagraph{Technical Contribution.} The proof technique for \Cref{main thm: flow} is novel and
significantly different from the Slow SDE analysis of SGD in~\citet{li2021validity}. Their analysis uses advanced stochastic calculus and invokes Katzenberger's theorem~\citep{kat} to show that SGD converges to the Slow SDE in distribution, but no quantitative error bounds are provided. Also, due to the local updates and multiple aggregation steps in Local SGD, it is unclear how to extend Katzenberger's theorem to our case. To overcome this difficulty, we develop a new approach to analyze the Slow SDEs, which is not only based on relatively simpler mathematics but can also provide the quantitative error bound $\ctO(\eta^{0.25})$ in weak approximation. Specifically, we adopt the general framework proposed by~\citet{li2019stochastic}, which uses the method of moments to bound the closeness between the trajectories of discrete methods and SDE solutions, namely $\Phi(\bvths)$ and $\vzeta(t)$ in our case. Their framework can provide approximation guarantees for $\cO(\eta^{-1})$ steps of a discrete algorithm with learning rate $\eta$, but it is not directly applicable to our case because we want to capture $\cO(\eta^{-2})$ steps of Local SGD. Instead, we treat $\cO(\eta^{-\beta})$ rounds as a ``giant step'' of Local SGD with an ``effective'' learning rate $\eta^{1-\beta}$, where $\beta$ is a constant in $(0, 1)$, and we develop a detailed dynamical analysis to derive the recursive formulas of the moments for the change in every step, every round, and every $\cO(\eta^{-\beta})$ rounds. We then apply the framework of \citet{li2019stochastic} to translate Local SGD to the Slow SDE and optimize the choice of $\beta$ to minimize the approximation error bound, settling on $\beta = 0.25$. See \Cref{sec: proof outline} for our proof outline.
A by-product of our result is the first quantitative approximation bound for the Slow SDE approximation for SGD, which can be easily obtained by setting $K=1$.

\subsection{Interpretation of the Slow SDEs} \label{subsec:inter}

In this subsection, we compare the Slow SDEs for SGD and Local SGD
and provide an important insight into why Local SGD generalizes better
than SGD: Local SGD strengthens the drift term in the Slow SDE
which makes the implicit regularization of stochastic gradient noise more effective.

\subsubsection{Interpretation of the Slow SDE for SGD.}

The Slow SDE for SGD~\eqref{eq:sgd-zeta} consists of the diffusion and drift-I
terms. The former injects noise into the dynamics in the tangent space; the latter one
drives the dynamics to move along the negative gradient of
$\frac{1}{2B}\inne{\nabla^2 \cL(\vzeta)}{\widehat{\mSig}_{\Diamond}(\vzeta)}$
projected onto the tangent space, but ignoring the dependency of
$\widehat{\mSig}_{\Diamond}(\vzeta)$ on $\vzeta$. This can be connected to the
class of semi-gradient methods which only computes a part of the gradient
\citep{mnih2015human, DBLP:books/lib/SuttonB98,
DBLP:conf/iclr/BrandfonbrenerB20}. In this view, the long-term behavior of SGD
is similar to a stochastic semi-gradient method minimizing 
the implicit regularizer
$\frac{1}{2B}\inne{\nabla^2 \cL(\vzeta)}{\widehat{\mSig}_{\Diamond}(\vzeta)}$
on the minimizer manifold of the original loss $\cL$.

Though 
%the optimization through
the semi-gradient method may not perfectly optimize its objective,
%Note that the regularization part
the above argument reveals that SGD has a deterministic trend
toward the region with a smaller magnitude of Hessian, which is commonly believed
to correlate with better generalization \citep{hochreiter1997flat, keskar2017on,
neyshabur2017exploring,jiang2020fantastic} (see \Cref{sec:related} for more discussions). In
contrast, the diffusion term can be regarded as  a random perturbation to this
trend, which can impede optimization when the drift-I term is not strong enough.

Based on this view, we conjecture that
{\bf strengthening the drift term} 
%or {\bf reducing the diffusion term}
of the Slow SDE  can help SGD to better regularize the model,
yielding a better generalization performance.
More specifically, we propose the following hypothesis,
which 
compares the generalization performances of 
the following generalized Slow SDEs.
Note that $(\frac{1}{B}, \frac{1}{2B})$-Slow SDE corresponds to the Slow SDE for SGD~\eqref{eq:sgd-zeta}.
\begin{definition}
    For $\kappa_1, \kappa_2 \ge 0$, define $(\kappa_1, \kappa_2)$-Slow SDE to be the following:
    \begin{align}\label{eq:zeta-kappa}
    \dd \vzeta(t)&=
    P_{\vzeta}\Big(
        \sqrt{\kappa_1} \mSig_{\parallel}^{\sfrac{1}{2}}(\vzeta)\dd \vW_t
        -\kappa_2 \nabla^3 \cL(\vzeta)[\widehat{\mSig}_{\Diamond}(\vzeta)] \dd t
    \Big).
    \end{align}
\end{definition}

\begin{hypothesis} \label{hyp:drift}
    Starting at a minimizer $\vzeta_0 \in \Gamma$,
    run $(\kappa_1, \kappa_2)$-Slow SDE and $(\kappa_1, \kappa_2')$-Slow SDE
    respectively for the same amount of time $T > 0$ and obtain $\vzeta(T), \vzeta'(T)$.
    If $\kappa_2 > \kappa_2'$, then the expected test accuracy at $\vzeta(T)$
    is better than that at $\vzeta'(T)$.
\end{hypothesis}
Due to the No Free Lunch Theorem, we do not claim that our hypothesis is always
true, but we do believe that the hypothesis holds when training usual neural
networks (e.g., ResNets, VGGNets) on standard benchmarks (e.g., CIFAR-10, ImageNet).

\myparagraph{Example: Training with Label Noise Regularization.}
To exemplify the generalization benefit of having a larger drift term, we follow
a line of theoretical works~\citep{li2021happens, blanc2020implicit,
damian2021label} to study the case of training over-parameterized neural nets
with label noise regularization. For a $C$-class classification task, the label
noise regularization is as follows: every time we draw a sample from the
training set, we make the true label as it is with probability $1-p$, and
replace it with any other label with equal probability $\frac{p}{C-1}$. When we
use cross-entropy loss, the Slow SDE for SGD turns out to be a simple
deterministic gradient flow on $\Gamma$ (instead of a semi-gradient
method) for minimizing the trace of Hessian: $\dd \vzeta(t) = -\frac{1}{4B}
\gradGa \tr(\nabla^2 \cL(\vzeta)) \dd t$,
    %\label{eq:flow-label-noise-std}
where $\gradGa f$ stands for the gradient of the function $f$ projected to the
tangent space of $\Gamma$. Checking the validity of our hypothesis reduces to
the following question: {\em Is minimizing the trace of Hessian beneficial to
generalization?} Many previous works provide positive answers, including the
line of works we just mentioned. \citet{blanc2020implicit} and \citet{li2021happens}
connect minimizing the trace of Hessian to finding sparse or low-rank
solutions for training two-layer linear nets. \citet{damian2021label}
empirically showed that good generalization correlates with a smaller trace
of Hessian in training ResNets with label noise. Besides, \citet{ma2021on}
connect the trace of Hessian to the smoothness of the function represented by a
deep neural net. We refer the readers to \Cref{sec:thm label noise} for further discussion on the Slow
SDEs in this case. 

\subsubsection{Local SGD Strengthens the Drift Term in Slow SDE.} \label{sec:local-sgd-sde-interpretation}
%Our hypothesis can help us understand why Local SGD generalizes better than
%SGD.

Based on \Cref{hyp:drift}, now we argue that 
{\bf Local SGD improves generalization by strengthening the drift term of the Slow SDE}.

First, it can be seen from~\eqref{eq:zeta} that the Slow SDE for Local SGD has an additional drfit-II term.
Similar to the drift-I term of the Slow SDE for SGD, this drift-II term drives the dynamics to move along the
negative semi-gradient of $\frac{K-1}{2B}\inne{\nabla^2
\cL(\vzeta)}{\widehat{\mPsi}(\vzeta)}$ (with the dependency of
$\widehat{\mPsi}(\vzeta)$ on $\vzeta$ ignored). Combining it with the implicit
regularizer induced by the drift-I term, we can see that the long-term behavior
of Local SGD is similar to a stochastic semi-gradient method minimizing the
implicit regularizer $\frac{1}{2B}\inne{\nabla^2
\cL(\vzeta)}{\widehat{\mSig}_{\Diamond}(\vzeta)} + \frac{K-1}{2B}\inne{\nabla^2
\cL(\vzeta)}{\widehat{\mPsi}(\vzeta)}$ on the minimizer manifold of $\cL$. 
%, where the additional regularizer $\frac{K-1}{2B}\inne{\nabla^2
%\cL(\vzeta)}{\widehat{\mPsi}(\vzeta)}$ of Local SGD strengthens the implicit
%regularization process.

Comparing the definitions of
$\widehat{\mSig}_{\diamond}(\vzeta)$~\eqref{eq:hatmsig} and
$\widehat{\mPsi}(\vzeta)$~\eqref{eq:hatpsi}, we can see that
$\widehat{\mPsi}(\vzeta)$ is basically a rescaling of the entries of
$\widehat{\mSig}_{\diamond}(\vzeta)$ in the eigenbasis of Hessian, where the
rescaling factor $\psi(\eta H \cdot (\lambda_i + \lambda_j))$ for each entry is
between $0$ and $1$ (see \Cref{fig:psi} for the plot of $\psi$). When $\eta H$
is small, the rescaling factors should be close to $\psi(0) = 0$, then
$\widehat{\mPsi}(\vzeta) \approx \vzero$, leading to almost no additional
regularization. On the other hand, when $\eta H$ is large, the rescaling factors
should be close to $\psi(+\infty) = 1$, so $\widehat{\mPsi}(\vzeta) \approx
\widehat{\mSig}_{\diamond}(\vzeta)$. We can then merge the two implicit
regularizers as $\frac{K}{2B}\inne{\nabla^2
\cL(\vzeta)}{\widehat{\mSig}_{\Diamond}(\vzeta)}$,
and \eqref{eq:zeta} becomes
the $(\frac{1}{B}, \frac{K}{2B})$-Slow SDE, which is restated below:
\begin{align}\label{eq:zeta-inf}
    \dd \vzeta(t)&=P_{\vzeta}\Big(\tfrac{1}{\sqrt{B}} \mSig_{\parallel}^{\sfrac{1}{2}}(\vzeta)\dd \vW_t
    -\tfrac{K}{2B} \nabla^3 \cL(\vzeta)[\widehat{\mSig}_{\Diamond}(\vzeta)] \dd t \Big).
\end{align}
From the above argument we know how the Slow SDE of Local SGD~\eqref{eq:zeta} changes 
as $\eta H$
transitions from $0$ to $+\infty$.
Initially, when $\eta H = 0$, 
\eqref{eq:zeta} is the same as the $(\frac{1}{B}, \frac{1}{2B})$-Slow SDE for SGD.
Then increasing $\eta H$ strengthens the drift term of~\eqref{eq:zeta}.
As $\eta H \to +\infty$, \eqref{eq:zeta} transitions to 
the $(\frac{1}{B}, \frac{K}{2B})$-Slow SDE, where the drift term becomes $K$ times larger.

According to \Cref{hyp:drift}, the $(\frac{1}{B}, \frac{K}{2B})$-Slow SDE
generalizes better than the $(\frac{1}{B}, \frac{1}{2B})$-Slow SDE,
so Local SGD with $\eta H = +\infty$ should generalize better than SGD.
When $\eta H$ is chosen realistically as a finite value,
the generalization performance of Local SGD interpolates
between these two cases,
which results in a worse generalization than $\eta H = +\infty$
but should still be better than SGD.

\subsubsection{Theoretical Insights into Tuning the Number of Local Steps}
\label{sec:local-sgd-tuning}

%Based on the Slow SDE approximation of Local SGD,
%we now discuss how the hyperparameters affect the generalization of Local SGD.
%We specifically consider two scenarios below.

%\myparagraph{Scenario I: Tuning the number of local steps to beat the SGD baseline.}
Based on our Slow SDE approximations,
we now discuss how the number of local steps $H$ affects the generalization of Local SGD.
When $\eta$ is small but finite,
tuning $H$ offers a trade-off between regularization strength
and SDE approximation quality.
Larger $\alpha := \eta H$ makes the regularization stronger in the SDE (as discussed in \Cref{sec:local-sgd-sde-interpretation}),
but the SDE itself may lose track of Local SGD,
which can be seen from the error bound $\cO(\sqrt{\alpha \eta \log \frac{\alpha}{\eta \delta}})$ in~\Cref{coro:alpha}. Therefore, we expect the test accuracy to first increase and then decrease as we gradually increase $H$. Indeed, we observe in \Cref{fig:effect-e,fig:effect-f} that the plot of test accuracy versus $H$ is unimodal for each $\eta$. 

It is thus necessary to tune $H$ for the best generalization.
When $H$ is tuned together with other hyperparameters, such as learning rate $\eta$,
our Slow SDE approximation recommends setting $H$ to be at least $\Omega(\eta^{-1})$
so that $\alpha := \eta H$ does not vanish in the Slow SDE.
Since larger $\alpha$ gives a stronger regularization effect,
the optimal $H$ should be set to the largest value so that the Slow SDE does not lose track of Local SGD.
Indeed, we empirically observed that when $H$ is tuned optimally,
$\alpha$ increases as $\eta$ decreases,
suggesting that the optimal $H$ grows faster than $\Omega(\eta^{-1})$. See \Cref{fig:add effect-f}.

\subsubsection{Understanding the Diffusion Term in the Slow SDE} \label{sec:effect-diffusion}

\begin{figure}[t]
    \vspace{-0.5in}
\begin{center}
 \subfigure[CIFAR-10, $H=600$ for $K>1$.]{
        \includegraphics[width=0.4\textwidth]{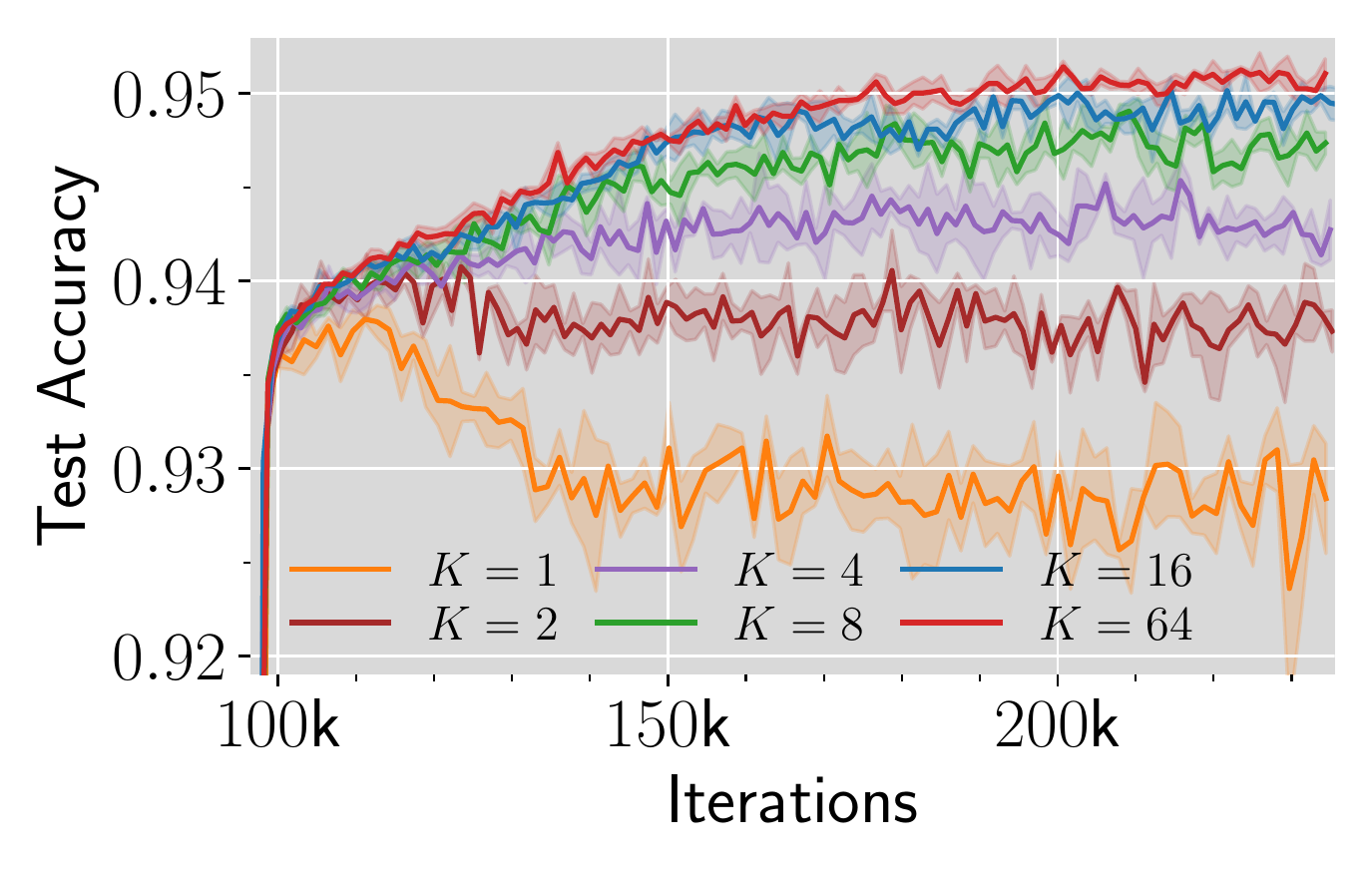}\label{fig:cifar diffusion}}
    \hspace{0.1in}
         \subfigure[ImageNet, $H=78$ for $K>1$.]{
        \includegraphics[width=0.4\textwidth]{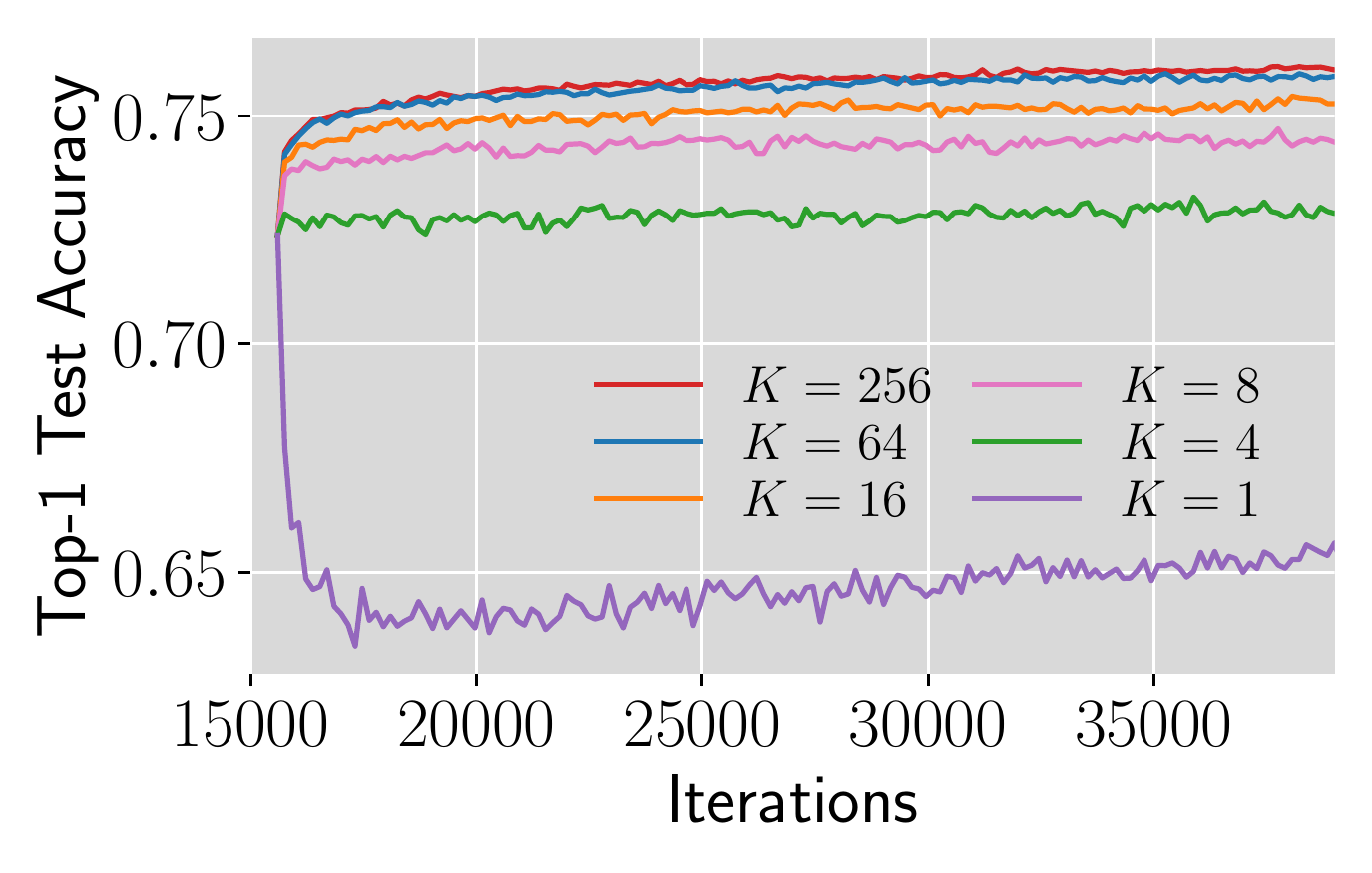}\label{fig:img diffusion}
    }
    \vspace{-0.1in}
        \caption{Reducing the diffusion term of the Slow SDE for Local SGD leads
        to better generalization. Test accuracy improves as we increase $K$ with
        fixed $\eta$ and $H$ to reduce the diffusion term while keeping the
        drift term untouched. See \Cref{sec: diffusion details} for
        details. }
  \label{fig:diffusion}
     \vspace{-0.1in}
\end{center}
\end{figure}

So far, we have discussed why adding local steps enlarges the drift term in the Slow SDE and why enlarging the drift term can benefit generalization.
Besides this, here we remark that another way
to accelerate the corresponding semi-gradient method for minimizing the implicit
regularizer is to reduce the diffusion term,
so that the trajectory more closely follows the drift term.
More formally, we propose the following:
\begin{hypothesis} \label{hyp:diffusion}
    Starting at a minimizer $\vzeta_0 \in \Gamma$,
    run $(\kappa_1, \kappa_2)$-Slow SDE and $(\kappa_1, \kappa_2')$-Slow SDE
    respectively for the same amount of time $T > 0$ and obtain $\vzeta(T), \vzeta'(T)$.
    If $\mSig_{\parallel} \not\equiv \vzero$ and $\kappa_1 < \kappa_1'$, then the expected test accuracy at $\vzeta(T)$
    is better than that at $\vzeta'(T)$.
\end{hypothesis}
Here we exclude the case of $\mSig_{\parallel} \equiv \vzero$ because in this
case the diffusion term in the Slow SDE is always zero. To verify
\Cref{hyp:diffusion}, we set the product $\alpha := \eta H$ large, keep $H,\eta$ fixed,
increase the number of workers $K$, and compare the generalization performances
after a fixed amount of training steps (but after different numbers of epochs). This case
corresponds to the $(\frac{1}{K\Bloc}, \frac{1}{2\Bloc})$-Slow SDE, so adding
more workers should reduce the diffusion term. As shown in \Cref{fig:diffusion},
a higher test accuracy is indeed achieved for larger $K$.

\myparagraph{Implication: Enlarging the learning rate is not equally effective as adding local steps.}
Given that Local SGD improves generalization by strengthening the drift
term, it is natural to wonder if enlarging the learning rate of SGD would also
lead to similar improvements.
While it is true that enlarging the learning rate
effectively increases the drift term, it also increases the diffusion term
simultaneously, which can hinder the implicit regularization by \Cref{hyp:diffusion}. In contrast,
adding local steps does not change the diffusion term. As shown in \Cref{fig:add
exp-a}, even when the learning rate of SGD is increased, SGD still underperforms
Local SGD by about $2\%$ in test accuracy.

On the other hand, in the special case of
where $\mSig_{\parallel} \equiv \vzero$,
\Cref{hyp:diffusion}
does not hold,
and enlarging the learning rate by
$\sqrt{K}$ results in the same Slow SDE as adding local steps
(see \Cref{sec:thm label noise} for derivation).
Then these two actions should produce the same generalization improvement,
unless the learning rate is so large that Slow SDE loses track of the training dynamics.
As an example of such a special case, an experiment with label noise regularization is
presented in \Cref{fig:K=4 label noise}.

\section{The Effect of Global Batch Size on Generalization}\label{sec:effect of batch size}

In this section, we discuss the effect of global batch size on the
generalization of Local SGD. Given that the computation power of a single worker
is limited, we consider the case where the local batch size $\Bloc$ is fixed and
the global batch size $B = K \Bloc$ is tuned by adding or removing the
workers. This scenario is relevant to the practice because one may want to know
the maximum parallelism possible to train the neural net without causing
generalization degradation.

%it is reasonable
%to fix the local batch size $\Bloc$ and only tune the global batch size $B = K \Bloc$.
%should change as $B \mapsto \kappa B$. 

For SGD, previous works have proposed the Linear Scaling
Rule~(LSR)~\citep{krizhevsky2014one,goyal2017accurate,jastrzkebski2017three}:
scaling the learning rate $\eta \mapsto \kappa \eta$ linearly with the global
batch size $B \mapsto \kappa B$ yields the same conventional
SDE~\eqref{eq:canonical SDE} under a constant epoch budget, hence leading to almost the same generalization performance as long as the SDE approximation does not fail.

%, including the late phase of training after decaying the learning rate 
% As noted in~\citet{smith2021on}, this LSR works especially well in the small learning rate regime, which includes the late phase of training after decaying the learning rate.

%\Cref{fig:add effect-f} validates this prediction on ImageNet.

%\myparagraph{Scenario II: Scaling the hyperparameters when adding or removing workers.}

%In practice, one may want to add more workers to increase the degree of
%parallelism in distributed training, or to remove some workers for the sake of
%reducing the computation budget. This raises the question of how to scale the
%hyperparameters when the number of workers $K$ is scaled by a factor of $\kappa > 0$.

\begin{figure}[t]
    \vspace{-0.5in}
\begin{center}
 \subfigure[CIFAR-10, start from \#$250$.]{
        \includegraphics[width=0.4\textwidth]{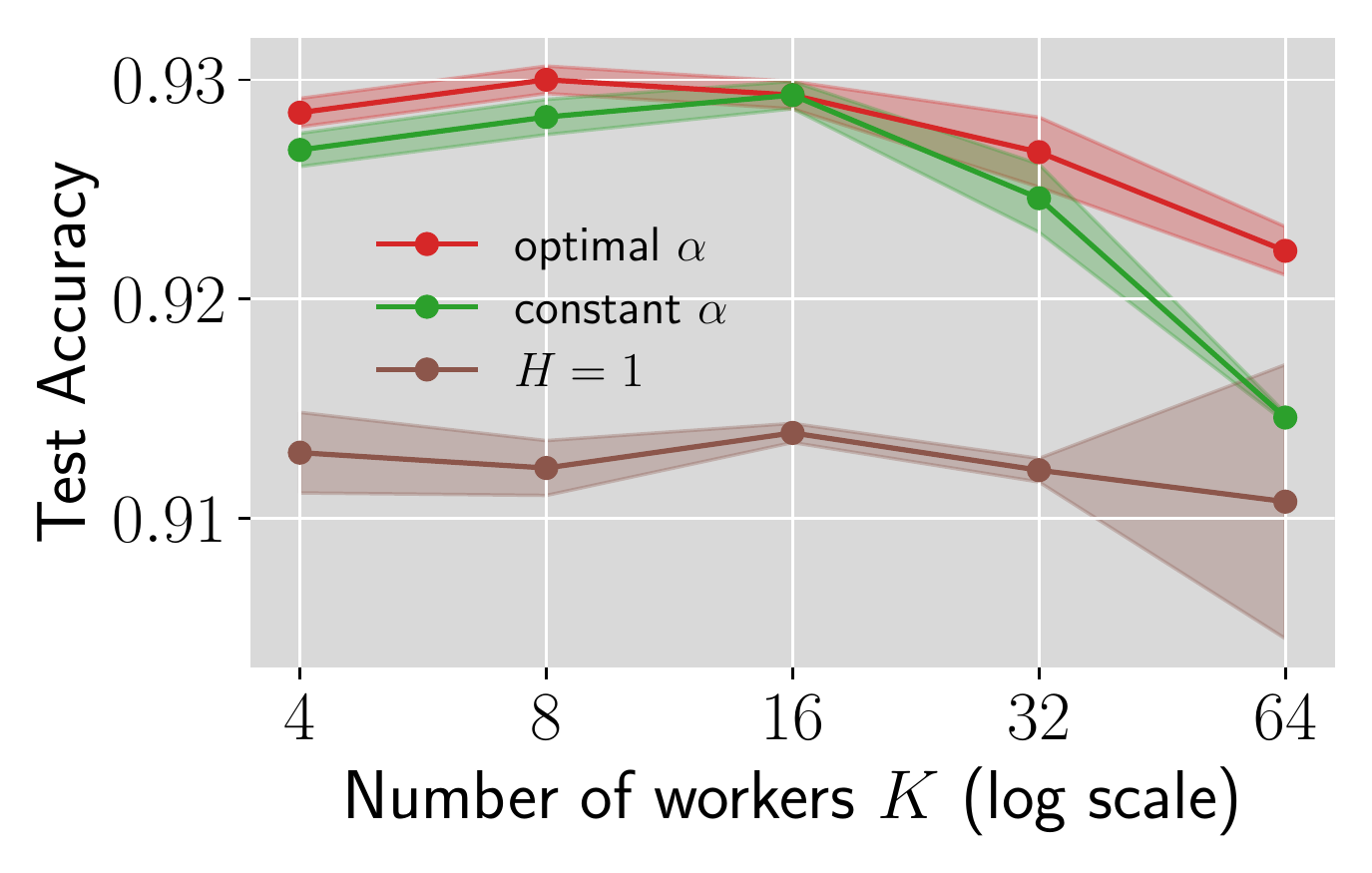}\label{fig:cifar lsr}   }
         \subfigure[ImageNet, start from \#$100$.]{
        \includegraphics[width=0.4\textwidth]{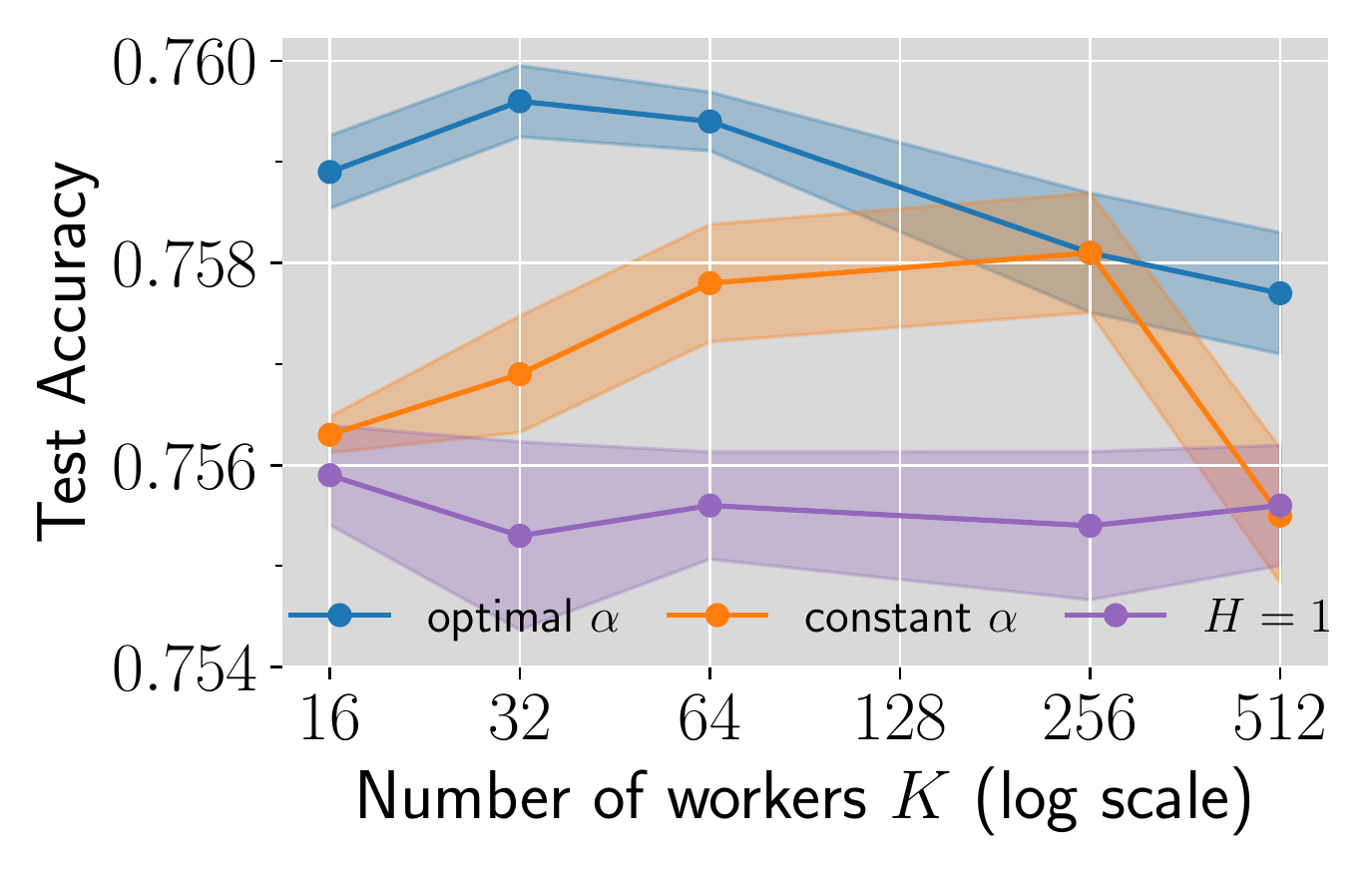}\label{fig:img lsr}
    }
    
    \vspace{-0.1in}
\subfigure[CIFAR-10, start from \#$250$.]{
        \includegraphics[width=0.4\textwidth]{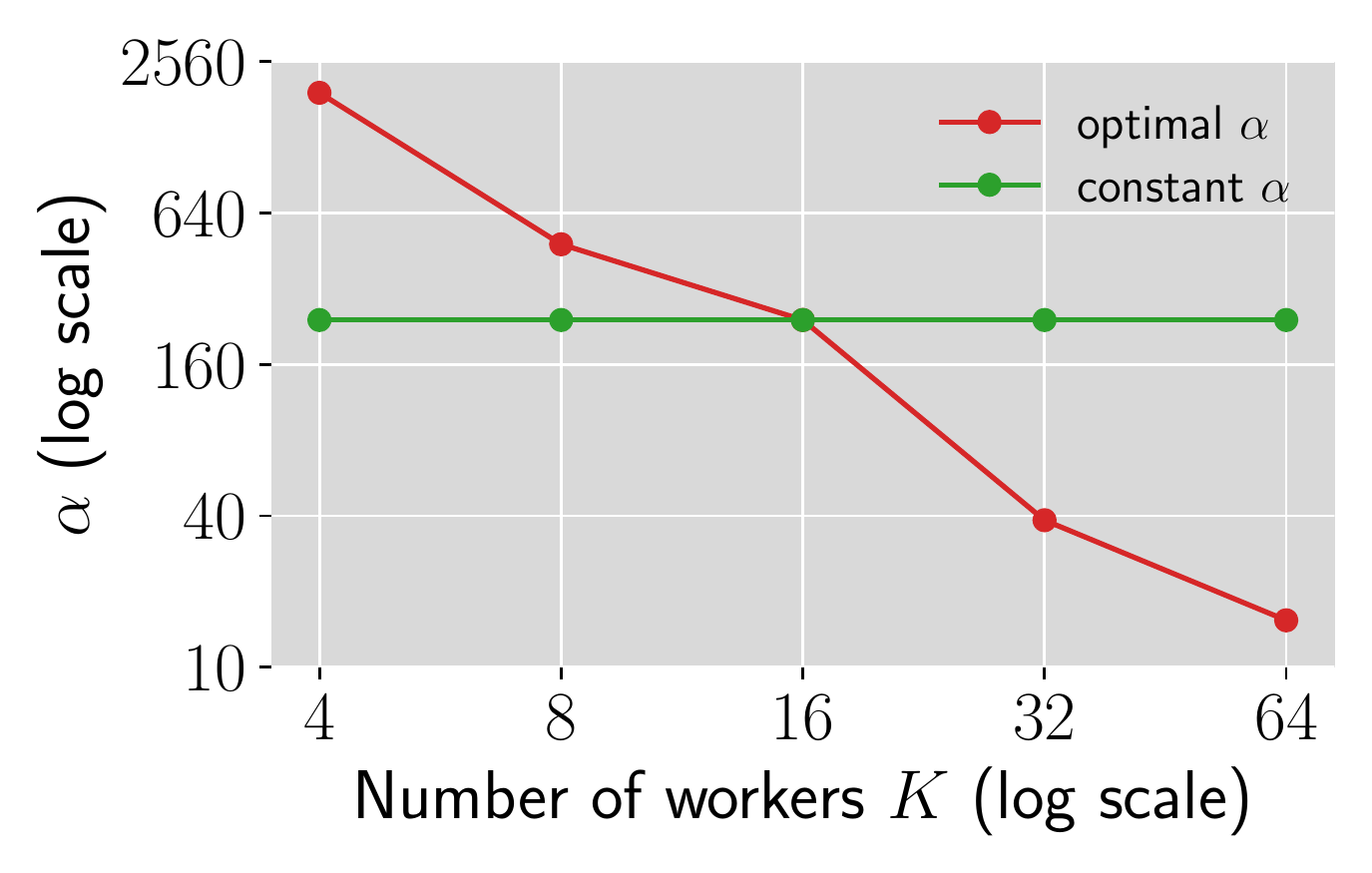}\label{fig:cifar lsr alpha}   }
         \subfigure[ImageNet, start from \#$100$.]{
        \includegraphics[width=0.4\textwidth]{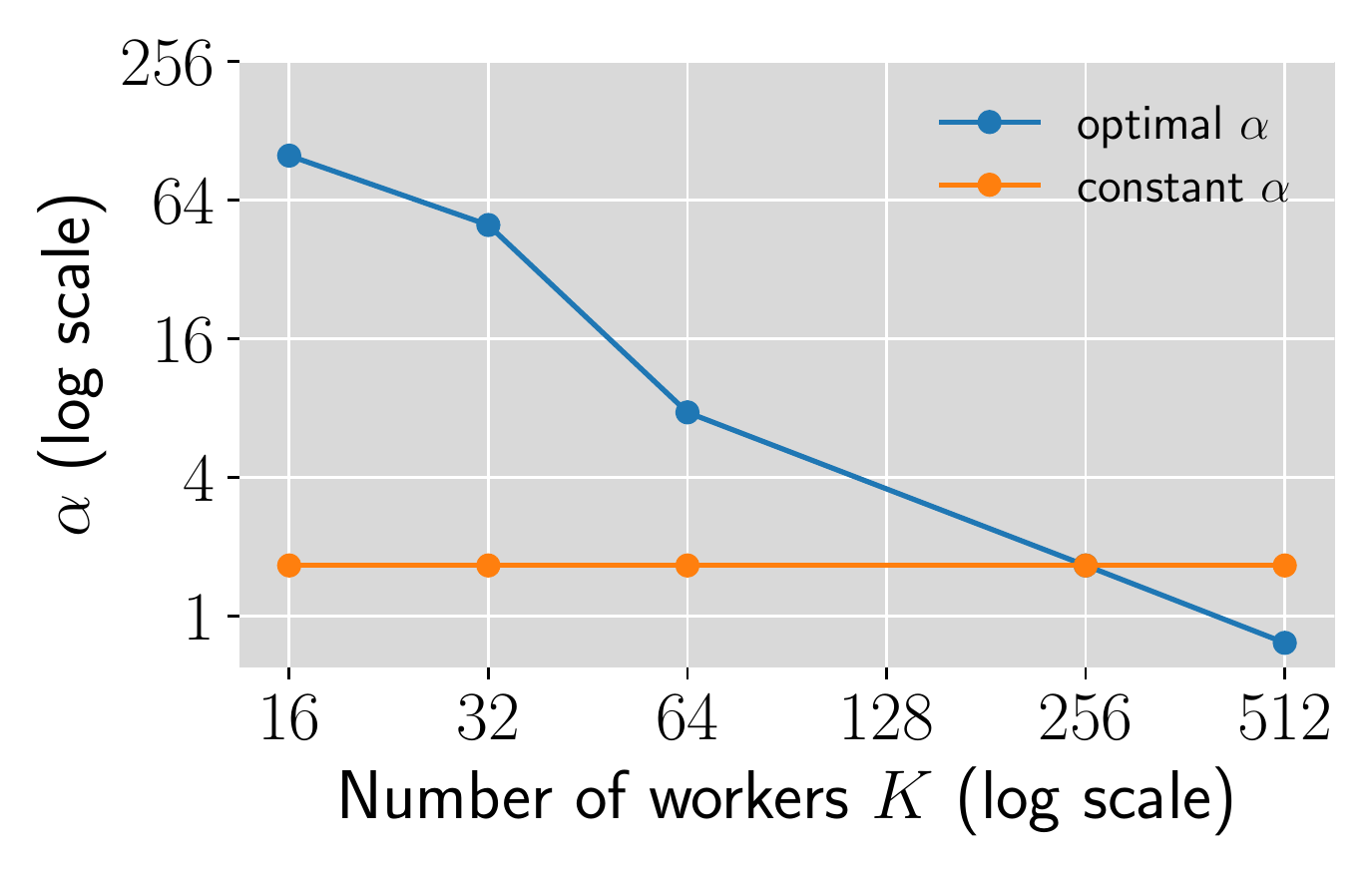}\label{fig:img lsr alpha}
    }
    \vspace{-0.1in}
    \caption{
    For training from CIFAR-10 and ImageNet checkpoints, Local SGD consistently outperforms SGD ($H=1$)
    across different batch sizes $B$ (fixing $\Bloc$ and varying $K$), where the learning rate is scaled by the LSR $\eta \propto B$. Two possible ways of tuning the number of local steps $H$ are considered: \textbf{(1).}~Tune $H$ for the best test accuracy for $K=16$ and $K = 256$ respectively on CIFAR-10 and ImageNet, then scale $H$ as $H \propto 1/B$ so that $\alpha := \eta H$ is constant; \textbf{(2).}~Tune $H$ specifically for each $K$.
    % Local SGD with more workers generalizes better.
    %as we add more workers and apply
    %Linear Scaling Rule. We scale the learning rate $\eta$ proportionally to
    %the global batch size $B$ and the number of local steps $H$
    %proportionally to $1/B$, and fix other hyperprameters.
    See \Cref{sec: bs details} for training details.
    }
  \label{fig:lsr}
     \vspace{-0.2in}
\end{center}
\end{figure}

%we propose to apply the LSR $\eta \mapsto \kappa \eta$ as SGD,
%and also scale the number of local steps as $H \mapsto H / \kappa$ to keep
%$\alpha$ unchanged.

We show in \Cref{thm:lsr sgd} that the LSR does not change the Slow SDE of SGD either. Experiments in \Cref{fig:lsr} show that the LSR indeed holds nicely when we continue training with small learning rates from the same CIFAR-10 and ImageNet checkpoints as in~\Cref{fig:effect}. Here we choose $K=16$ and $K=256$ as the base settings for CIFAR-10 and ImageNet, respectively,
and then tune the learning rate to maximize the test accuracy.
As shown in \Cref{fig:cifar lsr,fig:img lsr}, the optimal learning rate turns out to be small enough that the LSR can be applied to scale the global batch size with only a minor change in test accuracy.

Now, assuming the learning rate is scaled as LSR, we study how to tune the number of local steps $H$ for Local SGD for better generalization. A natural choice is to tune $H$ in the base settings and keep $\alpha$ unchanged
via scaling $H \mapsto H / \kappa$. Then the following SDE can be derived (see \Cref{thm:lsr localsgd}):
\begin{align}\label{eq:zeta-lsr}
    \dd \vzeta(t)&=P_{\vzeta}\Big(\underbrace{\tfrac{1}{\sqrt{B}} \mSig_{\parallel}^{\sfrac{1}{2}}(\vzeta)\dd \vW_t}_{\text{(a)\ diffusion (unchanged)}}
    \underbrace{-\tfrac{1}{2B} \nabla^3 \cL(\vzeta)[\widehat{\mSig}_{\Diamond}(\vzeta)] \dd t}_{\text{(b)\ drift-I (unchanged)}}
    \underbrace{-\tfrac{\kappa K-1}{2B} \nabla^3 \cL(\vzeta)[\widehat{\mPsi}(\vzeta)] \dd t}_{\text{(c)\ drift-II (rescaled)}}
    \Big).
\end{align}
Compared with \eqref{eq:zeta}, the drift-II term here is rescaled by a positive factor.
Again, when $\alpha$ is large, we can follow the argument in \Cref{sec:local-sgd-sde-interpretation}
to approximate $\widehat{\mPsi}(\vzeta) \approx \widehat{\mSig}_{\Diamond}(\vzeta)$
and obtain the following $(\frac{1}{B}, \frac{\kappa K}{B})$-Slow SDE:
\begin{align}\label{eq:zeta-inf2}
    \dd \vzeta(t)&=P_{\vzeta}\Big(\tfrac{1}{\sqrt{B}} \mSig_{\parallel}^{\sfrac{1}{2}}(\vzeta)\dd \vW(t)
    -\tfrac{\kappa K}{2B} \nabla^3 \cL(\vzeta)[\widehat{\mSig}_{\Diamond}(\vzeta)] \dd t \Big).
\end{align}
The drift term of the above SDE is always stronger than SGD \eqref{eq:sgd-zeta}, as long as there exists more than one worker after the scaling (i.e., $\kappa K > 1$).
As expected from \Cref{hyp:drift}, we observed in the experiments that the generalization performance of Local SGD is always better than or at least comparable to SGD across different batch sizes (see \Cref{fig:cifar lsr,fig:img lsr}).

Taking a closer look into the drift term in the Slow SDE~\eqref{eq:zeta-inf2},
we can find that it scales linearly with $\kappa$. According to \Cref{hyp:drift}, the SDE is expected to generalize better when adding more workers ($\kappa > 1$)
and to generalize worse when removing some workers ($\kappa < 1$).
For the latter case, we indeed observed that the test accuracy of Local SGD drops when removing workers.
For the case of adding workers, however, we also need to take into account that the LSR specifies a larger learning rate and causes a larger SDE approximation error for the same $\alpha$, which may cancel the generalization improvement brought by strengthening the drift term. In the experiments, we observed that the test accuracy does not rise when adding more workers to the base settings.

Since $\alpha$ also controls the regularization strength~(\Cref{sec:local-sgd-tuning}), it would be beneficial to decrease $\alpha$ for large batch size so as to better trade-off between regularization strength and approximation quality.
In \Cref{fig:cifar lsr alpha,fig:img lsr alpha},
we plot the optimal value of $\alpha$ for each batch size,
and we indeed observed that the optimal $\alpha$ drops as we scale up $K$.
Conversely, a smaller batch size (and hence a smaller learning rate)
allows for using a larger $\alpha$ to enhance regularization while still keeping a low approximation error (\Cref{coro:alpha}).
The test accuracy curves in~\Cref{fig:cifar lsr,fig:img lsr} indeed show that 
setting a larger $\alpha$ can compensate for the accuracy drop when reducing the batch size.

% With $H$ (and hence $\alpha$) properly tuned, Local SGD with smaller batch sizes can generalize better than 

% a smaller batch size (and hence a smaller learning rate) allows for a larger $\alpha$ to regularize the model better.

%the optimal $\alpha$ increases as $K$ decreases and $\alpha$ decreases as $K$ increases.

%In the experiments, we tuned $\alpha$ optimally
%for $K=16$ in \Cref{fig:cifar lsr} and $K = 256$ in \Cref{fig:img lsr} and observed that the test accuracy indeed drops for smaller $K$. However, Local SGD accuracy with larger $K$ does not generalize better. The reason could be that a larger learning rate induces a larger approximation error, which may hurt generalization (see \Cref{sec:local-sgd-tuning}).

%On the other hand, if we tune $\alpha$ for optimal test accuracy for each $K$ while still applying the LSR $\eta \mapsto \kappa \eta$, using a smaller $K$ (and hence a smaller $\eta$) can achieve better test accuracy (see \Cref{fig:cifar lsr,fig:img lsr}) since a smaller learning rate allows for a larger $\alpha$ to better regularize the model (see \Cref{fig:cifar lsr alpha,fig:img lsr alpha}). 

\section{Discussion}\label{sec:discussion}

\begin{comment}
\myparagraph{Is $H$ the larger the better?} Note that our interpretation does not imply $H$ should be infinitely large when $\eta$ is fixed.
\Cref{coro:alpha} clearly exhibits the
dependency of the closeness between the global iterate and its projection
on $\alpha:=H\eta$.
Increasing $H$ while fixing $\eta$ may give a better regularization in the SDE,
but the SDE approximation may fail,
which results in a less accurate solution to the loss minimization problem.
This explains the reason why the
test accuracy can degrade for relatively large $\eta$ and $H$ in \Cref{fig:effect} (b) and (c).
On the other hand, the theoretical and empirical evidence in this paper
does suggest that the optimal $H$ for generalization increases as $\eta$ decreases.

\kaifeng{moved from Section 3.2}
The probable reason for the
degradation of test accuracy for relatively large $H$ and $\eta$ is that the
local iterates gets more divergent as $H$ and $\eta$ increase, resulting in a
less accurate solution to the loss minimization problem after aggregation. We
also refer the readers to \Cref{coro:alpha} for a formal characterization.
Generally, the optimal $H$ increases as $\eta$ decreases. 
\end{comment}

\myparagraph{Connection to the conventional wisdom that the diffusion term matters more.}
As mentioned in \Cref{subsec:con-sde}, it is believed in the literature is that
a large diffusion term in the conventional SDE leads to good generalization. One
may think that the diffusion term in the Slow SDE corresponds to that in the
conventional SDE, and thus enlarging the diffusion term rather than the drift
term should lead to better generalization. However, we note that both the
diffusion and drift terms in the Slow SDEs are resulted from the long-term
effects of the diffusion term in the conventional SDE (Slow SDEs become
stationary if $\mSig = \vzero$). This means our view characterizes the role of
gradient noise in more detail, and therefore, goes one step further on the
conventional wisdom.
%that the magnitude of noise matters for generalization.

\myparagraph{Slow SDEs for neural nets with modern training techniques.}
In modern neural net training, it is common to add normalization layers and
weight decay ($L^2$-regularization) for better optimization and generalization.
However, these techniques lead to violations of our assumptions, e.g., no fixed
point exists in the regularized
loss~\citep{li2020reconciling,ahn2022understanding}. Still, a minimizer manifold
can be expected to exist for the unregularized loss. \citet{li2022fast} noted
that the drift and diffusion around the manifold proceeds faster in this case,
and derived a Slow SDE for SGD that captures $\cO(\frac{1}{\eta} \log
\frac{1}{\eta})$ discrete steps instead of $\cO(\frac{1}{\eta^2})$. We believe
that our analysis can also be extended to this case, and that adding local steps
still results in the effect of strengthening the drift term.

\section{Conclusions}

In this paper, we provide a theoretical analysis for Local SGD that captures its
long-term generalization benefit in the small learning rate regime. We derive
the Slow SDE for Local SGD as a generalization of the Slow SDE for
SGD~\citep{li2021happens}, and attribute the generalization improvement over SGD
to the larger drift term in the SDE for Local SGD. Our empirical validation shows that Local SGD indeed induces generalization benefits with small learning rate and long enough training time. The main limitation of our work is that our analysis does not imply any direct theoretical separation between SGD and Local SGD in terms of test accuracy, which requires a much deeper understanding of the loss landscape and the Slow SDEs and is left for future work. Another direction for future work is to design distributed training methods that provably generalize better than SGD based on the theoretical insights obtained from Slow SDEs.
\section*{Acknowledgement and Disclosure of Funding}
The work of Xinran Gu and Longbo Huang is supported by the Technology and Innovation Major Project of the Ministry of Science and Technology of China under Grant 2020AAA0108400 and 2020AAA0108403, the Tsinghua University Initiative Scientific Research Program, and Tsinghua Precision Medicine Foundation 10001020109.
The work of Kaifeng Lyu and Sanjeev Arora is supported by funding from NSF, ONR, Simons Foundation, DARPA and SRC.

\begin{comment}
\kaifeng{TODO: !!!!!!!!!!!!!!!!!}
\begin{enumerate}
    \item move theorems to Appendix E
    \item update reference. Some are published in ICML/NeurIPS
    %\item Figure 3, larger K (experiment running, K=128)
    %\item Technical contribution, be more concrete? Read the proof outline and highlight a few points
    %\item label noise figure, {\color{red} why scale up by $\sqrt{K}$} \kaifeng{maybe add a theorem to Appendix F too?} {\color{blue} (ok)}
    \item Around Theorem I.3 to Lemma I.26: too much white space
\end{enumerate}
\end{comment}

\bibliographystyle{iclr2023_conference}
\bibliography{references} 
\newpage
\tableofcontents
\newpage

\appendix

\section{Additional Related Works} \label{sec:addrelated}
\label{sec:related}
\myparagraph{Optimization aspect of Local SGD.} Local SGD is a
communication-efficient variant of parallel SGD, where multiple workers perform
SGD independently and average the model parameters periodically. Dating back
to~\citet{DBLP:conf/nips/MannMMSW09} and~\citet{NIPS2010_abea47ba}, this
strategy has been widely adopted to reduce the communication cost and speed up
training in both scenarios of data center distributed training~\citep{chen2016,
acoustic, povey2014parallel, su2015experiments} and Federated Learning
\citep{mcmahan2017communication,kairouz2021advances}.  To further accelerate
training, \citet{wang2019adaptive} and \citet{haddadpour2019local} proposed
adaptive schemes for the averaging frequency, and \citet{NEURIPS2019_Qsparse}
combined Local SGD with gradient compression. Motivated to theoretically
understand the empirical success of Local SGD, a lot of researchers analyzed the
convergence rate of Local SGD under various settings, e.g.,
homogeneous/heterogeneous data and convex/non-convex objective functions.
Among them, \citet{yu2019parallel, stich2018local, khaled2020tighter,
woodworth2020local} focus on the homogeneous setting where data for each worker
are independent and identically distributed (IID). \citet{li2019convergence,
karimireddy2020scaffold, glasgow2022sharp, woodworth2020minibatch,
wang2022unreasonable} study the heterogeneous setting, where workers have
non-IID data and local updates may induce ``client drift''
\citep{karimireddy2020scaffold} and hurt optimization.
%A more recent paper
%\citep{glasgow2022sharp} pointed out a continuous perspective to understand the
%iterate bias induced by local steps.
The error bound of Local SGD obtained by these works is typically inferior to
that of SGD with the same global batch size for fixed number of iterations/epochs and becomes worse as the
number of local steps increases, revealing a trade-off between less
communication and better optimization. In this paper, we are interested in the
generalization aspect of Local SGD in the homogeneous setting, assuming the training loss can be optimized
to a small value.
%Recently, \citet{lin2020dont} observed that Local SGD can achieve higher test accuracy than the SGD counterpart. However, it remains unclear why and when \citep{ortiz2021trade} this generalization benefit arises. 
\myparagraph{Gradient noise and generalization.} 
The effect of stochastic gradient noise on generalization has been studied from
different aspects, e.g., changing the order of learning different
patterns~\citet{li2019stochastic}, inducing an implicit regularizer in the
second-order SDE approximation~\citet{smith2021on,li2019stochastic}. Our work
follows a line of works studying the effect of noise in the lens of sharpness,
which is long believed to be related to
generalization~\citet{hochreiter1997flat,neyshabur2017exploring}.
\citet{keskar2017on} empirically observed that large-batch training leads to
worse generalization and sharper minima than small-batch training.
\citet{wu2018how,hu2017diffusion,ma2021on} showed that gradient noise
destabilizes the training around sharp minima,
% from the perspective of dynamical system theory,
and
\citet{kleinberg2018alternative,zhu2018anisotropic,xie2021a,ibayashi2022expescape}
quantitatively characterized how SGD escapes sharp minima. The most related
papers are \citet{blanc2020implicit,damian2021label,li2021happens}, which focus
on the training dynamics near a manifold of minima and study the effect of noise
on sharpness (see also \Cref{subsec:sde-near-man}). Though the mathematical
definition of sharpness may be vulnerable to the various symmetries in deep
neural nets~\citep{dinh2017sharp}, sharpness still appears to be one of the most
promising tools for predicting
generalization~\citep{jiang2020fantastic,foret2021sharpnessaware}.
%\Cref{sec:addrelated} discusses how previous works used the insights on noise to
%improve generalization in large-batch training.

\myparagraph{Improving generalization in large-batch training.}
The generalization issue of the large-batch (or full-batch) training has been
observed as early as \citep{bengio2012practical,lecun2012efficient}. As
mentioned in \Cref{sec:intro}, the generalization issue of large-batch
training could be due to the lack of a sufficient amount of stochastic noise. To
make up the noise in large-batch training,
\citet{krizhevsky2014one,goyal2017accurate} empirically discovered the {\em
Linear Scaling Rule} for SGD, which suggests enlarging the learning rate
proportionally to the batch size.
\citet{jastrzkebski2017three}
adopted an SDE-based analysis to justify that this scaling rule indeed retains
the same amount of noise as small-batch training~(see also
\Cref{subsec:con-sde}). However, the SDE approximation may fail if the learning
rate is too large~\citep{li2021validity}, especially in the early phase of
training before the first learning rate decay~\citep{smith2020generalization}.
\citet{shallue2019measuring} demonstrated that generalization gap between small- and large-batch training
can also depend on many other training hyperparameters. Besides
enlarging the learning rate, other approaches have also been proposed to reduce the
% generalization
gap, including training longer~\citep{hoffer2017train},
learning rate warmup~\citep{goyal2017accurate}, LARS~\citep{you2018imagenet},
LAMB~\citep{you2020large}. In this paper, we focus on using Local SGD to improve
generalization, but adding local steps is a generic training trick that can also
be combined with others, e.g., Local LARS~\citep{lin2020dont}, Local
Extrap-SGD~\citep{lin2020extrapolation}.
\newpage
\section{Implementation Details of Parallel SGD, Local SGD and Post-local SGD} \label{sec:pseudocode}

In this section, we present the formal procedures for Parallel SGD, Local SGD
and Post-local SGD. Given a training dataset and a data augmentation function,
\Cref{algo:sampler-wp,algo:sampler-wop} show the implementations of distributed
samplers for sampling local batches with and without replacement.
Then \Cref{algo:psgd,algo:lsgd,algo:plsgd} show the implementations of parallel SGD,
Local SGD and Post-local SGD that can run with either of the samplers.
\myparagraph{Sampling with replacement.}
Our theory analyzes parallel SGD, Local SGD and Post-local SGD when local
batches are sampled with replacement (\Cref{algo:sampler-wp}). That is, local
batches consist of IID samples from the same training distribution $\ctD$, where
$\ctD$ serves as an abstraction of the distribution of an augmented sample drawn from the training dataset.
The mathematical
formulations are given in \Cref{sec:intro}.
\myparagraph{Sampling without replacement.}
Slightly different from our theory, we use the sampling without replacement
(\Cref{algo:sampler-wop}) in our experiments unless otherwise stated.
This sampling scheme is standard in practice: 
it is used by \citet{goyal2017accurate} for parallel SGD
and by \citet{lin2020dont,ortiz2021trade} for Post-local/Local SGD.
This sampling scheme works as follows. At the
beginning of every epoch, the whole training dataset is shuffled and evenly
partitioned into $K$ shards. Each worker takes one shard and samples batches
without replacement. When all workers pass their own shard, the next epoch
begins and the whole dataset is reshuffled.
An alternative view is that the
workers always share the same dataset. For each epoch, they perform local steps
by sampling batches of data without replacement until the dataset contains too
few data to form a batch. Then another epoch starts with the dataset reloaded to
the initial state.
\myparagraph{Discrepancy in Sampling Schemes.} 
We argue that this discrepancy
between theory and experiments on sample schemes is minor.
Though sampling without replacement is
standard in practice, most previous works, e.g., \citet{wang2019adaptive,
li2021validity, zhang2020adaptive}, analyze sampling with replacement for
technical simplicity and yields meaningful results.

Moreover, even if we change the
sampling scheme to with replacement, Local SGD can still improve the generalization of SGD
(by merely adding local steps). See \Cref{sec:regime add exp} for the experiments.
We believe that the reasons
for better generalization of Local SGD with either sampling scheme are similar
and leave the analysis for sampling without replacement for future work.

\SetKwFunction{fnSample}{Sample}

\newpage

\begin{algorithm}[t]
    \caption{Distributed Sampler on $K$ Workers (Sampling with Replacement)}
    \label{algo:sampler-wp}

    \textbf{Require}: shared training dataset $\cD$, data augmentation function $\cA(\hat{\xi})$ \\
    \textbf{Hyperparameters}: local batch size $\Bloc$ \\
    \BlankLine
    \BlankLine
    \Fn{\fnSample{} {\bf on} worker $k$}{
        Draw $\Bloc$ IID samples $\hat{\xi}_1, \dots, \hat{\xi}_{\Bloc}$ from $\cD$ with replacement \;
        $\xi_b \gets \cA(\hat{\xi}_b)$ for all $1 \le b \le \Bloc$ \tcp*[r]{apply data augmentation}
        \Return{$(\xi_1, \dots, \xi_{\Bloc})$} \;
    }
    \BlankLine
    \BlankLine
\end{algorithm}

\begin{algorithm}[htbp]
    \caption{Distributed Sampler on $K$ Workers (Sampling without Replacement)}\label{algo:sampler-wop}

    \textbf{Require}: shared training dataset $\cD$, data augmentation function $\cA(\hat{\xi})$ \\
    \textbf{Hyperparameters}: local batch size $\Bloc$ \\
    \textbf{Constant}: $N_{\mathrm{loc}} := \left\lfloor \frac{\abs{\cD}}{K\Bloc} \right\rfloor$
    \tcp*[f]{number of local batches per worker per epoch} \\
    \textbf{Local Variables}: $c^{(k)} \gets N_{\mathrm{loc}} \Bloc$ for worker $k$ \tcp*[f]{number of samples drawn in this epoch} \\
    \BlankLine
    \BlankLine
    \Fn{\fnSample{} {\bf on} worker $k$}{
        \If {$c^{(k)} = N_{\mathrm{loc}} \Bloc$}{
            \tcp{Now start a new epoch}
            Wait until all the other workers reach this line \tcp*[r]{synchronize}
            Draw a random permutation $P$ of $1, \dots, \abs{D}$ jointly with other workers
            so that the same permutation is shared among all workers \tcp*[r]{reshuffle the dataset}
            $Q^{(k)}_{j} \gets P_{(k-1)N_{\mathrm{loc}} \Bloc + j}$ for all $1 \le j \le N_{\mathrm{loc}}$ \tcp*[r]{partition the dataset}
            $c^{(k)} \gets 0$ \;
        }
        \For{$i=1,\dots,\Bloc$}{
            $\hat{\xi}_i \gets$ the $Q^{(k)}_{c^{(k)} + i}$-th data point of $\cD$ \tcp*[r]{sample without replacement}
            $\xi_i \gets \cA(\hat{\xi}_i)$ \tcp*[r]{apply data augmentation}
        }
        $c^{(k)} \gets c^{(k)} + \Bloc$ \;
        \Return{$(\xi_1, \dots, \xi_{\Bloc})$} \;
    }
    \BlankLine
    \BlankLine
\end{algorithm}

\begin{algorithm}[t]
    \caption{Parallel SGD on $K$ Workers}\label{algo:psgd}
    \textbf{Input}: loss function $\ell(\vtheta; \xi)$, initial parameter $\vtheta_0$ \\
    \textbf{Hyperparameters}: total number of iterations $T$, learning rate $\eta$, local batch size $\Bloc$ \\
    \BlankLine
    \BlankLine
    \For{$t=0, \cdots, T-1$}{
        \ForP{each worker $k$} {
            $(\xi_{k,t,1}, \dots, \xi_{k,t,\Bloc}) \gets \fnSample{}$  \tcp*[r]{sample a local batch}
            $\vg_{k, t}\gets \frac{1}{\Bloc}\sum_{i=1}^{\Bloc} \nabla \ell (\vtheta_t; \xi_{k,t, i})$ \tcp*[r]{computing the local gradient}
        }
        $\vg_t \gets \frac{1}{K} \sum_{k=1}^K \vg_{k, t}$ \tcp*[r]{all-Reduce aggregation of local gradients} 
        $\vtheta_{t+1}\gets\vtheta_t - \eta_t \vg_t$ \tcp*[r]{update the model} 
    }
    \BlankLine
    \BlankLine
\end{algorithm}

\begin{algorithm}[H]
    \caption{Local SGD on $K$ Workers}\label{algo:lsgd}
    \textbf{Input}: loss function $\ell(\vtheta; \xi)$, initial parameter $\bvths[0]$ \\
    \textbf{Hyperparameters}: total number of rounds $R$, number of local steps $H$ per round \\
    \textbf{Hyperparameters}: learning rate~$\eta$, local batch size $\Bloc$ \\
    \BlankLine
    \BlankLine
    \For{$s=0, \dots, R-1$} {
        \ForP{each worker $k$} {
            $\vths_{k,0} \gets \bvths[0]$ \tcp*[r]{maintain a local copy of the global iterate}
            \For{$t = 0, \dots, H - 1$} {
                $(\xis_{k,t,1}, \dots, \xis_{k,t,\Bloc}) \gets \fnSample{}$  \tcp*[r]{sample a local batch}
                $\vgs_{k, t}\gets \frac{1}{\Bloc}\sum_{i=1}^{\Bloc} \nabla \ell (\vths_{k,t}; \xis_{k,t, i})$ \tcp*[r]{computing the local gradient}
                $\vths_{k,t+1} \gets \vths_{k,t} - \eta \vgs_{k,t}$ \tcp*[r]{update the local model}
            }
        }
        $\bvths[s+1] \gets \frac{1}{K} \sum_{k=1}^K \vths_{k,H}$ \tcp*[r]{all-Reduce aggregation of local iterates} 
    }
    \BlankLine
    \BlankLine
\end{algorithm}

\begin{algorithm}[H]
    \caption{Post-local SGD on $K$ Workers}\label{algo:plsgd}

    \textbf{Input}: loss function $\ell(\vtheta; \xi)$, initial parameter $\vtheta_0$ \\
    \textbf{Hyperparameters}: total number of iterations $T$, learning rate $\eta$, local batch size $\Bloc$ \\
    \textbf{Hyperparameters}: switching time point $t_0$, number of local steps $H$ per round \\ 
    \textbf{Ensure}: $T - t_0$ is a multiple of $H$ \\

    \BlankLine
    \BlankLine
    Starting from $\vtheta_0$, run Parallel SGD for $t_0$ iterations and obtain $\vtheta_{t_0}$ \;
    Starting from $\vtheta_{t_0}$, run Local SGD for $\frac{1}{H}(T-t_0)$ rounds with $H$ local steps per round \; 
    \Return{the final global iterate of Local SGD} \;
    \BlankLine
    \BlankLine
\end{algorithm}

\newpage
\section{Modeling Local SGD with Multiple Conventional SDEs} \label{sec:multisde}

\citet{lin2020dont} 
tried to informally explain the success of Local SGD by adopting the argument
that larger diffusion term in the conventional SDE leads to better
generalization (see \Cref{subsec:con-sde,sec:related}). Basically, they attempted to write multiple SDEs, each of which
describes the $H$-step local training process of each worker in each round (from
$\vths_{k,0}$ to $\vths_{k,H}$). The key
difference between each of these SDEs and the SDE for SGD~\eqref{eq:canonical SDE} is that the
former one has a larger diffusion term because the workers use batch size
$\Bloc$ instead of $B$:
\begin{align}
    \dd\vX(t)=-\nabla \cL(\vX)\dd t+\sqrt{\frac{\eta}{\Bloc}}\mSig^{\sfrac{1}{2}}(\vX)\dd \vW_t. \label{eq:multi-local-sde}
\end{align}
\citet{lin2020dont} then argue that the total amount of ``noise'' in the
training dynamics of Local SGD is larger than that of SGD. However, it is hard
to see whether it is indeed larger, since
the model averaging step at the end of each round can reduce the variance in
training and may cancel the effect of having larger diffusion terms.

More formally, a complete modeling of Local SGD following this idea should view the
sequence of global iterates $\{\bvths\}$ as a Markov
process $\{\vXs\}$.
Let $\cPX(\vx, B, t)$ the distribution of $\vX(t)$ in~\eqref{eq:canonical SDE}
with initial condition $\vX(0)=\vx$.
Then the Markov transition 
should be 
$\vXs[s+1]=\frac{1}{K}\sum_{k=1}^K\vXs_{k,H}$,
where $\vXs_{1,H}, \dots, \vXs_{K,H}$
are $K$ independent samples from 
$\cPX(\vXs,\Bloc, H\eta)$, i.e., sampling from~\eqref{eq:multi-local-sde}.

Consider one round of model averaging. It is true that $\cPX(\vXs, \Bloc,
H\eta)$ may have a larger variance than the corresponding SGD baseline
$\cPX(\vXs, B, H\eta)$ because the former one has a smaller batch size. However,
it is unclear whether $\vXs[s+1]$ also has a larger variance than $\cPX(\vXs, B,
H\eta)$. This is because $\vXs[s+1]$ is the average of $K$ samples,
which means we have to compare $\frac{1}{K}$ times the variance of 
$\cPX(\vXs, \Bloc, H\eta)$ with the variance of 
$\cPX(\vXs, B, H\eta)$. Then it is unclear which one is larger.

In the special case where $H \eta$ is small,
$\cPX(\vXs, \Bloc, H\eta)$ is approximately equal to the following Gaussian distribution:
\begin{align}
    \mathcal{N}\left(\vXs - \eta H \nabla \cL(\vXs), \frac{\eta^2 H}{\Bloc} \mSig(\vXs)\right)
\end{align}
Then averaging over $K$ samples gives
\begin{align}
    \mathcal{N}\left(\vXs - \eta H \nabla \cL(\vXs), \frac{\eta^2 H}{B} \mSig(\vXs)\right),
\end{align}
which is exactly the same as the Gaussian approximation of the SGD baseline.
This means there do exist certain cases where \citet{lin2020dont}'s argument
does not give a good separation between Local SGD and SGD.

Moreover, we do not gain any further insights from this modeling
since it is hard to see how model averaging interacts with the SDEs.

\section{Additional Experimental Results% for \Cref{sec:regime}
}\label{sec:regime
add exp} In this section, we present additional experimental results to further verify our finding.

\myparagraph{Supplementary Plot: Training time should be long enough.} \Cref{fig:add
effect-a,fig:add effect-b} show enlarged views for
\Cref{fig:effect-a,fig:effect-c} respectively, showing that Local SGD can
generalize worse than SGD in the first few epochs.

\myparagraph{Supplementary Plot: Learning rate should be small.}
\Cref{fig:add effect-c} shows that reducing the learning rate from $0.32$ to
$0.064$ does not lead to test accuracy drop for Local SGD on CIFAR-10, if the training time
is allowed to be longer and the number of local steps $H$ is set properly.
\Cref{fig:add effect-d} presents the case where, with a large learning rate, the
generalization improvement of Local SGD disappears even starting from a
pre-trained model.

\myparagraph{Supplementary Plot: Reconciling our main finding with \citet{ortiz2021trade}.}
In \Cref{fig:add effect-e},
the generalization benefit of Local SGD with $H=24$ becomes less significant
after the learning rate decay at epoch $226$, which is consistent with the
observation by \citet{ortiz2021trade} that the generalization benefit of Local
SGD usually disappears after the learning rate decay. But we can preserve the
improvement by increasing $H$ to $900$. Here, we use Local SGD with momentum.

\myparagraph{Supplementary Plot: Optimal $\alpha$ gets larger for smaller $\eta$.} In \Cref{fig:add effect-f}, we summarize the optimal $\alpha:=\eta H$ that enables the highest test accuracy for each learning rate in \Cref{fig:effect-f}. We can see that the optimal $\alpha$ increases as we decrease the learning rate. The reason is that the approximation error bound $\cO(\sqrt{\alpha \eta \log
\frac{\alpha}{\eta \delta}})$ in~\Cref{coro:alpha} decreases with $\eta$, allowing for a larger value of $\alpha$ to better regularize the model.

\begin{figure}[htbp]
\begin{center}
 \subfigure[CIFAR-10, start from random.]{
        \includegraphics[width=0.3\textwidth]{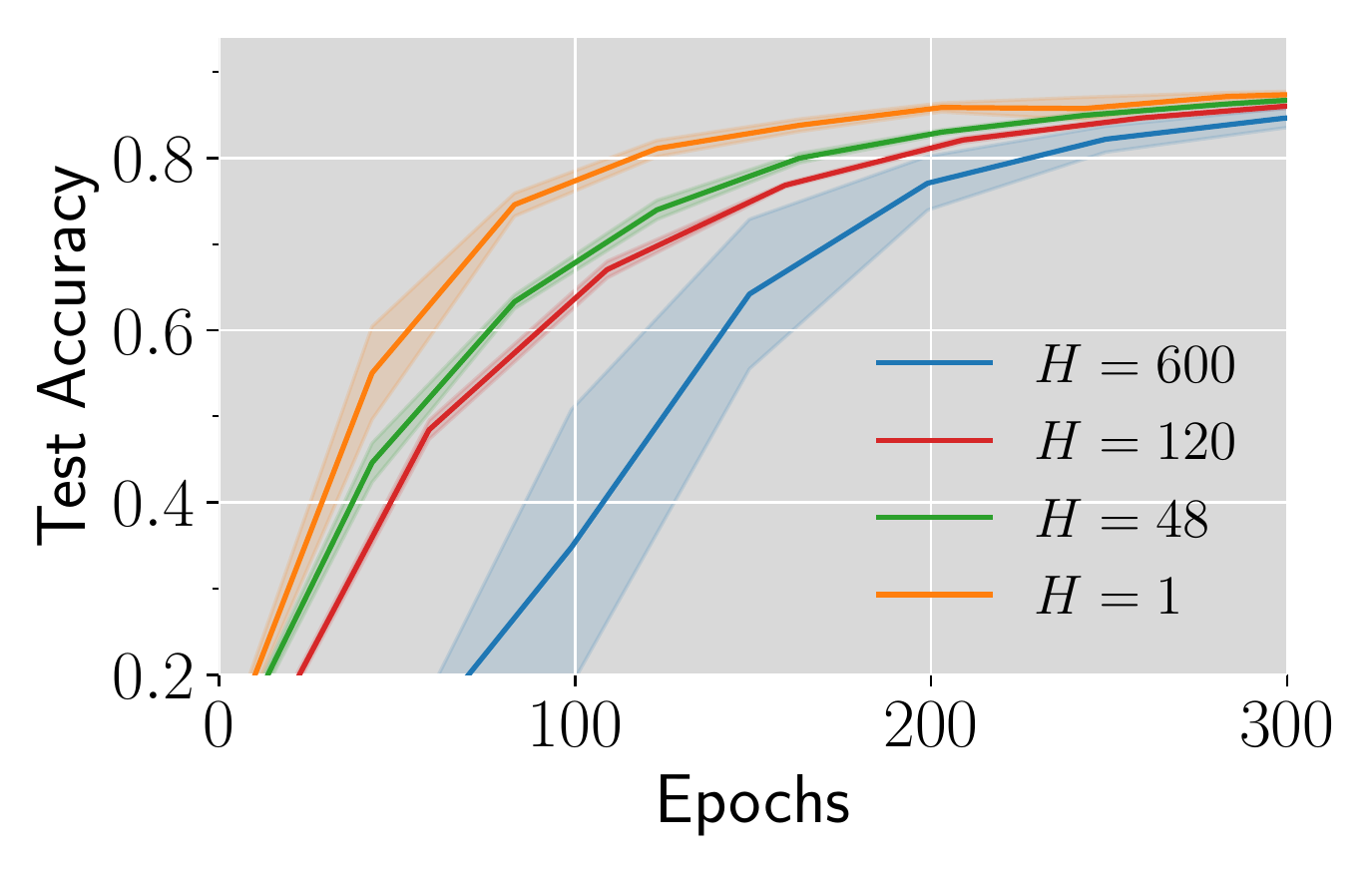}\label{fig:add effect-a}
    }
    \hspace{0.1in}
    \subfigure[ImageNet, start from \#$250$.]{
        \includegraphics[width=0.3\textwidth]{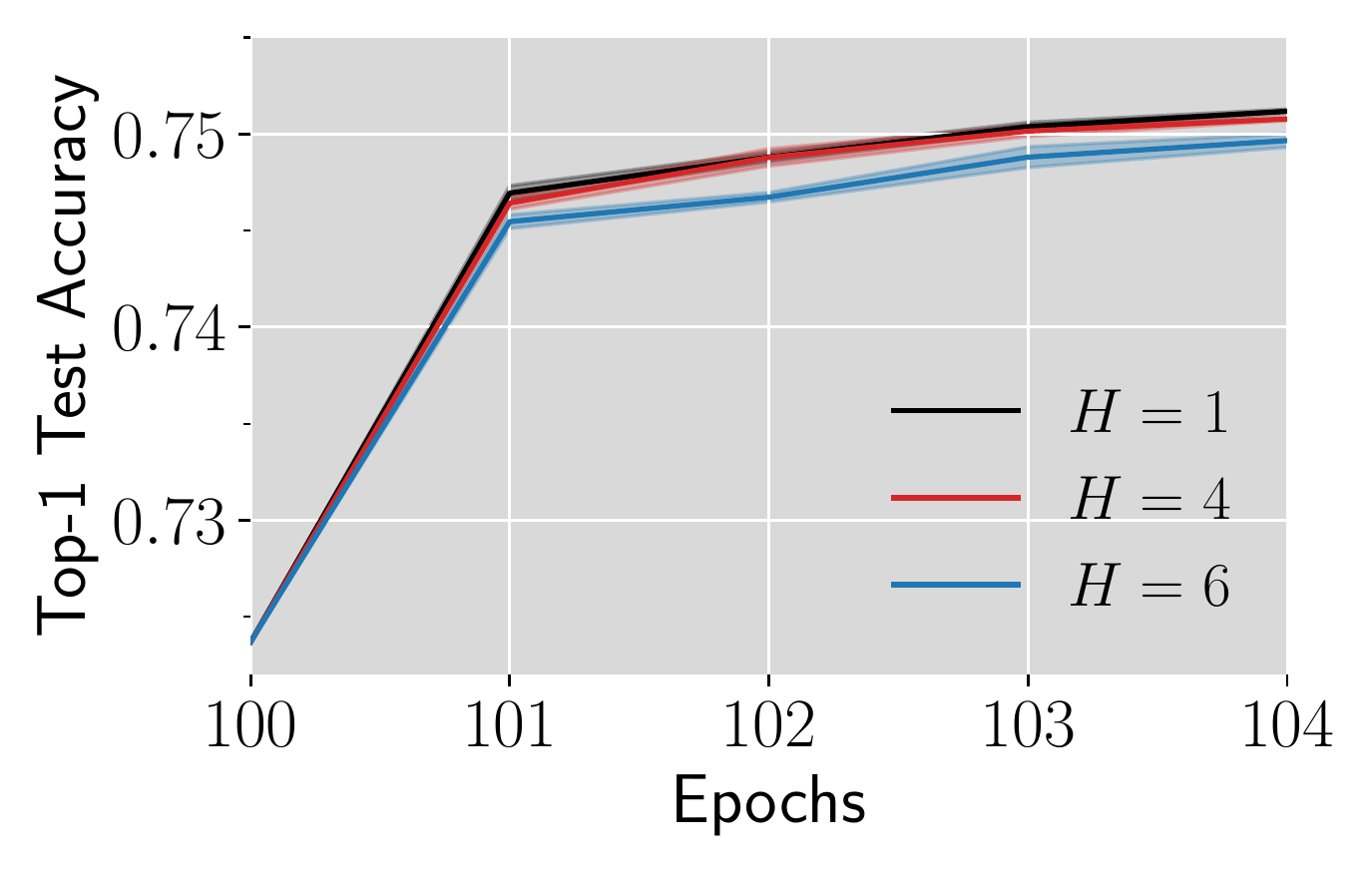}\label{fig:add effect-b}
        \vspace{-0.2in}
    }
    \hspace{0.1in}
    \subfigure[CIFAR-10, start from \#$100$.]{
        \includegraphics[width=0.3\textwidth]{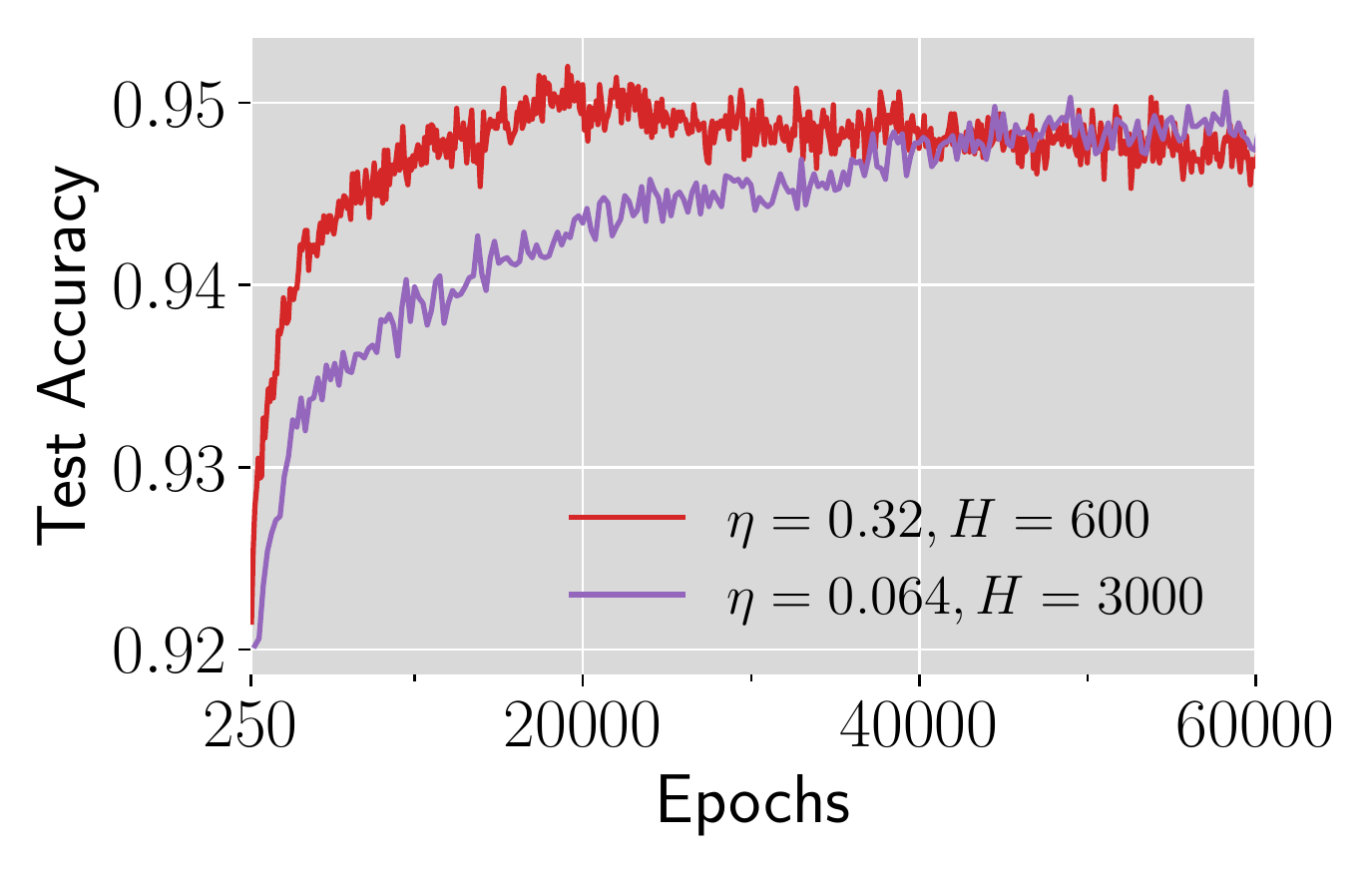}\label{fig:add effect-c}
        \vspace{-0.125in}
    }
    \subfigure[ImageNet, start from \#$100$.]{
        \includegraphics[width=0.3\textwidth]{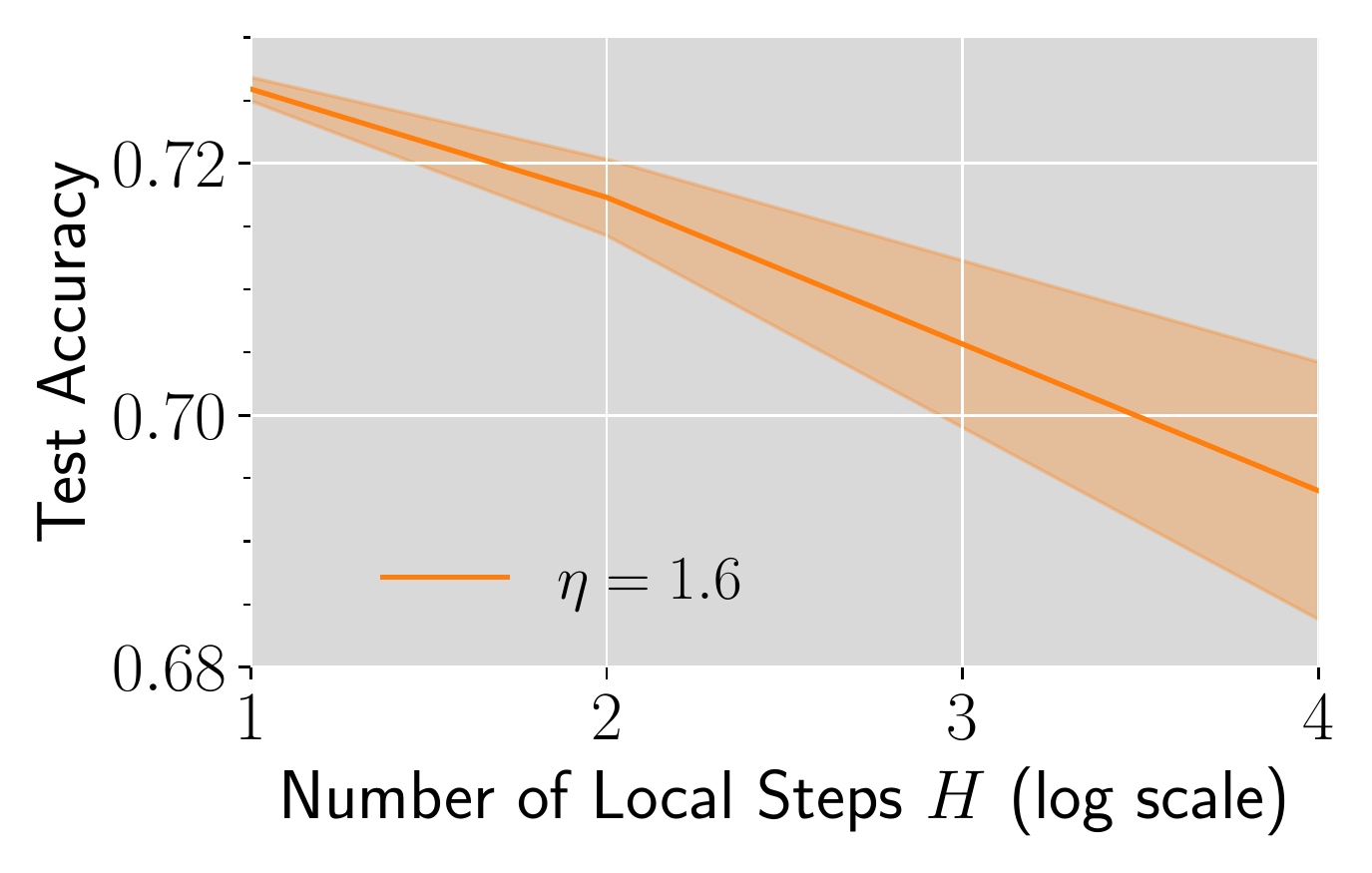}\label{fig:add effect-d}
    }
    \hspace{0.1in}
    \subfigure[CIFAR-10, start from \#$150$.]{
        \includegraphics[width=0.3\textwidth]{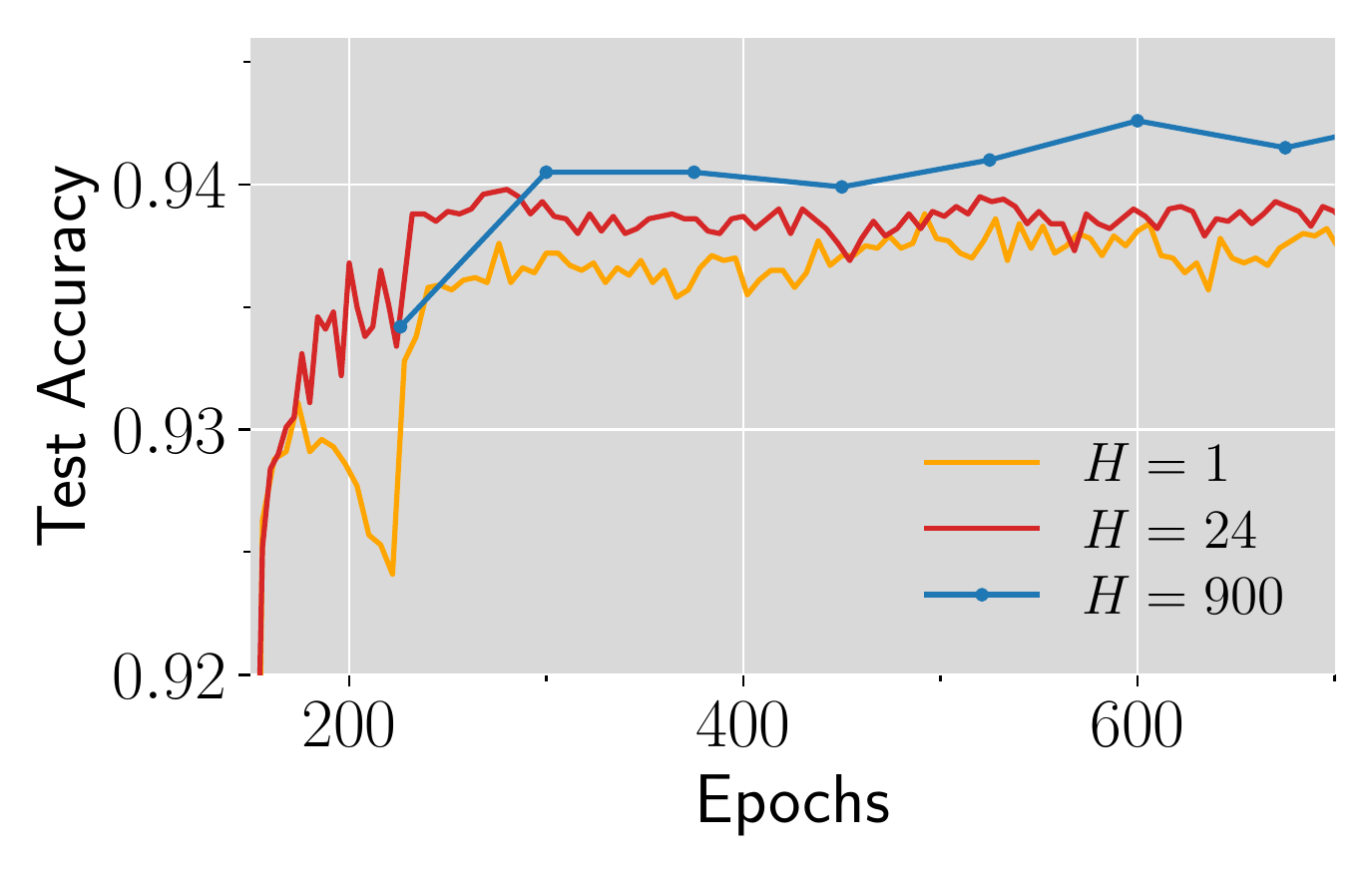}\label{fig:add effect-e}
        \vspace{-0.125in}
    }
    \hspace{0.1in}
        \subfigure[ImageNet, optimal $\alpha$ v.s. $\eta$.]{
        \includegraphics[width=0.3\textwidth]{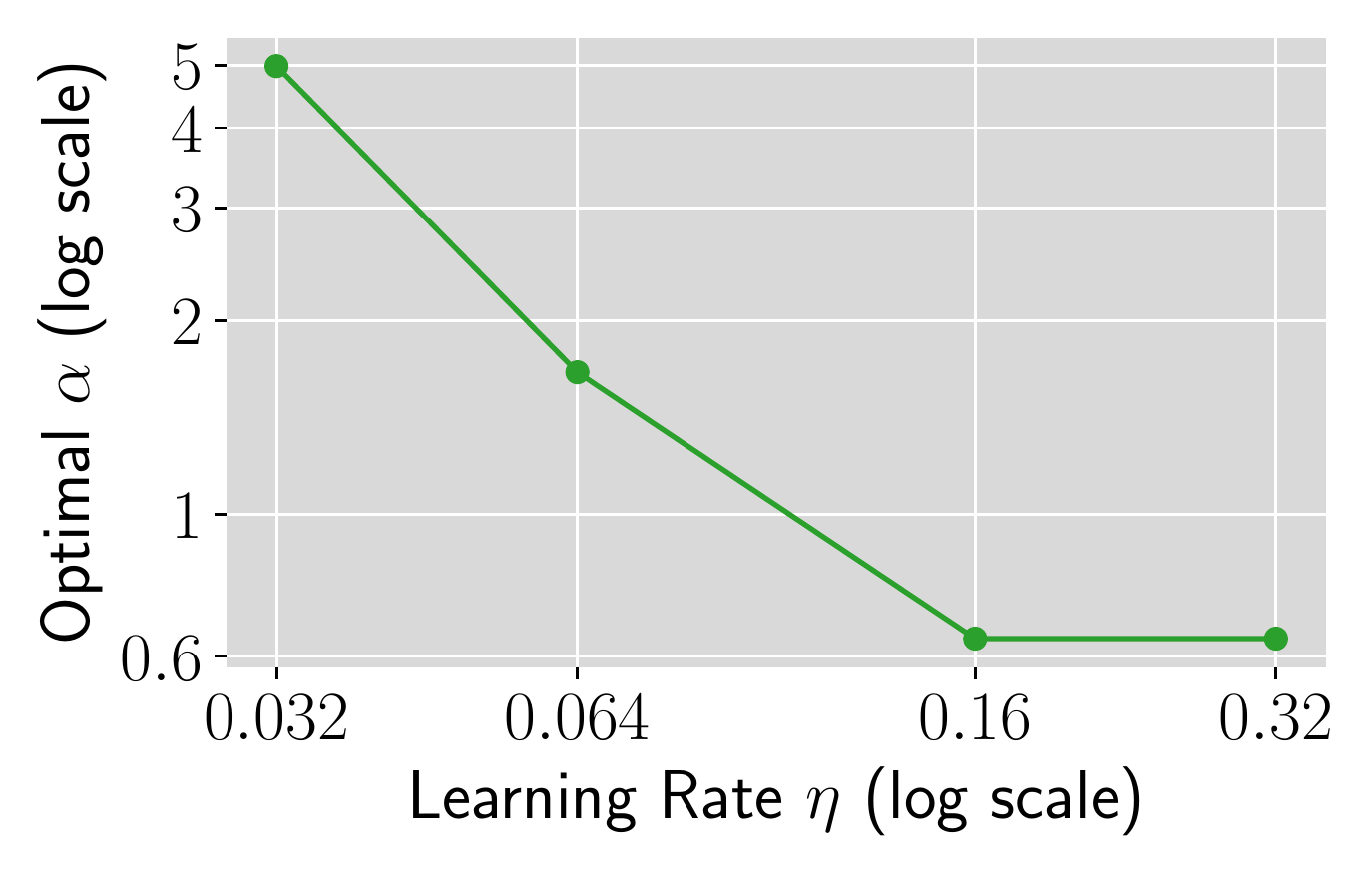}\label{fig:add effect-f}
        \vspace{-0.125in}
    }
    \caption{Additional experimental results about the effect of the learning rate, training time and the number of local steps. See \Cref{sec:detail regime} for details.}
        \label{fig:add effect}
     \vspace{-0.2in}
\end{center}
\end{figure}
\begin{figure}[htbp]
\begin{center}
 \subfigure[SGD with various $\eta$.]{
        \includegraphics[width=0.3\textwidth]{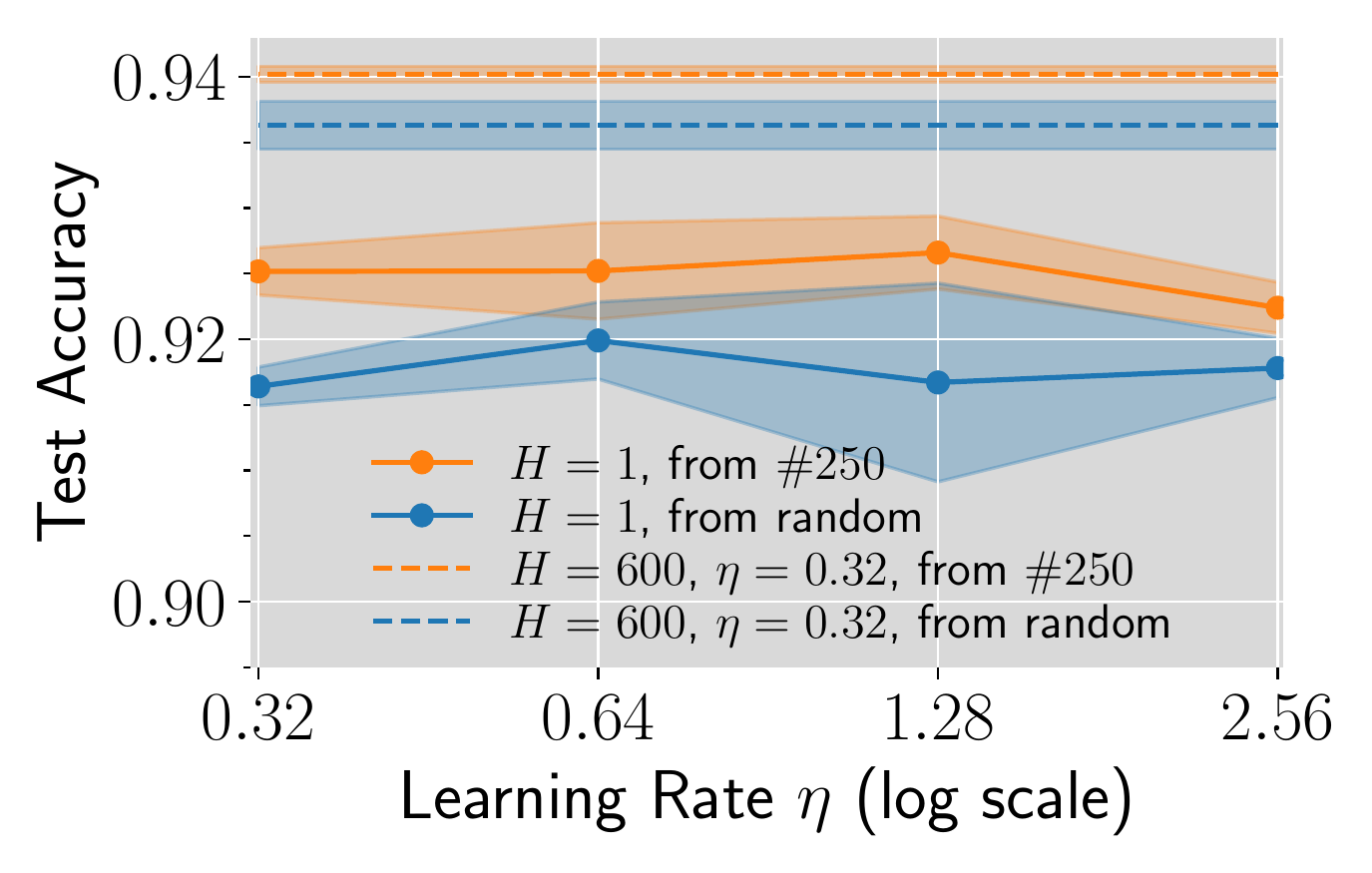}\label{fig:add exp-a}
    }
        \hspace{0.1in}
    \subfigure[SGD with larger batch sizes.]{
        \includegraphics[width=0.3\textwidth]{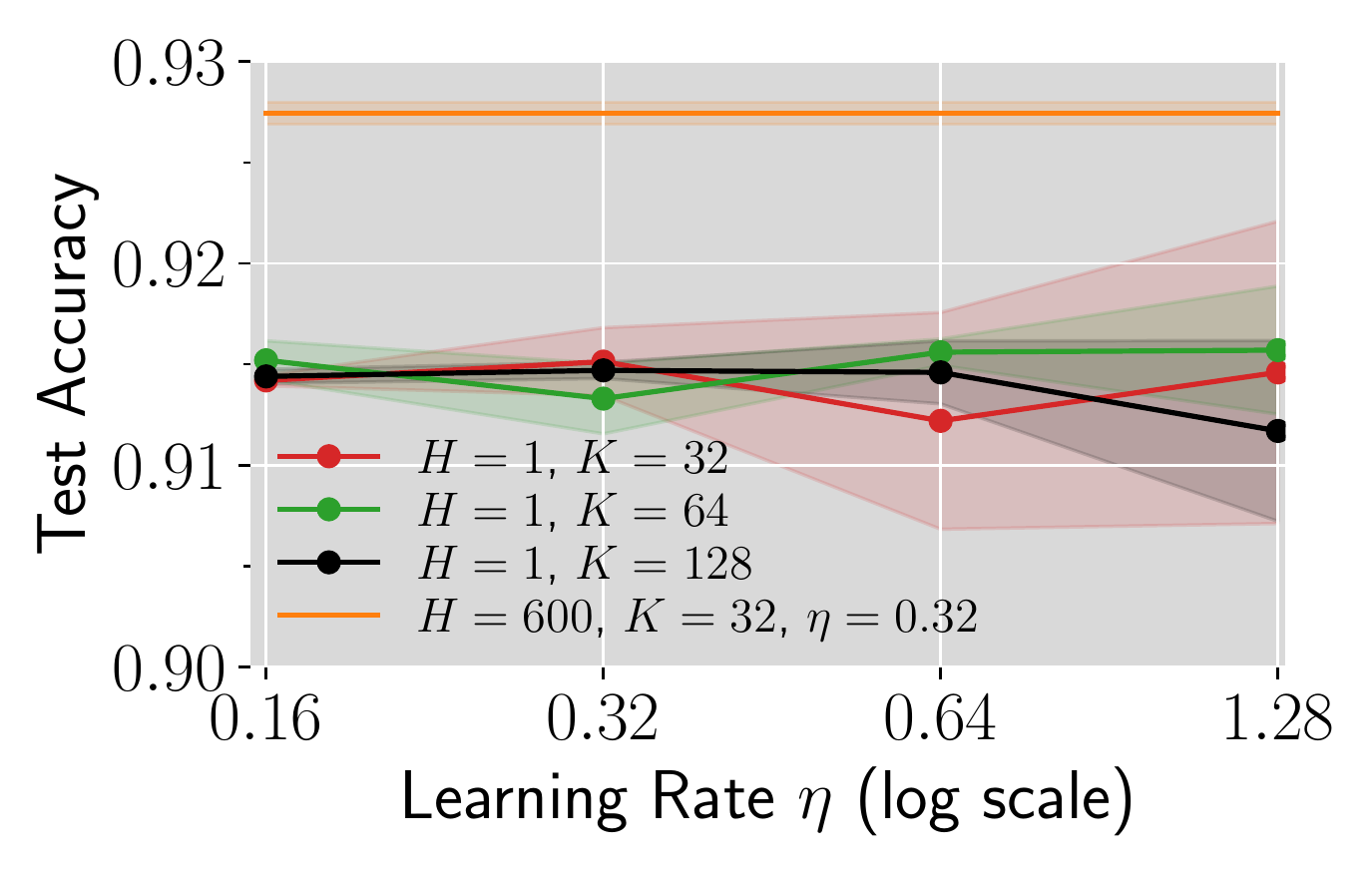}\label{fig:add exp-b}
        \vspace{-0.125in}
    }
    \hspace{0.1in}
    \subfigure[Post-local SGD, sampling with replacement.]{
        \includegraphics[width=0.3\textwidth]{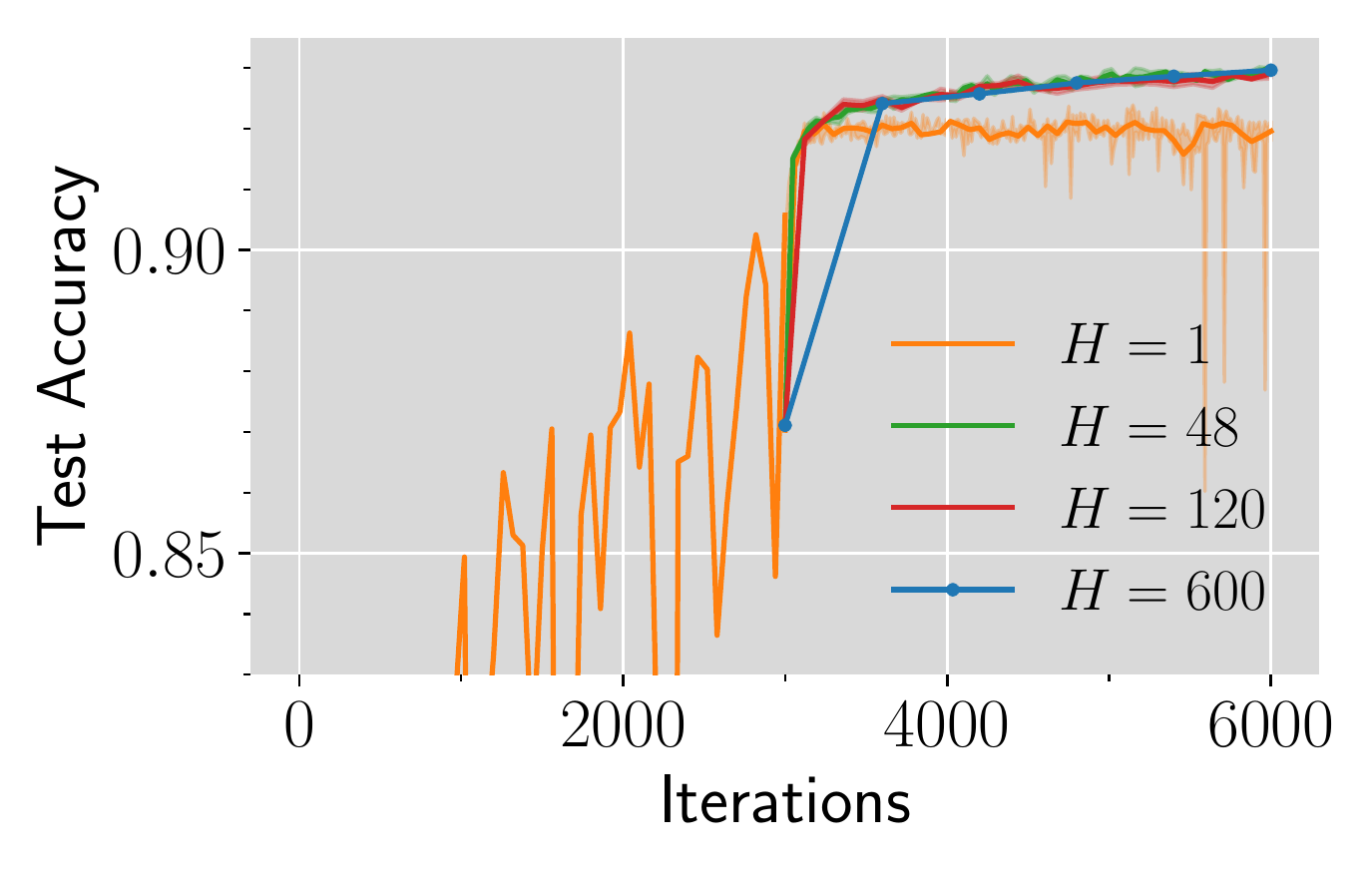}\label{fig:add exp-c}
        \vspace{-0.125in}
    }
    \vspace{-0.2in}
    \caption{Additional experimental results on CIFAR-10. See \Cref{sec:add exp detail} for details.}
     \vspace{-0.2in}
        \label{fig:add exp}
\end{center}
\end{figure}
    %In this section, we present some additional experimental results on CIFAR-10.

\myparagraph{SGD generalizes worse even with extensively tuned learning rates.} In \Cref{fig:add
exp-a}, we run SGD from both random initialization and the pre-trained model for another $3,000$ epochs
with various learning rates and report the test accuracy. We can see that none
of the SGD runs beat Local SGD with the fixed learning rate $\eta=0.32$.
Therefore, the inferior performance of SGD in \Cref{fig:effect-a,fig:effect-b}
is not due to the improper learning rate and Local SGD indeed generalizes
better. 
\myparagraph{SGD with larger batch sizes performs no better.} In
\Cref{fig:add exp-b}, we enlarge the batch size of SGD and report the test
accuracy for various learning rates. We can see that SGD with larger batch sizes
performs no better and none of the SGD runs outperform Local SGD with the fixed
learning rate $\eta=0.32$. This result is unsurprising since it is well
established in the literature \citep{jastrzkebski2017three,
smith2020generalization, keskar2017on} that larger batch size typically leads to
worse generalization. See \Cref{sec:addrelated} for a  survey of empirical and
theoretical works on understanding and resolving this phenomenon.
\myparagraph{Sampling with or without replacement does not matter. } Note that there is a
slight discrepancy in sampling schemes between our theoretical and experimental
setup: the update rules \eqref{eq:upd intro sgd} and \eqref{eq:upd intro
localsgd} assume that data are sampled with replacement while most experiments
use sampling without replacement (\Cref{sec:pseudocode}). To eliminate the effect of this discrepancy,
we conduct additional experiments on Post-local SGD using sampling with
replacement (see \Cref{fig:add exp-c}) and Post-local SGD significantly outperforms SGD.
%obtain similar results as \Cref{fig:cifar_intro}.
%\section{Additional Experiments}
%\label{sec:add exp}

\section{Discussions on Local SGD with Label Noise Regularization}\label{sec:thm label noise}
%\kaifeng{this special case is also used to justify our view: the determinstic part is more important}
% In \Cref{sec:thm label noise}, we study the label noise case where the regularization term provably drives the parameter to flatter minima.
\subsection{The Slow SDE for Local SGD with Label Noise Regularization}
In this subsection, we present the Slow SDE for Local SGD in the case of label noise regularization and show that Local SGD indeed induces a stronger regularization term, which presumably leads to better generalization.

%By time rescaling of the Slow SDE,  we explain why, in the case of label noise regularization, enlarging the learning rate can bring the same generalization benefit as adding local steps.
\begin{theorem}[Slow SDE for Local SGD with label noise regularization]\label{thm:local sgd label noise}
For a $C$-class classification task with cross-entropy loss, the slow SDE of Local SGD with label noise has the following form:
\begin{align}
    \dd \vzeta(t) = -\frac{1}{4B} \gradGa \left(\tr(\nabla^2 \cL(\vzeta)) + (K-1) \cdot \frac{\tr (F(2H\eta \nabla^2\cL(\vzeta) ))}{2H\eta} \right)\dd t,
    \label{eq:flow-label-noise-localsgd}
    \end{align}
    where $F(x):=\int_0^x \psi(y)\dd y$ and is interpreted as a matrix function. Additionally, $\gradGa f$ stands for the gradient of a function $f$ projected to the tangent space of $\Gamma$.
\end{theorem}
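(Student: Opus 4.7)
The plan is to specialize the general Slow SDE for Local SGD \eqref{eq:zeta} to the label-noise setting using the key structural identity, established in \citet{blanc2020implicit} and \citet{damian2021label}, that for cross-entropy loss with $p$-label-noise the gradient covariance coincides with the Hessian on the interpolating manifold, $\mSig(\vzeta) \propto \nabla^2 \cL(\vzeta)$ for $\vzeta \in \Gamma$ (I will absorb the constant into the definition of $\cL$ so the proportionality becomes equality, up to the same factor inherited in the SGD case of \citet{li2021happens}). Since \Cref{a:Gamma} forces the tangent space of $\Gamma$ at $\vzeta$ to coincide with the null space of $\nabla^2\cL(\vzeta)$, this identity immediately yields $\mSig_{\parallel}(\vzeta) = \partial \Phi(\vzeta)\,\mSig(\vzeta)\,\partial \Phi(\vzeta) \equiv \vzero$ on $\Gamma$, so the diffusion term in \eqref{eq:zeta} drops out, the Itô correction in $P_{\vzeta}$ also vanishes, and $\mSig_{\Diamond} = \mSig = \nabla^2 \cL$.

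With the diffusion gone, I would evaluate $\widehat{\mSig}_{\Diamond}$ and $\widehat{\mPsi}$ in the eigenbasis $\{\vv_i\}$ of $\nabla^2\cL(\vzeta)$. Because $\langle \mSig_\Diamond, \vv_i \vv_j^\top\rangle = \lambda_i \delta_{ij}$, the double sums in \eqref{eq:hatmsig} and \eqref{eq:hatpsi} collapse to single sums, giving $\widehat{\mSig}_{\Diamond} = \tfrac{1}{2}\sum_{i:\lambda_i\ne 0}\vv_i\vv_i^\top$ and $\widehat{\mPsi} = \tfrac{1}{2}\sum_{i:\lambda_i\ne 0}\psi(2\eta H \lambda_i)\vv_i\vv_i^\top = \tfrac{1}{2}\psi(2\eta H\,\nabla^2 \cL(\vzeta))$, interpreted as a matrix function (using $\psi(0)=0$ so that the null-space modes contribute nothing). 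Thus the Slow SDE reduces to a pure drift $\dd\vzeta = -\partial\Phi(\vzeta)\bigl(\tfrac{1}{2B}\nabla^3\cL[\widehat{\mSig}_\Diamond] + \tfrac{K-1}{2B}\nabla^3\cL[\widehat{\mPsi}]\bigr)\dd t$ on $\Gamma$.

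For the drift-II piece I would use the matrix-function calculus identity $\partial_{\theta_k}\tr(F(2\eta H\,\nabla^2\cL)) = 2\eta H\,\tr\bigl(\psi(2\eta H\,\nabla^2\cL)\,\partial_{\theta_k}\nabla^2\cL\bigr)$ together with the definition $\nabla^3\cL[\mM]_k = \tr(\partial_{\theta_k}\nabla^2\cL\cdot \mM)$ to obtain $\nabla^3\cL[\widehat{\mPsi}] = \tfrac{1}{4\eta H}\nabla\tr\bigl(F(2\eta H\,\nabla^2\cL(\vzeta))\bigr)$. Applying $\partial\Phi(\vzeta)$ turns the Euclidean gradient into the tangential gradient $\gradGa$, which after multiplication by $-\tfrac{K-1}{2B}$ delivers exactly $-\tfrac{K-1}{4B}\gradGa\bigl(\tfrac{\tr(F(2\eta H\,\nabla^2\cL))}{2\eta H}\bigr)\dd t$. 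For the drift-I piece I would simply invoke the $K=1$ specialization, which is precisely the label-noise Slow SDE for SGD derived by \citet{li2021happens}, yielding $-\tfrac{1}{4B}\gradGa\tr(\nabla^2\cL)\dd t$. Combining the two drifts gives \eqref{eq:flow-label-noise-localsgd}.

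The main obstacle is the drift-I identification: one has to show $\partial\Phi(\vzeta)\cdot\nabla^3\cL\bigl[\tfrac{1}{2}(I-\partial\Phi(\vzeta))\bigr] = \tfrac{1}{2}\gradGa \tr(\nabla^2\cL)$. Naively, $\nabla\tr(\nabla^2\cL) = \nabla^3\cL[I]$, so one must argue that the $\partial\Phi$-correction term contributes zero after tangential projection. This is not trivial because the pointwise trace $\tr(\nabla^2\cL\cdot\partial\Phi)$ does vanish identically on $\Gamma$ (the tangent projector lives in the null space of the Hessian), and differentiating along $\Gamma$ splits into two terms, one of which cancels the $\partial\Phi$-correction and the other is the desired tangential gradient of $\tr(\nabla^2\cL)$; this is precisely the implicit-regularizer identity used in \citet{li2021happens}, which I can cite directly rather than redoing. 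The analogous bookkeeping for the drift-II term is simpler because $\psi$ kills the null-space modes, so no $\partial\Phi$-correction survives at all.
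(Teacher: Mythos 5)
Your proposal is correct and takes essentially the same route as the paper's proof: specialize the general Slow SDE using the identity $\mSig(\vzeta)=\nabla^2\cL(\vzeta)$ on $\Gamma$ (which the paper establishes as an exact equality, so no proportionality constant is needed), observe that $\mSig_{\parallel}\equiv\vzero$ kills both the diffusion and the It\^o correction, and evaluate the drifts in the eigenbasis of $\nabla^2\cL$ so that $\widehat{\mSig}_{\Diamond}=\tfrac12\mPperp$ and $\widehat{\mPsi}=\tfrac12\psi(2\eta H\nabla^2\cL)$, which after the $\partial\Phi\,\nabla^3\cL[\cdot]$ contraction give the two $\gradGa\tr(\cdot)$ terms. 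The only cosmetic difference is that the paper computes through $\partial^2\Phi$ and the operator $\cV_{\nabla^2\cL}$ (\Cref{lemma:mani inner perp perp}) rather than through $\widehat{\mSig}_{\Diamond},\widehat{\mPsi}$, and your explicit discussion of why the tangent-tangent block of $\mI$ is annihilated by the projection (\Cref{lemma:mani}) is, if anything, more careful than the paper's terse computation.
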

\begin{proof}
    See \Cref{sec:app label noise}. 
\end{proof}

Note that the magnitude of the RHS in \eqref{eq:flow-label-noise-localsgd} becomes larger as $H$ increases. By letting $H$ to go to infinity, we further have the following theorem.
\begin{theorem}
 \label{thm:label-noise-K-times}
    As the number of local steps $H$ goes to infinity, the slow SDE of Local SGD with label noise \eqref{eq:flow-label-noise-localsgd}can be simplified as:
    \begin{align}
        \dd \vzeta(t) = -\frac{K}{4B} \gradGa \tr(\nabla^2 \cL(\vzeta)) \dd t.
        \label{eq:flow-label-noise-localsgd-H-large}
    \end{align}
\end{theorem}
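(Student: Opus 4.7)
My plan is to obtain \eqref{eq:flow-label-noise-localsgd-H-large} directly from \eqref{eq:flow-label-noise-localsgd} by evaluating the pointwise limit of the $H$-dependent coefficient as $H \to \infty$. The only term that depends on $H$ is $\frac{\tr(F(2H\eta \nabla^2\cL(\vzeta)))}{2H\eta}$, so the whole proof reduces to showing
\begin{equation*}
\lim_{H\to\infty} \frac{\tr(F(2H\eta \nabla^2\cL(\vzeta)))}{2H\eta} = \tr(\nabla^2\cL(\vzeta)).
\end{equation*}
Once this pointwise limit is established on $\Gamma$, taking the gradient projected on the tangent space (and combining with the $\tr(\nabla^2\cL(\vzeta))$ term already present) gives the factor $1 + (K-1) = K$ in front of $\gradGa \tr(\nabla^2\cL(\vzeta))$, yielding \eqref{eq:flow-label-noise-localsgd-H-large}.

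To prove the limit, I would first recall that $\psi(x) = \frac{e^{-x}-1+x}{x} = 1 - \frac{1-e^{-x}}{x}$, so $\psi(x) \to 1$ as $x \to \infty$ and $\psi(0)=0$. Consequently $F(x) = \int_0^x \psi(y)\dd y = x - \int_0^x \frac{1-e^{-y}}{y}\dd y$, and since $\int_0^x \frac{1-e^{-y}}{y}\dd y$ grows only logarithmically, we get $F(x)/x \to 1$ as $x \to \infty$. For any $\vzeta \in \Gamma$, diagonalize $\nabla^2\cL(\vzeta) = \sum_i \lambda_i \vv_i \vv_i^\top$ in its orthonormal eigenbasis; by Assumption~\ref{a:Gamma} the nonzero eigenvalues are strictly positive while the zero eigenvalues correspond to tangent directions. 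Then
\begin{equation*}
\frac{\tr(F(2H\eta \nabla^2\cL(\vzeta)))}{2H\eta} = \sum_{i:\lambda_i>0} \lambda_i \cdot \frac{F(2H\eta\lambda_i)}{2H\eta\lambda_i} + \sum_{i:\lambda_i=0} \frac{F(0)}{2H\eta}.
\end{equation*}
The second sum vanishes since $F(0)=0$, while each term in the first sum tends to $\lambda_i$ as $H\to\infty$ by the scalar limit above. Summing yields $\sum_i \lambda_i = \tr(\nabla^2\cL(\vzeta))$, giving the desired limit.

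The main obstacle is conceptual rather than technical: one needs to ensure that passing to the $H\to\infty$ limit inside the SDE is legitimate. Because $\Gamma$ is compact (Assumption~\ref{a:compact}) and the eigenvalues of $\nabla^2\cL(\vzeta)$ vary smoothly (indeed, the nonzero spectrum is uniformly bounded away from zero by Assumption~\ref{a:Gamma}), the convergence $\tr(F(2H\eta\nabla^2\cL(\vzeta)))/(2H\eta)\to \tr(\nabla^2\cL(\vzeta))$ is uniform on $\Gamma$, and the same holds for its tangent gradient $\gradGa$ (the derivatives of $F(x)/x$ in the relevant range are controlled uniformly). Standard continuity of ODE solutions in the drift then justifies the limiting identification of the trajectories, closing the argument.
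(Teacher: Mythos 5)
Your proof is correct and follows essentially the same route as the paper: establish the scalar limit $\lim_{x\to\infty} F(x)/x = 1$, then apply it eigenvalue-by-eigenvalue to $\nabla^2\cL(\vzeta)$. The only mechanical difference is how the scalar limit is obtained --- you use the explicit decomposition $F(x) = x - \int_0^x \frac{1-e^{-y}}{y}\,\dd y$ together with the logarithmic growth of the integral, whereas the paper invokes L'H\^opital's rule directly on $F(ax)/x$; both are equally valid one-liners, and yours has the small advantage of also giving a rate. Your additional discussion of uniformity of the convergence on the compact manifold $\Gamma$ (leaning on the rank and compactness assumptions to keep the nonzero spectrum bounded away from zero) is a genuine strengthening that the paper simply omits; it is the right thing to worry about when differentiating the limiting coefficient and passing the limit inside the ODE, though the paper's informal treatment does not demand it.
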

\begin{proof}
    We obtain the corollary by simply taking the limit. By L'Hospital's rule,
\begin{align*}
    \lim_{x\to +\infty}\frac{F(ax)}{x}&=\lim_{x\to +\infty}\frac{\dd F(ax)}{\dd x}=\lim_{x\to +\infty}a\psi(ax)=a.
\end{align*}
Therefore, 
\begin{align}
    \lim _{x\to +\infty}\frac{\tr (F(2H\eta \nabla^2\cL(\vzeta) ))}{2H\eta}=\tr(\nabla^2\cL(\vzeta)). \label{eq:infty tr}
\end{align}
Substituting \eqref{eq:infty tr} into \eqref{eq:flow-label-noise-localsgd} yields \eqref{eq:flow-label-noise-localsgd-H-large}.
\end{proof}

% To exemplify the generalization benefit of the stronger regularization part induced by Local SGD,  we study the case of training over-parameterized neural nets with label noise regularization,
% where the regularization part turns out as a simple form and provably drives the parameter to flatter minima \citep{li2021happens, blanc2020implicit, damian2021label}. 

% The label noise regularization works as follows for classification tasks with $C$ classes: 
% every time we draw a sample from the training set,
% we make the true label as it is with probability $1-p$,
% and replace it with any other label with equal probability $\frac{p}{C-1}$.
% When we use cross-entropy loss, 
% the noise covariance matrix is always equal to the Hessian at each global minimizer $\vtheta^*$,
% i.e., $\mSig(\vtheta^*)=\nabla^2 \cL(\vtheta^*)$
% (see \Cref{sec:app label noise} for derivation). Utilizing this property,
As introduced in \Cref{subsec:inter}, the Slow SDE for SGD with label noise regularization has the following form: 
\begin{align}
    \dd \vzeta(t) = -\frac{1}{4B} \gradGa \tr(\nabla^2 \cL(\vzeta)) \dd t,
    \label{eq:flow-label-noise-std}
\end{align}
which is a deterministic flow that keeps reducing the trace of Hessian. As the trace of Hessian can be seen as a measure for the sharpness of the local loss landscape,
 \eqref{eq:flow-label-noise-std} 
indicates that SGD  with label noise regularization has an implicit bias toward flatter minima,
which presumably promotes generalization \citep{hochreiter1997flat,keskar2017on,neyshabur2017exploring}. From \Cref{thm:local sgd label noise,thm:label-noise-K-times}, we can conclude that Local SGD accelerates the process of sharpness reduction, thereby leading to better generalization. Furthermore, the regularization effect gets stronger for larger $H$ and is approximately $K$ times that of SGD.  We also conduct experiments on non-augmented CIFAR-10 with label noise regularization to verify our conclusion. As shown in \Cref{fig:label_noise}, increasing the number of local steps indeed gives better generalization performance.

% When $H$ gets large enough, the RHS in \eqref{eq:flow-label-noise-localsgd} is approximately $K$ times of the RHS in \eqref{eq:flow-label-noise-std}:

\subsection{The Equivalence of Enlarging the Learning Rate and Adding Local Steps}
%As discussed in \Cref{sec:effect-diffusion}, 
In this subsection, we explain in detail why training with label noise regularization is a special case where  enlarging the learning rate of SGD can bring the same generalization benefit as adding local steps. TWhen we scale up the learning rate of SGD $\eta \mapsto \kappa \eta$ (while keeping other hyperparameters unchanged), the corresponding Slow SDE is \eqref{eq:flow-label-noise-std} with time horizon $\kappa^2 T$ instead of $T$,
where SGD tracks a continuous interval of $\kappa^2\eta^2$  per step instead of $\eta^2$.
After rescaling the time horizon to $T$ so that SGD tracks a continuous interval of $\eta^2$ per step, we obtain
\begin{align}
      \dd \vzeta(t) = -\frac{\kappa^2}{4B} \gradGa \tr(\nabla^2 \cL(\vzeta)) \dd t.\label{eq:flow-label-noise-scale}
\end{align}
Let $\kappa=\sqrt{K}$ in \eqref{eq:flow-label-noise-scale}
 and we obtain the same Slow SDE as \eqref{eq:flow-label-noise-localsgd-H-large}, which is for Local SGD with a large number of local steps.  In \Cref{fig:K=4 label noise}, we conduct experiments to verify that SGD indeed achieves comparable test accuracy to that of Local SGD with a large $H$ if its learning rate is scaled up by $\sqrt{K}$ that of Local SGD.

\begin{figure}[t]
\begin{center}
 \subfigure[ResNet-56 + GroupNorm.]{
        \includegraphics[width=0.34\textwidth]{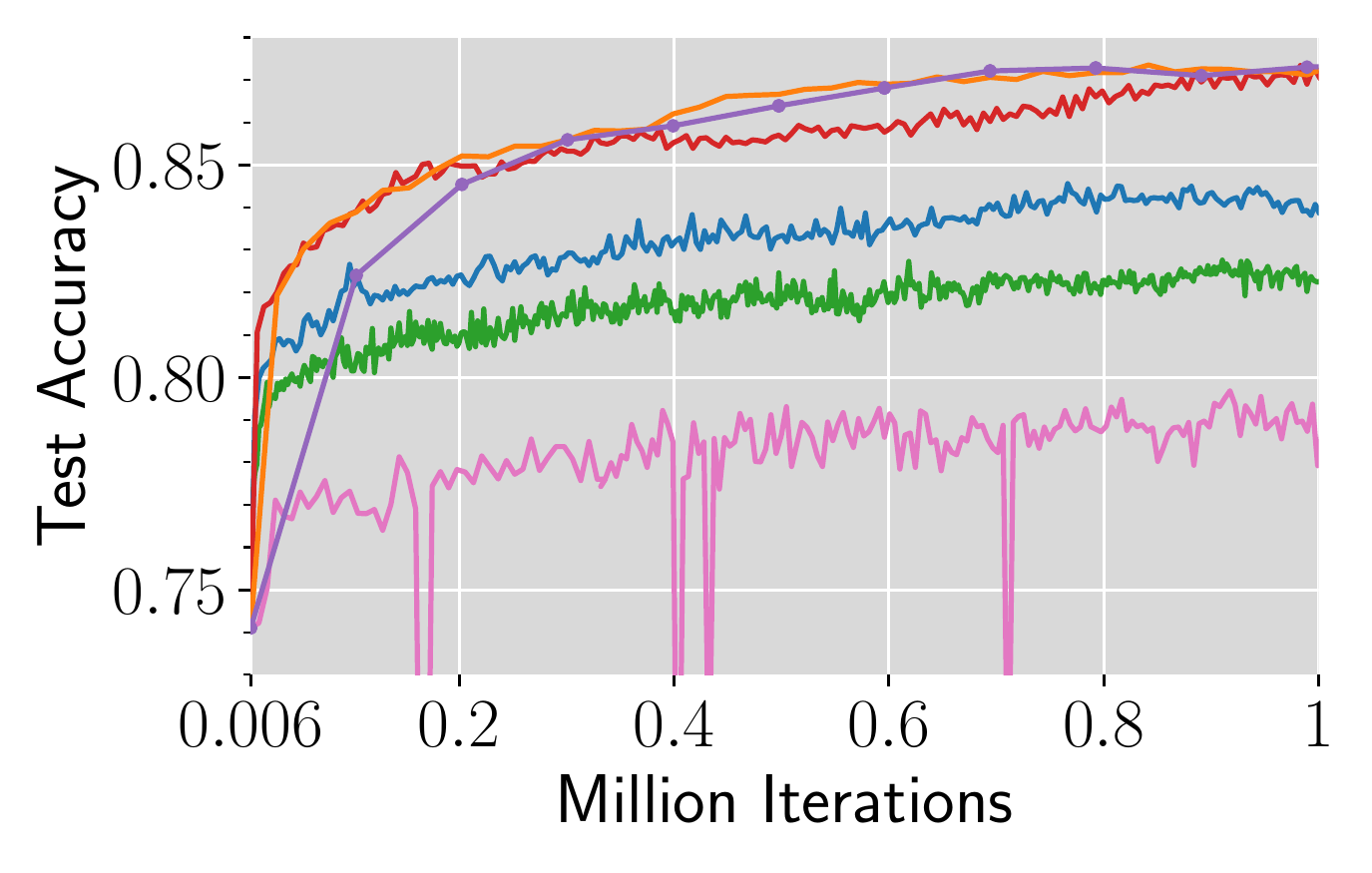}  \label{fig:label noise resnet}  }
    \hspace{0.1in}
         \subfigure[VGG-16 w/o normalization.]{
        \includegraphics[width=0.52\textwidth]{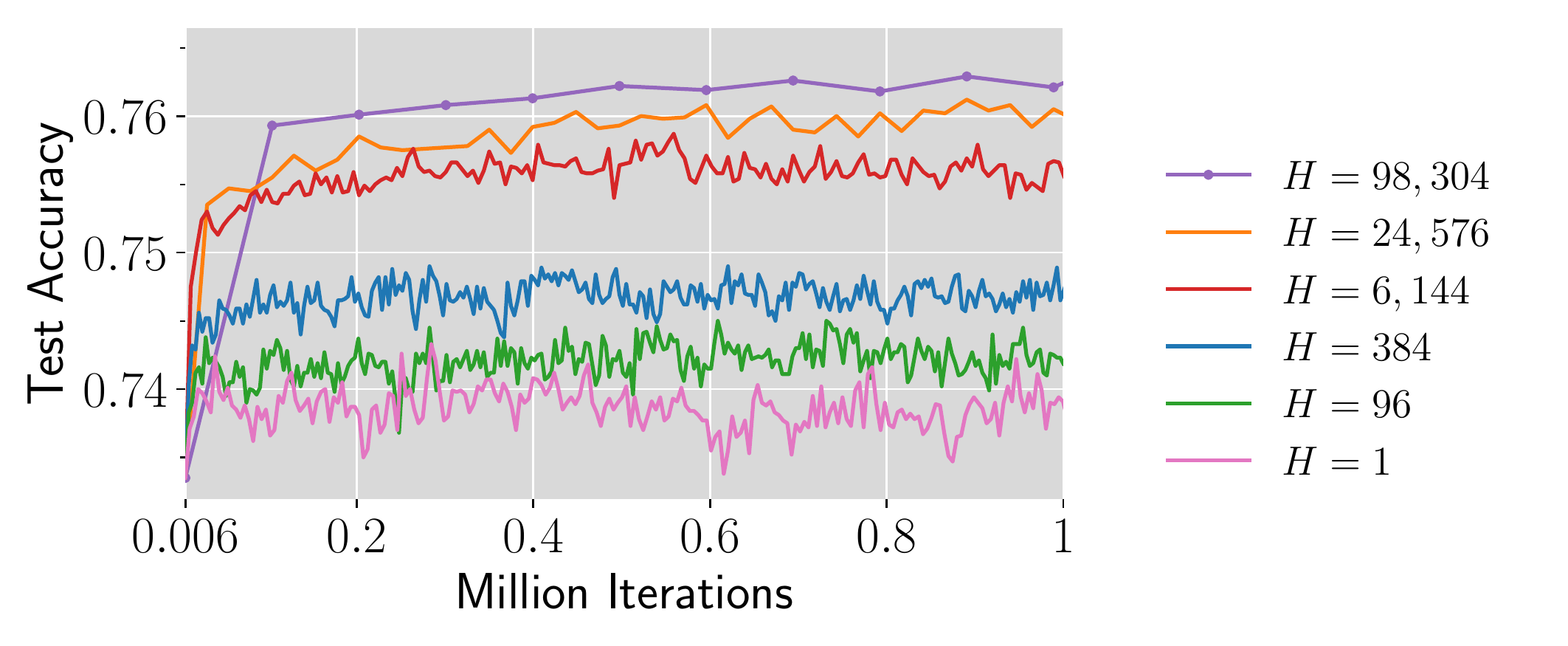}
        \label{fig:label noise vgg}
    }
    \vspace{-0.1in}
        \caption{Local SGD with label noise regularization on CIFAR-10 without data augmentation using $K=32$ ,$\Bloc=128$. A larger number of local steps indeed enables higher test accuracy. For both 
 architectures, we replace ReLU with Swish. See \Cref{sec:label noise detail} for training details.}
 \label{fig:label_noise}
\end{center}
\end{figure}
\begin{figure}[htbp]
\begin{center}

\includegraphics[width=0.42\textwidth]{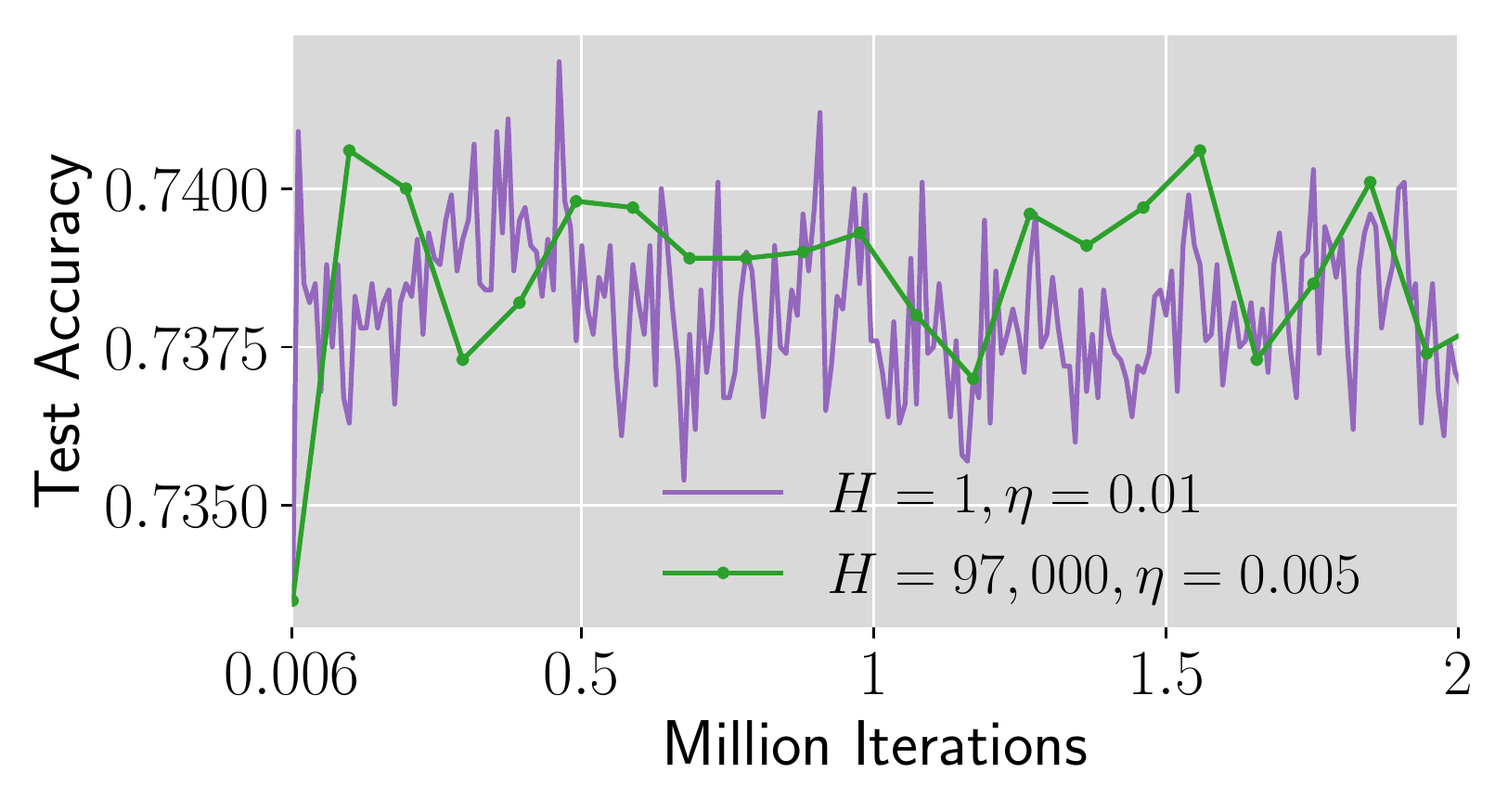}
   \vspace{-0.1in}
    \caption{Local SGD with label noise regularization on CIFAR-10 without data augmentation using $K=4$, $\Bloc=128$. SGD ($H=1$) indeed achieves comparable test accuracy as Local SGD with  a large $H$ when we scale up its learning rate to $\sqrt{K}$ times that of Local SGD. See \Cref{sec:label noise detail} for training details. }
    \label{fig:K=4 label noise}
\end{center}
\end{figure}

\section{Deriving the Slow SDE after Applying the LSR} \label{sec:change-K}

In this section, we derive the Slow SDEs for SGD and Local SGD after applying the LSR in
 \Cref{sec:effect of batch size}. The results are formally summarized in the following theorems.
 \begin{theorem}[Slow SDE for SGD after applying the LSR]\label{thm:lsr sgd} Let Assumptions~\ref{a:smooth} to
 \ref{a:compact} hold. Assume that we run SGD with learning rate $\eta'=\kappa \eta$ and the number of workers $K'=\kappa K$ for some constant $\kappa>0$.
 Let $T>0$ be a constant and $\vzeta(t)$ be the solution
 to \eqref{eq:sgd-zeta} with the initial condition $\vzeta(0)=\Phi(\vtheta_0)\in
 \Gamma$. Then for any $\cC^3$-smooth function $g(\vtheta)$,
 $\max_{0\leq s \leq \frac{\kappa T}{\eta'^2}}\left \lvert\E[g(\Phi(\vtheta_s)] - \E[g(\vzeta(s \eta'^2/\kappa)]\right\rvert=\ctO(\eta'^{0.25})$, where $\ctO(\cdot)$ hides log factors and constants
 that are independent of $\eta'$ but can depend on $g(\vtheta)$. 

 \end{theorem}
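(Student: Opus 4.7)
The theorem asserts that the LSR preserves the Slow SDE, so my plan is to reduce it to (i) the quantitative Slow SDE approximation for SGD (obtained as a by-product of \Cref{main thm: flow} by specializing to $K=1$), applied at the rescaled parameters $(\eta', B' = \kappa B)$, together with (ii) a deterministic time change of Brownian motion that rewrites the resulting SDE in terms of the original $B$. The overall error then inherits the $\ctO(\eta'^{0.25})$ rate, and the factor $\kappa$ only inflates the time horizon from $T$ to $\kappa T$ and rescales the evaluation time of $\vzeta$ by $1/\kappa$.

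Concretely, setting $K=1$ in \Cref{main thm: flow} kills the drift-II term (via the factor $K-1$ that multiplies $\widehat{\mPsi}$), leaving exactly the Slow SDE \eqref{eq:sgd-zeta}. Applied to SGD with learning rate $\eta'$ and batch size $B' = K' \Bloc = \kappa B$, it yields, for any fixed $T'' > 0$,
\[ \max_{0 \le s \le T''/\eta'^2} \left\lvert \E[g(\Phi(\vtheta_s))] - \E[g(\vzeta'(s\eta'^2))] \right\rvert = \ctO(\eta'^{0.25}), \]
where $\vzeta'$ solves \eqref{eq:sgd-zeta} with $B$ replaced by $B'$ and $\vzeta'(0) = \Phi(\vtheta_0)$.

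Next I would relate $\vzeta'$ back to $\vzeta$ by a deterministic time change that absorbs the factor $\kappa$ in the batch size. Define $\tilde{\vzeta}(u) := \vzeta'(\kappa u)$ and $\tilde{\vW}_u := \vW_{\kappa u}/\sqrt{\kappa}$; the latter is again a standard Wiener process by Brownian scaling, and direct substitution in the SDE for $\vzeta'$ gives
\[ \dd \tilde{\vzeta}(u) = P_{\tilde{\vzeta}}\left(\tfrac{1}{\sqrt{B}}\mSig_{\parallel}^{\sfrac{1}{2}}(\tilde{\vzeta})\, \dd \tilde{\vW}_u - \tfrac{1}{2B}\nabla^3 \cL(\tilde{\vzeta})[\widehat{\mSig}_{\Diamond}(\tilde{\vzeta})]\, \dd u\right). \]
This is exactly \eqref{eq:sgd-zeta} with the original $B$ and $\tilde{\vzeta}(0) = \Phi(\vtheta_0)$, so by weak uniqueness $\vzeta'(t) \stackrel{d}{=} \vzeta(t/\kappa)$, and in particular $\E[g(\vzeta'(s\eta'^2))] = \E[g(\vzeta(s\eta'^2/\kappa))]$ for every $s$.

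Choosing $T'' := \kappa T$ in the first display transforms the range into $0 \le s \le \kappa T/\eta'^2$ and the evaluation point into $s\eta'^2/\kappa$, matching the statement; the constant $\kappa$ is absorbed into the hidden factors of $\ctO$ since it is independent of $\eta'$. The only point worth verifying is that the projection operator $P_{\vzeta}$ commutes with the time change, but $P_{\vzeta}$ is defined purely through the gradient flow of $\cL$ (and thus through $\Phi$), is independent of $B$, and acts only on the differential form, so this is immediate. Beyond this, the argument is a routine reduction to \Cref{main thm: flow}, and I expect no serious obstacle.
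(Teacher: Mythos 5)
Your proof is correct and follows the same route as the paper: apply the $K=1$ Slow SDE approximation at the scaled parameters $(\eta', \kappa B)$ over the extended horizon $\kappa T$, then undo the factor $\kappa$ by a deterministic time change. The only difference is that you make the time-rescaling step rigorous via Brownian scaling and weak uniqueness, whereas the paper asserts it informally ("we can then rescale the time scaling"); your version is slightly more careful but not substantively different.
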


\begin{proof}
 Replacing $B$ with $\kappa B$ in the
original Slow SDE for Local SGD \eqref{eq:sgd-zeta} gives the following Slow SDE:
\begin{align}\label{eq:sgd-zeta-2}
    \dd \vzeta(t)&=P_{\vzeta}\Big(\underbrace{\tfrac{1}{\sqrt{\kappa B}} \mSig_{\parallel}^{\sfrac{1}{2}}(\vzeta)\dd \vW_t}_{\text{(a)\ diffusion}}
    \underbrace{-\tfrac{1}{2\kappa B} \nabla^3 \cL(\vzeta)[\widehat{\mSig}_{\Diamond}(\vzeta)] \dd t}_{\text{(b)\ drift-I}}
    \Big).
\end{align}
Note that the continuous time horizon for \eqref{eq:sgd-zeta-2} is $\kappa T$ instead of $T$ since after applying the LSR, SGD tracks a continuous interval of $\kappa^2 \eta^2$ 
per step
instead of $\eta^2$ while the total number of steps is scaled down by $\kappa$.
We can then rescale the time scaling to obtain \eqref{eq:sgd-zeta}
that holds for $T$.
\end{proof}

 \begin{theorem}[Slow SDE for Local SGD after applying the LSR]\label{thm:lsr localsgd} Let Assumptions~\ref{a:smooth} to
 \ref{a:compact} hold. Assume that we run Local SGD with learning rate $\eta'=\kappa \eta$, the number of workers $K'=\kappa K$, and the number of local steps $H'=\frac{\alpha}{\kappa\eta}$ for some constants $\alpha, \kappa>0$.
 Let $T>0$ be a constant and $\vzeta(t)$ be the solution
 to \eqref{eq:zeta-lsr2} with the initial condition $\vzeta(0)=\Phi(\bvths[0])\in
 \Gamma$. Then for any $\cC^3$-smooth function $g(\vtheta)$,
 $\max_{0\leq s \leq \frac{\kappa T}{H'\eta'^2}}\left \lvert\E[g(\Phi(\bvths)] - \E[g(\vzeta(s H'\eta'^2/\kappa)]\right\rvert=\ctO(\eta'^{0.25})$, where $\ctO(\cdot)$ hides log factors and constants
 that are independent of $\eta'$ but can depend on $g(\vtheta)$. 
 \begin{align}\label{eq:zeta-lsr2}
    \dd \vzeta(t)&=P_{\vzeta}\Big(\underbrace{\tfrac{1}{\sqrt{B}} \mSig_{\parallel}^{\sfrac{1}{2}}(\vzeta)\dd \vW_t}_{\text{(a)\ diffusion (unchanged)}}
    \underbrace{-\tfrac{1}{2B} \nabla^3 \cL(\vzeta)[\widehat{\mSig}_{\Diamond}(\vzeta)] \dd t}_{\text{(b)\ drift-I (unchanged)}}
    \underbrace{-\tfrac{\kappa K-1}{2B} \nabla^3 \cL(\vzeta)[\widehat{\mPsi}(\vzeta)] \dd t}_{\text{(c)\ drift-II (rescaled)}}
    \Big).
\end{align}

 \end{theorem}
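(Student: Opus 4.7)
The plan is to reduce Theorem~\ref{thm:lsr localsgd} to a direct application of Theorem~\ref{main thm: flow} followed by a deterministic time rescaling of the resulting Slow SDE. First, I would observe that when we run Local SGD with the rescaled hyperparameters $\eta' = \kappa\eta$, $K' = \kappa K$, and $H' = \alpha/(\kappa\eta)$, the product $\eta' H' = \alpha$ is preserved, and the rescaled global batch size is $B' = K'\Bloc = \kappa B$. Hence the hypotheses of Theorem~\ref{main thm: flow} are satisfied for this new configuration, with the horizon parameter $T$ in that theorem replaced by $T' := \kappa T$.

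Next, I would apply Theorem~\ref{main thm: flow} verbatim to the rescaled algorithm. This yields an SDE of the form~\eqref{eq:zeta} but with $B$ and $K$ replaced by $\kappa B$ and $\kappa K$:
\begin{align*}
\dd \tilde{\vzeta}(\tau) = P_{\tilde{\vzeta}}\Big(\tfrac{1}{\sqrt{\kappa B}} \mSig_{\parallel}^{\sfrac{1}{2}}(\tilde{\vzeta})\dd \vW_\tau
-\tfrac{1}{2\kappa B}\nabla^3 \cL(\tilde{\vzeta})[\widehat{\mSig}_{\Diamond}(\tilde{\vzeta})]\dd \tau
-\tfrac{\kappa K -1}{2\kappa B}\nabla^3 \cL(\tilde{\vzeta})[\widehat{\mPsi}(\tilde{\vzeta})]\dd \tau\Big),
\end{align*}
valid on $\tau\in[0,\kappa T]$, with the guarantee $\E[g(\Phi(\bvths))] = \E[g(\tilde{\vzeta}(sH'\eta'^2))] + \ctO((\eta')^{0.25})$ for $s \le \kappa T/(H'\eta'^2)$. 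Note that $\widehat{\mPsi}$ remains unchanged because it depends on the hyperparameters only through the product $\eta' H' = \alpha$.

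I would then perform a time rescaling $\vzeta(t) := \tilde{\vzeta}(\kappa t)$ on $t\in[0,T]$. By the standard Brownian scaling identity $\dd \vW_{\kappa t} \stackrel{d}{=} \sqrt{\kappa}\,\dd \vW'_t$, multiplying the diffusion coefficient by $\sqrt{\kappa}$ and the drift coefficients by $\kappa$ yields exactly
\begin{align*}
\dd \vzeta(t) = P_{\vzeta}\Big(\tfrac{1}{\sqrt{B}}\mSig_{\parallel}^{\sfrac{1}{2}}(\vzeta)\dd \vW_t - \tfrac{1}{2B}\nabla^3 \cL(\vzeta)[\widehat{\mSig}_{\Diamond}(\vzeta)]\dd t - \tfrac{\kappa K - 1}{2B}\nabla^3 \cL(\vzeta)[\widehat{\mPsi}(\vzeta)]\dd t\Big),
\end{align*}
which is~\eqref{eq:zeta-lsr2}. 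Substituting $\tilde{\vzeta}(sH'\eta'^2) = \vzeta(sH'\eta'^2/\kappa)$ into the weak approximation bound from the previous step yields the stated conclusion, and the error $\ctO(\eta^{0.25}) = \ctO((\eta')^{0.25})$ absorbs the $\kappa$-dependent constant.

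Since this is essentially a rescaling argument, there is no genuine obstacle once Theorem~\ref{main thm: flow} is in hand; the only points requiring care are (i) verifying that the smoothness/compactness assumptions (Assumptions~\ref{a:smooth}--\ref{a:compact}) are intrinsic to the loss and manifold and are therefore unaffected by the hyperparameter change, (ii) tracking the horizon rescaling $T \mapsto \kappa T$ so that the index range $s \le \kappa T/(H'\eta'^2)$ comes out correctly, and (iii) checking that the projection operator $P_{\vzeta}$ commutes with the Brownian time change in the intended way (which follows because $P_{\vzeta}$ acts linearly on differential forms and the Itô correction $\tfrac{1}{2}\partial^2\Phi[\mA\mA^\top]\dd t$ scales by $\kappa$, matching the drift rescaling).
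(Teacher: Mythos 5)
Your proposal is correct and matches the paper's own proof: both apply Theorem~\ref{main thm: flow} to the rescaled hyperparameters (which yields the SDE with $B\mapsto\kappa B$, $K\mapsto\kappa K$ on the horizon $\kappa T$) and then perform the deterministic time rescaling $t\mapsto\kappa t$ to recover~\eqref{eq:zeta-lsr2} on horizon $T$. You supply a bit more detail than the paper does on the Brownian scaling and on why $P_{\vzeta}$ commutes with the time change (the Itô correction scales by $\kappa$ because $\mA\mA^\top\mapsto\kappa\mA\mA^\top$), but the argument is the same.
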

 \begin{proof}
 Replacing $B$ with $\kappa B$ in the
original Slow SDE for Local SGD \eqref{eq:zeta} gives the following Slow SDE:
\begin{align}\label{eq:zeta-lsr-raw}
    \dd \vzeta(t)&=P_{\vzeta}\Big(\underbrace{\tfrac{1}{\sqrt{\kappa B}} \mSig_{\parallel}^{\sfrac{1}{2}}(\vzeta)\dd \vW_t}_{\text{(a)\ diffusion}}
    \underbrace{-\tfrac{1}{2 \kappa B} \nabla^3 \cL(\vzeta)[\widehat{\mSig}_{\Diamond}(\vzeta)] \dd t}_{\text{(b)\ drift-I}}
    \underbrace{-\tfrac{\kappa K-1}{2 \kappa B} \nabla^3 \cL(\vzeta)[\widehat{\mPsi}(\vzeta)] \dd t}_{\text{(c)\ drift-II}}
    \Big).
\end{align}
Note that the continuous time horizon for \eqref{eq:zeta-lsr-raw} is $\kappa T$ instead of $T$ since after applying the LSR, Local SGD tracks a continuous interval of $\kappa^2 \eta^2$ 
per step
instead of $\eta^2$ while the total number of steps is scaled down by $\kappa$.
We can then rescale the time scaling to obtain \eqref{eq:zeta-lsr2}
that holds for $T$.
\end{proof}

%it can be shown by simple calculation
% that the LSR
% changes
% the Slow SDE approximation~\eqref{eq:zeta} for a continuous time horizon $T$
% to the following SDE that approximates a time horizon of $\kappa T$.
% Here the change of the time horizon is because,
% after applying the LSR,
% Local SGD tracks a continuous interval of $\kappa^2 \eta^2$ 
% per step
% instead of $\eta^2$, and the total number of steps is scaled down by $\kappa$.

\newpage
\section{Proof of \Cref{thm:small thm}}\label{sec:small thm proof}
This section presents the proof for \Cref{thm:small thm}. First, we introduce some notations 
that will be used throughout this section. 
For the sequence of Local SGD iterates $\{\vths_{k, t}: k\in [K], 0\leq t \leq H,  s\geq 0\}$, 
we introduce an auxiliary sequence $\{\hvu_{t}\}_{t\in \N}$, which consists of GD iterates from $\bvths[0]$:
\[
    \hvu_0 = \bvths[0], \qquad \hvu_{t+1} \gets \hvu_t -\eta \nabla \cL(\hvu_t).
\] 
For convenience, let $\hvus_t:=\hvu_{sH+t}$ and $\vz_{k,sH+t}
:=\vzs_{k,t}$. We will use $\hvus_t$ and $\hvu_{sH+t}$, $\vzs_{k,t}$ and $\vz_{k,sH+t}$ interchangeably.
Recall that we have assumed that $\cL$ is $\cC^3$-smooth with bounded second and third order derivatives. 
Let $\nu_2:=\sup_{\vtheta \in \R^d}\normtwo{\nabla^2 \cL(\vtheta)}$ and $\nu_3:= \sup_{\vtheta \in \R^d}\normtwo{\nabla^3 \cL(\vtheta)}$.  Since $\nabla \ell(\vtheta;\vzeta)$ is bounded, the gradient noise $\vzs_{k,t}$ is also bounded. We denote by $\sigmax$ an upper bound such that $\normtwo{\vzs_{k,t}}\leq \sigmax$ holds for all $s, k, t$.

To prove \Cref{thm:small thm}, 
we will show that both Local SGD iterates $\bvths$ and SGD iterates $\vw_{sH}$ 
 track GD iterates $\hvu_{sH}$ closely with high probability. 
For each client $k$, define the following sequence $\{\hvZ_{k,t}:t\geq 0\}$, which will be used in the proof for bounding the overall effect of noise.
\begin{align*}
    \hvZ_{k,t}&=
    \sum_{\tau=0}^{t-1}\left[\prod_{l=\tau+1}^{t-1}(\mI - \eta \nabla^2 \cL(\hvu_{l}))\right]
    \vz_{k,\tau},
    \qquad \hvZ_{k,0}=\vzero, \qquad \forall k \in [K].
\end{align*}
The following lemma shows that $\hvZ_{k,t}$ is concentrated around the origin.
\begin{lemma}[Concentration property of $\{\hvZ_{k,t}\}$]\label{lemma: concen hvZ}
With probability at least $1-\delta$, the following holds simultaneously
for all $k\in[K]$, $0\leq t < \floor{\frac{T}{\eta}}$:
$$\normtwo{\hvZ_{k, t}} \leq \hC_1\sigmax\sqrt{\frac{2T}{\eta}
    \log \frac{2 T K}{\delta\eta}},$$
where $\hC_1: =\exp(T \nu_2) $.
\end{lemma}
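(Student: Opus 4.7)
My plan is to interpret $\hvZ_{k,t}$ as a Hilbert-space-valued martingale sum whose increments have uniformly bounded norms, apply a vector Azuma--Hoeffding (Pinelis) inequality for each fixed $(k,t)$, and then conclude by a union bound over the at most $KT/\eta$ such pairs.

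First I would observe that since $\{\hvu_l\}$ is the deterministic GD trajectory starting from $\bvths[0]$, the matrix products
\[
\mM_{\tau,t} \;:=\; \prod_{l=\tau+1}^{t-1}\bigl(\mI - \eta \nabla^2\cL(\hvu_l)\bigr)
\]
are deterministic. Using $\normtwo{\mI - \eta\nabla^2\cL(\hvu_l)} \le 1+\eta\nu_2$ and $t-\tau-1 \le t \le T/\eta$, the submultiplicativity of the operator norm gives $\normtwo{\mM_{\tau,t}} \le (1+\eta\nu_2)^{T/\eta} \le e^{T\nu_2} = \hC_1$. Rewriting $\hvZ_{k,t} = \sum_{\tau=0}^{t-1}\mM_{\tau,t}\vz_{k,\tau}$, each summand is therefore bounded in norm by $\hC_1\sigmax$.

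Second, let $\mathcal{F}_\tau$ denote the sigma-algebra generated by all noises $\{\vz_{k',\tau'}:\tau'<\tau,\,k'\in[K]\}$ up to local step $\tau$. Since $\vz_{k,\tau}$ is the centered stochastic-gradient noise at $\vths_{k,\tau}$, we have $\E[\vz_{k,\tau}\mid \mathcal{F}_\tau]=\vzero$, so $\{\mM_{\tau,t}\vz_{k,\tau}\}_{\tau=0}^{t-1}$ is a martingale difference sequence (the deterministic matrices $\mM_{\tau,t}$ preserve this property). Then I would invoke the Pinelis inequality for Hilbert-space-valued martingales: for any $\epsilon>0$,
\[
\Pr\!\left[\normtwo{\hvZ_{k,t}} \ge \epsilon\right]
\;\le\; 2\exp\!\left(-\frac{\epsilon^2}{2\,t\,(\hC_1\sigmax)^2}\right)
\;\le\; 2\exp\!\left(-\frac{\epsilon^2\,\eta}{2T(\hC_1\sigmax)^2}\right).
\]

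Finally, I would choose $\epsilon = \hC_1\sigmax\sqrt{(2T/\eta)\log(2TK/(\delta\eta))}$ so that each tail probability is at most $\delta\eta/(TK)$, then union bound over the $K\cdot\lfloor T/\eta\rfloor$ pairs $(k,t)$ to conclude.

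The only step that requires any care is recognizing that the reference trajectory in the definition of $\hvZ_{k,t}$ is the deterministic gradient flow $\hvu_l$ and not the stochastic Local SGD iterate; this is what makes $\mM_{\tau,t}$ deterministic and places us in a clean martingale-difference setup. Everything else is a standard application of a vector concentration inequality plus a union bound, so I expect no substantive obstacles.
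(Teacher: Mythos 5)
Your proof is correct and takes essentially the same approach as the paper: bound the deterministic matrix products by $\hC_1 = e^{T\nu_2}$, recognize $\hvZ_{k,t}$ as a sum of bounded martingale differences (the paper writes this explicitly as the partial-sum martingale $\{\hvZ_{k,t,t'}\}_{t'}$), apply a vector Azuma--Hoeffding bound, and union bound over the $K\lfloor T/\eta\rfloor$ pairs. If anything your version is slightly more precise in explicitly citing the Pinelis inequality for Hilbert-space-valued martingales where the paper simply invokes ``Azuma--Hoeffding.''
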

\begin{proof}
For each $\hvZ_{k,t}$, construct a sequence $\{\hvZ_{k,t,t'}\}_{t'=0}^{t}$:
$$\hvZ_{k,t,t'}:=\sum_{\tau=0}^{t'-1}
\left(\prod_{l=\tau+1}^{t-1}(\mI-\eta \nabla^2 \cL(\hvu_{l}))\right)\vzs_{k, \tau},
\qquad \tvZs_{k, t,0}=\vzero.$$
Since $\normtwo{\nabla^2 \cL(\hvu_l)}\leq \nu_2$ for all $l\geq0$,
the following holds for all $0\leq\tau< t-1$ and $0<t<\floor{\frac{T}{\eta}}$:
\[
    \lnormtwo{\prod_{l=\tau+1}^{t-1}
    (\mI - \eta \nabla^2 \cL(\hvu_{l}))}
    \leq (1+\rho_2\eta)^t\leq\exp(T \nu_2)=\hC_1.
\]
So $\{\hvZ_{k, t,t'}\}_{t'=0}^t$ is a martingale with
 $\normtwo{\hvZ_{k,t, t'}-\hvZ_{k, t, t'-1}}\leq \hC_1 \sigma_{\max}$. 
Since $\hvZ_{k,t}=\hvZ_{k,t,t}$, by Azuma-Hoeffding's inequality, 
\begin{align*}
    \PP(\normtwo{\hvZ_{k,t}}\geq \epsilon') 
    \leq 2\exp{\left(\frac{-\epsilon'^2}{2t \left(\hC_1\sigma_{\max}\right)^2}\right)}.
\end{align*}
Taking union bound on all $k\in[K]$ and 
$0\leq t\leq \floor{\frac{T}{\eta}}$, we can conclude that
 with probability at least $1-\delta$, 
\begin{align*}
    \normtwo{\hvZ_{k, t}} \leq \hC_1\sigmax\sqrt{\frac{2T}{\eta}
    \log \frac{2 T K}{\delta\eta}}, \qquad \forall 0\leq t < \llfloor{\frac{T}{\eta}}, k\in[K].
\end{align*}
\end{proof}
The following lemma states that, with high probability, Local SGD iterates $\vths_{k,t}$ and $\bvths$ closely
track the gradient descent iterates $\hvu_{sH}$ for $\floor{\frac{T}{H\eta}}$
rounds. 
\begin{lemma}\label{lemma:small lemma local}
For $\delta=\cO(\poly(\eta))$,  the following inequalities hold with probability at least $1-\delta$:
\begin{align*}
    \normtwo{\vths_{k,t}-\hvu_{sH+t}}\leq \hC_3 \sqrt{\eta \log \frac{1}{\eta\delta}},\qquad \forall k\in[K], 0\leq s < \llfloor{\frac{T}{H\eta}}, 0\leq t \leq H,
\end{align*}
and
\begin{align*}
    \normtwo{\bvths-\hvu_{sH}}\leq \hC_3\sqrt{\eta \log \frac{1}{\eta \delta}}, \qquad \forall 0 \leq s \leq\llfloor{ \frac{T}{H\eta}},
\end{align*}
where $\hC_3$ is a constant independent of $\eta$ and $H$.
\end{lemma}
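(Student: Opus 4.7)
My plan is to control the deviation $\vdelta^{(s)}_{k,t} := \vths_{k,t} - \hvus_t$ between the Local SGD iterates and the GD trajectory through a Taylor expansion of the gradient combined with the martingale concentration in \Cref{lemma: concen hvZ}. Taylor-expanding $\nabla \cL(\vths_{k,t})$ around $\hvus_t$ yields the within-round recursion
\begin{align*}
\vdelta^{(s)}_{k,t+1} = (\mI - \eta \nabla^2 \cL(\hvus_t))\vdelta^{(s)}_{k,t} - \eta \vzs_{k,t} + \ve^{(s)}_{k,t},
\end{align*}
where $\normtwo{\ve^{(s)}_{k,t}} \leq \tfrac{\eta \nu_3}{2}\normtwo{\vdelta^{(s)}_{k,t}}^2$, while the averaging step gives $\vdelta^{(s+1)}_{k,0} = \tfrac{1}{K}\sum_{k'}\vdelta^{(s)}_{k',H}$. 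Unrolling these two recursions across all rounds and all workers, the noise $\vz_{k',\tau}$ at any global time $\tau < sH$ enters $\bvths - \hvu_{sH}$ weighted by exactly the matrix product $\prod_{l=\tau+1}^{sH-1}(\mI - \eta \nabla^2 \cL(\hvu_l))$. These contributions therefore telescope precisely into the averaged noise process $\bar\hvZ_{sH} := \tfrac{1}{K}\sum_{k'}\hvZ_{k',sH}$, yielding
\begin{align*}
\bvths - \hvu_{sH} = -\eta \bar\hvZ_{sH} + \vr_s,
\end{align*}
with $\vr_s$ a remainder collecting propagated contributions of the form $[\text{propagator}]\cdot \ve^{(s')}_{k',\tau}$.

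On the high-probability event of \Cref{lemma: concen hvZ}, $\eta\normtwo{\bar\hvZ_{sH}} = \cO(\sqrt{\eta\log(1/(\eta\delta))})$, which already attains the target scale. For the remainder $\vr_s$, the crucial observation is that each matrix factor $\mI - \eta\nabla^2\cL(\hvu_l)$ has spectral norm at most $1+\eta\nu_2$, so the total propagator from any step $s'H+\tau$ up to step $sH$ is bounded by $(1+\eta\nu_2)^{T/\eta}\leq \hC_1$, namely a single $e^{T\nu_2}$ factor rather than a blow-up $\hC_1^{s}$. Under the inductive hypothesis $\normtwo{\vdelta^{(s')}_{k',\tau}} \leq C_*\sqrt{\eta\log(1/(\eta\delta))}$, each error term has norm $\cO(\eta \cdot C_*^2\eta\log(1/(\eta\delta)))$, and there are at most $T/\eta$ of them, so $\normtwo{\vr_s} = \cO(\eta\log(1/(\eta\delta)))$, strictly lower order than $\sqrt{\eta\log(1/(\eta\delta))}$.

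Induction on the round index $s$ then closes, giving the global-iterate bound $\normtwo{\bvths - \hvu_{sH}} \leq \hC_3\sqrt{\eta\log(1/(\eta\delta))}$ for $\hC_3$ large enough. The per-worker bound within round $s$ follows by writing
\begin{align*}
\vdelta^{(s)}_{k,t} = -\eta\hvZ_{k,sH+t} - \eta A^{(s)}_{0,t}\left(\bar\hvZ_{sH}-\hvZ_{k,sH}\right) + [\text{propagated errors}],
\end{align*}
where $A^{(s)}_{0,t} := \prod_{l=0}^{t-1}(\mI-\eta\nabla^2\cL(\hvus_l))$; this uses the identity $\hvZ^{(s)}_{k,t} = \hvZ_{k,sH+t} - A^{(s)}_{0,t}\hvZ_{k,sH}$ that splits the full martingale into before-round and within-round parts. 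Both noise terms are $\cO(\sqrt{\eta\log(1/(\eta\delta))})$ by two uses of \Cref{lemma: concen hvZ}, and the error term is handled as above.

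The main obstacle will be the careful bookkeeping needed to verify the cross-round telescoping, in particular that the per-worker noise contributions—which are not obviously centered after the model-averaging step—collapse with exactly the matrix weights appearing in the definition of $\hvZ$. Once this accounting is done, the quadratic error terms become routinely subleading, and the two bounds hold simultaneously for all $k\in[K]$, all $0 \le s < \floor{T/(H\eta)}$, and all $0\le t \le H$ on the event from \Cref{lemma: concen hvZ}, with probability at least $1-\delta$.
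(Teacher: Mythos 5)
Your proposal is correct and follows essentially the same route as the paper: define the deviation $\vdelta^{(s)}_{k,t}=\vths_{k,t}-\hvu_{sH+t}$, unroll the Taylor-expanded recursion, observe that the noise contributions telescope exactly into $\hvZ_{k,\cdot}$ (with the paper writing $\cT = \hvZ_{k,sH+t} - \bigl[\prod_{l=sH}^{sH+t-1}(\mI-\eta\nabla^2\cL(\hvu_l))\bigr]\hvZ_{k,sH}$, which is your identity for $A^{(s)}_{0,t}$), bound the uniform propagator norm by $\hC_1 = e^{T\nu_2}$ to avoid blowup, and close by induction on the high-probability event of Lemma F.1. The per-worker bound in your proposal is obtained by precisely the substitution of the averaged-iterate formula into the within-round unrolling that the paper performs, so the two arguments coincide.
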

\begin{proof}
Let $\hvDeltas_{k,t}:=\vths_{k,t}-\hvus_t$ and
$\bvDeltas:=\bvths-\hvus_0$ be the differences between the Local SGD and GD iterates. 
According to the update rule for $\vths_{k, t}$ and $\hvus_t$, 
\begin{align}
    \vths_{k, t+1}&=\vths_{k, t}-\eta \nabla \cL(\vths_{k, t})-\eta \vzs_{k, t}\label{eq:theta upd}\\
    \hvus_{t+1}&=\hvus_t-\eta \nabla \cL(\hvus_t)\label{eq:u upd}.
\end{align}
Subtracting \eqref{eq:theta upd} by \eqref{eq:u upd} gives
\begin{align}
    \hvDeltas_{k, t+1}&=\hvDeltas_{k, t}-\eta (\nabla \cL(\vths_{k, t})-
    \nabla \cL(\hvus_t))-\eta \vzs_{k, t}\notag\\
    &=(\mI - \eta \nabla^2 \cL(\hvus_t))\hvDeltas_{k, t}
    -\eta \vzs_{k, t}+\eta \hvvs_{k, t}, \label{eq:expand2}
\end{align}
where $\hvvs_{k, t}$ is a remainder term with norm $\normtwo{\hvvs_{k, t}}\leq \frac{\nu_3}{2} \normtwo{\hvDeltas_{k, t}}^2$.
For the $s$-th round of Local SGD, we can apply \eqref{eq:expand2} $t$ times to obtain the following:
\begin{align}
\begin{aligned}\label{eq:recur hdelta}
  \hvDeltas_{k, t}
 &=\left[\prod_{\tau =0}^{t-1}(\mI - \eta \nabla^2 \cL(\hvus_{\tau}))\right]
 \hvDeltas_{k, 0}
 -\eta \underbrace{\sum_{\tau=0}^{t-1}
 \left[\prod_{l=\tau+1}^{t-1}(\mI-\eta \nabla^2 \cL(\hvus_{l}))\right]
 \vzs_{k, \tau}}_{\cT}\\
 & \quad 
 + \eta \sum_{\tau=0}^{t-1}\prod_{l=\tau+1}^{t-1}
 (\mI-\eta \nabla^2 \cL(\hvus_{l}))\hvvs_{k, \tau}.
\end{aligned}
\end{align}
Here, $\cT$ can be expressed in the following form:
\begin{align*}
    \cT=\hvZ_{k,sH+t}-\left[\prod_{l=sH}^{sH+t-1}
    (\mI - \eta \nabla^2 \cL(\hvu_l))\right]\hvZ_{k,sH}.
\end{align*}
    Substituting in $t=H$ and taking the average, we derive the following
recursion:
\begin{align}
\bvDeltas[s+1]&=\frac{1}{K}\sum_{k\in [K]}\hvDeltas_{k, H}\notag\\
&=\left[\prod_{\tau =0}^{H-1}(\mI 
- \eta \nabla^2 \cL(\hvus_{\tau}))\right]
\bvDeltas \notag \\
& \quad -\frac{\eta}{K}\sum_{k\in[K]} 
\hvZ_{k,(s+1)H}+\frac{\eta}{K}\sum_{k\in[K]} \left[\prod_{l=sH}^{(s+1)H-1} (\mI - \eta \nabla^2 \cL(\hvu_l))\right]\hvZ_{k,sH}
\notag\\
& \quad 
+ \frac{\eta}K\sum_{k\in[K]} \sum_{\tau=0}^{H-1}\prod_{l=\tau+1}^{H-1}
(\mI-\eta \nabla^2 \cL(\hvus_{l}))\hvvs_{k, \tau}.\label{eq:recur delta}
\end{align}
Applying~\eqref{eq:recur delta} $s$ times yields
\begin{align}
    \bvDeltas&=-\frac{\eta}{K}\sum_{k\in[K]}
   \hvZ_{k,sH}+\frac{\eta}{K}\sum_{r=0}^{s-1} \sum_{\tau=0}^{H-1}\sum_{k\in[K]} 
   \left[\prod_{l=rH+\tau+1}^{sH}
    (\mI-\eta \nabla^2 \cL(\hvu_{l}))\right]\hvvs[r]_{k, \tau}\label{eq:deltas}.
\end{align}
Substitute \eqref{eq:deltas} into \eqref{eq:recur hdelta}
and we have
\begin{align*}
    \hvDeltas_{k,t}&=-\frac{\eta}{K}\sum_{k'\in[K]}
    \hvZ_{k',sH}-\eta \hvZ_{k,sH+t}+\eta\left[\prod_{l=sH}^{sH+t-1}(\mI-
    \eta \nabla^2\cL(\hvu_l))\right]\hvZ_{k,sH}\\
    &\quad+\frac{\eta}{K}\sum_{r=0}^{s-1} \sum_{\tau=0}^{H-1}\sum_{k'\in[K]}
     \left[\prod_{l=rH+\tau+1}^{sH+t-1}
     (\mI-\eta \nabla^2 \cL(\hvu_{l}))\right]\hvvs[r]_{k', \tau}\\
     &\quad + \eta \sum_{\tau=0}^{t-1}
     \left[\prod_{l=sH+\tau+1}^{sH+t-1}(\mI-\eta \nabla^2 \cL(\hvu_l))\right]
    \hvvs_{k,\tau}.
    \end{align*}
By Cauchy-Schwartz inequality and  triangle inequality,  we have
\begin{align}
    \begin{aligned}
    \normtwo{\hvDeltas_{k,t}}&\leq \frac{\eta}{K}
    \left(\sum_{k'\in [K]}\normtwo{\hvZ_{k',sH}}\right)+\eta \normtwo{\hvZ_{k,sH+t}}
    +\eta \hC_1\normtwo{\hvZ_{k, sH}}\\
    &\quad + \frac{\eta \hC_1\nu_3}{2 K}\sum_{r=0}^{s-1}\sum_{\tau=0}^{H-1}\sum_{k'\in [K]}\normtwo{{\hvDeltas[r]} _{k', \tau}}^2
    +\frac{\eta\hC_1\nu_3}{2}\sum_{\tau=0}^{t-1}\normtwo{{\hvDeltas[r]}_{k, \tau}}^2,
    \end{aligned}\label{eq:hvdelta induction}
\end{align}
where $\hC_1=\exp(\nu_2T)$.

Below we prove by induction that for $\delta=\cO(\poly(\eta))$, if 
\begin{align}
    \normtwo{\hvZ_{k,t}}\leq \hC_1
 \sigmax\sqrt{\frac{2T}{\eta}\log \frac{2TK}{\eta\delta}}, \quad \forall 0\leq t < \llfloor{\frac{T}{\eta}}, k\in[K],\label{eq:induc condi}
\end{align}
then there exists a constant $\hC_2$
 such that for all $k\in[K], 0\leq s < \floor{\frac{T}{\eta H}}$ and $  0\leq t \leq H$, 
 \begin{align}
    \normtwo{\hvDeltas_{k,t}}\leq \hC_2 \sqrt{\eta \log \frac{2TK}{\eta \delta}}. \label{eq: prop hdelta}
 \end{align}
 First, for all $k\in [K]$, $\normtwo{{\hvDeltas[0]}_{k,0}}=0$ and hence~\eqref{eq: prop hdelta} 
 holds. Assuming that \eqref{eq: prop hdelta} holds for all ${\hvDeltas[r]}_{k',\tau}$ where $k'\in[K], 0\leq r < s$, $0\leq \tau \leq H$
 and $r=s$, $0\leq \tau<t$, then by~\eqref{eq:hvdelta induction}, for all $k\in [K]$, the following holds:
 \begin{align*}
    \normtwo{\hvDeltas_{k,t}}\leq 3\hC_1^2\sigmax \sqrt{2T\eta \log \frac{2TK}{\eta\delta}}+
    \hC_1\hC_2^2T\eta\nu_3 \log \frac{2TK}{\eta\delta}.
 \end{align*}
 Let $\hC_2 \geq 6\hC_1^2\sigmax \sqrt{2T}$. Then for sufficiently 
 small $\eta$, \eqref{eq: prop hdelta} holds. By \Cref{lemma: concen hvZ}, 
 \eqref{eq:induc condi} holds with probability at least $1-\delta$. Furthermore,
 notice that $\bvths-\hvu_{sH}=\frac{1}{K}\sum_{k\in [K]}{\hvDeltas[s-1]}_{k,H}$. 
 Hence we have the lemma.
\end{proof}
The iterates of standard SGD can be viewed as the local iterates on
a single client with the number of local steps $\floor{\frac{T}{\eta}}$.
Therefore, we can directly apply
 \Cref{lemma:small lemma local} and obtain the
following lemma about the SGD iterates $\vw_t$.
\begin{corollary}\label{coro: small lemma SGD}
    For $\delta=\cO(\poly(\eta))$, the following holds with probability at least $1-\delta$:
    \begin{align*}
        \normtwo{\vw_{sH}-\hvu_{sH}}\leq \hC_3\sqrt{\eta\log\frac{1}{\eta\delta}}, \qquad \forall 0 \leq s \leq \frac{T}{H\eta},
    \end{align*}
    where $\hC_3$ is the same constant as in \Cref{lemma:small lemma local}.
\end{corollary}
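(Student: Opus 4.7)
The plan is to derive the corollary as a direct specialization of Lemma C.2, using the observation already flagged in the preceding sentence: a single-worker run of Local SGD is identical to standard SGD. Concretely, I would instantiate Lemma C.2 with $K=1$ and with the number of local steps per round set to $H' = \llfloor{T/\eta}$, treating the entire SGD trajectory as one long ``round'' of a one-worker Local SGD. Since averaging over a single worker is a no-op, the aggregated iterate $\bvths[s]$ for $s=0$ coincides step-by-step with the SGD iterate, so $\vths[0]_{1,t} = \vw_t$ for every $0 \le t \le \llfloor{T/\eta}$, starting from $\bvths[0] = \vw_0 = \vths[0]_{1,0}$.

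With this identification, the first inequality of Lemma C.2 gives, with probability at least $1-\delta$,
\[
\normtwo{\vw_{t} - \hvu_{t}} = \normtwo{\vths[0]_{1,t} - \hvu_{0\cdot H + t}} \le \hC_3 \sqrt{\eta \log \tfrac{1}{\eta \delta}}, \qquad \forall\, 0 \le t \le \llfloor{T/\eta}.
\]
Restricting attention to indices of the form $t = sH$ with $0 \le s \le \llfloor{T/(H\eta)}$ yields exactly the claim. The constant $\hC_3$ is unchanged because the derivation of $\hC_3$ in Lemma C.2 only depends on the global smoothness constants $\nu_2,\nu_3$, the noise bound $\sigmax$, and the horizon $T$, none of which involve $K$ or the internal choice of $H$.

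The only step requiring a small verification is that the hypothesis $\delta = \cO(\poly(\eta))$ inherited from Lemma C.2 suffices when we plug in $K=1$ and $H' = \llfloor{T/\eta}$, i.e.\ that the union bound in Lemma C.1 over $K\cdot H' = \llfloor{T/\eta}$ time steps still produces only $\log$-type factors. Since the concentration bound on $\hvZ_{k,t}$ scales logarithmically in $TK/(\eta\delta)$, replacing $K$ by $1$ can only shrink this factor, so the induction used to prove \eqref{eq: prop hdelta} goes through unchanged under the same polynomial-in-$\eta$ requirement on $\delta$. I expect no genuine obstacle here: essentially the entire technical content — the martingale concentration of $\hvZ_{k,t}$, the recursive unrolling of $\hvDeltas_{k,t}$, and the bootstrap argument controlling the quadratic remainder $\hvvs_{k,t}$ — is already done inside Lemma C.2, and the corollary amounts to reading off the $K=1$ specialization.
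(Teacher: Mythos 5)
Your proposal matches the paper's own (one-line) argument exactly: the corollary is derived by viewing SGD as a single-worker, single-round instance of Local SGD with $K=1$ and local horizon $\floor{T/\eta}$, and then reading off the $s=0$ case of Lemma~\ref{lemma:small lemma local} at the sub-sampled indices $t=sH$. Your verification that the constants and the $\delta=\cO(\poly(\eta))$ requirement survive the specialization is correct and fills in exactly the (minor) bookkeeping the paper leaves implicit.
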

Applying \Cref{lemma:small lemma local} and \Cref{coro: small lemma SGD} and taking the union bound, we have \Cref{thm:small thm}.

\section{Proof Outline of Main Theorems}\label{sec: proof outline}
% This section provides the proof outline of \Cref{main thm: flow}. The proof details are deferred to \Cref{sec: proof details}. We first introduce additional notations that will be used throughout \Cref{sec: proof outline} and \Cref{sec: proof details}.
%\subsection{Additional Notations and Definitions}

%\subsection{Construction of working zones}
% \begin{assumption}[Assumption 3.2 in \cite{happens}]
% $U$ is an open neighborhood
% \end{assumption}

 We adopt the general framework proposed by \citet{li2019stochastic} to bound the closeness of discrete algorithms and SDE solutions via the method of moments. However, their framework is not directly applicable to our case since they
 provide approximation guarantees for discrete algorithms with learning rate $\eta$ for $\cO(\eta^{-1})$ steps while we want to capture Local SGD for $\cO(\eta^{-2})$ steps. To overcome this difficulty, we treat $\Rg:=\floor{\frac{1}{\alpha\eta^{\beta}}}$ rounds as a ``giant step'' of Local SGD with an ``effective'' learning rate $\eta^{1-\beta}$, where $\beta$ is a constant in $(0, 1)$, and derive the recursive formulas to compute the moments for the change in every step, every round, and every $\Rg$ rounds. The formulation of the recursions requires a detailed analysis of the limiting dynamics of the iterate and careful control of approximation errors. 

 The dynamics of the iterate can be divided into two phases: the approaching phase (Phase 1) and the drift phase (Phase 2). The approaching phase only lasts for $\cO(\log\frac{1}{\eta})$ rounds, during which the iterate is quickly driven to the minimizer manifold by the negative gradient and ends up within only $\ctO(\sqrt{\eta})$ from $\Gamma$ (see \Cref{subsec: phase 1}). After that, the iterate enters the drifting phase and moves in the tangent space of $\Gamma$ while staying close  to $\Gamma$ (see \Cref{subsec:phase2}). The closeness of the iterates (local and global) and $\Gamma$ is summarized in the following theorem. 
 \begin{theorem}[Closeness of the iterates and $\Gamma$]\label{thm:closeness}
 For $\delta=\cO(\poly(\eta))$, with probability at least $1-\delta$, for all $\cO(\log\frac{1}{\eta})\leq s\leq\floor{T/(H\eta^2)} $,
 \begin{align*}
    \Phi(\bvths)\in \Gamma, \qquad \normtwo{\bvths-\Phi(\bvths)}=\cO\left(\sqrt{\eta \log \frac{1}{\eta\delta}}\right).
 \end{align*}
 Also, for all $\cO(\log\frac{1}{\eta})\leq s<\floor{T/(H\eta^2)} $, $k \in [K]$ and $0 \leq t \leq H$, 
 \begin{align*}
     \quad \normtwo{\vths_{k,t}-\Phi(\bvths)}=\cO\left(\sqrt{\eta \log \frac{1}{\eta\delta}}\right).
 \end{align*}
 Here, $\cO(\cdot)$ hides constants independent of $\eta$ and $\delta$.
 \end{theorem}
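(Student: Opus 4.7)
The plan is to decompose the analysis into the two phases alluded to after the theorem statement and prove the normal-direction bound inductively on rounds $s$. Throughout, I would work in a tubular neighborhood of $\Gamma$ where the normal bundle is well-defined and $\Phi$ is smooth (which exists by \Cref{a:Gamma} and \Cref{a:compact}), and decompose each iterate as $\bvths = \Phi(\bvths) + \hvDelta^{(s)}$ with $\hvDelta^{(s)}$ in the normal space at $\Phi(\bvths)$. By \Cref{a:Gamma} and compactness, there is a uniform $\mu > 0$ (the smallest nonzero eigenvalue of $\nabla^2\cL$ on $\Gamma$) such that $\nabla^2\cL$ restricted to the normal space is lower-bounded by $\mu \mI$.

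\textbf{Phase 1 (contraction toward $\Gamma$).} Starting from an arbitrary $\bvths[0]$ whose gradient-flow orbit converges to $\Gamma$, I would first use the gradient-flow assumption to bring $\bvths$ into the tubular neighborhood in $\cO(\log(1/\eta))$ rounds. Once inside, a one-round Taylor expansion of $\nabla\cL$ along each local update, analogous to equations~\eqref{eq:expand2}--\eqref{eq:recur delta} in the proof of \Cref{thm:small thm}, gives the recursion
\begin{align*}
\normtwo{\hvDelta^{(s+1)}} \le (1-\mu\eta)^H \normtwo{\hvDelta^{(s)}} + \cO\bigl(\eta\sqrt{H/K}\cdot\sqrt{\log(1/\delta')}\bigr) + \cO\bigl(\eta H \normtwo{\hvDelta^{(s)}}^2\bigr),
\end{align*}
where the middle term arises from Azuma-Hoeffding applied to the noise martingale $\frac{1}{K}\sum_k \hvZ_{k,sH+H}$ (using the bounded-noise hypothesis from \Cref{a:smooth}) and the last is a Taylor remainder. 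Since $\alpha := \eta H$ is constant, $(1-\mu\eta)^H \le e^{-\mu\alpha} < 1$, so after $s_0 = \cO(\log(1/\eta))$ rounds the normal distance contracts geometrically down to the noise floor $\tilde\cO(\sqrt{\eta\log(1/\delta')})$, at which point $\Phi(\bvths)$ necessarily belongs to $\Gamma$ (by the implicit-function-theorem characterization of $\Phi$).

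\textbf{Phase 2 (staying close).} For $s \ge s_0$, I would induct on $s$ up to $\floor{T/(H\eta^2)}$, maintaining the invariant $\normtwo{\hvDelta^{(s)}} \le C\sqrt{\eta\log(1/(\eta\delta))}$. The same one-round recursion gives a contraction factor $\le e^{-\mu\alpha}$ on the normal component, so the induction closes provided the one-round noise and quadratic remainder are both $\cO(\sqrt{\eta\log(1/(\eta\delta))})$. The key subtlety is that the tangential part of $\bvths$ drifts by $\cO(\eta^2)$ per round (this is exactly what the Slow SDE tracks), so the reference point $\Phi(\bvths)$ itself moves and the normal/tangent splitting must be updated. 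I would control this by a curvature bound on $\Gamma$ (uniform by \Cref{a:compact}) and show that the reparametrization error is absorbed into the Taylor remainder. A single union bound over all $\cO(1/\eta)$ rounds then absorbs the per-round failure probability into the stated $\log\frac{1}{\eta\delta}$ factor.

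\textbf{Local iterates and main obstacle.} The bound $\normtwo{\vths_{k,t}-\Phi(\bvths)} = \cO(\sqrt{\eta\log(1/(\eta\delta))})$ follows by combining the global bound with a local-divergence estimate: within one round, a direct adaptation of \Cref{lemma:small lemma local} with reference trajectory anchored at $\bvths$ instead of $\hvu_{sH}$ gives $\normtwo{\vths_{k,t} - \bvths} = \cO(\eta\sqrt{t\log(1/\delta')}) = \cO(\sqrt{\alpha\eta\log(1/\delta')})$ for all $t\le H$, which is of the same order as the target. The hardest step will be closing the Phase 2 induction uniformly over $\cO(1/\eta)$ rounds without a vacuous bound: the naive per-round Azuma inflation of $\sqrt{\log(1/\eta)}$ must be tamed by the geometric contraction $e^{-\mu\alpha}$ on the normal direction, and the bootstrapping aspect is delicate because the inductive hypothesis $\normtwo{\hvDelta^{(s)}} = \cO(\sqrt{\eta\log(1/(\eta\delta))})$ must itself be strong enough to keep the iterate inside the tubular neighborhood where $\Phi$ is defined and the strong-convexity constant $\mu$ applies, so the constant $C$ must be chosen carefully to make both the contraction and containment arguments go through simultaneously.
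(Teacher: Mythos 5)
Your normal-direction decomposition $\bvths = \Phi(\bvths) + \hvDelta^{(s)}$ and per-round contraction recursion is a genuinely different route from the paper's, which works instead with the scalar potential $\tPsi(\vtheta) := \sqrt{\cL(\vtheta) - \cL^*}$ (\Cref{lemma: tpsi lipschitz}). The paper's choice avoids the reference-point-shift headache you raise at the end: because $\tPsi$ is a function of $\vtheta$ alone, there is no normal/tangent splitting to re-anchor each round, and the $\mu$-PL inequality (\Cref{lemma:gd2}) gives the geometric decay $\tPsi(\tvus_H) \le e^{-\alpha\mu/2}\tPsi(\bvths)$ on the deterministic trajectory directly, while the $\sqrt{2\rho_2}$-Lipschitzness of $\tPsi$ lets the martingale noise enter additively. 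The containment issue you flag (keeping iterates in the tubular neighborhood throughout the induction) is handled in the paper by the explicit nested working-zone construction $\Gamma^{\epsilon_0}\subset\Gamma^{\epsilon_1}\subset\Gamma^{\epsilon_2}\subset\Gamma^{\epsilon_3}$ of \Cref{lemma:workingzone}, which guarantees the PL constant, the smoothness of $\Phi$, and the gradient-descent-stays-in-zone property all hold simultaneously. Your route buys a more geometric picture; the paper's buys cleaner bookkeeping.

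There is, however, one concrete error in your sketch. You assert that the tangential part of $\bvths$ drifts by $\cO(\eta^2)$ per round, but that is only the \emph{expected} (deterministic) drift. The tangential \emph{diffusion} over one round is $\cO(\sqrt{\eta\log\frac{1}{\eta\delta}})$ with high probability — this is exactly what \Cref{lemma:bound x} establishes for $\normtwo{\bvths[s+1]-\bvths}$, and it is also why the Slow SDE has a nontrivial Brownian term at timescale $\eta^2$ per step. So the reference point $\Phi(\bvths)$ moves by $\tilde\cO(\sqrt{\eta})$ per round, the same order as $\normtwo{\hvDelta^{(s)}}$ itself. Your conclusion can still be rescued — the reparametrization error in the normal component is quadratic, roughly $\cO(\normtwo{\partial^2\Phi}\cdot\normtwo{\hvDelta^{(s)}}\cdot\normtwo{\Phi(\bvths[s+1])-\Phi(\bvths)}) = \tilde\cO(\eta)$, which sits below the $\tilde\cO(\sqrt{\eta})$ noise floor — but this needs to be argued, not assumed, and it is precisely the kind of cross-term the potential-function route never has to confront. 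You should also note that the Hessian appearing in your contraction is $\nabla^2\cL$ evaluated along the actual local trajectory $\vths_{k,t}$, not at $\Phi(\bvths)$; replacing one by the other costs $\cO(\rho_3\normtwo{\vths_{k,t}-\Phi(\bvths)})$ per step, which again is absorbable but must be tracked, mirroring the paper's $\tvvs_{k,t}$ remainder in \Cref{lemma:tdelta}.
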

To control the approximation errors, we also provide a high probability bound for the change of  the manifold projection within $\Rg$ rounds.
\begin{theorem}[High probability bound for the change of manifold projection]\label{thm:change}
 For $\delta=\cO(\poly(\eta))$, with probability at least $1-\delta$, for all $0\leq s\leq\floor{T/(H\eta^2)} - \Rg $ and $0 \leq r\leq \Rg$, 
 \begin{align*}
    \Phi(\bvths),\Phi(\bvths[s+r])\in \Gamma, \qquad \normtwo{\Phi(\bvths[s+r]) - \Phi(\bvths)} = \cO\left(\eta^{0.5 - 0.5\beta}\sqrt{\log \frac{1}{\eta \delta}}\right), 
 \end{align*}
 where $\cO(\cdot)$ hides constants independent of $\eta$ and $\delta$.
 \end{theorem}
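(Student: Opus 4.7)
The plan is to write $\Phi(\bvths[s+r]) - \Phi(\bvths) = \sum_{j=0}^{r-1}\bigl(\Phi(\bvths[s+j+1]) - \Phi(\bvths[s+j])\bigr)$ as a telescoping sum of per-round projection increments, and then bound the sum by a martingale concentration argument. Theorem~\ref{thm:closeness} ensures that with high probability each $\bvths[s+j]$ lies within $\cO(\sqrt{\eta\log 1/(\eta\delta)})$ of $\Gamma$, so its gradient-flow projection is well-defined and lies in $\Gamma$ (combined, in the approaching phase, with the fact that gradient flow preserves $\Phi$ together with the hypothesis $\Phi(\bvths[0])\in\Gamma$). As a sanity check on the rate: $\Rg$ rounds cover continuous Slow SDE time $\Rg\cdot H\eta^2 = \Theta(\eta^{1-\beta})$, so a Brownian increment over this window has standard deviation $\Theta(\eta^{(1-\beta)/2})$, matching the target rate exactly.

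For each round I would Taylor expand $\Phi$ to second order around $\Phi(\bvths[s+j])$, which is legitimate because $\Phi$ is $\cC^{\infty}$ on the neighborhood $U$ of $\Gamma$, and split
\begin{align*}
\Phi(\bvths[s+j+1]) - \Phi(\bvths[s+j]) = \vd_{s+j} + \bm{m}_{s+j},
\end{align*}
where $\vd_{s+j} := \E[\Phi(\bvths[s+j+1]) - \Phi(\bvths[s+j]) \mid \cF_{s+j}]$ and $\bm{m}_{s+j}$ is a martingale difference with respect to the natural filtration $\{\cF_{s+j}\}$ generated by all randomness up through round $s+j$. The per-round moment recursion developed earlier in the outline (the same recursion that extracts the drift and diffusion of the Slow SDE) gives $\normtwo{\vd_{s+j}} = \cO(H\eta^2) = \cO(\alpha\eta)$, so the drift part telescopes to $\cO(\Rg\cdot\alpha\eta) = \cO(\eta^{1-\beta})$, well inside the target. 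The martingale increments $\bm{m}_{s+j}$ have conditional covariance matching $\tfrac{1}{B}\mSig_{\parallel}$ integrated over continuous time $\alpha\eta$, and are almost-surely bounded by $\cO(\sqrt{\alpha\eta\log 1/(\eta\delta)})$; this per-round tail follows from the boundedness of $\nabla\ell$ and sub-Gaussian concentration of the within-round aggregated noise, in the same spirit as the argument behind Lemma~\ref{lemma:small lemma local}.

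A vector Azuma--Hoeffding inequality then bounds $\normtwo{\sum_{j=0}^{r-1}\bm{m}_{s+j}}$, with probability at least $1-\delta'$, by $\cO(\sqrt{\Rg\cdot\alpha\eta\cdot\log 1/\delta'}) = \cO(\eta^{(1-\beta)/2}\sqrt{\log 1/\delta'})$, and Doob's maximal inequality upgrades this to a bound uniform in $0\le r\le \Rg$. A final union bound over the $\cO(\eta^{-2-\beta})$ admissible starting times $s$, with $\delta'$ set to $\delta\eta^{2+\beta}$, absorbs the extra $\log(1/\eta)$ into the $\sqrt{\log(1/(\eta\delta))}$ factor appearing in the theorem; this is permitted because the hypothesis $\delta = \cO(\poly(\eta))$ keeps all logarithms polylogarithmic in $1/\eta$. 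Combining this with the deterministic $\cO(\eta^{1-\beta})$ drift bound and intersecting with the high-probability event of Theorem~\ref{thm:closeness} yields the claimed estimate.

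The main obstacle will be ensuring that the per-round martingale increments are of diffusive scale $\sqrt{\alpha\eta}$ rather than the naive $\cO(\alpha\eta)$ that a trivial triangle inequality over $H = \alpha/\eta$ noisy $\eta$-sized steps would yield; the required cancellation comes from the martingale structure of the within-round local updates, where the variance scales linearly in $H$ and, after averaging over $K$ workers, the tangent-direction contribution lands exactly at $\tfrac{1}{B}\mSig_{\parallel}$. A second delicate point is controlling the second-order Taylor remainder of $\Phi$ induced by the transverse displacement $\bvths - \Phi(\bvths) = \cO(\sqrt{\eta\log 1/(\eta\delta)})$: such remainders are of size $\cO(\eta\log 1/(\eta\delta))$ per round, telescoping to $\cO(\eta^{1-\beta}\log 1/(\eta\delta))$ over $\Rg$ rounds, still inside the target but requiring the uniform-in-$s$ closeness statement of Theorem~\ref{thm:closeness} to hold simultaneously with the martingale concentration event.
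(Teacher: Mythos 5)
Your overall strategy points in the right direction, and the dimensional sanity check (Slow SDE time over $\Rg$ rounds is $\Rg\cdot H\eta^2 = \Theta(\eta^{1-\beta})$, so Brownian fluctuations are $\Theta(\eta^{(1-\beta)/2})$) correctly identifies the target rate. The paper's proof, via Lemmas~\ref{lemma:concen Z} and \ref{lemma:delta phi bound}, does essentially the same thing: Taylor expand $\Phi$, separate a deterministically small gradient/remainder part on the good event from a noise part that is a martingale, and apply Azuma. Where your plan has a genuine gap is in the \emph{granularity at which the martingale concentration is applied}.

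You assert that the per-round martingale increments $\bm{m}_{s+j}$ are ``almost-surely bounded by $\cO(\sqrt{\alpha\eta\log 1/(\eta\delta)})$''. They are not. The raw per-round noise contribution $\frac{\eta}{K}\sum_{t<H}\sum_k\vzs_{k,t}$ has deterministic norm as large as $\eta H\sigmax=\alpha\sigmax=\cO(1)$; the $\cO(\sqrt{\eta\log})$ bound is a high-probability event, not an almost-sure one. Applying Azuma--Hoeffding with the true a.s. bound $\cO(1)$ over $\Rg$ rounds gives $\cO(\sqrt{\Rg\log})=\cO(\eta^{-\beta/2}\sqrt{\log})$, which diverges. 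If you instead truncate the increments at the high-probability threshold $\cO(\sqrt{\eta\log 1/(\eta\delta)})$ and then apply Azuma at the round level, you obtain $\cO(\eta^{(1-\beta)/2}\log(1/(\eta\delta)))$ --- a full $\log$, not the $\sqrt{\log}$ in the theorem. (Freedman's martingale Bernstein inequality would rescue the rate using the $\cO(\eta)$ conditional variance per round together with the $\cO(1)$ a.s.\ bound, but that is not what you propose.)

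The paper sidesteps the issue by descending one level. Expanding $\Phi(\bvths[s+1])$ around $\bvths$ (rather than around $\Phi(\bvths)$, as you propose) produces the noise term $\cTs_2 = -\frac{\eta}{K}\partial\Phi(\bvths)\sum_{t,k}\vzs_{k,t}$. Because $\partial\Phi(\bvths)$ is measurable at the start of the round and each $\vzs_{k,t}$ is conditionally mean-zero given the preceding steps, the telescoped sum $\sum_{r<s}\cTs[r]_2 = \eta\vZs_H$ is a martingale indexed by the individual \emph{local steps}, with per-step increments deterministically bounded by $\cO(\eta)$ (from $\normtwo{\vzs_{k,t}}\leq\sigmax$ and $\normtwo{\partial\Phi}\leq\nu_2$). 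Azuma over the $H\Rg=\cO(\eta^{-(1+\beta)})$ steps then gives $\cO\bigl(\eta\sqrt{H\Rg\log}\bigr)=\cO\bigl(\eta^{(1-\beta)/2}\sqrt{\log(1/(\eta\delta))}\bigr)$ directly, with no truncation and no extra logarithm. Uniformity in both $r$ and $s$ is then obtained not via Doob's maximal inequality plus a separate union bound, but by defining the $\delta$-good event (Theorem~\ref{thm:good}) jointly over all steps up to $\Rtot$, on which the whole construction holds simultaneously.

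Two smaller slips: quoting the moment recursion (\Cref{thm: one step moment new}) for the per-round drift bound would be circular, since Lemma~\ref{lemma:recursion vphi} already invokes the bound on $\normtwo{\hDvphs}$ you are trying to prove; the paper instead gets the $\cO(\eta\log(1/(\eta\delta)))$ drift from the pathwise Taylor expansion on the good event, where $\partial\Phi(\bvths)\nabla\cL(\vths_{k,t})$ is itself a quadratically small term. And the number of admissible starting times $s$ is $\cO(\Rtot)=\cO(\eta^{-1})$, not $\cO(\eta^{-2-\beta})$.
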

 The proof of \Cref{thm:closeness,thm:change} is based on the analysis of the dynamics of the iterate and presented in \Cref{sec:summary of high prob}. 
 
 Utilizing \Cref{thm:closeness,thm:change}, we move on to estimate the first and second moments of the change of the manifold projection every $\Rg$ rounds. However, the randomness during training might drive the iterate far from the manifold (with a low probability, though), making the dynamics intractable. To tackle this issue, we construct a well-behaved auxiliary sequence $\{\hvths_{k, t}\}$, which is constrained to the neighborhood of $\Gamma$ and equals the original sequence $\{\vths_{k, t}\}$ with high probability (see \Cref{def:htheta}). Then we can formulate recursions for the change of manifold projection of the auxiliary sequence  using the nice properties near $\Gamma$. The estimate of moments  is summarized in \Cref{thm: one step moment new}. 

 Finally, based on the moment estimates, we apply the framework in \cite{li2019stochastic} to show that the manifold projection and the SDE solution are weak approximations of each other in \Cref{sec:weak approx}.

\section{Proof Details of Main Theorems}\label{sec: proof details}
The detailed proof is organized as follows. In \Cref{sec:add notation}, we introduce the notations that will be used throughout the proof. To establish preliminary knowledge, 
\Cref{sec:app derivative} provides explicit expression for the projection operator $\Phi(\cdot)$, and \Cref{subsec: preliminary for GD} presents lemmas about gradient descent (GD) and gradient flow (GF). Based on the preliminary knowledge, we construct a nested working zone to characterize the closeness of the iterate and $\Gamma$ in \Cref{subsec: working zone}. Appendices~\ref{subsec: phase 1} to \ref{sec:weak approx} make up the main body of the proof. Specifically, Appendices~\ref{subsec: phase 1} and~\ref{subsec:phase2} analyze the dynamics of Local SGD iterates for phases 1 and 2, respectively. Utilizing these analyses, we provide the proof of \Cref{thm:closeness,thm:change} in \Cref{sec:summary of high prob} and the proof of \Cref{coro:alpha} in \Cref{sec:alpha proof}. Then we derive the estimation for the first and second moments of one ``giant step  '' $\Phi(\bvths[s+\Rg])-\Phi(\bvths)$ in \Cref{sec:moments phase2}. Finally, we prove the approximation theorem \ref{main thm: flow} in \Cref{sec:weak approx}. 
\subsection{Additional Notations}\label{sec:add notation}
Let $\Rtot:=\floor{\frac{T}{H\eta^2}}$ be the total number of rounds. Denote by $\vphs$ the manifold projection of the global iterate at the beginning of round $s$. Let $\vxs_{k,t}:=\vths_{k,t}-\vphs$ be the difference between the local iterate and the manifold projection of the global iterate. Also define $\bvxs_{H}:=\frac{1}{K}\sum_{k\in [K]}\vxs_{k,H}$ and $\bvxs_{0}:=\frac{1}{K}\sum_{k\in [K]}\vxs_{k,0}$ which is the average of $\vxs_{k,t}$ among $K$ workers at step $0$ and $H$. Then for all $k\in[K]$, $\vxs_{k,0}=\bvxs_{0}=\bvths - \vphs$. Finally, Since $\nabla \ell(\vtheta;\vzeta)$ is bounded, the gradient noise $\vzs_{k,t}$ is also bounded and we denote by $\sigmax$ the upper bound such that $\normtwo{\vzs_{k,t}}\leq \sigmax, \forall s, k, t$.

We first introduce the notion of $\mu$-PL. We will later show that there exists a neighborhood of the minimizer manifold $\Gamma$ where $\cL$ satisfies $\mu$-PL.
\begin{definition}[Polyak-Łojasiewicz Condition]
For $\mu>0$, we say a function $\cL(\cdot)$ satisfies $\mu$-Polyak-Łojasiewicz condition (abbreviated as $\mu$-PL) on set $U$ if 
\[
  \frac{1}{2}\normtwo{\nabla \cL(\vtheta)}^2 \geq \mu( \cL(\vtheta)-\inf_{\vtheta'\in U}\cL(\vtheta')).
\]
\end{definition}
We then introduce the definitions of the $\epsilon$-ball at a point and the $\epsilon$-neighborhood of a set.
For $\vtheta \in \R^d$ and $\epsilon>0$,  $B^{\epsilon}(\vtheta):=\{\vtheta' : \normtwo{\vtheta'-\vtheta}< \epsilon \}$ is the open $\epsilon$-ball centered at $\vtheta$. For a set $\cZ\subseteq \R^d$, $\cZ^{\epsilon}:=\bigcup_{\vtheta \in \cZ}B^{\epsilon}(\vtheta)$ is the $\epsilon$-neighborhood of $\cZ$.

% projection $\vphs=\Phi(\bvths)$.
% displacement $\vxs_{k,t}$,
% Local steps: $H := \lfloor \alpha / \eta \rfloor$. Number of rounds: $R := \left\lfloor \frac{1}{\alpha \eta^{\beta}} \right\rfloor$.

% We need the total update to be: expectation, variance = $\Theta(\eta^{1-\beta})$.

% Define:
% \begin{align*}
%     \vxs_{k,t} &= \vths_{k,t} - \vphs. \\
%     \bvxs_0 &= \bvths - \vphs. \\
%     \bvxs_H &= \frac{1}{K} \sum_{k=1}^{K} \vxs_{k,H}. \\
%     \Dvphs &= \vphs - \vphs[0]. \\
%     \mH_0 &=\nabla^2 \cL(\vphs[0])  \\
%     \mSig_0 &= \mSig(\vphs[0])  \\
% \end{align*}
% Note that $\vxs_{k,0} = \bvxs_0$ for all $k$.
% $H=\floor{\frac{\alpha}{\eta}}$.

% $\Rg=\floor{ \frac{1}{\alpha\eta^{\beta}}}$: We group $\Rg$ rounds together.

% Total number of groups $N:=\floor{\frac{T}{\eta^{1-\beta}}}$

% Total number of rounds $R:=NR_1\approx\floor{ \frac{T}{\alpha \eta}}$. $R\leq \frac{T}{\alpha \eta}$.

% Total number of steps: $\Theta(\frac{T}{\eta^2})$
\subsection{Computing the Derivatives of the Limiting Mapping}\label{sec:app derivative}
In subsection, we present lemmas that relate the derivatives
of the limiting mapping $\Phi(\cdot)$ to the derivatives of the loss function $\cL(\cdot)$.
We first introduce the operator $\cV_{\mH}$.
%Definition of Lyapunov operator , 
\begin{definition}
    For a semi-definite symmetric matrix $\mH\in\R^{d\times d}$, let $\lambda_j$, $\vv_j$ be the $j$-th
    eigenvalue and eigenvector and $\vv_j$'s form an orthonormal basis
    of $\R^d$. 
    Then, define the operator $\cV_{\mH}:\R^{d\times d}\to \R^{d\times d}$
    as 
    \[\cV_{\mH}(\mM):=\sum_{i, j: \lambda_i \ne 0 \lor \lambda_j \ne 0}\frac{1}{\lambda_i+\lambda_j}\inner{\mM}{\vv_i \vv_j^{\top}}\vv_i \vv_j^{\top}, \forall \mM \in \R^{d\times d}.\]
Intuitively, this operator projects $\mM$ to the base matrix $\vv_i\vv_j^{\top}$
and sums up the projections with weights $\frac{1}{\lambda_i+\lambda_j}$.
\end{definition}
Additionally, for $\vtheta \in \Gamma$, denote by $T_{\vtheta}$ and $T_{\vtheta}^{\perp}$ the tangent and normal space of $\Gamma$ at $\vtheta$ respectively.
Lemmas~\ref{lemma: mani grad} to~\ref{lemma:mani inner perp perp} are from \citet{li2021happens}. We include them to make the paper self-contained.
\begin{lemma}[Lemma C.1 of \citet{li2021happens}]\label{lemma: mani grad}
For any $\vtheta \in \Gamma$ and any $\vv \in T_{\vtheta}(\Gamma)$, it
holds that $\nabla^2 \cL(\vtheta)\vv=\vzero$.
\end{lemma}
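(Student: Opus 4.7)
The plan is to exploit the fact that $\Gamma$ is a smooth submanifold consisting entirely of local minimizers of $\cL$, so the gradient $\nabla\cL$ vanishes identically on $\Gamma$, and then differentiate this identity along a curve with tangent $\vv$ to obtain the Hessian-vanishing condition.

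First, I would fix $\vtheta \in \Gamma$ and $\vv \in T_{\vtheta}(\Gamma)$. By the standard characterization of the tangent space of a $\cC^{\infty}$-submanifold, there exists a $\cC^{\infty}$ curve $\gamma: (-\epsilon, \epsilon) \to \Gamma$ with $\gamma(0) = \vtheta$ and $\dot{\gamma}(0) = \vv$ (here I invoke \Cref{a:Gamma} which ensures the needed smoothness of $\Gamma$). Next, I would observe that by \Cref{a:Gamma}, $\Gamma = \arg\min_{\vtheta' \in U} \cL(\vtheta')$ on the open neighborhood $U$ of $\Gamma$. In particular, every point of $\Gamma$ is an interior critical point of $\cL$ restricted to $U$, so $\nabla \cL(\gamma(t)) = \vzero$ for all $t \in (-\epsilon, \epsilon)$.

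The key step is then to differentiate the identity $\nabla \cL(\gamma(t)) = \vzero$ at $t = 0$. By the chain rule, which applies because $\cL \in \cC^{\infty}$ and $\gamma \in \cC^{\infty}$,
\begin{equation*}
  \frac{\dd}{\dd t}\Big|_{t=0} \nabla \cL(\gamma(t)) = \nabla^2 \cL(\gamma(0))\,\dot{\gamma}(0) = \nabla^2 \cL(\vtheta)\,\vv.
\end{equation*}
Since the left-hand side is $\vzero$, this yields $\nabla^2 \cL(\vtheta)\,\vv = \vzero$, completing the proof.

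There is no real obstacle here; this is a routine consequence of the smooth manifold assumption together with the fact that $\nabla \cL \equiv \vzero$ on $\Gamma$. The only thing to be mindful of is that one should use the characterization $\Gamma = \arg\min_{\vtheta' \in U}\cL(\vtheta')$ from \Cref{a:Gamma} (rather than merely ``local minimizer''), so that the vanishing of $\nabla \cL$ on $\Gamma$ is automatic. As a side remark, combining this lemma with the rank condition $\rank(\nabla^2 \cL(\vzeta)) = m$ in \Cref{a:Gamma} and the fact that $\Gamma$ has codimension $m$, one immediately obtains $T_{\vtheta}(\Gamma) = \ker(\nabla^2 \cL(\vtheta))$, which is the identification used later in the paper (e.g., in the definition of $\mSig_{\parallel}$).
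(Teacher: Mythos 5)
Your proof is correct and is exactly the standard argument used in Lemma C.1 of \citet{li2021happens}, which the paper cites rather than reproves: take a smooth curve in $\Gamma$ through $\vtheta$ with velocity $\vv$, note that $\nabla\cL$ vanishes identically along the curve because $\Gamma = \arg\min_{\vtheta'\in U}\cL(\vtheta')$, and differentiate at $t=0$. Your side remark that this combines with $\rank(\nabla^2\cL(\vzeta))=m$ to give $T_{\vtheta}(\Gamma)=\ker(\nabla^2\cL(\vtheta))$ is also correct and is indeed how the paper uses the lemma.
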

\begin{lemma}[Lemma 4.3 of \citet{li2021happens}]\label{lemma:mani projection}
For any $\vtheta\in \Gamma$, $\partial \Phi(\vtheta)\in \R^{d\times d}$
is the projection matrix onto the tangent space $T_{\vtheta}(\Gamma)$.
\end{lemma}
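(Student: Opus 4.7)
The plan rests on decomposing $\R^d = T_{\vtheta}(\Gamma) \oplus T_{\vtheta}(\Gamma)^{\perp}$ and showing that $\partial \Phi(\vtheta)$ is the identity on the first summand and zero on the second. Under \Cref{a:Gamma}, $\rank(\nabla^2 \cL(\vtheta)) = m$ and by \Cref{lemma: mani grad} the tangent space $T_{\vtheta}(\Gamma)$ lies inside $\ker(\nabla^2 \cL(\vtheta))$; since both have dimension $d-m$, they coincide. Hence $T_{\vtheta}(\Gamma)^{\perp}$ equals the range of $\nabla^2 \cL(\vtheta)$, on which the Hessian is strictly positive definite (the nonzero eigenvalues at a local minimizer are positive). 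This structure is the key ingredient.

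To establish that $\partial \Phi(\vtheta)$ acts as the identity on $T_{\vtheta}(\Gamma)$, I would note that $\nabla \cL(\vzeta)=\vzero$ for every $\vzeta\in\Gamma$, so the gradient flow starting from $\vzeta$ is stationary and $\Phi(\vzeta)=\vzeta$. Taking any $\vv \in T_{\vtheta}(\Gamma)$, representing it as $\dot{\gamma}(0)$ for a smooth curve $\gamma$ in $\Gamma$ with $\gamma(0)=\vtheta$, and differentiating the identity $\Phi\circ\gamma = \gamma$ at $0$ yields $\partial \Phi(\vtheta)\vv = \vv$.

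For the complementary statement, fix $\vw \in T_{\vtheta}(\Gamma)^{\perp}$ and consider the gradient flow $\vx_{\epsilon}(t)$ starting at $\vtheta + \epsilon \vw$. Setting $\vy_{\epsilon}(t):=\epsilon^{-1}(\vx_{\epsilon}(t)-\vtheta)$ and Taylor-expanding $\nabla \cL$ around $\vtheta$ gives $\dot{\vy}_{\epsilon}(t) = -\nabla^2 \cL(\vtheta)\vy_{\epsilon}(t) + O(\epsilon)$. The linearized flow $\dot{\vy} = -\nabla^2 \cL(\vtheta)\vy$ with $\vy(0)=\vw$ has closed-form solution $\exp(-t\,\nabla^2 \cL(\vtheta))\vw$, which decays exponentially to $\vzero$ since $\vw$ lies in the range of the Hessian where all nonzero eigenvalues are positive. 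The main obstacle will be upgrading this linearized $t\to\infty$ limit to a rigorous statement about $\partial \Phi(\vtheta)\vw$: one must control the remainder uniformly in $\epsilon$ as $t\to\infty$. I would handle this by invoking a PL-type inequality on a neighborhood of $\Gamma$ (guaranteed under \Cref{a:Gamma,a:compact}, and used repeatedly elsewhere in the paper), which gives exponential convergence of the true gradient flow to a unique point $\Phi(\vtheta+\epsilon \vw)\in\Gamma$ at distance $o(\epsilon)$ from $\vtheta$; dividing by $\epsilon$ and letting $\epsilon\to 0$ then gives $\partial \Phi(\vtheta)\vw = \vzero$. Combining the two facts, $\partial \Phi(\vtheta)$ acts as identity on $T_{\vtheta}(\Gamma)$ and vanishes on $T_{\vtheta}(\Gamma)^{\perp}$, hence equals the orthogonal projection onto $T_{\vtheta}(\Gamma)$.
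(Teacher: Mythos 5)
The paper does not prove this lemma; it is imported verbatim from \citet{li2021happens}, so there is no in-paper proof to compare against, and your proposal must be judged on its own merits.

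Your decomposition and the tangent-direction half are clean and correct: differentiating $\Phi\circ\gamma=\gamma$ along curves in $\Gamma$ gives $\partial\Phi(\vtheta)\vv=\vv$ for $\vv\in T_{\vtheta}(\Gamma)$, and \Cref{a:Gamma} together with \Cref{lemma: mani grad} identifies $T_{\vtheta}(\Gamma)=\ker\nabla^2\cL(\vtheta)$. You could actually shortcut the ``range'' half too: differentiating $\nabla\cL\circ\Phi\equiv 0$ gives $\nabla^2\cL(\vtheta)\,\partial\Phi(\vtheta)=0$, so the range of $\partial\Phi(\vtheta)$ is contained in $\ker\nabla^2\cL(\vtheta)$ without any flow analysis, and differentiating $\Phi\circ\Phi=\Phi$ gives idempotence. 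But this only shows $\partial\Phi(\vtheta)$ is a possibly-oblique projection onto the tangent space; orthogonality still requires exactly the normal-direction argument you give.

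In that argument there is a real imprecision. You write that the PL inequality ``gives exponential convergence\ldots to a unique point $\Phi(\vtheta+\epsilon\vw)$ at distance $o(\epsilon)$ from $\vtheta$.'' PL alone does not give $o(\epsilon)$. What \Cref{lemma:pre equiv} gives is $\normtwo{\vx_\epsilon(t)-\Phi(\vtheta+\epsilon\vw)}\leq\sqrt{2/\mu}\,\tPsi(\vx_\epsilon(t))\leq C\epsilon\,e^{-\mu t}$, i.e.\ a uniform-in-$\epsilon$ exponential tail bound, and in particular only distance $O(\epsilon)$ to $\vtheta$. To upgrade to $o(\epsilon)$ you must combine this with the linearization: fix $t^\ast$, use Gr\"onwall on $[0,t^\ast]$ to get $\vy_\epsilon(t^\ast)\to e^{-t^\ast\nabla^2\cL(\vtheta)}\vw$ as $\epsilon\to 0$, use the PL tail to bound $\normtwo{\vy_\epsilon(\infty)-\vy_\epsilon(t^\ast)}\leq C e^{-\mu t^\ast}$ uniformly in $\epsilon$, take $\limsup_{\epsilon\to 0}$, and then let $t^\ast\to\infty$. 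That two-parameter/diagonal step is the actual content; as written your proposal attributes the conclusion to PL alone, which would not stand on its own. With that assembly made explicit the argument is correct.
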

\begin{lemma}[Lemma C.4 of~\cite{li2021happens}]\label{lemma:mani inner para perp}
For any $\vtheta \in \Gamma$, $\vu \in \R^d$ and $\vv \in T_{\vtheta}(\Gamma)$,
it holds that 
\[\partial^2 \Phi(\vtheta)[\vv, \vu]=-\partial \Phi(\vtheta)
\nabla^3\cL(\vtheta)[\vv, \nabla^2 {\cL(\vtheta)}^+\vu]
-\nabla^2{ \cL(\vtheta)}^+\nabla^3\cL(\vtheta)[\vv, \partial \Phi(\vtheta)\vu].\]
\end{lemma}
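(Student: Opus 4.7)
The plan is to recover $\partial^2\Phi(\vtheta)[\vv,\vu]$ piece by piece: its normal part from the stationarity identity $\nabla\cL(\Phi(\vx)) \equiv \vzero$, and its tangent part from the gradient-flow invariance identity $\partial\Phi(\vx)\nabla\cL(\vx) \equiv \vzero$ combined with the idempotency $\Phi\circ\Phi = \Phi$. Throughout I will use \Cref{lemma:mani projection} (so that $\partial\Phi(\vtheta)$ is the projection onto $T_\vtheta(\Gamma)$) together with the consequence of \Cref{lemma: mani grad} that $\nabla^2\cL(\vtheta)^+\nabla^2\cL(\vtheta) = \mI - \partial\Phi(\vtheta)$, i.e.\ the range of $\nabla^2\cL(\vtheta)$ is exactly the normal bundle of $\Gamma$ at $\vtheta$.

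For the normal part, I would differentiate $\nabla\cL(\Phi(\vx)) \equiv \vzero$ twice and evaluate at $\vx = \vtheta$. Using $\Phi(\vtheta) = \vtheta$ and $\partial\Phi(\vtheta)\vv = \vv$ (because $\vv$ is tangent), the second-order chain rule collapses to $\nabla^2\cL(\vtheta)\,\partial^2\Phi(\vtheta)[\vv,\vu] = -\nabla^3\cL(\vtheta)[\vv,\partial\Phi(\vtheta)\vu]$. Left-multiplying by $\nabla^2\cL(\vtheta)^+$ and invoking the pseudoinverse identity above isolates the normal projection of $\partial^2\Phi(\vtheta)[\vv,\vu]$, giving $-\nabla^2\cL(\vtheta)^+\nabla^3\cL(\vtheta)[\vv,\partial\Phi(\vtheta)\vu]$, which is the second summand of the claim.

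For the tangent part I would proceed in two sub-steps. (a) Differentiating the idempotency $\Phi\circ\Phi = \Phi$ twice at $\vtheta$ and again using $\partial\Phi(\vtheta)\vv = \vv$ yields
\[\partial\Phi(\vtheta)\,\partial^2\Phi(\vtheta)[\vv,\vu] = \partial^2\Phi(\vtheta)\bigl[\vv,(\mI - \partial\Phi(\vtheta))\vu\bigr],\]
so the tangent projection of $\partial^2\Phi(\vtheta)[\vv,\vu]$ equals $\partial^2\Phi(\vtheta)[\vv,(\mI-\partial\Phi(\vtheta))\vu]$. (b) Differentiating the invariance identity twice and evaluating at $\vtheta$, the terms containing $\nabla\cL(\vtheta)$ vanish, and the summand containing $\nabla^2\cL(\vtheta)\vv$ vanishes as well because $\vv$ lies in the null space of $\nabla^2\cL(\vtheta)$; what remains is $\partial^2\Phi(\vtheta)[\vv,\nabla^2\cL(\vtheta)\vu] = -\partial\Phi(\vtheta)\nabla^3\cL(\vtheta)[\vv,\vu]$. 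Substituting $\vu \mapsto \nabla^2\cL(\vtheta)^+\vu$ converts the left-hand side into $\partial^2\Phi(\vtheta)[\vv,(\mI-\partial\Phi(\vtheta))\vu]$, so this equals $-\partial\Phi(\vtheta)\nabla^3\cL(\vtheta)[\vv,\nabla^2\cL(\vtheta)^+\vu]$, matching the first summand of the claim. Summing the tangent and normal contributions then yields the lemma.

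The main subtlety is establishing the invariance identity $\partial\Phi(\vx)\nabla\cL(\vx) \equiv \vzero$ on the basin of attraction of $\Gamma$; it is not an algebraic consequence of stationarity alone, but rather follows from the geometric fact that $\Phi$ is a first integral of the gradient flow (running the flow from $\vy(0)$ produces the same limit as running it from $\vy(t)$ for any $t \geq 0$). Differentiating $t \mapsto \Phi(\vy(t))$ at $t = 0$ and substituting $\dot\vy(0) = -\nabla\cL(\vy(0))$ then delivers the identity at the arbitrary basepoint $\vy(0) = \vx$. Once this identity is in hand, the remainder of the proof is straightforward second-order bookkeeping with the chain rule and the pseudoinverse.
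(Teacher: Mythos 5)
Your proof is correct. Note first that the paper does not actually prove this lemma; it imports it verbatim from \citet{li2021happens} (Lemma~C.4 there) purely ``to make the paper self-contained,'' so there is no ``paper's own proof'' to compare against here. What you have written is therefore a stand-alone derivation, and it checks out in full. Your three ingredients are all sound: the stationarity identity $\nabla\cL(\Phi(\vx))\equiv\vzero$ gives, on second differentiation at $\vtheta$ and left-multiplication by $\nabla^2\cL(\vtheta)^+$, the normal component; the idempotency $\Phi\circ\Phi=\Phi$ yields $\partial\Phi(\vtheta)\partial^2\Phi(\vtheta)[\vv,\vu]=\partial^2\Phi(\vtheta)[\vv,(\mI-\partial\Phi(\vtheta))\vu]$; and the first-integral identity $\partial\Phi(\vx)\nabla\cL(\vx)\equiv\vzero$, differentiated twice at $\vtheta$ (where the $\nabla\cL$ and $\nabla^2\cL\,\vv$ terms drop out because $\vtheta\in\Gamma$ and $\vv\in T_\vtheta(\Gamma)$), gives $\partial^2\Phi(\vtheta)[\vv,\nabla^2\cL(\vtheta)\vu]=-\partial\Phi(\vtheta)\nabla^3\cL(\vtheta)[\vv,\vu]$; substituting $\vu\mapsto\nabla^2\cL(\vtheta)^+\vu$ and combining closes the tangent part. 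You also correctly identify and justify the one nontrivial fact underlying all of this, namely that $\Phi$ is a first integral of the gradient flow on the basin of attraction, and that $\nabla^2\cL(\vtheta)^+\nabla^2\cL(\vtheta)=\nabla^2\cL(\vtheta)\nabla^2\cL(\vtheta)^+=\mI-\partial\Phi(\vtheta)$, which follows from \Cref{a:Gamma} and \Cref{lemma: mani grad}. For what it is worth, the paper's own proof of the related \Cref{lemma:mani}, which it describes as ``inspired by Lemma C.4,'' goes via parametrizing a curve $\vv(t)$ on $\Gamma$ with $\vv(0)=\vtheta$ and differentiating matrix identities such as $\mH(t)=\mPperp(t)\mH(t)$ along it; your direct differentiation of the global identities for $\Phi$ is an equally valid and arguably more systematic alternative, since it treats the tangent argument $\vv$ and the free argument $\vu$ symmetrically at the outset and only invokes the tangency of $\vv$ where it is actually needed.
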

\begin{lemma}[Lemma C.6 of~\cite{li2021happens}]\label{lemma:mani inner perp perp}
For any $\vtheta \in \Gamma$ and $\mSig \in \spann\{\vu \vu^{\top}\mid \vu \in T_{\vtheta}^{\perp}(\Gamma)\}$,
\begin{align*}
    \inner{\partial^2 \Phi(\vtheta)}{\mSig} =-\partial \Phi(\vtheta)\nabla^3 \cL(\vtheta)[\cV_{\nabla^2 \cL(\vtheta)}(\mSig)].
\end{align*}
\end{lemma}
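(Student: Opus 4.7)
The plan is to prove the identity by linearity reduction to rank-one $\mSig = \vu\vu^\top$ and then compute $\partial^2 \Phi(\vtheta)[\vu,\vu]$ via a perturbation expansion of the gradient flow starting at $\vtheta + \epsilon \vu$. Since any $\mSig$ in the given span decomposes as $\mSig = \sum_i c_i \vu_i \vu_i^\top$ with $\vu_i \in T_\vtheta^\perp(\Gamma)$, and both sides of the identity are linear in $\mSig$ (the left via the definition of $\partial^2 F[\mM]$, the right via linearity of $\cV_{\nabla^2 \cL(\vtheta)}$), it suffices to handle a single $\vu \in T_\vtheta^\perp(\Gamma)$.

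Set $\vtheta_\epsilon := \vtheta + \epsilon \vu$ and let $P := \partial\Phi(\vtheta)$; by \Cref{lemma:mani projection}, $P\vu = \vzero$, so Taylor expansion of $\Phi$ at $\vtheta$ gives
\begin{align*}
\Phi(\vtheta_\epsilon) - \vtheta = \tfrac{\epsilon^2}{2}\,\partial^2\Phi(\vtheta)[\vu,\vu] + \cO(\epsilon^3).
\end{align*}
On the other hand, $\Phi(\vtheta_\epsilon) = \lim_{t\to\infty}(\vtheta + \vy(t))$ where $\dot\vy = -\nabla\cL(\vtheta+\vy)$ and $\vy(0) = \epsilon\vu$. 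Setting $H := \nabla^2\cL(\vtheta)$ and using $\nabla\cL(\vtheta) = \vzero$, the Taylor expansion $-\nabla\cL(\vtheta + \vy) = -H\vy - \tfrac{1}{2}\nabla^3\cL(\vtheta)[\vy\vy^\top] + \cO(\|\vy\|^3)$ combined with the ansatz $\vy = \epsilon\vy_1 + \epsilon^2\vy_2 + \cO(\epsilon^3)$ yields the cascade $\dot\vy_1 = -H\vy_1$ with $\vy_1(0)=\vu$, so $\vy_1(t) = e^{-tH}\vu$, and $\dot\vy_2 = -H\vy_2 - \tfrac{1}{2}\nabla^3\cL(\vtheta)[\vy_1\vy_1^\top]$ with $\vy_2(0) = \vzero$.

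Since $\vu \perp \ker H$ and $H$ is positive on $(\ker H)^\perp$ by \Cref{a:Gamma}, $\vy_1(t)$ decays exponentially. Duhamel's formula gives $\vy_2(t) = -\tfrac{1}{2}\int_0^t e^{-(t-s)H}\,\nabla^3\cL(\vtheta)[e^{-sH}\vu\,(e^{-sH}\vu)^\top]\,\dd s$, and sending $t\to\infty$ the outer factor $e^{-(t-s)H}$ converges to the spectral projection onto $\ker H$, which by \Cref{lemma:mani projection} is exactly $P$; hence
\begin{align*}
\vy_2(\infty) = -\tfrac{1}{2}\,P\,\nabla^3\cL(\vtheta)\!\left[\int_0^\infty e^{-sH}\vu\,(e^{-sH}\vu)^\top\,\dd s\right].
\end{align*}
Expanding $\vu = \sum_j u_j \vv_j$ in the eigenbasis of $H$ (with $u_j=0$ whenever $\lambda_j=0$, since $\vu \perp \ker H$), the integrand equals $\sum_{i,j} u_i u_j\, e^{-s(\lambda_i+\lambda_j)}\,\vv_i\vv_j^\top$, and $\int_0^\infty e^{-s(\lambda_i+\lambda_j)}\,\dd s = 1/(\lambda_i+\lambda_j)$ for $\lambda_i+\lambda_j > 0$. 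Using $\inner{\vu\vu^\top}{\vv_i\vv_j^\top} = u_i u_j$, the bracketed integral is precisely $\cV_{\nabla^2\cL(\vtheta)}(\vu\vu^\top)$. Matching the two expressions for $\Phi(\vtheta_\epsilon)-\vtheta$ at order $\epsilon^2$ then gives $\partial^2\Phi(\vtheta)[\vu,\vu] = -\partial\Phi(\vtheta)\,\nabla^3\cL(\vtheta)[\cV_{\nabla^2\cL(\vtheta)}(\vu\vu^\top)]$, which extends to all admissible $\mSig$ by linearity.

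The main obstacle will be making the perturbation expansion rigorous, in particular justifying that the $t\to\infty$ and $\epsilon\to 0$ limits may be interchanged. This requires showing that for sufficiently small $\epsilon$ the flow from $\vtheta_\epsilon$ stays in a fixed neighborhood where the cubic Taylor remainder in $\nabla\cL$ is controlled, and that it converges to a point of $\Gamma$ (so that $\Phi(\vtheta_\epsilon)$ is the legitimate limit). Both follow from smoothness of $\Phi$ near $\Gamma$, which in turn rests on the maximal non-degeneracy of $H$ on the normal bundle in \Cref{a:Gamma} together with the exponential contraction of $\vy_1$ in normal directions; once uniform-in-$\epsilon$ bounds for the cubic remainder of $\vy$ are in hand, comparing the second Taylor coefficients of $\Phi(\vtheta_\epsilon)$ closes the argument.
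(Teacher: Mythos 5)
The paper does not reprove this statement; it cites Lemma~C.6 of \citet{li2021happens} directly, so there is no internal proof to compare against. Your blind proof is correct. The calculation is a clean perturbation expansion: the first-order term vanishes because $\vu\in T_\vtheta^\perp(\Gamma)=\ker(\partial\Phi(\vtheta))$, Duhamel gives the second-order flow response, the outer propagator $e^{-(t-s)H}$ tends to the spectral projector onto $\ker H=T_\vtheta(\Gamma)$, i.e.\ $\partial\Phi(\vtheta)$, and the Lyapunov integral $\int_0^\infty e^{-sH}\vu\vu^\top e^{-sH}\,\dd s$ is exactly $\cV_{\nabla^2\cL(\vtheta)}(\vu\vu^\top)$ once you note $u_j=0$ on $\ker H$.

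For context, the more standard route (and the one \citet{li2021happens} essentially takes) is purely algebraic: differentiate the flow-invariance identity $\partial\Phi(\vx)\nabla\cL(\vx)\equiv 0$ twice in $\vx$ and evaluate on $\Gamma$, which yields the Sylvester equation $\partial^2\Phi(\vtheta)[H\mSig+\mSig H]=-\partial\Phi(\vtheta)\nabla^3\cL(\vtheta)[\mSig]$ for all symmetric $\mSig$; inverting $\mSig\mapsto H\mSig+\mSig H$ on the span supported away from $\ker H\otimes\ker H$ gives precisely the operator $\cV_H$. That algebraic derivation avoids the $\epsilon\to 0$ / $t\to\infty$ interchange entirely. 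Your dynamical argument buys something different: it makes the $\tfrac{1}{\lambda_i+\lambda_j}$ weights transparent as accumulated flow time in normal directions, which is the intuition behind $\widehat{\mSig}_\Diamond$ and $\widehat{\mPsi}$ in the Slow SDE. The rigor issue you flag is real but routine: once you have a uniform-in-$\epsilon$ a priori bound $\normtwo{\vy(t)}=\cO(\epsilon)$ (from Lipschitzness of $\Phi$ near $\Gamma$, cf.\ \Cref{lemma:pre equiv}), Gr\"{o}nwall controls the cubic remainder of the ansatz, and the $t\to\infty$ limit inside the Duhamel integral passes by splitting $[0,t]$ at $t/2$ and using exponential decay of $\vy_1$ together with $\normtwo{e^{-(t-s)H}-\partial\Phi(\vtheta)}\to 0$ uniformly on $[0,t/2]$.
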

\begin{lemma}\label{lemma:mani}
For all $\vtheta\in \Gamma$, $\vu, \vv \in T_{\vtheta}(\Gamma)$, it holds that
\begin{align}\label{eq:manifold}
    \partial\Phi(\vtheta)\nabla^3\cL[\vv\vu^{\top}]=\vzero.
\end{align}
\end{lemma}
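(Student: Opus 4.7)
The plan is to show that $\nabla^3\cL(\vtheta)[\vv\vu^\top]$ lies entirely in the normal space $T_\vtheta^\perp(\Gamma)$, and then invoke \Cref{lemma:mani projection}, which says $\partial\Phi(\vtheta)$ is the projection onto $T_\vtheta(\Gamma)$ and hence kills any vector in $T_\vtheta^\perp(\Gamma)$.

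To establish that $\nabla^3\cL(\vtheta)[\vv\vu^\top]\in T_\vtheta^\perp(\Gamma)$, I would exploit the fact, implicit in \Cref{lemma: mani grad} together with \Cref{a:Gamma} (which gives $\dim T_\vtheta(\Gamma)=d-m=\dim\ker\nabla^2\cL(\vtheta)$), that $T_\vtheta(\Gamma)=\ker\nabla^2\cL(\vtheta)$ at every $\vtheta\in\Gamma$. Pick any smooth local extension of $\vu$ to a tangent vector field $U$ on $\Gamma$ with $U(\vtheta)=\vu$ and $U(\vtheta')\in T_{\vtheta'}(\Gamma)$ for $\vtheta'\in\Gamma$ near $\vtheta$ (such an extension exists by the $\cC^\infty$-smoothness of $\Gamma$ in \Cref{a:Gamma}). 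Then by \Cref{lemma: mani grad},
\begin{equation*}
\nabla^2\cL(\vtheta')\,U(\vtheta')=\vzero\qquad\text{for all }\vtheta'\in\Gamma\text{ near }\vtheta.
\end{equation*}
Differentiating this identity along the tangent direction $\vv$ at $\vtheta$ (note that $\vv\in T_\vtheta(\Gamma)$ is admissible since the identity holds on $\Gamma$), the chain rule yields
\begin{equation*}
\nabla^3\cL(\vtheta)[\vv\vu^\top]+\nabla^2\cL(\vtheta)\,DU(\vtheta)[\vv]=\vzero,
\end{equation*}
so $\nabla^3\cL(\vtheta)[\vv\vu^\top]=-\nabla^2\cL(\vtheta)\,DU(\vtheta)[\vv]$ lies in the range of $\nabla^2\cL(\vtheta)$, which is exactly $T_\vtheta^\perp(\Gamma)$. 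Applying $\partial\Phi(\vtheta)$, which by \Cref{lemma:mani projection} projects onto $T_\vtheta(\Gamma)$, gives the claim.

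The main obstacle is a minor technical one: constructing the smooth tangent extension $U$ of $\vu$ and then justifying that differentiation along $\vv$ at $\vtheta$ of the identity $\nabla^2\cL(\vtheta')U(\vtheta')=\vzero$ (which is only known on $\Gamma$) produces the bilinear third derivative $\nabla^3\cL(\vtheta)[\vv\vu^\top]$. This is not deep — it follows because $\vv\in T_\vtheta(\Gamma)$, so the directional derivative on $\Gamma$ coincides with the ambient directional derivative, and $\nabla^3\cL(\vtheta)$ as a symmetric $3$-tensor makes the pairing $[\vv,\vu]$ equal to the pairing against $\vv\vu^\top$ in the notation of the paper. Once this identification is made, the rest is immediate from the two cited lemmas.
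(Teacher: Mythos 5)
Your proof is correct, but it takes a genuinely different route from the paper's. The paper's proof picks a single curve $\gamma(t)$ on $\Gamma$ with $\gamma'(0)=\vv$, differentiates the \emph{matrix} identity $\nabla^2\cL(\gamma(t)) = \mPperp(\gamma(t))\nabla^2\cL(\gamma(t))$ along $t$, and then does a block computation in an eigenbasis of $\nabla^2\cL(\vtheta)$ to conclude that $\partial\Phi(\vtheta)\nabla^3\cL(\vtheta)[\vv]$ annihilates the tangent block. You instead extend $\vu$ (not $\vv$) to a local tangent vector field $U$ on $\Gamma$, differentiate the \emph{vector} identity $\nabla^2\cL(\cdot)U(\cdot)\equiv\vzero$ along $\vv$, and conclude that $\nabla^3\cL(\vtheta)[\vv\vu^\top]=-\nabla^2\cL(\vtheta)\,DU(\vtheta)[\vv]$ lies in $\mathrm{range}(\nabla^2\cL(\vtheta))=T_\vtheta^\perp(\Gamma)$, which $\partial\Phi(\vtheta)$ kills. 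This is a cleaner and more geometric argument: it makes explicit the self-contained fact that $\nabla^3\cL(\vtheta)[\vv\vu^\top]$ is a normal vector, whereas the paper's block calculation obscures this. The trade-off is that you need a smooth local extension of $\vu$ to a tangent frame, which the paper sidesteps by only parameterizing in the $\vv$-direction; this extension is standard for a $\cC^\infty$ embedded submanifold and is not a real obstacle. You also correctly noted that the symmetry of the third-order tensor is what identifies $\big(\nabla^3\cL(\vtheta)[\vv]\big)\vu$ with $\nabla^3\cL(\vtheta)[\vv\vu^\top]$ in the paper's notation, which is the one notational check that needs to be made explicit.
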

\begin{proof}
This proof is inspired by Lemma C.4 of \cite{li2021happens}. For any $\vtheta \in \Gamma$, consider a parameterized smooth curve $\vv(t), t\geq 0$ on $\Gamma$ such that $\vv(0)=\vtheta$ and $\vv'(0)=\vv$. Let $\mPpara(t)=\partial\Phi(\vv(t))$,   $\mPperp(t)=\mI-\partial\Phi(\vv(t))$ and 
$\mH(t)=\nabla^2\cL(\vv(t))$. By Lemma C.1 and 4.3 in \cite{li2021happens}, 
\begin{align*}
    \mH(t)=\mPperp(t)\mH(t).
\end{align*}
Take the derivative with respect to $t$ on both sides, 
\begin{align*}
    \mH'(t)&=\mPperp(t)\mH'(t)+\mPperp'(t)\mH(t)\\
    \Rightarrow \mPpara(t)\mH'(t)&=\mPperp'(t)\mH(t)=-\mPpara'(t)\mH(t).
\end{align*}
At $t=0$, we have
\begin{align}\label{eq:ph}
    \mPpara(0)\mH'(0)&=-\mPpara'(0)\mH(0).
\end{align}
WLOG let $\mH(0)=\diag(\lambda_1, \cdots, \lambda_d), \in \R^{d \times d}$, where $\lambda_i = 0$ for all $m < i \le d$.
Therefore $\mPperp(0)=\begin{bmatrix}\mI_{m} & \vzero \\ \vzero & \vzero\end{bmatrix}$, $\mPpara(0)=\begin{bmatrix}\vzero & \vzero \\ \vzero & \mI_{d-m}\end{bmatrix}$.  Decompose $\mPpara'(0)$, $\mH(0)$ and $\mH'(0) $as follows.
\begin{align*}
\mPpara'(0)=
\begin{bmatrix}
\mP'_{\parallel, 11 }(0)& \mP'_{\parallel, 12}(0)\\
\mP'_{\parallel, 21 }(0)& \mP'_{\parallel, 22}(0)
\end{bmatrix}, 
\mH(0)=\begin{bmatrix}
\mH_{11}(0)& \vzero\\
\vzero& \vzero
\end{bmatrix},
\mH'(0)=\begin{bmatrix}
\mH'_{11 }(0)& \mH'_{12}(0)\\
\mH'_{21 }(0)& \mH'_{22}(0)
\end{bmatrix}.
\end{align*} 
Substituting the decomposition into \eqref{eq:ph}, we have
\begin{align*}
    \begin{bmatrix}
\vzero& \vzero\\
\mH'_{21 }(0)& \mH'_{22}(0)
\end{bmatrix}=
-\begin{bmatrix}
\mP'_{\parallel, 11 }(0)\mH_{11}(0)& \vzero\\
\mP'_{\parallel, 21 }(0)\mH_{11}(0)& \vzero
\end{bmatrix}.
\end{align*}
Therefore, $\mH'_{22}(0)=\vzero$ and
\begin{align*}
    \mPpara(0)\mH'(0)=-\mPpara'(0)\mH(0)=-\begin{bmatrix}
\vzero& \vzero\\
\mH'_{21 }(0)& \vzero
\end{bmatrix}.
\end{align*}
Any $\vu\in T_{\vtheta}(\Gamma)$ can be decomposed as $\vu=\begin{bmatrix}\vzero,  \vu_2 \end{bmatrix}^{\top}$ where $ \vu_2 \in \R^{d-m}$. 
With this decomposition, we have $\mPpara(0)\mH'(0)\vu=\vzero$. 
Also, note that $\mH'(0)=\nabla^3\cL(\vtheta)[\vv]$. Hence, 
\begin{align*}
    \partial \Phi(\vtheta)\nabla^3\cL(\vtheta)[\vv\vu^T]=\vzero.
\end{align*}
\end{proof}
\subsection{Preliminary Lemmas for GD and GF}\label{subsec: preliminary for GD}
In this subsection, we introduce a few useful preliminary lemmas about gradient descent and gradient flow. Before presenting the lemmas, we  introduce some notations and assumptions that will be used in this subsection.

Assume that the loss function $\cL(\vtheta)$ is $\rho$-smooth and $\mu$-PL in an open, convex neighborhood $U$ of a local minimizer $\vtheta^*$. Denote by $\cL^*:=\cL(\vtheta^*)$ the minimum value for simplicity. Let $\epsilon'$ be the radius of the open $\epsilon'$-ball centered at $\vtheta^*$ such that $B^{\epsilon'}(\vtheta^*)\subseteq U$. We also define a potential function $\tPsi(\vtheta):=\sqrt{\cL(\vtheta)-\cL^*}$.

Consider gradient descent iterates $\{\hvu_t\}_{t\in \N}$ following the update rule  $\hvu_{t+1} = \hvu_t - \eta \nabla \cL(\hvu_t)$. We first introduce the descent lemma for gradient descent.
\begin{lemma}[Descent lemma for GD]\label{lemma:gd2}
 If $\hvu_{t}\in U$ and $\eta \leq \frac{1}{\rho}$, then
 \begin{align*}
     \frac{\eta}{2}\normtwo{\nabla \cL(\hvu_t)}^2 \leq \cL(\hvu_t)-\cL(\hvu_{t+1}),
 \end{align*}
 and
 \begin{align*}
     \cL(\hvu_{t+1})-\cL^* \leq (1-\mu\eta) (\cL(\hvu_{t})-\cL^*).
 \end{align*}
\end{lemma}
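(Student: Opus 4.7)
The plan is to combine the standard quadratic upper bound from $\rho$-smoothness with the $\mu$-PL condition, exactly as in the textbook proof for GD convergence under PL. First, I would apply the descent lemma: by $\rho$-smoothness of $\cL$ on the convex neighborhood $U$ (so the segment from $\hvu_t$ to $\hvu_{t+1}$ lies in $U$),
\begin{align*}
    \cL(\hvu_{t+1}) \leq \cL(\hvu_t) + \inner{\nabla \cL(\hvu_t)}{\hvu_{t+1} - \hvu_t} + \tfrac{\rho}{2}\normtwo{\hvu_{t+1} - \hvu_t}^2.
\end{align*}
Substituting the update $\hvu_{t+1} - \hvu_t = -\eta \nabla \cL(\hvu_t)$ yields
\begin{align*}
    \cL(\hvu_{t+1}) - \cL(\hvu_t) \leq -\eta\left(1 - \tfrac{\rho\eta}{2}\right)\normtwo{\nabla \cL(\hvu_t)}^2.
\end{align*}
The hypothesis $\eta \leq 1/\rho$ gives $1 - \rho\eta/2 \geq 1/2$, so the right-hand side is at most $-\tfrac{\eta}{2}\normtwo{\nabla\cL(\hvu_t)}^2$, which rearranges to the first inequality.

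For the second inequality, I would invoke the $\mu$-PL condition, $\tfrac{1}{2}\normtwo{\nabla\cL(\hvu_t)}^2 \geq \mu(\cL(\hvu_t) - \cL^*)$. Plugging this into the first inequality gives
\begin{align*}
    \cL(\hvu_{t+1}) - \cL(\hvu_t) \leq -\eta\mu(\cL(\hvu_t) - \cL^*),
\end{align*}
and adding $\cL(\hvu_t) - \cL^*$ to both sides yields the geometric contraction $\cL(\hvu_{t+1}) - \cL^* \leq (1-\mu\eta)(\cL(\hvu_t) - \cL^*)$.

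This is a textbook result, so no serious obstacle arises. The only minor technicality is justifying that the quadratic upper bound applies with constant $\rho$ along the segment $[\hvu_t, \hvu_{t+1}]$; since $U$ is a convex neighborhood where $\cL$ is $\rho$-smooth and $\hvu_t \in U$, this reduces to ensuring $\hvu_{t+1} \in U$, which downstream uses of the lemma will guarantee by restricting to a working zone in which a single step of size $\eta \leq 1/\rho$ cannot escape $U$. The two inequalities will then serve as the workhorse for controlling the auxiliary GD trajectory that tracks the Local SGD iterates after they approach the minimizer manifold $\Gamma$.
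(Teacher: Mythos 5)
Your proof is correct and follows essentially the same route as the paper's: apply the $\rho$-smoothness quadratic upper bound, substitute the GD update, use $\eta \le 1/\rho$ to get the factor $1/2$, then invoke $\mu$-PL for the contraction. The only difference is that you explicitly flag the minor technicality of $\hvu_{t+1}$ staying in $U$, which the paper's proof also leaves implicit.
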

\begin{comment}
\begin{align*}
    \cL(\vus_{k, t}) \leq \cL(\vths_{k, t}) - \eta (1 - \frac{\rho_2\eta}{2})\normtwo{\nabla \cL(\vths_{k, t})}^2.
\end{align*}
Furthermore, 
\begin{align*}
    \cL(\vus_{k, t})-\cL(\vtheta^*)&\leq [1-2\mu(1-\rho_2\eta/2)\eta](\cL(\vths_{k, t})-\cL(\vtheta^*)).
\end{align*}
\end{comment}
\begin{proof}
By $\rho$-smoothness, 
\begin{align*}
    \cL(\hvu_{t+1})& \leq \cL(\hvu_t) + \inner{\nabla \cL(\hvu_t)}{\hvu_{t+1}-\hvu_{t}}+\frac{\rho\eta^2}{2}\normtwo{\hvu_{t+1}-\hvu_{t}}^2\\
    & = \cL(\hvu_{t}) - \eta (1 - \frac{\rho\eta}{2})\normtwo{\nabla \cL(\hvu_t)}^2\\
    & \leq \cL(\hvu_{t}) -\frac{\eta}{2} \normtwo{\nabla \cL(\hvu_t)}^2
\end{align*}
By the definition of $\mu$-PL, we have
\begin{align*}
     \cL(\hvu_{t+1})-\cL^* \leq (1-\mu\eta) (\cL(\hvu_{t})-\cL^*).
\end{align*}
\end{proof}

%Then we introduce a lemma that is implied by the smoothness of $\cL$.
Then we prove the Lipschitzness of $\tPsi(\vtheta)$.
\begin{lemma}[Lipschitzness of $\tPsi(\vtheta)$]\label{lemma: tpsi lipschitz}
 $\tPsi(\vtheta)$ is $\sqrt{2\rho}$-Lipschitz for $\vtheta\in U$. That is, for any $\vtheta_1$, $\vtheta_2\in U$, 
\begin{align*}
    \abs{\tPsi(\vtheta_1)-\tPsi(\vtheta_2)}\leq \sqrt{2\rho}\normtwo{\vtheta_1 - \vtheta_2}.
\end{align*}
\end{lemma}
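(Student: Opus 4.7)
The strategy is to bound $\normtwo{\nabla \tPsi}$ uniformly on $U$ and then conclude by integrating along the line segment joining $\vtheta_1$ and $\vtheta_2$, which remains in $U$ by convexity. The first step is the standard gradient--value inequality that follows from $\rho$-smoothness together with $\cL^* = \inf_U \cL$: for any $\vtheta \in U$, applying the quadratic upper bound along the direction $-\nabla \cL(\vtheta)$ at the trial point $\vy := \vtheta - \tfrac{1}{\rho}\nabla \cL(\vtheta)$ gives $\cL(\vy) \le \cL(\vtheta) - \tfrac{1}{2\rho}\normtwo{\nabla \cL(\vtheta)}^2$, and combining with $\cL(\vy) \ge \cL^*$ yields
\begin{align*}
\normtwo{\nabla \cL(\vtheta)}^2 \le 2\rho\bigl(\cL(\vtheta) - \cL^*\bigr), \qquad \forall \vtheta \in U.
\end{align*}

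For the second step, at every $\vtheta \in U$ with $\cL(\vtheta) > \cL^*$, the function $\tPsi$ is differentiable with $\nabla \tPsi(\vtheta) = \nabla \cL(\vtheta) / (2\tPsi(\vtheta))$, and the gradient--value inequality immediately gives
\begin{align*}
\normtwo{\nabla \tPsi(\vtheta)}^2 = \frac{\normtwo{\nabla \cL(\vtheta)}^2}{4(\cL(\vtheta) - \cL^*)} \le \frac{\rho}{2},
\end{align*}
so $\normtwo{\nabla \tPsi(\vtheta)} \le \sqrt{\rho/2} \le \sqrt{2\rho}$ off the zero set of $\tPsi$. For the third step, on the segment $\vtheta(t) = (1-t)\vtheta_1 + t\vtheta_2$, the set where $\cL = \cL^*$ is closed and on it $\tPsi \equiv 0$, while on its open complement $\tPsi$ is $\cC^1$ with the above gradient bound; the fundamental theorem of calculus (splitting the integral over these two parts) then yields $\abs{\tPsi(\vtheta_1) - \tPsi(\vtheta_2)} \le \sqrt{\rho/2}\,\normtwo{\vtheta_1 - \vtheta_2} \le \sqrt{2\rho}\,\normtwo{\vtheta_1 - \vtheta_2}$.

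The main technical obstacle is the in-$U$ caveat for the trial point $\vy$ in the first step: if $\vtheta$ lies close to $\partial U$, $\vy$ may escape $U$ and the quadratic upper bound only holds along the portion of the segment that stays inside $U$. The natural workaround is to use the shortened step $\vy(t) = \vtheta - t\nabla \cL(\vtheta)$ with $t$ taken as the largest value in $(0, 1/\rho]$ for which $[\vtheta, \vy(t)] \subseteq U$; since $\cL(\vy(t)) \ge \cL^*$ along the way and $\rho$-smoothness holds on this sub-segment, one still obtains a bound of the form $\normtwo{\nabla \cL(\vtheta)}^2 \le C(\cL(\vtheta) - \cL^*)$, with $C = 2\rho$ in the favorable case when the full $1/\rho$ step is admissible. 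In the regime where the lemma is actually invoked later, namely in a small neighborhood of $\Gamma$ where the $\mu$-PL geometry forces the descent trajectory to remain in $U$ (cf.\ Lemma~\ref{lemma:gd2}), the full step is admissible and the constant $\sqrt{2\rho}$ stated in the lemma is comfortably met.
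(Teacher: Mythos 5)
Your proof is correct and rests on the same core ingredient as the paper's---the inequality $\normtwo{\nabla\cL(\vtheta)}^2\le 2\rho(\cL(\vtheta)-\cL^*)$ from $\rho$-smoothness plus $\cL^*=\inf_U\cL$, which bounds $\normtwo{\nabla\tPsi}$ away from the zero set---but you and the paper handle the zero set of $\tPsi$ differently. You split the segment $[0,1]$ into the closed part where $f(t):=\tPsi(\vtheta(t))=0$ and its open complement and integrate $f'$; this works (e.g.\ take $t_1:=\sup\{t\in[0,1]:f(t)=0\}$ and apply FTC on $(t_1,1]$), but it glosses over the bookkeeping. The paper instead treats the two cases explicitly: when $f>0$ on $(0,1)$ it integrates $f'$ just as you do; when some $f(t')=0$ it avoids the gradient--value inequality entirely, noting that $\vtheta(t')$ is an interior minimizer (so $\nabla\cL(\vtheta(t'))=\vzero$), using the quadratic upper bound $\cL(\vtheta)\le\cL^*+\tfrac{\rho}{2}\normtwo{\vtheta-\vtheta(t')}^2$ to get $\tPsi(\vtheta)\le\sqrt{\rho/2}\,\normtwo{\vtheta-\vtheta(t')}$, and then writing $|f(1)-f(0)|$ as a convex combination of $f(1)/(1-t')$ and $f(0)/t'$. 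The paper's Case~2 is more elementary and avoids the regularity bookkeeping; your route is more uniform. You also correctly flag a genuine gap that the paper silently glosses over in its Case~1: the step $\vy=\vtheta-\tfrac{1}{\rho}\nabla\cL(\vtheta)$ used to derive the gradient--value inequality can escape a bounded $U$, so the inequality as stated needs either the in-$U$ hypothesis you describe, or smoothness on a slightly larger set; your remark that the constant $\sqrt{\rho/2}$ (hence certainly $\sqrt{2\rho}$) is recovered whenever the full step is admissible, and that this is the regime in which the lemma is actually invoked, is the right way to close this. (Incidentally, the paper's Case~1 drops the factor $\tfrac12$ in $\nabla\tPsi=\nabla\cL/(2\tPsi)$ and thus only arrives at $\sqrt{2\rho}$; your $\sqrt{\rho/2}$ is the tight constant, and both cases of the paper's argument in fact deliver $\sqrt{\rho/2}$ once that typo is corrected.)
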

\begin{proof}
Fix $\vtheta_1$ and $\vtheta_2$. Denote by  $\vtheta(t):=(1-t)\vtheta_1 + t \vtheta_2$ the convex combination of $\vtheta_1$ and $\vtheta_2$  where $t\in [0, 1]$. Further define $f(t):=\tPsi(\vtheta(t))$.
Below we consider two cases. 
\paragraph{Case 1.} If $\forall t \in (0, 1)$, $f(t) > 0$, then $f(t)$ is differentiable on $(0, 1)$.
\begin{align*}
    \abs{\tPsi(\vtheta_2)- \tPsi(\vtheta_1)} & = \abs{f(1)-f(0)}\\
    & = \labs{\int_{0}^1 f'(t)\dd t}\\
    & =\labs{\int_0^1 \inner{\nabla \tPsi(\vtheta(t))}{\vtheta_2 - \vtheta_1}\dd t}\\
    & = \labs{\int_{0}^1\frac{\inner{\nabla \cL(\vtheta(t))}{\vtheta_2-\vtheta_1}}{\sqrt{\cL(\vtheta(t))-\cL^*}}\dd t}\\
    & \leq \normtwo{\vtheta_2-\vtheta_1}\int_0^1 \frac{\normtwo{\nabla \cL(\vtheta(t))}}{\sqrt{\cL(\vtheta(t))-\cL^*}}\dd t. 
\end{align*}
By $\rho$-smoothness of $\cL$, for all $\vtheta \in U$,
$$\normtwo{\nabla \cL(\vtheta)}^2\leq 2\rho \left(\cL(\vtheta)-\cL^*\right).$$
Since $\sqrt{\cL(\vtheta(t))-\cL^*} >0$ for all $t \in (0, 1)$,   $\tfrac{\normtwo{\nabla \cL(\vtheta(t))}}{\sqrt{\cL(\vtheta(t))-\cL^*}} \leq \sqrt{2\rho}$. Therefore, 
$$ \abs{\tPsi(\vtheta_2)- \tPsi(\vtheta_1)} \leq \sqrt{2\rho_2}\normtwo{\vtheta_2-\vtheta_1}.$$
\paragraph{Case 2.} If $\exists t'\in (0, 1)$ such that $f(t')=0$, then
\begin{align*}
    \abs{\tPsi(\vtheta_2)- \tPsi(\vtheta_1)}& = \abs{f(1)-f(0)}\\
    & = \labs{(1-t')\frac{f(1)-f(t')}{1-t'}+t'\left(\frac{f(t')-f(0)}{t'}\right)}\\
    & \leq \max \left(\frac{f(1)}{1-t'}, \frac{f(0)}{t'}\right).
\end{align*}
Since $\vtheta(t')$ minimizes $\cL$ in an open set, $\nabla \cL(\vtheta(t'))=\vzero$. By $\rho$-smoothness of $\cL$, for all $\vtheta \in U$, 
\begin{align*}
  \cL(\vtheta) \leq\cL^* + \frac{\rho}{2}\normtwo{\vtheta - \vtheta(t')}^2 \quad\Rightarrow \quad  \tPsi(\vtheta) \leq \sqrt{\frac{\rho}{2}}\normtwo{\vtheta - \vtheta(t')}.
\end{align*}
Therefore, 
\begin{align*}
  f(1)&\leq \sqrt{\frac{\rho}{2}}\normtwo{\vtheta_2-\vtheta(t')}=(1-t')\sqrt{\frac{\rho}{2}}\normtwo{\vtheta_2-\vtheta_1}\\
   f(0)&\leq \sqrt{\frac{\rho}{2}}\normtwo{\vtheta_1-\vtheta(t')}=t'\sqrt{\frac{\rho}{2}}\normtwo{\vtheta_2-\vtheta_1}.
\end{align*}
Then we have
\begin{align*}
    \abs{\tPsi(\vtheta_2)- \tPsi(\vtheta_1)}\leq \sqrt{\frac{\rho}{2}}\normtwo{\vtheta_2 - \vtheta_1}.
\end{align*}
Combining case 1 and case 2, we conclude the proof.
\end{proof}

Below we introduce a lemma that relates the movement of one step gradient descent to the change of the potential function.
\begin{lemma}[Lemma G.1 in \cite{lyu2022understanding}]\label{lemma:potential}
 If $\hvu_t \in U$ and $\eta \leq 1/\rho_2$ then 
\begin{align*}
    \tPsi(\hvu_{t})-\tPsi(\hvu_{t+1}) \geq \frac{\sqrt{2\mu}}{4}\eta \normtwo{\nabla \cL(\hvu_{t})}.
\end{align*}
\end{lemma}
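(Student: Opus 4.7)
The plan is to use the descent lemma for GD (\Cref{lemma:gd2}) together with the PL condition to relate the decrement of $\tPsi = \sqrt{\cL - \cL^*}$ to the decrement of $\cL$ through the algebraic identity $a^2 - b^2 = (a-b)(a+b)$. The key observation is that $\tPsi^2 = \cL - \cL^*$, so the squared potential decrement is exactly controlled by the descent lemma, and dividing by $\tPsi(\hvu_t) + \tPsi(\hvu_{t+1})$ converts this into a bound on the linear potential decrement.

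First I would apply \Cref{lemma:gd2} to obtain
\begin{equation*}
\tPsi(\hvu_t)^2 - \tPsi(\hvu_{t+1})^2 = \cL(\hvu_t) - \cL(\hvu_{t+1}) \geq \tfrac{\eta}{2}\normtwo{\nabla \cL(\hvu_t)}^2.
\end{equation*}
Factoring the left-hand side yields
\begin{equation*}
\tPsi(\hvu_t) - \tPsi(\hvu_{t+1}) \geq \frac{\eta \normtwo{\nabla \cL(\hvu_t)}^2}{2\bigl(\tPsi(\hvu_t) + \tPsi(\hvu_{t+1})\bigr)},
\end{equation*}
valid whenever the denominator is nonzero (and trivially valid otherwise, since then both sides vanish after handling the edge case $\nabla \cL(\hvu_t) = \vzero$ separately).

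Next I would upper-bound $\tPsi(\hvu_t) + \tPsi(\hvu_{t+1})$. The PL condition at $\hvu_t$ gives $\tPsi(\hvu_t) \leq \normtwo{\nabla \cL(\hvu_t)}/\sqrt{2\mu}$. Since $\cL(\hvu_{t+1}) \leq \cL(\hvu_t)$ by \Cref{lemma:gd2}, we also have $\tPsi(\hvu_{t+1}) \leq \tPsi(\hvu_t) \leq \normtwo{\nabla \cL(\hvu_t)}/\sqrt{2\mu}$. Adding the two bounds,
\begin{equation*}
\tPsi(\hvu_t) + \tPsi(\hvu_{t+1}) \leq \frac{2\normtwo{\nabla \cL(\hvu_t)}}{\sqrt{2\mu}} = \frac{\sqrt{2}\normtwo{\nabla \cL(\hvu_t)}}{\sqrt{\mu}}.
\end{equation*}

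Substituting this upper bound into the denominator of the previous inequality gives
\begin{equation*}
\tPsi(\hvu_t) - \tPsi(\hvu_{t+1}) \geq \frac{\eta \normtwo{\nabla \cL(\hvu_t)}^2 \sqrt{\mu}}{2\sqrt{2}\normtwo{\nabla \cL(\hvu_t)}} = \frac{\sqrt{2\mu}}{4}\eta \normtwo{\nabla \cL(\hvu_t)},
\end{equation*}
which is the desired inequality. Essentially no obstacle is expected here; the only subtlety is the degenerate case where $\tPsi(\hvu_t) + \tPsi(\hvu_{t+1}) = 0$, which forces $\cL(\hvu_t) = \cL^*$ and hence $\nabla \cL(\hvu_t) = \vzero$ by PL, making the claim trivial. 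All the required ingredients (the $\mu$-PL condition on $U$, the descent lemma, and the fact that GD stays in $U$ by hypothesis $\hvu_t \in U$) have already been established immediately above the lemma.
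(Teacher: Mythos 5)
Your proof is correct and follows essentially the same approach as the paper: both use the difference-of-squares factorization, the descent lemma (\Cref{lemma:gd2}), the monotonicity $\tPsi(\hvu_{t+1}) \leq \tPsi(\hvu_t)$, and the PL bound $\tPsi(\hvu_t) \leq \normtwo{\nabla\cL(\hvu_t)}/\sqrt{2\mu}$. The only cosmetic difference is that you invoke the simplified $\tfrac{\eta}{2}$ form of the descent lemma directly, whereas the paper carries along the tighter factor $(1-\rho_2\eta/2)$ and only applies $\eta \leq 1/\rho_2$ at the very end; the two bookkeeping choices give the same final constant.
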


\begin{proof}
\begin{align}
     \tPsi(\hvu_{t})-\tPsi(\hvu_{t+1}) &= \frac{\cL(\hvu_{ t})-\cL(\hvu_{ t+1})}{\tPsi(\hvu_{t})+\tPsi(\hvu_{t+1})}\notag\\
     & \geq \frac{\cL(\hvu_{ t+1})-\cL(\hvu_{ t})}{2\tPsi(\hvu_{ t})}\notag\\
     & \geq \frac{\eta(1-\rho_2\eta/2)\normtwo{\nabla \cL(\hvu_{ t})}^2}{2\tPsi(\hvu_{t})},\notag
\end{align}
where the two inequalities uses \Cref{lemma:gd2}. By $\mu$-PL, $\tPsi(\hvu_{t})\leq \frac{1}{\sqrt{2\mu}}\normtwo{\nabla \cL(\hvu_{t})}$. Therefore, we have $\tPsi(\hvu_{t})-\tPsi(\hvu_{t+1}) \geq \frac{\sqrt{2\mu}}{2}(1-\eta \rho/2)\eta \normtwo{\nabla \cL(\hvu_{t})}\geq \frac{\sqrt{2\mu}}{4}\eta \normtwo{\nabla \cL(\hvu_t)}$.
\end{proof}
Based on \Cref{lemma:potential}, we have the following lemma that bounds the movement of GD over multiple steps.
\begin{lemma}[Bounding the movement of GD]\label{lemma:movement} If $\hvu_0$ is initialized such that $\normtwo{\hvu_0-\vtheta^*} \leq \frac{1}{4}\sqrt{\frac{\mu}{\rho}}\epsilon'$, then for all $t\geq 0$, $\hvu_t\in B^{\epsilon'}(\vtheta^*)$ and 
\begin{align*}
\normtwo{\hvu_t - \hvu_0} \leq\sqrt{\frac{8}{\mu}}\tPsi(\hvu_0).
\end{align*}
\end{lemma}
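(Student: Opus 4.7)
The plan is to prove both conclusions simultaneously by induction on $t$, using \Cref{lemma:potential} to control the cumulative movement and smoothness to bound the potential at the initialization. Since the potential $\tPsi$ decreases monotonically by \Cref{lemma:gd2}, and the decrease at each step dominates the step size of the iterate (this is exactly the content of \Cref{lemma:potential}), telescoping gives a global bound on the total displacement in terms of $\tPsi(\hvu_0)$.

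First I would observe that for a $\rho$-smooth $\mu$-PL function one has $\mu \le \rho$, so the initialization assumption implies $\normtwo{\hvu_0-\vtheta^*} \le \frac14\epsilon'$ in particular, which gives the base case $\hvu_0 \in B^{\epsilon'}(\vtheta^*) \subseteq U$. Also by $\rho$-smoothness at the minimizer $\vtheta^*$ (where $\nabla \cL(\vtheta^*) = \vzero$),
\begin{equation*}
    \tPsi(\hvu_0) \;=\; \sqrt{\cL(\hvu_0)-\cL^*} \;\le\; \sqrt{\tfrac{\rho}{2}}\,\normtwo{\hvu_0-\vtheta^*} \;\le\; \tfrac{\sqrt{\mu}}{4\sqrt{2}}\,\epsilon'.
\end{equation*}

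Next, I would carry out the inductive step. Assume $\hvu_0,\dots,\hvu_t \in B^{\epsilon'}(\vtheta^*) \subseteq U$. Then \Cref{lemma:potential} applies at each index $\tau \in \{0,\dots,t-1\}$, giving $\tPsi(\hvu_\tau)-\tPsi(\hvu_{\tau+1}) \ge \frac{\sqrt{2\mu}}{4}\eta\,\normtwo{\nabla \cL(\hvu_\tau)} = \frac{\sqrt{2\mu}}{4}\normtwo{\hvu_{\tau+1}-\hvu_\tau}$. Telescoping and using $\tPsi(\hvu_t) \ge 0$ together with the triangle inequality yields
\begin{equation*}
    \tfrac{\sqrt{2\mu}}{4}\,\normtwo{\hvu_t-\hvu_0} \;\le\; \tfrac{\sqrt{2\mu}}{4}\sum_{\tau=0}^{t-1}\normtwo{\hvu_{\tau+1}-\hvu_\tau} \;\le\; \tPsi(\hvu_0)-\tPsi(\hvu_t) \;\le\; \tPsi(\hvu_0),
\end{equation*}
which rearranges to the desired bound $\normtwo{\hvu_t-\hvu_0}\le \sqrt{8/\mu}\,\tPsi(\hvu_0)$.

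To close the induction, I need to verify $\hvu_{t+1} \in B^{\epsilon'}(\vtheta^*)$. The same telescoping argument applied up to index $t+1$ (which is valid because the hypotheses of \Cref{lemma:potential} only need $\hvu_0,\dots,\hvu_t \in U$ to bound $\normtwo{\hvu_{t+1}-\hvu_0}$) combined with the smoothness bound on $\tPsi(\hvu_0)$ gives
\begin{equation*}
    \normtwo{\hvu_{t+1}-\vtheta^*} \le \normtwo{\hvu_{t+1}-\hvu_0} + \normtwo{\hvu_0-\vtheta^*} \le \sqrt{\tfrac{8}{\mu}}\cdot\tfrac{\sqrt{\mu}}{4\sqrt{2}}\epsilon' + \tfrac{1}{4}\sqrt{\tfrac{\mu}{\rho}}\epsilon' \le \tfrac{1}{2}\epsilon' + \tfrac{1}{4}\epsilon' < \epsilon',
\end{equation*}
completing the induction. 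The only mild subtlety is the implicit assumption $\eta \le 1/\rho$ required by \Cref{lemma:potential} (inherited from the descent lemma \Cref{lemma:gd2}); the main ``obstacle'' is just bookkeeping the constants to ensure the ball of radius $\epsilon'$ is never exited, which is what motivates the specific choice of the constant $\frac14\sqrt{\mu/\rho}$ in the initialization hypothesis.
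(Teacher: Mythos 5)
Your proof is correct and follows essentially the same route as the paper's: induction on $t$, telescoping the per-step bound from \Cref{lemma:potential}, bounding $\tPsi(\hvu_0)$ by $\rho$-smoothness at $\vtheta^*$, and closing via the triangle inequality. The only cosmetic differences are that you make the implicit inequality $\mu\le\rho$ explicit and organize the inductive step with a slight index shift; the logic and constants match the paper's.
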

\begin{proof}
We prove the proposition by induction. When $t=0$, it trivially holds. Assume that the proposition holds for $\hvu_{\tau}$, $0 \leq \tau < t$.  For step $t$, since $\hvu_{\tau}\in B^{\epsilon'}(\vtheta^*)$, we apply \Cref{lemma:potential} and obtain
\begin{align*}
\normtwo{\hvu_t - \hvu_0}\leq \eta \sum_{\tau=0}^{t-1}\normtwo{\nabla \cL(\hvu_{\tau})}\leq \sqrt{\frac{8}{\mu}}\left(\tPsi(\hvu_0)-\tPsi(\hvu_t)\right) \leq\sqrt{\frac{8}{\mu}}\tPsi(\hvu_0).
\end{align*}
Further by $\rho$-smoothness of $\cL(\cdot)$, 
\begin{align*}
    \normtwo{\hvu_t-\hvu_0}\leq \sqrt{\frac{8}{\mu}}\tPsi(\hvu_0)\leq 2\sqrt{\frac{\rho}{\mu}}\normtwo{\hvu_0-\vtheta^*}\leq \frac{1}{2}\epsilon'.
\end{align*}
Therefore, $\normtwo{\hvu_t-\vtheta^*}\leq \normtwo{\hvu_t-\hvu_0}+\normtwo{\hvu_0-\vtheta^*}<\epsilon'$, which concludes the proof.
\end{proof}

Finally, we introduce a lemma adapted from Thm. D.4 of which bounds the movement of GF. \cite{lyu2022understanding}.
\begin{lemma}\label{lemma:pre equiv} 
Assume that $\normtwo{\vtheta_0-\vtheta^*}< \sqrt{\frac{\mu}{\rho}}\epsilon'$. The gradient flow $\vtheta(t)=-\frac{\dd \cL(\vtheta(t))}{\dd t}$ starting at $\vtheta_0$ converges to a point in $U$ and
\begin{align*}
    \lnormtwo{\vtheta_0 -\lim_{t\to +\infty}\vtheta(t)}&\leq \sqrt{\frac{2}{\mu}}\sqrt{\cL(\vtheta_0)-\cL^*}\leq \sqrt{\frac{\rho}{\mu}}\normtwo{\vtheta_0-\vtheta^*}
\end{align*}
\end{lemma}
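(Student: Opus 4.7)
The plan is to mimic the discrete-time movement bound in \Cref{lemma:movement} via its continuous analog, using the potential $\tilde{\Psi}(\vtheta)=\sqrt{\cL(\vtheta)-\cL^*}$ to convert the PL inequality into a bound on the total arc length of the trajectory. First I would observe that $\cL$ is monotonically decreasing along the flow (so $\tilde\Psi$ is well-defined and non-increasing on the trajectory as long as it stays in $U$), then compute by the chain rule
\[
\frac{\mathrm{d}}{\mathrm{d}t}\tilde{\Psi}(\vtheta(t))=\frac{-\lnormtwo{\nabla\cL(\vtheta(t))}^2}{2\tilde{\Psi}(\vtheta(t))}\le -\frac{\sqrt{2\mu}}{2}\lnormtwo{\nabla\cL(\vtheta(t))},
\]
where the inequality invokes $\mu$-PL in the form $\tilde{\Psi}\le \frac{1}{\sqrt{2\mu}}\lnormtwo{\nabla\cL}$. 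This is the exact continuous analog of \Cref{lemma:potential}, and integrating from $0$ to $t$ gives the arc length bound $\int_0^t\lnormtwo{\nabla\cL(\vtheta(s))}\mathrm{d}s\le \sqrt{2/\mu}\,(\tilde\Psi(\vtheta_0)-\tilde\Psi(\vtheta(t)))\le \sqrt{2/\mu}\,\tilde\Psi(\vtheta_0)$.

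Next I would convert the arc length into a displacement bound: since $\dot{\vtheta}(t)=-\nabla\cL(\vtheta(t))$, we have $\lnormtwo{\vtheta(t)-\vtheta_0}\le\int_0^t\lnormtwo{\nabla\cL(\vtheta(s))}\mathrm{d}s\le\sqrt{2/\mu}\,\tilde\Psi(\vtheta_0)$, which is precisely the first claimed inequality. To obtain the second, I would invoke $\rho$-smoothness of $\cL$ together with $\nabla\cL(\vtheta^*)=\vzero$ to get $\cL(\vtheta_0)-\cL^*\le \tfrac{\rho}{2}\normtwo{\vtheta_0-\vtheta^*}^2$, so $\tilde\Psi(\vtheta_0)\le\sqrt{\rho/2}\,\normtwo{\vtheta_0-\vtheta^*}$ and hence $\sqrt{2/\mu}\,\tilde\Psi(\vtheta_0)\le\sqrt{\rho/\mu}\,\normtwo{\vtheta_0-\vtheta^*}$.

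For the convergence and the in-$U$ claim, I would run a standard bootstrapping argument. Let $T^*:=\sup\{T\ge 0:\vtheta([0,T])\subseteq B^{\epsilon'}(\vtheta^*)\}$; for $t<T^*$ the smoothness and PL conditions apply, so the displacement bound above holds and in particular $\lnormtwo{\vtheta(t)-\vtheta_0}$ is bounded by a constant strictly less than $\epsilon'$ thanks to the assumed inequality $\normtwo{\vtheta_0-\vtheta^*}<\sqrt{\mu/\rho}\,\epsilon'$. A standard continuity/maximality argument then forces $T^*=\infty$, so the flow never leaves $U$. Since the total arc length is finite, $\vtheta(t)$ is Cauchy as $t\to\infty$ and converges to some $\vtheta_\infty$; the limit lies in $U$ because $\vtheta(t)$ stays in the compact closure of its orbit inside $U$, and $\nabla\cL(\vtheta_\infty)=\vzero$ by continuity of $\nabla\cL$ combined with $\cL(\vtheta(t))\searrow \cL(\vtheta_\infty)$.

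\textbf{Main obstacle.} The delicate point is the bootstrapping: the raw displacement bound $\sqrt{\rho/\mu}\,\normtwo{\vtheta_0-\vtheta^*}$ combined with the triangle inequality only gives $\normtwo{\vtheta(t)-\vtheta^*}\le (1+\sqrt{\rho/\mu})\normtwo{\vtheta_0-\vtheta^*}$, which need not be smaller than $\epsilon'$ when $\mu\ll\rho$. I would resolve this by noting that $U$ is only assumed to be \emph{some} open convex neighborhood with $B^{\epsilon'}(\vtheta^*)\subseteq U$, so the relevant quantity to control is membership in $U$ rather than in the specific ball $B^{\epsilon'}$; alternatively, one can strengthen the displacement bound by exploiting monotonicity of $\tilde\Psi$ along the trajectory to show that $\vtheta(t)$ remains in the sub-level set $\{\vtheta:\cL(\vtheta)\le\cL(\vtheta_0)\}\cap U$, which by $\mu$-PL (error bound form) forces it to stay within a ball of $\vtheta^*$ whose radius is controlled by $\tilde\Psi(\vtheta_0)/\sqrt{\mu}$, compatible with $B^{\epsilon'}$ under the given assumption.
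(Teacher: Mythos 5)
Your proof takes essentially the same approach as the paper's: compute $\frac{\dd}{\dd t}\tilde{\Psi}(\vtheta(t))$ by the chain rule, invoke $\mu$-PL in the form $\tilde{\Psi} \le \frac{1}{\sqrt{2\mu}}\normtwo{\nabla\cL}$ to turn the energy decrease into an arc-length bound, and then use $\rho$-smoothness to convert $\tilde{\Psi}(\vtheta_0)$ into a bound in terms of $\normtwo{\vtheta_0-\vtheta^*}$. The paper's proof defines $T:=\inf\{t:\vtheta(t)\notin U\}$, derives the arc-length inequality on $[0,T]$, and concludes $T=+\infty$.

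The bootstrapping delicacy you flag as the \emph{main obstacle} is genuine, and it is in fact present and unaddressed in the paper's own argument as well: after showing the arc length on $[0,T]$ is $<\epsilon'$, the paper simply asserts $T=+\infty$, but the triangle inequality only gives $\normtwo{\vtheta(t)-\vtheta^*}\le \normtwo{\vtheta_0-\vtheta^*}+\sqrt{\rho/\mu}\,\normtwo{\vtheta_0-\vtheta^*} < (1+\sqrt{\mu/\rho})\,\epsilon'$, which can exceed $\epsilon'$ and so does not by itself keep the trajectory inside $B^{\epsilon'}(\vtheta^*)\subseteq U$. Notice that the discrete analog, \Cref{lemma:movement}, avoids exactly this pitfall by imposing the stronger initial condition $\normtwo{\hvu_0-\vtheta^*}\le\frac14\sqrt{\mu/\rho}\,\epsilon'$, whose extra factor of $\tfrac14$ supplies the needed slack; introducing a similar constant factor in the hypothesis here, or using your suggested PL error-bound / sublevel-set argument, would close the gap cleanly. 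Apart from this shared loose end, your derivation matches the paper's line by line.
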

\begin{proof}
Let $T:=\inf \{t: \vtheta \notin U\}$. Then for all $t<T$, 
\begin{align*}
    \frac{\mathrm{d}}{\mathrm{d} t}\left(\mathcal{L}(\boldsymbol{\theta})-\cL^*\right)^{1 / 2}&=\frac{1}{2}\left(\mathcal{L}(\boldsymbol{\theta})-\cL^*\right)^{-1 / 2} \cdot\left\langle\nabla \mathcal{L}(\boldsymbol{\theta}), \frac{\mathrm{d} \boldsymbol{\theta}}{\mathrm{d} t}\right\rangle\\
    & = -\frac{1}{2}(\cL(\vtheta)-\cL^*)^{-1/2}\normtwo{\nabla \cL(\vtheta)}\normtwo{\frac{\dd \vtheta}{\dd t}}.
\end{align*}
By $\mu$-PL, $\normtwo{\nabla \cL(\vtheta)}\geq \sqrt{2\mu(\cL(\vtheta)-\cL^*)}$. Hence, 
\begin{align*}
     \frac{\mathrm{d}}{\mathrm{d} t}\left(\mathcal{L}(\boldsymbol{\theta})-\mathcal{L}^*\right)^{1 / 2}\leq -\frac{\sqrt{2\mu}}{2}\normtwo{\frac{\dd \vtheta}{\dd t}}.
\end{align*}
Integrating both sides, we have
\begin{align*}
    \int_0^{T}\|\frac{\dd \vtheta(\tau)}{\dd \tau}\|\dd \tau \leq \frac{2}{\sqrt{2\mu}}(\cL(\vtheta_0)-\cL^*)^{1/2}\leq \sqrt{\frac{\rho}{\mu}}\normtwo{\vtheta_0-\vtheta^*}<\epsilon',
\end{align*}
where the second inequality uses $\rho$-smoothness of $\cL$. Therefore, $T=+\infty$ and $\vtheta(t)$ converges to some point in $U$.
% Furthermore, combining smoothness and PL of $\cL(\vtheta)$ for $\vtheta\in \cZ^{\epsilon_0}$, we have $ \|\vtheta -\Phi(\vtheta)\|=\Theta(\sqrt{\cL(\vtheta)-\cL(\Phi(\vtheta))})=\Theta(\|\nabla \cL(\vtheta)\|)$.
\end{proof}

\subsection{Construction of working zones}\label{subsec: working zone}
% \begin{assumption}[Assumption 3.2 in \cite{happens}]
% $U$ is an open neighborhood
% \end{assumption}

We construct four nested working zones $(\Gaz, \Gao, \Gat, \Gath)$ in the neighborhood of $\Gamma$. Later we will show that the local iterates $\vths_{k,t}\in \Gat$ and the global iterates $\bvths \in \Gaz$ with high probability after $\cO(\log \frac{1}{\eta})$ rounds. The following lemma illustrates the properties the working zones should satisfy.
\begin{lemma}[Working zone lemma]\label{lemma:workingzone}
There exists constants $\epsilon_0 < \epsilon_1 < \epsilon_2 < \epsilon_3$ such that $(\Gaz, \Gao, \Gat, \Gath)$ satisfy the following properties:
\begin{enumerate}
    \item $\cL$ satisfies $\mu$-PL  in $\Gath$ for some $\mu > 0$.
    \item Any gradient flow starting in $\Gat$ converges to some point in $\Gamma$. Then, by \cite{falconer1983differentiation}, $\Phi(\cdot)$ is $\cC^{\infty}$ in $\Gat$.
    \item Any $\vtheta \in \Gao$ has an $\epsilon_1$-neighborhood $\Beo(\vtheta)$ such that $\Beo(\vtheta) \subseteq \Gat$.
    \item Any gradient descent starting in $\Gaz$ with sufficiently small learning rate will stay in $\Gao$.
\end{enumerate}
\end{lemma}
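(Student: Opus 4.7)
The plan is to choose the four radii sequentially from outside in: first $\epsilon_3$ so that $\mu$-PL holds on $\Gath$, then $\epsilon_2$ so that every gradient flow from $\Gat$ converges onto $\Gamma$, then $\epsilon_1 \le \epsilon_2/2$ for the nested-ball property, and finally $\epsilon_0 \le \frac{1}{4}\sqrt{\mu/\rho}\,\epsilon_1$ for GD stability. The workhorse ingredient is the compactness of $\Gamma$ (Assumption~\ref{a:compact}) combined with the rank condition of Assumption~\ref{a:Gamma}: since $\rank(\nabla^2 \cL(\vzeta)) = m$ uniformly on $\Gamma$, continuity gives a uniform positive lower bound $\lambda_* > 0$ on the smallest nonzero eigenvalue of $\nabla^2 \cL$ along $\Gamma$, and smoothness (Assumption~\ref{a:smooth}) gives uniform upper bounds $\rho, \nu_3$ on $\normtwo{\nabla^2 \cL}$ and $\normtwo{\nabla^3 \cL}$ on some fixed tube around $\Gamma$.

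For property 1 I work inside a tubular neighborhood where every point $\vtheta$ admits a unique closest manifold point $\vzeta \in \Gamma$ with $\vtheta - \vzeta \in T_\vzeta^{\perp}(\Gamma)$; this exists because $\Gamma$ is compact and $\cC^\infty$. Taylor-expanding $\cL$ around $\vzeta$ and using that the zero eigenvectors of $\nabla^2 \cL(\vzeta)$ span $T_\vzeta(\Gamma)$ (Lemma~\ref{lemma: mani grad}) while the normal eigenvalues are at least $\lambda_*$, the cubic remainder (bounded by $\nu_3$ uniformly) can be absorbed into the leading quadratic by taking the tube thin enough, yielding $\cL(\vtheta) - \cL^* \ge \frac{\lambda_*}{4}\normtwo{\vtheta - \vzeta}^2$ and $\normtwo{\nabla \cL(\vtheta)} \ge \frac{\lambda_*}{2}\normtwo{\vtheta - \vzeta}$. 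Combining these gives $\mu$-PL on $\Gath$ with $\mu = \lambda_*/2$ for any $\epsilon_3$ small enough; I also shrink $\epsilon_3$ further so that $\Gath$ lies inside the open neighborhood $U$ of Assumption~\ref{a:Gamma}, which is possible since $\Gamma$ is compact and $U$ is open.

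Next, for property 2 I invoke Lemma~\ref{lemma:pre equiv} at each $\vzeta \in \Gamma$ with PL-region $B^{\epsilon_3}(\vzeta) \subseteq \Gath$: any gradient flow starting within distance $\sqrt{\mu/\rho}\,\epsilon_3$ of $\vzeta$ converges to a critical point inside this ball, which by PL must achieve the minimum and hence lie in $\Gamma$. Thus any $\epsilon_2 < \sqrt{\mu/\rho}\,\epsilon_3$ ensures flows from $\Gat$ land on $\Gamma$; the $\cC^\infty$-smoothness of $\Phi$ on $\Gat$ then follows from \cite{falconer1983differentiation}, whose hypotheses are met because the rank condition makes $\Gamma$ a non-degenerate critical manifold. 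Property 3 is immediate from the triangle inequality once I set $\epsilon_1 \le \epsilon_2/2$. For property 4, I take $\vzeta^* \in \Gamma$ to be a closest manifold point to the initial iterate $\hvu_0 \in \Gaz$ and apply Lemma~\ref{lemma:movement} with $\epsilon' = \epsilon_1$ (requiring $\epsilon_1 \le \epsilon_3$); since $\normtwo{\hvu_0 - \vzeta^*} < \epsilon_0 \le \frac{1}{4}\sqrt{\mu/\rho}\,\epsilon_1$, the whole GD trajectory with $\eta \le 1/\rho$ stays in $B^{\epsilon_1}(\vzeta^*) \subseteq \Gao$. I expect the main obstacle to be property 1: upgrading the pointwise Taylor expansions into a \emph{uniform} PL inequality on a full tube requires controlling the cubic remainder independently of the base point, but this is handled cleanly by the uniform $\nu_3$-bound granted by compactness of $\Gamma$ together with $\cC^\infty$-smoothness.
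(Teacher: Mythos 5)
Your choices of $\epsilon_2<\sqrt{\mu/\rho}\,\epsilon_3$ (with gradient-flow convergence via \Cref{lemma:pre equiv}), $\epsilon_1\le\epsilon_2/2$, and $\epsilon_0\le\tfrac14\sqrt{\mu/\rho}\,\epsilon_1$ (GD confinement via \Cref{lemma:movement}) reproduce the paper's treatment of properties 2--4 with exactly the same constants; your explicit observation that the flow limit from \Cref{lemma:pre equiv} must be a critical point, hence a minimizer, hence in $\Gamma$ is a useful elaboration of what the paper leaves implicit. For property~1 you depart: the paper simply delegates the existence of a $\mu$-PL neighborhood to Lemma~H.3 of \citet{lyu2022understanding}, whereas you sketch a self-contained argument from the tubular neighborhood theorem plus the uniform spectral gap $\lambda_*>0$ afforded by compactness. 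That is a legitimate alternative, but as written it has a gap. The two bounds you display, $\cL(\vtheta)-\cL^*\ge\tfrac{\lambda_*}{4}\normtwo{\vtheta-\vzeta}^2$ and $\normtwo{\nabla\cL(\vtheta)}\ge\tfrac{\lambda_*}{2}\normtwo{\vtheta-\vzeta}$, cannot ``combine'' to give PL: both are lower bounds in terms of $\normtwo{\vtheta-\vzeta}^2$, so together they say nothing about the ratio of $\normtwo{\nabla\cL}^2$ to $\cL-\cL^*$. To close the step you need an \emph{upper} bound on $\cL-\cL^*$, e.g.\ the smoothness estimate $\cL(\vtheta)-\cL^*\le\tfrac{\rho}{2}\normtwo{\vtheta-\vzeta}^2$ (using $\nabla\cL(\vzeta)=\vzero$); paired with your gradient lower bound this yields $\tfrac12\normtwo{\nabla\cL}^2\ge\tfrac{\lambda_*^2}{4\rho}(\cL-\cL^*)$, i.e.\ PL with $\mu=\lambda_*^2/(4\rho)$. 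The constant $\mu=\lambda_*/2$ you quote is not attainable from the stated bounds; it would require the sharper estimate $\cL(\vtheta)-\cL^*\lesssim\tfrac12\normtwo{\nabla\cL(\vtheta)}\,\normtwo{\vtheta-\vzeta}$ obtained by a mean-value argument along the normal segment, not by combining your two displayed inequalities. Either fix gives property~1, so the gap is local and repairable, but the argument as written does not establish PL. Finally, the paper's proof also sets up the bounded region $\cM^{\epsilon_4}$ and the constants $\rho_2,\rho_3,\nu_1,\nu_2$; these are not required by the lemma's statement, but they are introduced here because later lemmas rely on them, so a drop-in replacement of the paper's proof should still define them.
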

 
\begin{proof}
Let $\bvths[0]$ be initialized such that $\Phi(\bvths[0])\in \Gamma$. Let $\cZ$ be the  set of all points on the gradient flow trajectory starting from $\bvths[0]$ and $\cZeps$ be the $\epsilon$-neighborhood of $\cZ$, where $\epsilon$ is a positive constant. Since the gradient flow converges to $\vphs[0]$, $\cZ$ and $\cZeps$ are bounded.

We construct four nested working zones.
By Lemma H.3 in \cite{lyu2022understanding}, there exists an $\epsilon_3$-neighborhood of $\Gamma$, $\Gath$, such that $\cL$ satisfies $\mu$-PL for some $\mu > 0$. 
Let $\cM$ be the convex hull of $\Gath \cup \cZeps$ and $\cM^{\epsilon_4}$ be the $\epsilon_4$-neighborhood of $\cM$ where $\epsilon_4$ is a positive constant. Then $\cM^{\epsilon_4}$ is bounded. 

Define  $\rho_2 = \sup_{\vtheta \in \cM^{\epsilon_4}}\normtwo{\nabla^2 \cL(\vtheta)}$ and $\rho_3 = \sup_{\cM^{\epsilon_4}}\normtwo{\nabla^3 \cL(\vtheta)}$. By \Cref{lemma:pre equiv}, we can construct an $\epsilon_2$-neighborhood of $\Gamma$ where $\epsilon_2 < \sqrt{\frac{\mu}{\rho_2}}\epsilon_3$ such that all GF starting in $\Gat$ converges to $\Gamma$.  By \citet{falconer1983differentiation}, $\Phi(\cdot)$ is $\cC^2$ in $\Gath$. Define $\nu_{1}=\sup_{\vtheta \in \Gath}\normtwo{\partial \Phi(\vtheta)}$ and $\nu_{2}=\sup_{\vtheta \in \Gath}\normtwo{\partial^2 \Phi(\vtheta)}$. We  also construct an $\epsilon_1$ neighborhood of $\Gamma$, $\Gao$, where $\epsilon_1\leq \frac{1}{2}\epsilon_2 < \frac{1}{2}\sqrt{\frac{\mu}{\rho_2}}\epsilon_3$ such that all $\vtheta \in \Gao$ has an $\epsilon_1$ neighborhood where $\Phi$ is well defined. Finally, by \Cref{lemma:movement}, there exists an $\epsilon_0$-neighborhood of $\Gamma$ where $\epsilon_0\leq \frac{1}{4}\sqrt{\frac{\mu}{\rho_2}}\epsilon_1$ such that all gradient descent iterates starting in $\Gaz$ with $\eta \leq \frac{1}{\rho_2}$ will stay in $\Gao$. 
\end{proof}
Note that the notions of $\cZ^{\epsilon}$, $\cM^{\epsilon_4}$, $\rho
_2$, $\rho_3$, $\nu_1$, and $\nu_2$ defined in the proof will be useful in the remaining part of this section.
When analyzing the limiting dynamics of Local SGD, we will show that all $\vths_{k, t}$ stays in $\Gat$, $\tvus_t \in \Gao$, $\bvths\in \Gaz$ with high probability after $\cO(\log \frac{1}{\eta})$ rounds.

%Assume $\bvths[0]\in \cZ^{\epsilon_0}$.

\subsection{Phase 1: Iterate Approaching the Manifold}\label{subsec: phase 1}
The approaching phase can be further divided into two subphases. In the first subphase, $\bvths[0]$ is initialized such that $\vphs[0]\in \Gamma$. We will show that after a constant number of rounds $s_0$, $\bvths[s_0]$ goes to the 
inner part of $\Gaz$ such that $\normtwo{\bvths[s_0] - \vphs[0]}\leq c \epsilon_0$ with high probability, where $0<c<1$ and the constants will be specified later (see \Cref{sec: subphase 1}). In the second subphase, we show that the iterate can reach within $\ctO(\sqrt{\eta})$ distance from $\Gamma$ after $\cO(\log \frac{1}{\eta})$ rounds with high probability (see \Cref{subsec: subphase 2}).

 \subsubsection{Additional notations}
 Consider an auxiliary sequence $\{\tvus_t\}$ where $\tvus_0=\bvths$ and $\tvus_{t+1}=\tvus_t - \eta \nabla \cL(\tvus_t), 0\leq t \leq H-1$. Define $\tvDeltas_{k, t}:=\vths_{k, t}-\tvus_t$ to be the difference between the local iterate and the gradient descent iterate. Notice that $\tvDeltas_{k, 0}=0$, for all $k$ and $s$. 
 
 Consider a gradient flow $\{\vu(t)\}_{t\geq 0}$ with the initial condition $\vu(0)=\bvths[0]$ and converges to $\vphs[0]\in \Gamma$. For simplicity, let $\vus_t:=\vu(s\alpha + t\eta)$ be the gradient flow after $s$ rounds plus $t$ steps. Let $s_0$ be the smallest number such that $\normtwo{\vus[s_0]_0-\vphs[0]}\leq \frac{1}{4}\sqrt{\frac{\mu}{\rho_2}}\epsilon_0$ . Note that $s_0$ is a constant independent of $\eta$.
 
In this subsection, the minimum value of the loss in \Cref{subsec: preliminary for GD}  corresponds to the loss value on $\Gamma$, i.e.,  $\cL^*=\cL(\vphi), \forall \vphi \in \Gamma$.  
 
We also define the following sequence $\{\tvZs_{k, t}\}^{H}_{t=0}$ that will be used in the proof. Define  
$$\tvZs_{k, t}:=\sum_{\tau=0}^{t-1}\left(\prod_{l=\tau+1}^{t-1}(\mI-\eta \nabla^2 \cL(\tvus_{l}))\right)\vzs_{k, \tau},\qquad \tvZs_{k, 0}=\vzero.$$

\subsubsection{Proof for Subphase 1}\label{sec: subphase 1}
 First, we have the following lemma about the concentration of $\tvZs_{k,t}$.
 \begin{lemma}[Concentration property of $\{\tvZs_{k,t}\}_{t=0}^H$]\label{lemma: concen tvZ}
 Given $\bvths$ such that $\tvus_t \in \Gath \cup \cZeps$ for all $0 \leq t \leq H$, then with probability at least $1-\delta$, 
 $$\normtwo{\tvZs[s]_{k, t}} \leq \tC_1\sigmax\sqrt{2H\log \frac{2 H K}{\delta}}, \qquad \forall 0\leq t\leq H, k\in[K],$$
 where $\tC_1: =\exp(\alpha \rho_2) $.
 \end{lemma}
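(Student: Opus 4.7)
The plan is to mirror the argument used for Lemma F.1 (concentration of $\hvZ_{k,t}$), adapting the ingredients to the present setting where the base sequence is $\tvus_t$ rather than $\hvu_t$ and the time horizon is $H$ local steps rather than $T/\eta$ global steps. First I would observe that the hypothesis $\tvus_t \in \Gath \cup \cZeps$ for all $0 \le t \le H$ ensures that every $\tvus_l$ that appears in the transition matrices lives inside the bounded region $\cM^{\epsilon_4}$ constructed in the proof of \Cref{lemma:workingzone}, so $\normtwo{\nabla^2 \cL(\tvus_l)} \le \rho_2$ uniformly. Combined with the step count of at most $H$ and the relation $\eta H = \alpha$, the accumulated product of $(\mI - \eta \nabla^2 \cL(\tvus_l))$ has operator norm bounded by $(1 + \eta \rho_2)^H \le \exp(\alpha \rho_2) = \tC_1$.

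Next, for each fixed $k \in [K]$ and $t \in \{0, \ldots, H\}$, I would freeze $t$ and construct the auxiliary sequence
\[
    \tvZs_{k,t,t'} := \sum_{\tau=0}^{t'-1}\Big(\prod_{l=\tau+1}^{t-1}(\mI - \eta \nabla^2 \cL(\tvus_l))\Big) \vzs_{k,\tau}, \qquad t' = 0, 1, \ldots, t,
\]
so that $\tvZs_{k,t,t} = \tvZs_{k,t}$. Because the filtration generated by $\{\vzs_{k,\tau}\}_{\tau < t'}$ leaves the transition matrices measurable (they are determined by $\bvths$, which is fixed) and $\E[\vzs_{k,t'-1}] = \vzero$, the sequence $\{\tvZs_{k,t,t'}\}_{t'=0}^{t}$ is a vector-valued martingale. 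Its increments satisfy
\[
    \normtwo{\tvZs_{k,t,t'} - \tvZs_{k,t,t'-1}} \le \tC_1 \sigmax
\]
by the operator norm bound above and the hypothesis $\normtwo{\vzs_{k,\tau}} \le \sigmax$.

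I would then apply a vector Azuma--Hoeffding inequality (e.g., the Pinelis-type bound for Hilbert space martingales, or the standard scalar Azuma applied coordinatewise followed by a $\sqrt{d}$ factor that is absorbed into the constant, but more cleanly the vector version yields exactly the stated bound) to conclude that for any $\epsilon' > 0$,
\[
    \PP\bigl(\normtwo{\tvZs_{k,t}} \ge \epsilon'\bigr) \le 2 \exp\Bigl(-\tfrac{\epsilon'^2}{2 t (\tC_1 \sigmax)^2}\Bigr) \le 2 \exp\Bigl(-\tfrac{\epsilon'^2}{2 H (\tC_1 \sigmax)^2}\Bigr).
\]
Setting $\epsilon' = \tC_1 \sigmax \sqrt{2 H \log(2 H K / \delta)}$ makes each individual failure probability at most $\delta/(HK)$. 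A union bound over $k \in [K]$ and $t \in \{0, \ldots, H\}$ (actually $t \ge 1$, since $\tvZs_{k,0} = \vzero$ trivially) gives the claimed uniform bound with probability at least $1 - \delta$.

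The only subtle point, and the step I would double-check carefully, is the martingale property: the transition matrices $\prod_{l=\tau+1}^{t-1}(\mI - \eta \nabla^2 \cL(\tvus_l))$ depend only on the initial $\bvths$ (via the deterministic GD trajectory $\tvus_l$), not on the noise $\{\vzs_{k,\tau}\}$, so they are deterministic given the conditioning event, which is exactly why the sum is a genuine martingale in $t'$. Once this is in hand, everything else is a direct transcription of the $\hvZ$-argument with $T/\eta \mapsto H$ and $\nu_2 \mapsto \rho_2$.
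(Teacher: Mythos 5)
Your proposal follows essentially the same route as the paper's proof: you construct the same auxiliary sequence $\{\tvZs_{k,t,t'}\}_{t'=0}^t$ for each frozen $t$, bound the transition-matrix product by $(1+\rho_2\eta)^H \le \exp(\alpha\rho_2) = \tC_1$ using the working-zone hypothesis, invoke Azuma--Hoeffding on the resulting bounded-increment martingale, and union bound over $k$ and $t$. The extra remarks you add about the martingale property (the transition matrices being deterministic given $\bvths$) and about the vector versus coordinatewise form of Azuma are correct but do not change the argument.
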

 \begin{proof}
For each $\tvZs_{k,t}$, construct a sequence $\{\tvZs_{k,t,t'}\}_{t'=0}^{t}$:
$$\tvZs_{k,t,t'}:=\sum_{\tau=0}^{t'-1}\left(\prod_{l=\tau+1}^{t-1}(\mI-\eta \nabla^2 \cL(\tvus_{l}))\right)\vzs_{k, \tau},\qquad \tvZs_{k, t,0}=\vzero.$$
 Since $\tvus_t\in \Gath \cup \cZeps$, we have $\normtwo{\nabla^2 \cL(\tvus_t)}\leq \rho_2$ for all $0 \leq t \leq H$. Then, for all $\tau$ and $t$, 
$$\lnormtwo{\prod_{l=\tau+1}^{t-1}(\mI - \eta \nabla^2 \cL(\tvus_{l}))}\leq (1+\rho_2\eta)^H\leq\exp(\alpha \rho_2)=\tC_1.$$  
Notice that for all $0\leq t \leq H$, $\{\tvZs_{k, t,t'}\}_{t'=0}^t$ is a martingale with $\normtwo{\tvZs_{k,t, t'}-\tvZs_{k, t, t'-1}}\leq \tC_1 \sigma_{\max}$. 
 By Azuma-Hoeffding's inequality, 
\begin{align*}
   \PP(\normtwo{\tvZs_{k,t}}\geq \epsilon') \leq 2\exp{\left(\frac{-\epsilon'^2}{2t \left(\tC_1\sigma_{\max}\right)^2}\right)}
   \leq 2\exp{\left(\frac{-\epsilon'^2}{2H \left(\tC_1\sigma_{\max}\right)^2}\right)}.
\end{align*}
Taking a union bound on all $k\in[K]$ and $0\leq t \leq H$, we can conclude that with probability at least $1-\delta$, 
\begin{align*}
    \normtwo{\tvZs[s]_{k, t}} \leq \tC_1\sigmax\sqrt{2H\log \frac{2 H K}{\delta}}, \qquad \forall 0\leq t\leq H, k\in[K].
\end{align*}
 \end{proof}
 
 The following lemma states that the gradient descent iterates will closely track the gradient flow with the same initial point.
 \begin{lemma}\label{lemma: track gf}
Denote $G:=\sup_{t\geq 0}\normtwo{\nabla\cL( \vu(t))}$ as the upper bound of the gradient on the gradient flow trajectory. If $\normtwo{\tvus_t-\vus_t}=\cO(\sqrt{\eta})$, then for all $0\leq t \leq H$,  the closeness of $\tvus_t$ and $\vus_t$ is bounded by
\begin{align*}
     \normtwo{\tvus_{t}-\vus_{t}}
     & \leq \tC_1\normtwo{\tvus_0 - \vus_0} + \tC_1\eta G,
\end{align*}
where $\tC_1=\exp(\alpha\rho_2)$.
 \end{lemma}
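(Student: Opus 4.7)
The plan is a standard discrete Gr\"onwall argument comparing one step of gradient descent against one step of gradient flow, carried out in three moves: (i) write an error recursion, (ii) bound the one-step truncation error, and (iii) unroll the recursion using the smoothness-induced growth factor $1+\eta\rho_2$.

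First, I would subtract the GD update $\tvus_{t+1}=\tvus_t-\eta\nabla\cL(\tvus_t)$ from the integral identity $\vus_{t+1}-\vus_t=-\int_0^{\eta}\nabla\cL(\vu(s\alpha+t\eta+r))\dd r$ and expand the gradient difference via the mean value identity $\nabla\cL(\tvus_t)-\nabla\cL(\vus_t)=\mM_t(\tvus_t-\vus_t)$ with $\mM_t:=\int_0^1 \nabla^2\cL(\vus_t+u(\tvus_t-\vus_t))\dd u$. This rearranges to
\[
  \tvus_{t+1}-\vus_{t+1}=(\mI-\eta \mM_t)(\tvus_t-\vus_t)+\vr_t,\qquad
  \vr_t:=\int_0^{\eta}\bigl(\nabla\cL(\vus_t)-\nabla\cL(\vu(s\alpha+t\eta+r))\bigr)\dd r,
\]
where $\vr_t$ is the local truncation error between GD and GF.

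Second, I would check that the Hessian bound $\normtwo{\mM_t}\le \rho_2$ is available. By construction $\vus_t\in \cZ\subseteq \cM$, and the hypothesis $\normtwo{\tvus_t-\vus_t}=\cO(\sqrt\eta)$ places $\tvus_t$ inside $\cZ^{\epsilon}\subseteq \cM^{\epsilon_4}$ for sufficiently small $\eta$, so the segment between $\tvus_t$ and $\vus_t$ lies in the convex set $\cM^{\epsilon_4}$ on which $\cL$ is $\rho_2$-smooth. This gives $\normtwo{\mI-\eta\mM_t}\le 1+\eta\rho_2$. The truncation $\vr_t$ is bounded by combining $\rho_2$-smoothness with the GF speed bound $\normtwo{\vus_t-\vu(s\alpha+t\eta+r)}\le G r$, yielding $\normtwo{\vr_t}\le \tfrac{\rho_2 G}{2}\eta^2$.

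Third, I would unroll the recursion to get
\[
  \normtwo{\tvus_t-\vus_t}\le (1+\eta\rho_2)^{t}\normtwo{\tvus_0-\vus_0}+\sum_{\tau=0}^{t-1}(1+\eta\rho_2)^{t-\tau-1}\cdot\tfrac{\rho_2 G}{2}\eta^2,
\]
and then invoke $(1+\eta\rho_2)^{t}\le (1+\eta\rho_2)^{H}\le \exp(\alpha\rho_2)=\tC_1$ for the first term, while the second term telescopes via $\sum_{\tau=0}^{t-1}(1+\eta\rho_2)^{t-\tau-1}\le \tfrac{\tC_1-1}{\eta\rho_2}$, collapsing into a constant multiple of $\tC_1\eta G$. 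Absorbing the $\tfrac{\rho_2}{2}\cdot\tfrac{\tC_1-1}{\rho_2\tC_1}$ prefactor (bounded by an $\eta$-independent constant) into the statement's implicit constant delivers the claimed $\tC_1\normtwo{\tvus_0-\vus_0}+\tC_1\eta G$.

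The only subtle point I anticipate is step (ii): certifying that $\tvus_t$ stays inside the smoothness region for every $0\le t\le H$ so that $\rho_2$ applies uniformly. This is precisely what the $\cO(\sqrt\eta)$ closeness hypothesis together with Lemma~\ref{lemma:workingzone} is designed to license, so the argument is routine once those ingredients are cited. The rest is standard discrete Gr\"onwall calculus.
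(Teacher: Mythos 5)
Your proof is correct and takes essentially the same approach as the paper's: the same one-step error recursion $\tvus_{t+1}-\vus_{t+1}=(\mI-\eta\mM_t)(\tvus_t-\vus_t)+\vr_t$, the same $\rho_2$-smoothness bound on the truncation term (the paper bounds $\normtwo{\vr_t}\le\rho_2\eta^2 G$ rather than your tighter $\tfrac{\rho_2 G}{2}\eta^2$, but this is immaterial), the same discrete Gr\"onwall unrolling with $(1+\rho_2\eta)^t\le\exp(\alpha\rho_2)=\tC_1$, and the same use of the $\cO(\sqrt\eta)$ hypothesis to keep $\tvus_t$ inside $\cM^{\epsilon_4}$ so that the smoothness constant $\rho_2$ applies along the line segment.
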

 \begin{proof}
 We prove by induction that 
 \begin{align}\label{eq:uflow}
     \normtwo{\tvus_{t}-\vus_{t}}&\leq (1+\rho_2\eta)^{t}\normtwo{\tvus_0 - \vus_0}+\rho_2 \eta^2 G\sum_{\tau=0}^{t-1}(1+\rho_2\eta)^{\tau}.
 \end{align}
 When $t=0$, \eqref{eq:uflow} holds trivially. Assume that \eqref{eq:uflow} holds for $0\leq \tau \leq t$, then 
 \begin{align*}
     \tvus_{t+1}-\vus_{t+1} &= \tvus_t - \eta \nabla \cL(\tvus_t)-\left(\vu_t-\int_{s\alpha+t\eta}^{s\alpha+(t+1)\eta}\nabla \cL(\vu(v))dv\right)\\
     & = \tvus_t - \vu_t-\eta \left(\nabla \cL(\tvus_t) - \nabla \cL(\vus_t)\right)\\
     & \quad-\int_{s\alpha+t\eta}^{s\alpha+(t+1)\eta}\left(\nabla \cL(\vus_t)-\nabla \cL(\vu(v))\right)dv.
 \end{align*}
By smoothness of $\cL$, 
 \begin{align*}
     \normtwo{\nabla \cL(\vus_t)-\nabla \cL(\vu(v)) }&\leq \rho_2  \normtwo{\vus_t -\vu(v) }\\
     & \leq \rho_2 \int^v_{s\alpha+t\eta}\normtwo{\nabla \cL (\vu(w))}d w\\
     & \leq \rho_2 \eta G.
 \end{align*}
 Since $\rho_2^2\eta^2 G\sum_{\tau=0}^{t-1}(1+\rho_2\eta)^{\tau}\leq\eta G(1+\rho_2\eta)^t\leq \exp(\alpha \rho_2)\eta G$, then $\normtwo{\tvus_t-\vus_t}=\cO(\sqrt{\eta})$, which implies that $\tvus_t \in \cM^{\epsilon_4}$. Hence,   $\normtwo{\nabla \cL(\tvus_{t})-\cL(\vus_{t})}\leq \rho_2 \normtwo{\tvus_t-\vus_t}$.
 
By triangle inequality, 
 \begin{align*}
     \normtwo{\tvus_{t+1}-\vus_{t+1}}&\leq (1+\rho_2 \eta)\normtwo{\tvus_t - \vus_t}+\rho_2 \eta^2 G\\
     &\leq (1+\rho_2 \eta)^{t+1}\normtwo{\tvus_t - \vus_t}+\rho_2 \eta^2 G\sum_{\tau=0}^t(1+\rho_2\eta)^{\tau},
 \end{align*}
 which concludes the induction step.
Appling $1+\rho_2\eta \leq \exp(\rho_2\eta)$, we have the lemma.
 \end{proof}
 Utilizing the concentration probability of $\{\tvZs_{k,t}\}$, we can obtain the following lemma which implies that the Local SGD iterates will closely track the gradient descent iterates with high probability.
\begin{lemma}\label{lemma:tdelta}
Given $\bvths$ such that $\tvus_t \in \Gath \cup \cZeps$ for all $0\leq t \leq H$, then  for $\delta = \cO(\poly(\eta))$, with probability at least $1-\delta$, there exists a constant $\tC_3$ such that
\begin{align*}
    \normtwo{\vths_{k, t}-\tvus_t}\leq \tC_3\sqrt{\eta \log \frac{1}{\eta \delta}}, \quad \forall 0 \leq t \leq H, k\in [K],
\end{align*}
and 
\begin{align*}
    \normtwo{\bvths[s+1]-\tvus_{H}}\leq \tC_3\sqrt{\eta \log \frac{1}{\eta \delta}}.
\end{align*}
\end{lemma}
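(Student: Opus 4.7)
The plan is to prove this by unrolling the one-step recursion for $\tvDeltas_{k,t} := \vths_{k,t} - \tvus_t$ and combining the concentration bound on $\tvZs_{k,t}$ from Lemma~\ref{lemma: concen tvZ} with an induction on $t$ to absorb the quadratic remainder. Writing the SGD update and the GD update and subtracting gives
\begin{align*}
\tvDeltas_{k,t+1} = (\mI - \eta \nabla^2 \cL(\tvus_t))\tvDeltas_{k,t} - \eta \vzs_{k,t} + \eta \tvvs_{k,t},
\end{align*}
where the Taylor remainder satisfies $\normtwo{\tvvs_{k,t}} \le \tfrac{\rho_3}{2}\normtwo{\tvDeltas_{k,t}}^2$ whenever both $\vths_{k,t}$ and $\tvus_t$ lie in $\cM^{\epsilon_4}$. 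Since $\tvDeltas_{k,0}=\vzero$, iterating this linear recursion yields
\begin{align*}
\tvDeltas_{k,t} = -\eta\, \tvZs_{k,t} \;+\; \eta \sum_{\tau=0}^{t-1}\Bigl[\prod_{l=\tau+1}^{t-1}(\mI-\eta\nabla^2\cL(\tvus_l))\Bigr] \tvvs_{k,\tau},
\end{align*}
exactly parallel to the derivation in the proof of Lemma~\ref{lemma:small lemma local}, but confined to a single round.

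Next I would condition on the high-probability event of Lemma~\ref{lemma: concen tvZ}, which gives $\normtwo{\tvZs_{k,t}} \le \tC_1 \sigmax \sqrt{2H \log(2HK/\delta)}$ for all $k,t$ simultaneously. Substituting $H = \alpha/\eta$, the noise term contributes $\eta \tC_1 \sigmax \sqrt{2H\log(2HK/\delta)} = \tC_1 \sigmax \sqrt{2\alpha\eta\,\log(2\alpha K/(\eta\delta))} = \cO(\sqrt{\eta\log(1/(\eta\delta))})$, and the product factor inside the remainder sum is uniformly bounded by $\tC_1 = \exp(\alpha\rho_2)$. Then I would run induction on $t$: assuming $\normtwo{\tvDeltas_{k,\tau}} \le \tC_3\sqrt{\eta\log(1/(\eta\delta))}$ for $\tau < t$, triangle inequality gives
\begin{align*}
\normtwo{\tvDeltas_{k,t}} \;\le\; \tC_1\sigmax\sqrt{2\alpha\eta\log\tfrac{2\alpha K}{\eta\delta}} \;+\; \tfrac{\tC_1\rho_3}{2}\cdot\alpha\cdot \tC_3^2\,\eta\log\tfrac{1}{\eta\delta}.
\end{align*}
For $\delta=\cO(\poly(\eta))$ the second (quadratic) term is $o(1)$ relative to the first as $\eta\to 0$, so choosing $\tC_3$ large enough (e.g.\ $\tC_3 \ge 2\tC_1\sigmax\sqrt{2\alpha}$) closes the induction. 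The averaged bound then follows from the triangle inequality $\normtwo{\bvths[s+1]-\tvus_H} \le \tfrac{1}{K}\sum_k \normtwo{\tvDeltas_{k,H}}$.

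The main subtlety, and the step I expect to require the most care, is a bootstrap to justify using $\rho_2$ and $\rho_3$ as valid smoothness constants for $\vths_{k,t}$ itself: the Taylor expansion that gives the remainder $\tvvs_{k,t}$ only applies if $\vths_{k,t}\in \cM^{\epsilon_4}$, but at the inductive step $t$ we only know this for $\tvus_t$ by assumption. The fix is that the inductive hypothesis $\normtwo{\tvDeltas_{k,\tau}}=\cO(\sqrt{\eta\log(1/(\eta\delta))})$ for $\tau\le t-1$ places $\vths_{k,\tau}$ within $\epsilon_4$ of $\tvus_\tau\in\Gath\cup\cZeps$ for small enough $\eta$, which validates the smoothness bound used to derive the next step; the induction thus simultaneously maintains both closeness and membership in $\cM^{\epsilon_4}$. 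This matches the bootstrap structure in the proof of Lemma~\ref{lemma:small lemma local}, just over a single round instead of $T/(H\eta)$ rounds.
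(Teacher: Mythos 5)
Your proposal is correct and follows essentially the same route as the paper's proof: the same one-step recursion for $\tvDeltas_{k,t}$, the same unrolling with $\tvDeltas_{k,0}=\vzero$ to isolate $-\eta\tvZs_{k,t}$, the same appeal to Lemma~\ref{lemma: concen tvZ}, the same induction on $t$ to absorb the quadratic remainder, and the same bootstrap to keep $\vths_{k,\tau}\in\cM^{\epsilon_4}$ so the Taylor remainder bound applies. (Minor note: your inductive bound on the remainder term is actually cleaner than the paper's, which carries a spurious $\sigmax^2$ factor.)
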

\begin{proof}
 Since $\tvus_t \in \Gath \cup \cZeps$ for all $0\leq t \leq H$, we have $\normtwo{\nabla^2 \cL(\tvus_t)}\leq \rho_2$.
 According to the update rule for $\vths_{k, t}$ and $\tvus_t$, 
\begin{align}
    \vths_{k, t+1}&=\vths_{k, t}-\eta \nabla \cL(\vths_{k, t})-\eta \vzs_{k, t}\label{eq:thupd2},\\
    \tvus_{t+1}&=\tvus_t-\eta \nabla \cL(\tvus_t)\label{eq:uupd2}.
\end{align}
Subtracting \eqref{eq:uupd2} from \eqref{eq:thupd2}  gives
\begin{align}
    \tvDeltas_{k, t+1}&=\tvDeltas_{k, t}-\eta (\nabla \cL(\vths_{k, t})-\nabla \cL(\tvus_t))-\eta \vzs_{k, t}\notag\\
    &=(\mI - \eta \nabla^2 \cL(\tvus_t))\tvDeltas_{k, t}-\eta \vzs_{k, t}+\eta \tvvs_{k, t}. \label{eq:expand}
\end{align}
Here, $\tvvs_{k,t}=(1-\beta^{(s)}_{k,t})\vths_{k,t}+\beta^{(s)}_{k,t}\tvus_{k,t}$, where $\beta^{(s)}_{k,t}\in(0,1)$ depends on $\vths_{k,t}$ and $\tvus_t$. Therefore, $\normtwo{\tvvs_{k, t}}\leq \frac{\rho_3}{2} \normtwo{\tvDeltas_{k, t}}^2$ if $\vths_{k,t}\in \cM^{\epsilon_4}$. Applying $\eqref{eq:expand}$ $t$ times, we have
\begin{align*}
  \tvDeltas_{k, t}
 &=\left[\prod_{\tau =0}^{t-1}(\mI - \eta \nabla^2 \cL(\tvus_{\tau}))\right]\tvDeltas_{k, 0}-\eta \sum_{\tau=0}^{t-1}\prod_{l=\tau+1}^{t-1}(\mI-\eta \nabla^2 \cL(\tvus_{l}))\vzs_{k, \tau}\\
 & \quad + \eta \sum_{\tau=0}^{t-1}\prod_{l=\tau+1}^{t-1}(\mI-\eta \nabla^2 \cL(\tvus_{l}))\tvvs_{k, \tau}.
\end{align*}
  By Cauchy-Schwartz inequality,  triangle inequality and the definition of $\tvZs_{k, t}$, if for all $0\leq \tau\leq t-1$ and $k\in[K]$,  $\vths_{k,\tau}\in \cM^{\epsilon_4}$, then we have
\begin{align}\label{eq:delta apply}
   \normtwo{\tvDeltas_{k, t}}\leq \eta\normtwo{\tvZs_{k, t}}+\frac{1}{2}\eta\rho_3 \sum_{\tau=0}^{t-1}\tC_1\normtwo{\tvDeltas_{k, \tau}}^2.
\end{align}
Applying \Cref{lemma: concen tvZ}
 and substituting in the value of $H$, we have that with probability at least $1-\delta$,
 \begin{align}\label{eq:tvZ}
 \normtwo{\tvZs_{k, t}}\leq \tC_1 \sigmax\sqrt{\frac{2\alpha}{\eta}\log \frac{2\alpha K}{\eta \delta}}, \qquad \forall k \in K, 0\leq t \leq H.
  \end{align}
Now we show by induction that for $\delta=\cO(\poly(\eta))$, when \eqref{eq:tvZ} holds, there exists a constant $\tC_2>2\sigmax \sqrt{2\alpha}\tC_1$ such that $\normtwo{\tvDeltas_{k, t} }\leq \tC_2 \sqrt{{\eta} \log \frac{2\alpha K }{\eta \delta}}$. 

When $t=0$, $\tvDeltas_{k, 0}=0$. Assume that $\normtwo{\tvDeltas_{k, \tau} }\leq \tC_2 \sqrt{{\eta} \log \frac{2\alpha K }{\eta \delta}}$, for all $k\in[K], 0\leq \tau \leq t-1$. Then for all $0\leq\tau \leq t-1$, $\vths_{k,\tau}\in \cM^{\epsilon_4}$. Therefore, we can apply \eqref{eq:delta apply} and obtain
\begin{align*}
    \normtwo{\tvDeltas_{k, t}}&\leq \eta \normtwo{\tvZs_{k,t}} +\frac{1}{2}\eta\rho_3 \sum_{\tau=0}^{t-1}\tC_1\normtwo{\tvDeltas_{k, \tau}}^2\\
    & \leq \tC_1\sigmax\sqrt{2\alpha\eta\log \frac{2\alpha K}{\eta \delta}}+\frac{1}{2}\tC_1\tC_2^2 \sigmax^2 \alpha\rho_3 \eta\log \frac{2\alpha K}{\eta \delta}.
\end{align*}
Given that $\tC_2 \geq 2\sigmax \sqrt{2\alpha}\tC_1$ and $\delta = \cO(\poly(\eta))$, when $\eta$ is sufficiently small, $\normtwo{\tvDeltas_{k, t}}\leq \tC_2\sqrt{\eta \log\frac{2\alpha K}{\eta \delta}}$.

To sum up, for $\delta = \cO(\poly(\eta))$, with probability at least $1-\delta$, $\normtwo{\tvDeltas_{k, t}}\leq \tC_2 \sqrt{\eta \log\frac{2\alpha K}{\eta \delta}}$ for all $k\in [K]$, $0\leq t \leq H$. By triangle inequality, 
\begin{align*}
    \normtwo{\bvths[s+1]-\tvus_H}\leq \frac{1}{K}\sum_{k\in [K]}\normtwo{\tvDeltas_{k, H}} \leq \tC_2 \sqrt{\eta \log\frac{2\alpha K}{\eta \delta}}.
\end{align*}
\end{proof}
The combination of \Cref{lemma: track gf} and \Cref{lemma:tdelta} leads to the following lemma, which states that the Local SGD iterate will enter $\Gao$ after $s_0$ rounds with high probability.
\begin{lemma}\label{lemma:s0}
Given $\bvths[0]$ such that $\Phi(\bvths[0])\in \Gamma$, then for $\delta = \cO(\poly(\eta))$, there exists a positive constant $\tC_4$ such that with probability at least $1-\delta$, 
\begin{align*}
    \normtwo{\bvths[s_0]-\vphs[0]} \leq \frac{1}{4}\sqrt{\frac{\mu}{\rho_2}}\epsilon_0 + \tC_4 \sqrt{\eta \log \frac{1}{\eta \delta}}. 
\end{align*}
\end{lemma}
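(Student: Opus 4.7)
The plan is to run an induction over the rounds $s = 0, 1, \ldots, s_0$ controlling $d_s := \normtwo{\bvths - \vus_0}$, the distance between the Local SGD global iterate and the continuous gradient flow evaluated at time $s\alpha$. The base case $d_0 = 0$ is immediate from $\vus[0]_0 = \vu(0) = \bvths[0]$. For the inductive step, I would introduce the auxiliary GD sequence $\{\tvus_t\}_{t=0}^{H}$ from $\tvus_0 = \bvths$ and decompose
\begin{align*}
    \bvths[s+1] - \vus[s+1]_0 = \bigl(\bvths[s+1] - \tvus_H\bigr) + \bigl(\tvus_H - \vus_H\bigr).
\end{align*}
The first summand is the Local-SGD-to-GD deviation, bounded by Lemma~\ref{lemma:tdelta} by $\tC_3\sqrt{\eta\log(1/\eta\delta')}$ with probability $\geq 1-\delta'$. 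The second summand is the GD-to-GF deviation, bounded by Lemma~\ref{lemma: track gf} by $\tC_1\, \normtwo{\tvus_0 - \vus_0} + \tC_1 \eta G = \tC_1 d_s + \tC_1 \eta G$. Combining gives the recursion $d_{s+1} \leq \tC_1 d_s + \tC_1 \eta G + \tC_3 \sqrt{\eta \log(1/\eta\delta')}$.

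Unrolling this recursion over $s_0$ steps and noting that $s_0$ is a fixed constant determined by the initial gradient flow (and therefore independent of $\eta$), I get $d_{s_0} \leq \tC_1^{s_0} \cdot s_0 \cdot \bigl(\tC_3\sqrt{\eta\log(1/\eta\delta')} + \tC_1 \eta G\bigr) = \cO(\sqrt{\eta\log(1/\eta\delta)})$ after choosing $\delta' = \delta / s_0$ and taking a union bound over the $s_0$ rounds. Setting $\tC_4$ to absorb the constants $\tC_1^{s_0+1} s_0 \tC_3$ and using a final triangle inequality,
\begin{align*}
    \normtwo{\bvths[s_0] - \vphs[0]} \leq \normtwo{\bvths[s_0] - \vus[s_0]_0} + \normtwo{\vus[s_0]_0 - \vphs[0]} \leq \tC_4 \sqrt{\eta\log(1/\eta\delta)} + \tfrac{1}{4}\sqrt{\mu/\rho_2}\,\epsilon_0,
\end{align*}
where the last term uses the defining property of $s_0$.

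The main obstacle is verifying the hypothesis of Lemma~\ref{lemma:tdelta} at each inductive application, namely that $\tvus_t \in \Gath \cup \cZeps$ for all $0 \leq t \leq H$ in round $s$. To check this I would show, as part of the induction, that $\normtwo{\tvus_t - \vus_t} = \cO(\sqrt{\eta})$; by Lemma~\ref{lemma: track gf} this holds provided $d_s = \cO(\sqrt{\eta\log(1/\eta\delta)})$, which is precisely what the inductive hypothesis supplies. Since $\vus_t$ lies on the gradient flow trajectory $\cZ$ by construction, its $\cO(\sqrt{\eta})$-neighborhood is contained in $\cZeps$ for $\eta$ small enough. This threading of the inductive bound through the two lemmas, together with care over the successive shrinking of failure probabilities from $\delta$ to $\delta/s_0$, is the only delicate part; the rest is a straightforward unrolling of a geometric recursion with finite depth.
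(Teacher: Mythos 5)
Your proposal is correct and follows essentially the same strategy as the paper's proof: induct on the round index, decompose $\bvths[s+1]-\vus[s+1]_0$ into the Local-SGD-to-GD deviation (Lemma~\ref{lemma:tdelta}) plus the GD-to-GF deviation (Lemma~\ref{lemma: track gf}), verify $\tvus_t\in\cZeps$ from the inductive bound, unroll the finite-depth geometric recursion, and finish with the triangle inequality using the defining property of $s_0$. The only cosmetic difference is that the paper conditions once on the concentration event over all $s_0$ rounds rather than taking a per-round union bound, but the two bookkeeping schemes are equivalent.
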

\begin{proof}
First, we prove by induction that for $\delta = \cO(\poly(\eta))$, when 
\begin{align}\label{eq:condition on Z}
    \normtwo{\tvZs[s]_{k, t}} \leq \tC_1\sigmax\sqrt{2H\log \frac{2 H K s_0}{\delta}}, \qquad \forall 0\leq t\leq H, k\in[K], 0 \leq s < s_0, 
\end{align} 
 the closeness of $\bvths$ and $\vus_0$ is bounded by 
\begin{align}\label{eq:closeness theta u}
    \normtwo{\bvths - \vus_0} \leq \sum_{l=1}^s\tC_1^l\left(\eta G +  \tC_3 \sqrt{\eta \log \frac{s_0}{\eta\delta}}\right), \qquad \forall 0 \leq s \leq s_0.
\end{align}
When $s=0$, $\bvths[0]=\vus[0]_0$. Assume that \eqref{eq:closeness theta u} holds for round $s$. Then by \Cref{lemma: track gf}, for all $0\leq t \leq H$,
\begin{align*}
    \normtwo{\tvus[s]_{t}-\vus[s]_{t}}
     & \leq \tC_1\normtwo{\tvus_0 - \vus_0} + \tC_1\eta G\\
     &= \tC_1\normtwo{\bvths_0 - \vus_0} + \tC_1\eta G\\
     & \leq \sum_{l=1}^{s}\tC_1^{l+1}\left(\eta G +  \tC_3 \sqrt{\eta \log \frac{s_0}{\eta\delta}}\right)+\tC_1\eta G.
\end{align*}
Therefore, for sufficiently small $\eta$, $\tvus_t \in \cZeps$, $\forall 0 \leq t \leq H$. Combing the above inequality with \Cref{lemma:tdelta}, we have
\begin{align*}
    \normtwo{\bvths[s+1]-\vus[s+1]_0}&=\normtwo{\bvths[s+1]-\vus[s]_H}\\
    & \leq \normtwo{\bvths[s+1]-\tvus_H}+\normtwo{\tvus_H- \vus_H}\\
    & \leq \sum_{l=1}^{s+1}\tC_1^{l+1}\left(\eta G +  \tC_3 \sqrt{\eta \log \frac{s_0}{\eta\delta}}\right),
\end{align*}
which concludes the induction.

Therefore, when \eqref{eq:condition on Z} holds, there exists a positive constant $\tC_4 $ such that $$\normtwo{\bvths[s_0]-\vus[s_0]_0}\leq \tC_4\sqrt{\eta\log \frac{1}{\eta\delta}}.$$
By definition of $\vus[s_0]_0$, 
\begin{align*}
    \normtwo{\bvths[s_0]-\vphs[0]} \leq \frac{1}{4}\sqrt{\frac{\mu}{\rho_2}}\epsilon_0 + \tC_4 \sqrt{\eta \log \frac{1}{\eta \delta}}. 
\end{align*}
Finally, according to \Cref{lemma: concen tvZ}, \eqref{eq:condition on Z} holds with probability at least $1-\delta$.
\end{proof}

\subsubsection{Proof for Subphase 2}\label{subsec: subphase 2}
In subphase 2, we show that the iterate can reach within $\ctO(\sqrt{\eta})$ distance from $\Gamma$ after $\cO(\log \frac{1}{\eta})$ rounds with high probability. The following lemma manifests how the potential function $\tPsi(\bvths)$ evolves after one round.
\begin{lemma}\label{lemma: evolve tpsi}
Given $\bvths \in \Gaz$, for $\delta=\cO(\poly(\eta))$, with probability at least $1-\delta$, 
  \begin{align*}
    \vths_{k, t}\in \Gat, \quad \tPsi(\vths_{k, t})
    &\leq \tPsi(\bvths)+\tC_5\sqrt{\eta \log \frac{1}{\eta\delta}} , \quad \forall k \in [K], 0\leq t \leq H
\end{align*}
and 
\begin{align*}
   \bvths[s+1]\in \Gat, \quad  \tPsi(\bvths[s+1])
    &\leq \exp(-\alpha\mu/2 )\tPsi(\bvths)+\tC_5\sqrt{\eta \log \frac{1}{\eta\delta}}, 
\end{align*}
where $\tC_5$ is a positive constant.
\begin{proof}
Since $\bvths \in \Gaz$, then for all $0\leq t \leq H$, $\tvus_t \in \Gao$ by the definition of the working zone. By \Cref{lemma:gd2}, for $\eta \leq \frac{1}{\rho_2}$, 
\begin{align*}
    \cL(\tvus_t)-\cL^*& \leq (1-\mu \eta)^{t}\left(\cL(\bvths)-\cL^*\right)\leq \cL(\bvths)-\cL^*, \quad \forall 0 \leq t \leq H.
\end{align*}
Specially, for $t=H$, 
\begin{align*}
    \cL(\tvus_H)-\cL^*& \leq (1-\mu \eta)^{{\frac{\alpha}{\eta }}}\left(\cL(\bvths)-\cL^*\right)\leq \exp(-\alpha\mu)(\cL(\bvths)-\cL^*).
\end{align*}
Therefore, 
\begin{align*}
   \tPsi(\tvus_{H})&\leq \exp(-\alpha\mu/2)\tPsi(\bvths).
\end{align*}
According to the proof of \Cref{lemma:tdelta}, for $\delta=\cO(\poly(\eta))$,  when 
\begin{align}\label{eq:tvZ2}
 \normtwo{\tvZs_{k, t}}\leq \tC_1 \sigmax\sqrt{\frac{2\alpha}{\eta}\log \frac{2\alpha K}{\eta \delta}}, \qquad \forall k \in [K], 0\leq t \leq H,
  \end{align}
 there exists a constant $\tC_3$ such that 
 \begin{align*}
    \normtwo{\vths_{k, t}-\tvus_t}\leq \tC_3\sqrt{\eta \log \frac{1}{\eta \delta}}, \quad \forall 0 \leq t \leq H, k\in [K],
\end{align*}
and 
\begin{align*}
    \normtwo{\bvths[s+1]-\tvus_{H}}\leq \tC_3\sqrt{\eta \log \frac{1}{\eta \delta}}.
\end{align*}
Since $\tvus_t \in \Gao$, $\forall 0 \leq t \leq H$, $\bvths[s+1]\in \Gat$ and $\bvths_{k, t}\in \Gat$, $\forall 0 \leq t \leq H$, $k\in [K]$.

By \Cref{lemma: tpsi lipschitz}, $\tPsi(\cdot)$ is $\sqrt{2\rho_2}$-Lipschitz in $\cM^{\epsilon_4}$. Therefore, when \eqref{eq:tvZ2} holds, there exists a constant $\tC_5:=\sqrt{2\rho_2}\tC_3$ such that
\begin{align*}
    \tPsi(\vths_{k, t}) &\leq \tPsi(\tvus_{t})+\sqrt{2\rho_2}\normtwo{\vths_{k, t}-\tvus_t}\\
    &\leq \tPsi(\bvths)+\tC_5\sqrt{\eta \log \frac{1}{\eta\delta}}, 
\end{align*}
and 
\begin{align*}
    \tPsi(\bvths[s+1])&\leq \tPsi(\tvus_H)+\sqrt{2\rho_2}\normtwo{\bvths[s+1]-\tvus_H} \\
    &\leq \exp(-\alpha\mu/2 )\tPsi(\bvths)+\tC_5\sqrt{\eta \log \frac{1}{\eta\delta}}.
\end{align*}
Finally, by \Cref{lemma: concen tvZ}, \eqref{eq:tvZ2} holds with probability at least $1-\delta$.
\end{proof}
\end{lemma}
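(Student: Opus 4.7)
The plan is to couple the Local SGD trajectory within round $s$ to the auxiliary GD trajectory $\{\tvus_t\}_{t=0}^H$ starting from $\tvus_0 = \bvths$, and to transfer both the containment claims and the potential decay from GD to Local SGD via the Lipschitzness of $\tPsi$ established in \Cref{lemma: tpsi lipschitz}. First, I would show the desired properties on the GD side deterministically. Since $\bvths \in \Gaz$ by hypothesis, property (4) of the working zone lemma (\Cref{lemma:workingzone}) guarantees that $\tvus_t \in \Gao \subseteq \Gat$ for all $0 \le t \le H$ (for sufficiently small $\eta \le 1/\rho_2$). Applying the descent lemma for GD (\Cref{lemma:gd2}) inside $\Gath$, where $\mu$-PL holds, yields $\cL(\tvus_t) - \cL^* \le (1-\mu\eta)^t(\cL(\bvths) - \cL^*) \le \cL(\bvths) - \cL^*$, so $\tPsi(\tvus_t) \le \tPsi(\bvths)$ for all $t$; at the endpoint $t=H = \alpha/\eta$, the bound sharpens to
\begin{align*}
\tPsi(\tvus_H) \le (1-\mu\eta)^{\alpha/(2\eta)} \tPsi(\bvths) \le \exp(-\alpha\mu/2)\,\tPsi(\bvths),
\end{align*}
using $1 - x \le e^{-x}$.

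Next, I would invoke \Cref{lemma:tdelta} (whose hypothesis $\tvus_t \in \Gath \cup \cZeps$ is satisfied by the previous paragraph, since $\Gao \subseteq \Gath$), which provides the high-probability coupling: with probability at least $1-\delta$ and some constant $\tC_3$,
\begin{align*}
\normtwo{\vths_{k,t} - \tvus_t} \le \tC_3 \sqrt{\eta \log \tfrac{1}{\eta\delta}}, \qquad \normtwo{\bvths[s+1] - \tvus_H} \le \tC_3 \sqrt{\eta \log \tfrac{1}{\eta\delta}},
\end{align*}
for all $k \in [K]$ and $0 \le t \le H$. Since each $\tvus_t$ lies in $\Gao$ and $\Gao$ admits an $\epsilon_1$-neighborhood contained in $\Gat$ by property (3) of the working zone lemma, the above coupling immediately places $\vths_{k,t}$ and $\bvths[s+1]$ in $\Gat$ once $\eta$ is small enough that $\tC_3 \sqrt{\eta \log(1/(\eta\delta))} < \epsilon_1$; the assumption $\delta = \cO(\poly(\eta))$ ensures this for small enough $\eta$.

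Finally, I would translate the coupling into $\tPsi$-bounds. Since both $\vths_{k,t}$ and $\tvus_t$ lie in $\Gat \subseteq \cM^{\epsilon_4}$, \Cref{lemma: tpsi lipschitz} gives $\abs{\tPsi(\vths_{k,t}) - \tPsi(\tvus_t)} \le \sqrt{2\rho_2}\,\normtwo{\vths_{k,t} - \tvus_t}$, and similarly for $\bvths[s+1]$ versus $\tvus_H$. Setting $\tC_5 := \sqrt{2\rho_2}\,\tC_3$ and combining with the GD potential bounds produces both conclusions:
\begin{align*}
\tPsi(\vths_{k,t}) \le \tPsi(\bvths) + \tC_5\sqrt{\eta \log \tfrac{1}{\eta\delta}}, \qquad \tPsi(\bvths[s+1]) \le e^{-\alpha\mu/2}\tPsi(\bvths) + \tC_5\sqrt{\eta \log \tfrac{1}{\eta\delta}}.
\end{align*}

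The main obstacle is really just bookkeeping: one must verify carefully that every Local SGD iterate used in \Cref{lemma:tdelta} remains inside $\cM^{\epsilon_4}$ so that the quadratic Taylor remainder bound $\normtwo{\tvvs_{k,t}} \le \tfrac{\rho_3}{2}\normtwo{\tvDeltas_{k,t}}^2$ is valid, and that the Lipschitz constant $\sqrt{2\rho_2}$ for $\tPsi$ applies along the entire coupling segment. Both follow from the chain of inclusions $\Gao \subseteq \Gat \subseteq \Gath \subseteq \cM^{\epsilon_4}$ baked into the working zone construction, but making the sufficient smallness of $\eta$ uniform across $k, t$ requires the $(1-\delta)$-union bound already embedded in \Cref{lemma:tdelta}; no additional probabilistic argument is needed beyond what that lemma provides.
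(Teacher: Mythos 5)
Your proposal is correct and follows essentially the same argument as the paper's proof: deterministic GD potential decay via \Cref{lemma:gd2}, high-probability coupling of the Local SGD iterates to the auxiliary GD trajectory via \Cref{lemma:tdelta}, and transfer to $\tPsi$ via the $\sqrt{2\rho_2}$-Lipschitzness from \Cref{lemma: tpsi lipschitz}, with $\tC_5 := \sqrt{2\rho_2}\tC_3$. The only cosmetic difference is that the paper conditions explicitly on the concentration event \eqref{eq:tvZ2} from the proof of \Cref{lemma:tdelta} and invokes \Cref{lemma: concen tvZ} at the end, whereas you invoke \Cref{lemma:tdelta} as a black box; these are equivalent.
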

We are thus led to the following lemma which characterizes the evolution of the potential $\tPsi(\bvths)$ and $\tPsi(\vths_{k, t})$ over multiple rounds. 
\begin{lemma}\label{lemma:tpsi whole}
Given $\normtwo{\bvths[0]-\vphs[0]}\leq\frac{1}{2}\sqrt{\frac{\mu}{\rho_2}}\epsilon_0$, for $\delta=\cO(\poly(\eta))$ and any integer $1\leq R \leq \Rtot$,  with probability at least $1-\delta$, 
\begin{align}\label{eq:tpsi descent}
   \bvths \in \Gaz, \tPsi(\bvths)\leq \exp(-\alpha \mu s/2)\tPsi(\bvths[0])+\frac{1}{1-\exp(-\alpha\mu/2)} \tC_5\sqrt{\eta \log \frac{R}{\eta\delta}}
,  \forall 0 \leq s \leq R.
\end{align}
Furthermore,
\begin{align}\label{eq:tpsi individual}
   \bvths_{k, t}\in \Gat, \quad \tPsi(\vths_{k, t})&\leq  \tPsi(\bvths)+ \tC_5\sqrt{\eta \log \frac{R}{\eta \delta}}, \quad \forall 0\leq t \leq H, 0\leq s < R, k \in [K].
\end{align}
\end{lemma}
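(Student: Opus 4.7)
The plan is to induct on $s$ and apply the one-round evolution lemma (\Cref{lemma: evolve tpsi}) repeatedly with a union bound over rounds. That lemma yields both the individual-iterate bound and the one-step contraction $\tPsi(\bvths[s+1]) \le \exp(-\alpha\mu/2)\tPsi(\bvths) + \tC_5\sqrt{\eta\log(1/\eta\delta)}$, but only under the precondition that $\bvths \in \Gaz$, not merely $\bvths \in \Gat$. Propagating this precondition forward through all $R$ rounds will be the heart of the argument.

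First I will apply \Cref{lemma: evolve tpsi} with per-round failure probability $\delta/R$ and take a union bound, so that on a single event of probability at least $1-\delta$ the one-step recursion
\[
  \tPsi(\bvths[s+1]) \le \exp(-\alpha\mu/2)\,\tPsi(\bvths) + \tC_5\sqrt{\eta\log(R/\eta\delta)}
\]
holds simultaneously for every $0 \le s < R$ at which $\bvths \in \Gaz$. Unrolling this linear recursion and summing the resulting geometric series in $\exp(-\alpha\mu/2)$ immediately produces
\[
  \tPsi(\bvths) \le \exp(-\alpha\mu s/2)\,\tPsi(\bvths[0]) + \tfrac{1}{1-\exp(-\alpha\mu/2)}\tC_5\sqrt{\eta\log(R/\eta\delta)},
\]
which is the claimed potential bound in \eqref{eq:tpsi descent}.

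The main obstacle is ensuring $\bvths \in \Gaz$ at every step, since \Cref{lemma: evolve tpsi} only guarantees $\bvths[s+1] \in \Gat$, a larger neighborhood. I plan to close the induction by converting the unrolled potential bound into a distance bound. At initialization, $\rho_2$-smoothness of $\cL$ around $\vphs[0]$ gives $\tPsi(\bvths[0]) \le \sqrt{\rho_2/2}\,\normtwo{\bvths[0]-\vphs[0]} \le \tfrac{1}{2}\sqrt{\mu/2}\,\epsilon_0$. For subsequent rounds, since $\bvths \in \Gat \subseteq \Gath$ the gradient flow from $\bvths$ converges to $\Phi(\bvths) \in \Gamma$, and the arclength estimate underlying \Cref{lemma:pre equiv} gives $\normtwo{\bvths-\Phi(\bvths)} \le \sqrt{2/\mu}\,\tPsi(\bvths)$. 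Combining these two inequalities with the unrolled potential bound and choosing $\eta$ small enough that $\sqrt{2/\mu}\cdot\tfrac{1}{1-\exp(-\alpha\mu/2)}\tC_5\sqrt{\eta\log(R/\eta\delta)} < \tfrac{1}{2}\epsilon_0$ yields $\normtwo{\bvths-\Phi(\bvths)} < \epsilon_0$ uniformly in $s$, so $\bvths \in \Gaz$ and the induction closes. Once this membership has been established for all $s \le R$, the individual-iterate bound \eqref{eq:tpsi individual} drops out immediately by invoking the first conclusion of \Cref{lemma: evolve tpsi} on the same $(1-\delta)$-probability event.
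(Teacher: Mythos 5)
Your proposal is correct and follows essentially the same route as the paper's proof: union-bound the per-round noise concentration across all $R$ rounds to obtain a single event of probability at least $1-\delta$, then run a deterministic induction in which the one-step contraction from \Cref{lemma: evolve tpsi} is unrolled, and the precondition $\bvths \in \Gaz$ is propagated forward by combining the smoothness bound $\tPsi(\bvths[0]) \le \sqrt{\rho_2/2}\,\normtwo{\bvths[0]-\vphs[0]}$ with the gradient-flow arclength bound $\normtwo{\bvths-\Phi(\bvths)} \le \sqrt{2/\mu}\,\tPsi(\bvths)$ from \Cref{lemma:pre equiv}. The paper phrases the union bound as a single concentration event on the martingales $\tvZs_{k,t}$ (via \Cref{lemma: concen tvZ}) rather than on the conclusions of \Cref{lemma: evolve tpsi}, but the two framings are equivalent and the remainder of the argument is identical.
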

\begin{proof}
We prove induction that for $\delta=\cO(\poly(\eta))$, when \begin{align}\label{eq:tvZR}
 \normtwo{\tvZs_{k, t}}\leq \tC_1 \sigmax\sqrt{\frac{2 \alpha}{\eta}\log \frac{2 R \alpha K}{\eta \delta}}, \qquad \forall k \in [K], 0\leq t \leq H, 0 \leq s <R,
  \end{align}
then for all $0\leq s \leq R$, \eqref{eq:tpsi descent} and \eqref{eq:tpsi individual} hold.

When $s=0$, $\bvths[0]\in \Gaz$ and \eqref{eq:tpsi descent} trivially holds. By \Cref{lemma: evolve tpsi}, \eqref{eq:tpsi individual} holds. Assume that  \eqref{eq:tpsi descent} and \eqref{eq:tpsi individual} hold for round $s-1$. Then for round $s$, by \Cref{lemma: evolve tpsi}, $\bvths[s]\in \Gat$ and 
\begin{align*}
     \Psi(\bvths[s])
    &\leq \exp(-\alpha\mu/2 )\tPsi(\bvths[s-1])+\tC_5\sqrt{\eta \log \frac{R}{\eta\delta}}\\
    & \leq  \exp(-\alpha\mu s/2 )\tPsi(\bvths[0])+\frac{1}{1-\exp(-\alpha \mu /2)}\tC_5\sqrt{\eta \log \frac{R}{\eta\delta}},
\end{align*} 
where the second inequality comes from the induction hypothesis. By \Cref{lemma:pre equiv}, 
\begin{align*}
    \normtwo{\bvths - \vphs}& \leq \frac{2}{\sqrt{2\mu }}\tPsi(\bvths)\\
    &\leq \frac{2}{\sqrt{2\mu}}\tPsi(\bvths[0])+ \frac{2}{\sqrt{2\mu}(1-\exp(-\alpha \mu /2))}\tC_5\sqrt{\eta \log \frac{R}{\eta \delta}}\\
    &\leq \frac{1}{2}\epsilon_0+ \frac{2}{\sqrt{2\mu}(1-\exp(-\alpha \mu /2))}\tC_5\sqrt{\eta \log \frac{R}{\eta \delta}}.
\end{align*}
Here, the last inequality uses $\tPsi(\bvths[0])\leq \sqrt{\frac{\rho_2}{2}}\normtwo{\bvths-\vphs[0]}\leq \frac{1}{2}\sqrt{\frac{\mu}{2}}\epsilon_0$. Hence, when $\eta$ is sufficiently small, $\bvths \in \Gaz$. Still by \Cref{lemma: evolve tpsi}, $\bvths_{k, t}\in \Gat$ and 
\begin{align*}
    \tPsi(\vths_{k, t})\leq \tPsi(\bvths)+\tC_5 \sqrt{\eta \log \frac{R}{\eta\delta}}.
\end{align*}
Finally, according to \Cref{lemma: concen tvZ}, \eqref{eq:tvZR} holds with probability at least $1-\delta$. 

\end{proof}

The following corollary is a direct consequence of \Cref{lemma:tpsi whole} and \Cref{lemma:pre equiv}.
\begin{corollary}\label{coro:tpsi greater}
Let $s_1: = \ceil{\frac{20}{\alpha\mu}\log \frac{1}{\eta}}$. Given $\normtwo{\bvths[0]-\vphs[0]}\leq \frac{1}{2}\sqrt{\frac{\mu}{\rho_2}}\epsilon_0$, for $\delta=\cO(\poly(\eta))$, with probability at least $1-\delta$,
\begin{align}\label{eq:tpsi desce}
    \tPsi(\bvths[s_1])
    & \leq \tC_6\sqrt{\eta \log \frac{1}{\eta \delta}}, \quad \normtwo{\bvths[s_1]-\vphs[s_1]}\leq \tC_6 \sqrt{\eta \log \frac{1}{\eta\delta}},
\end{align}
where $\tC_6$ is a constant.
\end{corollary}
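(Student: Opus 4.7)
The plan is to derive this as a direct consequence of Lemma~\ref{lemma:tpsi whole} (which gives exponential decay of the potential modulo a noise floor) combined with Lemma~\ref{lemma:pre equiv} (which converts potential bounds into distance-to-manifold bounds via the gradient flow projection). The core idea is that $s_1 = \lceil \frac{20}{\alpha\mu}\log\frac{1}{\eta}\rceil$ is chosen precisely so that the contractive factor $\exp(-\alpha\mu s_1/2) \leq \eta^{10}$ makes the initial potential's contribution negligible, leaving only the noise-floor term $\tilde{C}_5\sqrt{\eta\log(R/(\eta\delta))}$.

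First, I would invoke Lemma~\ref{lemma:tpsi whole} with the choice $R = s_1$. This is a valid choice since $s_1 = \Theta(\log(1/\eta)) \leq R_{\text{tot}}$ for sufficiently small $\eta$. The hypothesis $\|\bvths[0] - \vphs[0]\| \leq \frac{1}{2}\sqrt{\mu/\rho_2}\epsilon_0$ matches that of Lemma~\ref{lemma:tpsi whole} exactly. With probability at least $1-\delta$, evaluating the bound at $s = s_1$ yields
\begin{align*}
\tPsi(\bvths[s_1]) \leq \exp(-\alpha\mu s_1/2)\tPsi(\bvths[0]) + \frac{\tC_5}{1-\exp(-\alpha\mu/2)}\sqrt{\eta\log\frac{s_1}{\eta\delta}},
\end{align*}
and also $\bvths[s_1] \in \Gaz$.

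Next, I would bound each term. For the initial potential, $\rho_2$-smoothness of $\cL$ near $\Gamma$ gives $\tPsi(\bvths[0]) \leq \sqrt{\rho_2/2}\,\|\bvths[0] - \vphs[0]\| \leq \frac{1}{2}\sqrt{\mu}\epsilon_0/\sqrt{2} = \cO(1)$. Since $s_1 \geq \frac{20}{\alpha\mu}\log\frac{1}{\eta}$, the contraction factor satisfies $\exp(-\alpha\mu s_1/2) \leq \eta^{10}$, so the first term is $\cO(\eta^{10})$, which is dominated by $\sqrt{\eta\log(1/(\eta\delta))}$. For the second term, since $s_1 = \cO(\log(1/\eta))$, we have $\log(s_1/(\eta\delta)) = \log s_1 + \log(1/(\eta\delta)) = \cO(\log(1/(\eta\delta)))$ for $\delta = \cO(\poly(\eta))$. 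Combining, there is a constant $\tC_6$ (depending only on $\tC_5, \alpha, \mu, \rho_2, \epsilon_0$) such that $\tPsi(\bvths[s_1]) \leq \tC_6\sqrt{\eta\log\frac{1}{\eta\delta}}$.

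Finally, to obtain the second bound on $\|\bvths[s_1] - \vphs[s_1]\|$, I would apply Lemma~\ref{lemma:pre equiv} at $\bvths[s_1]$. Since $\bvths[s_1] \in \Gaz \subset \Gat$ (by the nesting of working zones in Lemma~\ref{lemma:workingzone}), the gradient flow initialized at $\bvths[s_1]$ converges to some limit point in $\Gamma$, which by definition equals $\vphs[s_1] = \Phi(\bvths[s_1])$. Lemma~\ref{lemma:pre equiv} then gives $\|\bvths[s_1] - \vphs[s_1]\| \leq \sqrt{2/\mu}\,\tPsi(\bvths[s_1]) \leq \sqrt{2/\mu}\,\tC_6\sqrt{\eta\log\frac{1}{\eta\delta}}$. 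Absorbing $\sqrt{2/\mu}$ into $\tC_6$ (or redefining $\tC_6$ as the max of the two constants needed) completes the proof. There is no substantial technical obstacle here; the entire argument is bookkeeping, with the only subtlety being to check that the $\log s_1$ factor is swallowed into the existing $\log(1/(\eta\delta))$ term, which holds because $s_1$ grows only polylogarithmically in $1/\eta$.
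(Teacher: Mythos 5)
Your proposal is correct and takes essentially the same route as the paper's own (very terse) proof: substitute $R = s_1$ into Lemma~\ref{lemma:tpsi whole}, note that the contraction factor $\exp(-\alpha\mu s_1/2) \le \eta^{10}$ renders the initial-potential term negligible and that $\log(s_1/(\eta\delta)) = \cO(\log\frac{1}{\eta\delta})$ since $s_1$ is polylogarithmic, and then convert the potential bound into a distance-to-manifold bound via Lemma~\ref{lemma:pre equiv} using that $\bvths[s_1]\in\Gaz$. You have merely spelled out the bookkeeping that the paper leaves implicit.
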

\begin{proof}
Substituting in $R=s_1$ to \Cref{lemma:tpsi whole} and applying $\normtwo{\bvths[s_1]-\vphs}\leq \sqrt{\frac{2}{\mu}}\tPsi(\bvths[s_1])$ for $\bvths[s_1]\in \Gaz$,  we have the lemma.
\end{proof}

Finally, we provide a high probability bound for the change  of the projection on the manifold after $s_1$ rounds $\normtwo{\vphs[s_1]-\vphs[0]}$.
\begin{lemma}\label{lemma:dvphs R0}
Let $s_1: = \ceil{\frac{20}{\alpha\mu}\log \frac{1}{\eta}}$. Given $\normtwo{\bvths[0]-\vphs[0]}\leq \frac{1}{2}\sqrt{\frac{\mu}{\rho_2}}\epsilon_0$. For $\delta=\cO(\poly(\eta))$, with probability at least $1-\delta$,
\begin{align*}
    \normtwo{\vphs[s_1]-\vphs[0]}\leq \tC_{8}\log\frac{1}{\eta}\sqrt{\eta \log \frac{1}{\eta\delta}}.
\end{align*}
\end{lemma}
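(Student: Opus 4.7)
The plan is to bound the displacement of the manifold projection by a telescoping sum $\|\vphs[s_1]-\vphs[0]\| \leq \sum_{s=0}^{s_1-1}\|\vphs[s+1]-\vphs\|$ and then to show that with high probability each per-round term is of order $\sqrt{\eta\log(1/(\eta\delta))}$, so that summing over $s_1=\cO(\log(1/\eta))$ rounds yields the claimed bound. The essential observation is that the gradient-flow projection $\Phi$ is \emph{invariant along any gradient-flow trajectory}: letting $\vu^{(s)}(\cdot)$ denote the continuous gradient flow of $\cL$ initialized at $\bvths$, we have $\Phi\bigl(\vu^{(s)}(t)\bigr)=\vphs$ for every $t\geq 0$. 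Therefore the ``macroscopic'' gradient-descent motion within a round contributes nothing to the projection movement; only the GD-vs.-GF discretization error and the Local-SGD-vs.-GD noise error do.

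\textbf{Per-round decomposition.} Fix round $s$ and write $\tvus_t$ for the auxiliary GD sequence with $\tvus_0=\bvths$ (as in \Cref{lemma:tdelta}) and $\vus_t:=\vu^{(s)}(t\eta)$ for its continuous counterpart. Since $\Phi(\vus_H)=\vphs$, the triangle inequality gives
\begin{align*}
\|\vphs[s+1]-\vphs\| \;\leq\; \|\Phi(\bvths[s+1])-\Phi(\tvus_H)\| \;+\; \|\Phi(\tvus_H)-\Phi(\vus_H)\|.
\end{align*}
By \Cref{lemma:tpsi whole} applied with $R=s_1$, with probability at least $1-\delta$ we have $\bvths[s+1]\in\Gaz\subset\Gat$ and $\tvus_H\in\Gao\subset\Gat$ simultaneously for every $0\leq s<s_1$. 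The hierarchy in \Cref{lemma:workingzone} together with \Cref{lemma:pre equiv} further guarantees that the GF trajectory from $\bvths\in\Gaz$ stays within distance $\sqrt{\rho_2/\mu}\,\epsilon_0$ of $\bvths$ and hence in $\Gat$. On $\Gat$, $\Phi$ is smooth and therefore $\nu_1$-Lipschitz, so both terms above are bounded by iterate distances $\nu_1\|\bvths[s+1]-\tvus_H\|$ and $\nu_1\|\tvus_H-\vus_H\|$. \Cref{lemma:tdelta} gives the first distance as $\tC_3\sqrt{\eta\log(s_1/(\eta\delta))}$ (with a union bound over rounds), and \Cref{lemma: track gf} with $\tvus_0=\vus_0$ gives the second as $\tC_1 \eta G$, where $G=\sup_t\|\nabla\cL(\vu^{(s)}(t))\|$ is uniformly bounded by some constant because the GF trajectory lies in $\Gat$ and $\|\nabla\cL(\vtheta)\|\leq\rho_2\|\vtheta-\Phi(\vtheta)\|\leq\rho_2\epsilon_2$ there.

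\textbf{Summation and main obstacle.} Combining the per-round estimates and summing,
\begin{align*}
\|\vphs[s_1]-\vphs[0]\| \;\leq\; s_1\cdot\nu_1\bigl(\tC_3\sqrt{\eta\log(s_1/(\eta\delta))}+\tC_1\rho_2\epsilon_2\,\eta\bigr) \;=\; \cO\!\left(\log(1/\eta)\sqrt{\eta\log(1/(\eta\delta))}\right),
\end{align*}
for sufficiently small $\eta$ (the $\eta$ term is dominated by $\sqrt{\eta}$, and $\log s_1=\cO(\log\log(1/\eta))$ is absorbed into $\log(1/(\eta\delta))$ since $\delta=\cO(\poly(\eta))$), which identifies the constant $\tC_8$. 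The main technical obstacle is coordinating the high-probability events across the $s_1=\cO(\log(1/\eta))$ rounds: each application of \Cref{lemma:tdelta} has failure probability $\delta$ per round and a union bound requires replacing $\delta$ by $\delta/s_1$, producing a $\log(s_1/(\eta\delta))$ factor that must still collapse into $\log(1/(\eta\delta))$ up to constants. A related subtlety is verifying that $\vus_t$ lives in a region where $\Phi$ is well-defined and $\nu_1$-Lipschitz; this is precisely what the nested scales $\epsilon_0\ll\epsilon_1\ll\epsilon_2\ll\epsilon_3$ chosen in \Cref{lemma:workingzone} (notably $\epsilon_0<\sqrt{\mu/\rho_2}\,\epsilon_2$) are designed to ensure.
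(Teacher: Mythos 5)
Your proof is correct, and it takes a genuinely different route from the paper for the ``gradient-descent contribution'' within a round. Both proofs split $\normtwo{\vphs[s+1]-\vphs}$ into a noise term and a deterministic-GD term, and both bound the noise term identically via $\nu_1\normtwo{\bvths[s+1]-\tvus_H}$ and \Cref{lemma:tdelta}. Where you diverge is in how the GD term is handled. The paper compares $\Phi(\tvus_H)$ to $\Phi(\tvus_0)=\vphs$ directly by telescoping the per-step Taylor expansion $\Phi(\tvus_{t+1})=\Phi(\tvus_t)+\frac{\eta^2}{2}\partial^2\Phi(\cdot)[\nabla\cL(\tvus_t),\nabla\cL(\tvus_t)]$, crucially using $\partial\Phi(\vtheta)\nabla\cL(\vtheta)=\vzero$ to kill the $\cO(\eta)$-order term, and then controls $\eta^2\sum_t\normtwo{\nabla\cL(\tvus_t)}^2$ via the descent lemma (\Cref{lemma:gd2}), exploiting the geometric decay of $\tPsi(\bvths)$. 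You instead compare $\tvus_H$ against a \emph{fresh} gradient flow $\vus_H=\vu^{(s)}(H\eta)$ restarted from $\bvths$, use the exact invariance $\Phi(\vus_H)=\vphs$, and invoke \Cref{lemma: track gf} (with $\tvus_0=\vus_0=\bvths$) plus the Lipschitzness of $\Phi$ on $\Gat$ to bound the GD-term by $\nu_1\tC_1\eta G$. Both routes give an $\cO(\eta)$ per-round bound, so after multiplying by $s_1=\cO(\log\frac{1}{\eta})$ the GD contribution is dominated by the noise contribution either way. Your route is arguably more modular, since it reuses \Cref{lemma: track gf} wholesale rather than repeating a Taylor argument; the paper's route is slightly tighter (the geometric decay of $\tPsi$ shaves the extra $\log\frac{1}{\eta}$ on the GD term), but this does not change the final bound. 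Your handling of the union bound (replacing $\delta$ by $\delta/s_1$ and absorbing $\log s_1 = \cO(\log\log\frac{1}{\eta})$ into $\log\frac{1}{\eta\delta}$ using $\delta=\cO(\poly(\eta))$) and of the working-zone containments (the restarted GF from $\bvths\in\Gaz$ stays in $\Gao$ by \Cref{lemma:pre equiv}, so $G\leq\rho_2\epsilon_2$) are both accurate.
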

\begin{proof}
From \Cref{lemma:tpsi whole}, for $\delta = \cO(\poly(\eta))$, when
\begin{align}\label{eq:tvZs1}
 \normtwo{\tvZs_{k, t}}\leq \tC_1 \sigmax\sqrt{\frac{2 \alpha}{\eta}\log \frac{2 s_1 \alpha K}{\eta \delta}}, \qquad \forall k \in [K], 0\leq t \leq H, 0 \leq s <s_1,
  \end{align}
then $\bvths \in \Gaz$, for all $ 0\leq s \leq s_1$. By the definition of $\Gaz$,  $\tvus_t \in \Gao$ , for all $0\leq t \leq H, 0 \leq s \leq s_1$. By triangle inequality, $\normtwo{\vphs[s_1]-\vphs[0]}$ can be decomposed as follows.
\begin{align}
\normtwo{\vphs[s_1]-\vphs[0]}&\leq \sum_{s=0}^{s_1-1}\normtwo{\vphs[s+1]-\vphs}\notag\\
& \leq \sum_{s=0}^{s_1-1} \normtwo{\Phi(\tvus_H)-\Phi(\tvus_0)}+ \sum_{s=0}^{s_1-1} \normtwo{\Phi(\bvths[s+1])-\Phi(\tvus_H)}
\label{eq:dvphs}.
\end{align}
% \begin{align*}
%   \normtwo{\bvths[s+1]-\tvus_{H}}\leq \tC_4\sqrt{\eta \log \frac{1}{\eta \delta}}, \forall 0 \leq s \leq R-1.
% \end{align*}
By \Cref{lemma:tdelta}, when  \eqref{eq:tvZs1} hold , then for all $0\leq s <s_1-1$,
\begin{align*}
    \normtwo{\bvths[s+1]-\tvus_H}\leq \tC_3 \sqrt{\eta \log \frac{s_1}{\eta\delta}}.
\end{align*}
This implies that $\bvths[s+1]\in B^{\epsilon_1}(\tvus_H)$. Since for all $\vtheta\in \Gat$,  $\normtwo{\partial \Phi(\vtheta)}\leq \nu_1$,  then $\Phi(\cdot)$ is $\nu_1$-Lipschitz in $B^{\epsilon_1}(\tvus_H)$. This gives
\begin{align}
    \normtwo{\Phi(\bvths[s+1])-\Phi(\tvus_H)}&\leq \nu_1  \normtwo{\bvths[s+1]-\tvus_{H}}\notag\\
    & \leq \nu_1 \tC_3 \sqrt{\eta \log \frac{s_1}{\eta \delta}}.\label{eq:term1}
\end{align}
Then we analyze $\normtwo{\bvths[s+1]-\tvus_H}$. By \Cref{lemma:movement} and the definition of $\Gaz$ and $\Gao$, there exists $\vphi\in \Gamma$ such that $\tvus_t \in B^{\epsilon_1}(\vphi)$, $\forall 0 \leq t \leq H$. Therefore, we can expand $\Phi(\tvus_{t+1})$ as follows:
\begin{align*}
    \Phi(\tvus_{t+1})&=\Phi(\tvus_t - \eta \nabla \cL(\tvus_t))\\
    & = \Phi(\tvus_t)-\eta\partial\Phi(\tvus)\nabla \cL(\vus_t)+\frac{\eta^2}{2}\partial^2\Phi(\hvus_t)[\nabla \cL(\tvus_t), \nabla \cL(\tvus_t)]\\
    &=\Phi(\tvus_t) + \frac{\eta^2}{2}\partial^2\Phi\left( \cs_t \tvus_t + (1-\cs_t)\tvus_{t+1}\right)[\nabla \cL(\tvus_t), \nabla \cL(\tvus_t)],
\end{align*}
where $\cs_t \in (0, 1)$. Then we have
\begin{align*}
    \normtwo{\Phi(\tvus_{H})-\Phi(\tvus_0)}&\leq \frac{\eta^2}{2}\sum_{t=0}^{H-1}\normtwo{\partial^2\Phi(\left( \cs_t \tvus_t + (1-\cs_t)\tvus_{t+1}\right))[\nabla \cL(\tvus), \nabla \cL(\tvus_t)]}\\
    & \leq \frac{\eta^2}{2}\nu_2 \sum_{t=0}^{H-1}\normtwo{\nabla \cL(\tvus_t)}^2.
\end{align*}
By \Cref{lemma:gd2}, $\frac{\eta}{2}\normtwo{\nabla \cL(\tvus_t)}^2 \leq \cL(\tvus_t)-\cL(\tvus_{t+1})$. Therefore, 
\begin{align}
     \normtwo{\Phi(\tvus_{H})-\Phi(\tvus_0)}&\leq \eta \nu_2 (\cL(\tvus_0)-\cL(\tvus_H))\notag\\
     &\leq \eta \nu_2 [\tPsi(\bvths)]^2\notag\\
     & \leq \nu_2 \eta \left[2 \exp(-\alpha s\mu)\tPsi(\bvths[0])+ \frac{\tC_5^2\eta}{(1-\exp(-\alpha \mu/2))^2} \log\frac{s_1}{\eta\delta}\right],\label{eq:term2}
\end{align}
where the last inequality uses Cauchy-Schwartz inequality and \Cref{lemma:tpsi whole}. Summing up   \eqref{eq:term2} , we obtain
%Substitute \eqref{eq:term2} into \eqref{eq:dvphs} and we obtain
\begin{align}
    \sum_{s=0}^{s_1-1}\normtwo{\Phi(\tvus_H)-\Phi(\tvus_0)}&\leq \nu_2 \eta\left[ 2\tPsi(\bvths[0])\sum_{s=0}^{s_1-1}\exp(-\alpha \mu s)+\frac{s_1\tC_5^2\eta}{(1-\exp(-\alpha \mu/2))^2} \log\frac{s_1}{\eta\delta}\right]\notag\\
    & \leq \tC_{7}\eta \log \frac{1}{\eta}\log \frac{1}{\eta\delta}\label{eq:term22},
\end{align}
where $\tC_{7}$ is a constant.
Substituting \eqref{eq:term1} and \eqref{eq:term22} into \eqref{eq:dvphs}, for sufficiently small $\eta$, we have
\begin{align*}
    \normtwo{\vphs[s_1]-\vphs[0]}&\leq \nu_1  \tC_3 s_1 \sqrt{\eta \log \frac{s_1}{\eta \delta}}+\tC_{7}\eta \log \frac{1}{\eta}\log \frac{1}{\eta\delta}\\
    & \leq \tC_{8}\log \frac{1}{\eta}\sqrt{\eta \log \frac{1}{\eta\delta}},
\end{align*}
where $\tC_{8}$ is a constant. Finally, according to \Cref{lemma: concen tvZ}, \eqref{eq:tvZs1} holds with probability at least $1-\delta$. 
\end{proof}

\subsection{Phase 2: Iterates Staying Close to Manifold}\label{subsec:phase2}
In this subsection, we show that $\normtwo{\vxs_{k, t}}=\ctO(\sqrt{\eta})$  and $\normtwo{\bvths[s+r]-\bvths}=\ctO(\eta^{0.5-0.5\beta})$, $\forall 0\leq r \leq \Rg$ with high probability. 
\subsubsection{Additional notations}
Before presenting the lemmas, 
we define the following martingale $\{\scrMs_{k, t}\}^{H}_{t=0}$ that will be useful in the proof:
$$\scrMs_{k, t}:=\sum_{\tau=0}^{t-1}\vzs_{k,\tau}, \quad \scrM_{k,0}=\vzero.$$ 
We also define $\tmP:\R^d \to \R^{d\times d}$ as an extension of $\partial \Phi$:
\begin{align*}
    \tmP(\vtheta):=\begin{cases} \partial \Phi(\vtheta), & \text{if\ } \vtheta \in \Gat ,\\
\vzero, & \text{otherwise}.
\end{cases}
\end{align*}
Finally, we define a martingale $\{\vZs_t: s\geq 0, 0\leq t \leq H\}$:
\begin{align*}
    \vZs_t:=\frac{1}{K}\sum_{k\in [K]}\sum_{r=0}^{s-1}\sum_{\tau=0}^{H-1}\tmP(\bvths[r])\vzs[r]_{k,t}+\frac{1}{K}\sum_{k\in [K]}\sum_{\tau=0}^{t-1}\tmP(\bvths)\vzs_{k,t},\quad \vZs[0]_0=\vzero.
\end{align*}

\subsubsection{Proof for the High Probability Bounds}
A direct application of Azuma-Hoeffding's inequality yields the following lemma.

\begin{lemma}[Concentration property of $\scrMs_{k, t}$]\label{lemma:concen m}
With probability at least $1-\delta$, the following holds:
\begin{align*}
    \normtwo{\scrMs_{k,t}} \leq \tC_9 \sqrt{\frac{1}{\eta} \log \frac{1}{\eta\delta}}, \quad \forall 0 \leq t \leq H, k\in[K], 0\leq s <\Rg,
\end{align*}
where $\tC_9$ is a constant.
\end{lemma}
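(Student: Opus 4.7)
The plan is to apply a vector-valued Azuma--Hoeffding inequality to $\{\scrMs_{k,t}\}_{t=0}^H$ for each fixed $(s,k)$, and then take a union bound over all $(s,k,t)$ triples with $0 \le s < \Rg$, $k \in [K]$, $0 \le t \le H$. First I would observe that $\scrMs_{k,t} = \sum_{\tau=0}^{t-1}\vzs_{k,\tau}$ is a martingale in $t$ with respect to the natural filtration generated by the mini-batch samples on worker $k$ within round $s$: each $\vzs_{k,\tau}$ is the difference between a stochastic gradient estimate and its expectation $\nabla\cL(\vths_{k,\tau})$, so it is mean-zero conditional on the past, and by the standing assumption that $\normtwo{\nabla\ell(\vtheta;\xi)}$ is uniformly bounded, the increments satisfy $\normtwo{\vzs_{k,\tau}} \le \sigmax$ almost surely.

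Next I would invoke a standard vector Azuma--Hoeffding bound (or equivalently apply the scalar Azuma--Hoeffding inequality to each of the $d$ coordinates and union-bound, giving an extra factor $d$ inside the log that is absorbed into the constant since $d$ is fixed): for any $\epsilon > 0$,
\[
  \PP\bigl(\normtwo{\scrMs_{k,t}} \ge \epsilon\bigr) \;\le\; 2d\exp\!\left(-\tfrac{\epsilon^2}{2 t\, \sigmax^2}\right) \;\le\; 2d\exp\!\left(-\tfrac{\epsilon^2}{2 H \sigmax^2}\right).
\]
Since $H = \alpha/\eta$, choosing $\epsilon = \tC_9\sqrt{\tfrac{1}{\eta}\log\tfrac{1}{\eta\delta}}$ with $\tC_9$ a sufficiently large constant (depending on $\alpha$ and $\sigmax$ but not on $\eta,\delta$) makes this probability $\cO(\poly(\eta)\,\delta)$.

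Finally, the union-bound step: the total number of triples $(s,k,t)$ is at most $\Rg\cdot K\cdot (H+1) = \cO(\eta^{-(1+\beta)})$, which is polynomial in $1/\eta$. Under the standing restriction $\delta = \cO(\poly(\eta))$, the union bound increases the log factor only by $\cO(\log\tfrac{1}{\eta})$, which is again absorbed into $\log\tfrac{1}{\eta\delta}$ by adjusting $\tC_9$. Putting these pieces together yields the stated inequality with probability at least $1-\delta$.

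There is no real obstacle here: the result is purely a bounded-martingale concentration bound, and the only bookkeeping is to verify that $H$ and the size of the union-bound set are both $\poly(1/\eta)$ so that the $\sqrt{\log(1/(\eta\delta))}$ dependence in the conclusion is the correct one. The value of $\tC_9$ can be made explicit as a constant multiple of $\sigmax\sqrt{\alpha}$ (times a factor depending on $d$), independent of $\eta$ and $\delta$.
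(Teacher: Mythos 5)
Your approach is correct and essentially identical to the paper's: both treat $\scrMs_{k,t}$ as a bounded-increment martingale with $\normtwo{\scrMs_{k,t+1}-\scrMs_{k,t}}\le\sigmax$, apply Azuma--Hoeffding, and union-bound over the $\cO(KH\Rg)=\poly(1/\eta)$ triples $(s,k,t)$, after which $H=\alpha/\eta$ and $\Rg=\lfloor 1/(\alpha\eta^{\beta})\rfloor$ give the stated $\sqrt{(1/\eta)\log(1/(\eta\delta))}$ scaling. The only cosmetic difference is that you spell out a coordinate-wise scalar Azuma argument with an extra factor $d$ absorbed into $\tC_9$, while the paper invokes the vector form directly; both are valid.
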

\begin{proof}
Notice that $\normtwo{\scrMs_{k,t+1}-\scrMs_{k,t}}\leq \sigmax$. Then by Azuma-Hoeffdings inequality,
\begin{align*}
    \PP(\normtwo{\scrMs_{k,t}}\geq \epsilon') \leq 2\exp\left(-\frac{\epsilon'^2}{2t \sigmax^2}\right).
\end{align*}
Taking union bound on $K$ clients, $H$ local steps and $\Rg$ rounds, we obtain that the following inequality holds with probability at least $1-\delta$:
\begin{align*}
    \normtwo{\scrMs_{k,t}} \leq \sigmax\sqrt{2 H\log \frac{2K H\Rg }{\delta}}, \quad \forall 0 \leq t \leq H, k\in[K], 0\leq s <\Rg.
\end{align*}
Substituting in  $H=\frac{\alpha}{\eta}$ and $\Rg=\floor{\frac{1}{\alpha \eta^{\beta}}}$ yields the lemma.
\end{proof}
Again applying Azuma-Hoeffding's inequality, we have the following lemma about the concentration property of $\vZs_t$.
\begin{lemma}[Concentration property of $\vZs_t$]\label{lemma:concen Z}
With probability at least $1-\delta$, the following inequality holds:
\begin{align*}
    \normtwo{\vZs_{H}}\leq \tC_{12}\eta^{-0.5-0.5\beta}\sqrt{\log \frac{1}{\eta\delta}}, \quad \forall 0 \leq s < \Rg.
\end{align*}
\end{lemma}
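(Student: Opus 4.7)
The plan is to recognize $\vZs[s]_H$ as a vector-valued martingale with bounded differences and apply Azuma–Hoeffding combined with union bounds. First, I would fix the natural filtration $\{\cF_{s,k,\tau}\}$ generated by the noise variables $\vzs[r]_{k',\tau'}$ up to the lexicographic index $(r,k',\tau') \prec (s,k,\tau)$. Unpacking the definition, $\vZs[s]_H$ is a linear combination of the noise variables $\vzs[r]_{k,\tau}$ for $0 \leq r \leq s$, $k \in [K]$, $0 \leq \tau \leq H-1$, with deterministic-given-history coefficient matrices $\tfrac{1}{K}\tmP(\bvths[r])$ (note that $\bvths[r]$ is $\cF_{r-1,\cdot,\cdot}$-measurable, so the coefficient is measurable before the noise is drawn). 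Since $\E[\vzs[r]_{k,\tau} \mid \cF_{r,k,\tau}] = \vzero$, the successive partial sums form a martingale in the lexicographic ordering, and the total number of increments contributing to $\vZs[s]_H$ is at most $(s+1)KH$.

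Next I would bound each increment. By definition $\tmP(\vtheta) = \partial\Phi(\vtheta)$ on $\Gat$ and $\vzero$ otherwise, so $\normtwo{\tmP(\vtheta)} \leq \nu_1$ uniformly on $\R^d$, and $\normtwo{\vzs[r]_{k,\tau}} \leq \sigmax$ by assumption. Therefore each martingale increment has norm at most $\nu_1\sigmax/K$. Applying the Azuma–Hoeffding inequality coordinate-wise (with a union bound over the $d$ coordinates, or equivalently a standard vector-valued Azuma bound) yields, for any fixed $s < \Rg$,
\begin{align*}
    \PP\!\left(\normtwo{\vZs[s]_H} \geq \epsilon\right) \;\leq\; 2d\exp\!\left(-\frac{\epsilon^2}{2(s+1)H\,\nu_1^2\sigmax^2/K}\right).
\end{align*}
Since $s+1 \leq \Rg \leq 1/(\alpha\eta^{\beta})$ and $H = \alpha/\eta$, the variance proxy is at most $\nu_1^2\sigmax^2/(K\eta^{1+\beta})$.

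Then I would take a union bound over $0 \leq s < \Rg$ (contributing a $\log\Rg = \cO(\log(1/\eta))$ factor absorbed into $\log(1/(\eta\delta))$), set
\begin{align*}
    \epsilon \;=\; \tC_{12}\,\eta^{-0.5-0.5\beta}\sqrt{\log\tfrac{1}{\eta\delta}}
\end{align*}
for a sufficiently large constant $\tC_{12}$ depending only on $\alpha, \nu_1, \sigmax, d, K$, and verify that the right-hand side of the Azuma bound is $\leq \delta$ for $\delta = \cO(\poly(\eta))$. This delivers the stated inequality simultaneously for all $0 \leq s < \Rg$.

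The proof is essentially routine once the martingale structure is identified, so the only real obstacle is a bookkeeping check: one must confirm that $\tmP(\bvths[r])$ is measurable with respect to the filtration generated before the round-$r$ noise is sampled (which is why the extension of $\partial\Phi$ by zero outside $\Gat$ is crucial — it makes the coefficient a bounded, well-defined random matrix even when $\bvths[r]$ wanders away from $\Gat$), and that the counting of increments gives exactly the $\eta^{-1-\beta}$ scaling that produces the $\eta^{-0.5-0.5\beta}$ in the concentration bound.
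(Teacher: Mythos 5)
Your proposal is correct and follows essentially the same route as the paper: identify $\vZs_H$ as a martingale with bounded increments, apply (vector-valued) Azuma--Hoeffding, and union-bound over the $\Rg$ rounds; the only cosmetic difference is that you index the martingale by individual worker contributions $(r,k,\tau)$ with increments $\frac{1}{K}\tmP(\bvths[r])\vzs[r]_{k,\tau}$, whereas the paper groups the $K$ workers and uses increments $\frac{1}{K}\sum_k\tmP(\bvths[r])\vzs[r]_{k,\tau}$ per time step. Your finer decomposition actually yields a variance proxy smaller by a factor of $K$ (since the per-worker contributions are averaged), and you correctly use $\nu_1$ where the paper has an apparent typo writing $\nu_2$; both differences only affect the absorbed constant $\tC_{12}$.
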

\begin{proof}
Notice that $\normtwo{\vZs_{t+1} - \vZs_t}\leq \nu_2 \sigmax, \forall 0 \leq t \leq H-1$ and $\normtwo{\vZs[s+1]_0-\vZs[s]_H}\leq \nu_2 \sigmax$. By Azuma-Hoeffding's inequality, 
\begin{align*}
    \PP(\normtwo{\vZs_t}\geq \epsilon')\leq 2 \exp\left(-\frac{\epsilon'^2}{2(sH+t)\nu_2^2\sigmax^2}\right).
\end{align*}
Taking union bound on $\Rg$ rounds, we obtain that the following inequality holds with probability at least $1-\delta$:
\begin{align*}
    \normtwo{\vZs_H}\leq \sigmax \nu_2 \sqrt{2 H \Rg \log \frac{2\Rg}{\delta}}, \quad \forall 0 \leq s <\Rg.
\end{align*}
Substituting in $H=\frac{\alpha}{\eta}$ and $\Rg=\floor{\frac{1}{\alpha\eta^{\beta}}}$ yields the lemma.
\end{proof}

We proceed to present a direct corollary of \Cref{lemma:tpsi whole} which provides a bound for the potential function over $\Rg$ rounds.
\begin{lemma}\label{lemma: tpsi grp}
Given $\normtwo{\bvths[0]-\vphs[0]}\leq C_0 \sqrt{\eta \log \frac{1}{\eta}}$ where $C_0$ is a constant, then for $\delta=\cO(\poly(\eta))$, with probability at least $1-\delta$, 
\begin{align}\label{eq:avg1}
  \bvths\in \Gaz, \quad  \tPsi(\bvths)\leq C_1\sqrt{\eta \log \frac{1}{\eta\delta}}, \quad \forall 0 \leq s < \Rg,
\end{align}
and 
\begin{align}\label{eq:indi1}
    \bvths_{k,t}\in \Gat,\quad  \tPsi(\bvths_{k,t})\leq C_1 \sqrt{\eta \log\frac{1}{\eta\delta}}, \quad \forall 0 \leq s < \Rg, 0\leq t\leq H, k \in [K],
\end{align}
where $C_1$ is a constant that can depend on $C_0$.
\end{lemma}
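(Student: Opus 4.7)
The plan is to derive this lemma as a direct specialization of Lemma~\ref{lemma:tpsi whole}, treating $R = \Rg$ and exploiting the fact that the prescribed initial distance $C_0\sqrt{\eta \log \frac{1}{\eta}}$ is much smaller than the radius $\frac{1}{2}\sqrt{\mu/\rho_2}\epsilon_0$ required by that lemma. First I would verify the hypothesis: since $C_0\sqrt{\eta \log \frac{1}{\eta}} \to 0$ as $\eta \to 0$, for sufficiently small $\eta$ we have $\normtwo{\bvths[0]-\vphs[0]} \le \frac{1}{2}\sqrt{\mu/\rho_2}\,\epsilon_0$, which places $\bvths[0] \in \Gaz$ and allows Lemma~\ref{lemma:tpsi whole} to apply with $R = \Rg$. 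The $\mu$-PL and $\rho_2$-smoothness on $\Gath$ also give the initial potential bound $\tPsi(\bvths[0]) \le \sqrt{\rho_2/2}\cdot C_0\sqrt{\eta \log \frac{1}{\eta}}$, since $\vphs[0] \in \Gamma$ is a minimizer in a neighborhood containing $\bvths[0]$.

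Next I would invoke Lemma~\ref{lemma:tpsi whole} with $R = \Rg = \llfloor{1/(\alpha\eta^\beta)}$. It yields, with probability at least $1-\delta$ for $\delta = \cO(\poly(\eta))$, that $\bvths \in \Gaz$ for all $0 \le s \le \Rg$ with
\[
\tPsi(\bvths) \le \exp(-\alpha\mu s/2)\,\tPsi(\bvths[0]) + \frac{\tC_5}{1-\exp(-\alpha\mu/2)}\sqrt{\eta \log \frac{\Rg}{\eta\delta}},
\]
and analogously for the local iterates $\vths_{k,t} \in \Gat$ with the extra additive term $\tC_5\sqrt{\eta \log \frac{\Rg}{\eta\delta}}$. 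The key observation to absorb everything into a single $\sqrt{\eta \log \frac{1}{\eta\delta}}$ bound is that $\Rg \le 1/(\alpha\eta^\beta) \le 1/(\alpha\eta)$, so $\log(\Rg/(\eta\delta)) = \cO(\log(1/(\eta\delta)))$; likewise $\sqrt{\eta \log (1/\eta)} = \cO(\sqrt{\eta \log (1/(\eta\delta))})$ since $\delta = \cO(\poly(\eta))$ implies $\log(1/\delta) = \cO(\log(1/\eta))$. Combining the two contributions, the exponential decay factor makes the first term bounded by $\tPsi(\bvths[0])$, yielding the bound $\tPsi(\bvths) \le C_1 \sqrt{\eta \log \frac{1}{\eta\delta}}$ with $C_1$ depending on $C_0$, $\tC_5$, $\rho_2$, $\mu$, and $\alpha$.

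The bound on $\tPsi(\vths_{k,t})$ for local iterates, together with the inclusion $\vths_{k,t} \in \Gat$, follows from the same invocation of Lemma~\ref{lemma:tpsi whole}, since the additional $\tC_5\sqrt{\eta \log \frac{\Rg}{\eta\delta}}$ slack is of the same order and can be absorbed by enlarging $C_1$. Essentially, this lemma is a rescaling of Lemma~\ref{lemma:tpsi whole}: the earlier lemma was stated with a constant-size initial radius $\tfrac{1}{2}\sqrt{\mu/\rho_2}\,\epsilon_0$ and produced an additive noise term of order $\sqrt{\eta \log(R/(\eta\delta))}$; here the initial radius is already $\cO(\sqrt{\eta \log(1/\eta)})$, which is of the same order as the noise term, so both contributions can be merged into a single clean $\sqrt{\eta \log (1/(\eta\delta))}$ bound. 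No new probabilistic estimate is needed beyond the one already supplied by Lemma~\ref{lemma:tpsi whole} via the Azuma-type control on $\{\tvZs_{k,t}\}$; the entire argument is a bookkeeping simplification, so I do not anticipate any substantive obstacle.
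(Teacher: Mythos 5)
Your proposal is correct and takes essentially the same route as the paper: both invoke Lemma~\ref{lemma:tpsi whole} with $R = \Rg$, bound the initial potential $\tPsi(\bvths[0])$ via $\rho_2$-smoothness (the $\mu$-PL part you mention is not actually needed for that particular bound), and absorb the $\log(\Rg/(\eta\delta))$ and $\log(1/\eta)$ factors into a single $\log(1/(\eta\delta))$ using $\Rg \le 1/(\alpha\eta)$. The bookkeeping is identical; there is no substantive difference.
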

Furthermore, 
\begin{align*}
    \tPsi(\bvths[\Rg])\leq \tC_{10} \sqrt{\eta \log \frac{1}{\eta\delta}},
\end{align*}
where $\tC_9$ is a constant independent of $C_0$.
\begin{proof}
By $\rho_2$-smoothness of $\cL$, $\tPsi(\bvths[0]) \leq C_0\sqrt{\tfrac{\eta\rho_2}{2}\log \frac{1}{\eta}}$.
Substituting $\Rg=\floor{\tfrac{1}{\alpha\eta^{\beta}}}$ and  $\tPsi(\bvths[0]) \leq C_0\sqrt{\tfrac{\eta\rho_2}{2}\log \frac{1}{\eta}}$ into \Cref{lemma:tpsi whole}, for $\delta=\cO(\poly(\eta))$, with probability at least $1-\delta$, \eqref{eq:avg1} and \eqref{eq:indi1} where $C_1$ is a constant that can depend on $C_0$. 

Furthermore, for round $\bvths[\Rg]$, 
\begin{align*}
    \tPsi(\bvths[\Rg])\leq \exp(-\cO(\eta^{-\beta}))+\frac{1}{1-\exp(-\alpha\mu /2)}\tC_5\sqrt{\eta\log\frac{\Rg}{\eta\delta}}
   \leq \tC_{10} \sqrt{\eta \log\frac{1}{\eta\delta}},
\end{align*}
where $\tC_9$ is a constant independent of $C_0$.
\end{proof}

\begin{lemma}\label{lemma:bound x}
Given $\normtwo{\bvths[0]-\vphs[0]}\leq C_0 \sqrt{\eta \log \frac{1}{\eta}}$ where $C_0$ is a constant, then for $\delta=\cO(\poly(\eta))$, with probability at least $1-\delta$, for all $ 0\leq s_0 < \Rg, 0\leq t \leq H$, $k\in[K]$,
\begin{align*}
    \normtwo{\vxs_{k, t}}&
    \leq C_{2}\sqrt{\eta \log \frac{1}{\eta\delta}}, \quad  \normtwo{\bvxs_{H}} \leq C_{2}\sqrt{\eta \log \frac{1}{\eta\delta}},\\
    \normtwo{\bvths_{k,t}-\bvths}&\leq C_2\sqrt{\eta \log\frac{1}{\eta\delta}},  \quad \normtwo{\bvths[s+1]-\bvths}\leq C_2\sqrt{\eta\log\frac{1}{\eta\delta}}. 
\end{align*}
where $C_2$ is  a constant that can depend $C_0$.
Furthermore, 
\begin{align*}
    \normtwo{\bvths[\Rg]-\vphs[\Rg]}\leq \tC_{11}\sqrt{\eta \log \frac{1}{\eta\delta}},
\end{align*}
where $\tC_{11}$ is a constant independent of $C_0$.
\end{lemma}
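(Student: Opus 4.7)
The plan is to combine the potential-function bounds of \Cref{lemma: tpsi grp} with a direct expansion of the Local SGD update, splitting the deviation from the manifold projection into a deterministic drift and a martingale noise term. First, I would invoke \Cref{lemma: tpsi grp} with failure probability $\delta/2$ to obtain that, on some event $\cE_1$ of probability at least $1-\delta/2$, we have $\bvths \in \Gaz$ and $\vths_{k,t} \in \Gat$ together with the potential bounds $\tPsi(\bvths), \tPsi(\vths_{k,t}) \le C_1\sqrt{\eta \log \tfrac{1}{\eta\delta}}$ for all $s < \Rg$, $k \in [K]$, $0 \le t \le H$. On $\cE_1$ the projection $\vphs = \Phi(\bvths)$ is well-defined and lies in $\Gamma$, so \Cref{lemma:pre equiv} directly gives $\normtwo{\bvths - \vphs} \le \sqrt{2/\mu}\,\tPsi(\bvths) = O(\sqrt{\eta \log \tfrac{1}{\eta\delta}})$.

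Next, I would bound the per-step deviation $\normtwo{\vths_{k,t} - \bvths}$. Summing the Local SGD recursion $\vths_{k,\tau+1} = \vths_{k,\tau} - \eta\nabla\cL(\vths_{k,\tau}) - \eta\vzs_{k,\tau}$ from $\tau=0$ to $t-1$, starting at $\vths_{k,0} = \bvths$, gives
\begin{align*}
\vths_{k,t} - \bvths = -\eta \sum_{\tau=0}^{t-1} \nabla\cL(\vths_{k,\tau}) - \eta\, \scrMs_{k,t}.
\end{align*}
For the drift, $\rho_2$-smoothness and $\nabla\cL(\Phi(\vths_{k,\tau})) = \vzero$ yield $\normtwo{\nabla\cL(\vths_{k,\tau})} \le \sqrt{2\rho_2}\,\tPsi(\vths_{k,\tau})$, so on $\cE_1$ the drift term has norm at most $\eta H \cdot \sqrt{2\rho_2}\,C_1\sqrt{\eta \log \tfrac{1}{\eta\delta}} = \alpha\sqrt{2\rho_2}\,C_1\sqrt{\eta \log\tfrac{1}{\eta\delta}}$. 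For the noise, \Cref{lemma:concen m} (with failure probability $\delta/2$) gives an event $\cE_2$ on which $\eta\normtwo{\scrMs_{k,t}} \le \tC_9\sqrt{\eta \log\tfrac{1}{\eta\delta}}$ uniformly. On $\cE_1 \cap \cE_2$ we therefore have $\normtwo{\vths_{k,t} - \bvths} \le C_2'\sqrt{\eta \log\tfrac{1}{\eta\delta}}$ for a constant $C_2'$ depending on $C_0$ (via $C_1$).

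The four bounds now follow by triangle inequality: $\normtwo{\vxs_{k,t}} \le \normtwo{\vths_{k,t} - \bvths} + \normtwo{\bvths - \vphs}$, while $\bvxs_H = \bvths[s+1] - \vphs$ (since $\vths_{k,0} = \bvths$ for every $k$) and $\bvths[s+1] - \bvths = \bvxs_H - (\bvths - \vphs)$ are each bounded by averaging over $k$. Choosing $C_2$ large enough absorbs all prefactors, and the event $\cE_1 \cap \cE_2$ has probability at least $1-\delta$ by a union bound. For the final ``Furthermore'' claim, I would use the second conclusion of \Cref{lemma: tpsi grp}, namely $\tPsi(\bvths[\Rg]) \le \tC_{10}\sqrt{\eta\log\tfrac{1}{\eta\delta}}$ with $\tC_{10}$ independent of $C_0$, and apply \Cref{lemma:pre equiv} once more to conclude $\normtwo{\bvths[\Rg] - \vphs[\Rg]} \le \sqrt{2/\mu}\,\tC_{10}\sqrt{\eta \log\tfrac{1}{\eta\delta}}$, so $\tC_{11} := \sqrt{2/\mu}\,\tC_{10}$ inherits $C_0$-independence. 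The only real subtlety is probabilistic bookkeeping: making sure that the events from the potential control and from the martingale concentration are combined uniformly in $(s,k,t)$ over the whole block of $\Rg$ rounds; since each concentration bound carries only a logarithmic dependence on the union-bound cardinality, the constants $C_2$ and $\tC_{11}$ remain absolute up to those log factors which are already absorbed into $\log\tfrac{1}{\eta\delta}$.
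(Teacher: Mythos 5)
Your proof is correct and takes essentially the same route as the paper: bound $\normtwo{\bvths-\vphs}$ via the potential from \Cref{lemma: tpsi grp} and \Cref{lemma:pre equiv}, split $\vths_{k,t}-\bvths$ into the gradient sum (controlled by smoothness and the potential) plus $\eta\scrMs_{k,t}$ (controlled by \Cref{lemma:concen m}), and handle the ``Furthermore'' claim via the $C_0$-independent bound $\tPsi(\bvths[\Rg]) \le \tC_{10}\sqrt{\eta\log\tfrac{1}{\eta\delta}}$. The probabilistic bookkeeping and triangle-inequality assembly match the paper's argument.
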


\begin{proof}
Decomposing $\vxs_{k,t}$ by triangle inequality, we have
\begin{align*}
    \normtwo{\vxs_{k,t}}&\leq \normtwo{\vths_{k,t}-\bvths}+\normtwo{\bvths - \vphs}.
\end{align*}
We first bound $\normtwo{\bvths-\vphs}$. By \Cref{lemma: tpsi grp}, for $\delta=\cO(\poly(\eta))$, with probability at least $1-\frac{\delta}2$, 
\begin{align}
 \tPsi(\bvths)&\leq C_1\sqrt{\eta \log \frac{2}{\eta\delta}}, \forall 0 \leq s < \Rg, \label{eq:1potential}\\
    \tPsi(\vths_{k,t})&\leq C_1 \sqrt{\eta \log \frac{2}{\eta\delta}},\quad  \forall 0 \leq s < \Rg, 0\leq t \leq H, \label{eq:2potential}
\end{align}
and 
\begin{align}\label{eq:3potential}
    \tPsi(\bvths[\Rg])\leq \tC_{10}\sqrt{\eta \log \frac{2}{\eta \delta}}, 
\end{align}
where $C_2$ is a constant that may depend on $C_0$ and $\tC_{10}$ is a constant independent of $C_0$. When \eqref{eq:1potential} and \eqref{eq:3potential} hold, by \Cref{lemma:pre equiv},
\begin{align}
    \normtwo{\bvths-\vphs}&\leq \sqrt{\frac{2}{\mu}}\tPsi(\bvths)\leq C_1 \sqrt{\frac{2\eta}{\mu}\log \frac{2}{\eta\delta}},\label{eq:vths-vphs}\\
    \normtwo{\bvths[\Rg]-\vphs[\Rg]}&\leq \sqrt{\frac{2}{\mu}}\tPsi(\bvths[\Rg])\leq \tC_{10}\sqrt{\frac{2\eta}{\mu}\log \frac{2}{\eta\delta}}.\label{eq:vths-vphsgrp}
\end{align}
Then we bound $\normtwo{\vths_{k,t}-\bvths}$. By the update rule, we have
\begin{align*}
    \vths_{k, t}=\bvths - \eta \sum_{\tau=0}^{t-1}\nabla \cL(\vths_{k, \tau})-\eta \sum_{\tau=0}^{t-1}\vzs_{k, \tau}=\bvths - \eta \sum_{\tau=0}^{t-1}\nabla \cL(\vths_{k, \tau})-\eta \scrMs_{k,t}.
\end{align*}
Still by triangle inequality, we have
\begin{align*}
    \normtwo{\vths_{k,t}-\bvths}& \leq \eta \sum_{\tau=0}^{t-1}\normtwo{\nabla \cL(\vths_{k,\tau})}+\eta \normtwo{\scrMs_{k,t}}.
\end{align*}
Due to $\rho_2$-smoothness of $\cL$, when \eqref{eq:2potential} holds, 
\begin{align}\label{eq:gradient}
\normtwo{\nabla \cL(\vths_{k,\tau})}\leq \sqrt{2\rho_2}\tPsi(\vths_{k,\tau})\leq C_1\sqrt{2\rho_2\eta \log \frac{2}{\eta\delta}}.
\end{align}
By \Cref{lemma:concen m}, with probability at least $1-\frac{\delta}{2}$, 
\begin{align}\label{eq:m}
    \normtwo{\scrMs_{k,t}} \leq \tC_9 \sqrt{\frac{1}{\eta} \log \frac{2}{\eta\delta}}, \quad \forall 0 \leq t \leq H, k\in[K], 0\leq s <\Rg.
\end{align}
Combining \eqref{eq:gradient} and \eqref{eq:m}, when \eqref{eq:2potential} and \eqref{eq:3potential} hold simultaneously, there exists a constant $C_3$ which can depend on $C_0$ such that
\begin{align}\label{eq:vthskt}
    \normtwo{\vths_{k,t}-\bvths}\leq C_3\sqrt{\eta \log \frac{1}{\eta\delta}}, \quad \forall k \in [K], 0 \leq t \leq H.
\end{align}
By triangle inequality, 
\begin{align*}
    \normtwo{\bvths[s+1]-\bvths}\leq C_3\sqrt{\eta \log \frac{1}{\eta\delta}}.
\end{align*}
Combining \eqref{eq:vths-vphs}, \eqref{eq:vths-vphsgrp} and \eqref{eq:vthskt}, we complete the proof.
\end{proof}

Then we provide high probability bounds for the movement of $\vphs[s]$ within $\Rg$ rounds.
\begin{lemma}\label{lemma:delta phi bound}
Given $\normtwo{\bvths[0]-\vphs[0]}\leq C_0 \sqrt{\eta \log \frac{1}{\eta}}$ where $C_0$ is a constant, then for $\delta=\cO(\poly(\eta))$, with probability at least $1-\delta$,
\begin{align*}
    \normtwo{\vphs[s]-\vphs[0]}\leq C_4\eta^{0.5-0.5\beta}\sqrt{\log \frac{1}{\eta \delta}}, \quad \forall 1 \leq s \leq \Rg.
\end{align*}
where $C_4$ is a constant that can depend on $C_0$.
\end{lemma}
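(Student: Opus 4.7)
The plan is to telescope $\vphs[s]-\vphs[0]=\sum_{r=0}^{s-1}(\vphs[r+1]-\vphs[r])$ and apply a second-order Taylor expansion of $\Phi$ around $\bvths[r]$ to each increment. On the high-probability event provided by Lemmas~\ref{lemma: tpsi grp}, \ref{lemma:bound x}, \ref{lemma:concen m} and \ref{lemma:concen Z}, we have $\bvths[r],\vths_{k,t}^{(r)}\in \Gat$, $\normtwo{\bvths[r]-\vphs[r]}=\cO(\sqrt{\eta\log(1/(\eta\delta))})$ and $\normtwo{\bvths[r+1]-\bvths[r]}=\cO(\sqrt{\eta\log(1/(\eta\delta))})$ uniformly for $0\le r<\Rg$. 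The expansion then reads
\[
    \vphs[r+1]-\vphs[r] \;=\; \partial\Phi(\bvths[r])\bigl(\bvths[r+1]-\bvths[r]\bigr) + \tfrac{1}{2}\partial^2\Phi(\vxi_r)\bigl[(\bvths[r+1]-\bvths[r])^{\otimes 2}\bigr],
\]
and substituting the Local SGD update rule splits the first-order term cleanly into a gradient contribution and a noise contribution.

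The noise contribution is the heart of the bound. Using the update $\bvths[r+1]-\bvths[r]=-\tfrac{\eta}{K}\sum_{k,t}\nabla\cL(\vths^{(r)}_{k,t})-\tfrac{\eta}{K}\sum_{k,t}\vzs[r]_{k,t}$ and summing the noise piece of the first-order term over $r=0,\dots,s-1$, one obtains exactly $-\eta\,\vZs[s]_H$ (since $\tmP=\partial\Phi$ on $\Gat$). Lemma~\ref{lemma:concen Z} then gives the desired bound $\|\eta\vZs[s]_H\|\le \tC_{12}\,\eta^{0.5-0.5\beta}\sqrt{\log(1/(\eta\delta))}$, matching the target rate up to a constant.

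For the gradient contribution, the key algebraic cancellation is $\partial\Phi(\vphs[r])\nabla^2\cL(\vphs[r])=\vzero$: the image of $\nabla^2\cL(\vphs[r])$ is the normal space to $\Gamma$ at $\vphs[r]$ (by \Cref{lemma: mani grad} and \Cref{lemma:mani projection}), which the tangent-space projector $\partial\Phi(\vphs[r])$ annihilates. Taylor expanding $\nabla\cL(\vths^{(r)}_{k,t})$ around $\vphs[r]$ (and using $\nabla\cL(\vphs[r])=\vzero$) gives $\nabla^2\cL(\vphs[r])(\vths^{(r)}_{k,t}-\vphs[r])+\cO(\eta\log(1/(\eta\delta)))$. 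Replacing $\partial\Phi(\vphs[r])$ with $\partial\Phi(\bvths[r])$ costs $\|\partial\Phi(\bvths[r])-\partial\Phi(\vphs[r])\|=\cO(\sqrt{\eta\log})$ times $\nabla^2\cL(\vphs[r])(\vths^{(r)}_{k,t}-\vphs[r])=\cO(\sqrt{\eta\log})$, so each projected local-step gradient is $\cO(\eta\log)$; multiplying by the step size $\eta$ and summing over $H=\alpha/\eta$ local steps yields $\cO(\eta\log)$ per round, and then over $\Rg=\Theta(\eta^{-\beta})$ rounds the total gradient contribution is $\cO(\eta^{1-\beta}\log(1/(\eta\delta)))$. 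The Taylor remainder in the $\Phi$ expansion is controlled by $\tfrac{\nu_2}{2}\|\bvths[r+1]-\bvths[r]\|^2=\cO(\eta\log)$ per round, summing to $\cO(\eta^{1-\beta}\log(1/(\eta\delta)))$. Both contributions are $o(\eta^{0.5-0.5\beta}\sqrt{\log(1/(\eta\delta))})$ for any $\beta\in(0,1)$ and can be absorbed into the constant $C_4$. Combining all three estimates and taking a union bound over the conditioning events gives the claimed bound.

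The main obstacle will be making the cancellation in the gradient contribution rigorous: I must carefully write $\partial\Phi(\bvths[r])\nabla\cL(\vths^{(r)}_{k,t})=(\partial\Phi(\bvths[r])-\partial\Phi(\vphs[r]))\nabla^2\cL(\vphs[r])(\vths^{(r)}_{k,t}-\vphs[r])+(\text{higher-order remainders})$ and track two independent $\sqrt{\eta}$-sized errors (one from $\bvths[r]\to\vphs[r]$ in $\partial\Phi$, one from $\vphs[r]\to\vths^{(r)}_{k,t}$ in the Hessian-linearization), verifying that no cross term produces an $\cO(\sqrt{\eta})$ per-local-step contribution that would blow up after summation. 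Secondary bookkeeping includes ensuring that all intermediate Taylor base-points lie in $\Gat$ so that $\nu_2$-bounds on $\partial^2\Phi$ and Lipschitzness of $\partial\Phi$ are available, and taking a single union bound over all previously established high-probability events without losing the $\log$-factor dependence.
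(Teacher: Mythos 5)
Your proposal is correct and follows the same overall structure as the paper's proof: telescope $\vphs[s]-\vphs[0]$, Taylor expand $\Phi$ around $\bvths[r]$ to second order, condition on the uniform high-probability bounds from Lemmas~\ref{lemma:bound x}, \ref{lemma:concen m}, \ref{lemma:concen Z}, and split the first-order term into a gradient piece, a noise piece (handled exactly as you propose via $\eta\vZs_H$), and a second-order remainder.

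The one place you genuinely diverge is in showing the gradient piece $\cTs_1=-\tfrac{\eta}{K}\sum_{k,t}\partial\Phi(\bvths)\nabla\cL(\vths_{k,t})$ is $\cO(\eta\log)$ per round. The paper uses the single identity $\partial\Phi(\vtheta)\nabla\cL(\vtheta)=\vzero$ (valid for any $\vtheta$ where $\Phi$ is defined, since $\Phi$ is a first integral of the gradient flow), applied at the local iterate $\vths_{k,t}$: Taylor expanding $\partial\Phi$ from $\vths_{k,t}$ to $\bvths$ then gives $\partial\Phi(\bvths)\nabla\cL(\vths_{k,t})=\partial^2\Phi(\cdot)[\bvths-\vths_{k,t},\nabla\cL(\vths_{k,t})]$, a product of two $\cO(\sqrt{\eta\log})$ factors. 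You instead pull the base point all the way back to $\vphs[r]\in\Gamma$, linearize $\nabla\cL$ in the Hessian there, and use $\partial\Phi(\vphs[r])\nabla^2\cL(\vphs[r])=\vzero$ (tangent projector annihilates the normal-space range of the Hessian, via \Cref{lemma: mani grad} and \Cref{lemma:mani projection}). This costs you two separate replacements ($\bvths[r]\to\vphs[r]$ in $\partial\Phi$, and the first-order remainder of $\nabla\cL$) plus the cross-term bookkeeping you flag, but in exchange it relies only on the two stated manifold lemmas rather than the unstated-but-true first-integral identity the paper invokes implicitly. Both routes land on the same $\cO(\eta\log)$ per-round bound, which is absorbed by $\cTs_2\sim\eta^{0.5-0.5\beta}\sqrt{\log}$ after summation over $\Rg$ rounds; the paper's version is tighter to write out, yours is more self-contained relative to what is explicitly proved.
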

\begin{proof}
By the update rule of Local SGD, 
\begin{align*}
    \vths_{k, H}&=\bvths - \eta \sum_{t=0}^{H-1}\nabla \cL(\vths_{k, t})-\eta \sum_{t=0}^{H-1}\vzs_{k, t}
 \end{align*}   
Averaging among $K$ clients gives
\begin{align*}
\bvths[s+1]&=\bvths -  \frac{\eta}{K}\sum_{t=0}^{H-1}\sum_{k\in[K]}\nabla \cL(\vths_{k, t})-\frac{\eta}{K}\sum_{t=0}^{H-1}\sum_{k\in[K]}\vzs_{k, t}.
\end{align*}
By \Cref{lemma:bound x}, for $\delta=\cO(\poly(\eta))$, the following holds with probability at least $1-\delta/3$, 
\begin{align}
    \normtwo{\vths_{k,t}-\bvths}&\leq C_2 \sqrt{\eta \log \frac{3}{\eta \delta}}, \  \vths_{k,t} \in B^{\epsilon_0}(\vphs), \  \forall 0 \leq s <\Rg , 0 \leq t \leq H, k\in[K],\label{eq:delta theta1}\\
   \normtwo{\bvths[s+1]-\bvths}&\leq C_2 \sqrt{\eta \log \frac{3}{\eta \delta}}, \quad \bvths, \bvths[s+1] \in B^{\epsilon_0}(\vphs),\quad  \forall 0\leq s < \Rg \label{eq:delta theta2}.
\end{align}

When \eqref{eq:delta theta1} and \eqref{eq:delta theta2} hold,  we can expand $\Phi(\bvths[s+1])$ as follows: 
\begin{align*}
    \vphs[s+1]& = \vphs +\partial \Phi(\bvths)(\bvths[s+1]-\bvths)
    +\frac{1}{2}\partial^2 \Phi(\tvths)[\bvths[s+1]-\bvths, \bvths[s+1]-\bvths]\\
    &=\vphs \underbrace{-\frac{\eta}{K} \sum_{t=0}^{H-1}\sum_{k\in[K]}\partial \Phi(\bvths) \nabla \cL(\vths_{k, t})}_{\cTs_1}  \underbrace{-
  \frac{\eta}{K}\partial \Phi(\bvths) \sum_{t=0}^{H-1}\sum_{k\in[K]} \vzs_{k, t}}_{\cTs_{2}}\\
    &\quad+\underbrace{\frac{1}{2}\partial^2 \Phi(\as\bvths+(1-\as)\bvths[s+1])[\vths[s+1]-\vths, \vths[s+1]-\vths]}_{\cTs_3},
\end{align*}
where $\as\in (0, 1)$. Telescoping from round 0 to $s-1$,   we have
\begin{align*}
   \normtwo{\vphs - \vphs[0]} &= \sum_{r=0}^{s-1}\cTs[r]_1+\sum_{r=0}^{s-1}\cTs[r]_2 + \sum_{r=0}^{s-1}\cTs[r]_3.
\end{align*}

From \eqref{eq:delta theta2}, we can bound $\normtwo{\cTs_3}$ by $\normtwo{\cTs_3}\leq \frac{1}{2}\nu_2 C_2 ^2 \eta \log \frac{3}{\eta\delta}$. We proceed to bound $\normtwo{\cTs_1}$. When \eqref{eq:delta theta1} and \eqref{eq:delta theta2} hold,  we have
\begin{align*}
    \partial \Phi(\bvths) \nabla \cL(\vths_{k, t})&=\partial\Phi(\vths_{k, t})\nabla \cL(\vths_{k,t})+\partial^2 \Phi(\hvths_{k, t})[\vths_{k, t}-\bvths,\nabla \cL(\vths_{k,t})]\\
    &=\partial^2 \Phi(\bs_{k, t} \bvths + (1-\bs_{k, t})\hvths_{k, t})[\vths_{k, t}-\bvths,\nabla \cL(\vths_{k,t})], 
\end{align*}
where $\bs_{k,t}\in (0,1)$. By \Cref{lemma:tpsi whole}, with probability at least $1-\delta/3$, the following holds:
\begin{align}\label{eq:T2_phase}
    \normtwo{\nabla \cL(\vths_{k,t})}\leq \sqrt{2\rho_2}\tPsi(\vths_{k,t})\leq C_1\sqrt{2\rho_2\eta\log\frac{3}{\eta\delta}},  \forall k\in [K], 0\leq t \leq H, 0 \leq s < \Rg.
\end{align}
When \eqref{eq:delta theta1}, \eqref{eq:delta theta2} and \eqref{eq:T2_phase} hold simultaneously,  we have for all $0\leq s < \Rg$,
\begin{align*}
    \normtwo{\cTs_1}&\leq \frac{\eta \nu_2}{K}\sum_{t=0}^{H-1} \normtwo{\vths_{k,t}-\bvths}\normtwo{\nabla \cL(\vths_{k,t})}\\
    & \leq \frac{\alpha \nu_2\sqrt{2\rho_2}C_1 C_2}{K}\eta \log \frac{3}{\eta\delta}.
\end{align*}

Finally, we bound $\normtwo{\sum_{r=0}^{s-1} \cTs[r]_2}$. By \Cref{lemma:concen Z}, the following inequality holds with probability at least $1-\delta/3$:
\begin{align}
     \normtwo{\vZs_{H}}\leq \tC_{12}\eta^{-0.5-0.5\beta}\sqrt{\log \frac{3}{\eta\delta}}, \quad \forall 0 \leq s < \Rg.\label{eq:Z}
\end{align}
When  \eqref{eq:delta theta1}, \eqref{eq:delta theta2} and \eqref{eq:Z}  hold simultaneously, we have
\begin{align*}
    \normtwo{\sum_{r=0}^{s}\cTs[r]_2}=\eta\normtwo{\vZs[s]_H}\leq \tC_{12}\eta^{0.5-0.5\beta}\sqrt{\log \frac{3}{\eta\delta}}, \quad \forall 0\leq s < \Rg
\end{align*}
Combining the bounds for $\normtwo{\cTs_1}$, $\normtwo{\sum_{r=0}^{s}\cTs[r]_2}$ and $\normtwo{\cTs_3}$ and taking union bound, we obtain that for $\delta=\cO(\poly(\eta))$, the following inequality holds with probability at least $1-\delta$:
\begin{align*}
    \normtwo{\vphs-\vphs[0]}\leq C_4\eta^{0.5-0.5\beta}\sqrt{\log \frac{1}{\eta \delta}}, \quad \forall 1 \leq s \leq \Rg.
\end{align*}
where $C_4$ is a constant that can depend on $C_0$.
\end{proof}

\subsection{Summary of the dynamics and Proof of \Cref{thm:closeness,thm:change}}\label{sec:summary of high prob}
Based on the results in \Cref{subsec: phase 1} and \Cref{subsec:phase2}, we summarize the dynamics of Local SGD iterates and then present the proof of \Cref{thm:closeness,thm:change} in this subsection. For convenience, we first introduce the definition of \textbf{global step} and  \bm{$\delta$}\textbf{-good step}.
\begin{definition}[Global step]
Define $\cI$ as the index set $\{(s,t) : s\geq 0, 0\leq t \leq H\}$ with lexicographical order, which means $(s_1, t_1)\preceq (s_2, t_2)$ if and only if $s_1<s_2$ or ($s_1=s_2$ and $t_1\leq t_2$). A global step is indexed by $(s, t)$ corresponding to the $t$-th local step at round $s$. 
\end{definition}

\begin{definition}[$\delta$-good step]
In the training process of Local SGD, we say the global step $(s,t)\preceq (\Rtot, 0)$ is $\delta$-good if the following inequalities hold:
\begin{align*}
     \normtwo{\tvZs[r]_{k,\tau}}&\leq \exp(\alpha \rho_2) \sigmax\sqrt{2H\log\frac{6H \Rtot K }{\delta}}, \quad &\forall k\in[K], (r, \tau)\preceq (s, t), \\
      \normtwo{\scrMs[r]_{k,\tau}}&\leq \sigmax \sqrt{2H \log \frac{6KH\Rtot}{\delta}}, \quad &\forall k\in [K], (r, \tau)\preceq (s, t), \\
    \normtwo{\vZs[r]_H}&\leq \sigmax \nu_2 \sqrt{2H\Rg \log \frac{2\Rtot}{\delta}}, \quad &\forall 0\leq r< s.
\end{align*}
\end{definition}
Applying the concentration properties of $\tvZs[r]_{k,\tau}, \scrMs[r]_{k,\tau}$ and $\vZs[r]_H$ (Lemmas~\ref{lemma:concen Z}, \ref{lemma:concen m} and \ref{lemma: concen tvZ}) yields the following theorem.
\begin{theorem}\label{thm:good}
For $\delta=\cO(\poly(\eta))$, with probability at least $1-\delta$, all global steps $(s,t)\preceq (\Rtot,0)$ are $\delta$-good.
\end{theorem}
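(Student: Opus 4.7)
The plan is to establish Theorem~\ref{thm:good} by a three-way union bound, one branch per condition in the definition of a $\delta$-good step, reusing the martingale concentration arguments already worked out in Lemmas~\ref{lemma: concen tvZ}, \ref{lemma:concen m}, and \ref{lemma:concen Z}. Each of these lemmas is an Azuma-Hoeffding application to a martingale whose bounded-difference constant is a known deterministic quantity ($\tC_1\sigmax = \exp(\alpha\rho_2)\sigmax$, $\sigmax$, and $\nu_2\sigmax$ respectively), and the only real work left is to upgrade the inner union bound to cover all $\Rtot$ rounds instead of a single round or a single block of $\Rg$ rounds. The extra logarithmic slack required for this upgrade has already been built into the $\log\frac{6H\Rtot K}{\delta}$ and $\log\frac{2\Rtot}{\delta}$ factors in the definition of $\delta$-good, so the step reduces to careful bookkeeping.

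Concretely, I would proceed as follows. For the $\tvZs[r]$ condition, apply Lemma~\ref{lemma: concen tvZ} round-by-round with failure probability $\delta/(3\Rtot)$ and union-bound over $r \in [0, \Rtot)$; the inner union bound over $k \in [K]$ and $0 \leq \tau \leq H$ is already absorbed inside the lemma. For the $\scrMs[r]$ condition, apply Azuma-Hoeffding's inequality to each per-round martingale $\scrMs[r]_{k,\tau}$ (with differences uniformly bounded by $\sigmax$), then union-bound over the $\Rtot \cdot K \cdot (H+1)$ triples $(r,k,\tau)$. For the $\vZs[r]_H$ condition, partition the rounds into $\lceil \Rtot/\Rg \rceil$ consecutive blocks of length $\Rg$, apply Lemma~\ref{lemma:concen Z} to each block with failure probability $\delta/(3\lceil \Rtot/\Rg\rceil)$, and union-bound over blocks. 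Allocating $\delta/3$ to each of the three branches and adding up gives a total failure probability of at most $\delta$.

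The main subtlety, and the part I expect to require the most care, is that Lemma~\ref{lemma: concen tvZ} is stated under the precondition $\tvus[r]_t \in \Gath \cup \cZeps$, which is precisely what guarantees $\normtwo{\nabla^2\cL(\tvus[r]_t)} \leq \rho_2$ and hence the operator bound $\tC_1$ on the transition matrices appearing inside the martingale. To avoid a circular dependence on the dynamics lemmas of Sections~\ref{subsec: phase 1}--\ref{subsec:phase2} (which themselves condition on the $\delta$-good event), I would reinterpret $\tvZs[r]_{k,\tau}$ using a Hessian truncated to $\Gath \cup \cZeps$, and analogously rely on the truncation already baked into the operator $\tmP$ when handling $\vZs[r]_H$. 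This makes the martingale-difference bound hold unconditionally on the sample path, so the Azuma-Hoeffding estimate applies globally; whenever the subsequent dynamics analysis certifies that the iterates actually stay inside the valid region, the truncated and untruncated martingales coincide, and the three-way union bound delivers the theorem.
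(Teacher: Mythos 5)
Your proposal is correct and matches the paper's (unwritten) argument: the paper simply cites Lemmas~\ref{lemma: concen tvZ}, \ref{lemma:concen m}, and \ref{lemma:concen Z} and lets the reader assemble the three-way union bound, whose logarithmic slack was already built into the $\log\frac{6HK\Rtot}{\delta}$ and $\log\frac{2\Rtot}{\delta}$ factors in the $\delta$-good definition exactly as you describe. Your observation that Lemma~\ref{lemma: concen tvZ}'s bounded-difference constant $\tC_1$ rests on the precondition $\tvus_t\in\Gath\cup\cZeps$ is a genuine subtlety the paper leaves implicit; the paper's inductive lemmas (\Cref{lemma:s0,lemma:tpsi whole}) iterate the conditional form of Lemma~\ref{lemma: concen tvZ} round by round so that the precondition is certified by the preceding good events, while your truncated-Hessian reformulation achieves the same thing more directly and is consistent with the paper's own use of the truncated operator $\tmP$ in defining $\vZs_t$.
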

In the remainder of this subsection, we use $\cO(\cdot)$ notation to hide constants independent of $\delta$ and $\eta$. 

Below we present a summary of the dynamics of Local SGD when $\bvths[0]$ is initialized such that $\Phi(\bvths[0])\in \Gamma$ and all global steps are $\delta$-good. 
Phase 1 lasts for $s_0+s_1=\cO(\log \frac{1}{\eta})$ rounds. At the end of phase 1, the iterate reaches within $\cO( \sqrt{\eta\log \frac{1}{\eta\delta}})$ from  $\Gamma$, i.e.,  $\normtwo{\bvths[s_0+s_1]-\vphs[s_0+s_1]}=\cO(\sqrt{\eta\log \frac{1}{\eta\delta}})$. The change of the projection on manifold over $s_0+s_1$ rounds, $\normtwo{\vphs[s_1+s_0]-\vphs[0]}$, is bounded by $\cO(\log \tfrac{1}{\eta}\sqrt{\eta\log \tfrac{1}{\eta \delta}})$.

After $s_0+s_1$ rounds, the dynamic enters phase 2 when the iterates stay close to $\Gamma$ with $\bvths\in \Gat,  \forall s_0+s_1\leq s\leq \Rtot$ and $ \vths_{k,t}\in \Gat$, $\forall k\in[K], (s_0+s_1,0)\preceq (s, t)\preceq (\Rtot, 0)$. Furthermore,  $\normtwo{\vxs_{k,t}}$ and $\normtwo{\bvxs_H}$ satisfy the following equations:
\begin{align*}
    \normtwo{\vxs_{k,t}}&=\cO(\sqrt{\eta \log \tfrac{1}{\eta\delta}}), & \forall k\in[K], 0\leq t \leq H, s_0+s_1\leq s <\Rtot,\\
    \normtwo{\bvxs_{H}}&=\cO(\sqrt{\eta \log \tfrac{1}{\eta\delta}}), &\forall s_0+s_1 \leq s < \Rtot.
\end{align*}
 Moreover, for $s_0 + s_1 \leq s \leq \Rtot - \Rg$, the change of the manifold projection within $\Rg$ rounds can be bounded as follows:
 \begin{align*}
     \normtwo{\vphs[s+r]-\vphs[s]}=\cO(\eta^{0.5-0.5\beta}\sqrt{\log \frac{1}{\eta\delta}}), \quad \forall 1\leq r \leq \Rg.
 \end{align*}
 After combing through the dynamics of Local SGD iterates during the approaching and drift phase, we are ready to present the proof of \Cref{thm:closeness,thm:change}, which are direct consequences of the lemmas in Appendix~\ref{subsec: phase 1} and \ref{subsec:phase2}.

 % Finally, we provide a theorem which states that $\bvths$ stays within $\ctO(\sqrt{\eta})$ from the manifold after $\cO(\log \frac{1}{\eta})$ rounds with high probability. This theorem is a direct consequence of the lemmas in Appendix~\ref{subsec: phase 1} and \ref{subsec:phase2}.

% \begin{theorem}\label{thm:x}
% For $\delta=\cO(\poly(\eta))$, with probability at least $1-\delta$, for all $\cO(\log\frac{1}{\eta})\leq s\leq\floor{T/(H\eta^2)} $, 
%  \begin{align*}
%     \Phi(\bvths)\in \Gamma, \qquad \normtwo{\bvths-\Phi(\bvths)}=\cO\left(\sqrt{\eta \log \frac{1}{\eta\delta}}\right),
%  \end{align*}
%  where $\cO(\cdot)$ hides constants independent of $\eta$ and $\delta$.
% \end{theorem}
 \begin{proof}[Proof of \Cref{thm:closeness}]
 By Lemmas~\ref{lemma:s0}, \ref{lemma:bound x} and
 \Cref{coro:tpsi greater}, for $\delta=\cO(\poly(\eta))$, when all global steps are $\delta$-good, 
$\bvths\in \Gat,  \forall s_0+s_1\leq s\leq \Rtot$ and $ \vths_{k,t}\in \Gat$, $\forall k\in[K], (s_0+s_1,0)\preceq (s, t)\preceq (\Rtot, 0)$ and $\normtwo{\vxs_{k,t}}$, $\normtwo{\bvxs_H}$ satisfy the following equations:
\begin{align*}
  \normtwo{\vxs_{k,t}}&=\cO(\sqrt{\eta \log \tfrac{1}{\eta\delta}}), & \forall k\in[K], 0\leq t \leq H, s_0+s_1\leq s <\Rtot,\\
   \normtwo{\bvxs_{H}}&=\cO(\sqrt{\eta \log \tfrac{1}{\eta\delta}}), &\forall s_0+s_1 \leq s < \Rtot.
\end{align*}
 Hence $\normtwo{\bvxs[\Rtot]_0}=\cO(\tPsi(\bvths[\Rtot]))=
 \cO(\normtwo{\bvxs[\Rtot-1]_H})=\cO(\sqrt{\eta\log \tfrac{1}{\eta\delta}})$ by smoothness of $\cL$ and \Cref{lemma:pre equiv}. According to \Cref{thm:good}, with probability at least $1-\delta$, all global steps are $\delta$-good, thus completing the proof. 
 \end{proof}

 \begin{proof}[Proof of \Cref{thm:change}]
By \Cref{lemma:delta phi bound}, for $\delta=\cO(\poly(\eta))$, when all global steps are $\delta$-good,  then
$ \forall s_0+s_1\leq s\leq \Rtot-\Rg$, 
\begin{align*}
    \normtwo{\vphs[s+r]-\vphs[s]}=\ctO(\eta^{0.5-0.5\beta}), \quad \forall 0\leq r \leq \Rg.
\end{align*}
Also, by \Cref{lemma:dvphs R0}, when all global steps are $\delta$-good, the change of projection on manifold over $s_0+s_1$ rounds (i.e., Phase 1), $\normtwo{\vphs[s_0+s_1]-\vphs[0]}$ is bounded by $\ctO(\sqrt{\eta})$.  According to \Cref{thm:good}, with probability at least $1-\delta$, all global steps are $\delta$-good, thus completing the proof. 
\end{proof}

%That is, the  time when the gradient flow $\{\vu(t)\}_{t\geq 0}$ starting at $\bvths[0]$ first enter the $\tfrac{1}{4}\sqrt{\tfrac{\mu}{\rho_2}}\epsilon_0$ 
\subsection{Proof of \Cref{coro:alpha}}\label{sec:alpha proof}

In this subsection, we explicitly derive the  dependency of the approximation error on $\alpha$. The proofs are quite similar to those in \Cref{subsec: phase 1} and hence we only state the key proof idea for brevity. With the same method as the proofs in \Cref{sec: subphase 1}, we can show that with high probability, $\normtwo{\bvths-\vphs}\leq \frac{1}{2}\sqrt{\frac{\mu}{\rho_2}}$ after $s_0'=\cO(1)$ rounds. Below we focus on the dynamics of Local SGD thereafter. We first remind the readers of the definition of $\{\tvZ^{s}_{k,t}\}$:
\begin{align*}
\tvZs_{k, t}:=\sum_{\tau=0}^{t-1}\left(\prod_{l=\tau+1}^{t-1}(\mI-\eta \nabla^2 \cL(\tvus_{l}))\right)\vzs_{k, \tau},\qquad \tvZs_{k, 0}=\vzero.
\end{align*}
We have the following lemma that controls the norm of the matrix product $\prod_{l=\tau+1}^{t-1}(\mI-\eta \nabla^2 \cL(\tvus_{l}))$.
\begin{lemma}\label{lemma:prod}
Given $\bvths \in \Gaz$, then there exists a positive constant $C_3'$ independent of $\alpha$ such that for all $0\leq \tau < t \leq H$,
$$\lnormtwo{\prod_{l=\tau+1}^{t-1}(\mI-\eta \nabla^2 \cL(\tvus_{l}))}\leq C_3'.$$
\end{lemma}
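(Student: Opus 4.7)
The plan is to beat the naive bound. Applying the obvious inequality $\|\mI-\eta\nabla^2\cL(\tvus_l)\|\le 1+\eta\rho_2$ to every factor yields $(1+\eta\rho_2)^{H}\le \exp(\alpha\rho_2)$, which blows up as $\alpha$ grows. To get a bound independent of $\alpha$, I will exploit the fact that $\tvus_l$ is driven exponentially fast toward the minimizer manifold $\Gamma$, and that the Hessian is ``almost PSD'' in a distance-dependent way near $\Gamma$.

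The first step is to control the operator norm of one factor by the distance to the manifold. Let $d_l := \normtwo{\tvus_l - \Phi(\tvus_l)}$. Since $\bvths \in \Gaz$ implies $\tvus_l \in \Gao\subset\Gat$ (property 4 of \Cref{lemma:workingzone} applied to the auxiliary GD sequence), $\Phi(\tvus_l)\in\Gamma$ is well-defined. At any point $\vzeta\in\Gamma$ the Hessian $\nabla^2\cL(\vzeta)$ is PSD (local minimizer) with eigenvalues in $[0,\rho_2]$, and by $\rho_3$-Lipschitzness of $\nabla^2\cL$ on $\cM^{\epsilon_4}$ the eigenvalues of $\nabla^2\cL(\tvus_l)$ lie in $[-\rho_3 d_l,\ \rho_2+\rho_3 d_l]$. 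For $\eta\le 1/(\rho_2+\rho_3\epsilon_1)$ this gives $|1-\eta\lambda|\le 1+\eta\rho_3 d_l$ for every eigenvalue $\lambda$, hence
\begin{align*}
    \lnormtwo{\mI-\eta\nabla^2\cL(\tvus_l)}\le 1+\eta\rho_3 d_l.
\end{align*}

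The second step is to show that the sequence $\{d_l\}$ is summable with a uniform bound on $\eta\sum_l d_l$. Since $\tvus_l\in\Gath$ where $\mu$-PL holds, \Cref{lemma:gd2} gives the geometric descent $\cL(\tvus_l)-\cL^*\le(1-\mu\eta)^l(\cL(\bvths)-\cL^*)$, and \Cref{lemma:pre equiv} (applied with $\tvus_l$ as initial point and $\Phi(\tvus_l)$ as the gradient-flow limit) gives $d_l\le\sqrt{2/\mu}\cdot\tPsi(\tvus_l)=(1-\mu\eta)^{l/2}d_0'$ with $d_0':=\sqrt{2(\cL(\bvths)-\cL^*)/\mu}$. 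Because $\bvths\in\Gaz$, smoothness of $\cL$ and the existence of $\vphi\in\Gamma$ with $\normtwo{\bvths-\vphi}<\epsilon_0$ give $d_0'\le \sqrt{\rho_2/\mu}\,\epsilon_0$, a constant independent of $\eta$ and $\alpha$. Summing the geometric series,
\begin{align*}
    \sum_{l=\tau+1}^{t-1}\eta d_l \le \eta d_0'\sum_{l=0}^{\infty}(1-\mu\eta)^{l/2}\le \frac{\eta d_0'}{1-\sqrt{1-\mu\eta}}\le \frac{2d_0'}{\mu},
\end{align*}
using $1-\sqrt{1-\mu\eta}\ge \mu\eta/2$.

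Combining the two steps via $\prod_l(1+x_l)\le\exp(\sum_l x_l)$ yields
\begin{align*}
    \lnormtwo{\prod_{l=\tau+1}^{t-1}\bigl(\mI-\eta\nabla^2\cL(\tvus_l)\bigr)}\le \exp\!\Bigl(\rho_3\sum_{l=\tau+1}^{t-1}\eta d_l\Bigr)\le \exp\!\bigl(2\rho_3 d_0'/\mu\bigr)=:C_3',
\end{align*}
which is uniform in $\tau,t$ and independent of both $\eta$ and $\alpha$. The main obstacle I anticipate is making the PSD-perturbation step fully rigorous: one must verify that the Lipschitz bound on eigenvalues of the Hessian (a Weyl-type inequality) applies, that $\eta$ is taken small enough so that the top-eigenvalue side $|1-\eta\lambda|\le 1$ is never the binding constraint, and that $\Phi(\tvus_l)$ indeed lies in $\Gamma$ for every $l$ so we can legitimately compare to a PSD matrix; the last point follows from $\tvus_l\in\Gao\subset\Gat$ and property 2 of \Cref{lemma:workingzone}.
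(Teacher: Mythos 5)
Your proposal is correct and follows essentially the same route as the paper's own proof: bound $\normtwo{\mI-\eta\nabla^2\cL(\tvus_l)}$ by $1$ plus $\eta$ times the magnitude of the most-negative eigenvalue, use Weyl's inequality plus Lipschitzness of $\nabla^2\cL$ to bound that magnitude by $\rho_3$ times the distance to $\Gamma$, invoke the PL-driven geometric decay (\Cref{lemma:gd2} and \Cref{lemma:pre equiv}) to make those distances summable after multiplying by $\eta$, and conclude via $\prod(1+x_l)\le\exp(\sum x_l)$. The paper phrases the decay as $\exp(-\mu l\eta/2)$ where you use $(1-\mu\eta)^{l/2}$, and you make the $\eta\le 1/(\rho_2+\rho_3\epsilon_1)$ smallness requirement explicit where the paper leaves the positive-eigenvalue side implicit; these are cosmetic differences, not a different argument.
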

\begin{proof}
Since $\bvths\in \Gaz$, then $\tvus_t \in \Gao$ for all $0\leq t \leq H$. We first bound the minimum eigenvalue of $\nabla^2\cL(\tvus_{t})$. Due to the PL condition, by \Cref{lemma:gd2}, for $\eta \leq \frac{1}{\rho_2}$, 
\begin{align*}
    \cL(\tvus_t)-\cL^*& \leq (1-\mu \eta)^{t}\left(\cL(\bvths)-\cL^*\right)\leq \exp(-\mu t\eta)(\cL(\bvths)-\cL^*), \quad \forall 0 \leq t \leq H.
\end{align*}
Therefore, 
\begin{align*}
    \tPsi(\tvus_t)\leq \exp(-\mu t \eta/2)\tPsi(\bvths).
\end{align*}
Let $C_1'=\rho_3\sqrt{\frac{\rho_2}{\mu}}$. By Weyl's inequality, 
\begin{align*}
    \abs{\lambda_{\min}(\nabla^2 \cL(\tvus_{t}))}&= \abs{\lambda_{\min}(\nabla^2 \cL(\tvus_{t}))-\lambda_{\min}(\nabla^2 \cL(\Phi(\tvus_t))}\\
    &\leq \rho_3\normtwo{\nabla^2 \cL(\tvus_{t})-\nabla^2 \cL(\Phi(\tvus_t))}\\
    & \leq \rho_3\normtwo{\tvus_t-\Phi(\tvus_t)}\\
    & \leq \rho_3\sqrt{\frac{2}{\mu}}\exp(-\mu t \eta/2)\tPsi(\bvths)\\
    & \leq C'_1\exp(-\mu t \eta/2)\epsilon_0, 
\end{align*}
where the last two inequalities use Lemmas~\ref{lemma:pre equiv} and \ref{lemma: tpsi lipschitz} respectively. Therefore, for all $0\leq t \leq H$ and $0\leq \tau\leq t-1$, 
\begin{align}
    \normtwo{\prod_{l=\tau+1}^{t-1}(\mI-\eta \nabla^2 \cL(\tvus_{l}))}&\leq \prod_{l=\tau+1}^{t-1} (1+\eta \abs{\lambda_{\min}\nabla^2 \cL(\tvus_l)})\notag\\
    &\leq  \prod_{l=0}^{\infty}(1+\eta \abs{\lambda_{\min}\nabla^2 \cL(\tvus_l)})\notag\\
    & \leq \exp(\eta \epsilon_0 C_1'\sum_{l=0}^{\infty}\exp(-\mu l \eta/2)).\label{eq:i-eta}
\end{align}
For sufficiently small $\eta$, there exists a constant $C'_2$ such that
\begin{align}
\sum_{l=0}^{\infty}\exp(-\mu l \eta/2))=\frac{1}{1-\exp(-\mu \eta / 2)}\leq \frac{C'_2}{\eta}.\label{eq:sum}
\end{align}
Substituting \eqref{eq:sum} into \eqref{eq:i-eta}, we obtain the lemma.
%where $C'_2=\rho_3\sqrt{\frac{\rho_2}{\mu}}$.
% & \leq C'_2\exp(-\mu t \eta/2)\tPsi(\bvths[0]),
\end{proof}
Based on \Cref{lemma:prod}, we obtain the following lemma about the concentration property of $\tvZs_{k,t}$, which can be derived in the same way as \Cref{lemma: concen tvZ}.
\begin{lemma}\label{lemma: concen tvZ2}
 Given $\bvths\in\Gaz$ , then with probability at least $1-\delta$, 
 $$\normtwo{\tvZs[s]_{k, t}} \leq C_3'\sigmax\sqrt{\frac{2\alpha}{\eta}\log \frac{2  \alpha K}{\eta\delta}}, \qquad \forall  0\leq t\leq H, k\in[K],$$
 where $C_3' $ is defined in \Cref{lemma:prod}.
\end{lemma}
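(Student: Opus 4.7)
The plan is to mimic the proof of \Cref{lemma: concen tvZ} verbatim, with the only substantive change being the use of \Cref{lemma:prod} in place of the crude bound $(1+\rho_2\eta)^H \le \exp(\alpha\rho_2)$ on the matrix product $\prod_{l=\tau+1}^{t-1}(\mI - \eta\nabla^2\cL(\tvus_l))$. This is exactly what makes the stated bound independent of $\alpha$ (inside the constant $C_3'$) rather than scaling like $\exp(\alpha\rho_2)$.

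First, for each fixed pair $(k,t)$, I would build the auxiliary sequence $\{\tvZs_{k,t,t'}\}_{t'=0}^{t}$ defined by
\[
\tvZs_{k,t,t'} := \sum_{\tau=0}^{t'-1}\Big(\prod_{l=\tau+1}^{t-1}(\mI - \eta\nabla^2\cL(\tvus_l))\Big)\vzs_{k,\tau},
\]
which is a martingale in $t'$ with respect to the filtration generated by $\vzs_{k,0},\dots,\vzs_{k,t'-1}$, and which satisfies $\tvZs_{k,t,t} = \tvZs_{k,t}$. Since $\bvths \in \Gaz$, the working zone lemma guarantees $\tvus_l \in \Gao$ for all $0 \le l \le H$, so \Cref{lemma:prod} applies and gives
\[
\Big\|\prod_{l=\tau+1}^{t-1}(\mI - \eta\nabla^2\cL(\tvus_l))\Big\| \le C_3'
\]
uniformly in $\tau,t$, with $C_3'$ independent of $\alpha$ and $\eta$. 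Combined with the assumption $\normtwo{\vzs_{k,\tau}} \le \sigmax$, the martingale difference norm is bounded by $C_3'\sigmax$.

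Next, Azuma–Hoeffding applied to this martingale yields
\[
\PP\big(\normtwo{\tvZs_{k,t}} \ge \epsilon'\big) \le 2\exp\!\left(-\frac{\epsilon'^2}{2t(C_3'\sigmax)^2}\right) \le 2\exp\!\left(-\frac{\epsilon'^2}{2H(C_3'\sigmax)^2}\right).
\]
A union bound over $k \in [K]$ and $0 \le t \le H = \alpha/\eta$ then gives that, with probability at least $1-\delta$,
\[
\normtwo{\tvZs_{k,t}} \le C_3'\sigmax\sqrt{2H\log\frac{2HK}{\delta}} \qquad \forall\, k\in[K],\ 0\le t\le H.
\]
Substituting $H = \alpha/\eta$ into both the prefactor and the logarithm gives exactly the claimed bound $C_3'\sigmax\sqrt{\tfrac{2\alpha}{\eta}\log\tfrac{2\alpha K}{\eta\delta}}$.

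Essentially everything is routine once \Cref{lemma:prod} is in hand; the only real obstacle is the one already absorbed into that earlier lemma, namely verifying that $\lambda_{\min}(\nabla^2\cL(\tvus_l))$ decays geometrically along the GD trajectory so that $\sum_l \eta|\lambda_{\min}|$ stays bounded, giving the $\alpha$-free constant $C_3'$. Since this has already been established and since the working-zone hypothesis $\bvths\in\Gaz$ is precisely what unlocks \Cref{lemma:prod}, no further technical subtlety arises, and the rest is a standard Azuma–union-bound argument.
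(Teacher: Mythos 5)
Your proof is correct and follows exactly the route the paper intends: the paper does not write out a separate argument for this lemma, but states that it ``can be derived in the same way as \Cref{lemma: concen tvZ}'' once \Cref{lemma:prod} replaces the crude $\exp(\alpha\rho_2)$ bound on the matrix product. Your martingale construction, Azuma--Hoeffding step, union bound, and substitution $H=\alpha/\eta$ are precisely that derivation.
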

The following lemma can be derived analogously to \Cref{lemma:tdelta} but the error bound is tighter in terms of its dependency on $\alpha$.
\begin{lemma}
Given $\bvths \in \Gao$,  then  for $\delta = \cO(\poly(\eta))$, with probability at least $1-\delta$, there exists a constant $C'_4$ independent of $\alpha$ such that
\begin{align*}
    \normtwo{\vths_{k, t}-\tvus_t}\leq C'_4\sqrt{\alpha\eta \log \frac{\alpha}{\eta  \delta}},\quad \forall 0 \leq t \leq H, k\in [K],
\end{align*}
and 
\begin{align*}
    \normtwo{\bvths[s+1]-\tvus_{H}}\leq C'_4\sqrt{\alpha \eta \log \frac{\alpha}{\eta \delta}}.
\end{align*}
\end{lemma}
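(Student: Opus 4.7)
The plan is to follow the template of \Cref{lemma:tdelta} but carefully track the $\alpha$-dependence of every constant. Define $\tvDeltas_{k,t} := \vths_{k,t} - \tvus_t$. Subtracting the update rules as in \eqref{eq:thupd2}--\eqref{eq:uupd2} gives the recursion
\[
\tvDeltas_{k,t+1} = (\mI - \eta \nabla^2 \cL(\tvus_t))\tvDeltas_{k,t} - \eta \vzs_{k,t} + \eta \tvvs_{k,t},
\]
where $\normtwo{\tvvs_{k,t}} \le \tfrac{\rho_3}{2}\normtwo{\tvDeltas_{k,t}}^2$ provided $\vths_{k,t} \in \cM^{\epsilon_4}$. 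Unrolling the recursion from $\tvDeltas_{k,0} = \vzero$ yields
\[
\tvDeltas_{k,t} = -\eta\, \tvZs_{k,t} + \eta \sum_{\tau=0}^{t-1}\Big[\prod_{l=\tau+1}^{t-1}(\mI - \eta\nabla^2 \cL(\tvus_l))\Big]\tvvs_{k,\tau}.
\]

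The first step is to bound the matrix product by a constant that is independent of $\alpha$. Since $\bvths \in \Gao \subseteq \Gaz$, \Cref{lemma:prod} gives a bound $C_3'$ on $\normtwo{\prod_{l=\tau+1}^{t-1}(\mI - \eta \nabla^2 \cL(\tvus_l))}$, with $C_3'$ independent of $\alpha$. This improves on $\tC_1 = \exp(\alpha \rho_2)$ used in \Cref{lemma:tdelta}. Applying triangle inequality and $\normtwo{\tvvs_{k,\tau}} \le \tfrac{\rho_3}{2}\normtwo{\tvDeltas_{k,\tau}}^2$, we obtain (as long as the iterates remain in $\cM^{\epsilon_4}$)
\[
\normtwo{\tvDeltas_{k,t}} \le \eta\, \normtwo{\tvZs_{k,t}} + \frac{C_3' \rho_3 \eta}{2}\sum_{\tau=0}^{t-1}\normtwo{\tvDeltas_{k,\tau}}^2.
\]
By \Cref{lemma: concen tvZ2}, on an event of probability at least $1-\delta$ the noise term satisfies $\normtwo{\tvZs_{k,t}} \le C_3'\sigmax\sqrt{\tfrac{2\alpha}{\eta}\log\tfrac{2\alpha K}{\eta \delta}}$ uniformly in $k,t$, contributing $\cO(\sqrt{\alpha \eta \log \tfrac{\alpha}{\eta\delta}})$ to the bound.

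The main step is an induction on $t$ showing $\normtwo{\tvDeltas_{k,t}} \le C_4' \sqrt{\alpha \eta \log \tfrac{\alpha}{\eta \delta}}$ with $C_4'$ independent of $\alpha$. The base case $t=0$ is trivial. For the inductive step, the noise contribution is $\eta \normtwo{\tvZs_{k,t}} \le C_3'\sigmax \sqrt{2\alpha \eta \log \tfrac{2\alpha K}{\eta\delta}}$, which is $\le \tfrac{1}{2}C_4'\sqrt{\alpha \eta \log\tfrac{\alpha}{\eta\delta}}$ once we pick $C_4' \ge 4 C_3' \sigmax$ (independent of $\alpha$). The quadratic remainder is bounded using $\eta H = \alpha$ as
\[
\frac{C_3'\rho_3 \eta}{2}\sum_{\tau=0}^{t-1}\normtwo{\tvDeltas_{k,\tau}}^2 \le \frac{C_3'\rho_3 (C_4')^2 \alpha^2 \eta}{2}\log \frac{\alpha}{\eta \delta},
\]
which is $o\bigl(\sqrt{\alpha\eta \log \tfrac{\alpha}{\eta\delta}}\bigr)$ whenever $\alpha^{3/2}\sqrt{\eta \log\tfrac{\alpha}{\eta\delta}}$ is small; this is guaranteed for $\delta = \cO(\poly(\eta))$ and sufficiently small $\eta$ (with $\alpha$ fixed). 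During the induction we also verify that the iterates stay in $\cM^{\epsilon_4}$, since $\tvus_t \in \Gao$ (from $\bvths \in \Gao$ and \Cref{lemma:movement}) and the $\ctO(\sqrt{\alpha\eta})$ deviation keeps $\vths_{k,t}$ inside $\cM^{\epsilon_4}$ for small $\eta$. The bound on $\normtwo{\bvths[s+1] - \tvus_H}$ then follows from $\bvths[s+1] - \tvus_H = \tfrac{1}{K}\sum_k \tvDeltas_{k,H}$ and the triangle inequality.

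The principal obstacle is verifying that $C_4'$ genuinely does not absorb $\alpha$. This boils down to two things: the replacement of the previous $\exp(\alpha \rho_2)$ factor by an $\alpha$-free bound $C_3'$ through \Cref{lemma:prod}, and the observation that although the recursion sums $H = \alpha/\eta$ quadratic terms, each term is itself of order $\alpha \eta \log(\alpha/(\eta\delta))$, so the total quadratic correction is $\cO(\alpha^2 \eta \log(\alpha/(\eta\delta)))$, which is dominated by the linear noise contribution $\cO(\sqrt{\alpha \eta \log(\alpha/(\eta\delta))})$ in the regime where $\alpha$ is fixed and $\eta \to 0$. Once these two pieces are in place, the induction closes with $C_4'$ depending only on $\rho_2, \rho_3, \sigmax, \mu$ and $\epsilon_0$.
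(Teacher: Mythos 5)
Your proof follows exactly the route the paper intends: mirror \Cref{lemma:tdelta} step for step, but replace the $\alpha$-dependent bound $\tC_1 = \exp(\alpha\rho_2)$ on the matrix product $\prod_{l=\tau+1}^{t-1}(\mI-\eta\nabla^2\cL(\tvus_l))$ by the $\alpha$-independent constant $C_3'$ of \Cref{lemma:prod}, invoke \Cref{lemma: concen tvZ2} in place of \Cref{lemma: concen tvZ}, and close the induction. The paper only says the lemma is ``derived analogously to \Cref{lemma:tdelta},'' and you have correctly filled in the details, including the key bookkeeping: once the matrix-product bound is $\alpha$-free, the noise term contributes $\cO(\sqrt{\alpha\eta\log\tfrac{\alpha}{\eta\delta}})$ with a constant independent of $\alpha$, and the quadratic remainder sums $H = \alpha/\eta$ terms of size $\cO(\alpha\eta\log(\cdot))$, giving $\cO(\alpha^2\eta\log(\cdot))$, which is dominated by the noise term for small $\eta$. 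The fact that the \emph{threshold} on $\eta$ degrades as $\alpha$ grows, while the \emph{constant} $C_4'$ does not, is exactly the content of the lemma.

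One sentence is wrong as written and should be fixed: you assert ``Since $\bvths \in \Gao \subseteq \Gaz$.'' The nesting goes the other way: from \Cref{lemma:workingzone} we have $\epsilon_0 < \epsilon_1$, so $\Gaz \subseteq \Gao$, not $\Gao \subseteq \Gaz$. Membership in $\Gao$ does \emph{not} imply membership in $\Gaz$, and both \Cref{lemma:prod} and \Cref{lemma: concen tvZ2} require $\bvths \in \Gaz$ (they use the working-zone property that GD started in $\Gaz$ stays in $\Gao$, and the PL-decay estimate on $\lambda_{\min}(\nabla^2\cL(\tvus_t))$ that drives the $\alpha$-free bound). The likely resolution is that the hypothesis of the lemma should read $\bvths\in\Gaz$ rather than $\Gao$ (the surrounding text establishes $\normtwo{\bvths - \vphs} \le \tfrac12\sqrt{\mu/\rho_2}\,\epsilon_0$ after $s_0'$ rounds, which does place $\bvths$ in $\Gaz$, and that is the regime in which the lemma is applied); under that reading your proof goes through unchanged. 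You should flag the strengthened hypothesis explicitly rather than deriving it from a false inclusion.
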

Then, similar to \Cref{lemma:tpsi whole}, we can show that for $\delta=\cO(\poly(\eta))$ and simultaneously all $s\geq s_0'+s_1'$ where $s_1'=\cO(\frac{1}{\alpha}\log \frac{1}{\eta})$, it holds with probability at least $1-\delta$ that $\normtwo{\bvths-\vphs}=\cO(\sqrt{\alpha\eta\log \frac{\alpha}{\eta\delta}})$. Note that to eliminate the dependency of the second term's denominator on $\alpha$ in \eqref{eq:tpsi individual}, we can discuss the cases of $\alpha >c_0$ and $\alpha<c_0$ respectively 
where $c_0$ can be an arbitrary positive constant independent of $\alpha$. For the case of $\alpha<c_0$ 
group $\ceil{\frac{c_0}{\alpha}}$ rounds together and repeat the arguments in this subsection to analyze the closeness between Local SGD and GD iterates as well as the evolution of loss.
\subsection{Computing the Moments for One ``Giant Step''}\label{sec:moments phase2}
In this subsection, we compute the first and second moments for the change of manifold projection every $\Rg$ rounds of Local SGD. Since the randomness in training might drive the iterate out of the working zone, making the dynamic intractable, we analyze a more well-behaved sequence $\{\hvths_{k, t}: (s,t)\preceq (\Rtot, 0),  k\in [K]\}$ which is equal to $\{\vths_{k,t}\}$ with high probability. 
Specifically, 
$\hvths_{k,t}$ equal to $\vths_{k,t}$ if the global step $(s,t)$ is $\eta^{100}$-good and is set as a point $\vphinull\in \Gamma$ otherwise. The formal definition is as follows.
\begin{definition}[Well-behaved sequence]\label{def:htheta}
    Denote by $\cEs_t$ the event $\stgood$. Define a well-behaved sequence 
$\hvths_{k, t}:=\vths_{k,t}\onec_{\cEs_t}+\vphinull\onec_{\bcEs_t}$
, which  satisfies the following update rule:
\begin{align}
     \hvths_{k, t+1}
    & =\vths_{k,t+1} \onec_{\cEs_{t+1}}+\vphinull\onec_{\bcEs_{t+1}}\\
    & = \hvths_{k, t}-\eta\nabla \cL(\hvths_{k,t})-\eta \vzs_{k, t}\underbrace{-\onec_{\bcEs_{t+1}}(\hvths_{k, t}-\eta\nabla \cL(\hvths_{k,t})-\eta \vzs_{k, t})+\onec_{\bcEs_{t+1}}\vphinull}_{:= \hves_{k, t}}.\label{eq:htheta}
\end{align}
\end{definition}

By \Cref{thm:good}, with probability at least $1-\eta^{100}$, $\hvths_{k, t}=\vths_{k, t}$,  $\forall k\in [K], (s, t)\preceq (\Rtot, 0)$. 
Similar to $\{\vths_{k,t}\}$, we define the following variables with respect to $\{\hvths_{k,t}\}$:
\begin{align*}
    \bhvths[s+1]&:=\frac{1}{K}\sum_{k\in[K]}\hvths_{k,H}, \quad \hvphs:=\Phi(\bhvths),
   \\
    \hvxs_{k,t}&:=\hvths_{k,t}-\hvphs,\quad \bhvxs{0}:=\bhvths-\hvphs, \quad\bhvxs{H}:=\frac{1}{K}\sum_{k\in[K]}\hvxs_{k,H}.
\end{align*}
Notice that $\hvxs_{k,0}=\bhvxs{0}$ for all $k\in[K]$. Finally, we introduce the following mapping $\mPsi(\vtheta):\Gamma \to \R^{d\times d}$, which is closely related to $ \widehat{\mPsi}$ defined in \Cref{main thm: flow}. 
\begin{figure}[t]
\begin{center}
    \vspace{-0.2in}
    \includegraphics[width=0.29\textwidth]{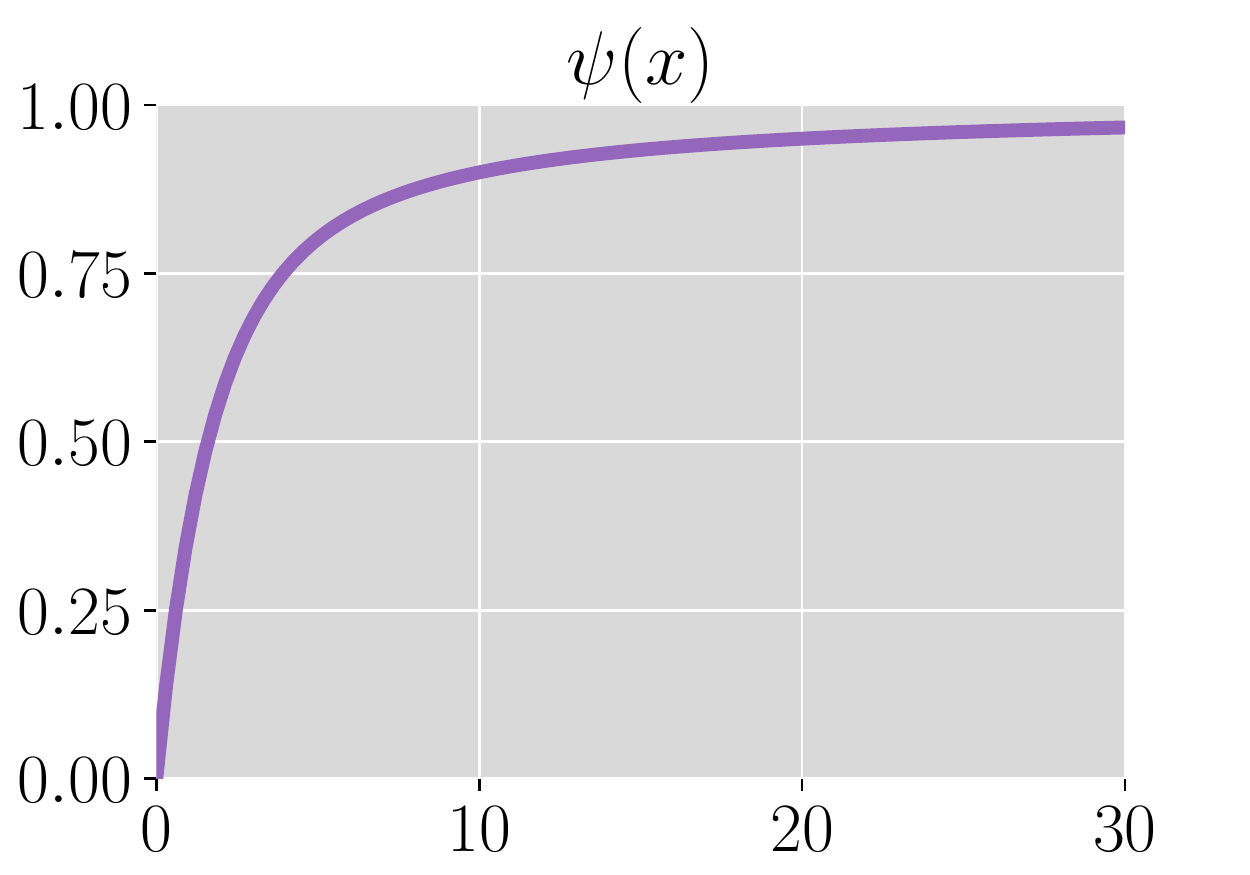}
    \caption{A plot of $\psi(x)$.}
    \label{fig:psi}
\end{center}
\end{figure}

\begin{definition}\label{def:mPsi}
For $\vtheta \in \Gamma$, we define the mapping $\mPsi(\vtheta):\Gamma \to \R^{d\times d}$:
    \begin{align*}
      \mPsi(\vtheta) = \sum_{i, j \in [d]} \psi(\eta H(\lambda_i + \lambda_j)) \left\langle\mSig(\vtheta), \vv_i \vv_j^\top \right\rangle \vv_i \vv_j^\top,
\end{align*}
where $\lambda_i, \vv_i$ are the $i$-th eigenvalue and eigenvector of $\nabla^2 \cL(\vtheta)$ and $\vv_i$'s form an orthonormal basis
of $\R^d$. 
Additionally, $\psi(x):=\frac{e^{-x}-1+x}{x}$ and $\psi(0) = 0$; see \Cref{fig:psi} for a plot. 
\end{definition}

\begin{remark}
Intuitively, $\mPsi(\vtheta)$ rescales the entries of $\mSig(\vtheta)$ in the eigenbasis of $\nabla^2 \cL(\vtheta)$. When $\nabla^2\cL(\vtheta)=\diag(\lambda_1, \cdots, \lambda_d) \in \R^{d \times d}$, where $\lambda_i = 0$ for all $m < i \le d$, $ \Psi(\mSig_0)_{i, j} =  \psi(\eta H (\lambda_i + \lambda_j)) \Sigma_{0, i, j}$.
Note that $\mPsi(\vtheta)$ can also be written as
\begin{align*}
    \vect(\mPsi(\vtheta))=\psi(\eta H(\nabla^2 \cL(\vtheta)\oplus \nabla^2 \cL(\vtheta) ))\vect(\mSig(\vtheta)),
\end{align*}
where $\oplus$ denotes the Kronecker sum $\mA \oplus \mB = \mA \otimes \mI_d +\mI_d\otimes \mB $, $\vect(\cdot)$ is the vectorization operator of a matrix and $\psi(\cdot)$ is interpreted as a matrix function.
\end{remark}

Now we are ready to present the result about the moments of $\hvphs[s+\Rg]-\hvphs$.
\begin{theorem}\label{thm: one step moment new}
For $s_0+s_1\leq s \leq \Rtot-\Rg$ and $0<\beta <0.5$, the first and second moments of $\hvphs[s+\Rg]-\hvphs$ are as follows:
\begin{align}
\begin{aligned}
    \E[\hvphs[s+\Rg] -\hvphs[s] \mid \hvphs, \cEs_{ 0}]&=\frac{\eta^{1-\beta}}{2B}\partial^2\Phi(\hvphs)[\mSig(\hvphs)+(K-1)\Psi(
    \hvphs)]\\
    &\quad +\ctO(\eta^{1.5-2\beta})+\ctO(\eta),\label{eq:s+grp1}
\end{aligned}
\end{align}
\begin{align}
    &\E[(\hvphs[s+\Rg] -\hvphs[s])(\hvphs[s+\Rg] -\hvphs[s])^{\top}\mid \hvphs, \cEs_{0}]=\frac{\eta^{1-\beta}}{B}\mSig_{\parallel}(\hvphs)+\ctO(\eta^{1.5-2\beta})+\ctO(\eta),\label{eq:s+grp2}
\end{align}
where $\ctO(\cdot)$ hides log terms and constants independent of $\eta$.
\end{theorem}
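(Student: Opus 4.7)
The plan is to establish the moment estimates by telescoping over the $\Rg$ rounds and reducing the computation to a per-round analysis. Writing $\hvphs[s+\Rg]-\hvphs[s] = \sum_{r=0}^{\Rg-1}\bigl(\hvphs[s+r+1]-\hvphs[s+r]\bigr)$, I would Taylor-expand $\Phi$ to second order around $\bhvths[s+r]$, so that each term splits as $\partial\Phi(\bhvths[s+r])(\bhvths[s+r+1]-\bhvths[s+r]) + \tfrac{1}{2}\partial^2\Phi(\bhvths[s+r])[\bhvths[s+r+1]-\bhvths[s+r],\bhvths[s+r+1]-\bhvths[s+r]] + O(\|\cdot\|^3)$. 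By Theorem~\ref{thm:closeness} and the construction of $\hvths_{k,t}$, in phase 2 we have $\bhvths[s+r]\in\Gat$ with $\|\bhvths[s+r]-\hvphs[s+r]\| = \ctO(\sqrt{\eta})$, so these Taylor expansions are valid with controlled remainders. The centerpiece of the argument is thus to estimate the one-round mean and covariance of $\bhvths[s+r+1]-\bhvths[s+r]$ conditional on $\bhvths[s+r]$.

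For the one-round analysis, I would expand the local update $\hvths_{k,t+1} = \hvths_{k,t} - \eta\nabla\cL(\hvths_{k,t}) - \eta\vzs_{k,t} + \hves_{k,t}$ around $\hvphs[s+r]$. Using $\nabla\cL(\hvths_{k,t}) \approx \nabla^2\cL(\hvphs[s+r])(\hvths_{k,t}-\hvphs[s+r]) + \tfrac{1}{2}\nabla^3\cL(\hvphs[s+r])[(\hvths_{k,t}-\hvphs[s+r])^{\otimes 2}]$, the linearized dynamics of $\hvxs_{k,t}$ becomes $\hvxs_{k,t+1} \approx (\mI-\eta\nabla^2\cL(\hvphs[s+r]))\hvxs_{k,t} - \eta\vzs_{k,t} + \text{cubic}$. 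The extra indicator/reset term $\hves_{k,t}$ contributes negligibly under $\eta^{100}$-goodness. Unrolling this recursion in the eigenbasis $\{\vv_i\}$ of $\nabla^2\cL(\hvphs[s+r])$ produces the closed-form $\hvxs_{k,H}\approx -\eta\sum_{\tau=0}^{H-1}\bigl[\prod_{l=\tau+1}^{H-1}(\mI-\eta\nabla^2\cL(\hvphs[s+r]))\bigr]\vzs_{k,\tau}$ plus tangent-space drift from the cubic term. Averaging over $K$ workers, $\bhvxs{H} = \frac{1}{K}\sum_k \hvxs_{k,H}$ has noise covariance (in the eigenbasis) of entries $\tfrac{\eta^2}{K}\sum_{\tau=0}^{H-1}(1-\eta\lambda_i)^{H-1-\tau}(1-\eta\lambda_j)^{H-1-\tau}\Sigma_{ij}$, which after summing the geometric series and extracting the leading $\eta$ dependence produces the factor $\psi(\eta H(\lambda_i+\lambda_j))/(\lambda_i+\lambda_j)$ characteristic of $\widehat{\mPsi}$.

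Substituting the one-round increment into the Taylor expansion of $\Phi$ and taking conditional expectation, the drift contribution comes from two places: (i) the first-order term $\partial\Phi(\bhvths[s+r])\E[\bhvths[s+r+1]-\bhvths[s+r]]$, where Lemmas~\ref{lemma:mani grad}, \ref{lemma:mani inner para perp} and \ref{lemma:mani inner perp perp} fold the cubic drift through $\partial\Phi$ into $-\tfrac{1}{2}\partial\Phi\nabla^3\cL[\cV_{\nabla^2\cL}(\mSig_\Diamond)]$, producing the drift-I term; and (ii) the Itô-type second-order correction $\tfrac{1}{2}\partial^2\Phi(\bhvths[s+r])[\Cov(\bhvths[s+r+1]-\bhvths[s+r])]$, which, by the same lemmas applied to the covariance matrix produced above, contributes the drift-I term again (with coefficient $\tfrac{1}{K}$) together with the drift-II term $\tfrac{K-1}{2B}\partial\Phi\nabla^3\cL[\widehat{\mPsi}]$ arising from the decorrelated per-worker noise (the factor $K-1$ appearing as $K\cdot \tfrac{1}{K^2} \cdot (\text{correction})$ minus the single-worker baseline). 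For the second moment, only the tangent-space noise survives, giving $\tfrac{\eta H\eta}{KB_{\mathrm{loc}}}\mSig_\parallel$ per round $= \tfrac{\eta^2 H}{B}\mSig_\parallel$, since in the tangent eigendirections $\lambda_i=0$ so the $\psi$-damping trivializes.

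Finally, I would sum these one-round estimates over $r=0,\dots,\Rg-1$. One round contributes mean $\tfrac{\eta^2 H}{2B}\partial^2\Phi[\mSig+(K-1)\mPsi] + \ctO(\eta^{2.5})$ and covariance $\tfrac{\eta^2 H}{B}\mSig_\parallel + \ctO(\eta^{2.5})$, so $\Rg = \Theta(\eta^{-\beta})$ rounds yields the claimed leading order $\eta^{1-\beta}$. The main obstacle is controlling the replacement error from evaluating $\mSig, \mPsi, \partial^2\Phi$ at $\hvphs[s+r]$ versus $\hvphs[s]$: by Theorem~\ref{thm:change}, $\|\hvphs[s+r]-\hvphs[s]\| = \ctO(\eta^{0.5-0.5\beta})$ with high probability, so each round accrues an error of $\ctO(\eta^{2}\cdot\eta^{0.5-0.5\beta})$ in the first moment, and summing over $\Rg$ rounds yields $\ctO(\eta^{1.5-2\beta})$, which matches the stated error. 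The low-probability event $\{\stnotgood\}$ is handled by the boundedness of $\hvths_{k,t}$ on the complement, contributing only $\cO(\eta^{100})$ to both moments, absorbed into the $\ctO(\eta)$ term.
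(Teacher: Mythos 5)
Your high-level skeleton — telescope over the $\Rg$ rounds, Taylor-expand $\Phi$ per round, sum the one-round moments, absorb the low-probability bad event into $\ctO(\eta^{100})$ — matches the structure of the paper's proof (\Cref{lemma:recursion vphi}, \Cref{lemma:recursion}, and the final summation in the proof of \Cref{thm: moments}). But two of your key mechanistic claims point at the wrong place, and without fixing them the proof would not close.

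First, you attribute the drift-II term $\tfrac{K-1}{2B}\nabla^3\cL[\widehat{\mPsi}]$ and the appearance of $\psi$ to the It\^{o}-type second-order correction $\tfrac12\partial^2\Phi[\Cov(\Delta)]$ applied to the covariance of the averaged increment $\bhvxs{H}$. This is not where they come from. The covariance of $\bhvxs{H}$ in the normal directions gives a factor proportional to $\bigl(1-e^{-\alpha(\lambda_i+\lambda_j)}\bigr)/(\lambda_i+\lambda_j)$, not $\psi(\alpha(\lambda_i+\lambda_j))/(\lambda_i+\lambda_j)$; these are different functions. In the paper's computation (\Cref{lemma:one step momoents expelicit 1}), the $\tfrac12\partial^2\Phi[\bhmAs]$ term's normal-space piece is largely \emph{cancelled} by $\mPpara\hvqs_H$, and what survives through $\partial^2\Phi$ is only the tangent piece $\tfrac{H\eta^2}{2B}\partial^2\Phi[\mSig_{0,\parallel}]$. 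The $\psi$-weighted term $\widehat{\mPsi}$ and the $K-1$ coefficient come from $\cT_2 = -\eta\mPpara\nabla^3\cL\bigl[\tfrac12\sum_{t=0}^{H-1}\hmAs_t + \sum_{t=0}^{H-1}\hmBs_t\bigr]$, which is part of the \emph{first-order} term $\partial\Phi\cdot\E[\Delta]$: $\psi$ arises as the \emph{time integral over $t$} of the per-worker intra-round covariance $\hmAs_t$, and the $K-1$ emerges because $\hmAs_t$ (one worker) exceeds $\hmMs_t$ (cross-worker) by exactly the variance from that worker's own fresh noise, which relaxes to the stationary profile $\tfrac{1-(1-(\lambda_i+\lambda_j)\eta)^t}{(\lambda_i+\lambda_j)\Bloc}\eta\Sigma_{0,ij}$ within the round and integrates to $\psi$. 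Your plan, which only tracks the covariance of the \emph{aggregated} increment, has no quantity in which this distinction can be seen.

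Second, your per-round linearization implicitly assumes each round starts on the manifold ($\hvxs_{k,0}\approx 0$), so that $\hvxs_{k,H}$ is purely accumulated noise. But $\hvxs_{k,0}=\bhvxs{0}=\bhvths-\hvphs$ is $\Theta(\sqrt{\eta})$, the same order as the intra-round noise accumulation, and its conditional second moment $\hmAs_0 = \E[\bhvxs{0}\bhvxsT{0}]$ is itself determined by the previous round's dynamics through the transition $\hmAs[s+1]_0 = \mPperp\bhmAs\mPperp + \cdots$. The per-round moments do not stabilize until this round-to-round recursion has equilibrated, which takes $R_0=\ctO(\log\tfrac1\eta)$ rounds (\Cref{lemma:perpAperp}). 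The paper handles this by solving the coupled system $(\hvqs_t,\hmAs_t,\hmMs_t,\hmBs_t)$ — with the Hessian eigenbasis frozen at $\hvphs[0]$ across all $\Rg$ rounds — and explicitly splitting the sum into $s<R_0$ (transient, contributing $\ctO(\eta)$) and $s\ge R_0$ (stationary). Your plan, which conditions on $\bhvths[s+r]$ and then sums, gives no way to evaluate the equilibrated values of these moments, nor does it track the cross-correlation $\hmBs_t=\E[\hvxs_{k,t}\hDvphsT]$ that arises when you expand $\partial\Phi(\bhvths[s+r])$ back to $\partial\Phi(\hvphs[0])$. (Separately, the arithmetic in your replacement-error paragraph is off: the per-round drift magnitude is $\Theta(\eta^2 H)=\Theta(\eta)$, not $\Theta(\eta^2)$, so the coefficient-drift error is $\ctO(\eta\cdot\eta^{0.5-0.5\beta})$ per round and $\ctO(\eta^{1.5-1.5\beta})$ after summing — still within budget, but not by the route you state.)
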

\begin{remark}\label{remark: good}
By \Cref{thm:good} and the definition of $\hvths_{k,t}$, \eqref{eq:s+grp1} and \eqref{eq:s+grp2} still hold when we replace $\hvphs$ with $\vphs$ and replace $\hvphs[s+\Rg]$ with $\vphs[s+\Rg]$.
\end{remark}
We shall have \Cref{thm: one step moment new} if we prove the following theorem, which directly gives \Cref{thm: one step moment new} with a simple shift of index. For brevity, denote by $\hDvphs:=\hvphs-\hvphs[0]$, $\mSig_0:=\mSig(\hvphs[0])$, $\mSigzpara:=\mSig_{\parallel}(\hvphs[0])$.
\begin{theorem}\label{thm: moments}
Given $\normtwo{\bhvths[0]-\hvphs[0]}=\cO(\sqrt{\eta \log \frac{1}{\eta}})$, for $0<\beta < 0.5$,  the first and second moments of $\hDvphs[\Rg]$ are as follows:
\begin{align*}
     \E[\hDvphs[\Rg]]&=\frac{\eta^{1-\beta}}{2B}\partial^2\Phi(\hvphs[0])[\mSig_0+(K-1)\mPsi(
    \hvphs[0])]+\ctO(\eta^{1.5-2\beta})+\ctO(\eta),\\
    \E[\hDvphs[\Rg]\hDvphsT[\Rg] ]&=\frac{\eta^{1-\beta}}{B}\mSigzpara+\ctO(\eta^{1.5-1.5\beta})+\ctO(\eta).
\end{align*}
\end{theorem}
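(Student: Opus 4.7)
The plan is to carry out a second-order Taylor expansion of both the projection $\Phi$ (around $\bhvths$) and the gradient $\nabla\cL$ (around $\hvphs$), take expectations over the independent worker noise, and sum the resulting per-round moments across the $\Rg=\floor{1/(\alpha\eta^\beta)}$ rounds. First I would write the one-round global update as $\bhvths[s+1]-\bhvths=-\tfrac{\eta}{K}\sum_{k,t}\nabla\cL(\hvths_{k,t})-\tfrac{\eta}{K}\sum_{k,t}\vzs_{k,t}$. Since $\hvphs\in\Gamma$ implies $\nabla\cL(\hvphs)=0$, introducing $\vxs_{k,t}:=\hvths_{k,t}-\hvphs$ gives $\nabla\cL(\hvths_{k,t})=\nabla^2\cL(\hvphs)\vxs_{k,t}+\tfrac{1}{2}\nabla^3\cL(\hvphs)[\vxs_{k,t}^{\otimes 2}]+\cO(\normtwo{\vxs_{k,t}}^3)$; \Cref{thm:closeness} gives $\normtwo{\vxs_{k,t}}=\ctO(\sqrt{\eta})$, so the cubic remainder is $\ctO(\eta^{1.5})$ per local step, acceptably small compared to the drift I want to extract.

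Next I would solve the linearized local recursion $\vxs_{k,t+1}=(\mI-\eta\mA)\vxs_{k,t}-\eta\vzs_{k,t}$ with $\mA:=\nabla^2\cL(\hvphs)$. Diagonalizing $\mA$ in an orthonormal eigenbasis with eigenvalues $\lambda_i$ and using independence of noise across workers and time, the crucial per-worker accumulation becomes
\[
\sum_{t=0}^{H-1}\E[(\vxs_{k,t})_i(\vxs_{k,t})_j]=\eta^2\Sigma_{ij}\sum_{t<H}\sum_{\tau<t}(1-\eta\lambda_i)^{t-1-\tau}(1-\eta\lambda_j)^{t-1-\tau},
\]
which telescopes to $\eta^2 H\,\Sigma_{ij}\,\psi(\eta H(\lambda_i+\lambda_j))/(\lambda_i+\lambda_j)$ whenever $\lambda_i+\lambda_j>0$; this is exactly an entry of $\eta H\,\widehat{\mPsi}(\hvphs)$, and is precisely where the function $\psi$ from \Cref{def:mPsi} enters. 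Expanding $\hvphs[s+1]-\hvphs=\partial\Phi(\bhvths)(\bhvths[s+1]-\bhvths)+\tfrac{1}{2}\partial^2\Phi(\bhvths)[(\bhvths[s+1]-\bhvths)^{\otimes 2}]+\cO(\eta^{1.5})$ and taking expectations, the surviving non-noise contributions at the target order are (i)~the cubic-gradient correction $-\tfrac{\eta}{2K}\sum_{k,t}\partial\Phi\cdot\nabla^3\cL[\E(\vxs_{k,t}\vxs_{k,t}^\top)]$, and (ii)~the second-derivative-of-projection term $\tfrac{1}{2K^2}\sum_{k}\partial^2\Phi[\Var(\hvths_{k,H})]$. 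Applying \Cref{lemma:mani inner perp perp} to convert the $\partial\Phi\cdot\nabla^3\cL[\,\cdot\,]$ piece of (i) into $-\partial^2\Phi[\,\cdot\,]$ form, then combining with (ii), yields the per-round drift $\tfrac{\eta^2 H}{2B}\partial^2\Phi(\hvphs)[\mSig(\hvphs)+(K-1)\mPsi(\hvphs)]$. The $(K-1)$ factor is produced by the tension between (ii), where cross-worker independence kills all terms except the self-worker diffusive variance (giving $1/K$ over $\widehat{\mSig}_\Diamond$), and (i), where the $\psi$-weighted per-worker accumulation retains the full per-worker variance but is averaged only once over $k$. The second moment is cleaner: only the projected, averaged linear-noise term survives at order $\eta^2 H$, giving $\tfrac{\eta^2 H}{B}\mSig_{\parallel}(\hvphs)$.

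Summing across $\Rg$ rounds and using $\Rg\cdot\eta^2 H=\eta^{1-\beta}$ delivers the stated leading prefactors. The main obstacle is controlling the error terms, which come through two independent channels: (a)~the per-step truncation remainder of order $\ctO(\eta^{1.5})$ accumulates to $\ctO(\eta^{1.5-\beta})$ over $\Rg$ rounds; and (b)~the manifold-projection drift $\normtwo{\hvphs-\hvphs[0]}=\ctO(\eta^{(1-\beta)/2})$ from \Cref{thm:change} means that replacing $\mSig(\hvphs)$, $\mPsi(\hvphs)$, and $\partial^2\Phi(\hvphs)$ by their values at $\hvphs[0]$ incurs $\eta^{1-\beta}\cdot\eta^{(1-\beta)/2}=\eta^{1.5-1.5\beta}$ error for the second moment, and the slightly larger $\eta^{1.5-2\beta}$ error for the first moment (the latter absorbs an additional application of \Cref{thm:change} through the Hessian-dependent eigenstructure of $\mPsi$). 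A subsidiary bookkeeping issue is the truncation term $\hves_{k,t}$ from \Cref{def:htheta}, which is active only on an event of probability at most $\eta^{100}$ by \Cref{thm:good}; combined with uniform boundedness of $\vzs_{k,t}$ and $\hvths_{k,t}$, this contributes only $\cO(\eta^{100})$ to every moment and is absorbed into the other error terms. The hardest part will be executing the eigenbasis identification of $\psi$ cleanly while simultaneously carrying the manifold-drift error through the three nested Taylor expansions without losing the explicit $1/B$ and $(K-1)$ coefficients that distinguish Local SGD from plain SGD.
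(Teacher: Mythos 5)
Your overall plan — Taylor-expanding $\Phi$ and $\nabla\cL$, linearizing the local recursion, identifying $\psi$ from a geometric telescoping, converting $\nabla^3\cL$ terms to $\partial^2\Phi$ via \Cref{lemma:mani inner perp perp}, and summing over $\Rg$ rounds — is the same strategy the paper uses. However, there is a genuine gap that would leave you with the wrong coefficient on the $\mPsi$ term.

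The problem is the \textbf{warm start at the beginning of each round}. Your telescoping formula
\[
\sum_{t=0}^{H-1}\E[(\vxs_{k,t})_i(\vxs_{k,t})_j]=\eta^2\Sigma_{ij}\sum_{t<H}\sum_{\tau<t}(1-\eta\lambda_i)^{t-1-\tau}(1-\eta\lambda_j)^{t-1-\tau}
\]
implicitly assumes $\vxs_{k,0}=0$, i.e.\ that every round restarts exactly on the manifold. That is false. The round-start point is $\bhvths$, which sits at distance $\ctO(\sqrt{\eta})$ from $\hvphs$ and, conditionally, has a non-trivial second moment $\hmAs_0:=\E[\bhvxs{0}\bhvxsT{0}]$ inherited from the previous round's aggregation. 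The paper (\Cref{lemma:recursion} and \Cref{lemma:perpAperp}) shows that after $R_0=\cO(\log\frac1\eta)$ rounds, $\hAs_{0,i,j}\to\frac{\eta\Sigma_{0,ij}}{K\Bloc(\lambda_i+\lambda_j)}$ in the normal eigen-coordinates, and with this warm start the telescoping yields
\[
\sum_{t=0}^{H-1}\hAs_{t,i,j}=\frac{H\eta\,\Sigma_{0,ij}}{\Bloc(\lambda_i+\lambda_j)}\Bigl[\tfrac{1}{K}+\bigl(1-\tfrac1K\bigr)\psi\bigl(\eta H(\lambda_i+\lambda_j)\bigr)\Bigr]+\ctO(\eta^{0.5-\beta}),
\]
which is exactly where the decomposition $\mSig_0+(K-1)\mPsi$ comes from: the $\tfrac1K$ piece produces drift-I, and the $(1-\tfrac1K)\psi$ piece produces drift-II with the $(K-1)$ factor. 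With your zero initial condition you get only the $\psi$ piece — i.e.\ a coefficient $K\mPsi$ rather than $\mSig_0+(K-1)\mPsi$ after combining with the averaging term. Your attribution of the missing $\tfrac1K\widehat{\mSig}_\Diamond$ to the $\partial^2\Phi$ term (ii) does not rescue this: under $\mPperp$, the $\partial^2\Phi[\bhmAs]$ contribution picks up only the \emph{tangent-tangent} block of $\bhmAs$ (see the proof of \Cref{lemma:one step momoents expelicit 1}, using \Cref{lemma:mani inner para perp} and \Cref{lemma:mani inner perp perp}), not a $\tfrac1K\widehat{\mSig}_\Diamond$ correction in the normal space.

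Concretely, a correct execution needs the full recursion system the paper sets up in \Cref{lemma:recursion}: not just $\hmAs_t=\E[\hvxs_{k,t}\hvxsT_{k,t}]$ but also the cross-worker moment $\hmMs_t=\E[\hvxs_{k,t}\hvxsT_{l,t}]$ ($k\neq l$, non-zero because workers share a starting point), the drift correlation $\hmBs_t=\E[\hvxs_{k,t}\hDvphsT]$, and the mean $\hvqs_t$, together with the averaging-step transitions \eqref{eq:recur A tran}--\eqref{eq:recur B tran} that reset the normal component and carry the stationary variance forward. These transition equations are precisely the mechanism you have dropped. Without them the $(K-1)$ coefficient cannot be obtained, and the statement's claimed first moment cannot be reproduced.
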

We will prove \Cref{thm: moments} in the remainder of this subsection. For convenience, 
we introduce more notations that will be used throughout the proof. Let $\mH_0: = \nabla^2\cL(\hvphs[0])$. 
By Assumption~\ref{a:Gamma}, $\rank(\mH_0)=m$. WLOG, 
assume $\mH_0=\diag(\lambda_1, \cdots, \lambda_d) \in \R^{d \times d}$, where $\lambda_i = 0$ for all $m < i \le d$ 
and $\lambda_1\geq \lambda_2 \cdots \geq \lambda_m$. 
By \Cref{lemma:mani projection},
 $\partial \Phi(\hvphs[0])$ is the projection matrix onto the tangent space $T_{\hvphs[0]}(\Gamma)$ 
 (i.e. the null space of $\nabla^2 \cL(\hvphs[0])$) and therefore, $\partial\Phi(\hvphs[0])=\begin{bmatrix}\vzero & \vzero \\ \vzero & \mI_{d-m}\end{bmatrix}$.
 Let $\mPpara:=\partial \Phi(\hvphs[0])$ and $\mPperp:=\mI_d-\mPpara$.

 Let $\bhmAs:=\E[\bhvxs{H} \bhvxsT{H}]$, $\hvqs_{t}:=\E[\hvxs_{k,t}]$
 and $\hmBs_t:=\E[\hvxs_{k,t}\hDvphsT]$. The latter two notations are
 independent of $k$ since $\hvths_{1,t}, \dots, \hvths_{K,t}$ are identically distributed. 
 The following lemma computes the first and second moments of the change of manifold projection every round.
\begin{lemma}\label{lemma:recursion vphi}
    Given $\normtwo{\bhvths[0]-\hvphs[0]}=\cO(\sqrt{\eta \log \frac{1}{\eta}})$, 
    for $0\leq s < \Rg$, 
    the first and second moments of $\hvphs[s+1]-\hvphs$ are as follows:
    \begin{align}
        &\E[\hvphs[s+1]-\hvphs]=\mPpara \hvqs_{H} +\partial^2 \Phi(\hvphs[0])[ \hmBs_H]
        +\frac{1}{2} \partial^2 \Phi(\hvphs[0])[\bhmAs]+\ctO(\eta^{1.5-\beta}),\label{eq: one step first}\\
        &\E[(\hvphs[s+1]-\hvphs)(\hvphs[s+1]-\hvphs)^{\top}]
        =\mPpara \bhmAs \mPpara
        +\ctO(\eta^{1.5-0.5\beta}). \label{eq:one step second}
    \end{align}
    \end{lemma}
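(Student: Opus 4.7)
The natural approach is to Taylor expand the projection $\Phi$ around $\hvphs$ rather than $\hvphs[0]$, since $\bhvths[s+1] = \hvphs + \bhvxs{H}$ by definition, which avoids introducing $\bhvxs{0}$ (whose normal-component error is only $O(\|\bhvxs{0}\|^2)=\ctO(\eta)$ and would spoil the claimed bound) into the expansion. Concretely,
\begin{align*}
\hvphs[s+1] - \hvphs \;=\; \partial\Phi(\hvphs)\,\bhvxs{H} \;+\; \tfrac{1}{2}\partial^2\Phi(\hvphs)[\bhvxs{H}\bhvxsT{H}] \;+\; R,
\end{align*}
where the third-order Taylor remainder satisfies $\|R\|=O(\|\bhvxs{H}\|^3)$. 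The well-behaved construction (\Cref{def:htheta}) keeps all iterates in the compact set $\Gat$ on $\eta^{100}$-good steps (and sets them to a fixed manifold point otherwise), on which $\Phi$ and its derivatives are uniformly bounded; combined with $\|\bhvxs{H}\| = \cO(\sqrt{\eta\log(1/\eta)})$ from \Cref{lemma:bound x}, this yields $\E\|R\| = \ctO(\eta^{1.5})$.

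I then convert derivatives of $\Phi$ at $\hvphs$ to derivatives at $\hvphs[0]$ via a further expansion, using $\partial\Phi(\hvphs) = \mPpara + \partial^2\Phi(\hvphs[0])[\hDvphs,\cdot] + O(\|\hDvphs\|^2)$ and $\partial^2\Phi(\hvphs) = \partial^2\Phi(\hvphs[0]) + O(\|\hDvphs\|)$. By \Cref{thm:change}, $\|\hDvphs\| = \cO(\eta^{0.5-0.5\beta}\sqrt{\log(1/\eta)})$ uniformly throughout the giant step, so the conversion errors in the first-order and Hessian terms are both $\ctO(\eta^{1.5-0.5\beta})\subseteq\ctO(\eta^{1.5-\beta})$ (using $\beta<0.5$). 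Conditioning on the start-of-round $\sigma$-algebra (which contains $\hvphs$, and hence $\hDvphs$) makes the $K$ workers identically distributed, so $\E[\bhvxs{H}\mid\hvphs] = \hvqs_H$ and $\E[\bhvxs{H}\bhvxsT{H}\mid\hvphs] = \bhmAs$. The mixed term $\partial^2\Phi(\hvphs[0])[\hDvphs,\cdot]\bhvxs{H}$ has conditional expectation $\partial^2\Phi(\hvphs[0])[\hDvphs,\hvqs_H]$, which coincides with $\partial^2\Phi(\hvphs[0])[\hmBs_H]$ by the symmetry of $\partial^2\Phi$ and the identity $\hmBs_H = \hvqs_H\hDvphsT$ (valid since $\hDvphs$ is $\cF_s$-measurable). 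Assembling the pieces gives \eqref{eq: one step first}.

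The second-moment calculation follows the same template but needs only the first-order expansion: writing $\hvphs[s+1]-\hvphs = \partial\Phi(\hvphs)\bhvxs{H} + O(\|\bhvxs{H}\|^2)$ and taking the outer product leaves
\begin{align*}
\E[(\hvphs[s+1]-\hvphs)(\hvphs[s+1]-\hvphs)^{\top}\mid\hvphs] \;=\; \partial\Phi(\hvphs)\,\bhmAs\,\partial\Phi(\hvphs)^\top \;+\; O(\E\|\bhvxs{H}\|^3),
\end{align*}
after which substituting $\partial\Phi(\hvphs) = \mPpara + O(\|\hDvphs\|)$ produces $\mPpara\bhmAs\mPpara$ up to an error $O(\|\hDvphs\|\,\|\bhmAs\|) = \ctO(\eta^{1.5-0.5\beta})$, establishing \eqref{eq:one step second}.

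The main technical obstacle is that the bounds on $\bhvxs{H}$ and $\hDvphs$ supplied by \Cref{lemma:bound x,thm:change} hold only with high probability, whereas the statement requires genuine expectation bounds. This is precisely why the proof works with the well-behaved sequence $\{\hvths_{k,t}\}$: on the complement of the $\eta^{100}$-good event the iterates are replaced by the deterministic manifold point $\vphinull$, so all relevant quantities are almost-surely bounded, and their polynomial moments are controlled by the high-probability bounds up to corrections of order $\eta^{100}$, which are negligible compared with the $\ctO(\eta^{1.5-\beta})$ target error.
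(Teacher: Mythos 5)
Your Taylor-expansion strategy — expand $\Phi$ at $\hvphs$, then re-express the derivatives via $\hvphs[0] + \hDvphs$, and control all remainders via expectations of products of $\normtwo{\hDvphs}$ and $\normtwo{\bhvxs{H}}$ dominated by the $\eta^{100}$-good event — is the same as the paper's, and your observation that the well-behaved sequence makes the high-probability bounds into genuine moment bounds is exactly right. The problem is in how you handle the mixed (first-order-times-$\hDvphs$) term, where you introduce a conditioning argument that does not hold up.

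Specifically, the quantities $\hvqs_H := \E[\hvxs_{k,H}]$, $\bhmAs := \E[\bhvxs{H}\bhvxsT{H}]$ and $\hmBs_H := \E[\hvxs_{k,H}\hDvphsT]$ are \emph{unconditional} expectations, hence deterministic objects. Your claimed identities $\E[\bhvxs{H}\mid\hvphs]=\hvqs_H$ and $\E[\bhvxs{H}\bhvxsT{H}\mid\hvphs]=\bhmAs$ equate a random variable (a measurable function of $\hvphs$) with a constant, which is false unless those conditional expectations happen to be constant; and the relation $\hmBs_H=\hvqs_H\hDvphsT$ is a type mismatch — a deterministic matrix on the left, a random one on the right. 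There is also no reason for $\hvxs_{k,H}$ and $\hDvphs$ to factorize under the expectation, since $\hDvphs$ is built from the same noise history that drives $\hvxs_{k,H}$. The correct and much shorter route, which the paper takes, is to avoid conditioning entirely: since $\partial^2 \Phi(\hvphs[0])$ is a fixed linear operator on rank-two tensors,
\begin{align*}
\E\!\left[\partial^2\Phi(\hvphs[0])[\bhvxs{H}\hDvphsT]\right] = \partial^2\Phi(\hvphs[0])\!\left[\E[\bhvxs{H}\hDvphsT]\right] = \partial^2\Phi(\hvphs[0])[\hmBs_H],
\end{align*}
using only that $\hvxs_{1,H}\hDvphsT, \dots, \hvxs_{K,H}\hDvphsT$ are identically distributed so that $\E[\bhvxs{H}\hDvphsT]=\tfrac1K\sum_k\E[\hvxs_{k,H}\hDvphsT]=\hmBs_H$; the same direct argument gives $\E[\mPpara\bhvxs{H}]=\mPpara\hvqs_H$ and $\E[\bhvxs{H}\bhvxsT{H}]=\bhmAs$. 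Replacing your conditioning step with this direct computation removes the gap. (As a smaller point, the two expansion errors are not both $\ctO(\eta^{1.5-0.5\beta})$: replacing $\partial\Phi(\hvphs)$ by $\mPpara$ plus the second-derivative term contributes $O(\normtwo{\hDvphs}^2\normtwo{\bhvxs{H}})=\ctO(\eta^{1.5-\beta})$, which is the dominant error and matches the bound in the lemma.)
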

    \begin{proof} 
    By Taylor expansion, we have
    \begin{align*}
        \hvphs[s+1]
        &= \Phi\left( \hvphs + \bhvxs{H} \right)\\
        & = \hvphs + \partial \Phi(\hvphs)\bhvxs{H}+\frac{1}{2}\partial^2\Phi(\hvphs)[\bhvxs{H} \bhvxsT{H}]+\cO(\normtwo{\bhvxs{H}}^3)\\
        &=\hvphs[s]+\partial \Phi(\hvphs[0]+\hDvphs)\bhvxs{H}+\frac{1}{2}\partial^2 \Phi(\hvphs[0]+\hDvphs)[\bhvxs{H}\bhvxsT{H}]\\
        &\quad +\cO(\normtwo{\bhvxs{H}}^3)\\
        &=\hvphs[s]+ \mPpara \bhvxs{H} 
        +\partial^2 \Phi(\hvphs[0])[\bhvxs{H}\hDvphsT]
        +\frac{1}{2} \partial^2 \Phi(\hvphs[0])[\bhvxs{H}\bhvxsT{H}] \\
        &\quad + \cO(\normtwo{\hDvphs}^2 \normtwo{\bhvxs{H}} + \normtwo{\hDvphs} \normtwo{\bhvxs{H}}^2 + \normtwo{\bhvxs{H}}^3).
    \end{align*}
    Rearrange the terms and we obtain:
    \begin{align}\label{eq:recursion vphi}
        \begin{aligned}
           \hvphs[s+1] - \hvphs[s] &= \mPpara \bhvxs{H} 
           +\partial^2 \Phi(\hvphs[0])[\bhvxs{H}\hDvphsT]+
           \frac{1}{2} \partial^2 \Phi(\hvphs[0])[\bhvxs{H}\bhvxsT{H}] \\
            &\quad + \cO(\normtwo{\hDvphs}^2 \normtwo{\bhvxs{H}} + \normtwo{\hDvphs} \normtwo{\bhvxs{H}}^2 + \normtwo{\bhvxs{H}}^3).
        \end{aligned}
        \end{align}
    Moreover, 
    \begin{align}\label{eq:recursion second moment}
        \begin{aligned}
            (\hvphs[s+1]-\hvphs)(\hvphs[s+1]-\hvphs)^{\top}=\mPpara \bhvxs{H}\bhvxsT{H}\mPpara
            +\cO(\normtwo{\hDvphs}\normtwo{\bhvxs{H}}^2).
        \end{aligned}
    \end{align}
    Noticing that $\hvxs_{k,H}\hDvphsT$ are identically distributed for all $k\in [K]$,
    we have $\E[\bhvxs{H}\hDvphsT]=\frac{1}{K}\sum_{k\in [K]}\E[\hvxs_{k,H}\hDvphsT]=\hmBs_{H}$. 
Then taking expectation of both sides of~\eqref{eq:recursion vphi}
     gives
    \begin{align*}
        \E[\hvphs[s+1] - \hvphs[s]] 
        &= \mPpara \hvqs_{H} +\partial^2 \Phi(\hvphs[0])[ \hmBs_H]
        +\frac{1}{2} \partial^2 \Phi(\hvphs[0])[\bhmAs] \\
            &\quad + \cO(\E[\normtwo{\hDvphs}^2 \normtwo{\bhvxs{H}}] + \E[\normtwo{\hDvphs} \normtwo{\bhvxs{H}}^2]
             + \E[\normtwo{\bhvxs{H}}^3]).
    \end{align*}
    Again taking expectation of both sides of ~\eqref{eq:recursion second moment} yields
    \begin{align*}
        \E[(\hvphs[s+1]-\hvphs)(\hvphs[s+1]-\hDvphsT)]=\mPpara \bhmAs \mPpara
        +\cO(\E[\normtwo{\hDvphs}\normtwo{\bhvxs{H}}^2]).
    \end{align*}
    By Lemmas~\ref{lemma:bound x} and \ref{lemma:delta phi bound}, 
    the following holds simultaneously with probability at least $1-\eta^{100}$:
    \begin{align*}
        \normtwo{\hDvphs}=\ctO(\eta^{0.5-0.5\beta}), \quad 
        \normtwo{\bhvxs{H}}=\ctO(\eta^{0.5}).
    \end{align*}
    Furthermore, since for all $k\in [K]$
    and $(s, t)\preceq (\Rtot, 0)$, $\hvths_{k, t}$ stays in $\Gat$ which
    is a bounded set,  $\normtwo{\hDvphs}$ and $\normtwo{\bhvxs{H}}$
    are also bounded. Therefore, we have
    \begin{align}
        \E[\normtwo{\hDvphs}^2\normtwo{\bhvxs{H}}]&=\ctO(\eta^{1.5-\beta}),\label{eq:error1}\\
        \E[\normtwo{\hDvphs}\normtwo{\bhvxs{H}}^2]&=\ctO(\eta^{1.5-0.5\beta}),\\
        \E[\normtwo{\bhvxs{H}}^3]&=\ctO(\eta^{1.5}),\label{eq:error3}
    \end{align}
    which concludes the proof.
    \end{proof}
We compute $\bhmAs$, $\hvqs_t$ and $\hmBs_t$ by solving
a set of recursions, which is formulated in the 
following lemma. Additionally, define $\hmAs_t:=\E[\hvxs_{k,t}\hvxsT_{k,t}]$ and $\hmMs_t:=\E[\hvxs_{k,t}\hvxs_{k,l}], (k\neq l)$. 

\begin{lemma}\label{lemma:recursion}
    Given $\normtwo{\bhvths[0]-\hvphs[0]}=\cO(\sqrt{\eta \log \frac{1}{\eta}})$, 
    for $0\leq s < \Rg$ and $0\leq t < H$, we have the following recursions.
    \begin{align}
        \hvqs_{t+1}&=\hvqs_t-\eta \mH_0 \hvqs_t  -\eta \nabla^3 \cL(\vphs[0])[\hmBs_t]-\frac{\eta}{2}\nabla^3\cL(\vphs[0])[\hmAs_t]+\ctO(\eta^{2.5-\beta}),\label{eq:recur q}\\
         \hmAs_{t+1} &=\hmAs_{t} -\eta \mH_0 \hmAs_{t}  -\eta\hmAs_{t}  \mH_0+\frac{\eta^2}{\Bloc}\mSig_0+\ctO(\eta^{2.5-0.5\beta}),\label{eq:recur A}\\
        \hmMs_{t+1}&= \hmMs_{t}-\eta \mH_0  \hmMs_{t}-\eta  \hmMs_{t}\mH_0 +\ctO(\eta^{2.5-0.5\beta}),\label{eq:recur M}\\
         \hmBs_{t+1}&= (\mI-\eta \mH_0) \hmBs_t+\ctO(\eta^{2. 5-\beta}).\label{eq:recur B}
    \end{align}
    Moreover, 
    \begin{align}
        \bhmAs&=\frac{1}{K}\hmAs_H + (1-\frac{1}{K})\hmMs_H,\label{eq:recur A tran}\\
        {\hmMs[s+1]}_0 &= {\hmAs[s+1]}_0 =  \mPperp \bhmAs \mPperp + \cO(\eta^{1.5 -0.5 \beta}),\label{eq:recur M tran}\\
    \hvqs[s+1]_0&=\mPperp \hvqs_H -\partial^2 \Phi(\vphs[0])[\hmBs_H]-\frac{1}{2} \partial^2 \Phi(\vphs[0])[\bhmAs]+\ctO(\eta^{1.5-\beta}), \label{eq:recur q tran}\\
    \hmBs[s+1]_0& = \mPperp \hmBs_H + \mPperp \bhmAs \mPpara+\ctO(\eta^{1.5-\beta}). \label{eq:recur B tran}
    \end{align}
\end{lemma}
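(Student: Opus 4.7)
The plan is to derive all eight recursions from a single Taylor-expanded one-step update and then take expectations, bounding the remainders uniformly using the high-probability closeness bounds $\normtwo{\hvxs_{k,t}} = \ctO(\sqrt{\eta})$ and $\normtwo{\hDvphs} = \ctO(\eta^{0.5-0.5\beta})$ from Lemmas~\ref{lemma:bound x} and~\ref{lemma:delta phi bound}. First I would Taylor-expand $\nabla \cL(\hvths_{k,t}) = \nabla \cL(\hvphs + \hvxs_{k,t})$ around $\hvphs$, using $\nabla \cL(\hvphs) = 0$ since $\hvphs \in \Gamma$ with high probability, and further expand $\nabla^2 \cL(\hvphs)$ around $\hvphs[0]$ via $\nabla^2 \cL(\hvphs) = \mH_0 + \nabla^3 \cL(\hvphs[0])[\hDvphs] + \cO(\normtwo{\hDvphs}^2)$. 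Substituting into the update rule \eqref{eq:htheta} yields
\begin{align*}
\hvxs_{k,t+1} = (\mI - \eta\mH_0)\hvxs_{k,t} - \eta \nabla^3 \cL(\hvphs[0])[\hDvphs, \hvxs_{k,t}] - \tfrac{\eta}{2}\nabla^3 \cL(\hvphs[0])[\hvxs_{k,t}, \hvxs_{k,t}] - \eta \vzs_{k,t} - \hves_{k,t} + \vr^{(s)}_{k,t},
\end{align*}
where $\vr^{(s)}_{k,t}$ collects Taylor remainders of size $\cO\bigl(\eta(\normtwo{\hDvphs}^2\normtwo{\hvxs_{k,t}} + \normtwo{\hDvphs}\normtwo{\hvxs_{k,t}}^2 + \normtwo{\hvxs_{k,t}}^3)\bigr)$.

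The within-round recursions then follow by taking expectation and using that $\vzs_{k,t}$ is conditionally mean-zero given the history up to global step $(s,t)$. Equation~\eqref{eq:recur q} is immediate. For \eqref{eq:recur A}, expanding $\hvxs_{k,t+1}\hvxsT_{k,t+1}$ produces the leading term $(\mI - \eta\mH_0)\hmAs_t(\mI - \eta\mH_0)$, cross terms with a single factor of $\vzs_{k,t}$ which vanish by independence, and the noise term $\eta^2 \E[\vzs_{k,t}(\vzs_{k,t})^\top \mid \text{history}] = \tfrac{\eta^2}{\Bloc}\mSig(\hvths_{k,t}) = \tfrac{\eta^2}{\Bloc}\mSig_0 + \ctO(\eta^{2.5-0.5\beta})$ by smoothness of $\mSig^{\sfrac{1}{2}}$; the $\eta^2 \mH_0 \hmAs_t \mH_0$ correction is $\cO(\eta^3)$ and is absorbed into the remainder. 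For \eqref{eq:recur M}, the noise contribution vanishes entirely because $\vzs_{k,t}$ and $\vzs_{l,t}$ are independent across workers. For \eqref{eq:recur B}, $\hDvphs$ is measurable with respect to the history at the start of round $s$, so $\E[\vzs_{k,t}\hDvphsT] = 0$ and only the linear-in-$\hvxs$ term survives.

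For the across-round transitions, I would Taylor-expand the projection $\Phi$ around $\hvphs$ to write
\begin{align*}
\bhvxs[s+1]{0} = \bhvths[s+1] - \Phi(\bhvths[s+1]) = (\mI - \partial\Phi(\hvphs))\bhvxs{H} - \tfrac{1}{2}\partial^2 \Phi(\hvphs)\bigl[\bhvxs{H}\bhvxsT{H}\bigr] + \cO(\normtwo{\bhvxs{H}}^3),
\end{align*}
then further expand $\partial\Phi(\hvphs) = \mPpara + \partial^2\Phi(\hvphs[0])[\hDvphs] + \cO(\normtwo{\hDvphs}^2)$ and analogously $\partial^2\Phi(\hvphs) = \partial^2\Phi(\hvphs[0]) + \cO(\normtwo{\hDvphs})$. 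Taking expectation and using $\E[\bhvxs{H}\hDvphsT] = \hmBs_H$ (valid since all workers are identically distributed) gives \eqref{eq:recur q tran}. The transitions \eqref{eq:recur M tran} and \eqref{eq:recur B tran} follow by computing $\bhvxs[s+1]{0}\bhvxsT[s+1]{0}$ and $\bhvxs[s+1]{0}\hDvphsT[s+1]$, using that all workers restart from the same iterate so $\hvxs[s+1]_{k,0} = \bhvxs[s+1]{0}$, together with $\hDvphs[s+1] = \hDvphs + (\hvphs[s+1] - \hvphs)$ and the expression from \Cref{lemma:recursion vphi} for the cross-round component. The averaging identity \eqref{eq:recur A tran} is purely combinatorial: the $K^2$ pairs split into $K$ diagonal terms contributing $\hmAs_H$ and $K(K-1)$ off-diagonal terms contributing $\hmMs_H$.

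The main obstacle will be bookkeeping the error terms to the claimed precision. For each Taylor remainder one must bound moments of the form $\E[\normtwo{\hDvphs}^a \normtwo{\hvxs_{k,t}}^b]$: on the $\delta$-good event (probability at least $1 - \delta$ for $\delta = \poly(\eta)$) these are $\ctO(\eta^{(0.5-0.5\beta)a + 0.5b})$ by Lemmas~\ref{lemma:bound x} and~\ref{lemma:delta phi bound}, while on the complementary event $\hvths_{k,t}$ is reset to $\vphinull$ so the iterates stay bounded and the contribution is $\cO(\eta^{100})$ and hence negligible, analogously to the argument yielding \eqref{eq:error1}--\eqref{eq:error3}. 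The dominant remainder in \eqref{eq:recur q} is $\eta \cdot \E[\normtwo{\hDvphs}^2 \normtwo{\hvxs_{k,t}}] = \ctO(\eta^{2.5 - \beta})$; in \eqref{eq:recur A} it is $\eta \cdot \E[\normtwo{\hDvphs}\normtwo{\hvxs_{k,t}}^2]$ combined with the $\mSig$-smoothness perturbation, both $\ctO(\eta^{2.5 - 0.5\beta})$. The analogous calculation on the cross-round side, using $\normtwo{\bhvxs{H}} = \ctO(\sqrt{\eta})$, produces the $\ctO(\eta^{1.5-\beta})$ and $\ctO(\eta^{1.5-0.5\beta})$ error terms stated in the lemma. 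Finally, the $\hves_{k,t}$ reset remainder contributes only through the $\eta^{100}$-probability event $\bcEs_t$ and is dominated by the Taylor remainders.
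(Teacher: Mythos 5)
Your proposal follows the same route as the paper's proof: Taylor-expand the one-step update of $\hvxs_{k,t}$ around $\hvphs$ and then around $\hvphs[0]$, take expectations of the iterate, its outer products (within a worker, across workers, and against $\hDvphs$) to obtain the within-round recursions, expand $\Phi$ around $\hvphs$ for the cross-round transitions, and bound all remainders via the good-event moment estimates plus an $\cO(\eta^{100})$ contribution from the reset term $\hves_{k,t}$. The decomposition, the use of conditional noise independence, and the error orders are identical, so the argument is substantively the same as the one in the paper.
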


\begin{proof}
    We first derive the recursion for  $\hvqs_t$. Recall the update rule for $\hvths_{k,t}$:
    \begin{align*}
        \hvths_{k, t+1}&=\hvths_{k, t}-\eta \nabla \cL(\hvths_{k, t})-\eta \hvzs_{k, t}+\hves_{k, t}.
    \end{align*}
    Subtracting $\hvphs$ from both sides gives
    \begin{align}
        \hvxs_{k, t+1}&=\hvxs_{k, t}-\eta \nabla \cL(\hvths_{k, t})-\eta \hvzs_{k, t}+\cO(\normtwo{\hves_{k, t}})\notag\\
    &=\hvxs_{k, t}-\eta \left(\nabla^2 \cL(\hvphs)\hvxs_{k, t}+\frac 12\nabla^3 \cL(\hvphs)[\hvxs_{k, t}\hvxsT_{k, t}]+\cO(\normtwo{\hvxs_{k,t}}^3)\right)\notag\\
    &\quad -\eta \hvzs_{k, t}+\cO(\normtwo{\hves_{k, t}})\notag\\
        & = \hvxs_{k, t}-\eta \left(\nabla^2 \cL(\hvphs[0])+\nabla^3 \cL(\hvphs[0])\hDvphs+\cO(\|\hDvphs\|^2)\right)\hvxs_{k, t}\notag\\
        & \quad-\frac{\eta}{2}\left(\nabla^3 \cL(\hvphs[0])+\cO(\normtwo{\hDvphs})\right)[\hvxs_{k, t}\hvxsT_{kt}]-\eta \hvzs_{k, t}+\cO(\eta\normtwo{\hvxs_{k, t}}^3+\normtwo{\hves_{k, t}})\notag\\
        & =\hvxs_{k, t} - \eta \mH_0 \hvxs_{k, t}-\eta \nabla^3 \cL(\hvphs[0])[\hvxs_{k, t}\hDvphsT] - \frac{\eta}{2}\nabla^3 \cL(\hvphs[0])[\hvxs_{k, t}\hvxsT_{k, t}] -\eta \hvzs_{k, t}\notag\\
        & \quad +\cO(\eta\normtwo{\hvxs_{k, t}}^3+\eta \normtwo{\hDvphs}\normtwo{\hvxs_{k, t}}^2+\eta \normtwo{\hDvphs}^2\normtwo{\hvxs_{k, t}}+\normtwo{\hves_{k, t}}),\label{eq:recur x}
    \end{align}
where the second and third equality perform Taylor expansion. 
Taking expectation on both sides gives
\begin{align*}
    \hvqs_{t+1} &= (\mI-\eta\mH_0)\hvqs_{t}-\eta \nabla^3 \cL(\hvphs[0])[\hvqs_t]-\frac{\eta}{2}\nabla^3 \cL(\hvphs[0])[\hmAs_t]\\
    &\quad +\cO\left(\eta\E[\normtwo{\hvxs_{k,t}}^3] + \eta \E[\normtwo{\hDvphs}\normtwo{\hvxs_{k,t}}^2]+\eta \E[\normtwo{\hDvphs}^2 \normtwo{\hvxs_{k,t}}]+\E[\normtwo{\hves_{k, t}}]\right).
\end{align*}
By \Cref{thm:good}, with probability at least $1-\eta^{100}$, $\hves_{k,t}=\vzero$,  $\forall k\in [K], (s, t)\preceq (\Rg, 0)$. Also notice that both $\hvths_{k, t}$ and $\vphinull$ belong to the bounded set $\Gat$. Therefore, $\normtwo{\hves_{k, t}}$ is bounded and we have $\E[\normtwo{\hves_{k,t}}]=\cO(\eta^{100})$. Combining this with \eqref{eq:error1} to \eqref{eq:error3} yields \eqref{eq:recur q}.

Secondly, we derive the recursion for $\hmBs_t$. Multiplying both sides of~\eqref{eq:recur x} 
by $\hDvphsT$ and taking expectation, we have
\begin{align*}
    \hmBs_{t+1} &=  (\mI-\eta \mH_0)\hmBs_{t}+\cO(\eta \E[\normtwo{\hDvphs}\normtwo{\hvxs_{k, t}}^2 +\normtwo{\hDvphs}^2\normtwo{\hvxs_{k, t}}+\normtwo{\hves_{k,t}}]).
\end{align*}
Still by \Cref{thm:good} and~\eqref{eq:error1} to \eqref{eq:error3}, we have~\eqref{eq:recur B}. 

Thirdly, we derive the recursion for $\hmAs_t$. By \eqref{eq:recur x}, we have
\begin{align*}
    \hmAs_{t+1}&=\hmAs_t - \eta \mH_0 \hmAs_t - \eta\hmAs_t\mH_0 +\frac{\eta^2}{\Bloc}\mSig_0+\cO(\eta^2\E[\normtwo{\hDvphs}+\normtwo{\hvxs_{k,t}}])\\
    &\quad +\cO(\eta\E[\normtwo{\hvxs_{k,t}}^3+\normtwo{\hvxs_{k,t}}^2 \normtwo{\hDvphs}+\normtwo{\hves_{k,t}}])\\
    &=(\mI - \eta \mH_0) \hmAs_t + \frac{\eta^2}{\Bloc}\mSig_0+\ctO(\eta^{2.5-0.5\beta}),
\end{align*}
which establishes \eqref{eq:recur A}. 

Fourthly, we derive the recursion for $\hmMs_{t}$. Multiplying both sides of ~\eqref{eq:recur x} by $\hvxs_{l,t+1}$ and taking expectation, $l\neq k$, we obtain
\begin{align*}
    \hmMs_{t+1}&=\hmMs_t-\eta \mH_0\hmMs_t-\eta\hmMs_t\mH_0+\cO(\eta\E[\normtwo{\hvxs_{k,t}}\normtwo{\hvxs_{l,t}}\normtwo{\hDvphs}])\\
    &\quad + \cO(\eta \E[\normtwo{\hvxs_{k,t}}^2\normtwo{\hvxs_{l,t}}+\normtwo{\hves_{k, t}}]).
\end{align*}
By a similar argument to the proof of \Cref{lemma:recursion vphi}, we have
\begin{align*}
    \E[\normtwo{\hvxs_{k,t}}^2\normtwo{\hvxs_{l,t}}]&=\ctO(\eta^{1.5}),\\
    \E[\normtwo{\hvxs_{k,t}}\normtwo{\hvxs_{l,t}}\normtwo{\hDvphs}]&=\ctO(\eta^{1.5-0.5\beta}),
\end{align*}
which yields~\eqref{eq:recur M}.

Now we proceed to prove~\eqref{eq:recur A tran} to~\eqref{eq:recur B tran}. By definition of $\bhmAs$, 
\begin{align*}
    \bhmAs &= \frac{1}{K^2}\E[(\sum_{k\in [K]}\hvxs_{k,H})(\sum_{k\in [K]}\hvxs_{k,H})^{\top}]\\
    &=\frac{1}{K^2}\sum_{k\in[K]}\E[\hvxs_{k,H}\hvxsT_{k,H}]+\frac{1}{K^2}\sum_{k,l\in[K], k\neq l}\E[\hvxs_{k,H}\hvxsT_{l,H}]\\
    &=\frac{1}{K}\hmAs_{H}+(1-\frac{1}{K})\hmMs_H,
\end{align*}
which demonstrates \eqref{eq:recur A tran}. Then we derive \eqref{eq:recur M tran}. By definition of $\bhvxs[s+1]{0}$, 
\begin{align}
    \bhvxs[s+1]{0} &= \hvphs+\bhvxs{H} - \Phi(\hvphs+\bhvxs{H})\notag\\
    &=\hvphs+\bhvxs{H} - \left(\hvphs+\partial\Phi(\hvphs)\bhvxs{H}+\cO(\normtwo{\bhvxs{H}}^2)\right)\notag\\
    &=\bhvxs{H} - \left(\mPpara+\cO(\normtwo{\hDvphs})\right)\bhvxs{H}+\cO(\normtwo{\bhvxs{H}}^2)\notag\\
    &=\mPperp \bhvxs{H}+\cO(\normtwo{\bhvxs{H}}^2+\normtwo{\bhvxs{H}}\normtwo{\hDvphs}).\label{eq:x avg}
\end{align}
Hence, 
\begin{align*}
    \hmMs[s+1]_0&=\hmAs[s+1]_0=\E[\bhvxs{0}\bhvxsT{0}]\\
    &=\mPperp \bhmAs \mPperp + \cO(\E[\normtwo{\bhvxs{H}}^3+\normtwo{\bhvxs{H}}^2\normtwo{\hDvphs}]).
\end{align*}
By~\eqref{eq:error1} and~\eqref{eq:error3}, we obtain~\eqref{eq:recur M tran}. By~\eqref{eq:recursion vphi},
\begin{align}
    \hvphs[s+1]-\hvphs&=\mPpara\bhvxs{H}+\cO(\normtwo{\bhvxs{H}}\normtwo{\hDvphs}+\normtwo{\bhvxs{H}}^2).\label{eq:vphs upd 2}
\end{align}
Combining~\eqref{eq:x avg} and~\eqref{eq:vphs upd 2} gives
\begin{align*}
\E[\bhvxs{0}(\hvphs[s+1]-\hvphs)^{\top}]& = \mPperp \bhmAs\mPpara+\ctO(\eta^{1.5-0.5\beta}).
\end{align*}
Therefore, 
\begin{align*}
    \hmBs[s+1]_{0}&=\E[\bhvxs[s+1]{0}\hDvphsT[s+1]]=\E[\bhvxs[s+1]{0}(\hDvphs+\hvphs[s+1]-\hvphs)^{\top}]\\
    &=\mPperp\hmBs_H+\mPperp\bhmAs\mPpara+\ctO(\eta^{1.5-\beta}).
\end{align*}
Finally, we apply \Cref{lemma:recursion vphi} to derive \eqref{eq:recur q tran}.
\begin{align*}
    \hvqs[s+1]_0&=\E[\bhvxs[s+1]{0}]=\E[\bhvxs{H}-(\hvphs[s+1]-\hvphs)]\\
    &=\hvqs_H-\mPpara \hvqs_H-\partial^2 \Phi(\hvphs[0])[ \hmBs_H]
    -\frac{1}{2} \partial^2 \Phi(\hvphs[0])[\bhmAs]+\ctO(\eta^{1.5-\beta})\\
    &=\mPperp\hvqs_H-\partial^2 \Phi(\hvphs[0])[ \hmBs_H]
    -\frac{1}{2} \partial^2 \Phi(\hvphs[0])[\bhmAs]+\ctO(\eta^{1.5-\beta}),
\end{align*}
which concludes the proof.
\end{proof}

%Define $\rmax := \sup_{\vtheta_1, \vtheta_2\in \Gat} \normtwo{\vtheta_1-\vtheta_2}$.

\begin{comment}
\subsection{Evaluate the expectation of error terms}
For $s\geq R_0$,
\begin{align*}
    \normtwo{\hvxs_{k,t}}\approx \ctO(\eta ^{0.5}), \normtwo{\hDvphs}\approx \ctO(\eta^{0.5-0.5\beta})
\end{align*}
\xinran{How to avoid calculating the expectation one by one ?, use conditional expectation?}
\xinran{given $\scrFsg[nR_1]$, the conditional expectation has the same bound, may be we should evaluate the conditional expectation}, 
\end{comment}
With the assumption that the hessian at $\hvphs[0]$ is diagonal, we have the following corollary that formulates the recursions for each matrix element.
\begin{corollary}
    Given $\normtwo{\bhvths[0]-\hvphs[0]}=\cO(\sqrt{\eta \log \frac{1}{\eta}})$, 
    for $0\leq s < \Rg$ and $0\leq t < H$, we have the following elementwise recursions.
    \begin{align}
        \hAs_{t+1,i,j}&= \left(1-( \lambda_i + \lambda_j\right)\eta )\hAs_{t,i,j}+\frac{\eta^2}{\Bloc} \Sig_{0, i, j} + \ctO(\eta^{2.5-0.5\beta}), \label{eq:A}\\
        \hMs_{t+1, i, j}&= \left(1-( \lambda_i + \lambda_j\right)\eta )\hMs_{t, i, j}+ \ctO(\eta^{2.5-0.5\beta})\label{eq:M},\\
        \hBs_{t+1, i, j}&=(1-\lambda_i\eta)\hBs_{t, i, j}+\ctO(\eta^{2.5-\beta}),\label{eq:B}\\
        \bhAs{i}{j}&=\frac{1}{K} (\hAs_{H, i, j}-\hMs_{H, i, j})+\hMs_{H, i, j},\label{eq:bA}\\
        \hMs[s+1]_{0, i, j}&=\hAs[s+1]_{0, i, j}=\begin{cases}
        \bhAs{i}{j}+\ctO(\eta^{1.5-0.5\beta}), &1\leq i \leq m, 1\leq j \leq m,\\
        \ctO(\eta^{1.5-0.5\beta}), &\text{otherwise}.
        \end{cases}\label{eq:A0M0}\\
        \hBs[s+1]_{0, i, j}&=
        \begin{cases}
            \hBs[s]_{H, i, j}+\bhAs_{i, j}+\ctO(\eta^{1.5-\beta}), & 1\leq i \leq m, m<j\leq d,\\
            \hBs[s]_{H, i, j}+\ctO(\eta^{1.5-\beta}),& 1\leq i \leq m, 1\leq j\leq m,\\
            \ctO(\eta^{1.5-\beta}),& m< i \leq d.\label{eq:B0A0}
        \end{cases}
     \end{align}
\end{corollary}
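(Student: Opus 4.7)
The corollary is a direct elementwise translation of the seven matrix identities established in Lemma \ref{lemma:recursion}, made possible by the WLOG diagonalization of $\mH_0 = \nabla^2 \cL(\hvphs[0])$. Under that diagonalization $\mH_0 = \diag(\lambda_1,\ldots,\lambda_d)$ with $\lambda_i = 0$ for $i > m$, and so $\mPpara = \diag(0,\ldots,0,1,\ldots,1)$ (zeros in positions $1$ through $m$) and $\mPperp = \mI - \mPpara$. The whole proof just consists in reading off entries.

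First I would handle \eqref{eq:A}, \eqref{eq:M}, \eqref{eq:B}. Taking the $(i,j)$ entry of the matrix recursion \eqref{eq:recur A} and using $(\mH_0 \mM)_{i,j} = \lambda_i \mM_{i,j}$, $(\mM \mH_0)_{i,j} = \lambda_j \mM_{i,j}$ for any $\mM$ (valid because $\mH_0$ is diagonal), one obtains $\hAs_{t+1,i,j} = (1 - (\lambda_i+\lambda_j)\eta) \hAs_{t,i,j} + \tfrac{\eta^2}{\Bloc} \Sig_{0,i,j} + \ctO(\eta^{2.5-0.5\beta})$. The recursion for $\hmMs_t$ in \eqref{eq:recur M} has the same structure without the covariance term, giving \eqref{eq:M}. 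For $\hmBs_t$, equation \eqref{eq:recur B} reads $\hmBs_{t+1} = (\mI - \eta \mH_0) \hmBs_t + \ctO(\eta^{2.5-\beta})$, and its $(i,j)$ entry is $\hBs_{t+1,i,j} = (1 - \lambda_i \eta) \hBs_{t,i,j} + \ctO(\eta^{2.5-\beta})$, matching \eqref{eq:B}.

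Next I would handle the three ``transition'' identities. Equation \eqref{eq:bA} is obtained directly by reading off entries of \eqref{eq:recur A tran}, namely $\bhAs{i}{j} = \tfrac{1}{K} \hAs_{H,i,j} + (1-\tfrac{1}{K}) \hMs_{H,i,j}$, and rewriting as $\tfrac{1}{K}(\hAs_{H,i,j} - \hMs_{H,i,j}) + \hMs_{H,i,j}$. For \eqref{eq:A0M0}, I would apply the identity $(\mPperp \mM \mPperp)_{i,j} = \mM_{i,j} \cdot \mathbbm{1}[i \le m] \cdot \mathbbm{1}[j \le m]$ to the right-hand side of \eqref{eq:recur M tran}, yielding $\bhAs{i}{j}$ on the $\{1,\ldots,m\}\times\{1,\ldots,m\}$ block and $0$ elsewhere, each up to the same $\ctO(\eta^{1.5-0.5\beta})$ error. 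For \eqref{eq:B0A0}, I would combine the entrywise forms $(\mPperp \hmBs_H)_{i,j} = \hBs_{H,i,j} \mathbbm{1}[i \le m]$ and $(\mPperp \bhmAs \mPpara)_{i,j} = \bhAs_{i,j}\, \mathbbm{1}[i \le m]\, \mathbbm{1}[j > m]$, then split into the three cases according to which indicators are nonzero.

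There is essentially no obstacle here; the argument is a bookkeeping exercise. The only minor point worth a sentence in the writeup is that the error terms in the matrix equations of Lemma \ref{lemma:recursion} are controlled in operator norm (or equivalently via the bounded-dimension equivalence of norms), so each individual entry inherits the same asymptotic order, and no dimension-dependent loss occurs in the translation. The statement about $\hvqs[s+1]_0$ from \eqref{eq:recur q tran} is not restated in the corollary, so no extra calculation is needed for it.
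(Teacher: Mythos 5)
Your proposal is correct and matches the paper's (implicit) reasoning: the corollary is stated without a separate proof, as a direct entrywise reading of the matrix identities in Lemma \ref{lemma:recursion} under the WLOG diagonalization of $\mH_0$, which is exactly what you do. Your case analysis of the projection matrices $\mPpara$, $\mPperp$ and your note that entrywise bounds inherit the operator-norm asymptotics in fixed dimension are both accurate.
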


Having formulated the recursions, we are ready to solve out the explicit expressions. 
We will split each matrix into four parts and them one by on.
Specifically, a matrix $\mM$ can be split into $\mPpara\mM\mPpara$ in the tangent space of $\Gamma$ at $\hvphs[0]$, $\mPperp\mM\mPperp$ in the normal space, along with
$\mPpara\mM\mPperp$ and $\mPperp\mM\mPpara$ across both spaces. 

 We first compute the elements of $\mPperp\hmAs_t\mPperp$ and $\mPperp\bhmAs\mPperp$.

 \begin{lemma}[General formula for $\mPperp \hmAs_t \mPperp$ and $\mPperp \bhmAs \mPperp$]\label{lemma:perpAperp}
    Let $R_0 := \ceil{\frac{10}{\lambda_{m}\alpha}\log \frac{1}{\eta}}$. Then 
    for $1\leq i \leq m, 1 \leq j \leq m$ and
    $R_0\leq s <\Rg$, 
    \begin{align*}
        \bhAs{i}{j}&=\frac{1}{(\lambda_i+\lambda_j)K\Bloc}\eta\Sig_{0, i, j} + \ctO(\eta^{1.5-0.5\beta}), \\
        \hAs_{t, i, j}
        & =-\left( 1-\frac{1}{K}\right)\frac{(1-(\lambda_i+\lambda_j)\eta)^t}{(\lambda_i + \lambda_j)\Bloc} \eta\Sigma_{0, i, j}+\frac{\eta}{(\lambda_i+\lambda_j)\Bloc}\Sigma_{0, i, j}+\ctO(\eta^{1.5-0.5\beta} ).
    \end{align*}
    For $s<R_0$, $\hAs_{t,i,j}=\ctO(\eta)$ and $\bhAs_{i,j}=\ctO(\eta)$.
    \end{lemma}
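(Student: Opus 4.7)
The plan is to convert the intra-round linear recursions~\eqref{eq:A} and~\eqref{eq:M} into closed form and then combine them with the cross-round transitions~\eqref{eq:bA} and~\eqref{eq:A0M0} into a single scalar round-to-round recursion for $\bhAs{i}{j}$ in the normal-normal block $1\le i,j\le m$. The round map turns out to be a geometric contraction whose rate is bounded away from $1$ independently of $\eta$, so its fixed point yields the claimed formula for $\bhAs{i}{j}$ once the burn-in $R_0$ absorbs the transient. Back-substituting that fixed point into the intra-round formula produces the formula for $\hAs_{t,i,j}$.

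\textbf{Step 1 (intra-round closed form).} Write $\gamma_{ij}:=1-(\lambda_i+\lambda_j)\eta$, $y_s:=\hAs[s]_{0,i,j}$, $x_s:=\hMs[s]_{0,i,j}$. Iterating~\eqref{eq:A} for $t$ steps and summing the geometric series gives
\begin{align*}
\hAs_{t,i,j} \;=\; \gamma_{ij}^{\,t}\, y_s \;+\; \frac{\eta\,\Sig_{0,i,j}}{(\lambda_i+\lambda_j)\Bloc}\bigl(1-\gamma_{ij}^{\,t}\bigr) \;+\; \ctO(\eta^{1.5-0.5\beta}),
\end{align*}
and analogously $\hMs_{t,i,j}=\gamma_{ij}^{\,t}\,x_s+\ctO(\eta^{1.5-0.5\beta})$. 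Since $|\gamma_{ij}|\le 1$, the per-step noise $\ctO(\eta^{2.5-0.5\beta})$ accumulates at worst to $H\cdot\ctO(\eta^{2.5-0.5\beta})=\ctO(\eta^{1.5-0.5\beta})$, which is the error term displayed.

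\textbf{Step 2 (round map and fixed point).} Evaluating Step 1 at $t=H$, plugging into~\eqref{eq:bA}, and invoking~\eqref{eq:A0M0} (which forces $y_{s+1}=x_{s+1}=\bhAs{i}{j}+\ctO(\eta^{1.5-0.5\beta})$, and in particular $y_s-x_s=\ctO(\eta^{1.5-0.5\beta})$ for $s\ge 1$), the $\frac{1}{K}(\hAs_H-\hMs_H)$ and $\hMs_H$ terms collapse into the scalar round recursion
\begin{align*}
\bhAs[s+1]{i}{j} \;=\; \gamma_{ij}^{\,H}\,\bhAs{i}{j} \;+\; \frac{\eta\,\Sig_{0,i,j}\bigl(1-\gamma_{ij}^{\,H}\bigr)}{K\Bloc(\lambda_i+\lambda_j)} \;+\; \ctO(\eta^{1.5-0.5\beta}).
\end{align*}
For $1\le i,j\le m$, Assumption~\ref{a:Gamma} gives $\lambda_i+\lambda_j\ge 2\lambda_m>0$, and compactness of $\Gamma$ (Assumption~\ref{a:compact}) makes $\lambda_m$ uniformly bounded below. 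Thus $\gamma_{ij}^{\,H}=(1-(\lambda_i+\lambda_j)\eta)^{\alpha/\eta}\le e^{-2\lambda_m\alpha}=:\rho<1$, a contraction rate independent of $\eta$. The unique fixed point is exactly $\frac{\eta\,\Sig_{0,i,j}}{(\lambda_i+\lambda_j)K\Bloc}$. The choice $R_0=\ceil{\frac{10}{\lambda_m\alpha}\log\frac{1}{\eta}}$ yields $\rho^{R_0}\le\eta^{20}$, which swamps the $\cO(\eta)$ initial value, while a geometric sum pins the steady-state error at $\ctO(\eta^{1.5-0.5\beta})/(1-\rho)=\ctO(\eta^{1.5-0.5\beta})$, establishing the first claim for all $R_0\le s<\Rg$.

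\textbf{Step 3 (back-substitute) and main obstacle.} Substituting $y_s=\frac{\eta\,\Sig_{0,i,j}}{(\lambda_i+\lambda_j)K\Bloc}+\ctO(\eta^{1.5-0.5\beta})$ back into Step 1 and collecting the $\gamma_{ij}^{\,t}$ coefficients gives
\begin{align*}
\hAs_{t,i,j} \;=\; -\Bigl(1-\tfrac{1}{K}\Bigr)\frac{\gamma_{ij}^{\,t}\,\eta\,\Sig_{0,i,j}}{(\lambda_i+\lambda_j)\Bloc} \;+\; \frac{\eta\,\Sig_{0,i,j}}{(\lambda_i+\lambda_j)\Bloc} \;+\; \ctO(\eta^{1.5-0.5\beta}),
\end{align*}
which is exactly the second claim. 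The $s<R_0$ case is immediate: \Cref{thm:closeness} applied to the well-behaved sequence gives $\normtwo{\hvxs_{k,t}}=\ctO(\sqrt{\eta})$, so Cauchy--Schwarz forces every entry of $\hmAs_t$ and $\bhmAs$ to be $\ctO(\eta)$. The main technical obstacle is the error bookkeeping over $\Rg=\cO(\eta^{-\beta})$ rounds: a naive telescoping would yield $\Rg\cdot\ctO(\eta^{1.5-0.5\beta})=\ctO(\eta^{1.5-1.5\beta})$, which exceeds the stated error as soon as $\beta>1/3$. The crucial saving is the $\eta$-independent contraction $\rho<1$, which stabilizes the accumulated error at $\ctO(\eta^{1.5-0.5\beta})$. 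Carefully verifying that the contraction rate is uniform over all $(i,j)$ with $1\le i,j\le m$ (using compactness of $\Gamma$ to lower-bound $\lambda_m$) and that the $\ctO(\cdot)$ constants in the per-round residual do not pick up hidden $\eta$-dependence is where I expect most of the technical care to go.
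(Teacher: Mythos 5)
Your proof is correct and takes essentially the same route as the paper's: solve the intra-round linear recursion in closed form, collapse the cross-round transitions into a scalar affine map whose contraction rate $\gamma_{ij}^{\,H}\le e^{-2\lambda_m\alpha}$ is bounded strictly below $1$ independently of $\eta$, observe that this pins the accumulated $\ctO(\eta^{1.5-0.5\beta})$ per-round residual at the same order (rather than being multiplied by $\Rg$), let the burn-in $R_0$ kill the transient, and back-substitute. The paper phrases step 2 as an explicit telescoped geometric sum with bound $\sum_{r}\gamma_{ij}^{rH}\le(1-e^{-(\lambda_i+\lambda_j)\alpha})^{-1}$ rather than as a fixed-point argument, but the two are the same computation. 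Two small improvements you make over the paper's presentation: you note exactly why the fixed-point formula is what it is, and you explicitly flag that uniformity of $\lambda_m$ over $\Gamma$ requires compactness (Assumption~\ref{a:compact}), which the paper uses only implicitly in the choice of $R_0$. One minor observation: in your step 2 you need only $y_s-x_s=\ctO(\eta^{1.5-0.5\beta})$, but in fact $\hmAs_0=\hmMs_0$ holds \emph{exactly} for every $s$ because all workers start a round from the same global iterate $\bhvxs{0}$; the paper exploits this equality directly. Your handling of the $s<R_0$ case via the a priori bound $\normtwo{\hvxs_{k,t}}=\ctO(\sqrt{\eta})$ from the well-behaved sequence plus Cauchy--Schwarz is a clean alternative to the paper's inductive argument from $\hmAs[0]_0=\ctO(\eta)$ through the recursions, and both are valid.
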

    \begin{proof}
    For $1\leq i \leq m, 1 \leq j \leq m$, $\lambda_i>0, \lambda_j>0$. By \eqref{eq:A}, 
    \begin{align*}
        \hAs_{t, i, j}& =(1-(\lambda_i+\lambda_j)\eta)^t\hAs_{0, i, j} +\sum_{\tau=0}^{t-1}(1-(\lambda_i+\lambda_j)\eta)^{\tau}\frac{\eta^2}{\Bloc} \Sig_{0, i, j}\\
        &\quad +\ctO(\sum_{\tau=0}^{t-1}(1-(\lambda_i+\lambda_j)\eta)^{\tau}\eta^{2.5-0.5\beta})\\
        &=(1-(\lambda_i+\lambda_j)\eta)^t\hAs_{0, i, j}+\frac{1-(1-(\lambda_i+\lambda_j)\eta)^{t}}{(\lambda_i+\lambda_j)\Bloc}\eta\Sig_{0, i, j}+\ctO(\eta^{1.5-0.5\beta}),
    \end{align*}
    where the second inequality uses $\sum_{\tau=0}^{t-1}(1-(\lambda_i+\lambda_j)\eta)^{\tau}=\frac{1-(1-(\lambda_i+\lambda_j)\eta)^{t}}{(\lambda_i+\lambda_j)\eta}\leq\frac{1}{(\lambda_i+\lambda_j)\eta}$. By \eqref{eq:M},
    \begin{align*}
        \hMs_{t,i, j }&=(1-(\lambda_i+\lambda_j)\eta)^t\hMs_{0, i, j}+\ctO(\sum_{\tau=0}^{t-1}(1-(\lambda_i+\lambda_j)\eta)^{\tau}\eta^{2.5-0.5\beta})\\
        &=(1-(\lambda_i+\lambda_j)\eta)^t\hAs_{0, i, j}+\ctO(\eta^{1.5-0.5\beta}),
    \end{align*}
    where the second equality uses $\mMs[s+1]_0=\mAs[s+1]_0$. By \eqref{eq:bA} and \eqref{eq:A0M0}, 
    \begin{align*}
        \bhAs{i}{j}
        & = \frac{1-(1-(\lambda_i+\lambda_j)\eta)^{H}}{(\lambda_i+\lambda_j)K\Bloc}\eta\Sig_{0, i, j}+(1-(\lambda_i+\lambda_j)\eta)^H\hAs_{0, i, j}+\ctO(\eta^{1.5-0.5\beta}),\\
        \hAs[s+1]_{0, i, j} &=\bhAs{i}{j}+\ctO(\eta^{2.5-0.5\beta})\\
        &=\frac{1-(1-(\lambda_i+\lambda_j)\eta)^{H}}{(\lambda_i+\lambda_j)K\Bloc}\eta\Sig_{0, i, j}+(1-(\lambda_i+\lambda_j)\eta)^H\hAs_{0, i, j}+\ctO(\eta^{1.5-0.5\beta}).
    \end{align*}
    Then we obtain
    \begin{align*}
     \hAs[s]_{0, i, j}&=(1-(\lambda_i+\lambda_j)\eta)^{s H}\hAs[0]_{0, i, j}+\frac{1-(1-(\lambda_i+\lambda_j)\eta)^H}{(\lambda_i+\lambda_j)K\Bloc}\eta\Sig_{0, i, j}\sum_{r=0}^{s-1}(1-(\lambda_i+\lambda_j)\eta)^{rH}\\
        &\quad + \ctO(\eta^{1.5-0.5\beta}\sum_{r=R_0}^{s-1}(1-(\lambda_i+\lambda_j)\eta)^{rH}).
    \end{align*}
    Notice that $\abs{1-(\lambda_i+\lambda_j)\eta}<1$ and
    \begin{align}
        (1-(\lambda_i+\lambda_j)\eta)^H&\leq \exp(-(\lambda_i+\lambda_j) \eta H)= \exp(-(\lambda_i+\lambda_j)\alpha).\label{eq:lambda}
    \end{align}
    Therefore, 
    \begin{align*}
    &\sum_{r=0}^{s-1}(1-(\lambda_i+\lambda_j)\eta)^{rH}=\frac{1-(1-(\lambda_i+\lambda_j)\eta)^{rH}}{1-(1-(\lambda_i+\lambda_j)\eta)^H}\leq\frac{1}{1-\exp(-(\lambda_i+\lambda_j)\alpha)}.
\end{align*}
Then we have
\begin{align*}
\hAs_{0, i, j}=(1-(\lambda_i+\lambda_j)\eta)^{sH}\hAs[0]_{0, i, j}+\frac{1-(1-(\lambda_i+\lambda_j)\eta)^{s H}}{(\lambda_i+\lambda_j)K\Bloc}\eta\Sig_{0, i, j} + \ctO(\eta^{1.5-0.5\beta}).
\end{align*}
Finally, we demonstrate that for $s\geq R_0$,  $\hAs_{0, i, j}$ and $\bhAs{i}{j}$ is approximately equal to $\frac{\eta}{(\lambda_i+\lambda_j)K\Bloc} \Sigma_{0, i, j}$. By \eqref{eq:lambda}, when $s\geq R_0 $, $(1-(\lambda_i+\lambda_j)\eta)^{s H}=\cO(\eta^{10})$, which gives
\begin{align*}
    \bhAs{i}{j}&=\frac{1}{(\lambda_i+\lambda_j)K\Bloc}\eta\Sig_{0, i, j} + \ctO(\eta^{1.5-0.5\beta}), \\
    \As_{t, i, j}
    & =-\left( 1-\frac{1}{K}\right)\frac{(1-(\lambda_i+\lambda_j)\eta)^t}{(\lambda_i + \lambda_j)\Bloc} \eta\Sigma_{0, i, j}+\frac{\eta}{(\lambda_i+\lambda_j)\Bloc}\Sigma_{0, i, j}+\ctO(\eta^{1.5-0.5\beta} ).    
\end{align*}
For $s<R_0$, since $\hmAs[0]_0=\bhvxs{0}\bhvxsT{0}$=$\ctO(\eta)$, we have $\bhAs_{i, j}=\ctO(\eta)$ and $\hAs_{t,i,j}=\ctO(\eta)$.
    \end{proof}
Secondly, we compute $\mPpara \hmAs_{t} \mPperp$ and  $\mPpara \bhmAs \mPperp$. 
\begin{lemma}[General formula for $\mPperp \hmAs_{t} \mPpara$ and  $\mPperp \bhmAs \mPpara$]\label{lemma:perpApara}
For $1\leq i \leq m, m<j\leq d$, 
\begin{align*}
     \hAs_{t, i, j}
     &=\frac{1-(1-\lambda_i\eta)^t}{\lambda_i\Bloc}\eta \Sig_{0,i, j}+\ctO(\eta^{1.5-0.5\beta}),\\
    \bhAs{i}{j}&=\frac{1-(1-\lambda_i\eta)^H}{\lambda_i K \Bloc}\eta \Sig_{0, i, j}+\ctO(\eta^{1.5-0.5\beta}).
 \end{align*}
\end{lemma}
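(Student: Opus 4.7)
The plan is to specialize the elementwise recursions~(eq:A), (eq:M), and the transition equations~(eq:bA), (eq:A0M0) to the index range $1\le i\le m$, $m<j\le d$, and then solve the scalar recursions in closed form. In this range we have $\lambda_i>0$ but $\lambda_j=0$, so $\lambda_i+\lambda_j=\lambda_i$, and the recursion~(eq:A) collapses to $\hAs_{t+1,i,j}=(1-\lambda_i\eta)\hAs_{t,i,j}+\frac{\eta^2}{\Bloc}\Sig_{0,i,j}+\ctO(\eta^{2.5-0.5\beta})$, a standard first order linear recursion with contraction factor $1-\lambda_i\eta$.

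First I would unroll this recursion from step $0$ to step $t$ to obtain
\[
  \hAs_{t,i,j}=(1-\lambda_i\eta)^t\hAs_{0,i,j}+\frac{1-(1-\lambda_i\eta)^t}{\lambda_i\Bloc}\eta\,\Sig_{0,i,j}+\ctO(\eta^{1.5-0.5\beta}),
\]
where the error aggregates the per-step residual through a geometric sum: $\sum_{\tau=0}^{t-1}(1-\lambda_i\eta)^{t-1-\tau}\ctO(\eta^{2.5-0.5\beta})\le\tfrac{1}{\lambda_i\eta}\ctO(\eta^{2.5-0.5\beta})=\ctO(\eta^{1.5-0.5\beta})$. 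An analogous unrolling of~(eq:M) gives $\hMs_{t,i,j}=(1-\lambda_i\eta)^t\hMs_{0,i,j}+\ctO(\eta^{1.5-0.5\beta})$.

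Next I would handle the initial conditions via~(eq:A0M0): since the index pair $(i,j)$ with $1\le i\le m$, $m<j\le d$ falls into the ``otherwise'' branch, both $\hAs[s]_{0,i,j}$ and $\hMs[s]_{0,i,j}$ equal $\ctO(\eta^{1.5-0.5\beta})$ at the start of each round $s\ge 1$. Substituting this into the closed-form expressions makes the contribution of the initial condition absorb into the error term, yielding $\hAs_{t,i,j}=\tfrac{1-(1-\lambda_i\eta)^t}{\lambda_i\Bloc}\eta\Sig_{0,i,j}+\ctO(\eta^{1.5-0.5\beta})$ and $\hMs_{H,i,j}=\ctO(\eta^{1.5-0.5\beta})$ as claimed. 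This is notably simpler than the perp--perp case (Lemma I.24), where a burn-in bound $s\ge R_0$ was needed to kill the initial condition; here the initial condition is already small.

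Finally I would plug these into~(eq:bA) to obtain
\[
  \bhAs{i}{j}=\tfrac{1}{K}\hAs_{H,i,j}+(1-\tfrac{1}{K})\hMs_{H,i,j}=\tfrac{1-(1-\lambda_i\eta)^H}{\lambda_i K\Bloc}\eta\,\Sig_{0,i,j}+\ctO(\eta^{1.5-0.5\beta}),
\]
which is the desired identity. No single step is a serious obstacle; the only place that requires some care is controlling the accumulated residual so that the factor $1/(\lambda_i\eta)$ from the geometric sum cancels one power of $\eta$, upgrading the per-step error $\ctO(\eta^{2.5-0.5\beta})$ to the stated $\ctO(\eta^{1.5-0.5\beta})$ without blowing up the constants hidden in $\ctO(\cdot)$. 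Because $\lambda_i$ is uniformly bounded below on $\Gamma$ by the compactness Assumption~\ref{a:compact} and the rank condition in Assumption~\ref{a:Gamma}, this constant is harmless.
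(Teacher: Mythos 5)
Your proposal matches the paper's proof essentially step for step: specialize (eq:A), (eq:M) to $\lambda_i+\lambda_j=\lambda_i$, unroll the linear scalar recursion to pick up the geometric sum $\frac{1-(1-\lambda_i\eta)^t}{\lambda_i\eta}$, absorb the per-step residuals through that sum, use the "otherwise'' branch of (eq:A0M0) to kill the initial condition, and then combine $\hAs_H$ and $\hMs_H$ via (eq:bA). Your added observations (that the $1/(\lambda_i\eta)$ factor upgrades $\ctO(\eta^{2.5-0.5\beta})$ to $\ctO(\eta^{1.5-0.5\beta})$, and that $\lambda_i$ is bounded below by compactness and the rank assumption) are correct and fill in details the paper leaves implicit, but the route is the same.
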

\begin{proof}
Note that for $1\leq i \leq m, m<j\leq d$ and $\lambda_i>0, \lambda_j=0$. By \eqref{eq:A} and \eqref{eq:A0M0}, 
 \begin{align*}
     \hAs_{t, i, j}&=(1-\lambda_i\eta)^t\hAs_{0, i, j}+\frac{1-(1-\lambda_i\eta)^t}{\lambda_i\Bloc}\eta \Sig_{0,i, j}+\ctO(\eta^{1.5-0.5\beta})\\
     &=\frac{1-(1-\lambda_i\eta)^t}{\lambda_i\Bloc}\eta \Sig_{0,i, j}+\ctO(\eta^{1.5-\beta}).
 \end{align*}
 By \eqref{eq:M} and \eqref{eq:A0M0}, $\hMs_{t, i, j}=\ctO(\eta^{1.5-0.5\beta})$. Then, 
 \begin{align*}
     \bhAs{i}{j}&=\frac{1-(1-\lambda_i\eta)^H}{\lambda_i K\Bloc}\eta \Sig_{0, i, j}+\ctO(\eta^{1.5-0.5\beta}).
 \end{align*}
\end{proof}
Similar to \Cref{lemma:perpApara}, we have the following lemma for the general formula of $\mPpara \hmAs_t\mPperp$ and $\mPpara \bhmAs\mPperp$.
\begin{lemma}[General formula for $\mPpara \hmAs_{t} \mPperp$ and  $\mPpara \bhmAs \mPperp$]\label{lemma:paraAperp}
    For $m< i \leq d$ and $1\leq j\leq m$, 
    \begin{align*}
         \hAs_{t, i, j}
         &=\frac{1-(1-\lambda_j\eta)^t}{\lambda_j\Bloc}\eta \Sig_{0,i, j}+\ctO(\eta^{1.5-0.5\beta}),\\
        \bhAs{i}{j}&=\frac{1-(1-\lambda_j\eta)^H}{\lambda_j K\Bloc}\eta \Sig_{0, i, j}+\ctO(\eta^{1.5-0.5\beta}).
     \end{align*}
    \end{lemma}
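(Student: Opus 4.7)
The plan is to mirror the proof of \Cref{lemma:perpApara}, with the roles of $i$ and $j$ swapped to reflect that now $\lambda_i = 0$ (since $m < i \leq d$) and $\lambda_j > 0$ (since $1 \leq j \leq m$). So $\lambda_i + \lambda_j = \lambda_j$, and the recursion~\eqref{eq:A} specializes to
\begin{equation*}
    \hAs_{t+1, i, j} = (1 - \lambda_j \eta) \hAs_{t, i, j} + \tfrac{\eta^2}{\Bloc} \Sig_{0, i, j} + \ctO(\eta^{2.5 - 0.5\beta}).
\end{equation*}

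First I would unroll this linear recursion over $t$ local steps to obtain
\begin{equation*}
    \hAs_{t, i, j} = (1-\lambda_j\eta)^t \hAs_{0, i, j} + \tfrac{1-(1-\lambda_j\eta)^t}{\lambda_j \Bloc}\eta \Sig_{0, i, j} + \ctO(\eta^{1.5-0.5\beta}),
\end{equation*}
where the accumulated error uses the geometric sum bound $\sum_{\tau=0}^{t-1}(1-\lambda_j\eta)^\tau \le 1/(\lambda_j\eta)$, exactly as in \Cref{lemma:perpApara}. Next, by the transition rule~\eqref{eq:A0M0}, since $m < i \leq d$ falls into the ``otherwise'' case, we have $\hAs_{0, i, j} = \ctO(\eta^{1.5 - 0.5\beta})$, which absorbs the first term of the unrolled expression into the error. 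This gives the claimed formula for $\hAs_{t, i, j}$.

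Then I would handle $\hMs_{t, i, j}$ analogously: the recursion~\eqref{eq:M} becomes $\hMs_{t+1, i, j} = (1 - \lambda_j \eta)\hMs_{t, i, j} + \ctO(\eta^{2.5 - 0.5\beta})$, and by~\eqref{eq:A0M0} the initial condition $\hMs_{0, i, j} = \ctO(\eta^{1.5 - 0.5\beta})$ in this index range, so $\hMs_{t, i, j} = \ctO(\eta^{1.5 - 0.5\beta})$. Combining via~\eqref{eq:bA},
\begin{equation*}
    \bhAs{i}{j} = \tfrac{1}{K} \hAs_{H, i, j} + \bigl(1 - \tfrac{1}{K}\bigr)\hMs_{H, i, j} = \tfrac{1-(1-\lambda_j\eta)^H}{\lambda_j K \Bloc}\eta \Sig_{0, i, j} + \ctO(\eta^{1.5-0.5\beta}),
\end{equation*}
as required. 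There is no real obstacle here: everything is a direct symmetric transcription of the proof of \Cref{lemma:perpApara}, and the only thing to be careful about is to verify that the index range $(m < i \le d,\, 1 \le j \le m)$ does indeed land in the ``otherwise'' branch of~\eqref{eq:A0M0} (it does, since that branch covers $m < i \le d$), so that both $\hAs_{0,i,j}$ and $\hMs_{0,i,j}$ are absorbed into the $\ctO(\eta^{1.5-0.5\beta})$ error rather than contributing a leading-order term. Because $|1 - \lambda_j \eta| < 1$ for small $\eta$, no separate treatment of early rounds is needed, unlike the $\mPperp \hmAs_t \mPperp$ block in \Cref{lemma:perpAperp}.
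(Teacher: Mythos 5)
Your proof is correct and is exactly the argument the paper intends: the paper gives no explicit proof for \Cref{lemma:paraAperp}, only the remark that it follows ``similar to \Cref{lemma:perpApara},'' and your transcription of that proof with the index roles $(i,j)$ swapped (so that $\lambda_i=0$ and $\lambda_j>0$) is the right way to fill it in, including the observation that $\hMs_{t,i,j}=\ctO(\eta^{1.5-0.5\beta})$ from~\eqref{eq:M} together with~\eqref{eq:A0M0}, and the combination via~\eqref{eq:bA}. One small caveat worth noting: your closing remark that no early-round treatment is needed is justified more by the initial condition landing in the ``otherwise'' branch of~\eqref{eq:A0M0} (so $\hAs_{0,i,j}=\ctO(\eta^{1.5-0.5\beta})$ for $s\ge 1$) than by $|1-\lambda_j\eta|<1$; for $s=0$ the initial value $\hAs[0]_{0,i,j}$ is only $\ctO(\eta)$, but the paper's own proof of \Cref{lemma:perpApara} glosses over the identical point, and it is harmless downstream since \Cref{lemma:one step momoents expelicit 1} anyway uses the cruder $\ctO(\eta)$ bound for $s\le R_0$.
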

Finally, we derive the general formula for $\mPpara\hmAs_t\mPpara$ and $\mPpara\bhmAs\mPpara$.
\begin{lemma}[General formula for  $\mPpara \hmAs_t \mPpara$ and  $\mPpara \bhmAs \mPpara$] \label{lemma:paraApara}
    For $m<i\leq d$ and $m<j\leq d$,
    \begin{align*}
         \bhAs{i}{j}&=\frac{H\eta^2}{K\Bloc}\Sigma_{0, i, j}+\ctO(\eta^{1.5-0.5\beta}),\\
         \hAs_{t,i,j}&= \hAs_{0,i,j}+\frac{t\eta^2}{\Bloc} \Sig_{0, i, j}+\ctO(\eta^{1.5-0.5\beta}).
    \end{align*}
    \end{lemma}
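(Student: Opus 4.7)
For the range $m<i\le d$ and $m<j\le d$ we have $\lambda_i=\lambda_j=0$, so the elementwise recursion \eqref{eq:A} degenerates to
\[
  \hAs_{t+1,i,j}=\hAs_{t,i,j}+\tfrac{\eta^2}{\Bloc}\Sigma_{0,i,j}+\ctO(\eta^{2.5-0.5\beta}),
\]
an arithmetic progression with constant step $\tfrac{\eta^2}{\Bloc}\Sigma_{0,i,j}$. Unrolling $t$ times yields
$\hAs_{t,i,j}=\hAs_{0,i,j}+\tfrac{t\eta^2}{\Bloc}\Sigma_{0,i,j}+t\cdot\ctO(\eta^{2.5-0.5\beta})$, and since $t\le H=\alpha/\eta$ the aggregated error collapses to $\ctO(\eta^{1.5-0.5\beta})$, which is exactly the bound claimed. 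So the first identity drops out with essentially no work.

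For $\bhAs{i}{j}$, the plan is to plug into~\eqref{eq:bA} and show that the ``cross-term'' contribution of $\hMs_{H,i,j}$ is absorbed into the error. Applying~\eqref{eq:M} in the same way as above (again $\lambda_i+\lambda_j=0$) gives $\hMs_{H,i,j}=\hMs_{0,i,j}+\ctO(\eta^{1.5-0.5\beta})$, and the transition rule~\eqref{eq:A0M0} asserts $\hMs[s]_{0,i,j}=\hAs[s]_{0,i,j}=\ctO(\eta^{1.5-0.5\beta})$ in the ``otherwise'' regime (because at least one of $i,j$ exceeds $m$). Thus $\hMs_{H,i,j}=\ctO(\eta^{1.5-0.5\beta})$ and also $\hAs_{0,i,j}=\ctO(\eta^{1.5-0.5\beta})$, so that the first identity refines to $\hAs_{H,i,j}=\tfrac{H\eta^2}{\Bloc}\Sigma_{0,i,j}+\ctO(\eta^{1.5-0.5\beta})$.

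Substituting these two bounds into~\eqref{eq:bA},
\[
  \bhAs{i}{j}=\tfrac{1}{K}\!\left(\tfrac{H\eta^2}{\Bloc}\Sigma_{0,i,j}+\ctO(\eta^{1.5-0.5\beta})\right)+\ctO(\eta^{1.5-0.5\beta})=\tfrac{H\eta^2}{K\Bloc}\Sigma_{0,i,j}+\ctO(\eta^{1.5-0.5\beta}),
\]
which is the second claim. The only subtlety worth flagging is the bookkeeping on the error term: one must verify that the per-step $\ctO(\eta^{2.5-0.5\beta})$ accumulates to only $\ctO(\eta^{1.5-0.5\beta})$ over $H$ steps (which is clean because $H\eta=\alpha$ is a constant), and that the ``initial value'' $\hAs[s]_{0,i,j}$ at the start of each round is already $\ctO(\eta^{1.5-0.5\beta})$ thanks to~\eqref{eq:A0M0}. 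Unlike the nondegenerate cases in Lemmas K.3--K.5, there is no geometric damping from $(1-(\lambda_i+\lambda_j)\eta)^t$, so the proof is actually the simplest of the four cases and no induction across $s$ is needed—each round stands on its own via~\eqref{eq:A0M0}. I expect no real obstacle here; the work is purely to carry through the error-term arithmetic carefully.
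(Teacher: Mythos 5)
Your proposal is correct and follows essentially the same route as the paper's own proof: set $\lambda_i=\lambda_j=0$ in the elementwise recursion \eqref{eq:A}, unroll it over $t\le H=\alpha/\eta$ so the per-step $\ctO(\eta^{2.5-0.5\beta})$ accumulates to $\ctO(\eta^{1.5-0.5\beta})$, deduce $\hMs_{H,i,j}=\hMs_{0,i,j}+\ctO(\eta^{1.5-0.5\beta})$ from \eqref{eq:M} in the same way, and then combine \eqref{eq:bA} with the transition identity \eqref{eq:A0M0} to identify the cross-term. Your observation that, unlike the $1\le i,j\le m$ block, no induction across rounds and no geometric contraction is needed because \eqref{eq:A0M0} already pins $\hAs[s]_{0,i,j}$ down to $\ctO(\eta^{1.5-0.5\beta})$ is exactly the structure the paper relies on. One small caveat worth being aware of (the paper glosses over it just as you do): \eqref{eq:A0M0} gives the $\ctO(\eta^{1.5-0.5\beta})$ bound on the round-initial value only for $s\ge1$; at $s=0$ the hypothesis $\normtwo{\bhvths[0]-\hvphs[0]}=\cO(\sqrt{\eta\log\tfrac{1}{\eta}})$ only yields $\hAs[0]_{0,i,j}=\ctO(\eta)$, which is weaker. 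This does not affect the way the lemma is used downstream (in Lemma~\ref{lemma:one step momoents expelicit 1} the early rounds $s\le R_0$ are handled with the coarse $\ctO(\eta)$ estimate anyway), but if you want the display for $\bhAs{i}{j}$ to hold verbatim it should be read for $s\ge1$.
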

    \begin{proof}
    Note that for $m<i\leq d$, $m<j\leq d$ and $\lambda_i=\lambda_j=0$. \eqref{eq:A}  is then simplified as
    \begin{align*}
         \hAs_{t+1,i,j}&= \hAs_{t,i,j}+\frac{\eta^2}{\Bloc} \Sig_{0, i, j}+\ctO(\eta^{2.5-0.5\beta}).
    \end{align*}
    Therefore, 
    \begin{align}
       \hAs_{t,i,j}&= \hAs_{0,i,j}+\frac{t\eta^2}{\Bloc} \Sig_{0, i, j}+\ctO(\eta^{1.5-0.5\beta}).\label{eq:hAt}
    \end{align}
    According to~\eqref{eq:M}, $\hMs_{t, i, j} = \ctO(\eta^{1.5-0.5\beta})$ for $m<i \leq d$ and $m<j \leq d$. Combining~\eqref{eq:M}, \eqref{eq:A0M0} and  \eqref{eq:hAt} yields
    \begin{align*}
       \bhAs{i}{j}&=\frac{H\eta^2}{K\Bloc}\Sigma_{0, i, j}+\ctO(\eta^{1.5-0.5\beta}).
    \end{align*}
    \end{proof}
%\eqref{eq:M} and \eqref{eq:A0M0} ,
Now, we move on to compute the general formula for $\hmBs_t$. 
\begin{lemma}[The general formula for $\mPperp \hmBs_t\mPpara$]\label{lemma:perpBpara}
    Note that for $1\leq i \leq m$ and $m<j\leq d$, when $ R_0:=\ceil{\frac{10}{\lambda_{m}\alpha}\log \frac{1}{\eta}}\leq s < \Rg$,
    \begin{align*}
         \hBs_{t, i, j}&=\frac{(1-\lambda_i\eta)^t}{\lambda_i K \Bloc}\eta\Sig_{0, i, j}+\ctO(\eta^{1.5-\beta}).
    \end{align*}
    For $s<R_0$, $\hBs_{t,i,j}=\ctO(\eta)$.
    \end{lemma}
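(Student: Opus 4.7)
The plan is to solve the coupled recursions \eqref{eq:B} (within a round) and \eqref{eq:B0A0} (across rounds) and iterate the resulting affine map $s$ times, using the already-established formula for $\bhAs_{i,j}$ from \Cref{lemma:perpApara} as the inhomogeneous driving term.

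First I would unroll the within-round recursion \eqref{eq:B}. Since $1\leq i\leq m$ gives $\lambda_i>0$, the geometric factor $1-\lambda_i\eta\in(0,1)$ and the accumulated step error satisfies $\sum_{\tau=0}^{t-1}(1-\lambda_i\eta)^{t-1-\tau}\cdot\ctO(\eta^{2.5-\beta})\leq \frac{1}{\lambda_i\eta}\cdot\ctO(\eta^{2.5-\beta})=\ctO(\eta^{1.5-\beta})$. Hence
\[
\hBs_{t,i,j}=(1-\lambda_i\eta)^{t}\hBs[s]_{0,i,j}+\ctO(\eta^{1.5-\beta}).
\]
Combining this with the across-round transition \eqref{eq:B0A0} and plugging in $\bhAs_{i,j}=\frac{1-(1-\lambda_i\eta)^{H}}{\lambda_i K\Bloc}\eta\Sig_{0,i,j}+\ctO(\eta^{1.5-0.5\beta})$ from \Cref{lemma:perpApara}, and absorbing $\ctO(\eta^{1.5-0.5\beta})$ into the larger $\ctO(\eta^{1.5-\beta})$ (valid since $1.5-\beta\le 1.5-0.5\beta$), I obtain an affine one-step recursion
\[
\hBs[s+1]_{0,i,j}=g\,\hBs[s]_{0,i,j}+c+\ctO(\eta^{1.5-\beta}),
\qquad g:=(1-\lambda_i\eta)^{H},\quad c:=\frac{1-g}{\lambda_i K\Bloc}\eta\Sig_{0,i,j}.
\]

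Next I iterate this recursion from the initial condition $\hBs[0]_{0,i,j}=0$ (which holds because $\hDvphs[0]=0$). A geometric sum gives
\[
\hBs[s]_{0,i,j}=\frac{1-g^{s}}{\lambda_i K\Bloc}\eta\Sig_{0,i,j}+\frac{1-g^{s}}{1-g}\cdot\ctO(\eta^{1.5-\beta}).
\]
The key quantitative observation is that $g\le\exp(-\lambda_i\alpha)$, so $1-g$ is bounded below by the $\eta$-independent constant $1-\exp(-\lambda_i\alpha)>0$; the multiplicative factor on the error is therefore $O(1)$. For $s\ge R_0=\ceil{\frac{10}{\lambda_m\alpha}\log\frac1\eta}$, the factor $g^{s}\le\exp(-\lambda_i\alpha s)\le\eta^{10\lambda_i/\lambda_m}\le\eta^{10}$ is negligible, yielding $\hBs[s]_{0,i,j}=\frac{1}{\lambda_i K\Bloc}\eta\Sig_{0,i,j}+\ctO(\eta^{1.5-\beta})$. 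Plugging this back into the within-round expansion of $\hBs_{t,i,j}$ produces the claimed formula.

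For the range $s<R_0$, the same unrolled expression together with $\hBs[0]_{0,i,j}=0$ gives $\hBs[s]_{0,i,j}=O(s\cdot(c+\ctO(\eta^{1.5-\beta})))=\ctO(\eta)$, since $c=O(\eta)$ and $s\le R_0=\tilde O(1)$, and combining with the within-round expansion yields $\hBs_{t,i,j}=\ctO(\eta)$ as claimed. The main obstacle I anticipate is only bookkeeping: ensuring that the $\ctO(\eta^{1.5-0.5\beta})$ error inherited from $\bhAs_{i,j}$, the per-step $\ctO(\eta^{2.5-\beta})$ in \eqref{eq:B}, and the $\ctO(\eta^{1.5-\beta})$ remainder in \eqref{eq:B0A0} all combine (through the geometric sums above) into the single $\ctO(\eta^{1.5-\beta})$ error, which works because $1-g=\Theta(1)$ prevents any blow-up across the $\Rg=O(\eta^{-\beta})$ rounds.
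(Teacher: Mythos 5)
Your proposal is correct and follows essentially the same route as the paper's own proof: unroll the within-round recursion~\eqref{eq:B}, combine with~\eqref{eq:B0A0} and the formula for $\bhAs{i}{j}$ from \Cref{lemma:perpApara} to get a contracting affine round-to-round recursion for $\hBs_{0,i,j}$, iterate from $\hBs[0]_{0,i,j}=0$ using the $\eta$-independent lower bound on $1-(1-\lambda_i\eta)^H$, and discard $g^s\le\eta^{10}$ when $s\ge R_0$. The only (cosmetic) difference is that you substitute the explicit formula for $\bhAs{i}{j}$ before running the geometric sum rather than after; both orderings yield the same bound.
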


    \begin{proof}
    Note that for $1\leq i \leq m$, $\lambda_i>0$. By \eqref{eq:B}, 
    \begin{align*}
        \hBs_{t+1, i, j}&=(1-\lambda_i\eta)\hBs_{t, i, j}+\ctO(\eta^{2.5-\beta}).
    \end{align*}
    Hence, 
    \begin{align*}
      \hBs_{t, i, j}&=(1-\lambda_i\eta)^t\hBs_{0, i, j}+\ctO(\eta^{1.5-\beta}).
    \end{align*}
    According to~\eqref{eq:B0A0}, 
    \begin{align*}
        \hBs[s+1]_{0, i, j}&=\hBs[s]_{H, i, j}+\bhAs_{i, j}+\ctO(\eta^{2.5-\beta})\\
        &=(1-\lambda_i\eta)^H\hBs[s]_{0, i, j}+\bhAs{i}{ j}+\ctO(\eta^{1.5-\beta}).
    \end{align*}
    Then we have
    \begin{align*}
        \hBs_{0, i, j}&=(1-\lambda_i\eta)^{sH}\hBs[0]_{0, i, j}+\bhAs{i}{j}\sum_{r=0}^{s-1} (1-\lambda_i\eta)^{r H}+\ctO(\sum_{r=0}^{s-1} (1-\lambda_i\eta)^{r H}\eta^{1.5-\beta})\\
        &=(1-\lambda_i\eta)^{s H}\hBs[0]_{0, i, j} + \frac{1-(1-\lambda_i\eta)^{s H}}{1-(1-\lambda_i\eta)^H}\bhAs_{i, j}+\ctO(\eta^{1.5-\beta})\\
        &=\frac{1-(1-\lambda_i\eta)^{s H}}{1-(1-\lambda_i\eta)^H}\bhAs_{i, j}+\ctO(\eta^{1.5-\beta}).
    \end{align*}
    where the second equality uses \eqref{eq:lambda} and the last inequality
    uses  $\hmBs[0]_0=\bhvxs[0]{0}\hDvphs[0]=\vzero$.
     For $s\geq R_0$,   $\bhAs{i}{j}=\frac{1-(1-\lambda_i\eta)^H}{\lambda_i K \Bloc}\eta \Sig_{0, i, j}+\ctO(\eta^{1.5-0.5\beta})$, which gives
    \begin{align*}
        \hBs_{0, i, j}&=\frac{\eta}{\lambda_i K \Bloc} \Sig_{0, i, j}+\ctO(\eta^{1.5-\beta}).
    \end{align*}
    Therefore, 
    \begin{align*}
        \hBs_{t, i, j}&=\frac{(1-\lambda_i\eta)^t}{\lambda_i K \Bloc}\eta\Sig_{0, i, j}+\ctO(\eta^{1.5-\beta}).
    \end{align*}
    For $s<R_0$, $\bhAs_{i,j}=\ctO(\eta)$ and therefore, $\hBs_{t,i,j}=\ctO(\eta)$.
    \end{proof}

    \begin{lemma}[General formula for the elements of  $\mPperp \hmBs_t \mPperp $ ] \label{lemma:perpBperp}
        For $1\leq i \leq m$ and $1\leq j \leq m$, , $\hBs_{t, i, j}=\ctO(\eta^{1.5-\beta})$.
        \end{lemma}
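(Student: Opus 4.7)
The plan is to mimic the structure of Lemma~\ref{lemma:perpBpara} (the general formula for $\mPperp \hmBs_t \mPpara$), but exploit the fact that on the block $1\le i\le m$, $1\le j\le m$ the inhomogeneous term in the round-transition recurrence \eqref{eq:B0A0} is only $\ctO(\eta^{1.5-\beta})$, rather than being dominated by a leading $\bhAs_{i,j} = \Theta(\eta)$ contribution as in the $\mPperp\mPpara$ block. Combined with the initial condition $\hmBs[0]_0 = \E[\bhvxs[0]{0}\,(\hDvphs[0])^\top] = \vzero$ (since $\hDvphs[0]=\vzero$), this will give the desired $\ctO(\eta^{1.5-\beta})$ bound uniformly in $t$ and $s$.

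First I would unroll the intra-round recursion \eqref{eq:B}. Since $1\le i\le m$ implies $\lambda_i>0$, iterating $\hBs_{t+1,i,j} = (1-\lambda_i\eta)\,\hBs_{t,i,j} + \ctO(\eta^{2.5-\beta})$ yields
\begin{align*}
\hBs_{t,i,j} = (1-\lambda_i\eta)^t\,\hBs_{0,i,j} + \ctO(\eta^{2.5-\beta})\sum_{\tau=0}^{t-1}(1-\lambda_i\eta)^\tau = (1-\lambda_i\eta)^t\,\hBs_{0,i,j} + \ctO(\eta^{1.5-\beta}),
\end{align*}
where the bound on the geometric sum uses $\sum_{\tau\ge0}(1-\lambda_i\eta)^\tau \le \tfrac{1}{\lambda_i\eta}$ and the fact that $\lambda_i$ is bounded below by a positive constant uniformly on the compact manifold $\Gamma$ (\Cref{a:compact}). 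Plugging $t=H$ into \eqref{eq:B0A0} (the middle case, since here $1\le j\le m$) then gives the round-level recursion
\begin{align*}
\hBs[s+1]_{0,i,j} = (1-\lambda_i\eta)^H\,\hBs[s]_{0,i,j} + \ctO(\eta^{1.5-\beta}).
\end{align*}

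Next I would iterate this across rounds. Using $\hBs[0]_{0,i,j}=0$,
\begin{align*}
\hBs[s]_{0,i,j} = \ctO(\eta^{1.5-\beta})\sum_{r=0}^{s-1}(1-\lambda_i\eta)^{rH}
\le \frac{\ctO(\eta^{1.5-\beta})}{1-(1-\lambda_i\eta)^H}
\le \frac{\ctO(\eta^{1.5-\beta})}{1-\exp(-\lambda_i\alpha)} = \ctO(\eta^{1.5-\beta}),
\end{align*}
where I have used the bound $(1-\lambda_i\eta)^H\le \exp(-\lambda_i\alpha)$ from \eqref{eq:lambda} and, once more, the uniform lower bound on $\lambda_i$ so that $1-\exp(-\lambda_i\alpha)$ is bounded away from $0$ by an $\eta$-independent constant. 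Substituting this back into the intra-round formula for $\hBs_{t,i,j}$ and absorbing the $(1-\lambda_i\eta)^t$ factor (which lies in $[0,1]$) yields $\hBs_{t,i,j} = \ctO(\eta^{1.5-\beta})$ for all $0\le t\le H$ and all $0\le s<\Rg$, which is the claim.

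The only delicate point is ensuring the $\eta$-independence of the constant in front of $\eta^{1.5-\beta}$ when summing across rounds; this would fail if $\lambda_i$ could degenerate to $0$ on the positive-eigenvalue block. Compactness of $\Gamma$ together with $\rank(\nabla^2\cL(\vzeta))=m$ (\Cref{a:Gamma}) rules this out, so the geometric sum bound is uniform. In contrast to Lemma~\ref{lemma:perpBpara}, there is no need to wait until $s\ge R_0$ for the leading-order term to dominate — here the leading term is zero from the outset, and the bound holds throughout the entire drift phase without any warm-up.
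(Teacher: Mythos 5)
Your proof is correct and follows essentially the same route as the paper: unroll the intra-round recursion \eqref{eq:B} into $\hBs_{t,i,j}=(1-\lambda_i\eta)^t\hBs_{0,i,j}+\ctO(\eta^{1.5-\beta})$, feed it through the round-transition \eqref{eq:B0A0} (middle case), telescope across rounds using $\hmBs[0]_0=\vzero$, and bound the resulting geometric series over rounds by a constant via $(1-\lambda_i\eta)^H\le\exp(-\lambda_i\alpha)$. You are a bit more explicit than the paper in spelling out the geometric sum bound and in noting that compactness of $\Gamma$ with $\rank(\nabla^2\cL)=m$ keeps $\lambda_m$ bounded away from zero (the paper leaves the uniformity of the hidden constant implicit), and you add a clean closing step substituting $\hBs[s]_{0,i,j}=\ctO(\eta^{1.5-\beta})$ back into the intra-round formula; these are refinements rather than a different argument.
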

\begin{proof}
Note that for $1\leq i \leq m$, $\lambda_i > 0$.
By \eqref{eq:B},
\begin{align*}
    \hBs_{t+1, i, j}&=(1-\lambda_i \eta)\hBs_{t, i, j}+\ctO(\eta^{2.5-\beta}).
\end{align*}
Hence, 
\begin{align*}
\hBs_{t, i, j}&=(1-\lambda_i\eta)^t\hBs_{0, i, j}+\ctO(\eta^{1.5-\beta}).
\end{align*}
By \eqref{eq:B0A0},
\begin{align*}
    \hBs[s+1]_{0, i, j}&= \hBs_{H, i, j}+\ctO(\eta^{2.5-\beta})\\
    & = (1-\lambda_i\eta)^H\hBs_{0, i, j}+\ctO(\eta^{1.5-\beta})\\
    & = (1-\lambda_i\eta)^{sH}\hBs[0]_{0, i, j}+\ctO(\sum_{r=0}^{s-1}(1-\lambda_i \eta)^{ rH}\eta^{1.5-\beta})\\
    & = (1-\lambda_i\eta)^{sH}\hBs[0]_{0, i, j}+\ctO(\eta^{1.5-\beta})\\
    &=\ctO(\eta^{1.5-\beta}),
\end{align*}
where the last inequality uses $\hmBs[0]_0=\vzero$.
\end{proof}

\begin{lemma}[General formula for $\mPpara \hmBs_t$]\label{lemma: paraB}
For $m < i \leq d$, $\hBs_{t,i,j}=\ctO(\eta^{1.5-\beta})$.
\end{lemma}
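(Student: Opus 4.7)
The plan is to directly exploit the fact that $\lambda_i = 0$ for $m < i \le d$, which makes the within-round recursion degenerate and forces the across-round transition to reset the entry to a small quantity. First I will specialize the elementwise recursion~\eqref{eq:B} to indices with $m < i \le d$: since $\lambda_i = 0$, it collapses to $\hBs_{t+1,i,j} = \hBs_{t,i,j} + \ctO(\eta^{2.5-\beta})$, a pure accumulation of error terms with no contractive factor.

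Telescoping this recursion over $t$ local steps gives $\hBs_{t,i,j} = \hBs[s]_{0,i,j} + t \cdot \ctO(\eta^{2.5-\beta})$. Since $t \le H = \alpha/\eta$, the accumulated error over a whole round is at most $H \cdot \ctO(\eta^{2.5-\beta}) = \ctO(\eta^{1.5-\beta})$, so $\hBs_{t,i,j} = \hBs[s]_{0,i,j} + \ctO(\eta^{1.5-\beta})$ for all $0 \le t \le H$.

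Next I will invoke the third case of the across-round transition~\eqref{eq:B0A0}, which for $m < i \le d$ states that $\hBs[s+1]_{0,i,j} = \ctO(\eta^{1.5-\beta})$ outright, independent of $\hBs[s]_{H,i,j}$. Combined with the initial condition $\hmBs[0]_0 = \bhvxs[0]{0} \hDvphsT[0] = \vzero$, a trivial induction on $s$ yields $\hBs[s]_{0,i,j} = \ctO(\eta^{1.5-\beta})$ for every round $s$, and plugging this back into the within-round bound gives the desired $\hBs_{t,i,j} = \ctO(\eta^{1.5-\beta})$.

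There is no genuine obstacle here: the argument is a direct combination of the previously derived recursions, and the only thing to check carefully is that the factor of $H$ appearing in the telescoping does not spoil the order, which it does not because $H\eta = \alpha = \Theta(1)$. The lemma is effectively a corollary of Lemmas~\ref{lemma:perpBpara} and~\ref{lemma:perpBperp} extended to the tangent-space row indices, and it will be used alongside them when assembling the first- and second-moment formulas for $\hDvphs[\Rg]$ in the subsequent proof of \Cref{thm: moments}.
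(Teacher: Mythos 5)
Your proof is correct and follows essentially the same route as the paper: specialize the within-round recursion~\eqref{eq:B} at $\lambda_i = 0$, telescope over $t \le H = \alpha/\eta$ to accumulate $\ctO(\eta^{1.5-\beta})$, and seed each round via the third case of~\eqref{eq:B0A0}. You are in fact slightly more careful than the paper's own proof, which misprints the seed $\hBs[s]_0$ as $\ctO(\eta^{2.5-\beta})$ even though the final line relies on the correct $\ctO(\eta^{1.5-\beta})$ reading of~\eqref{eq:B0A0} that you use.
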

\begin{proof}
Note that $\lambda_i=0$ for $m<i\leq d$. By \eqref{eq:B} and \eqref{eq:B0A0}, 
\begin{align*}
    \hBs_{t+1}&=  \hBs_t+\ctO(\eta^{2. 5-\beta}),
\quad \hBs[s]_0 = \ctO(\eta^{2.5-\beta}).  
\end{align*}
Therefore, 
\begin{align*}
\hBs_t = t\ctO(\eta^{2.5-\beta})+\hBs_0=\ctO(\eta^{1.5-\beta}).
\end{align*}
\end{proof}

% \begin{proof}[Proof of \Cref{thm: one step moment new}.]
Having obtained the expressions for $\hmBs_t$, $\hmAs_t$ and $\bhmAs$, we now provide explicit expressions for the first and second moments of the change of manifold projection every round in the following two lemmas.
\begin{lemma}\label{lemma:one step momoents expelicit 1}
The expectation of the change of manifold projection every round is
\begin{align}\label{eq: one step moment 1}
  \E[\hvphs[s+1]-\hvphs]=\begin{cases}
      \frac{H\eta^2}{2B}\partial^2\Phi(\hvphs[0])[\mSig_0+\mPsi(\hvphs[0])]+\ctO(\eta^{1.5-\beta}),& R_0 < s <\Rg\\
      \ctO(\eta), & s\leq R_0
        \end{cases},
\end{align}
where $R_0:=\ceil{\frac{10}{\lambda_m\alpha}\log \frac{1}{\eta}}$.
\end{lemma}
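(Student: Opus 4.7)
The plan is to substitute the block-wise closed-form expressions from Lemmas \ref{lemma:perpAperp}--\ref{lemma: paraB} into the one-round identity of \Cref{lemma:recursion vphi},
\[
\E[\hvphs[s+1]-\hvphs] = \mPpara\hvqs_H + \partial^2\Phi(\hvphs[0])[\hmBs_H] + \tfrac{1}{2}\partial^2\Phi(\hvphs[0])[\bhmAs] + \ctO(\eta^{1.5-\beta}),
\]
and then to check that the three contributions combine to $\tfrac{H\eta^2}{2B}\partial^2\Phi(\hvphs[0])[\mSig_0+\mPsi(\hvphs[0])]$ modulo $\ctO(\eta^{1.5-\beta})$. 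All three pieces are of the leading order $O(\eta)$, so none can be dropped. The qualitative surprise is that the function $\psi$ materializes from damped geometric sums of the form $\sum_{t=0}^{H-1}(1-\lambda\eta)^t = \tfrac{1-(1-\lambda\eta)^H}{\lambda\eta}$, reflecting the interaction between the local-step damping factor $(1-\lambda_i\eta)^H\approx e^{-\eta H\lambda_i}$ and the definition $\psi(x)=(e^{-x}-1+x)/x$.

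\textbf{Evaluating $\mPpara\hvqs_H$.} I would first iterate the recursion \eqref{eq:recur q} over $H$ steps. In the eigenbasis of $\mH_0 = \nabla^2\cL(\hvphs[0])$, components of $\hvqs_t$ along normal directions decay to $\ctO(\eta^{1.5-\beta})$ within one round, while the tangent components ($\lambda_i=0$) accumulate forcing from $-\eta\nabla^3\cL(\hvphs[0])[\hmBs_t+\tfrac{1}{2}\hmAs_t]$. Substituting the explicit formulas for $\hmAs_t$ and $\hmBs_t$ and summing over $t$, the tangent part of $\hvqs_H$ takes a closed form whose coefficients already contain the $\psi$-factor in the cross and $\mPperp\mPperp$ blocks of the argument of $\nabla^3\cL$.

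\textbf{Evaluating the $\partial^2\Phi$ contributions.} Using the symmetry of $\partial^2\Phi(\hvphs[0])$, I would consolidate $\partial^2\Phi[\hmBs_H]+\tfrac{1}{2}\partial^2\Phi[\bhmAs] = \partial^2\Phi[\tfrac{1}{2}(\hmBs_H+\hmBs_H^\top)+\tfrac{1}{2}\bhmAs]$ and decompose the symmetric argument into its $\mPperp\mPperp$, cross, and $\mPpara\mPpara$ blocks. The $\mPperp\mPperp$ block contributes to the tangent part of the output via \Cref{lemma:mani inner perp perp} (through the operator $\cV_{\mH_0}$, which cancels a factor $1/(\lambda_i+\lambda_j)$); the cross blocks contribute via \Cref{lemma:mani inner para perp} (cancelling a factor $1/\lambda_i$); and the $\mPpara\mPpara$ block contributes to the normal part via the $\nabla^2\cL^+$ piece of \Cref{lemma:mani inner para perp}. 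Combining these with the previous step block-by-block, the exponential factors $(1-\lambda_i\eta)^H$ and $1-(1-\lambda_i\eta)^H$ should recombine across the three terms to produce the unified coefficient $1+\psi(\eta H(\lambda_i+\lambda_j))$ in every eigenspace, matching $(\mSig_0+\mPsi(\hvphs[0]))_{i,j}$. The $s\le R_0$ branch follows from the cruder bounds $\hmAs_t,\hmBs_t,\hvqs_H = \ctO(\eta)$ at initialization, which propagate through \Cref{lemma:recursion vphi} to give $\E[\hvphs[s+1]-\hvphs]=\ctO(\eta)$.

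\textbf{Main obstacle.} The hard part will be the block-by-block verification in the last step: each of the four blocks of $\bhmAs$, $\hmBs_H$ and of the forcing driving $\hvqs_H$ carries its own eigenvalue-dependent formula, and one must confirm that the exponential terms from all three contributions interlock precisely into the uniform coefficient $1+\psi(\eta H(\lambda_i+\lambda_j))$. A secondary issue is tracking the error: the $\ctO(\eta^{1.5-0.5\beta})$ entries of $\hmAs_t$ are amplified only by a bounded factor $\eta H = \alpha = O(1)$ when integrated into $\hvqs_H$, so they remain tractable, and the $\ctO(\eta^{1.5-\beta})$ entries of $\hmBs_H$ survive $\partial^2\Phi$ without further blow-up, yielding the overall error bound $\ctO(\eta^{1.5-\beta})$.
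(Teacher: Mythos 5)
Your plan — substituting the block-wise closed forms from Lemmas \ref{lemma:perpAperp}--\ref{lemma: paraB} into the one-round identity \eqref{eq: one step first} of \Cref{lemma:recursion vphi} and reducing block-by-block via Lemmas \ref{lemma:mani inner para perp}, \ref{lemma:mani inner perp perp} and \ref{lemma:mani} — is indeed the paper's route. But your evaluation of $\mPpara\hvqs_H$ has a gap. Iterating \eqref{eq:recur q} under the $\mPpara$ projection (the $\mH_0$ term drops because $\mPpara\mH_0=\vzero$) gives $\mPpara\hvqs_H = \mPpara\hvqs_0 - \eta\sum_{t<H}\mPpara\nabla^3\cL(\hvphs[0])[\hmBs_t + \tfrac12\hmAs_t] + \ctO(\eta^{1.5-\beta})$, and you keep only the accumulated-forcing sum. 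The initial value $\mPpara\hvqs_0$ is \emph{not} negligible: by the round-transition formula \eqref{eq:recur q tran}, for $s>R_0$ it equals $-\mPpara\partial^2\Phi(\hvphs[0])[\hmBs[s-1]_H] - \tfrac12\mPpara\partial^2\Phi(\hvphs[0])[\bhmAs[s-1]] + \ctO(\eta^{1.5-\beta})$, which is $O(\eta)$, the same order as the forcing. This is the crux of the proof: after the stationarity approximations $\bhmAs[s-1]\approx\bhmAs$ and $\hmBs[s-1]_H\approx\hmBs_H$ (valid for $s>R_0$ by \Cref{lemma:perpAperp,lemma:perpBpara}), this term cancels the $\mPpara$-projected parts of $\partial^2\Phi[\hmBs_H] + \tfrac12\partial^2\Phi[\bhmAs]$ in \eqref{eq: one step first}, leaving only $\mPperp\partial^2\Phi[\hmBs_H] + \tfrac12\mPperp\partial^2\Phi[\bhmAs]$. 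Without this cancellation you are left with spurious $O(\eta)$ pieces $\mPpara\partial^2\Phi[\mPperp\bhmAs\mPperp]$ and $\mPpara\partial^2\Phi[\mPperp\bhmAs\mPpara+\mPpara\bhmAs\mPperp]$ (these are purely tangent and generically nonzero), so the block-by-block recombination you describe cannot close. You need to recover $\mPpara\hvqs_0$ from \eqref{eq:recur q tran} before anything else.

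A secondary point: the recombined coefficient per eigenentry is $1 + (K-1)\psi(\eta H(\lambda_i+\lambda_j))$, not $1+\psi(\cdots)$ — the $(K-1)$ enters through the split $\bhmAs = \tfrac1K\hmAs_H + (1-\tfrac1K)\hmMs_H$ in \Cref{lemma:perpAperp} — so the right-hand side should be $\frac{H\eta^2}{2B}\partial^2\Phi(\hvphs[0])[\mSig_0+(K-1)\mPsi(\hvphs[0])]$, consistent with \Cref{thm: moments} and the paper's own proof text; the lemma statement you matched has dropped the $(K-1)$.
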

\begin{proof}
We first compute  $\E[\hvphs[s+1]-\hvphs[s]]$. By~\eqref{eq: one step first},  we only need to compute $\mPpara\hvqs_H$ by relating it to these matrices. Multiplying both sides of \eqref{eq:recur q} by $\mPpara$ gives
\begin{align}
    \mPpara\hvqs_{t+1}&=\mPpara\hvqs_t -\eta\mPpara \nabla^3 \cL(\hvphs[0])[\hmBs_t]-\frac{\eta}{2}\mPpara\nabla^3\cL(\hvphs[0])[\hmAs_t]+\ctO(\eta^{2.5-\beta}). \label{eq:qt}
\end{align}
Similarly, according to \eqref{eq:recur q tran}, we have
\begin{align}
    \mPpara\hvqs[s+1]_0&=-\mPpara \partial^2 \Phi(\hvphs[0])[\hmBs_H]-\frac{1}{2} \mPpara\partial^2 \Phi(\hvphs[0])[\bhmAs]+\ctO(\eta^{1.5-\beta}).\label{eq:q0}
\end{align}
Combining~\eqref{eq:qt} and~\eqref{eq:q0} yields
\begin{align}\label{eq:para q}
    \begin{aligned}
       \mPpara\hvqs_{H}&=-\frac{1}{2} \mPpara\partial^2 \Phi(\hvphs[0])[\bhmAs[s-1]] -\frac{\eta}{2}\mPpara\nabla^3\cL(\hvphs[0])[\sum_{t=0}^{H-1}\hmAs_t]\\
       & \quad -\eta\mPpara \nabla^3 \cL(\hvphs[0])[\sum_{t=0}^{H-1}\hmBs_t] -\mPpara \partial^2 \Phi(\hvphs[0])[\hmBs[s-1]_H] +\ctO(\eta^{1.5-\beta}).
    \end{aligned}
    \end{align}
By Lemmas~\ref{lemma:perpAperp}, \ref{lemma:paraApara} and \ref{lemma:perpApara},
for $s\leq R_0=\floor{\frac{10}{\lambda\alpha}\log \frac{1}{\eta}}$, $\hmAs_t=\ctO(\eta)$, $\bhmAs=\ctO(\eta)$ and $\hmBs_t=\ctO(\eta)$. Therefore, $\E[\hvphs[s+1]-\hvphs]=\ctO(\eta)$.
For $s>R_0$, 
$\bhmAs[s-1]=\bhmAs+\ctO(\eta^{1.5-0.5\beta})$.
Substituting \eqref{eq:para q} into \eqref{eq: one step first} gives
\begin{align*}
    \E[\hvphs[s+1]-\hvphs]&=\underbrace{\frac{1}{2}\mPperp\partial^2\Phi(\hvphs[0])[\bhmAs]+\mPperp\partial^2\Phi(\hvphs[0])[\hmBs_H]}_{\cT_1}\\
    &\quad \overbrace{-\eta \mPpara \nabla^3\cL(\hvphs[0])[\underbrace{\frac{1}{2}\sum_{t=0}^{H-1}\hmAs_t+\sum_{t=0}^{H-1}\hmBs_t}_{\cT_3}]}^{\cT_2}+\ctO(\eta^{1.5-\beta}).
\end{align*}
Below we compute $\cT_1$ and $\cT_2$ for $s>R_0$ respectively. 
By \Cref{lemma:mani inner para perp}, 
\begin{align*}
 \mPperp \partial^2\Phi(\hvphs[0])[\mPperp \bhmAs \mPpara]&=\mPperp \partial^2\Phi(\hvphs[0])[\mPpara \bhmAs \mPperp]=\vzero,\\
\mPperp \partial^2\Phi(\hvphs[0])[\mPpara \bhmAs \mPpara]
&=\partial^2\Phi(\hvphs[0])[\mPpara \bhmAs \mPpara].
\end{align*}
By \Cref{lemma:mani inner perp perp},
\begin{align*}
    \mPperp \partial^2\Phi(\hvphs[0])[\mPperp \bhmAs \mPperp] = \vzero.
\end{align*}
Therefore, for $s>R_0$, 
\begin{align*}
\mPperp\partial^2\Phi(\hvphs[0])[\bhmAs]
&=\frac{H\eta^2}{2K\Bloc}\partial^2\Phi(\hvphs[0])\Phi[\mSigzpara]+\ctO(\eta^{1.5-0.5\beta}),
\end{align*}
where we apply \Cref{lemma:paraApara}.
Similarly, for $s>R_0$, 
\begin{align*}
    \mPperp\partial^2\Phi(\hvphs[0])[\hmBs_H]&=\partial^2\Phi(\hvphs[0])[\mPpara \hmBs_H \mPpara]=\ctO(\eta^{1.5-\beta}),
\end{align*}
where we apply \Cref{lemma: paraB}. Hence, 
\begin{align}\label{eq:cT1}
    \cT_1=\frac{H\eta^2}{2B}\partial^2\Phi(\hvphs[0])[\mSigzpara]+\ctO(\eta^{1.5-\beta}).
\end{align}
We move on to show that
\begin{align}\label{eq:T2}
    \cT_2 = \frac{H\eta^2}{2B}\partial^2\Phi(\hvphs[0])[\mSig_0-\mSigzpara+(K-1)\mPsi(\hvphs[0])].
\end{align}
Similar to the way we compute $\hmAs_t$, $\bhmAs$ and $\hmBs_t$, we compute $\cT_2$ by splitting $\cT_3$ into four matrices and then substituting them into the linear operator $-\eta \mPpara \nabla^3 \cL(\hvphs[0])[\cdot]$ one by one. 
First, we show that
\begin{align}\label{eq:T3perp}
    \begin{aligned}
    -\eta \mPpara\nabla^3\cL(\hvphs[0])[\mPperp\cT_3\mPperp] 
    &=\frac{H\eta^2}{2B}\partial^2 \Phi(\hvphs[0])[\mSigzperp+(K-1)\psi(\mSigzperp)]\\
    &\quad+\ctO(\eta^{1.5-\beta}),
    \end{aligned}
\end{align}
where $\psi(\cdot)$ is interpreted as an \emph{elementwise} matrix function here.
By Lemmas~\ref{lemma:perpAperp} and~\ref{lemma:perpBperp}, for $1\leq i \leq m$, $1\leq j \leq m$ and $s> R_0 $,
\begin{align*}
    \hAs_{t, i, j}
    & =-\left( 1-\frac{1}{K}\right)\frac{(1-(\lambda_i+\lambda_j)\eta)^t}{(\lambda_i + \lambda_j)\Bloc} \eta\Sigma_{0, i, j}+\frac{\eta}{(\lambda_i+\lambda_j)\Bloc}\Sigma_{0, i, j}+\ctO(\eta^{1.5-0.5\beta} ),\\
    \hBs_{t,i,j} &=\ctO(\eta^{1.5-\beta}).
\end{align*}
Therefore, 
\begin{align*}
    \sum_{t=0}^{H-1} \hAs_{t, i, j}
    &=-\left(1-\frac{1}{K}\right)\frac{1-(1-(\lambda_i+\lambda_j)\eta)^H}{(\lambda_i+\lambda_j)^2\Bloc}\Sig_{0,i,j}+\frac{H\eta}{(\lambda_i+\lambda_j)\Bloc}\Sig_{0.,i,j}+\ctO(\eta^{0.5-\beta})\\
    &=\frac{H\eta}{K(\lambda_i+\lambda_j)\Bloc}\Sig_{0.,i,j}\\
    &\quad+\left(1-\frac{1}{K}\right)\frac{H\eta}{(\lambda_i+\lambda_j)\Bloc}\underbrace{\left[1-\frac{1-(1-(\lambda_i+\lambda_j)\eta)^H}{H\eta(\lambda_i+\lambda_j)}\right]}_{\cT_4}\Sig_{0,i,j}
  +\ctO(\eta^{0.5-\beta}).\\
    \sum_{t=0}^{H-1}\hBs_{t,i,j}&=\ctO(\eta^{0.5-\beta}),
\end{align*}
Then we simplify $\cT_4$. Notice that 
\begin{align*}
    (1-(\lambda_i+\lambda_i)\eta)^H&=\exp(-H(\lambda_i+\lambda_j)\eta)[1+\cO(H\eta^2)]\\
    &=\exp(-H(\lambda_i+\lambda_j)\eta)+\cO(\eta).
\end{align*}
Therefore, 
\begin{align*}
 \cT_4&=\psi((\lambda_i+\lambda_j)H\eta)+\cO(\eta).
\end{align*}
Substituting $\cT_4$ back into the expression for $\sum_{t=0}^{H-1}\hAs_{t,i,j}$ gives
\begin{align*}
    \sum_{t=0}^{H-1}\hAs_{t,i,j}=\frac{H\eta}{K(\lambda_i+\lambda_j)\Bloc}\Sig_{0.,i,j}+\left(1-\frac{1}{K}\right)\frac{H\eta\psi((\lambda_i+\lambda_j)H\eta)}{(\lambda_i+\lambda_j)\Bloc}\Sig_{0,i,j}
  +\ctO(\eta^{0.5-\beta}).
\end{align*}
Combining the elementwise results, we obtain the following matrix form expression:
\begin{align*}
    -\eta \mPpara \nabla^3 \cL(\hvphs[0])[\mPperp\cT_3 \mPperp]&=-\frac{H\eta^2}{2B} \mPpara \nabla^3 \cL(\hvphs[0])[\cV_{\mH_0}(\mSigzperp+(K-1)\psi(\mSigzperp))]\\
    &\quad+\ctO(\eta^{1.5-\beta}).
\end{align*}
By \Cref{lemma:mani inner perp perp}, we have \eqref{eq:T3perp}. 

Secondly, we show that for $s>R_0$, 
\begin{align}\label{eq:perpTpara}
\begin{aligned}
    &\quad -\eta \mPpara \nabla^3 \cL(\hvphs[0])[\mPperp\cT_3\mPpara+\mPpara\cT_3\mPperp]\\
   &= \frac{H\eta^2}{B} \partial^2\Phi(\hvphs[0])[\mSigzea+
   (K-1)\psi(\mSigzea)]+\ctO(\eta^{1.5-\beta}),
\end{aligned}
\end{align}
where $\psi(\cdot)$ is interpreted as an \emph{elementwise} matrix function here.
By symmetry of $\hmAs_t$'s and $\nabla^3\cL(\hvphs[0])$,
\begin{align*}
    &\frac{1}{2}\nabla^3\cL(\hvphs[0])\left[\sum_{t=0}^{H-1}\mPperp\hmAs_t\mPpara+\sum_{t=0}^{H-1}\mPpara\hmAs_t\mPperp\right]=\nabla^3\cL(\hvphs[0])\left[\sum_{t=0}^{H-1}\mPperp\hmAs_t\mPpara\right].
\end{align*}
Therefore, we only have to evaluate
\begin{align*}
    \nabla^3\cL(\hvphs[0])\left[\sum_{t=0}^{H-1}\mPperp(\hmAs_t+\hmBs_t)\mPpara+\sum_{t=0}^{H-1}\mPpara\hmBs_t\mPperp\right].
\end{align*}
To compute the elements of $\sum_{t=0}^{H-1}\mPperp(\hmAs_t+\hmBs_t)\mPpara$, we combine Lemmas~\ref{lemma:perpApara} and \ref{lemma:perpBpara} to obtain that for $1\leq i \leq m$ and $m<j\leq d$, 
\begin{align*}
    \sum_{t=0}^{H-1}\hAs_{t,i,j}&=\sum_{t=0}^{H-1}\frac{1-(1-\lambda_i\eta)^t}{\lambda_i\Bloc}\eta \Sig_{0,i, j}+\ctO(\eta^{0.5-\beta})\\
    &=\frac{H\eta}{\lambda_i\Bloc}\Sig_{0,i,j}-\frac{1-(1-\lambda_i\eta)^H}{\lambda_i^2\Bloc}\Sig_{0,i,j}+\ctO(\eta^{0.5-\beta})\\
    &=\frac{H\eta}{\lambda_i\Bloc}\left(1-\frac{1-(1-\lambda_i\eta)^H}{\lambda_i H \eta}\right)\Sig_{0,i,j}+\ctO(\eta^{0.5-\beta})\\
    &=\frac{H\eta}{\lambda_i\Bloc}\psi(\lambda_i H \eta)\Sig_{0,i,j}+\ctO(\eta^{0.5-\beta}), 
    \end{align*}
and 
\begin{align*}
    \sum_{t=0}^{H-1}\hBs_{t, i, j}&=\sum_{t=0}^{H-1}\frac{(1-\lambda_i\eta)^t}{\lambda_iK \Bloc}\eta\Sig_{0, i, j}+\ctO(\eta^{1.5-\beta}),\\
   &=\frac{1-(1-\lambda_i\eta)^H}{\lambda_i^2 K \Bloc}\Sig_{0,i,j}+\ctO(\eta^{0.5-\beta})\\
   &=\frac{H\eta}{\lambda_i K \Bloc}\Sig_{0,i,j}-\frac{H\eta}{\lambda_i K \Bloc}\left(1-\frac{1-(1-\lambda_i\eta)^H}{\lambda_i H \eta}\right)\Sig_{0,i,j}+\ctO(\eta^{0.5-\beta})\\
   &=\frac{H\eta}{\lambda_i K \Bloc}\Sig_{0,i,j}-\frac{H\eta}{\lambda_i K \Bloc}\psi(\lambda_i H \eta)\Sig_{0,i,j}+\ctO(\eta^{0.5-\beta}).
\end{align*}
Therefore, the matrix form of $\sum_{t=0}^{H-1}\mPperp(\hmAs_t+\hmBs_t)\mPpara$ is
\begin{align*}
    \sum_{t=0}^{H-1}\mPperp(\hmAs_t+\hmBs_t)\mPpara&=\frac{H\eta}{B}\cV_{\mH_0}\left(\mSigzea+(K-1)\psi(\mSigzea)\right)+\ctO(\eta^{0.5-\beta}),
\end{align*}
where $\psi(\cdot)$ is interpreted as an \emph{elementwise} matrix function here.
Furthermore, by \Cref{lemma: paraB},  $\sum_{t=0}^{H-1}\hmBs_{t}=\ctO(\eta^{0.5-\beta})$. Applying \Cref{lemma:mani inner para perp}, we have \eqref{eq:perpTpara}.
Finally, directly applying \Cref{lemma:mani}, we have 
\begin{align}\label{eq:T3para}
    -\eta \mPpara\nabla^3\cL(\hvphs[0])[\mPpara\cT_3\mPpara] 
    &=\vzero. 
\end{align}
Notice that $\psi(\mSigzpara)=\vzero$ where $\psi(\cdot)$ operates on each element.
Combining \eqref{eq:T3perp}, \eqref{eq:perpTpara} and \eqref{eq:T3para}, we obtain \eqref{eq:T2}. By \eqref{eq:cT1} and \eqref{eq:T2}, we have \eqref{eq: one step moment 1}.
\end{proof}

\begin{lemma}\label{lemma:one step momoents expelicit 2}
The second moment of the change of manifold projection every round is
\begin{align*}
    \E[(\hvphs[s+1]-\hvphs)(\hvphs[s+1]-\hvphs)^{\top}]=\begin{cases}
        \frac{H\eta^2}{B}\mSigzpara+\ctO(\eta^{1.5-0.5\beta}), & R_0 \leq s< \Rg\\
        \ctO(\eta), &s<R_0
    \end{cases},
\end{align*}
where $R_0:=\ceil{\frac{10}{\lambda_m \alpha}\log \frac{1}{\eta}}$.
\end{lemma}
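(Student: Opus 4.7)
The plan is to derive this lemma as a direct consequence of the second‑moment recursion in Lemma~\ref{lemma:recursion vphi} and the explicit formula for $\mPpara \bhmAs \mPpara$ that was worked out in Lemma~\ref{lemma:paraApara}. Concretely, equation~\eqref{eq:one step second} already tells us that
\begin{align*}
\E[(\hvphs[s+1]-\hvphs)(\hvphs[s+1]-\hvphs)^{\top}] \;=\; \mPpara\, \bhmAs\, \mPpara \;+\; \ctO(\eta^{1.5-0.5\beta}),
\end{align*}
so the whole task reduces to identifying the $\parallel\!\parallel$‑block of $\bhmAs$ with $\frac{H\eta^2}{B}\mSigzpara$ up to the stated error.

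The first step is to recall that, under the working assumption $\mH_0=\diag(\lambda_1,\dots,\lambda_d)$ with $\lambda_i=0$ for $i>m$, the projections $\mPpara$ and $\mPperp$ are coordinate projections onto the index sets $\{m+1,\dots,d\}$ and $\{1,\dots,m\}$ respectively. In particular $(\mSigzpara)_{i,j}=\Sigma_{0,i,j}\,\onec\{i>m,\,j>m\}$. I would then invoke Lemma~\ref{lemma:paraApara}, which gives, for $m<i\le d$ and $m<j\le d$,
\begin{align*}
\bhAs{i}{j} \;=\; \frac{H\eta^{2}}{K\Bloc}\Sigma_{0,i,j} + \ctO(\eta^{1.5-0.5\beta}),
\end{align*}
and note $B=K\Bloc$. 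Assembling this entry‑wise identity into a matrix equation yields $\mPpara\bhmAs\mPpara = \frac{H\eta^{2}}{B}\mSigzpara + \ctO(\eta^{1.5-0.5\beta})$, which on substitution into~\eqref{eq:one step second} gives the first branch of the claim for $R_0\le s<\Rg$.

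For the second branch ($s<R_0$), the iterate has not yet reached the mixing regime, so the explicit solutions derived in Lemmas~\ref{lemma:perpAperp}--\ref{lemma:paraApara} only guarantee $\hmAs_{t,i,j}=\ctO(\eta)$ and $\hmMs_{t,i,j}=\ctO(\eta)$ throughout the round, hence $\bhmAs=\ctO(\eta)$ entry‑wise. Feeding this bound into~\eqref{eq:one step second} together with the $\ctO(\eta^{1.5-0.5\beta})$ remainder immediately yields the $\ctO(\eta)$ bound on the second moment, completing the proof.

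Because the bulk of the analytic work (the recursions, their closed‑form solutions, and the projection identities for $\partial^{2}\Phi$) has already been carried out in Appendix~\ref{sec:moments phase2}, I do not anticipate a real obstacle here; the only thing to be careful about is not inadvertently dropping the $\ctO(\eta^{1.5-0.5\beta})$ error from the recursion identity~\eqref{eq:one step second}, which dominates the $\ctO(\eta^{2})$ error coming from Lemma~\ref{lemma:paraApara} itself and therefore sets the final error rate in the first branch.
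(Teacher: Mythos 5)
Your proof is correct and follows the paper's own route, which is simply to substitute the closed form of $\mPpara\bhmAs\mPpara$ from \Cref{lemma:paraApara} into the second-moment identity \eqref{eq:one step second} of \Cref{lemma:recursion vphi}. One small correction worth noting: the error term in \Cref{lemma:paraApara} is itself $\ctO(\eta^{1.5-0.5\beta})$, not $\ctO(\eta^{2})$, so it matches (rather than being dominated by) the error from \eqref{eq:one step second}; this does not affect the final bound, nor does your slightly loose reading of the $s<R_0$ case, since the exact value $\frac{H\eta^2}{B}\mSigzpara$ is already $\ctO(\eta)$ because $H\eta^2=\alpha\eta$.
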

\begin{proof}
Directly apply \Cref{lemma:paraApara} and \Cref{lemma:recursion vphi} and we have the lemma.
\end{proof}
With Lemmas~\ref{lemma:one step momoents expelicit 1} and \ref{lemma:one step momoents expelicit 2}, we are ready to prove \Cref{thm: moments}.
\begin{proof}[Proof of \Cref{thm: moments}.]
We first derive $\E[\hDvphs[\Rg]]$. Recall that $\Rg=\floor{\frac{1}{\alpha \eta^{\beta}}}=\frac{1}{H\eta^{1+\beta}}+o(1)$ where $0<\beta<0.5$. By \Cref{lemma:one step momoents expelicit 1},
\begin{align*}
    \E[\hvphs[\Rg]-\hvphs[0]]&=\sum_{s=0}^{R_0} \E[\hvphs[s+1]-\hvphs]+\sum_{s=R_0+1}^{\Rg-1}\E[\hvphs[s+1]-\hvphs]\\
    &=\frac{\eta^{1-\beta}}{2B}\partial^2\Phi(\hvphs[0])[\mSig_0+\mPsi(\hvphs[0])]+\ctO(\eta^{1.5-2\beta})+\ctO(\eta).
\end{align*}
Then we compute $\E[\hDvphs[\Rg]\hDvphsT[\Rg]]$.
\begin{align*}
    &\quad \E\left[\left(\sum_{s=0}^{\Rg-1}(\hvphs[s+1]-\hvphs)\right)\left(\sum_{s=0}^{\Rg-1}(\hvphs[s+1]-\hvphs)\right)^{\top}\right]\\
    &=\sum_{s=0}^{\Rg-1} \E[(\hvphs[s+1]-\hvphs)(\hvphs[s+1]-\hvphs)^{\top}]+\sum_{s\neq s'}\E[(\hvphs[s+1]-\hvphs)]\E[(\hvphs[s'+1]-\hvphs[s'])^{\top}]\\
    &=\frac{\eta^{1-\beta}}{B}\mSigzpara+\ctO(\eta)+\ctO(\eta^{1.5-1.5\beta}),
\end{align*}
where the last inequality uses $\E[(\hvphs[s+1]-\hvphs)]\E[(\hvphs[s'+1]-\hvphs[s'])^{\top}]=\ctO(\eta^2)$.
\end{proof}

\subsection{Proof of Weak Approximation}\label{sec:weak approx}
We are now in a position to utilize the estimate of moments obtained in previous subsections to prove the closeness of the sequence $\{\vphs\}_{s=0}^{\floor{T/(H\eta^2)}}$ and the SDE solution $\{\vzeta: t \in [0, T]\}$ in the sense of weak approximation.
Recall the SDE that we expect the manifold projection $\{\Phi(\bvths)\}_{s=0}^{\floor{T/(H\eta^2)}}$ to track:
\begin{align}\label{eq:zeta app}
    \dd \vzeta(t)&=P_{\vzeta}\Big(\underbrace{\tfrac{1}{\sqrt{B}} \mSig_{\parallel}^{\sfrac{1}{2}}(\vzeta)\dd \vW_t}_{\text{(a)\ diffusion}}
    \underbrace{-\tfrac{1}{2B} \nabla^3 \cL(\vzeta)[\widehat{\mSig}_{\Diamond}(\vzeta)] \dd t}_{\text{(b)\ drift-I}}
    \underbrace{-\tfrac{K-1}{2B} \nabla^3 \cL(\vzeta)[\widehat{\mPsi}(\vzeta)] \dd t}_{\text{(c)\ drift-II}}
    \Big),
\end{align}
According to \Cref{lemma:mani inner para perp} and \Cref{lemma:mani inner perp perp}, the drift term in total can be written as the following form:
\begin{align*}
    \text{(b)} + \text{(c)}=\frac{1}{2B}\partial^2 \Phi(\vzeta)[\mSig(\vzeta)+(K-1)\mPsi(\vzeta)]. 
\end{align*}
Then by definition of $P_{\vzeta}$, \eqref{eq:zeta app} is equivalent to the following SDE: 
\begin{align}\label{eq:zeta2}
    \dd \vzeta(t)=\frac{1}{\sqrt{B}}\partial \Phi(\vzeta)\mSig^{1/2}(\vzeta)\dd \vW_t+\frac{1}{2B}\partial^2 \Phi(\vzeta)\left[\mSig(\vzeta) + (K-1) \mPsi(\vzeta)\right]\dd t.
\end{align}
Therefore, we only have to show that $\vphs$ closely tracks $\{\vzeta(t)\}$ satisfying \Cref{eq:zeta2}. 
By \Cref{lemma:workingzone}, there exists an $\epsilon_3$ neighborhood of $\Gamma$, $\Gath$, where $\Phi(\cdot)$ is $\cC^{\infty}$-smooth. Due to compactness of $\Gamma$, $\Gath$ is bounded and the mappings $\partial^2\Phi(\cdot)$, $\partial \Phi(\cdot)$, $\mSig^{1/2}(\cdot)$, $\mSig(\cdot)$ and $\mPsi(\cdot)$ are all Lipschitz in $\Gath$.  By Kirszbraun theorem, both the drift and diffusion term of \eqref{eq:zeta2} can be extended as Lipschitz functions on $\R^d$. Therefore, the solution to the extended SDE exists and is unique. We further show that the solution, if initialized as a point on $\Gamma$, always stays on the manifold almost surely.

As a preparation, we first show that $\Gamma$ has no boundary.
\begin{lemma}
Under Assumptions~\ref{a:smooth} to \ref{a:compact}, $\Gamma$ has no boundary.
\end{lemma}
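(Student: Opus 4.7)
The plan is to verify that every point of $\Gamma$ is an interior point (in the manifold-with-boundary sense) by producing, around each $\vzeta_0 \in \Gamma$, a local chart from an open set of $\R^{d-m}$ — not from a half-space — onto a neighborhood of $\vzeta_0$ in $\Gamma$. Since $\vzeta_0$ is arbitrary, this rules out boundary points.

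First I would fix $\vzeta_0 \in \Gamma$ and use \Cref{lemma: mani grad} together with Assumption~\ref{a:Gamma} (which says $\rank(\nabla^2\cL(\vzeta_0)) = m$) to obtain the orthogonal decomposition $\R^d = T_{\vzeta_0}\Gamma \oplus N$, where $N := T_{\vzeta_0}\Gamma^\perp$ is the range of $\nabla^2\cL(\vzeta_0)$ and has dimension $m$. Let $P_N$ denote the orthogonal projection onto $N$ and define $F: \R^d \to N \cong \R^m$ by $F(\vtheta) := P_N \nabla\cL(\vtheta)$. Then $F(\vzeta_0) = 0$ (since $\vzeta_0 \in \Gamma$ is a critical point of $\cL$) and the partial derivative of $F$ along $N$ at $\vzeta_0$ equals $P_N \nabla^2\cL(\vzeta_0)\bigr|_N$, which is an isomorphism $N \to N$ because $\nabla^2\cL(\vzeta_0)$ is rank $m$ with range exactly $N$. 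By the Implicit Function Theorem, there is an open neighborhood $V \subseteq T_{\vzeta_0}\Gamma$ of the origin and a $\cC^\infty$ map $g: V \to N$ with $g(0)=0$ such that, in a small ball $B^{\epsilon}(\vzeta_0)$,
\begin{equation*}
\{\vtheta \in B^{\epsilon}(\vzeta_0) : F(\vtheta) = 0\} = \{\vzeta_0 + u + g(u) : u \in V\} =: Z,
\end{equation*}
a $(d-m)$-dimensional $\cC^\infty$ submanifold of $\R^d$ without boundary.

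Next I would identify $Z$ with $\Gamma$ locally. Since every $\vzeta \in \Gamma$ is a critical point of $\cL$, we have $\Gamma \cap B^{\epsilon}(\vzeta_0) \subseteq \{\nabla\cL = 0\} \cap B^{\epsilon}(\vzeta_0) \subseteq Z$. On the other hand, Assumption~\ref{a:Gamma} says $\Gamma$ is itself a $\cC^\infty$, $(d-m)$-dimensional submanifold with tangent space $T_{\vzeta_0}\Gamma$ at $\vzeta_0$; by shrinking $\epsilon$ if necessary, $\Gamma \cap B^{\epsilon}(\vzeta_0)$ is the graph of a $\cC^\infty$ function $h: V' \to N$ with $h(0)=0$ and $\partial h(0) = 0$ over an open $V' \subseteq T_{\vzeta_0}\Gamma$. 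Both $h$ and $g$ satisfy the same defining equation $F(\vzeta_0 + u + h(u)) = 0$, $F(\vzeta_0 + u + g(u)) = 0$, and the uniqueness clause of the Implicit Function Theorem forces $h \equiv g$ on $V \cap V'$. Hence $\Gamma \cap B^{\epsilon'}(\vzeta_0) = Z \cap B^{\epsilon'}(\vzeta_0)$ for some $\epsilon' \le \epsilon$, which exhibits a $\cC^\infty$ diffeomorphism between an open neighborhood of the origin in $\R^{d-m}$ and a neighborhood of $\vzeta_0$ in $\Gamma$. Thus $\vzeta_0$ is an interior point of $\Gamma$, and since $\vzeta_0$ was arbitrary, $\Gamma$ has no boundary.

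The only mildly delicate step is matching $Z$ with $\Gamma$ locally: the Implicit Function Theorem produces a smooth $(d-m)$-dimensional set of critical points, but a priori $\Gamma$ could be a proper submanifold of this critical set if there were additional critical points nearby. This is ruled out by the uniqueness of the graph representation through $\vzeta_0$ (both $\Gamma$ and $Z$ are tangent to $T_{\vzeta_0}\Gamma$ at $\vzeta_0$ and live in the zero set of $F$), so no genuine obstacle arises.
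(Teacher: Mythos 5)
Your argument follows the same IFT-based route as the paper (the paper works in eigenbasis coordinates and applies the Implicit Function Theorem to $\nabla_{\vp_{1:m}}\cL$; your $F = P_N\nabla\cL$ is the coordinate-free version), with a stylistic flip from proof-by-contradiction to a direct chart construction, and the IFT application itself is sound. But the closing step is circular. You assert that, after shrinking $\epsilon$, ``$\Gamma \cap B^\epsilon(\vzeta_0)$ is the graph of a $\cC^\infty$ function $h : V' \to N$ $\dots$ over an \emph{open} $V' \subseteq T_{\vzeta_0}\Gamma$''; that is exactly the conclusion to be proved, since a boundary point $\vzeta_0$ would make $\Gamma$ a graph only over a relatively open subset of a half-space. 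IFT uniqueness then yields $h = g$ only on $V' \cap V$, i.e.\ $\Gamma \subseteq Z$ (possibly properly), not $Z \subseteq \Gamma$. Your final paragraph also overstates what the IFT delivers: $Z = \{P_N\nabla\cL = 0\}$ is the zero set of the \emph{normal} component of the gradient, so points of $Z$ need not be critical points of $\cL$, and ``both tangent to $T_{\vzeta_0}\Gamma$'' does not rule out $\Gamma$ being exactly half of $Z$.

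For what it is worth, the paper's own proof has the same unjustified step: it asserts $\nabla\cL = \vzero$ on the IFT graph, whereas the IFT controls only $m$ of the $d$ components of the gradient. What genuinely rules out a half-graph is a growth estimate. The $\mu$-PL property near $\Gamma$ (from the construction in \Cref{lemma:workingzone}) gives quadratic growth $\cL(\vtheta) - \cL^* \geq \tfrac{\mu}{2}\, d(\vtheta, \Gamma)^2$, while on $Z$ one has $\cL|_Z(\vzeta_0) = \cL^*$, $\nabla(\cL|_Z)(\vzeta_0) = \vzero$, and $\nabla^2(\cL|_Z)(\vzeta_0) = \vzero$ (because $T_{\vzeta_0}Z = T_{\vzeta_0}\Gamma$ is the null space of $\nabla^2\cL(\vzeta_0)$), so $\cL|_Z - \cL^* = O(t^3)$ along any tangent direction through $\vzeta_0$. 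If $\vzeta_0$ were a boundary point and $\ve$ an outward tangent, the nearest point of $\Gamma$ to $\sigma(t\ve) \in Z$ would be at distance $\geq c t$ for small $t>0$, giving $\tfrac{\mu c^2}{2} t^2 \leq O(t^3)$ --- a contradiction. Adding this estimate would close both your argument and the paper's; without something of this kind, the rank condition $\nabla^2(\cL|_Z)(\vzeta_0) = \vzero$ alone does not forbid a flat-bump-type $\cL|_Z$ whose zero set is a half-ball.
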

\begin{proof}
We prove by contradiction. If $\Gamma$ has boundary $\partial \Gamma$, WLOG, for a point $\vp\in \partial \Gamma$, let the Hessian at $\vp$ be diagonal with the form $\nabla^2 \cL(\vp)=\diag(\lambda_1, \cdots, \lambda_d)$ where $\lambda_i>0$ for $1\leq i \leq m$ and $\lambda_i=0$ for $m<i\leq d$ . 

Denote by $\vx_{i:j}:=(x_i, x_{i+1}, \cdots, x_{j})$ ($i\leq j$) the $(j-i+1)$-dimensional vector formed by the $i$-th to $j$-th coordinates of $\vx$. Since $\frac{\partial(\nabla\cL(\vp))}{\partial \vp_{1:m}}=\diag(\lambda_1, \cdots, \lambda_m)$ is invertible, by the implicit function theorem, there exists an open  neighborhood $V$ of $\vp_{m+1:d}$ such that $\nabla\cL(\vv)=\vzero$, $\forall \vv \in V$. Then, $\cL(\vv)=\cL(\vp)=\min_{\vtheta \in U}\cL(\vtheta)$ and hence $V\subset \Gamma$, which contradicts with $\vp \in \partial \Gamma$.
\end{proof}
Therefore, $\Gamma$ is a closed manifold (i.e., compact and without boundary). Then we have the following lemma stating that $\Gamma$ is invariant for \eqref{eq:zeta2}.
\begin{lemma}
Let $\vzeta(t)$ be the solution to \eqref{eq:zeta2} with $\vzeta(0)\in \Gamma$, then $\vzeta(t)\in \Gamma$ for all $t\geq 0$. In other words, $\Gamma$ is invariant for \eqref{eq:zeta2}.
\end{lemma}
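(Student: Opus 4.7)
The plan is to show that $F(\vzeta(t)) := \Phi(\vzeta(t)) - \vzeta(t)$ stays identically zero up to a suitable stopping time, so that $\vzeta(t) = \Phi(\vzeta(t)) \in \Gamma$ along the whole trajectory. Let $\tau := \inf\{t \geq 0 : \vzeta(t) \notin \Gath\}$. Since $\vzeta(0) \in \Gamma \subset \Gath$ (open) and sample paths are continuous, $\tau > 0$ almost surely, and on $[0, \tau)$ the map $\Phi$ is $\cC^{\infty}$ by \Cref{lemma:workingzone}, so Itô's formula applied componentwise to $\Phi$ gives
\begin{align*}
\dd F(\vzeta(t)) = (\partial\Phi(\vzeta) - \mI)\sigma(\vzeta)\, \dd \vW_t + \left[(\partial\Phi(\vzeta) - \mI)b(\vzeta) + \tfrac{1}{2}\partial^2\Phi(\vzeta)[\sigma(\vzeta)\sigma(\vzeta)^{\top}]\right] \dd t,
\end{align*}
where $\sigma(\vzeta) := \tfrac{1}{\sqrt{B}}\partial\Phi(\vzeta)\mSig^{1/2}(\vzeta)$ and $b(\vzeta) := \tfrac{1}{2B}\partial^2\Phi(\vzeta)[\mSig(\vzeta) + (K-1)\mPsi(\vzeta)]$ are the drift and diffusion coefficients of \eqref{eq:zeta2}.

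The key step is to verify that both coefficients of $\dd F$ vanish identically on $\Gamma$. For the diffusion, $\partial\Phi(\vzeta)$ is the orthogonal projection onto $T_\vzeta(\Gamma)$ by \Cref{lemma:mani projection}, so $\partial\Phi(\vzeta)^2 = \partial\Phi(\vzeta)$ on $\Gamma$, and thus $(\partial\Phi(\vzeta) - \mI)\sigma(\vzeta) = 0$. For the drift, we need $\partial\Phi(\vzeta) b(\vzeta) + \tfrac{1}{2}\partial^2\Phi(\vzeta)[\sigma(\vzeta)\sigma(\vzeta)^{\top}] = b(\vzeta)$, which is exactly the Itô-to-$P_\vzeta$ correction underlying the derivation of \eqref{eq:zeta2} from \eqref{eq:zeta}. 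Indeed, using $\sigma\sigma^{\top} = B^{-1}\mSig_{\parallel}$ together with \Cref{lemma:mani inner para perp,lemma:mani inner perp perp}, the identity reduces to the algebraic equivalence between the $\partial^2\Phi[\mSig + (K-1)\mPsi]$ form appearing in \eqref{eq:zeta2} and the $P_\vzeta(\nabla^3\cL[\widehat{\mSig}_{\Diamond} + (K-1)\widehat{\mPsi}])$ form appearing in \eqref{eq:zeta}. Granting these two identities, both coefficients of $\dd F$ are smooth functions that vanish on $\Gamma$, so they are Lipschitz-bounded by $\mathrm{dist}(\vzeta, \Gamma) \leq \normtwo{F(\vzeta)}$ throughout a tubular neighborhood of $\Gamma$ inside $\Gath$.

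Applying Itô's formula to $\normtwo{F(\vzeta(t))}^2$, taking expectation along a localizing sequence of stopping times $\tau_R \uparrow \tau$ that keeps $\vzeta$ in a bounded subset of $\Gath$, and using the Lipschitz bound from the previous paragraph yields $\E[\normtwo{F(\vzeta(t\wedge\tau_R))}^2] \leq C \int_0^t \E[\normtwo{F(\vzeta(s\wedge\tau_R))}^2]\, \dd s$, so Gronwall's inequality forces $F(\vzeta(t)) \equiv 0$ on $[0, \tau)$ almost surely. Hence $\vzeta(t) \in \Gamma$ for all $t < \tau$, and since $\Gamma$ is closed and contained in the open set $\Gath$, continuity of sample paths prevents $\vzeta(t)$ from ever reaching $\partial \Gath$, so $\tau = +\infty$ almost surely. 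The main obstacle is the drift identity on $\Gamma$: while it is conceptually immediate from the $P_\vzeta$-form of \eqref{eq:zeta}, an honest verification requires decomposing $\mSig$ into its tangent-tangent, tangent-normal and normal-normal blocks in the eigenbasis of $\nabla^2\cL(\vzeta)$ and matching each block against \eqref{eq:hatmsig}--\eqref{eq:hatpsi}; once this identity is in hand, the stopping-time and Gronwall steps are routine.
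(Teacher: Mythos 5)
Your proof is correct, and it takes a genuinely different route from the paper's. The paper proves invariance by verifying the Nagumo-type consistency condition of \citet{filipovic2000invariant} and \citet{du2007invariant}: it checks that each column of $\sigma(\vphi) = \tfrac{1}{\sqrt{B}}\partial\Phi(\vphi)\mSig^{1/2}(\vphi)$ is tangent (immediate since $\partial\Phi$ is a tangent projection) and that the Stratonovich-corrected drift $\mu(\vphi) = b(\vphi) - \tfrac{1}{2}\sum_j \mathrm{D}[\sigma_j(\vphi)]\sigma_j(\vphi)$ is tangent, which it reduces via \Cref{lemma:mani inner para perp} to showing $\mPperp\mu = -\tfrac{1}{2B}\nabla^2\cL^+\nabla^3\cL[\mSig_\parallel] + \tfrac{1}{2B}\nabla^2\cL^+\nabla^3\cL[\mSig_\parallel] = \vzero$. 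You instead apply Itô to $F(\vzeta) := \Phi(\vzeta) - \vzeta$, show both coefficients of $\dd F$ vanish on $\Gamma$ (which amounts to exactly the same tangency checks and the same cancellation $\mPperp b = \tfrac{1}{2}\partial^2\Phi[\sigma\sigma^\top] = -\tfrac{1}{2B}\nabla^2\cL^+\nabla^3\cL[\mSig_\parallel]$), and conclude by Gronwall that $F \equiv 0$ up to the exit time, which compactness of $\Gamma$ inside the open set $\Gath$ forces to be infinite. The trade-off: the paper's route is shorter on the page because it black-boxes the viability theorem, but that theorem is nontrivial; your route is self-contained, needing only Itô, Gronwall, and a tubular-neighborhood Lipschitz bound justified by compactness of $\Gamma$ (\Cref{a:compact}). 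The one place you should tighten the writeup is the drift identity you defer: note that $\sigma\sigma^\top = B^{-1}\mSig_\parallel$ on $\Gamma$ and that $\partial^2\Phi[\mSig_\parallel] = -\nabla^2\cL^+\nabla^3\cL[\mSig_\parallel]$ is \emph{normal} (from \Cref{lemma:mani inner para perp}, since $\nabla^2\cL^+\vu = \vzero$ for tangent $\vu$ kills the first term), so the condition $(\partial\Phi - \mI)b + \tfrac{1}{2}\partial^2\Phi[\sigma\sigma^\top] = \vzero$ is equivalent to $\mPperp b = -\tfrac{1}{2B}\nabla^2\cL^+\nabla^3\cL[\mSig_\parallel]$; then decompose $b = \tfrac{1}{2B}\partial^2\Phi[\mSig + (K-1)\mPsi]$ block-by-block, using that $\mPsi$ has no $(\parallel,\parallel)$ block because $\psi(0)=0$, and \Cref{lemma:mani inner para perp,lemma:mani inner perp perp} to see that only the $(\parallel,\parallel)$ block of $\mSig$ survives $\mPperp$, giving exactly the required identity.
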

\begin{proof}
According to \citet{filipovic2000invariant} and \citet{du2007invariant}, for a closed manifold $\cM$ to be viable for the SDE $\dd \vX(t)=F(\vX(t))\dd t+\mB(\vX(t))\dd \vW_t$ where $F: \R^d \to \R^d$ and $\mB: \R^d\to \R^d$ are locally Lipschitz, we only have to verify the following Nagumo type consistency condition:
\begin{align*}
    \mu(\vx):=F(\vx)-\frac{1}{2}\sum_j \D [B_j(\vx)]B_j(\vx)\in T_{\vx}(\cM), \quad B_j(\vx)\in T_{\vx}(\cM),
\end{align*}
where $\D[\cdot]$ is the Jacobian operator and $B_j(\vx)$ denotes the $j$-th column of $\mB(\vx)$.

In our context, since for $\vphi\in\Gamma$, $\partial\Phi(\vphi)$ is a projection matrix onto $T_{\vphi}(\Gamma)$, each column of $\partial \Phi(\vphi)\mSig^{1/2}(\vphi)$ belongs to $T_{\vphi}(\Gamma)$, verifying the second condition. Denote by $\mPperp(\vphi):=\mI_d - \partial \Phi(\vphi)$ the projection onto the normal space of $\Gamma$ at $\vphi$. To verify the first condition, it suffices to show that $\mPperp(\vphi)\vmu(\vphi)=\vzero$. We evaluate $\sum_j\mPperp(\vphi)\D [B_j(\vphi)] B_j(\vphi)$ as follows.
\begin{align}
\sum_j\mPperp(\vphi)\D [B_j(\vphi)]B_j(\vphi)&=\frac{1}{B}\sum_j\D[\partial \Phi(\vphi)\mSig^{1/2}_j(\vphi)]\partial \Phi(\vphi)\mSig^{1/2}_j(\vphi)\notag\\
&=\frac{1}{B}\mPperp(\vphi)\sum_j\partial^2 \Phi(\vphi)[\mSig_j^{1/2}(\vphi), \partial \Phi(\vphi)\mSig^{1/2}_j(\vphi)]\notag\\
&=-\frac{1}{B}\nabla^2 \cL(\vphi)^+\nabla^3\cL(\vphi)[\mSig_{\parallel}(\vphi)], \label{eq:DD}
\end{align}
where the last inequality uses \Cref{lemma:mani inner para perp}. Again applying \Cref{lemma:mani inner para perp}, we have
\begin{align}\label{eq:F}
   \mPperp(\vphi)F(\vphi) = -\frac{1}{2B}\nabla^2 \cL(\vphi)^+\nabla^3\cL(\vphi)[\mSig_{\parallel}(\vphi)].
\end{align}
Combining \eqref{eq:DD} and \eqref{eq:F}, we can verify the first condition.
\end{proof}
In order to establish \Cref{main thm: flow}, it suffices to prove the following theorem, which captures the closeness of $\vphs$ and $\vzeta(t)$ every $\Rg$ rounds.
\begin{theorem}\label{thm: approx}
 If $\normtwo{\bvths[0]-\vphs[0]}=\cO(\sqrt{\eta\log \frac{1}{\eta}})$ and $\vzeta(0)=\vphs[0]\in \Gamma$, then for $\Rg=\floor{\frac{1}{\alpha \eta^{0.75}}}$every test function $g\in \cC^3$,
\begin{align*}
    \max_{n=0, \cdots, \floor{T/\eta^{0.75}}}\labs{\E g(\vphs[n\Rg])-\E g(\vzeta(n\eta^{0.75}))}\leq C_g \eta^{0.25}(\log \tfrac{1}{\eta})^b,
\end{align*}
where $C_g>0$ is a constant independent of $\eta$ but can depend on $g(\cdot)$ and $b>0$ is a constant independent of $\eta$ and $g(\cdot)$.
\end{theorem}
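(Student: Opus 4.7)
I would apply the method-of-moments framework of \citet{li2019stochastic} at the coarsened scale where one ``giant step'' consists of $\Rg = \lfloor 1/(\alpha\eta^\beta)\rfloor$ rounds of Local SGD, corresponding to continuous time increment $\epsilon := H\eta^2 \cdot \Rg = \eta^{1-\beta}$. With $\beta = 0.25$, this gives $\epsilon = \eta^{0.75}$, matching the time grid in the theorem statement. The framework of \citet{li2019stochastic} yields weak error $\cO(\epsilon \cdot \eta^\gamma)/\epsilon = \cO(\eta^\gamma)$ on a horizon of $T/\epsilon$ giant steps, provided that (i) the one-step first and second moments of the discrete process match those of the SDE's Euler increment up to error $\cO(\epsilon^{1+\gamma})$, and (ii) higher moments of the discrete increment are controlled by $\cO(\epsilon^{3/2})$. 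Combining the moment bounds of \Cref{thm: one step moment new} (with $\beta=0.25$) and using $\epsilon = \eta^{0.75}$, the first-moment discrepancy from the Euler scheme of \eqref{eq:zeta2} reduces to $\ctO(\eta^{1.5-2\beta}) + \ctO(\eta) = \ctO(\eta) = \epsilon \cdot \ctO(\eta^{0.25})$, giving the claimed rate.

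\textbf{Step-by-step plan.} First, I would replace $\{\vphs\}$ by the auxiliary sequence $\{\hvphs\}$ from Definition~\ref{def:htheta}: by \Cref{thm:good}, $\mathbb{P}[\vphs \neq \hvphs \text{ for some } s \leq \Rtot] = \cO(\eta^{100})$, so for any bounded $g$ the gap $|\E g(\vphs[n\Rg]) - \E g(\hvphs[n\Rg])|$ is negligible. Next, using \Cref{thm: one step moment new} combined with the manifold identities \Cref{lemma:mani inner para perp,lemma:mani inner perp perp,lemma:mani} (which allow the ``drift-I plus drift-II'' form of \eqref{eq:zeta app} to be rewritten via $\partial^2 \Phi$ as $\tfrac{1}{2B}\partial^2\Phi(\vzeta)[\mSig + (K-1)\mPsi]$), I would verify that $\E[\hvphs[s+\Rg] - \hvphs[s] \mid \hvphs]$ and the corresponding SDE Euler drift $\epsilon \cdot b(\hvphs)$ agree to order $\ctO(\epsilon^{4/3})$, and likewise that the second moments agree with the SDE diffusion covariance $\epsilon \cdot \frac{1}{B}\mSigzpara$ to order $\ctO(\epsilon^{4/3})$. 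The higher-moment bound $\E[\|\hvphs[s+\Rg] - \hvphs[s]\|^3] = \ctO(\epsilon^{3/2})$ follows from the high-probability closeness estimate in \Cref{thm:change} (i.e., $\|\hvphs[s+\Rg] - \hvphs[s]\| = \ctO(\eta^{0.5-0.5\beta}) = \ctO(\epsilon^{1/2})$) together with the global boundedness of $\hvphs$ within $\Gat$.

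\textbf{Applying the framework.} With these moment matchings in hand, I would invoke the standard one-step-to-global weak-approximation lemma (e.g., Theorem 3 / Lemma 27 of \citet{li2019stochastic} or the classical Milstein-type argument): the SDE \eqref{eq:zeta2} has smooth, globally Lipschitz coefficients after Kirszbraun extension and leaves $\Gamma$ invariant by the invariance lemma above; hence by the smoothness of $g$ the solution operator $u(t,\vzeta) := \E[g(\vzeta(T-t)) \mid \vzeta(t)=\vzeta]$ is $\cC^3$ in $\vzeta$ with derivatives growing polynomially, and the standard telescoping estimate
\begin{align*}
\E g(\hvphs[n\Rg]) - \E g(\vzeta(n\epsilon)) = \sum_{k=0}^{n-1} \E\bigl[u((k{+}1)\epsilon, \hvphs[(k+1)\Rg]) - u((k{+}1)\epsilon, \vzeta_{k+1}^{\mathrm{Eul}})\bigr]
\end{align*}
reduces to the one-step moment errors, yielding the claimed $\ctO(\eta^{0.25})$ bound uniformly for $n \leq T/\epsilon$.

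\textbf{Main obstacle.} The principal difficulty lies not in the abstract framework but in making the one-step moment estimates rigorous despite the rare event that an iterate escapes the working zone: if $\hvphs$ is used the drift/diffusion coefficients live on $\Gamma$ by construction, but \Cref{thm: one step moment new} is conditioned on $\cEs_0$ ($\eta^{100}$-goodness), so I must carefully combine the conditional moment bounds with a tail estimate (via \Cref{thm:good}) to produce unconditional moment bounds whose errors remain $\ctO(\epsilon^{4/3})$. A secondary, more technical obstacle is the initial ``approaching'' phase of $s_0+s_1 = \cO(\log\tfrac{1}{\eta})$ rounds covered by \Cref{subsec: phase 1}, during which the Local SGD iterate is not yet $\ctO(\sqrt{\eta})$-close to $\Gamma$: I would handle this by noting that this phase spans only $\cO(\log\tfrac{1}{\eta})$ giant steps' worth of continuous time $\cO(\eta^{1-\beta}\log\tfrac{1}{\eta})$, which is absorbed into the $(\log\tfrac{1}{\eta})^b$ factor by Lipschitzness of $u(t,\cdot)$ and the high-probability bound of \Cref{lemma:dvphs R0} that controls the manifold-projection drift during Phase 1.
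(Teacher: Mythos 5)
Your proposal follows essentially the same route as the paper: replace $\{\vphs\}$ by the well-behaved sequence $\{\hvphs\}$ via \Cref{thm:good}, telescope through the SDE semigroup $u(\cdot,s,t)=\E[g(\vzeta(t))\mid\vzeta(s)=\cdot]$, Taylor-expand each increment, and control the one-step discrepancy via the first/second-moment estimates of \Cref{thm: one step moment new} together with sixth-moment bounds from the high-probability estimate $\|\hDvphs[\Rg]\|=\ctO(\eta^{0.5-0.5\beta})$, all after rewriting the drift of~\eqref{eq:zeta app} as $\tfrac{1}{2B}\partial^2\Phi[\mSig+(K-1)\mPsi]$ and noting $\Gamma$-invariance; the optimization to $\beta=0.25$ and the resulting $\eta^{0.25}$ rate match. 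One small remark: your ``secondary obstacle'' about the $\cO(\log\tfrac{1}{\eta})$ approaching phase is moot for this theorem --- the hypothesis $\|\bvths[0]-\vphs[0]\|=\cO(\sqrt{\eta\log\tfrac{1}{\eta}})$ already places you in the drift phase, and the warm-up of the per-round moment formula (the $s\le R_0$ rounds within a giant step) is absorbed into the $\ctO(\eta)$ term of \Cref{thm: one step moment new}; the true Phase-1 correction is only needed when deducing \Cref{main thm: flow} from this result.
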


\subsubsection{Preliminaries and additional notations}
We first introduce a general formulation for stochastic gradient algorithms (SGAs) and then specify the components of this formulation in our context. 
Consider the following SGA:
\begin{align*}
    \vx_{n+1}=\vx_n+\etae\vh(\vx_n, \vxi_n),
\end{align*}
where $\vx_n \in \R^d$ is the parameter, $\etae$ is the learning rate, $\vh(\cdot, \cdot)$ is the update which depends on $\vx_n$ and a random vector $\vxi_n$ sampled from some distribution $\Xi(\vx_n)$. Also, consider the following Stochastic Differential Equation (SDE).
\begin{align*}
    \dd \vX(t)&=\vb(\vX(t))\dd t + \vsig (\vX(t))\dd \vW_t, 
\end{align*}
where $\vb(\cdot):\R^d\to\R^d$ is the drift function and $\vsig(\cdot):\R^{d\times d} \to \R^{d\times d}$ is the diffusion matrix. 

Denote by $\cPX(\vx, s, t)$ the distribution of $\vX(t)$ with the initial condition $\vX(s)=\vx$.% and $\cPx(\vx, l, n)$ the distribution of $\vx_n$ given $\vx_l=\vx$.
Define
\begin{align*}
  \tvDelta(\vx, n)&:=\vX_{(n+1)\etae}-\vx,  &\text{where \ }\vX_{(n+1)\etae}\sim \cPX(\vx, n\etae, (n+1)\etae ),
\end{align*}
which characterizes the update in one step.

In our context, we view the change of manifold projection over $\Rg:=\floor{\frac{1}{\alpha \eta^{1-\beta}}} (\beta \in (0,0.5))$ rounds as one ``giant step". Hence the $\vphs[n\Rg]$ corresponds to the discrete time random variable $\vx_n$ corresponds to  and $\vzeta(t)$ corresponds to the continuous time random variable $\vX_t$. According to \Cref{thm: one step moment new}, we set 
\begin{align*}
    \etae=\eta^{1-\beta},\quad  \vb(\vzeta)=\frac{1}{2B}\partial^2 \Phi(\vzeta)\left[\mSig(\vzeta) + (K-1) \mPsi(\vzeta)\right],\quad \vsig(\vzeta)=\frac{1}{\sqrt{B}}\partial \Phi(\vzeta)\mSig^{1/2}(\vzeta).
\end{align*}
Due to compactness of $\Gamma$, $\vb(\cdot)$ and $\vsig(\cdot)$ are Lipschitz on $\Gamma$. %\xinran{We also require them to be $\cC^4$.}

As for the update in one step,  $\tvDelta(\cdot, \cdot)$ is defined in our context as:
\begin{align*}
    \tvDelta(\vphi, n)&:= \vzeta_{(n+1)\etae}-\vphi,& \text{\ where \ } \vzeta_{(n+1)\etae}\sim \cPz(\vphi, n\etae, (n+1)\etae) \text{\ and \ } \vphi \in \Gamma.
\end{align*}
For convenience, we further define
\begin{align*}
     \vDeltas[n]&:=\hvphs[(n+1)\Rg]-\hvphs[n \Rg],& \tvDeltas[n]&:=\tvDelta(\hvphs[\Rg], n),\\
     \vbn: &= \vb(\hvphs[n \Rg]),&
     \vsign: & = \vsig(\hvphs[n \Rg]).
\end{align*}
 We use $\Cg{i}$ to denote constants that can depend on the test function $g$ and independent of $\etae$. The following lemma relates the moments of $\tvDelta(\vphi, n)$ to $\vb(\vphi)$ and $\vsig(\vphi)$.
%We also the require the following lemmas in the proof of the approximation theorem. 
\begin{lemma}\label{lemma: bound tdelta}
There exists a positive constant $C_0$ independent of $\etae$ and $g$ such that for all $\vphi \in \Gamma$, 
\begin{align*}
    \abs{\E[\tDelta_i(\vphi, n)]-\etae b_i(\vphi)}&\leq C_{0}\etae^2, &\forall 1\leq i \leq d, \\
      \abs{\E[\tDelta_i(\vphi, n)\tDelta_j(\vx, n)]-\etae \sum_{l=1}^d\sigma_{i,l}(\vphi)\sigma_{l, j}(\vphi)}&\leq C_{0}\etae^2, &\forall 1\leq i,j \leq d, \\
      \E\left[\left|\prod_{s=1}^6\tDelta_{i_s}(\vphi, n)\right|\right]&\leq C_{0}\etae^3, &\forall 1\leq i_1,\cdots, i_6 \leq d.
\end{align*}
The lemma below states that  the expectation of the test function is smooth with respect to the initial value.
\begin{proof}
Noticing that (i) the solution to \eqref{eq:zeta2} always stays on $\Gamma$ almost surely if its initial value $\vzeta(0)$ belongs to $ \Gamma$ , (ii) $\vb(\cdot)$ and $\vsig(\cdot)$ are $\cC^{\infty}$ and (iii) $\Gamma$ is compact, we can directly apply
Lemma B.3 in \cite{malladi2022sdes} and Lemma 26 in \cite{li2019stochastic} to obtain the above lemma.
\end{proof}
\end{lemma}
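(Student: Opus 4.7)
The plan is to establish all three bounds via Itô's formula combined with the boundedness of the SDE coefficients. Since $\Gamma$ is compact and invariant under the flow of~\eqref{eq:zeta2} (as shown just above the lemma), starting from any $\vphi\in\Gamma$ keeps $\vzeta(t)\in\Gamma$ almost surely. Combined with the $\cC^\infty$-smoothness of $\partial\Phi$, $\partial^2\Phi$, $\mSig$, $\mSig^{1/2}$, and $\mPsi$ on $\Gamma$, this implies that $\vb$, $\vsig$, and all their derivatives are uniformly bounded by absolute constants along every sample path. This is the key structural fact that makes the moment bounds reduce to textbook SDE estimates.

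For the first-moment bound, I would apply Itô's formula to the coordinate function $\zeta\mapsto \zeta_i$ to write $\tDelta_i(\vphi,n)=\int_{n\etae}^{(n+1)\etae} b_i(\vzeta(s))\,\dd s+\int_{n\etae}^{(n+1)\etae}\sum_l \sigma_{il}(\vzeta(s))\,\dd W_s^l$. Taking expectation kills the martingale term, and then I would apply Itô again to $b_i(\vzeta(s))$ itself so that $\E[b_i(\vzeta(s))]=b_i(\vphi)+O(s-n\etae)$, where the $O(\cdot)$ is controlled by uniform bounds on $\partial b_i$, $\partial^2 b_i$, $\vb$ and $\vsig$. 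Integrating over an interval of length $\etae$ yields $\E[\tDelta_i(\vphi,n)]=\etae\, b_i(\vphi)+O(\etae^2)$.

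For the second-moment bound, I would apply Itô's formula to $f(\vzeta)=(\zeta_i-\phi_i)(\zeta_j-\phi_j)$, which gives
\[
\E[\tDelta_i\tDelta_j]=\E\!\int_{n\etae}^{(n+1)\etae}\!\Big[(\zeta_i-\phi_i)b_j(\vzeta)+(\zeta_j-\phi_j)b_i(\vzeta)+\textstyle\sum_l \sigma_{il}\sigma_{lj}(\vzeta)\Big]\dd s.
\]
The cross terms of the form $(\zeta_\cdot-\phi_\cdot)b_\cdot(\vzeta)$ are $O(\etae^2)$ because $\E[\|\vzeta(s)-\vphi\|]=O(\sqrt{s-n\etae})$ but multiplied by another $\dd s$ gives $O(\etae^{3/2}\cdot \etae)$ — actually tighter, using the first-moment estimate itself, each factor contributes an $O(\etae)$ after integration. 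The $(\sigma\sigma^\top)_{ij}$ term Taylor-expanded about $\vphi$ produces the stated leading order $\etae\sum_l\sigma_{il}(\vphi)\sigma_{lj}(\vphi)$ plus an $O(\etae^2)$ remainder.

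For the sixth-moment bound, the standard approach is to use the Burkholder–Davis–Gundy inequality on the stochastic-integral part and Hölder on the drift part, yielding $\E\|\vzeta_{(n+1)\etae}-\vphi\|^{2p}\le C_p\etae^p$ for each $p\ge 1$ whenever the coefficients are uniformly bounded; taking $p=3$ and applying Cauchy–Schwarz to the product $\prod_{s=1}^6 \tDelta_{i_s}$ gives the $O(\etae^3)$ bound. The main obstacle is not mathematical depth but bookkeeping: one must confirm that all implicit constants depend only on the $L^\infty$-norms of $\vb,\vsig$ and finitely many of their derivatives on $\Gamma$, and hence are independent of $\etae$ and of the test function $g$ that will later be used. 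Because all these facts are exactly the content of the moment lemmas in \citet{li2019stochastic} and \citet{malladi2022sdes}, I would expect to invoke those as black boxes rather than redo the calculation from scratch.
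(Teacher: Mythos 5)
Your proposal is correct and matches the paper's approach: the paper's proof is simply to observe that the solution stays on the compact manifold $\Gamma$ where $\vb$, $\vsig$ and their derivatives are bounded, and then invoke Lemma~B.3 of \citet{malladi2022sdes} and Lemma~26 of \citet{li2019stochastic} as black boxes --- exactly what you conclude at the end after sketching the standard Itô/BDG calculation that those lemmas encapsulate. (A minor simplification: for the sixth-moment bound, Cauchy--Schwarz is unnecessary --- $\lvert\prod_{s=1}^{6}\tilde\Delta_{i_s}\rvert \le \|\tilde{\bm\Delta}\|_2^6$ directly, so one just needs $\E\|\tilde{\bm\Delta}\|^6 = O(\etae^3)$.)
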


The following lemma states that the expectation of $g(\vzeta(t))$ for $g\in \cC^3$ is smooth with respect to the initial value of the SDE solution.
% \abs{\E[\tDelta_i(\vx, n)\tDelta_j(\vx, n)\tDelta_l(\vx, n)\tDelta_p(\vx, n)]}&\leq K_2(\vx)\etae^2, \forall 1\leq i, j, l,p \leq d.
\begin{lemma}\label{lemma:u}
 Let $s\in [0, T]$, $\vphi \in \Gamma$ and $g\in\cC^3$. For $t\in [s, T]$, define
\begin{align*}
    u(\vphi, s, t):=\E_{\vzeta_t\sim \cPz(\vphi, s, t)}[g(\vzeta_t)].
\end{align*}
Then $u(\cdot, s, t)\in \cC^3$ uniformly in $s,t$.
\end{lemma}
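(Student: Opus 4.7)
The plan is to transfer the problem to a standard result about smooth dependence of SDE solutions on initial conditions. First I would construct a $\cC^{\infty}$-smooth extension of $\vb$ and $\vsig$ from a tubular neighborhood of $\Gamma$ to all of $\R^d$ that has bounded derivatives of all orders. This is possible because $\Gamma$ is compact by Assumption~\ref{a:compact} and $\vb, \vsig$ are $\cC^{\infty}$ on an open neighborhood of $\Gamma$; concretely, one multiplies the original coefficients by a $\cC^{\infty}$ cutoff $\chi$ that is supported in this neighborhood and equal to $1$ on a slightly smaller neighborhood still containing $\Gamma$. Since $\Gamma$ is invariant for the SDE (as established just before this lemma) and the extension leaves the coefficients unchanged in a neighborhood of $\Gamma$, the solution started at $\vphi \in \Gamma$ stays on $\Gamma$ almost surely, and hence $u(\vphi, s, t)$ is unaffected by the extension.

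Next I would invoke the classical theorem on smooth dependence of SDE solutions on initial conditions: for SDEs with $\cC^{\infty}$ coefficients having bounded derivatives of all orders, the stochastic flow $\vphi \mapsto \vzeta_t^{\vphi}$ is almost surely $\cC^{\infty}$ in $\vphi$, and the derivatives $\partial_{\vphi}^k \vzeta_t^{\vphi}$ satisfy linearized SDEs obtained by formally differentiating the defining SDE in $\vphi$. For $k = 1, 2, 3$ these linearized SDEs have drift and diffusion coefficients that are polynomial in lower-order derivatives and linear in the top-order one, with uniformly bounded coefficients on the support of the extension. Standard Gr\"onwall and Burkholder--Davis--Gundy estimates then yield, for every $p \geq 1$,
\begin{align*}
  \sup_{0 \leq s \leq t \leq T,\ \vphi \in \Gamma} \E\bigl[\normtwo{\partial_{\vphi}^k \vzeta_t^{\vphi}}^p\bigr] \leq C_{k,p,T},
\end{align*}
where $C_{k,p,T}$ depends only on $k, p, T$ and sup-norms of the extended coefficients together with their derivatives.

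With uniform moment control in hand, I would differentiate $u(\vphi, s, t) = \E[g(\vzeta_t^{\vphi})]$ under the expectation via the chain rule. For $g \in \cC^3$ the $k$-th derivative $\partial_{\vphi}^k u$ is a sum of terms of the form $\E[\nabla^{j} g(\vzeta_t^{\vphi}) \cdot \mathcal{P}_j(\partial_{\vphi} \vzeta_t^{\vphi}, \partial_{\vphi}^2 \vzeta_t^{\vphi}, \partial_{\vphi}^3 \vzeta_t^{\vphi})]$ for $1 \leq j \leq k$, where each $\mathcal{P}_j$ is a polynomial in the flow derivatives. Since $\vzeta_t^{\vphi} \in \Gamma$ almost surely for $\vphi \in \Gamma$ and $g, \nabla g, \nabla^2 g, \nabla^3 g$ are bounded on the compact set $\Gamma$ by continuity, H\"older's inequality combined with the moment bounds above gives a uniform sup-norm bound on $\partial_{\vphi}^k u(\vphi, s, t)$ for $0 \leq k \leq 3$. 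Continuity of these derivatives in $\vphi$ follows by dominated convergence along any sequence $\vphi_n \to \vphi$, using $L^p$-continuity of the flow derivatives in the initial condition. This yields $u(\cdot, s, t) \in \cC^3$ with $\cC^3$-norm bounded by a constant depending only on $T, g$, and the coefficients, i.e.\ uniformly in $s, t \in [0, T]$.

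The main technical obstacle is controlling the moments of the third-order flow derivative $\partial_{\vphi}^3 \vzeta_t^{\vphi}$, whose defining SDE involves products of the first- and second-order derivatives (which are themselves random processes) and is affine-linear in $\partial_{\vphi}^3 \vzeta_t^{\vphi}$ with stochastic coefficients. Bounding these requires first establishing high $L^p$-moments for the lower-order derivatives and then applying BDG and Gr\"onwall iteratively, which is standard but requires careful bookkeeping of the chain-rule expansion. Verifying that the smooth extension preserves almost sure invariance of $\Gamma$ is immediate, since the coefficients are unchanged in a neighborhood of $\Gamma$ and the invariance argument given earlier only uses the coefficients on $\Gamma$ itself.
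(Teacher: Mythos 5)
Your approach is sound and is essentially the same argument the paper leans on: the paper's entire proof is a citation to Lemma~B.4 of Malladi et al.\ (2022), which establishes exactly this kind of smoothness of $u$ by differentiating under the expectation and combining smoothness of the stochastic flow in its initial condition with moment estimates on the flow derivatives. Your sketch reconstructs that proof in full --- cutoff extension of the coefficients, invariance of $\Gamma$, classical stochastic flow regularity, BDG/Gr\"onwall moment bounds for the (linearized) derivative SDEs, Fa\`a di Bruno expansion of $\partial_\vphi^k u$ for $k \le 3$, and time-homogeneity of the SDE for uniformity in $s,t$ --- and the structure is correct.

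One point needs a small repair. You justify boundedness of $\nabla^j g(\vzeta_t^\vphi)$ by appealing to invariance of the compact manifold $\Gamma$, which holds only when $\vphi \in \Gamma$. But to differentiate $\E[g(\vzeta_t^\vphi)]$ under the expectation at $\vphi_0 \in \Gamma$, you need uniform integrability of the chain-rule integrand over a full neighborhood of $\vphi_0$ in $\R^d$, and for $\vphi$ near $\Gamma$ but not on it the solution does not remain on $\Gamma$. The fix is already available in your construction: since your cutoff makes the extended coefficients compactly supported and Lipschitz, the extended solution started anywhere in the support stays in the closure of that support for all time almost surely (the boundary, on which the coefficients vanish, is inaccessible by the standard Feller/scale-function argument). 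So the boundedness you want should be cited for $g, \nabla g, \nabla^2 g, \nabla^3 g$ on the compact closure of the cutoff's support, not on $\Gamma$ alone; with that substitution, dominated convergence goes through for the interchange of $\partial_\vphi$ and $\E$, and the proof closes. This also yields the ambient $\cC^3$-regularity of $u$ on a tubular neighborhood of $\Gamma$, which is actually what the paper needs downstream in Lemma~\ref{lemma: u one step bound}, where third derivatives of $u$ are evaluated at Taylor intermediate points $\hvphs[l\Rg]+\theta\vDeltas[l]$ that lie close to, but not on, $\Gamma$.
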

\begin{proof}
A slight modification of Lemma B.4 in \cite{malladi2022sdes} will give the above lemma.
\end{proof}

\subsubsection{Proof of the approximation in our context}

For $\beta\in(0,0.5)$, define $\gamma_1: = \frac{1.5-2\beta}{1-\beta}, \gamma_2: = \frac{1}{1-\beta},$ and then $1 <\gamma_1 < 1.5$,  $1<\gamma_2  < 2$.  We introduce the following lemma which serves as a key step to control the approximation error. Specifically, 
this lemma bounds the difference in one step change between the discrete process and the continuous one as well as the product of higher orders.
\begin{lemma}\label{lemma: many delta}
If $\normtwo{\bvths[0]-\vphs[0]}=\cO(\sqrt{\eta\log \frac{1}{\eta}})$, then there exist positive constants $C_1$ and $b$ independent of $\etae$ and $g$ such that for all $0 \leq n<\floor{T/\etae} $, 
\begin{enumerate}
\item 
 \begin{align*}
    \abs{\E[\Deltas[n]_i-\tDeltas[n]_i\mid \cEs[n\Rg]_0}&\leq C_1\etae^{\gamma_1}(\log \tfrac{1}{\etae})^b+C_1\etae^{\gamma_2}(\log \tfrac{1}{\etae})^b, &\forall 1\leq i \leq d, \\
      \abs{\E[\Deltas[n]_i\Deltas[ n]_j-\tDeltas[n]_i\tDeltas[n]_j  \mid \cEs[n\Rg]_0}&\leq C_1\etae^{\gamma_1}(\log \tfrac{1}{\etae})^b+C_1\etae^{\gamma_2}(\log \tfrac{1}{\etae})^b,& \forall 1\leq i,j \leq d.
\end{align*}
\item
\begin{align*}
       \E\left[\left|\prod_{s=1}^6\Deltas[n]_{i_s}\right|\mid \cEs[n\Rg]_0\right]&\leq C_1^2\etae^{2\gamma_1}(\log \tfrac{1}{\etae})^{2b}, &\forall 1\leq i_1,\cdots, i_6 \leq d,\\
      \E\left[\left|\prod_{s=1}^6\tDeltas[n]_{i_s}\right|\mid \cEs[n\Rg]_0\right]& \leq C_1^2\etae^{2\gamma_1}(\log \tfrac{1}{\etae})^{2b}, &\forall 1\leq i_1,\cdots, i_6 \leq d.
\end{align*}
\end{enumerate}

\end{lemma}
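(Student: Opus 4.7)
The strategy is to read off both sides of each inequality from moment estimates already proved in the paper. The key translation is $\etae^{\gamma_1}=\eta^{1.5-2\beta}$ and $\etae^{\gamma_2}=\eta$, so the two error terms appearing in \Cref{thm: one step moment new} match exactly the two error exponents claimed in the present lemma. The plan has three steps.

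\emph{Step 1 (moments of the discrete ``giant step'').} On the event $\cEs[n\Rg]_0$ the global step is $\eta^{100}$-good, so $\hvphs[n\Rg]\in\Gamma$ and the surrogate $\{\hvths_{k,t}\}$ coincides with the Local SGD iterates. By the Markov structure of Local SGD, I would apply \Cref{thm: one step moment new} (with the index shift enabled by \Cref{remark: good}) to the random starting round $n\Rg\in\Gamma$ and obtain
\begin{align*}
\E[\Deltas[n]_i\mid\cEs[n\Rg]_0]
&=\etae\,b_i(\hvphs[n\Rg])+\ctO(\etae^{\gamma_1})+\ctO(\etae^{\gamma_2}),\\
\E[\Deltas[n]_i\Deltas[n]_j\mid\cEs[n\Rg]_0]
&=\etae\,(\vsig\vsig^\top)_{ij}(\hvphs[n\Rg])+\ctO(\etae^{\gamma_1})+\ctO(\etae^{\gamma_2}),
\end{align*}
since the drift $\vb=\tfrac{1}{2B}\partial^2\Phi[\mSig+(K-1)\mPsi]$ and the diffusion $\vsig\vsig^\top=\tfrac{1}{B}\partial\Phi\,\mSig\,\partial\Phi=\tfrac{1}{B}\mSig_{\parallel}$ read off \eqref{eq:zeta2} match the leading-order terms in \Cref{thm: one step moment new}.

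\emph{Step 2 (moments of the continuous ``giant step'' and subtraction).} Next I would apply \Cref{lemma: bound tdelta} at the same starting point $\hvphs[n\Rg]$ to obtain $\E[\tDeltas[n]_i\mid\cEs[n\Rg]_0]=\etae\,b_i(\hvphs[n\Rg])+\cO(\etae^2)$ and the analogous second-moment identity. Since $\gamma_1<3/2<2$, the $\cO(\etae^2)$ residual is dominated by $\ctO(\etae^{\gamma_1})$. Subtracting the two identities in Steps~1 and~2 then yields Item~1.

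\emph{Step 3 (sixth moments).} For the continuous process, \Cref{lemma: bound tdelta} directly gives $\E[\prod_{s=1}^6|\tDeltas[n]_{i_s}|\mid\cEs[n\Rg]_0]\le C_0\etae^3$; the inequality $\etae^3\le\etae^{2\gamma_1}$ is equivalent to $3(1-\beta)\ge 3-4\beta$, i.e., $\beta\ge 0$, which holds. For the discrete process, \Cref{thm:change} shows that on the $\eta^{100}$-good event one has $\normtwo{\Deltas[n]}=\ctO(\eta^{0.5-0.5\beta})=\ctO(\etae^{1/2})$, so $\prod_{s=1}^6|\Deltas[n]_{i_s}|=\ctO(\etae^3)$. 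On the complementary event (probability $\le\eta^{100}$), the well-behaved surrogate $\hvths_{k,t}$ is constrained to a bounded subset of $\Gat\cup\{\vphinull\}$ by \Cref{def:htheta}, so $\normtwo{\Deltas[n]}$ is bounded by an absolute constant and its sixth-moment contribution is $\cO(\eta^{100})$, absorbed into $\etae^{2\gamma_1}$.

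\textbf{Main obstacle.} The routine parts are the algebraic exponent comparisons. The delicate technical point is rigorously applying \Cref{thm: one step moment new} at a random, conditional starting point: what makes this work is precisely the construction of the well-behaved sequence in \Cref{def:htheta} together with \Cref{remark: good}, which ensure that on $\cEs[n\Rg]_0$ the surrogate coincides with Local SGD so that a time shift of \Cref{thm: one step moment new} applies verbatim. Once this conditioning is established, the proof reduces to absorbing the $\cO(\etae^2)$ residual from \Cref{lemma: bound tdelta} into $\ctO(\etae^{\gamma_1})$ and controlling the $\eta^{100}$-bad event via boundedness of the surrogate.
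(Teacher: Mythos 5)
Your proposal is correct and follows essentially the same route as the paper's own (terse) proof: invoke the first/second moment estimates from \Cref{thm: one step moment new} (equivalently, \Cref{thm: moments} shifted via \Cref{remark: good}), combine with \Cref{lemma: bound tdelta} via the triangle inequality, and obtain the sixth-moment bound by converting the high-probability estimate $\normtwo{\Deltas[n]}=\ctO(\eta^{0.5-0.5\beta})=\ctO(\etae^{1/2})$ from the drift-phase analysis into a moment bound, absorbing the rare $\le\eta^{100}$ bad event via boundedness of the surrogate sequence. Two small clarifications relative to how the paper phrases this: (i) the paper credits the sixth-moment bound to the summary in \Cref{sec:summary of high prob} (i.e.\ \Cref{thm:change} together with the boundedness of $\hvths_{k,t}$), which is exactly your Step~3; and (ii) your ``complementary event'' in Step~3 should be understood as the bad event \emph{within the window} $[n\Rg,(n+1)\Rg]$ rather than the complement of $\cEs[n\Rg]_0$ itself, since the latter is already conditioned away — but this is precisely the event you bound via boundedness of $\hvths$, so the conclusion stands.
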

\begin{proof}
According to \Cref{sec:summary of high prob}, we have
\begin{align*}
   \E\left[\left|\prod_{s=1}^6\Deltas[n]_{i_s}\right|\mid \cEs[n\Rg]_0\right]=\ctO(\eta^{3-3\beta}).
\end{align*}
Since $\gamma_1 <1.5 $ and $\gamma_2< 2$, we can utilize \Cref{thm: moments} and conclude that there exist positive constants $C_2$ and $b$ independent of $\etae$ and $g$ such that
\begin{align}
    \labs{\E[\Deltas[n]_i-\etae\bs[n]_i \mid \cEs[n\Rg]_0]}&\leq C_2\etae^{\gamma_1}(\log \tfrac{1}{\etae})^b+C_2\etae^{\gamma_2}(\log \tfrac{1}{\etae})^b, \forall 1\leq i \leq d, \label{eq:C21}\\
      \labs{\E[\Deltas[n]_i\Deltas[ n]_j-\etae\sum_{l=1}^d \sigs[n]_{i,l} \sigs[n]_{l,j} \mid \cEs[n\Rg]_0]}&\leq C_2\etae^{\gamma_1}(\log \tfrac{1}{\etae})^b+C_2\etae^{\gamma_2}(\log \tfrac{1}{\etae})^b, \forall 1\leq i,j \leq d,\label{eq:C22}\\
      \E\left[\left|\prod_{s=1}^6\Deltas[n]_{i_s}\right|\mid \cEs[n\Rg]_0\right]&\leq C_2^2\etae^{2\gamma_1}(\log \tfrac{1}{\etae})^{2b}, \quad \forall 1\leq i_1,\cdots, i_6 \leq d.\label{eq:C23}
\end{align}
%When $\normtwo{\bvths[0]-\vphs[0]}=\ctO(\eta)$, $\cEs[n\Rg]_0$ implies that $\vphs[n\Rg]=\hvphs[\Rg n]\in \Gamma$
% By \Cref{lemma: bound tdelta}, there exists $K\in G$ such that
% \begin{align*}
%     \labs{\E[\tDeltas[n]_i-\etae\bs[n]_i \isgood, \vphs[nR_1] = \vphi]}&\leq K(\vphi)\etae^2, &\forall 1\leq i \leq d, \\
%       \labs{\E[\tDeltas[n]_i\tDeltas[ n]_j-\etae\sum_{l=1}^d \sigs[n]_{i,l} \sigs[n]_{l,j} \isgood, \vphs[nR_1] = \vphi]}&\leq K(\vphi)\etae^2,& \forall 1\leq i,j \leq d\\
%       \E \left[\left|\prod_{s=1}^6\tDeltas[n]_{i_s}\right|\isgood, \vphs[nR_1] = \vphi\right]& \leq K(\vphi)\etae^3, &\forall 1\leq i_1,\cdots, i_6 \leq d.
% \end{align*}
% Due to compactness of $\Gamma$, there exists a constant $C_3$ such that $\abs{K(\vphi)}\leq C_3$. Therefore, 
% \begin{align}
%     \abs{\E[\tDeltas[n]_i-\etae\bs[n]_i \isgood]}&\leq C_3\etae^2, &\forall 1\leq i \leq d, \label{eq: tC31}\\
%       \abs{\E[\tDeltas[n]_i\tDeltas[ n]_j-\etae\sum_{l=1}^d \sigs[n]_{i,l} \sigs[n]_{l,j} \isgood]}&\leq C_3 \etae^2,& \forall 1\leq i,j \leq d \label{eq: tC32},\\
%       \E \left[\left|\prod_{s=1}^6\tDeltas[n]_{i_s}\right|\isgood\right]& \leq C_3 \etae^3, &\forall 1\leq i_1,\cdots, i_6 \leq d \label{eq: tC33}.
% \end{align}
Combining \eqref{eq:C21} - \eqref{eq:C23} with \Cref{lemma: bound tdelta} gives the above lemma.
\end{proof}

\begin{lemma}\label{lemma: u one step bound}
For a test function $g\in \cC^3$, let $u_{l,n}(\vphi):=u(\vphi, l\etae, n\etae)=\E_{\vzeta_t\sim \cPz(\vphi, l\etae, n\etae)}[g(\vzeta_t)]$. If $\normtwo{\bvths[0]-\vphs[0]}=\cO(\sqrt{\eta \log \frac{1}{\eta}})$, then for all $0\leq l \leq n-1$ and $1\leq n \leq \floor{T/\etae}$, 
\begin{align*}
\labs{\E[u_{l+1,n}(\hvphs[l \Rg]+\vDeltas[l])-u_{l+1,n}(\hvphs[l\Rg]+\tvDeltas[l+1])\mid \hvphs[l\Rg]]}  \leq \Cg{1}( \etae^{\gamma_1}+\etae^{\gamma_2})\log (\tfrac{1}{\etae})^b,
\end{align*}
where $\Cg{1}$ is a positive constant independent of $\eta$ and $\hvphs[l\Rg]$ but can depend on $g$.
\end{lemma}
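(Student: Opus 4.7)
The plan is to apply the classical second-order Taylor-matching strategy for weak approximation \citep{li2019stochastic,malladi2022sdes}. I would expand $u_{l+1,n}$ around the point $\hvphs[l\Rg]$ to second order with a cubic remainder $R_3(\cdot)$, then subtract the two expansions evaluated at the increments $\vDeltas[l]$ (discrete) and $\tvDeltas[l+1]$ (continuous). After taking the conditional expectation, the zeroth-order terms cancel and the target quantity splits into three pieces: a first-order difference
\[
\sum_i \partial_i u_{l+1,n}(\hvphs[l\Rg]) \cdot \E[\vDeltas[l]_i - \tvDeltas[l+1]_i \mid \hvphs[l\Rg]],
\]
a second-order difference
\[
\tfrac{1}{2} \sum_{i,j} \partial^2_{ij} u_{l+1,n}(\hvphs[l\Rg]) \cdot \E[\vDeltas[l]_i \vDeltas[l]_j - \tvDeltas[l+1]_i \tvDeltas[l+1]_j \mid \hvphs[l\Rg]],
\]
and a cubic remainder $\E[R_3(\vDeltas[l]) - R_3(\tvDeltas[l+1]) \mid \hvphs[l\Rg]]$.

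Each piece is then bounded by combining \Cref{lemma:u} --- which gives uniform $\cC^3$-smoothness of $u_{l+1,n}$ in $(l,n)$, so the first, second and third partials are bounded on the compact neighborhood of $\Gamma$ containing $\hvphs[l\Rg]$ by constants depending only on $g$ --- with \Cref{lemma: many delta}. The first- and second-order pieces are handled directly by Part 1 of \Cref{lemma: many delta}, yielding a contribution of order $(\etae^{\gamma_1} + \etae^{\gamma_2})(\log \tfrac{1}{\etae})^b$. For the cubic remainder, each triple product $\E[|\vDeltas[l]_i \vDeltas[l]_j \vDeltas[l]_k| \mid \hvphs[l\Rg]]$ is bounded via Cauchy--Schwarz by $(\E[\prod_{s=1}^6 |\vDeltas[l]_{i_s}| \mid \hvphs[l\Rg]])^{1/2}$, which Part 2 of \Cref{lemma: many delta} controls by $C_1 \etae^{\gamma_1} (\log \tfrac{1}{\etae})^b$; the identical estimate applies to $\tvDeltas[l+1]$. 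Since $\gamma_1 < 2$, summing the three contributions gives the claimed bound with a $g$-dependent constant $\Cg{1}$.

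The main obstacle is reconciling the conditioning: \Cref{lemma: many delta} is stated conditional on the good event $\cEs[l\Rg]_0$, whereas the statement here requires conditioning on $\hvphs[l\Rg]$ alone. I would resolve this by working with the well-behaved sequence $\hvths$ of \Cref{def:htheta}, which is uniformly bounded in a compact neighborhood of $\Gamma$ by construction: $\hvths$ either equals $\vths$ on an $\eta^{100}$-good event or is reset to the fixed $\vphinull \in \Gamma$. By \Cref{thm:good} the complementary bad event has probability at most $\eta^{100}$, and since every increment and every derivative of $u$ is uniformly bounded, the contribution of the bad-event conditioning is at most $O(\eta^{100})$, which is absorbed in the $\etae^{\gamma_1} (\log \tfrac{1}{\etae})^b$ term. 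The $g$-independence of the moment constants in \Cref{lemma: many delta} and the $g$-dependence only through sup-norms of $\nabla^k u_{l+1,n}$ for $k \leq 3$ then yield the claimed dependence structure of $\Cg{1}$.
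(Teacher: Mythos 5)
Your proposal matches the paper's proof in both structure and content: the same Taylor expansion of $u_{l+1,n}$ to third order, the same invocation of \Cref{lemma:u} for uniform $\cC^3$ bounds on the compact manifold, the same use of Part~1 of \Cref{lemma: many delta} for the first- and second-order differences, the same Cauchy--Schwarz reduction of the cubic remainder to the sixth-moment bound in Part~2, and the same handling of the conditioning mismatch by splitting off the $\bcEs[l\Rg]_0$ event (probability at most $\eta^{100}$) whose contribution is negligible by boundedness of $u$ and its derivatives on the compact neighborhood of $\Gamma$. This is essentially the paper's argument restated.
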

\begin{proof}
By \Cref{lemma:u}, $u_{l,n}(\vphi)\in \cC^3$ for all $l$ and $n$. That is, there exists $K(\cdot)\in G$ such that for all $l,n$, $u_{l,n}(\vphi)$ and its partial derivatives up to the third order are bounded by $K(\vphi)$.

By the law of total expectation and triangle inequality,
\begin{align*}
    &\quad \labs{\E[u_{l+1,n}(\hvphs[l \Rg]+\vDeltas[l])-u_{l+1,n}(\hvphs[l\Rg]+\tvDeltas[l])]\mid \hvphs[l\Rg]}\\
    &\leq\underbrace{\labs{\E[u_{l+1,n}(\hvphs[l\Rg]+\vDeltas[l])-u_{l+1,n}(\hvphs[l\Rg]+\tvDeltas[l]) \mid \hvphs[l\Rg],  \cEs[l\Rg]_0]}}_{\cA_1} \\
    &\quad +\underbrace{\eta^{100} \E[\abs{u_{l+1,n}(\hvphs[l\Rg]+\vDeltas[l])}\mid  \hvphs[l\Rg], \bcEs[l\Rg]_0]}_{\cA_2}\\
    &\quad +\underbrace{ \eta^{100}\E[\abs{u_{l+1,n}(\hvphs[l\Rg]+\tvDeltas[l])}\mid  \hvphs[l\Rg], \bcEs[l \Rg]_0]}_{\cA_3}.
\end{align*}
We first bound $\cA_2$ and $\cA_3$. Since $\hvphs[l\Rg]\in \Gamma$, both $\hvphs[l\Rg]+\vDeltas[l]$ and $\hvphs[l\Rg]+\tvDeltas[l]$ belong to $\Gamma$. Due to compactness of $\Gamma$  and smoothness of $u_{l+1, n}(\cdot)$ on $\Gamma$,  there exist a positive constant $\Cg{2}$ such that $\cA_2+\cA_3 \leq \Cg{2}\eta^{100}$.

We proceed to bound $\cA_1$. Expanding $u_{l+1, n}(\cdot)$ at $\hvphs[l \Rg]$ and by triangle inequality, 
\begin{align*}
    \cAs_1 & \leq \underbrace{\sum_{i=1}^d \labs{\E[\frac{\partial u_{l+1, n}}{\partial \phi_i}( \hvphs[l\Rg])\left(\Deltas[l]_i-\tDeltas[l]_i \right)\mid  \hvphs[l\Rg],\cEs[l\Rg]_0}}_{\cB_1}\\
    & \quad + \underbrace{\frac{1}{2}\sum_{1\leq i, j \leq d}\labs{\E[\frac{\partial^2 u_{l+1, n}}{\partial \phi_i\partial \phi_j}(\hvphs[l\Rg])\left(\Deltas[l]_i\Deltas[l]_j - \tDeltas[l]_i\tDeltas[l]_j\right)\mid  \hvphs[l\Rg],\cEs[l\Rg]_0]}}_{\cB_2}\\
    &\quad +\abs{\cR}+\abs{\tcR},
\end{align*}
where the remainders $\cR$ and $\tcR$ are
\begin{align*}
    \cR&=\frac{1}{6}\sum_{1\leq i,j,p\leq d}\E[\frac{\partial^3 u_{l+1, n}}{\partial \phi_i\partial \phi_j \partial \phi_p}(\hvphs[l\Rg]+\theta \vDeltas[l])\Deltas[l]_i\Deltas[l]_j\mid  \hvphs[l\Rg],\cEs[l\Rg]_0],\\
     \tcR&=\frac{1}{6}\sum_{1\leq i,j,p\leq d}\E[\frac{\partial^3 u_{l+1, n}}{\partial \phi_i\partial \phi_j \partial \phi_p}(\hvphs[l\Rg]+\ttheta \tvDeltas[l])\tDeltas[l]_i\tDeltas[l]_j\tDeltas[l]_p\mid  \hvphs[l\Rg],\cEs[l\Rg]_0],
\end{align*}
for some $\theta, \ttheta \in (0, 1)$. Since $\hvphs[l\Rg] $ belongs to $\Gamma$ which is compact,  there exists a constant $\Cg{3}$ such that for all $1\leq i, j\leq d, 0\leq l \leq n-1, 1\leq n \leq \floor{T/\etae}$, 
\begin{align*}
    \abs{\frac{\partial u_{l+1, n}}{\partial \phi_i}(\hvphs[l\Rg])} \leq \Cg{3},
    \qquad 
      \abs{\frac{\partial^2 u_{l+1, n}}{\partial \phi_i \partial \phi_j}(\hvphs[l\Rg])}\leq \Cg{3}.
\end{align*}
By \Cref{lemma: many delta}, 
\begin{align*}
    \cB_1&\leq d\Cg{3}C_1( \etae^{\gamma_1}+\etae^{\gamma_2})(\log \tfrac{1}{\etae})^b, \qquad \cB_2\leq \frac{d^2}{2} \Cg{3}C_1 ( \etae^{\gamma_1}+\etae^{\gamma_2})(\log \tfrac{1}{\etae})^b.
\end{align*}
Now we bound the remainders. By Cauchy-Schwartz inequality, 
\begin{align*}
 &\quad \labs{\E[\frac{\partial^3 u_{l+1, n}}{\partial \phi_i\partial \phi_j \partial \phi_p}(\hvphs[l\Rg]+\theta \vDeltas[l])\Deltas[l]_i\Deltas[l]_j\Deltas[l]_p\mid  \hvphs[l\Rg],\cEs[l\Rg]_0]}\\
  & \leq\left( \E\left[\left(\frac{\partial^3 u_{l+1, n}}{\partial \phi_i\partial \phi_j \partial \phi_p}(\hvphs[l\Rg]+\theta \vDeltas[l])\right)^2 \mid \hvphs[l\Rg], \cEs[n\Rg]_{0}\right]\right)^{1/2}\times\\
  &\quad \left( \E[(\Deltas[l]_i\Deltas[l]_j\Deltas[l]_p)^2\mid  \hvphs[l\Rg],\cEs[n\Rg]_{0}]\right)^{1/2}.
\end{align*}

Since  $\hvphs[l \Rg]$ and $ \hvphs[l \Rg] + \vDeltas[l] $ both belong to $\Gamma$ which is compact, there exists a constant $\Cg{4}$ such that for all $1\leq i,j,p \leq d$, $0\leq l \leq n-1$ and $1 \leq n \leq \floor{T/\etae}$,
\begin{align*}
    \left(\tfrac{\partial^3 u_{l+1, n}}{\partial \phi_i\partial \phi_j \partial \phi_p}(\hvphs[l\Rg]+\theta \vDeltas[l])\right)^2 \leq \Cg{4}^2.
\end{align*}
  Combining the above inequality with \Cref{lemma: many delta}, we have
\begin{align*}
  \labs{\E[\frac{\partial^3 u_{l+1, n}}{\partial \phi_i\partial \phi_j \partial \phi_p}(\hvphs[l\Rg]+\theta \vDeltas[l])\Deltas[l]_i\Deltas[l]_j\Deltas[l]_p\mid  \hvphs[l\Rg],\cEs[l\Rg]_0]}\leq \Cg{4}C_1 \etae^{\gamma_1}\log (\tfrac{1}{\etae})^b.
\end{align*}
Hence, for all $1 \leq n \leq \floor{T/\etae}, 0\leq l \leq n-1$, 
\begin{align*}
    \abs{\cR}\leq \frac{d^3}{6}\Cg{4}C_1 \etae^{\gamma_1}\log (\tfrac{1}{\etae})^b.
\end{align*}
Similarly, we can show that there exists a constant $\Cg{5}$ such that for all $1 \leq n \leq \floor{T/\etae}, 0\leq l \leq n-1$, 
\begin{align*}
  \abs{\tcR}\leq \frac{d^3}{6}\Cg{5}C_1 \etae^{\gamma_1}\log (\tfrac{1}{\etae})^b.
\end{align*}
Combining the bounds on $\cA_1$ to  $\cA_3$, we have the lemma.
\end{proof}

Finally, we prove \Cref{thm: approx}.
\begin{proof}
For $0\leq l \leq n$, define the random variable $\hvzeta_{l, n}$ which follows the distribution $\cPz(\hvphs[l\Rg ], l, n)$ conditioned on $\hvphs[l \Rg]$. Therefore, $\PP(\hvzeta_{n, n}=\hvphs[n\Rg])=1$ and $\hvzeta_{0, n}\sim \vzeta_{n\etae}$. Denote by $u(\vphi, s, t):=\E_{\vzeta_t\sim \cPz(\vphi, s, t)}[g(\vzeta_t)]$ and $\cT_{l+1, n}:=u_{l+1, n}(\hvphs[l \Rg]+\vDeltas[l], (l+1)\etae, n\etae)-u_{l+1, n}(\hvphs[l\Rg]+\tvDeltas[l] ,(l+1)\etae, n\etae)$. 
\begin{align*}
&\quad\labs{\E[g(\vphs[n\Rg])] -\E[ g(\vzeta(n\etae))]}\\
&\leq \labs{\E[g(\hvzeta_{n, n})- g(\hvzeta_{0, n})\mid \cEs[n\Rg]_0]}+\cO(\eta^{100})\\
&\leq \sum_{l=0}^{n-1}\labs{\E[g(\hvzeta_{l+1, n})- g(\hvzeta_{l, n})\mid \cEs[n\Rg]_0]}+\cO(\eta^{100})\\
& = \sum_{l=0}^{n-1}\labs{\E[u(\hvphs[(l+1)\Rg], (l+1)\etae, n\etae) -u(\hvzeta_{l, l+1} ,(l+1)\etae, n\etae)\mid \cEs[n\Rg]_0]}+\cO(\eta^{100})\\
& = \sum_{l=0}^{n-1}\labs{\E[\cT_{l+1, n}\mid \cEs[n\Rg]_0]}+ \cO(\eta^{100}).
\end{align*}
Noticing that
$\E[\cT_{l+1, n}\mid \cEs[n\Rg]_0]=\E[\E[\cT_{l+1, n}\mid \hvphs[l\Rg], \cEs[l\Rg]_0]\mid \cEs[n\Rg]_0]$, 
 we can apply \Cref{lemma: u one step bound} and obtain that for all $0 \leq n \leq \floor{T/\etae}$,
\begin{align*}
    \labs{\E[g(\vphs[n\Rg])] -\E[ g(\vzeta(n\etae))]} & \leq n \Cg{1}( \etae^{\gamma_1}+\etae^{\gamma_2})(\log \tfrac{1}{\etae})^{b}\\
    & \leq T\Cg{1}( \etae^{\gamma_1-1}+\etae^{\gamma_2-1})(\log \tfrac{1}{\etae})^{b}.
\end{align*}
Notice that $ \etae^{\gamma_1}+\etae^{\gamma_2}=\eta^{0.5-\beta}+\eta^{\beta}$ and $T$, $\Cg{1}$ are both constants that are independent of $\etae$. Let $\beta=0.25$ and we have \Cref{thm: approx}.
\end{proof}
Having established \Cref{thm: approx}, we are thus led to prove \Cref{main thm: flow}.
\begin{proof}[Proof of \Cref{main thm: flow}]
Denote by $\scls=s_0+s_1=\cO(\log \frac{1}{\eta})$, which is the time the global iterate $\bvths$ 
will reach within $\ctO(\eta)$ from $\Gamma$ with high probability. Define $\tvzeta(t)$ to be the solution to the limiting SDE \eqref{eq:zeta2} conditioned on $\cEs[\scls]_0$ and $\tvzeta(0)=\vphs[\scls]$. By \Cref{thm: approx}, we have
\begin{align*}
    \max_{n=0, \cdots, \floor{T/\eta^{0.75}}}\labs{\E [g(\vphs[n\Rg+\scls])- g(\tvzeta(n\eta^{0.75}))\mid \vphs[\scls], \cEs[\scls]_0]}\leq C_g \eta^{0.25}(\log \tfrac{1}{\eta})^b,
\end{align*}
where $\Rg=\floor{\frac{1}{\alpha \eta^{0.75}}}$. Noticing that (i) $g\in \cC^3$ (ii) $\vb, \vsig \in \cC^{\infty}$ and (iii) $\vzeta(t), \tvzeta(t) \in \Gamma, t\in[0, \infty)$ almost surely, we can conclude that given $\cEs[\scls]_0$, $$\normtwo{\vzeta(t)-\tvzeta(t)}=\ctO(\sqrt{\eta}), \quad \forall t \in [0, T].$$
Then  there exists positive constant $b'$ independent of $\eta$ and $g$, and $C_g'$ which is independent of $\eta$ but can depend on $g$ such that
\begin{align*}
    \max_{n=0, \cdots, \floor{T/\eta^{0.75}}}\labs{\E [g(\vphs[n\Rg+\scls])- g(\vzeta(n\eta^{0.75}+\scls H\eta^2))]}\leq C_g'\eta^{0.25}(\log \tfrac{1}{\eta})^{b'}.
\end{align*}
We can view the random variable pairs $\{(\vphs[n\Rg+\scls], \vzeta_{n\eta^{0.75}+\scls\alpha\eta}): n=0, \cdots, \floor{T/\eta^{0.75}}\}$ as reference points and then approximate the value of $g(\vphs)$ and $g(\vzeta(sH\eta^2))$ with the value at the nearest reference points.
By Lemmas~\ref{lemma:dvphs R0} and \ref{lemma:delta phi bound}, for $0\leq r \leq \Rg$ and $ 0 \leq s \leq \Rtot -r$, 
\begin{align*}
    \E[\normtwo{\vphs[s+r] - \vphs}]=\ctO(\eta^{0.375}).
\end{align*}
Since the values of $\vphs$ and $\vzeta$ are restricted to a bounded set, $g(\cdot)$ is Lipschitz on that set. Therefore, we have the theorem.
\end{proof}

\section{Deriving the Slow SDE for Label Noise Regularization}\label{sec:app label noise}
In this section, we formulate how label noise regularization works and provide a detailed derivation of the theoretical results in \Cref{sec:thm label noise}.

Consider training a model for 
$C$-class classification on dataset $\mathcal{D}=\{(\vx_i, y_i)\}_{i=1}^{N}$, where $\vx_i$ denotes the input and $y_i\in [C]$ denotes the label. Denote by  $\Delta_{+}^{C-1}$ the $(C-1)$-open simplex. Let $f(\vtheta;\vx)\in \Delta_{+}^{C-1}$ be the model output on input $\vx$ with parameter $\vtheta$, whose $j$-th coordinate $f_j(\vtheta ; \vx)$ stands for the probability of $\vx$ belonging to class $j$. Let $\ell(\vtheta;\vx, y)$ be the cross entropy loss given input $\vx$ and label $y$, i.e, $\ell(\vtheta; \vx, y)=-\log f_{y}(\vtheta; \vx)$. 

 Adding label noise means replacing the true label $y$ with a fresh noisy label $\hat{y}$ every time we access the sample. Specifically, $\hat{y}$ is set as the true label $y$ with probability $1-p$ and as any other  label with probability $\tfrac{p}{C-1}$, where $p$ is the fixed corruption probability. The training loss  is defined as $\cL(\vtheta)=\frac{1}{N}\sum_{i=1}^N \E[\ell(\vtheta; \vx_i, \hat{y}_i)]$, where the expectation is taken over the stochasticity of $\hat{y}_i$. 
 Notice that given a sample $(\vx, y)$, 
 \begin{align}
 \E[\ell(\vtheta; \vx, \hat{y})]=-(1-p)\log f_y(\vtheta; \vx) -\frac{p}{C-1}\sum_{j\neq y}\log f_j(\vtheta; \vx).\label{eq: sample loss}\end{align}
By the property of cross-entropy loss, \eqref{eq: sample loss} attains its global minimum if and only if $f_j=\frac{p}{C-1}$, for all $j\in [C], j\neq y$ and $f_y=1-p$. Due to the large expressiveness of modern deep learning models, there typically exists a set $\cS^*:=\{\vtheta \mid f_i(\vtheta)=\E[\hat{y}_i], \forall i \in [N]\}$ such that all elements of $\cS^*$ minimizes $\cL(\vtheta)$. Then, the manifold $\Gamma$ is a subset of $\cS^*$. The following lemma relates the noise covariance $\mSig(\vtheta):=\frac{1}{N}\sum_{i\in[N]}\E[\left(\nabla \ell(\vtheta; \vx_i, \hat{y}_i)-\nabla \cL(\vtheta)\right)\left(\nabla \ell(\vtheta; \vx_i, \hat{y}_i)-\nabla \cL(\vtheta)\right)^{\top}]$ to the hessian $\nabla^2\cL(\vtheta)$ for all $\vtheta \in \cS^*$.

\begin{lemma}
    If $f(\vtheta; \vx_i, \hat{y}_i)$ is $\cC^2$-smooth on $\R^d$ given any $i\in [N]$, $\hat{y}_i \in [C]$ and  $\cS^*\neq \varnothing$, then for all $ \vtheta \in \cS^*$,  $\mSig(\vtheta)=\nabla^2 \cL(\vtheta)$.
\end{lemma}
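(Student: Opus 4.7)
The plan is to make explicit use of the two key facts about $\vtheta \in \cS^*$: (i) the model probabilities $f_j(\vtheta; \vx_i)$ coincide with the label-noise distribution $q_j^i := \PP[\hat y_i = j]$, and (ii) $\vtheta$ is a global minimizer of $\cL$, so $\nabla\cL(\vtheta)=\vzero$. Fact (ii) immediately collapses the centered covariance to an uncentered second moment,
\[
\mSig(\vtheta) \;=\; \tfrac{1}{N}\textstyle\sum_{i\in[N]} \E_{\hat y_i}\bigl[\nabla \ell(\vtheta;\vx_i,\hat y_i)\,\nabla \ell(\vtheta;\vx_i,\hat y_i)^\top\bigr].
\]
So the task reduces to showing that this uncentered second moment equals $\nabla^2 \cL(\vtheta) = \tfrac{1}{N}\sum_i \E_{\hat y_i}[\nabla^2 \ell(\vtheta;\vx_i,\hat y_i)]$.

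First I would write $\ell(\vtheta;\vx_i,\hat y_i) = -\log f_{\hat y_i}(\vtheta;\vx_i)$ and differentiate directly. Writing $p_j^i := f_j(\vtheta;\vx_i)$ and $\vg_j^i := \nabla f_j(\vtheta;\vx_i)$, one obtains $\nabla \ell = -\vg_{\hat y_i}^i/p_{\hat y_i}^i$ and
\[
\nabla^2 \ell(\vtheta;\vx_i,\hat y_i) \;=\; \frac{\vg_{\hat y_i}^i (\vg_{\hat y_i}^i)^\top}{(p_{\hat y_i}^i)^2} \;-\; \frac{\nabla^2 f_{\hat y_i}(\vtheta;\vx_i)}{p_{\hat y_i}^i}.
\]
Taking expectations over $\hat y_i$ with probabilities $q_j^i$, the outer-product contributions to $\mSig$ and to $\nabla^2 \cL$ are respectively $\sum_j q_j^i \vg_j^i(\vg_j^i)^\top / (p_j^i)^2$ and the same expression plus a correction $-\sum_j (q_j^i/p_j^i)\, \nabla^2 f_j(\vtheta;\vx_i)$.

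The two expressions will agree at $\vtheta \in \cS^*$ because (i) substituting $q_j^i = p_j^i$ converts the first-moment-weighted outer product into $\sum_j \vg_j^i(\vg_j^i)^\top / p_j^i$ in both cases, and (ii) the correction term becomes $-\sum_j \nabla^2 f_j(\vtheta;\vx_i)$, which vanishes because the identity $\sum_{j=1}^C f_j \equiv 1$ on $\R^d$ implies $\sum_j \nabla^2 f_j \equiv \vzero$ (this is where the $\cC^2$-smoothness hypothesis enters, letting us differentiate the constraint twice). Averaging over $i$ then gives $\mSig(\vtheta) = \nabla^2 \cL(\vtheta)$.

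There is no real obstacle — the argument is pure calculation once the right decomposition is written down. The only thing worth being careful about is verifying that $\nabla \cL(\vtheta)=\vzero$ at $\vtheta \in \cS^*$ (the single-sample gradient $\E_{\hat y_i}[\nabla \ell(\vtheta;\vx_i,\hat y_i)] = -\sum_j (q_j^i/p_j^i)\vg_j^i$ reduces, via $q_j^i = p_j^i$ and $\sum_j \vg_j^i = \vzero$, to $\vzero$), so the centered and uncentered covariances coincide. The minor subtlety is that nothing in the argument requires $\vtheta$ to lie on the manifold $\Gamma$ specifically — only on $\cS^*$ — which is exactly what the lemma states.
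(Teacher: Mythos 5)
Your proof is correct and takes essentially the same route as the paper's. Both proofs rest on the same two observations: the constraint $\sum_{j} f_j \equiv 1$ forces $\sum_j \nabla^2 f_j \equiv \vzero$ (and $\sum_j \nabla f_j \equiv \vzero$, giving $\nabla\cL=\vzero$ on $\cS^*$), and the cross-entropy function $h(x)=-\log x$ satisfies $h''(x)=(h'(x))^2$, which is exactly what makes the outer-product part of $\E_{\hat y}[\nabla^2\ell]$ coincide with the uncentered second moment $\E_{\hat y}[\nabla\ell\,\nabla\ell^\top]$ once $q_j^i=p_j^i$ is substituted. The paper fixes $y=1$ WLOG and works with the constants $a_1=1-p$, $a_2=p/(C-1)$, whereas you work directly with $p_j^i$ and $q_j^i$; this is purely cosmetic. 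One small advantage of your presentation: by computing $\nabla^2\ell$ per realization before averaging, you avoid the paper's slightly garbled final step (the paper asserts ``$h''(x)=h'(x)$'', which should read $h''(x)=(h'(x))^2$); your version makes the relevant identity transparent without having to state it.
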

\begin{proof}
Since $\cL(\cdot)$ is $\cC_2$-smooth, $\nabla \cL(\vtheta)=\vzero$ for all $\vtheta \in \cS^*$.
To prove the above lemma, it suffices to show that $\forall i \in [N]$, $\E[\nabla \ell(\vtheta; \vx_i, \hat{y}_i)\nabla\ell(\vtheta; \vx_i, \hat{y}_i)^{\top}]=\nabla^2\cL(\vtheta)$. W.L.O.G, 
let $y=1$ and therefore for all $\vtheta \in S^*$, 
\begin{align*}
    f_1(\vtheta;\vx)&=1-p=:a_1,\\
     f_j(\vtheta;\vx)&=\frac{p}{C-1}=:a_2, \forall j>1, j \in [C].
\end{align*}
Additionally, let $h(x):=-\log (x), x \in \R^+$. The stochastic gradient $\nabla \ell(\vtheta; \vx, \hat{y})$ follows the distribution:
\begin{align*}
    \nabla \ell(\vtheta; \vx, \hat{y})=\begin{cases}
        h'(a_1)\frac{\partial f_1}{\partial \vtheta} &\mathrm{\ w.p.\ } 1-p,\\
        h'(a_2)\frac{\partial f_j}{\partial \vtheta},  &\mathrm{\ w.p.\ } \frac{p}{C-1}, \forall j \in [C], j>1.
    \end{cases}
\end{align*}
Then the covariance of the gradient noise is:
\begin{align*}
    \E[\nabla \ell(\vtheta; \vx, \hat{y})\nabla\ell(\vtheta; \vx, \hat{y})^{\top}]&=(1-p)(h'(a_1))^2\frac{\partial f_1(\vtheta^*)}{\partial \vtheta^*}\left(\frac{\partial f_1(\vtheta^*)}{\partial \vtheta^*}\right)^{\top}\\
    &\quad + \frac{p(h'(a_2))^2}{C-1}\sum_{j>1}\frac{\partial f_j(\vtheta^*)}{\partial \vtheta^*}\left(\frac{\partial f_j(\vtheta^*)}{\partial \vtheta^*}\right)^{\top}.
\end{align*}
And the hessian is:
\begin{align*}
    \nabla^2 \cL(\vtheta)&=\underbrace{(1-p)h'(a_1)\frac{\partial^2 f_1}{\partial\vtheta^2 }+\frac{p h'(a_2)}{C-1}\sum_{j>1}\frac{\partial^2 f_j}{\partial \vtheta^2 }}_{\cT}\\
    &\quad + (1-p)h''(a_1)\frac{\partial f_1}{\partial\vtheta }\left(\frac{\partial f_1}{\partial\vtheta }\right)^{\top}+\frac{ph''(a_2)}{C-1}\sum_{j>1}\frac{\partial f_j}{\partial \vtheta }
    \left(\frac{\partial f_j(\vtheta)}{\partial \vtheta }\right)^{\top}.\\
\end{align*}
Since $\sum_{j\in [C]} f_i=1$, 
\begin{align}\frac{\partial^2 f_1}{\partial\vtheta^2}=-\sum_{j>1}\frac{\partial^2 f_j}{\partial\vtheta^2}. \label{eq:sum fj}
\end{align}
    Also, notice that $h'(x)=-\frac{1}{x}$. Therefore, 
\begin{align}
    (1-p)h'(a_1)=\frac{ph'(a_2)}{C-1}.\label{eq:ratio}
\end{align}
Substituting \eqref{eq:sum fj} and \eqref{eq:ratio} into the expression of $\cT$ gives $\cT=\vzero$, which simplifies $\nabla^2\cL(\vtheta)$ as the following form:
\begin{align*}
    \nabla^2 \cL(\vtheta)&=(1-p)h''(a_1)\frac{\partial f_1}{\partial \vtheta }
    \left(\frac{\partial f_j(\vtheta)}{\partial \vtheta }\right)^{\top}+\frac{ph''(a_2)}{C-1}\sum_{j>1}\frac{\partial f_j}{\partial \vtheta }
    \left(\frac{\partial f_j(\vtheta)}{\partial \vtheta }\right)^{\top}.\\
\end{align*}
Again notice that $h''(x)=h'(x)$ for all $x \in \R^+$. Therefore, $\nabla^2\cL(\vtheta)=\mSig(\vtheta)$.
\end{proof}

With the property $\mSig(\vtheta)=\nabla^2\cL(\vtheta)$, we are ready to prove \Cref{thm:local sgd label noise}. 
\begin{proof}[Proof of \Cref{thm:local sgd label noise}]
    Recall the general form of the slow SDE:
    \begin{align}
        \dd \vzeta(t)=\frac{1}{\sqrt{B}}\partial \Phi(\vzeta)\mSig^{1/2}(\vzeta)\dd \vW(t)+\frac{1}{2B}\partial^2 \Phi(\vzeta)\left[\mSig(\vzeta) + (K-1) \mPsi(\vzeta)\right]\dd t,
        \label{eq:general SDE}
    \end{align}
where $\mPsi$ is defined in \Cref{def:mPsi}. 
Since for $\vzeta \in \Gamma$, $\mSig(\vzeta)=\nabla^2\cL(\vzeta)$, then 
\begin{align}
\partial \Phi(\vzeta)\mSig^{1/2}(\vzeta)=\vzero. \label{eq:diffusion zero}
\end{align}
Now we show that
\begin{align}
    \partial^2 \Phi(\vzeta)[\mSig(\vzeta)]=-\gradGa \tr(\nabla^2 \cL(\vzeta)).\label{eq:grad trace}
\end{align}
Since $\nabla^2\cL(\vzeta)=\mSig(\vzeta)$, $\cV_{\nabla^2\cL(\vzeta)}[\mSig]=\frac{1}{2}\mI$.
By \Cref{lemma:mani inner perp perp},
\begin{align*}
    \partial^2\Phi(\vzeta)[\mSig(\vzeta)]=-\frac{1}{2}\partial \Phi(\vzeta)\nabla^3\cL(\vzeta)[\mI]=-\frac{1}{2}\nabla_{\Gamma}\tr(\nabla^2 \cL(\vzeta)).
\end{align*}
Finally, we show that
\begin{align} 
\partial^2 \Phi(\vzeta)[\mPsi(\vzeta)] =-\gradGa\frac{1}{2H\eta}\tr (F(2H\eta \nabla^2\cL(\vzeta) )). \label{eq:trF}
\end{align}
Define $\hat{\psi}(x):=x\psi(x)=e^{-x}-1+x$.
By definition of $\Psi(\vzeta)$, when $\mSig(\vzeta)=\nabla^2\cL(\vzeta)$, $\Psi(\vzeta)=\hat{\psi}(2\eta H \nabla^2 \cL(\vzeta))$, where $\hat{\psi}(\cdot)$ is interpreted as a matrix function. Since $\psi(2\eta H \nabla^2 \cL(\vzeta))\in \spann\{\vu \vu^{\top}\mid \vu \in T_{\vzeta}^{\perp}(\Gamma)\}$, by \Cref{lemma:mani inner perp perp},
\begin{align*}
    \partial^2 \Phi(\vzeta)[\Psi(\vzeta)]&=-\frac{1}{2}{\partial\Phi(\vzeta)\tr\psi(2\eta H \nabla^2 \cL(\vzeta))}.
\end{align*}
By the chain rule, we have \eqref{eq:trF}. Combining \eqref{eq:diffusion zero},\eqref{eq:grad trace} and \eqref{eq:trF} gives the theorem.
\end{proof}

\newpage
\section{Experimental Details}
In this section, we specify the experimental details that are omitted in the main text. Our experiments are conducted on CIFAR-10 \citep{krizhevsky2009learning} and ImageNet \cite{ILSVRC15}. Our code is available at \url{https://github.com/hmgxr128/Local-SGD}. Our implementation of ResNet-56 \citep{he2016deep} and VGG-16 \citep{vgg} is based on the high-starred repository by Wei Yang\footnote{\href{https://github.com/bearpaw/pytorch-classification}{https://github.com/bearpaw/pytorch-classification}} and we use the implementation of ResNet-50 from torchvision 0.3.1. We run all CIFAR-10 experiments with $\Bloc=128$ on 8 NVIDIA Tesla P100 GPUs while ImageNet experiments are run on 8 NVIDIA A5000 GPUS with $\Bloc=32$. All ImageNet experiments are trained with ResNet-50.

 %\Cref{??} focus the model architecture, hyperparameters and training procedure. The initialization schemes are summarized in \Cref{subsec:init}.

We generally adopt  the following training strategies. We do not add any
 momentum unless otherwise stated. We follow the suggestions by
 \citet{jia2018highly} and do not add weight decay to the bias and learnable
 parameters in the normalization layers. For all models with BatchNorm layers,
 we go through $100$ batches of data with batch size $\Bloc$ to estimate the
 running mean and variance before evaluation. Experiments on both datasets
 follow the standard data augmentation pipeline in \citet{he2016deep} except the
 label noise experiments. Additionally, we use FFCV \citep{leclerc2022ffcv} to
 accelerate data loading for ImageNet training.

 Slightly different from the update rule of Local SGD in \Cref{sec:intro}, we
 use sampling without replacement unless otherwise stated. 
 See \Cref{sec:pseudocode} for implementation details and discussion.
 %We defer the initialization schemes to \Cref{subsec: init}

\subsection{Post-local SGD Experiments in \Cref{sec:intro}}\label{sec:intro detail}
\paragraph{CIFAR-10 experiments.} We simulate $32$ clients with $B=4096$. We follow the linear scaling rule and linear learning rate warmup strategy suggested by \citet{goyal2017accurate}.  We first run $250$ epochs of SGD with the learning rate gradually ramping up from $0.1$ to $3.2$ for the first 50 epochs. Resuming from the model obtained at epoch $250$, we run Local SGD with $\eta=0.32$.
Note that  we conduct grid search for the initial learning rate among $\{0.005, 0.01, 0.05, 0.1, 0.15, 0.2\}$ and choose the learning rate with which parallel SGD ($H=1$) achieves the best test accuracy.  We also make sure that the optimal learning rate resides in the middle of the set. The weight decay $\lambda$ is set as $5\times 10^{-4}$.   As for the initialization scheme, we follow \citet{lin2020dont}  and \citet{goyal2017accurate}. Specifically, we use Kaiming Normal \citep{he2015delving} for the weights of convolutional layers and initialize the weights of fully-connected layers by a Gaussian distribution with mean zero and standard deviation $0.01$. The weights for normalization layers are initialized as one. All bias parameters are initialized as zero. We report the mean and standard deviation over 5 runs.

\paragraph{ImageNet experiments.} We simulate $256$ workers with $B=8192$. We
follow the linear scaling rule and linear learning rate warmup strategy
suggested by \citet{goyal2017accurate}. We first run 100 epochs of SGD where the
learning rate linearly ramps up from $0.5$ to $16$ for the first $5$ epochs and
then decays by a factor of $0.1$ at epoch $50$. Resuming from epoch $100$, we
run Local SGD with $\eta=0.16$. Note that  we conduct grid search for the
initial learning rate among $\{0.05, 0.1, 0.5, 1\}$ and choose the learning rate
with which parallel SGD ($H=1$) achieves the best test accuracy. We also make
sure that the optimal learning rate resides in the middle of the set. The weight
decay $\lambda$ is set as $1\times 10^{-4}$ and we do not add any momentum. The
initialization scheme follows the implementation of torchvision 0.3.1.  We
report the mean and standard deviation over 3 runs. \subsection{Experimental
Details for Figures~\ref{fig:effect} and~\ref{fig:add effect}}\label{sec:detail
regime} \paragraph{CIFAR-10 experiments.} We use ResNet-56 for all CIFAR-10
experiments in the two figures.  
We simulate $32$ workers with $B=4096$ and set the weight decay as $5\times 10^{-4}$. For Figures~\ref{fig:effect-a} and \ref{fig:effect-b},  we set $\eta=0.32$, which is the same as the learning rate after decay in \Cref{fig:cifar_intro}. For \Cref{fig:effect-a}, we adopt the same initialization scheme introduced in the corresponding paragraph in \Cref{sec:intro detail}. For Figures \ref{fig:effect-b}, \ref{fig:effect-e} and ~\ref{fig:add effect-c}, we use the model at epoch 250 in \Cref{fig:cifar_intro} as the pre-trained model. Additionally, we  use a training budget of $250$ epochs for \Cref{fig:effect-e}.  In \Cref{fig:add effect-e}, we use Local SGD with momentum $0.9$, where the momentum buffer is kept locally and never averaged. We run SGD with momentum $0.9$ for $150$ epochs to obtain the pre-trained model, where the learning rate ramps up from $0.05$ to $1.6$ linearly in the first $150$ epochs. Note that we conduct grid search for the initial learning rate among $\{0.01, 0.05, 0.1, 0.15, 0.2\}$ and choose the learning rate with which parallel SGD ($H=1$) achieves the highest test accuracy.  We also make sure that the optimal learning rate resides in the middle of the set. Resuming from epoch $150$, we run Local SGD $H=1$ (i.e., SGD) and $24$ with $\eta=0.16$ and decay $\eta$ by $0.1$ at epoch $226$. For Local SGD $H=900$, we resume from the model at epoch $226$ of $H=24$ with $\eta=0.016$. 
We report the mean and standard deviation over $3$ runs for Figures~\ref{fig:effect-a}, \ref{fig:effect-b} and \ref{fig:add effect-c}, and over $5$ runs for  \Cref{fig:effect-e}. 
\paragraph{ImageNet experiments.} We simulate $256$ clients with $B=8192$ and set the weight decay as $1\times 10^{-4}$. In \Cref{fig:effect-d} , both Local SGD and SGD start from the same random initialization. We warm up the learning rate from $0.1$ to $3.2$ in the first $5$ epochs and decay the learning rate by a factor of $0.1$ at epochs $50$ and $100$. For Figures~\ref{fig:effect-c},  \ref{fig:effect-f} and \ref{fig:add effect-d}, we use the model at epoch 100 in \Cref{fig:img_intro} as the pre-trained model. In \Cref{fig:effect-c}, we set the learning rate as $0.16$, which is the same as the learning rate after epoch 100 in \Cref{fig:img_intro}. 
Finally, in Figures~\ref{fig:effect-c}, \ref{fig:effect-f}, \ref{fig:add effect-b} and \ref{fig:add effect-d}, we report the mean and average over $3$ runs.

\subsection{Details for Experiments in \Cref{fig:add exp}}\label{sec:add exp detail}
For all experiments in \Cref{fig:add exp}, we train a ResNet-56 model on CIFAR-10. We report mean test accuracy over three runs and the shaded area reflects the standard deviation. For \Cref{fig:add exp-a}, we use the same setup as Figures~\ref{fig:effect-a} and \ref{fig:effect-b}  for training from random initialization and from a pre-trained model respectively except the learning rate. For \Cref{fig:add exp-b}, we resume from the model obtained at epoch $250$ in \Cref{fig:cifar_intro} and train for another $250$ epochs. For \Cref{fig:add exp-c}, we follow the same procedure as \Cref{fig:cifar_intro} except that we use sampling with replacement. We also ensure that the total numbers of iterations in Figures~\ref{fig:cifar_intro} and \ref{fig:add exp-c} are the same.

%\subsection{Experimental Details for \Cref{sec:theory}}
\subsection{Details for Experiments on the Effect of the Diffusion Term}\label{sec: diffusion details} 
\begin{figure}[t]
    \vspace{-0.5in}
\begin{center}
 \subfigure[CIFAR-10, start from \#$250$.]{
        \includegraphics[width=0.4\textwidth]{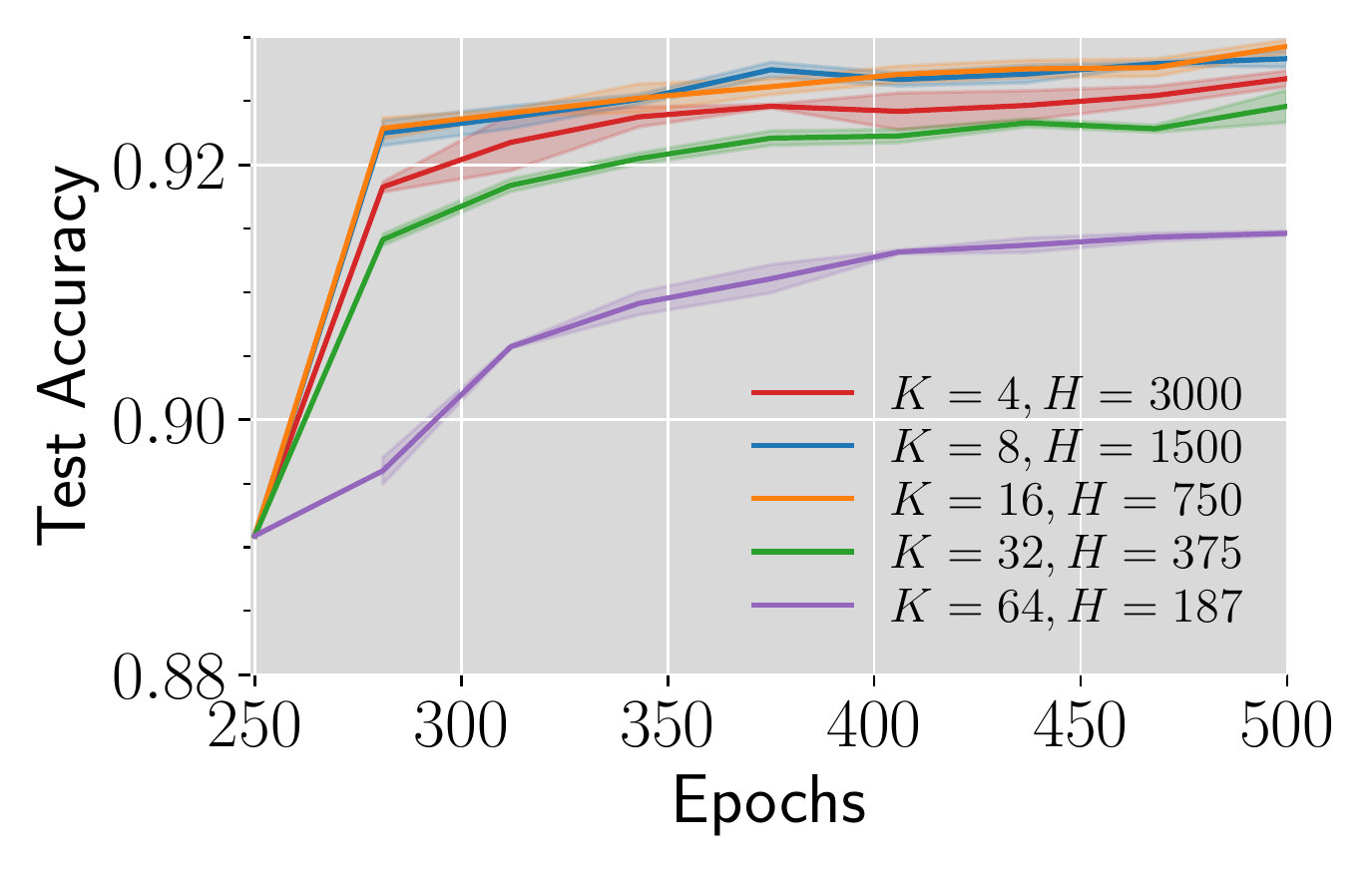}\label{fig:cifar lsr curve}   }
    \hspace{0.1in}
         \subfigure[ImageNet, start from \#$100$.]{
        \includegraphics[width=0.4\textwidth]{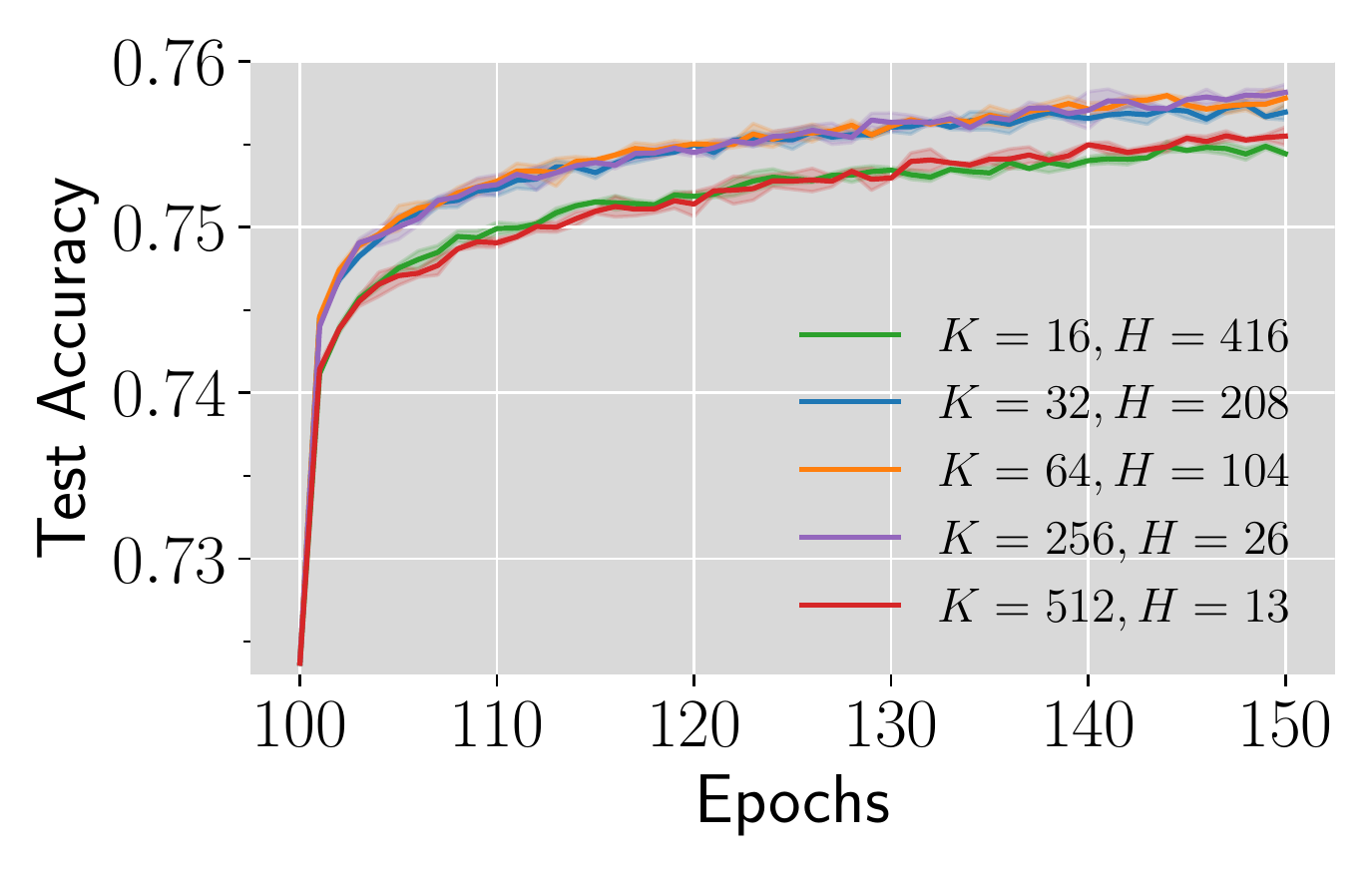}\label{fig:img lsr curve}
    }
\subfigure[CIFAR-10, start from \#$250$, optimal $H$.]{
        \includegraphics[width=0.4\textwidth]{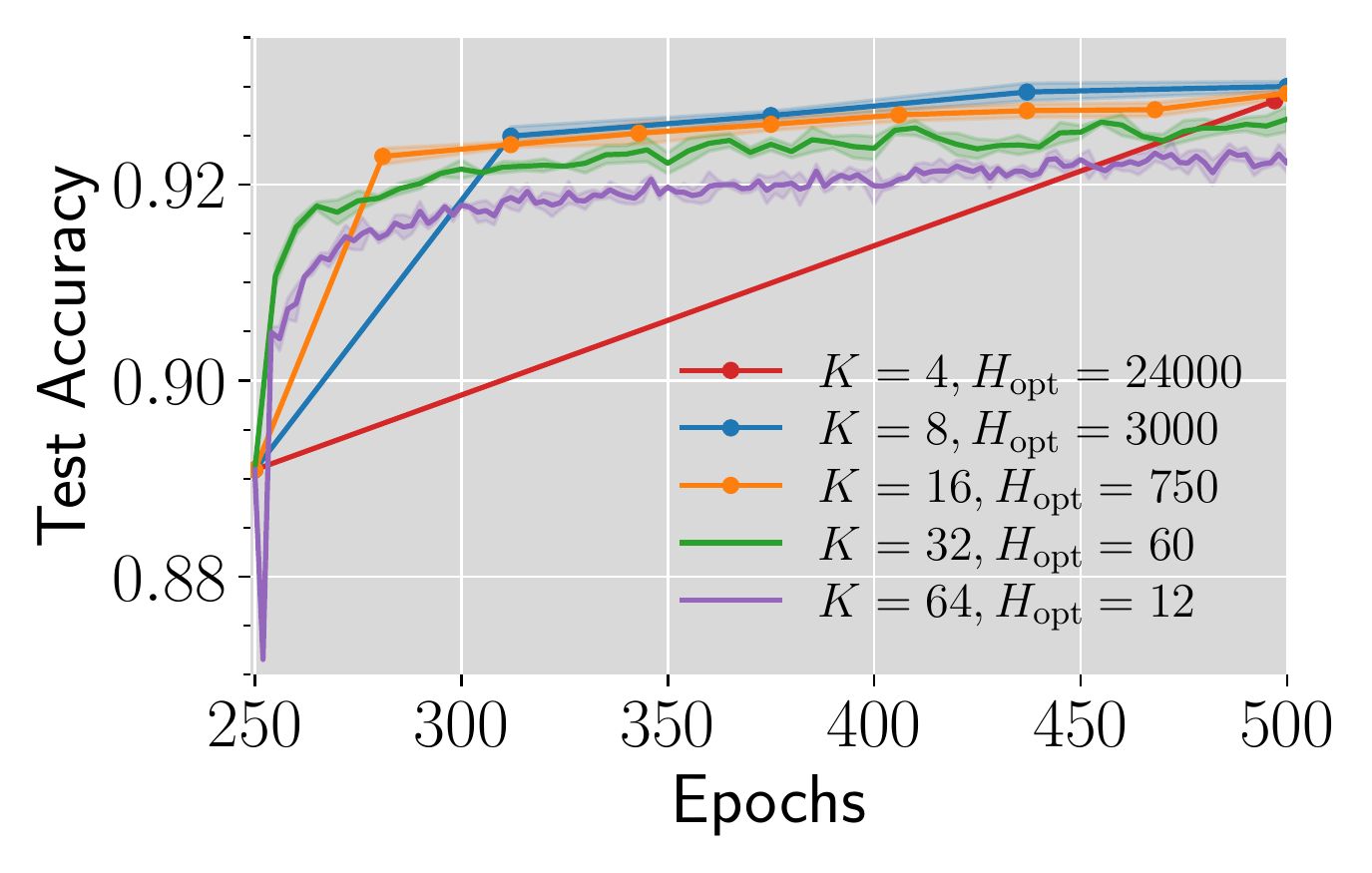}\label{fig:cifar lsr curve opt}   }
            \hspace{0.1in}
         \subfigure[ImageNet, start from \#$100$, optimal $H$.]{
        \includegraphics[width=0.4\textwidth]{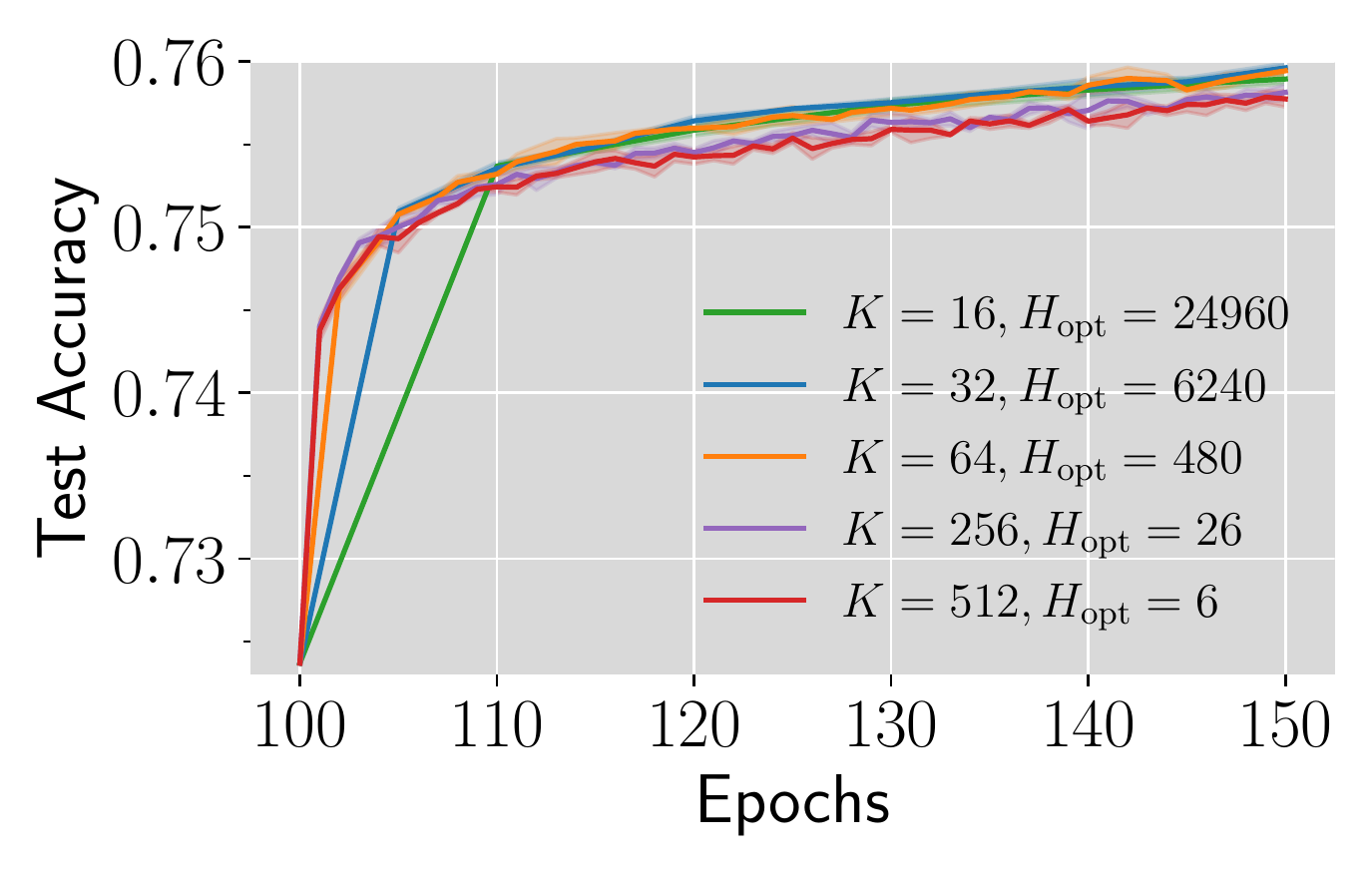}\label{fig:img lsr curve opt}
    }
    \vspace{-0.1in}
    \caption{
   The learning curves for experiments in \Cref{fig:lsr}. }
  \label{fig:lsr-detail}
     %\vspace{-0.2in}
\end{center}
    \vspace{-0.15in}
\end{figure}

\paragraph{CIFAR-10 experiments.} The model we use is ResNet-56. For \Cref{fig:cifar diffusion}, we first run SGD with batch size $128$ and learning rate $\eta=0.5$ for $250$ epochs to obtain the  pre-trained model. The initialization scheme is the same as the corresponding paragraph in \Cref{sec:intro detail}. Resuming from epoch $250$ with $\eta=0.05$, we run Local SGD with $K=16$ until epoch $6000$ and run all other setups for the same number of iterations. We report the mean and standard deviation over $3$ runs.

\paragraph{ImageNet experiments.} For Figures~\ref{fig:img diffusion} and \ref{fig:img lsr}, we start from the model obtained at epoch $100$ in \Cref{fig:img_intro}. In \Cref{fig:img diffusion}, we run Local SGD with $K=256$ for another $150$ epochs with $\eta=0.032$. We run all other setups for the same number of iterations with the same learning rate.

\subsection{Details for Experiments on the Effect of Global Batch Size}\label{sec: bs details} 
\paragraph{CIFAR-10 experiments.} The model we use is ResNet-56.
We resume from the model obtained in \Cref{fig:cifar_intro} at epoch $250$ and train for another 250 epochs. The local batch size for all runs is $\Bloc=128$. We first make grid search of $\eta$ for SGD with $K=16$ among $\{0.04, 0.08, 0.16, 0.32, 0.64\}$ and find that the final test accuracy varies little across different learning rates (within $0.1\%$). Then we choose $\eta=0.32$.  For the green curve in \Cref{fig:cifar lsr}, we search for the optimal $H$ for $K=16$ and keep $\alpha$ fixed when scaling $\eta$ with $K$. For the red curve in \Cref{fig:cifar lsr}, we search for the optimal $H$  for each $K$ among $\{6, 12, 60, 120, 300, 750, 1500, 3000, 6000, 12000, 24000\}$ and also make sure that $H$ does not exceed the total number of iterations for 250 epochs. The learning curves for constant and optimal $\alpha$ are visualized in \Cref{fig:cifar lsr curve,fig:cifar lsr curve opt} respectively. We report the mean and standard deviation over three runs. 

\paragraph{ImageNet experiments.}   We start from the model obtained at epoch $100$ in \Cref{fig:img_intro} and train for another $50$ epochs. The local batch size for all runs is $\Bloc=32$. We first make grid search among $\{0.032, 0.064, 0.16, 0.32\}$ for $H=1$ to achieve the best test accuracy and choose $H=0.064$. For the orange curve in \Cref{fig:img lsr}, we search $H$ among   $\{2, 4, 6, 13, 26, 52, 78, 156\}$ for $K=256$ to achieve the optimal test accuracy and the keep $\alpha$ constant as we scale $\eta$ with $K$. To obtain the optimal $H$ for each $K$, we search among $\{6240, 7800, 10400, 12480, 15600, 20800, 24960, 31200\}$ for $K=16$,  $\{1600, 3120, 4160, 5200, 6240, 7800, 10400\}$ for $K=32$, $\{312, 480, 520, 624, 800, 975, 1040, \\ 1248, 1560, 1950\}$ for $K=64$, and $\{1, 2, 3, 6, 13\}$ for $K=512$. The learning curves for constant and optimal $\alpha$ are visualized in \Cref{fig:img lsr curve,fig:img lsr curve opt} respectively. We report the mean and standard deviation over three runs.

\subsection{Details for Experiments on Label Noise Regularization}\label{sec:label noise detail}

For all label noise experiments,  
we do not use data augmentation, use sampling with replacement, and set the corruption probability as $0.1$. 
We simulate $32$ workers with $B=4096$ in \Cref{fig:label_noise} and $4$ workers with $B=512$ in \Cref{fig:K=4 label noise}. We use ResNet-56 with GroupNorm with the number of groups $8$ for \Cref{fig:label noise resnet} and VGG-16 without normalization for Figures~\ref{fig:label noise vgg} and \ref{fig:K=4 label noise}. 
Below we list the training details for ResNet-56 and VGG-16  respectively.

\paragraph{ResNet-56.} As for the model architecture, we replace the batch normalization layer in Yang's implementation with group normalization such that the training loss is independent of the sampling order. We also use Swish activation \citep{ramachandran2017searching} in place of ReLU to ensure the smoothness of the loss function.  We generate the pre-trained model by running label noise SGD with  corruption probability $p=0.1$ for $500$ epochs ($6,000$ iterations). We initialize the model by the same strategy introduced in the first paragraph of \Cref{sec:intro detail}. Applying the linear warmup scheme proposed by \citet{goyal2017accurate}, we gradually ramp up the learning rate $\eta$ from $0.1$ to $3.2$ for the first $20$ epochs and multiply the learning rate by $0.1$ at epoch $250$. All subsequent experiments in \Cref{fig:label noise resnet} (a) use learning rate $0.1$. The weight decay $\lambda$ is set as $5\times 10^{-4}$ . Note that adding weight decay in the presence of normalization accelerates the limiting dynamics and will not affect the implicit regularization on the original loss function \citep{li2022fast}. 

\paragraph{VGG-16.} We follow Yang's implementation of the model architecture except that we replace maximum pooling with average pooling and use Swish activation \citep{ramachandran2017searching} to make the training loss smooth. We initialize all weight parameters by Kaiming Normal and all bias parameters as zero. The pre-trained model is obtained by running label noise SGD with total batch size $4096$ and corruption probability $p=0.1$ for $6000$ iterations. We use a linear learning rate warmup from $0.1$ to $0.5$ in the first $500$ iterations. All runs in Figures~\ref{fig:label noise vgg} and \ref{fig:K=4 label noise} resume from
the model obtained by SGD with label noise. In \Cref{fig:label noise vgg},  we use learning rate $\eta=0.1$. In \Cref{fig:K=4 label noise}, we set $\eta=0.005$ for $H=97,000$ and $\eta=0.01$ for SGD ($H=1$). The weight decay $\lambda$ is set as zero. 
%\citet{lyu2022understanding, li2020reconciling} and 

% \subsection{A Summary of Initialization Schemes}\label{subsec:init}
% \paragraph{ResNet-56.}

% The model we use here is ResNet-56. Note that we replace BatchNorm with GroupNorm such that the training loss is independent of the sampling order. We also use Swish \citep{ramachandran2017searching} activation instead of ReLU to ensure smoothness of the training loss. We simulate 32 clients with $\Bloc=128$ on 8 NVIDIA P100 GPUs and hence the total batch size is $4096$. All clients initialize at the same point near the manifold of minimizers, which is generated by running 500 epochs of large-batch SGD. At each local update, each client samples a mini-batch from the whole dataset with replacement. We set the learning rate $\eta$ as $0.1$ and train the model until the test accuracy plateaus. 

% \input{exper}

% \input{old-intro}
%\input{old-thm}

%\input{setup}
 
\end{document}